\newcommand{\draftfinal}[2]{\ifdefined\draftversion#1\else#2\fi}
\newcommand{\finalonly}[1]{\draftfinal{}{#1}}
\newcommand{\thesistitle}{Theory and Algorithms for Learning with\\ Multi-Class Abstention and Multi-Expert Deferral}
\newcommand{\thesisauthor}{Anqi Mao}
\newcommand{\thesisadvisor}{Professor Mehryar Mohri}
\newcommand{\thesisdept}{Mathematics}
\newcommand{\gradmonth}{January}
\newcommand{\gradyear}{2025}
\newcommand{\thesisdedication}{}
\definecolor[named]{ThesisBlue}{cmyk}{1,0.1,0,0.1}
\definecolor[named]{ThesisYellow}{cmyk}{0,0.16,1,0}
\definecolor[named]{ThesisOrange}{cmyk}{0,0.42,1,0.01}
\definecolor[named]{ThesisRed}{cmyk}{0,0.90,0.86,0}
\definecolor[named]{ThesisLightBlue}{cmyk}{0.49,0.01,0,0}
\definecolor[named]{ThesisGreen}{cmyk}{0.20,0,1,0.19}
\definecolor[named]{ThesisPurple}{cmyk}{0.55,1,0,0.15}
\definecolor[named]{ThesisDarkBlue}{cmyk}{1,0.58,0,0.21}
\definecolor{SchoolColor}{rgb}{0.3412, 0.0235, 0.5490} 
\definecolor{chaptergrey}{rgb}{0.2600, 0.0200, 0.4600} 
\definecolor{midgrey}{rgb}{0.4, 0.4, 0.4}
\definecolor{gray75}{gray}{0.75}
\newcommand{\hsp}{\hspace{20pt}}
\titleformat{\chapter}[hang]
{\Huge\sc}
{\textcolor{SchoolColor}{\thechapter}\hsp\textcolor{gray75}{|}\hsp}
{0pt}{\Huge\sc\raggedright}
\newtheorem{lemma}[theorem]{Lemma}
\begin{document}

\pagenumbering{roman}
%
\thispagestyle{empty}
%

\vspace*{25pt}
\begin{center}

  {\Large
    \begin{doublespace}
      {\textcolor{SchoolColor}{\textsc{\thesistitle}}}
    \end{doublespace}
  }
  \vspace{.7in}

  by
  \vspace{.7in}

  \thesisauthor
  \vfill

  \begin{doublespace}
    \textsc{
    A dissertation submitted in partial fulfillment\\
    of the requirements for the degree of\\
    Doctor of Philosophy\\
    Department of \thesisdept\\
    New York University\\
    \gradmonth, \gradyear}
  \end{doublespace}
\end{center}
\vfill

\noindent\makebox[\textwidth]{\hfill\makebox[2.5in]{\hrulefill}}\\
\makebox[\textwidth]{\hfill\makebox[2.5in]{\hfill\thesisadvisor}}

\newpage

\thispagestyle{empty}
\vspace*{25pt}
\begin{center}
  \scshape \noindent \small \copyright \  \small  \thesisauthor \\
  All rights reserved, \gradyear
\end{center}
\vspace*{0in}
\newpage

\cleardoublepage
\phantomsection
\chapter*{Dedication}
\addcontentsline{toc}{chapter}{Dedication}
\vspace*{\fill}
\begin{center}
  \thesisdedication
  To my family.
\end{center}
\vfill
\newpage

\chapter*{Acknowledgements}
\addcontentsline{toc}{chapter}{Acknowledgements}

 First and foremost, I would like to express my sincere gratitude to Prof. Mehryar Mohri for introducing me to the world
of machine learning and for his unwavering support over the years. Prof. Mohri has contributed
numerous profound insights to the field, including the foundation of this thesis, and I am deeply
grateful for the opportunity to learn directly from him.

I would also like to thank my committee members. I had the honor of collaborating with Corinna Cortes
and learned a lot from her, particularly on the effective empirical evaluation of different
algorithms. I thank Pranjal Awasthi for hosting my internships at Google Research and for our
collaboration on adversarial robustness. I appreciate Prof. Esteban Tabak for being my first-year faculty
mentor and for his support as the director of graduate studies. Lastly, I thank Prof. Yanjun Han for his
time and effort on the thesis and defense committee.

Finally, I would like to thank Prof. Georg Stadler for introducing me to the Courant Institute and for helping me prepare for my qualifying exams. I am grateful to my peers and friends at the Courant Institute for their support and friendship over the past five years. I also thank Michelle Shin and Gehan Abreu De Colon for their invaluable assistance and support as part of the administrative staff.


\chapter*{Abstract}
\addcontentsline{toc}{chapter}{Abstract}

Large language models (LLMs) have achieved remarkable performance on
diverse tasks across multiple domains. However, their practical
application faces two critical challenges: the occurrence of
\emph{hallucinations}, that is the generation of incorrect or
misleading content, and an inefficient inference.  Leveraging multiple
experts can address both issues.  To reduce hallucinations, one can
refrain from using the original predictor in uncertain instances and
defer to one of the more complex and more accurate experts. To enhance
efficiency, one can derive models of different sizes distilled from
the original complex model and use one of these more streamlined
versions, while deferring to the more complex and less efficient ones
for suitable contexts.  Both problems require assigning each instance
to the most suitable expert. This motivates the problem of
\emph{learning with multiple-expert deferral}.  Another problem closely related to
deferral is that of \emph{learning with multi-class abstention}, where incorrect multi-class predictions can be costly and it is
then preferable to abstain from making predictions for some input
instances, since the cost of abstention is typically less significant. This can be considered a special case of multi-expert deferral, characterized by a single expert and a constant cost.

This thesis presents a comprehensive study of learning
with multi-class abstention and multi-expert deferral, all supported by strong consistency guarantees. 

Learning with abstention is a framework where the learner can abstain from making a multi-class prediction with some pre-defined cost. We analyze the score-based formulation and the predictor-rejector formulation of learning with abstention in the multi-class classification setting.
We introduce several new families of surrogate losses for which we
prove strong non-asymptotic and hypothesis set-specific consistency
guarantees, thereby resolving positively two existing open questions. These guarantees provide upper bounds on the estimation error
of the abstention loss function in terms of that of the surrogate
loss and can help compare different surrogates. We analyze both a single-stage setting where the predictor and
rejector are learned simultaneously and a two-stage setting crucial in
applications, where the predictor is learned in a first stage using a
standard surrogate loss such as cross-entropy.
These guarantees suggest new multi-class abstention algorithms based
on minimizing these surrogate losses.
We experimentally evaluate our new algorithms on
CIFAR-10, CIFAR-100, and SVHN datasets. Our results demonstrate empirically the benefit of our new surrogate losses and show the remarkable performance of our broadly applicable two-stage abstention algorithm.

Learning with multi-expert deferral is a key scenario where the learner can choose to defer the prediction to several experts. We present a study of surrogate losses and algorithms for this general problem in the classification setting for both the single-stage and two-stage scenarios. We first
introduce a new family of surrogate losses specifically tailored for the multiple-expert setting, where the prediction and deferral functions are learned simultaneously. We then prove that these surrogate losses benefit from strong $\sH$-consistency bounds. We illustrate the application of our analysis through several examples of
practical surrogate losses, for which we give explicit guarantees.
The two-stage scenario is crucial in practice for many applications. Here, a predictor is derived in a first stage by training with a
common loss function such as cross-entropy. In the second stage, a
deferral function is learned to assign the most suitable expert to each input.
We design a new family of surrogate loss functions for
this scenario both in the score-based and the predictor-rejector
settings and prove that they are supported by $\sH$-consistency
bounds, which implies their Bayes-consistency. Moreover, we show that, for a constant cost function, our two-stage surrogate losses are realizable $\sH$-consistent.
These loss functions readily lead to the design of new learning to
defer algorithms based on their minimization. 

While learning with multi-expert deferral has received significant attention in classification
contexts, it presents unique challenges in regression due to the
infinite and continuous nature of the label space. We further introduce a novel framework of \emph{regression with deferral}, propose new surrogate loss functions and prove that they are supported by
$\sH$-consistency bounds. Our framework is versatile, applying to
multiple experts, accommodating any bounded regression losses,
addressing both instance-dependent and label-dependent costs, and
supporting both single-stage and two-stage methods.  Our single-stage
formulation subsumes as a special case the recent \emph{regression
with abstention} \citep{cheng2023regression} framework, where only a
single expert is considered, specifically for the squared loss and a
label-independent cost.  Minimizing our proposed loss functions
directly leads to novel algorithms for regression with deferral. We
report the results of extensive experiments showing the effectiveness
of our proposed algorithms.

\newpage

\tableofcontents

\cleardoublepage
\phantomsection
\addcontentsline{toc}{chapter}{List of Figures}
\listoffigures
\newpage

\cleardoublepage
\phantomsection
\addcontentsline{toc}{chapter}{List of Tables}
\listoftables
\newpage

\cleardoublepage
\phantomsection
\addcontentsline{toc}{chapter}{List of Appendices}
\listofappendices
\newpage

\pagenumbering{arabic} 




\chapter*{Introduction} \label{ch0}
\addcontentsline{toc}{chapter}{Introduction}

The accuracy of learning algorithms can be greatly enhanced by
redirecting uncertain predictions to experts or advanced pre-trained
models. Experts can be individuals with specialized domain knowledge
or more sophisticated, albeit costly, pre-trained models. The cost of
an expert is important to consider, as it may capture the
computational resources it requires or the quality of its performance.
The cost can further be instance-dependent and label-dependent.

How can we effectively assign each input instance to the most suitable
expert among a pool of several, considering both accuracy and cost?
This is the challenge of \emph{learning with multi-expert deferral}, which is prevalent in various domains, including
natural language generation tasks, speech recognition,
image annotation and classification, medical diagnosis, financial
forecasting, natural language processing, computer vision, and many
others.  For instance, contemporary language models and
dialog-based text generation systems have exhibited susceptibility to
generating erroneous information, often referred to as
\emph{hallucinations}.  Thus, their response quality can be
substantially improved by deferring uncertain predictions to more
advanced or domain-specific pre-trained models. This particular issue
has been recognized as a central challenge for large language models
(LLMs) \citep{WeiEtAl2022,bubeck2023sparks}. 

The problem of \emph{learning to defer (L2D)}, or the special case of \emph{learning with
abstention} characterized by a single expert and constant cost, has 
received much attention in classification tasks. Previous work on
this topic mainly includes the following formulations or methods:
\emph{confidence-based methods}, which consist of
abstaining when the score returned by a pre-trained model falls below
some threshold
\citep{Chow1957,chow1970optimum,bartlett2008classification,yuan2010classification,WegkampYuan2011,ramaswamy2018consistent,NiCHS19}; \emph{selective
classification}, which analyzes a set-up with a \emph{predictor} and a
\emph{selector} and defines a selection risk or loss normalized by the expected selection or coverage
\citep{el2010foundations,wiener2011agnostic,el2012active,wiener2015agnostic,geifman2017selective,geifman2019selectivenet}; a
\emph{predictor-rejector formulation}, which is based on learning both
a \emph{predictor} and a \emph{rejector}, each from a different family
of functions, and that takes into account explicitly the 
abstention cost $c$
\citep{CortesDeSalvoMohri2016,CortesDeSalvoMohri2016bis,CortesDeSalvoMohri2023,cheng2023regression,MohriAndorChoiCollinsMaoZhong2024learning,li2024no}; and a more
recent \emph{score-based formulation} that consists of augmenting the
multi-class categories with a rejection label and of abstaining when
the score assigned to the rejection label is the highest
\citep{mozannar2020consistent,caogeneralizing,MaoMohriZhong2024score}.

The study of confidence-based methods was initiated by \citet{Chow1957,chow1970optimum} who explored the
trade-off between error rate and rejection rate, and also presented an
analysis of the Bayes optimal decision in this context. Later,
\citet{FumeraRoliGiacinto2000} proposed a multiple thresholds rule for
situations where a posteriori probabilities were impacted by
errors. \citet{Tortorella2001} introduced an optimal rejection rule
for binary classifiers, relying on the Receiver Operating
Characteristic (ROC) curve. Additionally, \citet{SantosPires2005}
compared their methodology with that of \citet{chow1970optimum}. Numerous publications have proposed various rejection techniques to
reduce the misclassification rate, though without theoretical analysis
\citep{FumeraRoli2002, Pietraszek2005, BounsiarGrallBeauseroy2007,
  LandgrebeTaxPaclikDuin2005, Melvin2008}. \citet{HerbeiWegkamp2005}
examined classification with a rejection option involving a cost and
provided excess error bounds for these ternary functions.  \citet{bartlett2008classification}
developed a loss function for this scenario that takes into account
the abstention cost $c$. They proposed learning a predictor using a
\emph{double hinge loss} and demonstrated its consistency
benefits. This approach has been further explored in several
subsequent publications \citep{GrandvaletKeshetRakotomamonjyCanu2008,
  yuan2010classification,WegkampYuan2011}. \citet{ramaswamy2018consistent} further examined
confidence-based abstention in multi-class classification, showing
that certain multi-class hinge loss formulations and a newly
constructed polyhedral binary-encoded predictions (BEP) surrogate loss
are Bayes-consistent. \citet{charoenphakdee2021classification}
suggested a cost-sensitive approach for multi-class abstention by
breaking down the multi-class problem into multiple binary
cost-sensitive classification problems
\citep{elkan2001foundations}. They introduced a family of
cost-sensitive one-versus-all surrogate losses, which are
Bayes-consistent in that context. \citet{narasimhan2023learning} investigated the connection between learning with abstention and out-of-distribution detection. They developed a plug-in method aimed at approximating the Bayes-optimal classifier, and demonstrated its application in the context of learning an out-of-distribution (OOD) aware classifier.

Selective classification methods were introduced by \citet{el2010foundations} who
investigated the trade-off between classifier coverage and
accuracy. In a follow-up study, \citet{wiener2011agnostic} developed a
strategy for learning a specific kind of selective classification
called weakly optimal, which has a diminishing rejection rate under
certain Bernstein-type conditions. Many successful connections to
selective classification have been established, including active learning
\citep{el2012active,wiener2015compression,wiener2015agnostic,puchkin2021exponential,denis2022active,zhu2022efficient},
multi-class rejection
\citep{tax2008growing,dubuisson1993statistical,le2010optimum},
reinforcement learning \citep{li2008knows}, online learning
\citep{zhang2016extended}, modern confidence-based rejection methods
\citep{geifman2017selective}, neural network architectures  \citep{geifman2019selectivenet}, loss functions based on
gambling's doubling rate \citep{ziyin2019deep}, disparity-free
approaches \citep{schreuder2021classification}, and the abstention
problem in a ``confidence set'' framework
\citep{gangrade2021selective,chzhen2021set}.

The predictor-rejector formulation was advocated by \citet*{CortesDeSalvoMohri2016} who contended that
confidence-based abstention is generally suboptimal, unless the
learned predictor is the Bayes classifier. They demonstrated that, in
most cases, no threshold-based abstention can achieve the desired
outcome. They proposed a new abstention framework that involves
learning both a predictor $h$ and a rejector $r$
\emph{simultaneously}, which can generally differ from a
threshold-based function. They defined a predictor-rejector
formulation loss function for the pair $(h, r)$, considering the
abstention cost $c$. The authors provided Rademacher complexity-based
generalization bounds for this learning problem and proposed various
surrogate loss functions for the binary classification abstention
loss. They demonstrated that these surrogate losses offered
consistency guarantees and developed algorithms based on these
surrogate losses, which empirically outperformed confidence-based
abstention benchmarks. This work led to several follow-up studies,
including a theoretical and algorithmic investigation of boosting with
abstention \citep{CortesDeSalvoMohri2016bis} and an analysis of
extending the results to a multi-class setting \citep{NiCHS19}. These
authors acknowledged the difficulty in designing calibrated or
Bayes-consistent surrogate losses based on the predictor-rejector
abstention by \citet{CortesDeSalvoMohri2016} and left it as an open
question. Furthermore, \cite{cheng2023regression} applied this framework in the context of regression with abstention, introducing Bayes-consistent surrogate losses.
\citet{MohriAndorChoiCollinsMaoZhong2024learning} examined the framework in the scenario of learning with a fixed predictor, where they proposed novel algorithms for decontextualization tasks. Additionally, \citet{li2024no} studied the Bayes-consistency of
no-rejection learning for regression with
abstention. 

\citet{mozannar2020consistent} introduced an alternative
\emph{score-based formulation} for multi-class abstention. In this
approach, besides the standard scoring functions associated with each
label, a new scoring function is linked to a new rejection
label. Rejection occurs when the score assigned to the rejection label
exceeds other scores, implicitly defining the rejector through this
specific rule. The authors proposed a surrogate loss for their method
based on cross-entropy (logistic loss with softmax applied to neural
network outputs), which they demonstrated to be Bayes-consistent. Building upon this work,
\citet{caogeneralizing} presented a more comprehensive collection of
Bayes-consistent surrogate losses for the score-based
formulation. These surrogate losses can be constructed using any
consistent loss function for the standard multi-class classification
problem. \citet{verma2022calibrated}
proposed an alternative Bayes-consistent surrogate loss, the
one-versus-all loss, which was later examined within a broader family
of loss functions \citep{charusaie2022sample}. \citet{cao2023defense}
proposed an asymmetric softmax function, which can induce a valid
probability estimator for learning to
defer. \cite{pmlr-v206-mozannar23a} showed that the surrogate losses
in \citep{mozannar2020consistent,verma2022calibrated} are not
realizable $\sH$-consistent. They proposed an alternative surrogate
loss that is realizable $\sH$-consistent, but they were unable to
prove or disprove whether the proposed surrogate loss is
Bayes-consistent. In particular, they left open the problem of finding
surrogate losses that are both Bayes-consistent and realizable
$\sH$-consistent when the cost function for the expert is its
classification error. The problem becomes even more challenging when
considering more general and realistic cost functions.

Additional studies have focused on post-hoc methods, with
\citet{okati2021differentiable} suggesting an alternative optimization
technique between the predictor and rejector, and
\citet{narasimhanpost} offering corrections for underfitting surrogate
losses \citep{liu2024mitigating}.  The L2D framework or variations
thereof have found applications in diverse scenarios, spanning
regression, reinforcement learning, and human-in-the-loop systems,
among others
\citep{de2020regression,de2021classification,straitouri2021reinforcement,zhao2021directing,joshi2021pre,gao2021human,mozannar2022teaching,hemmer2023learning,chen2024learning}.

All the studies mentioned so far mainly focused on learning to defer
with a single expert, or the special case of learning with abstention characterized by a constant cost. Most recently, \citet{verma2023learning}
highlighted the significance of learning to defer with multiple
experts
\citep{hemmer2022forming,keswani2021towards,kerrigan2021combining,straitouri2022provably,benz2022counterfactual,tailor2024learning} and extended the
surrogate loss in \citep{verma2022calibrated,mozannar2020consistent}
to accommodate the multiple-expert setting, which is the first work to propose Bayes-consistent surrogate losses in this
scenario. They further showed that a mixture of experts (MoE) approach
to multi-expert L2D proposed in \citep{hemmer2022forming} is not
consistent.

Meanwhile, recent work by \citet{awasthi2022Hconsistency,AwasthiMaoMohriZhong2022multi} introduced new consistency
guarantees, called $\sH$-consistency bounds, which they argued are
more relevant to learning than Bayes-consistency since they are
hypothesis set-specific and non-asymptotic. $\sH$-consistency bounds
are also stronger guarantees than Bayes-consistency. They established
$\sH$-consistent bounds for common surrogate losses in standard
classification (see also \citep{MaoMohriZhong2023cross,zheng2023revisiting,MaoMohriZhong2023characterization}). This naturally
raises the question: can we design deferral and abstention surrogate losses that
benefit from these more significant consistency guarantees?

This thesis presents a comprehensive study of learning
with multi-class abstention and multi-expert deferral, supported by strong consistency guarantees. We introduce novel families of surrogate losses for the abstention loss
function and the more general deferral loss function across different settings. We prove strong non-asymptotic and hypothesis
set-specific consistency guarantees for these surrogate losses, which
upper-bound the estimation error of the abstention/deferral loss function in
terms of the estimation error of the surrogate loss. We further experimentally evaluate our new algorithms on benchmark datasets, highlighting the practical significance of our new surrogate losses and algorithms for deferral and abstention.


In Chapter~\ref{ch2}, we present a series of new theoretical and algorithmic
results for multi-class classification for the score-based abstention
formulation. We first formalize the
setting and define explicitly the underlying abstention loss. We
then show how the general family of surrogate losses introduced by
\citet{caogeneralizing} can be naturally derived from that expression. More importantly, we prove \emph{$\sH$-consistency bounds} for these
surrogate losses, which are
non-asymptotic and hypothesis set-specific guarantees upper-bounding
the estimation error of the abstention loss function in terms of the
estimation error of the surrogate loss \citep{AwasthiMaoMohriZhong2022multi}. These
provide stronger guarantees than Bayes-consistency guarantees, which
only provide an asymptotic guarantee and hold only for the full family
of measurable functions. We first derive our guarantees for a broad
family of score-based abstention surrogates, which we name
\emph{\compsum\ score-based surrogate losses}. These include the
surrogate losses in \citep{mozannar2020consistent,caogeneralizing},
for which our guarantees admit their Bayes-consistency as a special
case. Our theory can also help compare different surrogate losses. To
make it more explicit, we give an explicit analysis of the
\emph{minimizability gaps} appearing in our bounds. We further prove a
general result showing that an $\sH$-consistency bound in standard
classification yields immediately an $\sH$-consistency bound for
score-based abstention losses. Minimization of these new surrogate
losses directly leads to new algorithm for multi-class abstention.

In Chapter~\ref{ch3}, we present a series of new
theoretical and algorithmic results for multi-class
learning with abstention in predictor-rejector formulation and, in
particular, resolve an open question proposed by \citet{NiCHS19} in a strongly positive way. We first show that in some instances
the optimal solution cannot be derived in the score-based
formulation, unless we resort to more complex scoring functions. In
contrast, the solution can be straightforwardly derived in the
predictor-rejector formulation. We then present and analyze a new family of
surrogate loss functions for multi-class abstention in the
predictor-rejector formulation. We first give a negative result, ruling out abstention surrogate
losses that do not
verify a technical condition. Next, we present several
positive results for abstention surrogate
losses verifying that condition, for which we prove non-asymptotic \emph{$(\sH,\sR)$-consistency bounds}
that are stronger than Bayes-consistency. We further discuss in detail the difference between the
  predictor-rejector formulation and the score-based formulation,
  which underscores our work's innovation and significant
  contribution.

In Chapter~\ref{ch4}, we study the general framework of learning with multi-expert deferral. We first introduce a new family of surrogate losses specifically
tailored for the multiple-expert setting, where the prediction and
deferral functions are learned simultaneously. Next, we prove that
these surrogate losses benefit from $\sH$-consistency bounds.
This implies, in particular, their Bayes-consistency.
We illustrate the application of our analysis through several examples
of practical surrogate losses, for which we give explicit guarantees.
These loss functions readily lead to the design of new learning to
defer algorithms based on their minimization. Our $\sH$-consistency bounds incorporate a crucial term known as the
\emph{minimizability gap}.  We show that this makes them more
advantageous guarantees than bounds based on the approximation error. We further demonstrate that our $\sH$-consistency bounds can be used
to derive generalization bounds for the minimizer of a surrogate loss
expressed in terms of the minimizability gaps. While the main focus of this chapter is a theoretical analysis, we also report the results of several experiments with SVHN and CIFAR-10 datasets.

In Chapter~\ref{ch5}, we study a two-stage scenario for
learning with multi-expert deferral that is crucial in practice
for many applications.  In this scenario, a predictor is derived in a
first stage by training with a common loss function such as
cross-entropy.  In the second stage, a deferral function is learned to
assign the most suitable expert to each input.  We design a new family
of surrogate loss functions for this scenario both in the
\emph{score-based setting} and the
\emph{predictor-rejector setting}
and prove that they
are supported by $\sH$-consistency bounds, which implies their
Bayes-consistency. While the main focus of this chapter is a theoretical
analysis, we also report the results of several experiments on
CIFAR-10 and SVHN datasets.

In Chapter~\ref{ch6}, we deal with the problem of learning with multi-expert deferral in the regression setting. While this problem has received
significant attention in classification contexts, it presents unique challenges in regression due
to the infinite and continuous nature of the label space. In
particular, the \emph{score-based formulation} commonly used in
classification is inapplicable here, since regression problems cannot
be represented using multi-class scoring functions, with auxiliary
labels corresponding to each expert. Our approach involves defining prediction and deferral functions,
extending the predictor-rejector formulation in classification to the regression setting. We present a
comprehensive analysis for both the single-stage scenario
(simultaneous learning of predictor and deferral functions), and the two-stage scenario
(pre-trained predictor with learned deferral function). We introduce new surrogate loss
functions for both scenarios and prove that they are supported by
$\sH$-consistency bounds. These are consistency guarantees that are
stronger than Bayes consistency, as they are non-asymptotic and
hypothesis set-specific. Our framework is versatile, applying to
multiple experts, accommodating any bounded regression losses,
addressing both instance-dependent and label-dependent costs, and
supporting both single-stage and two-stage methods. We also
instantiate our formulations in the special case of a single expert, and demonstrate that our
single-stage formulation includes the recent \emph{regression with
abstention} framework \citep{cheng2023regression} as a special case,
where only a single expert, the squared loss and a label-independent
cost are considered. We further report the
results of extensive experiments showing the effectiveness of our
proposed algorithms.

This thesis is based on \citet*{MaoMohriZhong2024score,MaoMohriZhong2024predictor,MaoMohriZhong2024deferral,MaoMohriMohriZhong2023two,mao2024regression}.


\chapter{Score-Based Multi-Class Abstention} \label{ch2}
In this chapter, we present a series of new theoretical and algorithmic
results for multi-class classification with score-based abstention
formulation. In Section~\ref{sec:preliminary}, we formalize the
setting and first define explicitly the underlying abstention loss. We
then show how the general family of surrogate losses introduced by
\citet{caogeneralizing} can be naturally derived from that expression
in Section~\ref{sec:score-general}.

More importantly, we prove \emph{$\sH$-consistency bounds} for these
surrogate losses (Section~\ref{sec:score-bounds}), which are
non-asymptotic and hypothesis set-specific guarantees upper-bounding
the estimation error $\paren{\sE_{\labsc}( h) - \sE_{\labsc}^*( \sH)}$ of the abstention loss function in terms of the
estimation error of the surrogate loss \citep{AwasthiMaoMohriZhong2022multi}. These
provide stronger guarantees than Bayes-consistency guarantees, which
only provide an asymptotic guarantee and hold only for the full family
of measurable functions. We first derive our guarantees for a broad
family of score-based abstention surrogates, which we name
\emph{\compsum\ score-based surrogate losses}. These include the
surrogate losses in \citep{mozannar2020consistent,caogeneralizing},
for which our guarantees admit their Bayes-consistency as a special
case. Our theory can also help compare different surrogate losses. To
make it more explicit, we give an analysis of the
minimizability gaps appearing in our bounds. We further prove a
general result showing that an $\sH$-consistency bound in standard
classification yields immediately an $\sH$-consistency bound for
score-based abstention losses. Minimization of these new surrogate
losses directly leads to new algorithm for multi-class abstention.

In Section~\ref{sec:two-stage-mabsc}, we analyze a two-stage algorithmic
scheme often more relevant in practice, for which we give surrogate
losses that we prove to benefit from $\sH$-consistency bounds. These
are also non-asymptotic and hypothesis set-specific guarantees
upper-bounding the estimation error of the abstention loss function in
terms of the estimation error of the first-stage surrogate loss and
second-stage one. Minimizing these new surrogate losses directly leads
to new algorithm for multi-class abstention.

In Section~\ref{sec:realizable-mabsc}, we demonstrate that our proposed
two-stage score-based surrogate losses are not only Bayes-consistent,
but also realizable $\sH$-consistent. This effectively addresses the
open question posed by \citet{pmlr-v206-mozannar23a} and highlights
the benefits of the two-stage formulation.

In Section~\ref{sec:finite-sample}, we show that our $\sH$-consistency
bounds can be directly used to derive finite sample estimation bounds
for a surrogate loss minimizer of the abstention loss. These are more
favorable and more relevant guarantee than a similar finite sample
guarantee that could be derived from an excess error bound.

In Section~\ref{sec:experiments-mabsc}, we report the results of several
experiments comparing these algorithms and discuss them in light of
our theoretical guarantees.  Our empirical results show, in
particular, that the two-stage score-based abstention surrogate loss
consistently outperforms the state-of-the-art cross-entropy
scored-based abstention surrogate losses on CIFAR-10, CIFAR-100 and
SVHN, while highlighting that the relative performance of the
state-of-the-art \compsum\ scored-based abstention losses varies by
the datasets. We present a summary of our main contribution as follows
and start with a formal description of the problem formulations.
\begin{itemize}

    \item Derivation of the cross-entropy score-based surrogate loss
      from first principles, which include the state-of-the-art
      surrogate losses as special cases.

    \item $\sH$-consistency bounds for cross-entropy score-based
      surrogate losses, which can help theoretically compare different
      cross-entropy score-based surrogate losses and guide the design
      of a multi-class abstention algorithm in comparison to the
      existing asymptotic consistency guarantees.

    \item A novel family of surrogate loss functions in the two-stage
      setting and their strong $\sH$-consistency bound guarantees.

    \item Realizable $\sH$-consistency guarantees of proposed
      two-stage score-based surrogate loss, which effectively
      addresses the open question posed by
      \citet{pmlr-v206-mozannar23a} and highlights the benefits of the
      two-stage formulation.

    \item Extensive experiments demonstrating the practical
      significance of our new surrogate losses and the varying
      relative performance of the state-of-the-art cross-entropy
      score-based surrogate losses across datasets.
\end{itemize}
The presentation in this chapter is based on \citep{MaoMohriZhong2024score}.

\section{Preliminaries}
\label{sec:preliminary}

We consider the standard multi-class classification setting with an
input space $\sX$ and a set of $n \geq 2$ classes or labels $\sY =
\curl*{1, \ldots, n}$. We will denote by $\sD$ a distribution over
$\sX \times \sY$ and by $\sfp(y \!\mid\! x)$, the conditional probability of $Y =
y$ given $X = x$, that is $\sfp(y \!\mid\! x) = \sD(Y = y \!\mid\! X = x)$. We
will also use $p(x) = \paren*{\sfp(1 \!\mid\! x), \ldots, \sfp(n \!\mid\! x)}$ to denote the
vectors of these probabilities for a given $x$.

We study the learning scenario of multi-class classification with
abstention in the
\emph{score-based formulation} proposed by
\citet{mozannar2020consistent} and recently studied by
\citet{caogeneralizing}.

\paragraph{Score-Based Abstention Formulation}

In this formulation of the abstention problem, the label set $\sY$ is
augmented with an additional category $(n + 1)$ corresponding to
abstention. We denote by $\sY \cup \curl*{n+1} = \curl*{1, \ldots, n,
  n + 1}$ the augmented set and consider a hypothesis set $ \sH$ of
functions mapping from $\sX \times (\sY \cup \curl*{n + 1})$ to
$\Rset$.
The label associated by $ h \in \sH$ to an input $x \in \sX$ is
denoted by $ \hh(x)$ and defined by $ \hh(x) = n + 1$ if $ h(x, n + 1)
\geq \max_{y \in \sY} h(x, y)$; otherwise, $ \hh(x)$ is defined as an
element in $\sY$ with the highest score, $ \hh(x) = \argmax_{y \in
  \sY} h(x, y)$, with an arbitrary but fixed deterministic strategy
for breaking ties.  When $ \hh(x) = n + 1$, the learner abstains from
making a prediction for $x$. Otherwise, it
predicts the label $y = \hh(x)$. The \emph{score-based abstention
loss} $\labsc$ for this formulation is defined as follows for any $ h
\in \sH$ and $(x, y) \in \sX \times \sY$:
\begin{equation}
\label{eq:abs-score}
\labsc( h, x, y)
= \1_{ \hh(x)\neq y}\1_{ \hh(x)\neq n + 1} + c(x) \1_{ \hh(x) = n + 1}.
\end{equation}
Thus, when it does not abstain, $ \hh(x) \neq n + 1$, the learner
incurs the familiar zero-one classification loss and when it abstains,
$ \hh(x) = n + 1$, the cost $c(x)$.  Given a finite sample drawn
i.i.d.\ from $\sD$, the learning problem consists of selecting a
hypothesis $ h$ in $ \sH$ with small expected score-based abstention
loss, $\E_{(x, y) \sim \sD}[\labsc( h, x, y)]$. Note that the cost $c$
implicitly controls the rejection rate when minimizing the abstention
loss.

Optimizing the score-based abstention loss is intractable for most
hypothesis sets. Thus, instead, learning algorithms for this scenario
must resort to a surrogate loss $\lsc$ for $\labsc$. In the next
sections, we will define score-based surrogate losses and analyze
their properties. Given a loss function $ \sfL$, we denote by
$\sE_{\lsc}( h) = \E_{(x, y) \sim \sD}\bracket*{\lsc( h, x, y)}$ the
generalization error or expected loss of $ h$ and by $\sE_{\lsc}^*(
\sH) = \inf_{ h \in \sH} \sE_{\lsc}( h)$ the minimal generalization
error. In the following, to simplify the presentation, we assume that
the cost function $c\in (0,1)$ is constant. However, many of our
results extend straightforwardly to the general case.

\paragraph{$\sH$-Consistency Bounds}

We will seek to derive \emph{$\sH$-consistency bounds} for
$\lsc$. These are strong guarantees that take the form of inequalities
establishing a relationship between the abstention loss $\labsc$ of
any hypothesis $h \in \sH$ and the surrogate loss $\lsc$ associated
with it
\citep{awasthi2021calibration,awasthi2021finer,awasthi2022Hconsistency,
  AwasthiMaoMohriZhong2022multi,AwasthiMaoMohriZhong2023theoretically,awasthi2024dc,
  MaoMohriZhong2023cross,MaoMohriZhong2023ranking,
  MaoMohriZhong2023rankingabs,zheng2023revisiting,
  MaoMohriZhong2023characterization,MaoMohriZhong2023structured}. These
are bounds of the form $\sE_{\labsc}( h) - \sE_{\labsc}^*( \sH) \leq
f\paren*{\sE_{\lsc}( h) - \sE_{\lsc}^*( \sH)}$, for some
non-decreasing function $f$, that upper-bounds the estimation error $\paren{\sE_{\labsc}( h) - \sE_{\labsc}^*( \sH)}$ of
the loss $\labsc$ in terms of that of $\lsc$ for a given hypothesis
set $ \sH$. Thus, they show that if we can reduce the surrogate
estimation error $(\sE_{\lsc}( h) - \sE_{\lsc}^*( \sH))$ to a small value $\e > 0$,
then the estimation error of $\labsc$ is guaranteed to be at most
$f(\e)$. These guarantees are non-asymptotic and take into
consideration the specific hypothesis set $\sH$ used.

\paragraph{Minimizability Gaps} 

A key quantity appearing in these bounds
is the \emph{minimizability gap}, denoted by $\sM_{\lsc}(\sH)$ and
defined by $\sM_{\lsc}(\sH) = \sE^*_{\lsc}( \sH) - \E_x
\bracket[\big]{\inf_{ h \in \sH} \E_y\bracket*{\lsc( h, X, y) \mid X =
    x}}$ for a given hypothesis set $\sH$. Thus, the minimizability
gap for a hypothesis set $\sH$ and loss function $\lsc$ measures the
difference of the best-in-class expected loss and the expected
pointwise infimum of the loss.  Since the infimum is super-additive,
it follows that the minimizability gap is always non-negative.
When the loss function $\lsc$ depends only on $h(x, \cdot)$ for all
$h$, $x$, and $y\in \sY$, that is, $\sfL(h, x, y) = \Psi(h(x, 1),
\ldots, h(x, n + 1), y)$ for some function $\Psi$, it can be shown
that the minimizability gap vanishes for the family of all measurable
functions: $\sM( \sH_{\rm{all}}) = 0$
\citep[lemma~2.5]{steinwart2007compare}. However, in general, the
minimizability gap is non-zero for restricted hypothesis sets $\sH$
and is therefore essential to analyze. It is worth noting that the
minimizability gap can be upper-bounded by the approximation error
$\sA_{\lsc}(\sH) = \sE^*_{\lsc}(\sH) - \E_x\bracket[\big]{\inf_{ h \in
    \sH_{\rm{all}}} \E_y \bracket{\lsc( h, X, y) \mid X =
    x}}$.
    However, the minimizability gap is a more refined quantity
than the approximation error and can lead to more favorable
guarantees (see Appendix~\ref{app:better-bounds}).
Note that in the approximation error $\sA_{\lsc}(\sH)$, the second term takes the infimum over all measurable functions $\sH_{\rm{all}}$, unlike in the minimizability gap $\sM_{\lsc}(\sH)$, where it is restricted to the hypothesis set $\sH$.

\section{Single-Stage Score-Based Formulation}
\label{sec:score}

In this section, we first derive the general form of a family of
surrogate loss functions $\lsc$ for $\labsc$ by analyzing the
abstention loss $\labsc$. Next, we give $ \sH$-consistency bounds for
these surrogate losses, which provide non-asymptotic hypothesis
set-specific guarantees upper-bounding the estimation error of the
loss function $\labsc$ in terms of estimation error of $\lsc$.

\subsection{General Surrogate Losses}
\label{sec:score-general}
Consider a hypothesis $ h$ in the score-based setting.  Note that
for any $(x, y) \in \sX \times \sY$, $ \hh(x) = n + 1$ implies $
\hh(x) \neq y$, therefore, we have: $\1_{ \hh(x)\neq y}\1_{
  \hh(x) = n + 1}= \1_{ \hh(x) = n + 1}$.  Thus,
$\labsc( h, x, y)$ can be rewritten as follows:
\begin{align*}
\labsc( h, x, y)
& = \1_{ \hh(x)\neq y}\paren*{1 - \1_{ \hh(x) = n + 1}} + c \1_{ \hh(x) = n + 1}\\
& = \1_{ \hh(x)\neq y} - \1_{ \hh(x)\neq y} \1_{ \hh(x) = n + 1} + c \1_{ \hh(x) = n + 1}\\
& = \1_{ \hh(x)\neq y} - \1_{ \hh(x) = n + 1} + c \1_{ \hh(x) = n + 1}\\
& = \1_{ \hh(x)\neq y} + (c - 1) \1_{ \hh(x) = n + 1}\\
& = \1_{ \hh(x)\neq y} + (1 - c) \1_{ \hh(x)\neq n + 1} + c - 1.
\end{align*}
In view of this expression, since the last term $(c - 1)$ is a
constant, if $ \ell$ is a surrogate loss for the zero-one
multi-class classification loss over the set of labels $ \sY$, then
$\lsc$ defined as follows is a natural surrogate loss for $\labsc$:
for all $(x, y) \in \sX \times \sY$,
\begin{equation}
\label{eq:sur-score-mabsc}
\lsc \paren*{ h, x, y}
=  \ell \paren*{ h, x, y} + (1 - c) \,  \ell\paren*{ h, x, n + 1}.
\end{equation}
This is precisely the form of the surrogate losses proposed by
\citet{mozannar2020consistent}, for which the analysis just presented
gives a natural derivation. This is also the form of the surrogate
losses adopted by \citet{caogeneralizing}.

\subsection{\texorpdfstring{$\sH$}{H}-Consistency Bound Guarantees}
\label{sec:score-bounds}

\citet{caogeneralizing} presented a nice study of the surrogate loss
$\lsc$ for a specific family of zero-one loss surrogates $ \ell$.
The authors showed that the surrogate loss $\lsc$ is Bayes-consistent
with respect to the score-based abstention loss $\labsc$ when $
\ell$ is Bayes-consistent with respect to the multi-class zero-one
classification loss $\ell_{0-1}$.  Bayes-consistency guarantees that,
asymptotically, a nearly optimal minimizer of $\lsc$ over the family
of all measurable functions is also a nearly optimal minimizer of
$\labsc$: $\lim_{n \to +\infty}
\sE_{\lsc}(h_n) - \sE_{\lsc}^*\paren*{\sH_{\mathrm{all}}} = 0$
implies $\lim_{n \to +\infty} \sE_{\labsc}(h_n) -
\sE_{\labsc}^*\paren*{\sH_{\mathrm{all}}} = 0$ for any distribution and any sequence
$\{h_n\}_{n\in \Nset} \subset \sH_{\rm{all}}$. However, this does not provide any guarantee for a restricted
subset $ \sH$ of the family of all measurable functions.  It also
provides no guarantee for approximate minimizers since convergence
could be arbitrarily slow and the result is only asymptotic.

In the following, we will prove $\sH$-consistency bound guarantees,
which are stronger results that are non-asymptotic and that hold for a
restricted hypothesis set $ \sH$. The specific instance of our results
where $\sH$ is the family of all measurable functions directly implies
the Bayes-consistency results of \citet{caogeneralizing}.

\paragraph{$\sH$-Consistency for Cross-Entropy Abstention Losses} We first prove $\sH$-consistency for a broad family of
score-based abstention surrogate losses $\lsc_{\mu}$, that we will
refer to as \emph{\compsum\ score-based surrogate losses}. These are
loss functions defined by
\begin{equation}
\label{eq:L-mu}
\lsc_{\mu} \paren*{ h, x, y}
=  \ell_{\mu} \paren*{ h, x, y} + (1 - c) \, \ell_{\mu}\paren*{ h, x, n + 1},    
\end{equation}
where, for any $ h\in  \sH$, $x\in \sX$, $y\in\sY$ and $\mu\geq 0$,
\begin{align*}
\ell_{\mu}(h,x, y) =\begin{cases}
\frac{1}{1 - \mu} \paren*{\bracket*{\sum_{y'\in\sY \cup \curl*{n+1}} e^{{ h(x, y') -  h(x, y)}}}^{1 - \mu} - 1} & \mu\neq 1  \\
\log\paren*{\sum_{y'\in \sY\cup \curl*{n+1}} e^{ h(x, y') -  h(x, y)}} & \mu = 1.
\end{cases}    
\end{align*}
The loss function $\ell_{\mu}$ coincides with the (multinomial)
logistic loss
\citep{Verhulst1838,Verhulst1845,Berkson1944,Berkson1951} when
$\mu=1$, matches the generalized cross-entropy loss
\citep{zhang2018generalized} when $\mu\in (1,2)$, and the mean
absolute loss \citep{ghosh2017robust} when $\mu=2$. Thus, the
\compsum\ score-based surrogate losses $ \sfL_{\mu}$ include the
abstention surrogate losses proposed in \citep{mozannar2020consistent}
which correspond to the special case of $\mu=1$ and the abstention
surrogate losses adopted in \citep{caogeneralizing}, which correspond
to the special case of $\mu\in [1, 2]$.

We say that a hypothesis set $\sH$ is \emph{symmetric} when the
scoring functions it induces does not depend on any particular
ordering of the labels, that is when there exists a family $\sF$ of
functions $f$ mapping from $\sX$ to $\Rset$ such that, for any $x \in
\sX$, $\curl*{\bracket*{h(x,1),\ldots,h(x,n),h(x,n+1)}\colon h\in \sH}
= \curl*{\bracket*{f_1(x),\ldots, f_n(x),f_{n+1}(x)}\colon f_1,
  \ldots, f_{n+1}\in \sF}$. We say that a hypothesis set $\sH$ is
\emph{complete} if the set of scores it generates spans $\Rset$, that
is, $\curl*{h(x, y)\colon h\in \sH} = \Rset$, for any $(x, y)\in \sX
\times \sY \cup \curl*{n+1}$. Common hypothesis sets used in practice,
such as the family of linear models, that of neural networks and of
course that of all measurable functions are all symmetric and
complete.  The guarantees given in the following result are thus
general and widely applicable.

\begin{restatable}[\textbf{$\sH$-consistency bounds for \compsum\
      score-based surrogates}]
  {theorem}{BoundCompSum}
\label{Thm:bound_comp_sum}
Assume that $\sH$ is symmetric and complete. Then, for any hypothesis
$h \in \sH$ and any distribution $\sD$, the following inequality holds:
\begin{equation*}
\sE_{\labsc}( h) - \sE_{\labsc}^*( \sH) + \sM_{\labsc}( \sH)
\leq \Gamma_{\mu}
  \paren*{\sE_{\lsc_{\mu}}( h) - \sE_{\lsc_{\mu}}^*(\sH)
    + \sM_{\lsc_{\mu}}(\sH)},
\end{equation*}
where $\Gamma_{\mu}(t)=\begin{cases}
\sqrt{(2-c)2^{\mu}(2-\mu) t} & \mu\in [0,1)\\
\sqrt{2(2-c)(n+1)^{\mu-1}t } & \mu\in [1,2) \\
(\mu - 1)(n+1)^{\mu - 1} t & \mu \in [2,+ \infty).
\end{cases}$
\end{restatable}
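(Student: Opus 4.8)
The plan is to reduce the bound to a pointwise statement about conditional risks and then import the known $\sH$-consistency bounds for comp-sum losses in standard multi-class classification. Write $\cC_{\loss}(h,x) = \E_{y}[\loss(h,x,y)\mid X=x]$ for the conditional risk, $\cC^*_{\loss}(\sH,x) = \inf_{h\in\sH}\cC_{\loss}(h,x)$, and $\Delta\cC_{\loss}(h,x) = \cC_{\loss}(h,x) - \cC^*_{\loss}(\sH,x)$ for the conditional regret. Unwinding the definitions of the generalization error and the minimizability gap gives the identity $\sE_{\loss}(h) - \sE^*_{\loss}(\sH) + \sM_{\loss}(\sH) = \E_x[\Delta\cC_{\loss}(h,x)]$ for both $\loss = \labsc$ and $\loss = \lsc_{\mu}$. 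Since $\Gamma_{\mu}$ is concave (a square root for $\mu \in [0,2)$ and linear for $\mu\geq 2$), Jensen's inequality reduces the theorem to the pointwise bound $\Delta\cC_{\labsc}(h,x)\leq\Gamma_{\mu}\paren*{\Delta\cC_{\lsc_{\mu}}(h,x)}$ for every $x$.

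The key step is a reduction to an ordinary $(n+1)$-class problem with reweighted conditional probabilities. Setting $q(y\mid x) = \sfp(y\mid x)/(2-c)$ for $y\in\sY$ and $q(n+1\mid x) = (1-c)/(2-c)$, these weights sum to one, and a direct computation shows $\cC_{\lsc_{\mu}}(h,x) = (2-c)\,\widetilde{\cC}_{\ell_{\mu}}(h,x)$ exactly, where $\widetilde{\cC}_{\ell_{\mu}}$ denotes the conditional comp-sum risk under the probability vector $q$ over the augmented label set. A parallel case analysis of the abstention loss, distinguishing $\hh(x)\in\sY$ from $\hh(x)=n+1$, yields $\cC_{\labsc}(h,x) = (2-c)\,\widetilde{\cC}_{\ell_{0-1}}(h,x) - (1-c)$, where $\widetilde{\cC}_{\ell_{0-1}}$ is the $q$-weighted zero-one conditional risk. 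Because both losses depend on $h$ only through the common decision $\hh(x)$ and $(2-c)>0$, completeness lets the pointwise infima be taken over all decisions, so the additive constant and the positive factor pass through the regret, giving $\Delta\cC_{\labsc}(h,x) = (2-c)\,\Delta\widetilde{\cC}_{\ell_{0-1}}(h,x)$ and $\Delta\cC_{\lsc_{\mu}}(h,x) = (2-c)\,\Delta\widetilde{\cC}_{\ell_{\mu}}(h,x)$.

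With the reduction in hand, I would invoke the $\sH$-consistency bound for comp-sum losses in standard $(n+1)$-class classification \citep{MaoMohriZhong2023cross}, valid for symmetric and complete $\sH$ and any probability vector: $\Delta\widetilde{\cC}_{\ell_{0-1}}(h,x)\leq\gamma_{\mu}\paren*{\Delta\widetilde{\cC}_{\ell_{\mu}}(h,x)}$ with $\gamma_{\mu}(t) = \sqrt{2^{\mu}(2-\mu)\,t}$ for $\mu\in[0,1)$, $\gamma_{\mu}(t) = \sqrt{2(n+1)^{\mu-1}t}$ for $\mu\in[1,2)$, and $\gamma_{\mu}(t) = (\mu-1)(n+1)^{\mu-1}t$ for $\mu\geq 2$. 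Substituting $\Delta\widetilde{\cC}_{\ell_{0-1}} = \Delta\cC_{\labsc}/(2-c)$ and $\Delta\widetilde{\cC}_{\ell_{\mu}} = \Delta\cC_{\lsc_{\mu}}/(2-c)$ and simplifying each case recovers exactly the stated $\Gamma_{\mu}$; for instance, on the range $\mu\in[0,1)$ the factor $(2-c)$ combines as $(2-c)\,\gamma_{\mu}\paren*{\cdot/(2-c)} = \sqrt{(2-c)2^{\mu}(2-\mu)\,\cdot}$. Taking expectation over $x$ and applying Jensen's inequality then completes the proof.

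The main obstacle I anticipate is not the reduction computations, which are routine case analyses, but verifying that the hypotheses of the imported comp-sum bound genuinely transfer: one must check that symmetry and completeness of the augmented hypothesis set carry over to the reduced $(n+1)$-class problem, that the standard bound indeed applies to the reweighted vector $q$ (it is established pointwise for every probability vector, so this holds), and that the tie-breaking convention defining $\hh(x)$ is consistent between the abstention decision rule and the argmax decision underlying the standard bound, so that the two conditional regrets refer to the same decision.
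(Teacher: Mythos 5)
Your proposal is correct, but it does not follow the paper's own proof of this theorem. The paper proves Theorem~\ref{Thm:bound_comp_sum} directly (Appendix~\ref{app:bound_comp_sum}): for each regime of $\mu$ it constructs an explicit perturbed hypothesis $h_{\lambda}$ by redistributing the exponentiated scores of the two labels $y_{\max}$ and $\hh(x)$ with $\lambda$ chosen in closed form, lower-bounds the surrogate calibration gap by the difference of conditional risks at $h$ and $h_{\lambda}$, and finishes with Pinsker's inequality (for $\mu = 1$) or Taylor-expansion estimates (for the other ranges), the factor $(2-c)$ arising from $\sfp(y_{\max} \!\mid\! x) + \sfp(\hh(x) \!\mid\! x) \leq 2 - c$. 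Your argument instead factors the problem through what is essentially the paper's separate general transformation result, Theorem~\ref{Thm:bound-score-mabsc}: the reweighting $\ov \sfp(y \!\mid\! x) = \sfp(y \!\mid\! x)/(2-c)$ for $y \in \sY$ and $\ov \sfp(n+1 \!\mid\! x) = (1-c)/(2-c)$ is exactly the construction in that proof, your two conditional-risk identities are the content of Lemma~\ref{lemma:calibration_gap_score-mabsc} combined with the affine relation between $\sC_{\lsc_{\mu}}$ and the reweighted comp-sum risk, and the transformation $(2-c)\,\gamma_{\mu}(t/(2-c))$ reproduces the stated $\Gamma_{\mu}$ with identical constants in all three regimes, as you check. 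Your route buys modularity — the entire $\mu$-by-$\mu$ case analysis is delegated to the standard $(n+1)$-class bounds of \citet{MaoMohriZhong2023cross}, and the hypotheses you flag (pointwise validity for an arbitrary reweighted conditional vector, symmetry and completeness of the augmented hypothesis set, tie-breaking consistency) are precisely the right ones to verify and all hold. The paper's direct proof buys self-containedness: it does not require importing the exact standard constants for the less familiar ranges $\mu \in [0,1)$ and $\mu > 2$, which is the one external dependency in your argument you should confirm before relying on it. Given that the paper itself notes that plugging standard bounds into Theorem~\ref{Thm:bound-score-mabsc} "yields immediately" the corresponding abstention bounds, your proof is a legitimate and shorter alternative derivation of the same result.
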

The proof is given in Appendix~\ref{app:bound_comp_sum}. It consists
of analyzing the
calibration gap (refer to the beginning of Appendix~\ref{app:bound_comp_sum} for its definition) of the score-based abstention loss $\labsc$ and that
of $\lsc_{\mu}$, and of finding a concave function $\Gamma_\mu$
relating these two quantities. Note that our proofs and results are
distinct, original, and more sophisticated than those in the standard
setting \citep{MaoMohriZhong2023cross}, where the standard loss
$\ell_{\mu}$ is analyzed. Establishing $\sH$-consistency bounds for
$\lsc_{\mu}$ is more intricate compared to $\ell_{\mu}$. This is
because the target loss in the score-based multi-class abstention is
inherently different from that of the standard multi-class scenario
(the multi-class zero-one loss). Thus, we need to tackle a more
complex calibration gap, integrating both the conditional probability
vector and the cost function. This complexity presents an added layer
of challenge when attempting to establish a lower bound for the
calibration gap of the surrogate loss in relation to the target loss
in the score-based abstention setting.

To understand the result, consider first the case where the
minimizability gaps are zero. As mentioned earlier, this would be the
case, for example, when $ \sH$ is the family of all measurable
functions or when $ \sH$ contains the Bayes classifier.
In that case, the theorem shows that if the estimation loss
$(\sE_{\lsc_{\mu}}( h) - \sE_{\lsc_{\mu}}^*(\sH))$ is reduced to $\e$,
then, for $\mu \in [0, 2)$, in particular for the logistic score-based
  surrogate ($\mu = 1$) and the generalized cross-entropy score-based
  surrogate ($\mu \in (1, 2)$), modulo a multiplicative constant, the
  score-based abstention estimation loss $(\sE_{\labsc}( h) -
  \sE_{\labsc}^*( \sH))$ is bounded by $\sqrt{\e}$. The bound is even
  more favorable for the mean absolute error score-based surrogate
  ($\mu = 2$) or for \compsum\ score-based surrogate $\lsc_{\mu}$ with
  $\mu \in (2, +\infty)$ since in that case, modulo a multiplicative
  constant, the score-based abstention estimation loss $(\sE_{\labsc}(
  h) - \sE_{\labsc}^*( \sH))$ is bounded by $\e$.
  
These are strong results since they are not asymptotic and are
hypothesis set-specific. In particular,
Theorem~\ref{Thm:bound_comp_sum} provides stronger guarantees than the
Bayes-consistency results of \cite{mozannar2020consistent} or
\cite{caogeneralizing} for cross-entropy abstention surrogate losses
\eqref{eq:L-mu} with the logistic loss ($\mu = 1$), generalized
cross-entropy loss ($\mu\in (1,2)$) and mean absolute error loss
($\mu=2$) adopted for $\ell$. These Bayes-consistency results can be
obtained by considering the special case of $ \sH$ being the family of
all measurable functions and taking the limit.
  
Moreover, Theorem~\ref{Thm:bound_comp_sum} also provides similar
guarantees for other types of \compsum\ score-based surrogate losses,
such as $\mu\in [0,1)$ and $\mu\in [2,+\infty)$, which are new
    surrogate losses for score-based multi-class abstention that, to
    the best of our knowledge, have not been previously studied in the
    literature. In particular, our $\sH$-consistency bounds can help
    theoretically compare different \compsum\ score-based surrogate
    losses and guide the design of a multi-class abstention
    algorithm. In contrast, asymptotic consistency guarantees given
    for a subset of \compsum\ score-based surrogate losses in
    \citep{mozannar2020consistent,caogeneralizing} do not provide any
    such comparative information.
    
    Recall that the minimizability gap is always upper-bounded by the
    approximation error. By Lemma~\ref{lemma:calibration_gap_score-mabsc} in
    Appendix~\ref{app:score-mabsc}, the minimizability gap for the
    abstention loss $\sM_{\labs}(\sH)$ coincides with the
    approximation error $\sA_{\labs}(\sH)$ when the labels generated
    by the hypothesis set encompass all possible outcomes, which
    naturally holds true for typical hypothesis sets. However, for a
    surrogate loss, the minimizability gap is in general a more
    refined quantity than the approximation error and can lead to more
    favorable guarantees. More precisely, $\sH$-consistency bounds
    expressed in terms of minimizability gaps are better and more
    significant than the excess error bounds expressed in terms of
    approximation errors (See Appendix~\ref{app:better-bounds} for a
    more detailed discussion).

\subsection{Analysis of Minimizability Gaps}
\label{sec:min-gaps}

In general, the minimizability gaps do not vanish and their magnitude,
$\sM_{\lsc_{\mu}}(\sH)$, is important to take into account when
comparing \compsum\ score-based surrogate losses, in addition to the
functional form of $\Gamma_\mu$. Thus, we will specifically analyze
them below.  Note that the dependency of the multiplicative constant
on the number of classes in some of these bounds ($\mu \in (1,
+\infty)$) makes them less favorable, while for $\mu \in [0, 1]$, the
bounds do not depend on the number of classes.

\ignore{The minimizability gap $\sM_{\lsc}( \sH) = \sE^*_{\lsc}( \sH)
  - \E_x \bracket[\big]{\inf_{ h \in \sH} \E_y\bracket*{\lsc( h, X, y)
      \mid X = x}}$ measures the difference between the best-in-class
  expected loss and the expected infimum of the pointwise expected
  loss. }
In the deterministic cases where for any $x\in \sX$ and $y\in \sY$,
either $\sfp(y \!\mid\! x) = 0$ or $1$, the pointwise expected loss admits an
explicit form. Thus, the following result characterizes the
minimizability gaps directly in those cases.  \ignore{We will consider
  the deterministic case where for any $x\in \sX$ and $y\in \sY$,
  either $\sfp(y \!\mid\! x) = 0$ or $1$.  We will specifically study the
  $\Lambda$-bounded hypothesis sets $ \sH_{\Lambda}$, that is, for
  fixed $x \in \sX$, we have $\curl*{\paren*{ h(x, 1), \ldots, h(x,
      n), h(x, n+1)}\colon h \in \sH}$ = $[-\Lambda, +\Lambda]^{n+1}$
  where $\Lambda \in [0, + \infty]$. The family of all measurable
  functions is a special $\Lambda$-bounded hypothesis set which
  corresponds to $\Lambda= + \infty$. The following theorem
  characterizes the minimizability gaps in those cases.
\begin{restatable}[\textbf{Characterization of minimizability gaps}]
  {theorem}{GapUpperBoundDetermi}
\label{Thm:gap-upper-bound-determi}
Assume that $ \sH$ is $\Lambda$-bounded. Then, for the
\compsum\ score-based surrogate losses $\lsc_{\mu}$ and any
deterministic distribution, the minimizability gaps can be upper
bounded as $\sM_{\lsc_{\mu}}( \sH) \leq
\sM_{\lsc_{\mu}}(\sH)(\Lambda)$, where
\begin{align*}
 \sM_{\lsc_{\mu}}(\sH)(\Lambda)=
 \min\curl*{\sT_{\mu}\paren*{\sE^*_{\lsc_{0}}( \sH)},c}
 - \min\curl*{\sT_{\mu}\paren*{\uv \sE^*(\sH)},c}
\end{align*}
with $\uv \sE^*(\sH)=e^{-2 \Lambda}\,n\leq \sE^*_{\lsc_{0}}( \sH)$ and
$\sT_{\mu}(t)$ defined as
\begin{equation*}
\sT_{\mu}(t)
=
\begin{cases}
\frac{1}{1 - \mu} \paren*{(1 + t)^{1 - \mu} - 1} & \mu \geq 0, \mu \neq 1 \\
\log(1 + t) & \mu = 1.
\end{cases}
\end{equation*}
\end{restatable}
See Appendix~\ref{app:score-mabsc} for the proof.
}

\begin{restatable}[\textbf{Characterization of minimizability gaps}]
  {theorem}{GapUpperBoundDetermi}
\label{Thm:gap-upper-bound-determi}
Assume that $ \sH$ is symmetric and complete. Then, for the \compsum\ score-based surrogate losses $\lsc_{\mu}$ and any deterministic distribution, the minimizability gaps can be characterized as follows:
\begin{align*}
\sM_{\lsc_{\mu}}( \sH) = \sE_{\lsc_{\mu}}^*(\sH) -  \begin{cases}
\frac{1}{1 - \mu} \bracket*{\bracket*{1+\paren*{1-c}^{\frac{1}{2-\mu}}}^{2 - \mu} - (2-c)} & \mu \notin \curl*{1,2}\\
-\log \paren*{\frac{1}{2-c}}-(1-c)\log \paren*{\frac{1-c}{2-c}}
 & \mu=1\\
1-c & \mu =2.
\end{cases}
\end{align*}
\end{restatable}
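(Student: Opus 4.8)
\section*{Proof proposal}

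The plan is to compute directly the pointwise term $\E_x\bracket[\big]{\inf_{h \in \sH}\E_y\bracket*{\lsc_{\mu}(h, X, y)\mid X = x}}$ appearing in the definition of the minimizability gap, since the claimed identity is precisely $\sM_{\lsc_{\mu}}(\sH) = \sE^*_{\lsc_{\mu}}(\sH)$ minus that term. First I would exploit the deterministic assumption: for each $x$ there is a unique label $y^\ast = y^\ast(x) \in \sY$ with $\sfp(y^\ast \mid x) = 1$, so the conditional expectation collapses to $\E_y\bracket*{\lsc_{\mu}(h, x, y)\mid X = x} = \ell_{\mu}(h, x, y^\ast) + (1 - c)\,\ell_{\mu}(h, x, n+1)$, where $y^\ast \neq n+1$.

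Next I would reduce the pointwise infimum to a one-dimensional optimization. Because $\sH$ is symmetric and complete, for any fixed $x$ the achievable score vector $\paren*{h(x,1), \ldots, h(x, n+1)}$ ranges over all of $\Rset^{n+1}$; passing to the softmax reparametrization $q_y = e^{h(x,y)}\big/\sum_{y'} e^{h(x,y')}$, the infimum runs over the open probability simplex, and one has the clean identities $\ell_{\mu}(h, x, y) = \frac{1}{1-\mu}\paren*{q_y^{\mu - 1} - 1}$ for $\mu \neq 1$ and $\ell_{\mu}(h, x, y) = -\log q_y$ for $\mu = 1$. The objective then depends only on $q_{y^\ast}$ and $q_{n+1}$, and since it is strictly decreasing in each of these two coordinates, the infimum is approached (in the limit) by placing all remaining mass on $y^\ast$ and $n+1$; writing $q_{n+1} = t$ and $q_{y^\ast} = 1 - t$ reduces the problem to minimizing a single-variable function $F(t)$ over $t \in [0,1]$.

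The core computation is the minimization of $F$ in each regime. For $\mu \notin \curl*{1,2}$ I would set $F'(t) = 0$, which after simplification reads $(1-t)^{\mu-2} = (1-c)\,t^{\mu-2}$, giving the critical point $t = \frac{a}{1+a}$ with $a = (1-c)^{1/(2-\mu)}$; substituting back and using the key algebraic identity $(1-c)\,a^{\mu-1} = a$ collapses $(1-t)^{\mu-1} + (1-c)\,t^{\mu-1}$ to $(1+a)^{2-\mu}$, yielding $\frac{1}{1-\mu}\bracket*{\bracket*{1 + (1-c)^{1/(2-\mu)}}^{2-\mu} - (2-c)}$. The case $\mu = 1$ amounts to minimizing $-\log(1-t) - (1-c)\log t$, whose stationarity condition gives $t = \frac{1-c}{2-c}$ and the stated logarithmic value; the case $\mu = 2$ gives the affine objective $(1-c) + c\,t$, minimized at $t = 0$ with value $1-c$. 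Both special cases should agree with the $\mu \to 1$ and $\mu \to 2$ limits of the general formula, a useful consistency check.

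Finally, since the deterministic structure and the symmetry of the loss make this optimal value the same constant for every $x$, taking $\E_x$ leaves it unchanged, and subtracting it from $\sE^*_{\lsc_{\mu}}(\sH)$ yields the claimed characterization. I expect the main obstacle to be twofold: rigorously justifying that symmetry and completeness let the pointwise infimum be taken freely over $\Rset^{n+1}$ (equivalently the open simplex) and that the monotonicity argument legitimately forces all mass onto $\curl*{y^\ast, n+1}$; and carrying out the substitution cleanly, where the identity $(1-c)\,a^{\mu-1} = a$ is exactly what makes the closed form emerge. The boundary behavior for $\mu = 2$, where the infimum is approached as $t \to 0$ rather than attained at an interior critical point, also needs separate care.
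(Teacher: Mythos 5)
Your proposal is correct and follows essentially the same route as the paper's proof: both exploit determinism to reduce the conditional risk to a function of the two softmax coordinates $s(x,y_{\max})$ and $s(x,n+1)$, find the same critical point $s(x,n+1)=\frac{(1-c)^{1/(2-\mu)}}{1+(1-c)^{1/(2-\mu)}}$ by differentiation (handling $\mu=1$ and $\mu=2$ separately), and use symmetry and completeness plus the $x$-independence of the optimal value to conclude. Your extra care about the $\mu=2$ boundary, where the infimum is only approached as $t\to 0$, is a point the paper glosses over but does not change the result.
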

See Appendix~\ref{app:gap-upper-bound-determi} for the proof. By
l’H\^opital's rule, $\sE_{\lsc_{\mu}}^*(\sH)-\sM_{\lsc_{\mu}}( \sH)$
is continuous as a function of $\mu$ at $\mu = 1$. In light of the
equality $\lim_{x\to 0^{+}}\paren[\big]{1 + u^{\frac1x}}^x =
\max\curl*{1, u} = 1$, for $u \in [0, 1]$,
$\sE_{\lsc_{\mu}}^*(\sH)-\sM_{\lsc_{\mu}}( \sH)$ is continuous as a
function of $\mu$ at $\mu = 2$. Moreover, for any $c\in (0, 1)$,
$\sE_{\lsc_{\mu}}^*(\sH)-\sM_{\lsc_{\mu}}( \sH)$ is decreasing with
respect to $\mu$. On the other hand, since the function $\mu \mapsto
\frac{1}{1-\mu}\paren*{t^{1-\mu}-1} \1_{\mu \neq 1} + \log(t) \1_{\mu
  = 1}$ is decreasing for any $t > 0$, we obtain that $ \ell_{\mu}$ is
decreasing with respect to $\mu$, which implies that $\lsc_{\mu}$ is
decreasing and then $\sE_{\lsc_{\mu}}^*(\sH)$ is decreasing with
respect to $\mu$ as well. For a specific problem, a favorable
$\mu\in[0, \infty)$ is one that minimizes $\sM_{\lsc_{\mu}}( \sH)$,
  which, in practice, can be selected via cross-validation.

\subsection{General Transformation}

More generally, we prove the following result, which shows that an
$\sH$-consistency bound for $ \ell$ with respect to the
zero-one loss, yields immediately an $\sH$-consistency bound
for $\lsc$ with respect to $\labsc$.

\begin{restatable}{theorem}{BoundScoreMabsc}
\label{Thm:bound-score-mabsc}
Assume that $ \ell$ admits an $ \sH$-consistency bound with respect to
the multi-class zero-one classification loss $ \ell_{0-1}$ with a
concave function $\Gamma$, that is, for all $ h \in \sH$, the
following inequality holds:
\begin{equation*}
\sE_{\ell_{0-1}}( h) - \sE_{\ell_{0-1}}^*( \sH) + \sM_{\ell_{0-1}}( \sH)
\leq \Gamma\paren*{\sE_{ \ell}( h)-\sE_{ \ell}^*( \sH) +\sM_{ \ell}( \sH)}.
\end{equation*}
Then, $\lsc$ defined by \eqref{eq:sur-score-mabsc} admits an
$\sH$-consistency bound with respect to $\labsc$ with the functional
form $(2 - c)\Gamma\paren{\frac{t}{2 - c}}$, that is, for all $ h\in
\sH$, we have 
\begin{equation*}
\sE_{\labsc}( h) - \sE_{\labsc}^*( \sH) + \sM_{\labsc}( \sH)
\leq (2 - c) \Gamma\paren*{\frac{\sE_{\lsc}( h) - \sE_{\lsc}^*( \sH) +\sM_{\lsc}( \sH)}{2 - c}}.
\end{equation*}
\end{restatable}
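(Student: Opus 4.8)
The plan is to reduce the score-based abstention problem to a standard multi-class problem over the augmented label set $\wt\sY = \sY \cup \curl*{n+1}$ by constructing an auxiliary conditional distribution, and then to invoke the assumed $\sH$-consistency bound for $\ell$ on that auxiliary distribution. Throughout I write $\cC_{\loss}(h, x) = \E_y\bracket*{\loss(h, x, y) \mid X = x}$ for the conditional expected loss and $\Delta\cC_{\loss}(h, x) = \cC_{\loss}(h, x) - \inf_{h' \in \sH}\cC_{\loss}(h', x)$ for the associated calibration gap. The starting point is the elementary identity $\sE_{\loss}(h) - \sE_{\loss}^*(\sH) + \sM_{\loss}(\sH) = \E_x\bracket*{\Delta\cC_{\loss}(h, x)}$, which holds for every loss and every distribution (it is immediate from the definition of the minimizability gap) and which rewrites both sides of the target inequality as expectations of calibration gaps.

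First I would define, for each $x$, the transformed conditional distribution $\wt\sfp(y \mid x) = \frac{\sfp(y \mid x)}{2 - c}$ for $y \in \sY$ and $\wt\sfp(n+1 \mid x) = \frac{1 - c}{2 - c}$, which is a valid probability vector since $\sum_{y \in \sY}\sfp(y \mid x) + (1 - c) = 2 - c$; let $\wt\sD$ denote the distribution with the same $\sX$-marginal as $\sD$ and these conditionals on $\wt\sY$. The core of the argument is two pointwise identities, valid for each fixed $x$ as functions of $h$. On the surrogate side, expanding $\cC_{\lsc}(h, x) = \sum_{y \in \sY}\sfp(y \mid x)\ell(h, x, y) + (1 - c)\ell(h, x, n+1)$ and factoring out $2 - c$ gives $\cC_{\lsc}(h, x) = (2 - c)\,\cC_{\ell}(h, x; \wt\sfp)$. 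On the target side, evaluating $\cC_{\labsc}(h, x)$ separately in the cases $\hh(x) \in \sY$ and $\hh(x) = n+1$ and comparing with the standard conditional zero-one loss $\cC_{\ell_{0-1}}(h, x; \wt\sfp) = 1 - \wt\sfp(\hh(x) \mid x)$ yields $\cC_{\labsc}(h, x) = (2 - c)\,\cC_{\ell_{0-1}}(h, x; \wt\sfp) - (1 - c)$.

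Next I would pass to calibration gaps. Since both identities are affine in the conditional loss with the same positive slope $2 - c$, and since the additive constant $1 - c$ on the target side does not depend on $h$ and therefore cancels when the pointwise infimum over $\sH$ is subtracted, I obtain $\Delta\cC_{\lsc}(h, x) = (2 - c)\,\Delta\cC_{\ell}(h, x; \wt\sfp)$ and $\Delta\cC_{\labsc}(h, x) = (2 - c)\,\Delta\cC_{\ell_{0-1}}(h, x; \wt\sfp)$. Applying the aggregate identity, then the assumed $\sH$-consistency bound for $\ell$ under the distribution $\wt\sD$ (the assumption holds for any distribution, hence for $\wt\sD$), I would chain $\sE_{\labsc}(h) - \sE_{\labsc}^*(\sH) + \sM_{\labsc}(\sH) = (2 - c)\,\E_x\bracket*{\Delta\cC_{\ell_{0-1}}(h, x; \wt\sfp)} \le (2 - c)\,\Gamma\paren*{\E_x\bracket*{\Delta\cC_{\ell}(h, x; \wt\sfp)}}$, and finally rewrite the argument of $\Gamma$ as $\frac{1}{2 - c}\paren*{\sE_{\lsc}(h) - \sE_{\lsc}^*(\sH) + \sM_{\lsc}(\sH)}$ via the surrogate identity, producing exactly the claimed bound.

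The main obstacle I expect is establishing the two pointwise identities, and in particular verifying the cancellation on the target side: the additive constant relating $\cC_{\labsc}$ to $(2 - c)\,\cC_{\ell_{0-1}}(\cdot; \wt\sfp)$ must come out to the \emph{same} value $1 - c$ in both the non-abstaining case ($\hh(x) \in \sY$, where $\cC_{\labsc} = 1 - \sfp(\hh(x) \mid x)$) and the abstaining case ($\hh(x) = n+1$, where $\cC_{\labsc} = c$). It is precisely this uniformity that guarantees the constant drops out of the calibration gap and leaves a clean scaling by $2 - c$; the specific choice of $\wt\sfp$ is what makes it happen. A secondary point to state carefully is the legitimacy of invoking the assumed bound for $\ell$ on the auxiliary distribution $\wt\sD$ rather than on $\sD$, which is justified by the requirement that $\sH$-consistency bounds hold for all distributions.
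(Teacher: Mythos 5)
Your proposal is correct and follows essentially the same route as the paper's proof: the same auxiliary conditional distribution $\ov \sfp(y \!\mid\! x) = \frac{\sfp(y \!\mid\! x)}{2 - c}\1_{y\in \sY}+\frac{1 - c}{2 - c}\1_{y = n + 1}$, the same rescaling identities relating $\cC_{\lsc}$ to $(2-c)\,\cC_{\ell}(\cdot;\wt\sfp)$ and $\Delta\cC_{\labsc}$ to $(2-c)\,\Delta\cC_{\ell_{0-1}}(\cdot;\wt\sfp)$, and your explicit check that the additive constant is $1-c$ in both the abstaining and non-abstaining cases is exactly the key verification. The only cosmetic difference is that you invoke the assumed bound once at the level of the full auxiliary distribution $\wt\sD$, whereas the paper applies it pointwise to the conditional regrets and then uses concavity of $\Gamma$ with Jensen's inequality; both are valid.
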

The proof is given in Appendix~\ref{app:bound-score}.
\cite{AwasthiMaoMohriZhong2022multi} recently presented a series of results
providing $\sH$-consistency bounds for common surrogate losses in the
standard multi-class classification, including max losses such as
those of \citet{crammer2001algorithmic}, sum losses such as those of
\citet{weston1998multi} and constrained losses such as the loss
functions adopted by \citet{lee2004multicategory}. Thus, plugging in
any of those $\sH$-consistency bounds in Theorem~\ref{Thm:bound-score}
yields immediately a new $ \sH$-consistency bound for the
corresponding score-based abstention surrogate losses.

\section{Two-Stage Score-Based Formulation}
\label{sec:two-stage-mabsc}

In the single-stage scenario discussed in Section~\ref{sec:score}, the
learner simultaneously learns when to abstain and how to make
predictions otherwise. However, in practice often there is already a
predictor available and retraining can be very costly. A two-stage
solution is thus much more relevant for those critical applications,
where the learner only learns when to abstain in the second stage
based on the predictor trained in the first stage. With the two stage
solution, we can improve the performance of a large pre-trained model
by teaching it the option of abstaining without having to retrain the
model. In this section, we analyze a two-stage algorithmic scheme, for
which we propose surrogate losses that we prove to benefit from
$\sH$-consistency bounds.

Given a hypothesis set $\sH$ of functions mapping from $\sX \times
(\sY \cup \curl*{n + 1})$ to $\Rset$, it can be decomposed into $ \sH=
\sH_{\sY}\times \sH_{n+1}$, where $\sH_{\sY}$ denotes the hypothesis
set spanned by the first $n$ scores corresponding to the labels, and
$\sH_{n+1}$ represents the hypothesis set spanned by the last score
corresponding to the additional category.\ignore{The corresponding
  hypotheses are denoted by $h_{\sY} \in \sH_{\sY}$ and $h_{n+1} \in
  \sH_{n+1}$, respectively.}  We consider the following two-stage
algorithmic scheme: in the first stage, we learn a hypothesis $h_{\sY}
\in \sH_{\sY}$ by optimizing a surrogate loss $\ell$ for standard
multi-class classification; in the second stage, we fix the $h_{\sY}$
learned in the first stage and then learn a hypothesis $h_{n+1} \in
\sH_{n+1}$ by optimizing a surrogate loss function $\ell_{h_{\sY}}$
defined for any $h_{n+1} \in \sH_{n+1}$ and $(x, y) \in \sX \times
\sY$ by
\begin{equation}
\label{eq:ell-Phi-h}
\ell_{h_{\sY}}\paren*{h_{n+1}, x, y} = \1_{\hh_{\sY}(x) \neq y} \Phi\paren*{h_{n+1}(x) - \max_{y\in \sY}h_{\sY}(x, y)} + c \Phi\paren*{\max_{y\in \sY}h_{\sY}(x, y) - h_{n+1}(x)},
\end{equation}
where $\Phi$ is a decreasing function.  The learned hypothesis $h \in
\sH$ corresponding to those two stages can be expressed as $h =
(h_{\sY}, h_{n+1})$.  We note that the first stage consists of the
familiar task of finding a predictor using a standard surrogate loss
such as the logistic loss $\ell(h,x, y)=\log\paren*{\sum_{y'\in
    \sY}e^{h(x, y')-h(x, y)}}$ (or cross-entropy combined with the
softmax). Recall that the learner abstains from making a prediction
for $x$ and incurs a cost $c$ when $ h_{n+1}(x) \geq \max_{y \in \sY}
h_{\sY}(x, y)$.  In the second stage, the first term of
\eqref{eq:ell-Phi-h} encourages abstention for an input instance whose
prediction made by the pre-trained predictor $h_{\sY}$ is incorrect,
while the second term penalizes abstention according to the cost
$c$. The function $\Phi$ can be chosen as any margin-based loss
function in binary classification, including the exponential loss or
the logistic loss.

Let $\ell_{0-1}^{\rm{binary}}$ be the binary zero-one classification loss. Then, the two-stage surrogate losses benefit from the $\sH$-consistency bounds shown in Theorem~\ref{Thm:bound-general-two-step}. For a fixed parameter $\tau$, we define the $\tau$-translated hypothesis set of $\sH_{n+1}$ by $\sH_{n+1}^{\tau} =\curl*{ h_{n+1} - \tau : h_{n+1} \in \sH_{n+1}}$.
\begin{restatable}[\textbf{$\sH$-consistency bounds for
      two-stage surrogates}]{theorem}{BoundGenralTwoStep}
\label{Thm:bound-general-two-step}
Given a hypothesis set $\sH=\sH_{\sY}\times \sH_{n+1}$. Assume that
$\ell$ admits an $\sH_{\sY}$-consistency bound with respect to the
multi-class zero-one classification loss $ \ell_{0-1}$ and that $\Phi$
admits an $\sH_{n+1}^{\tau}$-consistency bound with respect to the
binary zero-one classification loss $\ell_{0-1}^{\rm{binary}}$ for any
$\tau \in \Rset$.  Thus, there are non-decreasing concave functions
$\Gamma_1$ and $\Gamma_2$ such that, for all $h_{\sY}\in \sH_{\sY}$,
$h_{n+1}^{\tau} \in \sH_{n+1}^{\tau}$ and $\tau \in \Rset$, we have
\begin{align*}
\sE_{\ell_{0-1}}(h_{\sY}) - \sE_{\ell_{0-1}}^*(\sH_{\sY}) + \sM_{\ell_{0-1}}(\sH_{\sY}) &\leq \Gamma_1\paren*{\sE_{\ell}(h_{\sY})-\sE_{\ell}^*(\sH_{\sY}) +\sM_{\ell}(\sH_{\sY})}\\
\sE_{\ell_{0-1}^{\rm{binary}}}(h_{n+1}^{\tau}) - \sE_{\ell_{0-1}^{\rm{binary}}}^*(\sH_{n+1}^{\tau}) + \sM_{\ell_{0-1}^{\rm{binary}}}(\sH_{n+1}^{\tau}) &\leq \Gamma_2\paren*{\sE_{\Phi}(h_{n+1}^{\tau})-\sE_{\Phi}^*(\sH_{n+1}^{\tau}) +\sM_{\Phi}(\sH_{n+1}^{\tau})}.
\end{align*}
Then, the following holds for all $ h=(h_{\sY},h_{n+1})\in \sH$:
\begin{align*}
\sE_{\labs}(h) - \sE_{\labs}^*(\sH) + \sM_{\labs}(\sH) & \leq \Gamma_1\paren*{\sE_{\ell}(h_{\sY})-\sE_{\ell}^*(\sH_{\sY}) +\sM_{\ell}(\sH_{\sY})}\\
&\qquad  + (1+c)\Gamma_2\paren[\bigg]{\frac{\sE_{\ell_{h_{\sY}}}(h_{n+1})-\sE_{\ell_{h_{\sY}}}^*(\sH_{n+1}) +\sM_{\ell_{h_{\sY}}}(\sH_{n+1})}{c}},
\end{align*}
where the
constant factors $(1 + c)$ and $\frac{1}{c}$ can be removed 
when $\Gamma_2$ is linear.
\end{restatable}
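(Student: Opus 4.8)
The plan is to reduce the two-stage abstention bound to the two given component bounds by decomposing the abstention estimation error into a "prediction part" and a "deferral part," each controlled by one of the hypothesized $\sH$-consistency bounds. The core idea is that the score-based abstention loss $\labsc(h,x,y)$ for $h=(h_{\sY},h_{n+1})$ factors into two sources of error: whether the first-stage predictor $\hh_{\sY}$ predicts correctly, and whether the second-stage rule abstains appropriately. I would first write the abstention loss explicitly in terms of the event $\hh_{\sY}(x)\neq y$ and the abstention indicator $\1_{h_{n+1}(x)\ge \max_{y\in\sY}h_{\sY}(x,y)}$, and identify the second stage as a \emph{binary} classification problem: given the fixed $h_{\sY}$, deciding whether to abstain is a binary decision whose target loss has a cost-weighted structure matching $\ell_{h_{\sY}}$ in \eqref{eq:ell-Phi-h}.

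First I would establish a pointwise inequality relating the conditional $\labsc$-regret to the conditional regrets of the two stages. Fixing $x$, the abstention decision is governed by the sign of $h_{n+1}(x)-\max_{y\in\sY}h_{\sY}(x,y)$; by translating $h_{n+1}$ by $\tau=\max_{y\in\sY}h_{\sY}(x,y)$ (which is precisely why the theorem invokes the $\tau$-translated set $\sH_{n+1}^{\tau}$ for every $\tau\in\Rset$), this becomes a standard binary threshold-at-zero rule, so the hypothesized bound for $\Phi$ applies. The key algebraic step is to bound the conditional abstention regret by a sum of the conditional zero-one regret of the first stage plus a $(1+c)$-scaled conditional binary zero-one regret of the second stage, using that the cost of a wrong abstention decision is at most $1$ when the predictor errs and $c$ when it abstains unnecessarily, giving the weight $(1+c)$ and the rescaling by $1/c$ that appears on the second-stage term.

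Next I would apply the two assumed $\sH$-consistency bounds, with concave $\Gamma_1$ and $\Gamma_2$, to convert the conditional zero-one regrets into conditional surrogate regrets, then take expectations over $x$ and invoke the standard argument (as used throughout these $\sH$-consistency proofs) that Jensen's inequality together with concavity and monotonicity of $\Gamma_1,\Gamma_2$ lets one pull the expectation inside, turning conditional surrogate regrets into the global surrogate estimation-error-plus-minimizability-gap expressions $\sE_{\ell}(h_{\sY})-\sE_{\ell}^*(\sH_{\sY})+\sM_{\ell}(\sH_{\sY})$ and its second-stage analogue. The additive structure of the two stages is what produces the sum of two $\Gamma$-terms rather than a single composed bound. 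I would also verify that when $\Gamma_2$ is linear, the constants $(1+c)$ and $1/c$ combine multiplicatively and can be absorbed, yielding the stated simplification.

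\textbf{The hard part} will be the pointwise decomposition: correctly accounting for the interaction between the two stages, specifically showing that the second-stage target loss induced by fixing $h_{\sY}$ is genuinely a (cost-sensitive) binary zero-one loss whose regret is controlled by $\Phi$'s bound on the translated set, and pinning down the exact constants $(1+c)$ and $1/c$. The subtlety is that the optimal abstention decision depends on $h_{\sY}$ through both the correctness event and the translation $\tau$, so I must ensure the minimizability gaps line up: the second-stage gap $\sM_{\ell_{h_{\sY}}}(\sH_{n+1})$ must be shown to upper-bound (after the $\Gamma_2$ transformation) the contribution of the translated binary minimizability gap $\sM_{\ell_{0-1}^{\mathrm{binary}}}(\sH_{n+1}^{\tau})$ uniformly in the realized value of $\tau=\max_{y\in\sY}h_{\sY}(x,y)$. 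Handling this uniformity over $\tau$, and checking that the infimum defining each minimizability gap is preserved under the translation and the correctness weighting, is where I expect the technical care to concentrate.
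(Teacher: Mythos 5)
Your proposal is correct and follows essentially the same route as the paper's proof: the same two-term decomposition of the conditional regret (first minimizing over $h_{n+1}$ with $h_{\sY}$ fixed, then over $h_{\sY}$), the same recasting of the second stage as a binary problem on the $\tau$-translated set with $\tau=\max_{y\in\sY}h_{\sY}(x,y)$ and a constructed conditional distribution, the same origin of the constants from the normalization mass lying in $[c,1+c]$, and the same concluding Jensen step. The "hard part" you flag — the uniformity over the realized $\tau$ and the alignment of the minimizability gaps — is exactly where the paper's argument does its work, and is resolved there by applying the $\sH_{n+1}^{\tau}$-bound pointwise at each $x$ with a single $\Gamma_2$ valid for all $\tau$.
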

The proof is given in Appendix~\ref{app:bound-general-two-step}. The
assumptions in Theorem~\ref{Thm:bound-general-two-step} are mild and
hold for common hypothesis sets such as linear models and neural
networks with common surrogate losses in the binary and multi-class
classification, as shown by
\citep{awasthi2022Hconsistency,AwasthiMaoMohriZhong2022multi}. Recall that the
minimizability gaps vanish when $\sH_{\sY}$ and $\sH_{n+1}$ are the
family of all measurable functions or when $\sH_{\sY}$ and $\sH_{n+1}$
contain the Bayes predictors.  In their absence, the theorem shows
that if the estimation loss
$(\sE_{\ell}(h_{\sY})-\sE_{\ell}^*(\sH_{\sY}))$ is reduced to $\e_1$
and the estimation loss
$(\sE_{\ell_{h_{\sY}}}(h_{n+1})-\sE_{\ell_{h_{\sY}}}^*(\sH_{n+1}))$ to
$\e_2$, then, modulo constant factors, the score-based abstention
estimation loss $(\sE_{\labs}(h) - \sE_{\labs}^*(\sH))$ is bounded by
$\Gamma_1(\e_1) + \Gamma_2(\e_2)$.
Thus, this gives a strong guarantee for the surrogate losses described
in this two-stage setting.

\section{Realizable \texorpdfstring{$\sH$}{H}-Consistency and Benefits of Two-Stage Surrogate Losses}
\label{sec:realizable-mabsc}

\citet{pmlr-v206-mozannar23a} recently showed that cross-entropy
score-based surrogate losses are not realizable $\sH$-consistent, as
defined by \citet{long2013consistency,zhang2020bayes}, in relation to
abstention loss. Instead, the authors proposed a novel surrogate
loss that is proved to be realizable $\sH$-consistent when $\sH$ is
\emph{closed under scaling}, although its Bayes-consistency remains
unclear. Devising a surrogate loss that exhibits both
Bayes-consistency and realizable $\sH$-consistency remains an open
problem. A hypothesis set $\sH$ is said to be \emph{closed under
scaling} if, for any hypothesis $h$ belonging to $\sH$, the scaled
hypothesis $\alpha h$ also belongs to $\sH$ for all $\alpha \in \Rset$.

We prove in Theorem~\ref{Thm:bound-general-two-step-realizable} of
Appendix~\ref{app:bound-general-two-step-realizable}, that for any
realizable distribution, when both the first-stage surrogate
estimation loss $\sE_{\ell}(h_{\sY}) - \sE_{\ell}^*(\sH_{\sY})$ and
the second-stage surrogate estimation loss
$\sE_{\ell_{h_{\sY}}}(h_{n+1}) - \sE_{\ell_{h_{\sY}}}^*(\sH_{n+1})$
converge to zero, the abstention estimation loss $\sE_{\labs}(h) -
\sE_{\labs}^*(\sH)$ also approaches zero. This implies that the
two-stage score-based surrogate loss is realizable $\sH$-consistent
with respect to $\labsc$, which provides a significant advantage over
the single-stage cross-entropy score-based surrogate loss. It is
important to note that Theorem~\ref{Thm:bound-general-two-step} shows
that the two-stage formulation is also Bayes-consistent. This
addresses the open problem in \citep{pmlr-v206-mozannar23a} and
highlights the benefits of the two-stage formulation. In the following
section, our empirical results further demonstrate that the two-stage
score-based surrogate loss outperforms the state-of-the-art
cross-entropy score-based surrogate loss.

\section{Finite Sample Guarantees}
\label{sec:finite-sample}

Our $\sH$-consistency bounds enable the direct derivation of
finite-sample estimation bounds for a surrogate loss minimizer. These
are expressed in terms of the Rademacher complexity of the hypothesis
set $\sH$, the loss function, and the minimizability gaps. Here, we
provide a simple illustration based on
Theorem~\ref{Thm:bound_comp_sum}.

Let $\h h_S$ be the empirical minimizer of the surrogate loss
$\lsc_{\mu}$: $ \h h_S = \argmin_{h \in \sH} \frac{1}{m} \sum_{i =
  1}^m \lsc_{\mu}(h, x_i, y_i)$, for an i.i.d sample $S =
\paren*{(x_1, y_1), \ldots, (x_m, y_m)}$ of size $m$. Let
$\Rad_m^{\lsc_{\mu}}(\sH)$ be the Rademacher complexity of the set
$\sH_{\lsc_{\mu}} = \curl*{(x, y) \mapsto \lsc_{\mu}(h, x, y) \colon h
  \in \sH}$ and $B_{\lsc_{\mu}}$ an upper bound on the surrogate loss
$\lsc_{\mu}$. By using the standard Rademacher complexity bounds
\citep{MohriRostamizadehTalwalkar2018}, for any $\delta>0$, with
probability at least $1 - \delta$, the following holds for all $h \in
\sH$:
\[\abs*{\sE_{\lsc_{\mu}}(h) - \h \sE_{\lsc_{\mu}, S}(h)}
\leq 2 \Rad_m^{\lsc_{\mu}}(\sH) +
B_{\lsc_{\mu}} \sqrt{\tfrac{\log (2/\delta)}{2m}}.\]
Fix $\e > 0$. By the definition of the infimum, there exists $h^* \in
\sH$ such that $\sE_{\lsc_{\mu}}(h^*) \leq
\sE_{\lsc_{\mu}}^*(\sH) + \e$. By definition of
$\h h_S$, we have
\begin{align*}
& \sE_{\lsc_{\mu}}(\h h_S) - \sE_{\lsc_{\mu}}^*(\sH)
\\
& = \sE_{\lsc_{\mu}}(\h h_S) - \h\sE_{\lsc_{\mu}, S}(\h h_S) + \h\sE_{\lsc_{\mu}, S}(\h h_S) - \sE_{\lsc_{\mu}}^*(\sH)\\
& \leq \sE_{\lsc_{\mu}}(\h h_S) - \h\sE_{\lsc_{\mu}, S}(\h h_S) + \h\sE_{\lsc_{\mu}, S}(h^*) - \sE_{\lsc_{\mu}}^*(\sH)\\
& \leq \sE_{\lsc_{\mu}}(\h h_S) - \h\sE_{\lsc_{\mu}, S}(\h h_S) + \h\sE_{\lsc_{\mu}, S}(h^*) - \sE_{\lsc_{\mu}}^*(h^*) + \e\\
& \leq
  2 \bracket*{2 \Rad_m^{\lsc_{\mu}}(\sH) +
B_{\lsc_{\mu}} \sqrt{\tfrac{\log (2/\delta)}{2m}}} + \e.    
\end{align*}
Since the inequality holds for all $\e > 0$, it implies:
\[
\sE_{\lsc_{\mu}}(\h h_S) - \sE_{\lsc_{\mu}}^*(\sH)
\leq 
4 \Rad_m^{\lsc_{\mu}}(\sH) +
2 B_{\lsc_{\mu}} \sqrt{\tfrac{\log (2/\delta)}{2m}}.
\]
Plugging in this inequality in the bound of
Theorem~\ref{Thm:bound_comp_sum}, we obtain that for any $\delta > 0$,
with probability at least $1 - \delta$ over the draw of an i.i.d
sample $S$ of size $m$, the following finite sample guarantee holds
for $\h h_S$:
\begin{align*}
\sE_{\labsc}(\h h_S) - \sE_{\labsc}^*( \sH)\leq \Gamma_{\mu}
  \paren[\Big]{4 \Rad_m^{\lsc_{\mu}}(\sH)
  +
2 B_{\lsc_{\mu}} \textstyle \sqrt{\tfrac{\log \frac{2}{\delta}}{2m}}
    + \sM_{\lsc_{\mu}}(\sH)} - \sM_{\labsc}( \sH).    
\end{align*}
To our knowledge, these are the first abstention estimation loss
guarantees for empirical minimizers of a cross-entropy score-based
surrogate loss. Our comments about the properties of $\Gamma_{\mu}$
below Theorem~\ref{Thm:bound_comp_sum}, in particular its functional
form or its dependency on the number of classes $n$, similarly apply
here. Similar finite sample guarantees can also be derived based on
Theorems~\ref{Thm:bound-score-mabsc} and \ref{Thm:bound-general-two-step}.

As commented before Section~\ref{sec:min-gaps}, for a surrogate loss,
the minimizability gap is in general a more refined quantity than the
approximation error, while for the abstention loss, these two
quantities coincide for typical hypothesis sets (See
Appendix~\ref{app:better-bounds}). Thus, our bound can be rewritten as
follows for typical hypothesis sets:
\begin{align*}
\sE_{\labsc}(\h h_S) - \sE_{\labsc}^*( \sH_{\rm{all}})  \leq \Gamma_{\mu}
  \paren*{4 \Rad_m^{\lsc_{\mu}}(\sH) +
    2 B_{\lsc_{\mu}}
    \sqrt{\frac{\log \frac{2}{\delta}}{2m}}
    + \sM_{\lsc_{\mu}}(\sH)}.
\end{align*}
Our guarantee is thus more favorable and more relevant than a similar
finite sample guarantee where $\sM_{\lsc_{\mu}}( \sH)$ is replaced
with $\sA_{\lsc_{\mu}}(\sH)$, which could be derived from an excess
error bound.

\section{Experiments}
\label{sec:experiments-mabsc}

\begin{table}[t]
\caption{Abstention Loss for Models Obtained with Different Surrogate
  Losses; Mean $\pm$ Standard Deviation\ignore{ over three runs} for
  Both Two-Stage Score-Based Abstention Surrogate Loss and the
  State-Of-The-Art Cross-Entropy Score-Based Surrogate Losses in
  \citep{mozannar2020consistent} ($\mu = 1.0$) and
  \citep{caogeneralizing} ($\mu=1.7$).}
    \label{tab:comparison}
\begin{center}
    \begin{tabular}{@{\hspace{0pt}}lll@{\hspace{0pt}}}
      METHOD & DATASET & ABSTENTION LOSS \\
    \midrule
    Cross-entropy score-based ($\mu=1.0$) & \multirow{3}{*}{CIFAR-10} & 4.48\% $\pm$ 0.10\% \\
    Cross-entropy score-based ($\mu=1.7$)  & & 3.62\% $\pm$ 0.07\%  \\
    \textbf{Two-stage score-based}  & & \textbf{3.22\% \!$\pm$ 0.04\%}    \\
    \midrule
    Cross-entropy score-based ($\mu=1.0$) & \multirow{3}{*}{CIFAR-100} & 10.40\% $\pm$ 0.10\% \\
    Cross-entropy score-based ($\mu=1.7$)  & & 14.99\% $\pm$ 0.01\%\\
    \textbf{Two-stage score-based} & & \textbf{\phantom{0}9.54\% \!$\pm$ 0.07\%}   \\
    \midrule
    Cross-entropy score-based ($\mu=1.0$) & \multirow{3}{*}{SVHN} & 1.61\% $\pm$ 0.06\% \\
    Cross-entropy score-based ($\mu=1.7$) & & 2.16\% $\pm$ 0.04\%\\
    \textbf{Two-stage score-based}  & & \textbf{0.93\% \!$\pm$ 0.02\%}  \\
    \end{tabular}
\end{center}
\end{table}

In this section, we report the results of experiments comparing the
single-stage and two-stage score-based abstention surrogate losses,
for three widely used datasets CIFAR-10, CIFAR-100
\citep{Krizhevsky09learningmultiple} and SVHN \citep{Netzer2011}.

\paragraph{Experimental Settings} 

As with \citep{mozannar2020consistent,caogeneralizing}, we use ResNet
\citep{he2016deep} and WideResNet (WRN) \citep{zagoruyko2016wide} with
ReLU activations. Here, ResNet-$n$ denotes a residual network with $n$
convolutional layers and WRN-$n$-$k$ denotes a residual network with
$n$ convolutional layers and a widening factor $k$. We trained
ResNet-$34$ for CIFAR-10 and SVHN, and WRN-$28$-$10$ for CIFAR-100.
We applied standard data augmentations, 4-pixel padding with $32
\times 32$ random crops and random horizontal flips for CIFAR-10 and
CIFAR-100. We used Stochastic Gradient Descent (SGD) with Nesterov
momentum \citep{nesterov1983method} and set batch size $1\mathord,024$
and weight decay $1\times 10^{-4}$ in the training. We trained for
$200$ epochs using the cosine decay learning rate schedule
\citep{loshchilov2016sgdr} with the initial learning rate of $0.1$.

For each dataset, the cost value $c$ was selected to be close to the
best-in-class zero-one classification loss, which are $\curl*{0.05,
  0.15, 0.03}$ for CIFAR-10, CIFAR-100 and SVHN respectively, since a
too small value leads to abstention on almost all points and a too
large one leads to almost no abstention. Other neighboring values for
$c$ lead to similar results.

The abstention surrogate loss proposed in
\citep{mozannar2020consistent} corresponds to the special case of
\compsum\ score-based surrogate losses $ \sfL_{\mu}$ with $\mu = 1$,
and meanwhile the abstention surrogate loss adopted in
\citep{caogeneralizing} corresponds to the special case of
\compsum\ score-based surrogate losses $ \sfL_{\mu}$ with $\mu =
1.7$. Note that the simple confidence-based approach by thresholding
estimators of conditional probability typically does not perform as
well as these state-of-the-art surrogate losses
\citep{caogeneralizing}. For our two-stage score-based abstention
surrogate loss, we adopted the logistic loss in the first stage and
the exponential loss $\Phi(t) = \exp(-t)$ in the second stage.

\paragraph{Evaluation} 

We evaluated all the models based on the abstention loss $\labsc$, and
reported the mean and standard deviation over three trials.

\paragraph{Results}

Table~\ref{tab:comparison} shows that the two-stage score-based
surrogate losses consistently outperform the cross-entropy score-based
surrogates used in the state-of-the-art algorithms
\citep{mozannar2020consistent,caogeneralizing} for all the
datasets. Table~\ref{tab:comparison} also shows the relative
performance of the cross-entropy surrogate \eqref{eq:L-mu} with
$\ell_{\mu}$ adopted as the generalized cross-entropy loss ($\mu=1.7$)
and that with $\ell_{\mu}$ adopted as the logistic loss ($\mu=1.0$)
varies by the datasets.

As show in Section~\ref{sec:two-stage-mabsc} and
Section~\ref{sec:realizable-mabsc}, the two-stage surrogate losses benefit
from the guarantees of both realizable $\sH$-consistency and
Bayes-consistency while the cross-entropy surrogate loss does not
exhibit realizable $\sH$-consistency, as shown by
\citet{pmlr-v206-mozannar23a}. This explains the superior performance
of two-stage surrogate losses over the cross-entropy surrogate
loss. It is worth noting that the hypothesis set we used for each
dataset is sufficiently rich, and the experimental setup closely
resembles a realizable scenario.

As our theoretical analysis (Theorem~\ref{Thm:bound_comp_sum} and
Theorem~\ref{Thm:gap-upper-bound-determi}) suggests, the relative
performance variation between the cross-entropy surrogate loss with
$\mu = 1.0$ used in \citep{mozannar2020consistent} and the
cross-entropy surrogate loss with $\mu = 1.7$ used in
\citep{caogeneralizing} can be explained by the functional forms of
their $\sH$-consistency bounds and the magnitude of their
minimizability gaps. Specifically, the dependency of the
multiplicative constant on the number of classes in $\sH$-consistency
bounds (Theorem~\ref{Thm:bound_comp_sum}) for the cross-entropy
surrogate loss with $\mu = 1.7$ makes it less favorable when dealing
with a large number of classes, such as in the case of CIFAR-100.
This suggests that the recent observation made in
\citep{caogeneralizing} that the cross-entropy surrogate with $\mu =
1.7$ outperforms the one with $\mu = 1.0$ does not apply to the
scenario where the evaluation involves datasets like CIFAR-100.  For a
more comprehensive discussion of our experimental results, please
refer to Appendix~\ref{app:experimemts}.
\ignore{ This agrees with our theoretical analysis based on
  $\sH$-consistency bounds in Theorem~\ref{Thm:bound_comp_sum} and
  Theorem~\ref{Thm:gap-upper-bound-determi}, since both losses have
  the same square-root functional form while on CIFAR-10, the
  magnitude of the minimizability gap decreases with $\mu$ in light of
  the fact that $\sE_{\lsc_{\mu}}^*(\sH)$ is close for both losses,
  and since on SVHN and CIFAR-100, the dependency of the
  multiplicative constant on the number of classes appears for
  $\mu=1.7$, which makes it less favorable, particularly clear when
  $n$ is large (CIFAR-100).
}

\section{Conclusion}

Our comprehensive study of score-based multi-class abstention
introduced novel surrogate loss families with strong hypothesis
set-specific and non-asymptotic theoretical guarantees. Empirical
results demonstrate the practical advantage of these surrogate losses
and their derived algorithms. This work establishes a powerful
framework for designing novel, more reliable abstention-aware algorithms
applicable across diverse domains.

\ignore{
We presented a detailed study of score-based multi-class
abstention. We introduced new families of surrogate losses within the
framework and provided strong theoretical guarantees for them, which
are specific to the hypothesis set and non-asymptotic. Our empirical
results further illustrate the practical significance of these
surrogate losses and the new algorithms based on them. We believe that
our analysis can be leveraged in a wide range of scenarios to design
new algorithms.
}

\chapter{Predictor-Rejector Multi-Class Abstention} \label{ch3}
In this chapter, we will be particularly interested in the predictor-rejector
formulation, which explicitly models the cost of abstention. The
selective classification of \citet{el2010foundations} is also
interesting, but it does not explicitly factor in the cost $c$ and is
based on a distinct objective. Confidence-based methods are also very
natural and straightforward, but they may fail when the predictor is
not calibrated, a property that often does not hold. Additionally,
they have been shown to be suboptimal when the predictor differs from
the Bayes classifier \citep{CortesDeSalvoMohri2016}. The score-based
formulation \citep{mozannar2020consistent} admits very common
properties and also explicitly takes into account the rejection cost
$c$.  We will compare the predictor-rejector formulation with the
score-based one. We will show via an example that the
predictor-rejector is more natural in some instances and will also
compare the two formulations in our experiments. We further elaborate on the difference
between the two formulations in Appendix~\ref{app:difference}.

How should the problem of multi-class classification with abstention
be formulated and when is it appropriate to abstain?
The extension of the results of \citet{CortesDeSalvoMohri2016} to
\emph{multi-class classification} was found to be very challenging by
\citet{NiCHS19}. In fact, these authors left the following as an open question: \emph{can we define Bayes-consistent surrogate losses for the
predictor-rejector abstention formulation in the multi-class setting?}
This paper deals precisely with this topic: we present a series of new
theoretical, algorithmic, and empirical results for multi-class
learning with abstention in predictor-rejector formulation and, in
particular, resolve this open question in a strongly positive way.

For the score-based formulation, a surrogate loss function based on
cross-entropy was introduced by \citet{mozannar2020consistent}, which
was proven to be Bayes-consistent. Building upon this work,
\citet{caogeneralizing} presented a more comprehensive collection of
Bayes-consistent surrogate losses for the score-based
formulation. These surrogate losses can be constructed using any
consistent loss function for the standard multi-class classification
problem. 
In Chapter~\ref{ch2}, we gave an extensive analysis of surrogate losses for the score-based formulation supported by \emph{$\sH$-consistency bounds}. In a recent study, \citet{pmlr-v206-mozannar23a}
demonstrated that existing score-based surrogate losses for abstention
are not \emph{realizable consistent} with respect to the abstention
loss, as defined by \citet{long2013consistency}. Instead, they
proposed a novel surrogate loss that achieves realizable $(\sH,
\sR)$-consistency, provided that the sets of predictors $\sH$ and
rejectors $\sR$ are \emph{closed under scaling}.  However, the authors
expressed uncertainty regarding the Bayes-consistency of their
proposed surrogate losses and left open the question of
\emph{identifying abstention surrogate losses that are both consistent
and realizable $(\sH, \sR)$-consistent when $\sH$ and $\sR$ satisfy
the scaling closure property}.  We address this open question by
demonstrating that our newly proposed surrogate losses benefit from
both Bayes-consistency and realizable consistency.

We show in Section~\ref{sec:score-example} that in some instances
the optimal solution cannot be derived in the score-based
formulation, unless we resort to more complex scoring functions. In
contrast, the solution can be straightforwardly derived in the
predictor-rejector formulation.
In Section~\ref{sec:general}, we present and analyze a new family of
surrogate loss functions for multi-class abstention in the
predictor-rejector formulation, first in the \emph{single-stage
setting}, where the predictor $h$ and the rejector $r$ are selected
simultaneously, next in a \emph{two-stage setting}, where first the
predictor $h$ is chosen and fixed and subsequently the rejector $r$ is
determined. The two-stage setting is crucial in many applications
since the predictor $h$ is often already learned after a costly
training of several hours or days. Re-training to ensure a
simultaneous learning of $h$ and $r$ is then inconceivable due to its
prohibitive cost.

In the \emph{single-stage setting} (Section~\ref{sec:single-stage-mabs}),
we first give a negative result, ruling out abstention surrogate
losses that do not
verify a technical condition. Next, we present several
positive results for abstention surrogate
losses verifying that condition, for which we prove non-asymptotic \emph{$(\sH,\sR)$-consistency bounds} \citep{awasthi2022Hconsistency,AwasthiMaoMohriZhong2022multi}
that are stronger than Bayes-consistency.
Next, in Section~\ref{sec:two-stage}, we also prove \emph{$(\sH,
\sR)$-consistency bounds} for the \emph{two-stage setting}. Minimizing
these new surrogate losses directly result in new algorithms for
multi-class abstention.

In Section~\ref{sec:general-realizable}, we prove realizable
consistency guarantees for both single-stage and two-stage
predictor-rejector surrogate losses.\ignore{This property underscores
  the advantages of our predictor-rejector formulation and is
  supported by the empirical success of the newly proposed surrogate
  losses. We further discuss in detail the difference between the
  predictor-rejector formulation and the score-based formulation,
  which underscores our work's innovation and significant
  contribution.}
In Section~\ref{sec:experiments-mabs}, we empirically show that our two-stage predictor-rejector surrogate loss
consistently outperforms state-of-the-art
scored-based surrogate losses\ignore{on CIFAR-10, CIFAR-100 and
SVHN datasets}, while our single-stage one achieves comparable results. Our main contributions are summarized below: 
\begin{itemize}
\itemsep-0.1em 
\item Counterexample for score-based abstention formulation.

\item Negative results for single-stage predictor-rejector surrogate losses.

\item New families of single-stage predictor-rejector surrogate
      losses for which we prove strong non-asymptotic and hypothesis
      set-specific consistency guarantees, thereby resolving
      positively an open question mentioned by \citet{NiCHS19}.

\item Two-stage predictor-rejector formulations and their
      $\sH$-consistency bound guarantees.

\item Realizable consistency guarantees for both single-stage and
      two-stage surrogate losses, which resolve positively the recent
      open question posed by \citet{pmlr-v206-mozannar23a}.

\item Experiments on CIFAR-10, CIFAR-100 and SVHN
datasets empirically demonstrating the usefulness of our
      proposed surrogate losses.
 
\end{itemize}

The presentation in this chapter is based on \citep{MaoMohriZhong2024predictor}.

\section{Preliminaries}
\label{sec:preliminaries}

We first introduce some preliminary concepts and definitions,
including the description of the predictor-rejector formulation and
background on $\sH$-consistency bounds.
We examine the standard multi-class classification scenario with an
input space $\sX$ and a set of $n \geq 2$ classes or labels $\sY =
\curl*{1, \ldots, n}$. We will denote by $\sD$ a distribution over
$\sX \times \sY$ and by $\sfp(y \!\mid\! x) = \sD(Y = y \mid X = x)$ the
conditional probability of $Y = y$ given $X = x$. We will also adopt
the shorthand $p(x) = \paren*{\sfp(1 \!\mid\! x), \ldots, \sfp(n \!\mid\! x)}$ to denote the
vector of conditional probabilities, given $x \in \sX$. We study more
specifically the learning scenario of multi-class classification with
abstention, within the predictor-rejector formulation introduced by
\citet{CortesDeSalvoMohri2016bis}.

\textbf{Predictor-rejector formulation.}
In this formulation of the abstention problem, we have a hypothesis
set $\sH$ of prediction functions mapping from $\sX \times \sY$ to
$\Rset$, along with a family $\sR$ of abstention functions, or
\emph{rejectors}, which map from $\sX$ to $\Rset$. For an input $x \in
\sX$ and a hypothesis $h \in \sH$, the predicted label $\hh(x)$ is
defined as the one with the highest score, $\hh(x) = \argmax_{y\in
  \sY}h(x, y)$, using an arbitrary yet fixed deterministic strategy to
break ties. A rejector $r \in \sR$ is used to abstain from predicting
on input $x$ if $r(x)$ is non-positive, $r(x) \leq 0$, at a cost $c(x)
\in [0, 1]$. The \emph{predictor-rejector abstention loss} $\labs$ for
this formulation is thus defined for any $(h, r) \in \sH \times \sR$
and $(x, y) \in \sX \times \sY$ as
\begin{equation}
\label{eq:abs}
\labs(h, r, x, y)
= \1_{\hh(x) \neq y} \1_{r(x)> 0} + c(x) \1_{r(x) \leq 0}.
\end{equation}
When the learner does not abstain ($r(x) > 0$), it incurs the familiar
zero-one classification loss. Otherwise, it abstains ($r(x) \leq 0$)
at the expense of a cost $c(x)$. The learning problem consists of
using a finite sample of size $m$ drawn i.i.d.\ from $\sD$ to select a
predictor $h$ and rejector $r$ with small expected $\labs$ loss,
$\E_{(x, y) \sim \sD}[\labs(h, r, x, y)]$.
For simplification, in the following, we assume a
constant cost function $c \in (0, 1)$. This assumption is not
necessary however, and all $(\sH,
\sR)$-consistency bounds results in Sections~\ref{sec:single-stage-mabs} and \ref{sec:two-stage-mabs} extend straightforwardly to
general cost functions.

Optimizing the predictor-rejector abstention loss is intractable for
most hypothesis sets. Thus, learning algorithms in this context must
rely on a surrogate loss $\sfL$ for $\labs$. In the subsequent
sections, we study the consistency properties of several surrogate
losses. Given a surrogate loss function $\sfL$ in the
predictor-rejector framework, we denote by $\sE_{\sfL}(h, r) = \E_{(x,
  y)\sim \sD}\bracket*{\sfL(h, r, x, y)}$ the $\sfL$-expected loss of
a pair $(h, r) \in \sH \times \sR$ and by $\sE_{\sfL}^*(\sH, \sR) =
\inf_{h \in \sH, r \in \sR} \sE_{\sfL}(h, r)$ its infimum over $\sH
\times \sR$.

\textbf{$(\sH, \sR)$-consistency bounds.} We will prove $(\sH,
\sR)$-consistency bounds for several surrogate losses in the
predictor-rejector framework, which extend to the predictor-rejector
framework the $\sH$-consistency bounds of
\citet{awasthi2021calibration,awasthi2021finer,awasthi2022Hconsistency,AwasthiMaoMohriZhong2022multi,AwasthiMaoMohriZhong2023theoretically,awasthi2024dc,MaoMohriZhong2023cross,MaoMohriZhong2023ranking,MaoMohriZhong2023rankingabs,zheng2023revisiting,MaoMohriZhong2023characterization,MaoMohriZhong2023structured,mao2024h,mao2024regression,mao2024universal,mao2025enhanced,mao2024multi,mao2024realizable,MohriAndorChoiCollinsMaoZhong2024learning,cortes2024cardinality,zhong2025fundamental,MaoMohriZhong2025principled,MaoMohriZhong2025mastering,cortes2025balancing,cortes2025improved,desalvo2025budgeted,mohri2025beyond}.
These are inequalities upper-bounding the predictor-rejector
abstention estimation loss $\labs$ of a hypothesis $h \in \sH$ and
rejector $r \in \sR$ with respect to their surrogate estimation
loss. They admit the following form:
\[
\sE_{\labs}(h, r) - \sE_{\labs}^*(\sH, \sR) \leq f\paren{\sE_{\sfL}(h,
  r) - \sE_{\sfL}^*(\sH, \sR)},
\]
where $f$ is a non-decreasing
function. Thus, the estimation error $(\sE_{\labs}(h, r) -
\sE_{\labs}^*(\sH,\sR))$ is then bounded by $f(\e)$ if the surrogate
estimation loss $(\sE_{\sfL}(h, r) - \sE_{\sfL}^*(\sH,\sR))$ is reduced
to $\e$.
An important term that appears in these bounds is the
\emph{minimizability gap}, which is defined by
$\sM_\sfL(\sH, \sR) = \sE_{\sfL}^*(\sH, \sR) -
\E_x\bracket*{\inf_{h \in \sH, r\in \sR} \E_y \bracket{\sfL(h, r,
    X, y) \mid X = x}}$.
When the loss function
$\sfL$ depends solely on $h(x, \cdot)$ and $r(x)$, which holds for
most loss functions used in applications, and when $\sH$ and $\sR$
include all measurable functions, the minimizability gap is null
\citep[lemma~2.5]{steinwart2007compare}. However, it is generally
non-zero for restricted hypothesis sets $\sH$ and $\sR$. The
minimizability gap can be upper-bounded by the approximation error
$\sA_\sfL(\sH, \sR) = \sE_{\sfL}^*(\sH, \sR) -
\E_x\bracket[\big]{\inf_{h, r} \E_y \bracket{\sfL(h, r, X, y) \mid X =
    x}}$, where the infimum is taken over all measurable
functions. But, the minimizability gap is a finer quantity and leads
to more favorable guarantees.

\section{Counterexample for score-based abstention losses}
\label{sec:score-example}

We first discuss a natural example here that can be tackled
straightforwardly in the predictor-rejector formulation but for which
the same solution cannot be derived in the score-based formulation
setting, unless we resort to more complex functions. This natural
example motivates our study of surrogate losses for the
predictor-rejector formulation in Section~\ref{sec:general}. We begin by
discussing the score-based formulation and subsequently
highlight its relationship with the predictor-rejector formulation.

\textbf{Score-based abstention formulation.}
In this version of the abstention problem, the label set $\sY$ is
expanded by adding an extra category $(n + 1)$, which represents
abstention. We indicate the augmented set as $\wt \sY = \curl*{1, \ldots, n,
  n + 1}$ and consider a hypothesis set $\wt \sH$ comprising functions
that map from $\sX \times \wt \sY$ to $\Rset$.
The label assigned to an input $x \in \sX$ by $\wt h \in \wt \sH$ is
denoted as $\wt \hh(x)$. It is defined as $\wt \hh(x) = n + 1$ if $\wt
h(x, n + 1) \geq \max_{y \in \sY} \wt h(x, y)$; otherwise, $\wt
\hh(x)$ is determined as an element in $\sY$ with the highest score,
$\wt \hh(x) = \argmax_{y \in \sY} \wt h(x, y)$, using an arbitrary yet
fixed deterministic strategy to break ties. When $\wt \hh(x) = n + 1$,
the learner chooses to abstain from predicting for $x$ and incurs a
cost $c$. In contrast, it predicts the label $y = \wt \hh(x)$ if
otherwise. The \emph{score-based abstention loss} $\labsc$ for this
formulation is defined for any $\wt h \in \wt \sH$ and $(x, y) \in \sX
\times \sY$ as follows:
\begin{equation}
\label{eq:abs-score-mabs}
\labsc(\wt h, x, y)
= \1_{\wt \hh(x)\neq y}\1_{\wt \hh(x)\neq n + 1} + c \, \1_{\wt \hh(x) = n + 1}.
\end{equation}
Thus, as in the predictor-rejector context, when the learner does not
abstain ($\wt \hh(x) \neq n + 1$), it reduces to the familiar
zero-one classification loss. Conversely, when it abstains ($\wt
\hh(x) = n + 1$), it incurs the cost $c$. With a finite sample
drawn i.i.d.\ from $\sD$, the learning problem involves choosing a
hypothesis $\wt h$ within $\wt \sH$ that yields a minimal expected
score-based abstention loss, $\E_{(x, y) \sim \sD}[\labsc(\wt h, x,
  y)]$.

\textbf{Relationship between the two formulations.}
In the score-based formulation, rejection is defined by the condition
$\wt h(x, n + 1) - \max_{y \in \sY} \wt h(x, y) \geq 0$. Thus, a
predictor-rejector formulation with $(h, r) \in \sH \times \sR$ can be
equivalently formulated as a score-based problem with $\wt h$ defined
by $\wt h(x, y) = h(x, y)$ for $y \in \sY$ and $\wt h(x, n + 1) =
\max_{y \in \sY} h(x, y) - r(x)$: $\labsc(\wt h, x, y) = \labs(h, r,
x, y)$ for all $(x, y) \times \sX \times \sY$. Note, however, that
function $\wt h(\cdot, n + 1)$ is in general more complex.  As an
example, while $h$ and $r$ may both be in a family of linear
functions, in general, $\wt h(\cdot, n + 1)$ defined in this way is no
more linear. Thus, a score-based formulation might require working
with more complex functions. We further elaborate on the difference
between the two formulations in Appendix~\ref{app:difference}.
\ignore{Similarly, a score-based formulation
with $\wt h$ can be equivalently viewed as a predictor-rejector formulation
with $h$ being the restriction of $\wt h$ to $\sX \times \sY$ and
$r(x) = \max_{y \in \sY} \wt h(x, y) - \wt h(x, n + 1)$.}

\begin{figure}[t]
\begin{center}
  \includegraphics[scale=0.4]{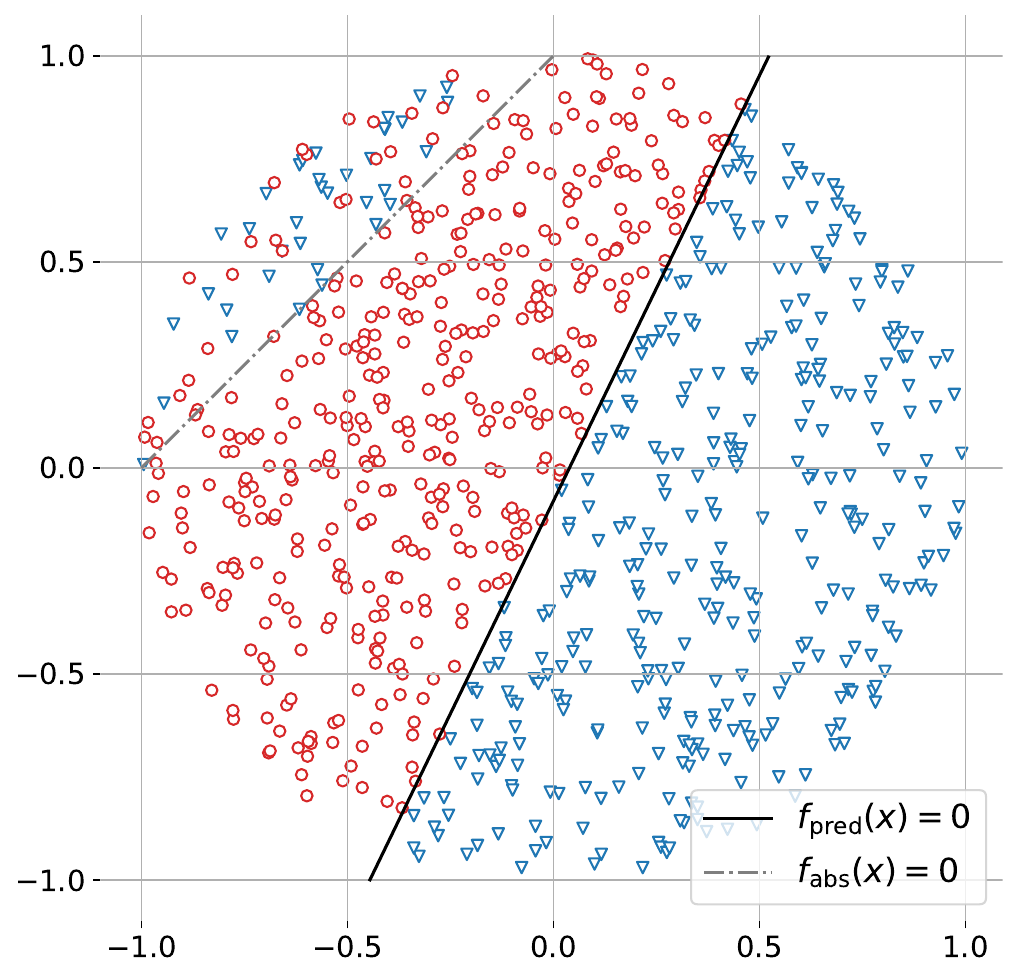}
\caption{Counterexample for score-based abstention losses.}
\label{fig:example}
\end{center}
\end{figure}

\textbf{Counterexample setting.} Let $\sY = \curl*{1, 2}$ and let $x$
follow the uniform distribution on the unit ball $\sfB_2(1) =
\curl*{x\colon \norm*{x}_2 \leq 1}$. We will consider the linear
models $f\in\sF_{\mathrm{lin}} = \{x \rightarrow \bw\cdot x + \bb
  \mid \|\bw\|_2 = 1\}$.  We set the label of a point $x$ as follows:
fix two linear functions $f_{\rm{abs}}(x) = \bw_{\rm{abs}}\cdot
x+\bb_{\rm{abs}}$ and $f_{\rm{pred}}(x) = \bw_{\rm{pred}}\cdot
x + \bb_{\rm{pred}}$ in $\sF_{\mathrm{lin}}$ , if $f_{\rm{abs}}(x)\leq
0$, then set $y = 1$ with probability $\frac12$ and $y = 2$ with
probability $\frac12$; if $f_{\rm{abs}}(x)> 0 \text{ and
}f_{\rm{pred}}(x) > 0$, then set $y = 1$; if $f_{\rm{abs}}(x) > 0 \text{
  and }f_{\rm{pred}}(x)\leq 0$, then set $y = 2$; see
Figure~\ref{fig:example}.

We denote by $\sH_{\mathrm{lin}}$ the hypothesis set of linear scoring
functions $h$ with two labels: $h(\cdot, 1)$ and $h(\cdot, 2)$
are in $\sF_{\rm{lin}}$ with the natural constraint $h(\cdot,1) +
h(\cdot,2) = 0$ as in \citet{lee2004multicategory}. We also denote by
$\wt \sH_{\mathrm{lin}}$ the linear hypothesis set of functions $\wt
h$ with three scores $\wt h(\cdot,1)$, $\wt h(\cdot, 2)$, $\wt
h(\cdot, 3)$ in $\sF_{\rm{lin}}$ with the same constraint $\wt
h(\cdot, 1) + \wt h(\cdot, 2)=0$ while $\wt h(\cdot, 3)$ is
independent of $\wt h(\cdot,1)$ and $\wt h(\cdot, 2)$. Note that here the
constraint is imposed only to simplify the analysis and is not
necessary for the counter-example to hold.  Thus, for any cost $c \in
\left[0,\frac12\right)$, the Bayes solution in this setting consists
  of abstaining on $\curl*{x\in \sfB_2(1):f_{\rm{abs}}(x)\leq 0}$ and
  otherwise making a prediction according to the decision surface
  $f_{\rm{pred}}(x) = 0$.

In the predictor-rejector formulation, the learner seeks to select a hypothesis $h$ in $\sH_{\rm{lin}}$
and a rejector $r$ in $\sF_{\rm{lin}}$ with small expected predictor-rejector
loss, $\E_{(x, y) \sim \sD}[\labs(h, r, x, y)]$. In the score-based abstention formulation, the learner seeks to select a hypothesis $\wt h$ in $\wt
\sH_{\rm{lin}}$ with small expected score-based abstention loss, $\E_{(x, y) \sim
  \sD}[\labsc(\wt h, x, y)]$. We will show that, in the predictor-rejector formulation, it is straightforward to find the Bayes solution but, in the score-based formulation, the same solution cannot be achieved, unless a more complex family of functions is adopted for $\wt h(\cdot, 3)$.

\textbf{Predictor-rejector formulation succeeds.} For the predictor-rejector abstention loss $\labs$, it is straightforward to see that for any cost $c\in \left[0,\frac12\right)$, the best-in-class predictor and rejector $h^*_{\sH_{\rm{lin}}}$ and $r^*_{\sF_{\rm{lin}}}$ can be expressed as follows:
$
h^*_{\sH_{\rm{lin}}}(\cdot, 1) = f_{\rm{pred}}(\cdot), \,h^*_{\sH_{\rm{lin}}}(\cdot, 2)  = -f_{\rm{pred}}(\cdot), \,r^*_{\sF_{\rm{lin}}}  = f_{\rm{abs}}
$.
Moreover, it is clear that $h^*_{\sH_{\rm{lin}}}$ and
$r^*_{\sF_{\rm{lin}}}$ match the Bayes solution.

\textbf{Score-based abstention formulation fails.} 
For the score-based abstention loss $\labsc$, for any cost $c\in \left[0,\frac12\right)$, the best-in-class classifier $\wt h^*_{\wt \sH_{\rm{lin}}}\!\!$ has the following form:
\[
\wt h^*_{\wt \sH_{\rm{lin}}}(\cdot, 1) =  f_1(\cdot),
\quad
\wt h^*_{\wt \sH_{\rm{lin}}}(\cdot, 2) = -f_1(\cdot),
\quad
\wt h^*_{\wt \sH_{\rm{lin}}}(\cdot,3) = f_2(\cdot),
\]
for some $f_1,f_2\in \sF_{\rm{lin}}$. Thus, $\wt h^*_{\wt
  \sH_{\rm{lin}}}\!\!$ abstains from making prediction on
$\curl*{x\in \sfB_2(1):f_2(x)\geq \abs*{f_1(x)}}$ and otherwise predicts
according to the decision surface $f_1(x) = 0$. To match the Bayes
solution, $f_1$ must equal $f_{\rm{pred}}$ and $f_2$ must
satisfy the following condition:
$
  f_2(x) \geq \abs*{f_{\rm{pred}}(x)}
  \Leftrightarrow f_{\rm{abs}}(x) \leq 0,
$
which does not hold. Therefore,
unless we resort to more complex functions for $\wt h(\cdot,3)$, the
score-based formulation cannot result in the Bayes solution.

\section{Predictor-rejector surrogate losses}
\label{sec:general}

In this section, we present and analyze a new family of
surrogate loss functions $\sfL$ for $\labs$, first in 
the \emph{single-stage setting}, where the predictor $h$ and
the rejector $r$ are selected simultaneously, next in 
a \emph{two-stage setting}, where first the predictor $h$ 
is chosen and fixed and subsequently the rejector $r$ is determined. We give $(\sH,\sR)$-consistency bounds and guarantees for both settings. 

\subsection{Single-stage predictor-rejector surrogate losses}
\label{sec:single-stage-mabs}

In view of the expression of the predictor-rejector abstention loss \[\labs(h, r, x, y)
= \1_{\hh(x) \neq y} \1_{r(x)> 0} + c\1_{r(x) \leq 0},\] if $\ell$ is a surrogate loss for the zero-one
multi-class classification loss over the set of labels $\sY$, then, $\sfL$ defined as follows is a natural surrogate loss for $\labs$: for all $(x, y) \in \sX \times \sY$,
\begin{equation}
\label{eq:sur-general}
\sfL(h, r, x, y)
= \ell(h, x, y)\Phi\paren*{-\alpha r(x)} + \Psi(c) \Phi\paren*{\beta r(x)},
\end{equation}
where $\Psi$ is a non-decreasing
function, $\Phi$ is a non-increasing auxiliary
function upper-bounding $t \mapsto \1_{t \leq 0}$ and $\alpha$, $\beta$ are positive constants. The formulation \eqref{eq:sur-general} of
$\sfL$ is a multi-class generalization of the binary abstention
surrogate loss proposed in \citep{CortesDeSalvoMohri2016bis,CortesDeSalvoMohri2016}, where the binary margin-based loss $\wt\Phi(yh(x))$ and $\Psi(t) =t$ are used instead:
\begin{equation}
\label{eq:sur-general-binary}
\sfL_{\mathrm{bin}}(h, r, x, y) =
\wt\Phi(yh(x))\Phi\paren*{-\alpha r(x)} + c \Phi\paren*{\beta r(x)}.
\end{equation}
Minimizing $\sfL_{\mathrm{bin}}$ with a regularization term was shown to achieve state-of-the-art results in the binary case with margin-based losses $\wt\Phi$ such as the exponential loss $\wt\Phi_{\rm{exp}}(t) = \exp(-t)$ and the hinge loss $\wt\Phi_{\rm{hinge}}(t) = \max\curl*{1 - t,0}$ \citep{CortesDeSalvoMohri2016bis,CortesDeSalvoMohri2016}. However, we will show below that its multi-class generalization $\sfL$ imposes a more stringent condition on the choice of the surrogate loss $\ell$, which rules out for example the multi-class exponential loss. We
will show, however, that several other loss
functions do satisfy that condition, for 
example the multi-class hinge loss.
In the following, for simplicity, we consider $\Phi(t) = \exp(-t)$ as in \citep{CortesDeSalvoMohri2016bis} for the main analysis, though a similar analysis can be given for other functions $\Phi$. We first present a negative result, ruling out 
surrogate losses $\sfL$ that are based on a loss $\ell$ that does not verify a certain
condition. 

As with \citep{AwasthiMaoMohriZhong2022multi}, we assume that the hypothesis sets are \emph{symmetric and complete}.
We say that a hypothesis set $\sG$ is \emph{symmetric} if there exists a family
$\sF$ of functions $f$ mapping from $\sX$ to $\Rset$ such that
$\curl*{\bracket*{g(x,1),\ldots,g(x,n)}\colon g\in
  \sG} = \curl*{\bracket*{f_1(x),\ldots, f_n(x)}\colon f_1, \ldots, f_n\in
  \sF}$, for any $x \in \sX$. We say that a hypothesis set $\sH$ is \emph{complete} if the set
of scores it generates spans $\Rset$, that is, $\curl*{g(x,y)\colon
  g\in \sG} = \Rset$, for any $(x, y)\in \sX \times \sY$. The hypothesis sets widely used in practice including linear models and multilayer feedforward neural networks are all symmetric and complete.

\begin{restatable}[\textbf{Negative result for single-stage surrogates}]{theorem}{NegativeBound}
\label{Thm:negative-bound}
Assume that $\sH$ is symmetric and complete, and that $\sR$ is
complete. If there exists $x \in \sX$ such that $\inf_{h \in \sH}
\E_y\bracket*{\ell(h,X, y) \mid X = x}\neq \frac{\beta \Psi \paren*{1
    - \max_{y\in \sY}\sfp(y \!\mid\! x)}}{\alpha}$, then, there does not exist a
non-decreasing function $\Gamma\colon \Rset_{+} \to \Rset_{+}$ with the property
$\lim_{t\to 0^{+}}\Gamma(t) = 0$ such that the following $(\sH,
\sR)$-consistency bound holds: for all $h \in \sH$, $r \in \sR$, and
any distribution,
\begin{equation*}
\sE_{\labs}(h, r) - \sE_{\labs}^*(\sH, \sR) + \sM_{\labs}(\sH, \sR)
\leq \Gamma\paren*{\sE_{\sfL}(h, r) - \sE_{\sfL}^*(\sH, \sR) +
  \sM_{\sfL}(\sH, \sR)}.
\end{equation*}
\end{restatable}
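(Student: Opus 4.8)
The plan is to reduce the putative $(\sH,\sR)$-consistency bound to a pointwise statement about conditional regrets and then exhibit a distribution together with a sequence of pairs $(h_k,r_k)$ along which the surrogate conditional regret vanishes while the abstention conditional regret stays bounded away from zero. The starting point is the minimizability-gap identity. Writing the conditional risk $\cC_{\sfL}(h,r,x)=\E_y\bracket*{\sfL(h,r,X,y)\mid X=x}$ and its pointwise infimum $\cC_{\sfL}^*(x)=\inf_{h,r}\cC_{\sfL}(h,r,x)$, and using $\sM_{\labs}(\sH,\sR)=\sE_{\labs}^*(\sH,\sR)-\E_x\bracket*{\inf_{h,r}\E_y\bracket*{\labs(h,r,X,y)\mid X=x}}$, the left-hand side of the claimed bound equals the expected conditional regret $\E_x\bracket*{\cC_{\labs}(h,r,x)-\cC_{\labs}^*(x)}$, and similarly the argument of $\Gamma$ equals the expected conditional $\sfL$-regret. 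Because $\sH$ is symmetric and complete and $\sR$ is complete, constant hypotheses are available and these pointwise infima are governed solely by the conditional vector $p(x)$, so it suffices to work with a single conditional vector.

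Next I would carry out the conditional minimization of $\sfL$. Writing $\overline\cC_\ell(h,x)=\E_y\bracket*{\ell(h,x,y)\mid X=x}$ and using $\Phi(t)=\exp(-t)$, the conditional surrogate risk is $e^{\alpha r(x)}\,\overline\cC_\ell(h,x)+\Psi(c)\,e^{-\beta r(x)}$. Minimizing first over $h$ replaces $\overline\cC_\ell(h,x)$ by $u^*(x)=\inf_h\E_y\bracket*{\ell(h,X,y)\mid X=x}$, and the remaining one-dimensional minimization over $r(x)\in\Rset$ (legitimate by completeness of $\sR$) is solved by the value $r^*$ with $e^{(\alpha+\beta)r^*}=\tfrac{\beta\Psi(c)}{\alpha u^*(x)}$. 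Hence the surrogate-optimal rejector is positive, i.e.\ the surrogate predicts, exactly when $\alpha u^*(x)<\beta\Psi(c)$, and non-positive, i.e.\ it abstains, when $\alpha u^*(x)\ge\beta\Psi(c)$. By contrast, the Bayes-optimal abstention decision predicts exactly when $1-\max_{y}\sfp(y\mid x)<c$. Since $\Psi$ is non-decreasing, these two sign-change loci coincide precisely when $\alpha u^*(x)=\beta\Psi\paren*{1-\max_{y}\sfp(y\mid x)}$, which is exactly the calibration identity whose failure is assumed.

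Given a point $x$ violating this identity, I would build a distribution whose conditional vector realizes the violation so that the surrogate-optimal decision is strictly suboptimal for $\labs$ at the fixed cost $c$, and then take $(h_k,r_k)$ with $\overline\cC_\ell(h_k,x)\to u^*(x)$ and $r_k(x)$ equal to, or tending to, the surrogate optimum. Along this sequence the conditional $\sfL$-regret tends to $0$, while the conditional $\labs$-regret equals the strictly positive gap between predicting and abstaining, namely $\paren*{1-\max_y\sfp(y\mid x)}-c$ or $c-\paren*{1-\max_y\sfp(y\mid x)}$, so it stays at some $\delta>0$. If a bound held with a non-decreasing $\Gamma$ satisfying $\lim_{t\to0^+}\Gamma(t)=0$, then with $\e_k$ denoting the vanishing surrogate regret we would get $\delta\le\Gamma(\e_k)\to 0$, a contradiction; this rules out any such $\Gamma$.

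The main obstacle is this last construction step: converting the calibration failure into a decision that is genuinely wrong for the given $c$. The delicacy is that the surrogate threshold sits at $\alpha u^*=\beta\Psi(c)$ while the Bayes threshold sits at $1-\max_y\sfp(y\mid x)=c$, so one must choose the conditional vector so that $c$ falls strictly between the two induced thresholds, or handle the boundary subcase $\alpha u^*=\beta\Psi(c)$ (where $r^*=0$) by approaching $r^*=0$ from the side opposite to the Bayes decision, which is available precisely because $1-\max_y\sfp(y\mid x)\neq c$ there. Care is also needed for the degenerate case $u^*(x)=0$, where the surrogate optimum is attained only as $r(x)\to+\infty$, and to verify that the $\ell$-minimizing predictor predicts $\argmax_y\sfp(y\mid x)$, so that the predict-branch conditional $\labs$-risk is exactly $1-\max_y\sfp(y\mid x)$ and the regret computation is tight.
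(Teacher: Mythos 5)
Your proposal is correct and follows essentially the same route as the paper's proof: both reduce the claimed bound to the pointwise conditional risks, minimize the surrogate first over $h$ and then over $r(x)$ using convexity of $r \mapsto e^{\alpha r}\,u^*(x) + \Psi(c)\,e^{-\beta r}$ with $u^*(x) = \inf_{h}\E_y\bracket*{\ell(h,X,y)\mid X=x}$, and derive the contradiction from the mismatch between the surrogate's sign-change threshold $\alpha u^*(x) = \beta\Psi(c)$ and the Bayes abstention threshold $1 - \max_{y}\sfp(y \mid x) = c$, which agree for every admissible cost exactly when the calibration identity assumed to fail actually holds. The only difference is presentational: the paper phrases the contradiction as the requirement that the surrogate conditional minimizer also minimize the abstention conditional risk (via Lemma~\ref{lemma:calibration_gap_general}) and reads off the sign of $\sF'(0)$, whereas you solve the first-order condition for $r^*$ in closed form and exhibit an explicit vanishing-surrogate-regret sequence with abstention regret bounded below --- the same argument in different clothing.
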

The proof (Appendix~\ref{app:general-negativ}) proceeds by
contradiction. Assuming that the $(\sH, \sR)$-consistency bound is
valid would entail that the pointwise best-in-class predictor and
best-in-class rejector for the single-stage surrogate loss align with
those of the abstention loss, which can be characterized by
Lemma~\ref{lemma:calibration_gap_general} in
Appendix~\ref{app:general}. Incorporating those explicit forms into
the analysis of the conditional risk of the surrogate loss leads to a
contradiction upon examination of its derivatives.

In view of Theorem~\ref{Thm:negative-bound}, to find a surrogate loss $\sfL$
that admits a meaningful $(\sH, \sR)$-consistency bound, we need to
consider multi-class surrogate losses $\ell$ for which the following
condition holds for any $x\in \sX$, for some $\Psi$ and pair
$(\alpha, \beta) \in \Rset_{+}^2$:
\begin{align*}
  \inf_{h \in \sH} \E_y\bracket*{\ell(h,X, y) \mid X = x}
  = \frac{\beta\Psi\paren*{1 - \max_{y\in \sY}\sfp(y \!\mid\! x)}}{\alpha}.
\end{align*}
In the binary case, this condition is easily verifiable, as $\max_{y\in \sY}\sfp(y \!\mid\! x)$ uniquely determines the other probabilities. However, in the multi-class scenario, a fixed $\max_{y\in \sY}\sfp(y \!\mid\! x)$ still allows for variation in the other probabilities within $\inf_{h \in \sH} \E_y\left[\ell(h,X, y) \mid X = x\right]$. This leads to the difficulties in extending the binary framework to the multi-class classification.

Nevertheless, we will show that this necessary condition is satisfied by three common
multi-class surrogate losses $\ell$. Furthermore, we will prove $(\sH,
\sR)$-consistency bounds for the predictor-rejector surrogate losses
$\sfL$ based on any of these three choices of $\ell$ defined for all
$h \in \sH$ and $(x, y)$ as follows:

(i) The \emph{mean absolute error loss} \citep{ghosh2017robust}: $\ell_{\rm{mae}}(h, x, y) = 1
- \frac{e^{h(x, y)}}{\sum_{y'\in \sY}e^{h(x, y')}}$;

(ii) The \emph{constrained $\rho$-hinge loss}:
$\ell_{\rho-\mathrm{hinge}}(h, x, y) = \sum_{y'\neq
  y}\Phi_{\rho-\rm{hinge}}\paren*{-h(x, y')}$, $\rho > 0$, with
$\Phi_{\rho-\rm{hinge}}(t) = \max\curl[\big]{0, 1 - \frac{t}{\rho}}$
the $\rho$-hinge loss, and the constraint $\sum_{y\in \sY} h(x, y) =
0$.

(iii) The \emph{$\rho$-Margin loss}:
$\ell_{\rho}(h, x, y) = \Phi_{\rho}\paren*{\rho_h(x, y)}$, with
$\rho_h(x, y) = h(x, y) - \max_{y' \neq y} h(x, y')$ the confidence
margin and $\Phi_{\rho}(t) = \min\curl[\big]{\max\curl[\big]{0,1 -
    \frac{t}{\rho}},1}, \rho>0$ the $\rho$-margin loss.
    
\begin{restatable}[\textbf{$(\sH, \sR)$-consistency bounds for
      single-stage surrogates}]{theorem}{SpecificLossBound}
\label{Thm:spcific-loss-bound}
Assume that $\sH$ is symmetric and complete and $\sR$ is
complete. Then, for $\alpha=\beta$, and $\ell = \ell_{\rm{mae}}$, or
$\ell = \ell_{\rho}$ with $\Psi(t) = t$, or $\ell =
\ell_{\rho-\mathrm{hinge}}$ with $\Psi(t) = nt$, the following $(\sH,
\sR)$-consistency bound holds for all $h \in \sH$, $r \in \sR$ and any distribution:
\begin{equation*}
\sE_{\labs}(h, r) - \sE_{\labs}^*(\sH, \sR) + \sM_{\labs}(\sH, \sR)
\leq \Gamma\paren*{\sE_{\sfL}(h, r) - \sE_{\sfL}^*(\sH, \sR) +
  \sM_{\sfL}(\sH, \sR)},
\end{equation*}
where $\Gamma(t) = \max\curl*{2n\sqrt{t}, n\,t}$ for $\ell=
\ell_{\rm{mae}}$; $\Gamma(t) = \max\curl*{2\sqrt{t}, t}$ for $\ell=
\ell_{\rho}$; and $\Gamma(t) = \max\curl*{2\sqrt{n t}, t}$ for $\ell=
\ell_{\rho-\rm{hinge}}$.
\end{restatable}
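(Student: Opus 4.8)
The plan is to reduce the desired $(\sH,\sR)$-consistency bound to a pointwise inequality between the \emph{calibration gaps} (conditional regrets) of $\labs$ and $\sfL$, and then invoke the minimizability-gap reduction of \citet{AwasthiMaoMohriZhong2022multi} adapted to the predictor--rejector pair. Writing the conditional $\labs$-risk as $\cC_{\labs}(h,r,x)=\1_{r(x)>0}\paren{1-\sfp(\hh(x)\!\mid\! x)}+c\1_{r(x)\le 0}$, its pointwise infimum over $\sH\times\sR$ is $\min\curl{1-\max_y\sfp(y\!\mid\! x),\,c}$, so the abstention calibration gap $\Delta\cC_{\labs}(h,r,x)$ is $\cC_{\labs}(h,r,x)$ minus this infimum. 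For the surrogate, with $\Phi(t)=e^{-t}$ and $\alpha=\beta$, the conditional $\sfL$-risk factors cleanly as $\cC_{\sfL}(h,r,x)=e^{\alpha r(x)}\,\cC_\ell(h,x)+\Psi(c)\,e^{-\alpha r(x)}$, where $\cC_\ell(h,x)=\E_y[\ell(h,X,y)\mid X=x]$. Once the pointwise bound $\Delta\cC_{\labs}(h,r,x)\le\Gamma\paren{\Delta\cC_{\sfL}(h,r,x)}$ is in hand, the identity $\sE_{\sfL}(h,r)-\sE_{\sfL}^*(\sH,\sR)+\sM_{\sfL}(\sH,\sR)=\E_x[\Delta\cC_{\sfL}(h,r,x)]$ (and likewise for $\labs$) together with Jensen's inequality applied to the concave square-root branch of $\Gamma$ yields the stated functional bound.

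The first computational step is to evaluate the best-in-class conditional surrogate risk. Since $\cC_\ell(h,x)$ and $r(x)$ decouple, I would minimize over $h$ first, using exactly the defining condition that singles out the three admissible losses, namely $\inf_{h\in\sH}\cC_\ell(h,x)=\Psi\paren{1-\max_y\sfp(y\!\mid\! x)}=:b$ (with $\alpha=\beta$). I would verify this directly per loss: for $\ell_{\rm{mae}}$ it reduces to maximizing $\sum_y\sfp(y\!\mid\! x)q_y$ over softmax distributions $q$; for $\ell_{\rho}$ it follows from saturating the margin on the top class; and for $\ell_{\rho-\mathrm{hinge}}$ from the explicit minimizer under the sum-to-zero constraint. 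Writing $v=e^{\alpha r(x)}>0$ and minimizing $vb+\Psi(c)/v$ gives the best conditional surrogate risk $2\sqrt{b\,\Psi(c)}$, attained at $v=\sqrt{\Psi(c)/b}$; crucially $v>1\iff b<\Psi(c)\iff 1-\max_y\sfp(y\!\mid\! x)<c$, so the surrogate's optimal rejection decision matches the Bayes abstention decision. This lets me rewrite the surrogate gap as
\[
\Delta\cC_{\sfL}(h,r,x)=v\paren{\cC_\ell(h,x)-b}+\paren{\sqrt{vb}-\sqrt{\Psi(c)/v}}^2,
\]
a sum of a nonnegative ``prediction'' term and a nonnegative ``rejection'' term.

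The core is the pointwise bound, established by a four-way case analysis crossing the Bayes action ($1-\max_y\sfp(y\!\mid\! x)\le c$ or not) with the learner's decision ($r(x)>0$ or $r(x)\le 0$). When both agree and predict, $\Delta\cC_{\labs}=\max_y\sfp(y\!\mid\! x)-\sfp(\hh(x)\!\mid\! x)$ is controlled by the classification $\sH$-consistency bound for $\ell$ (the linear constant being $n$ for $\ell_{\rm{mae}}$ and $1$ for $\ell_{\rho},\ell_{\rho-\mathrm{hinge}}$) together with $v>1$; when both abstain the gap is $0$. The mismatched cases are the delicate ones. When the learner abstains but Bayes predicts, I bound $\Delta\cC_{\sfL}\ge(\sqrt{\Psi(c)}-\sqrt{b})^2$ using $v\le 1$ and $b\le\Psi(c)$, then factor $c-(1-\max_y\sfp(y\!\mid\! x))\le 2\bigl(\sqrt{c}-\sqrt{1-\max_y\sfp(y\!\mid\! x)}\bigr)$ via $\sqrt{c}+\sqrt{1-\max_y\sfp(y\!\mid\! x)}\le 2$, producing the $2\sqrt{\cdot}$ term. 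I expect the hardest case to be when the learner predicts but Bayes abstains, where a rejection error and a label error compound: here $\Delta\cC_{\labs}=\bigl(1-\max_y\sfp(y\!\mid\! x)-c\bigr)+\bigl(\max_y\sfp(y\!\mid\! x)-\sfp(\hh(x)\!\mid\! x)\bigr)$, and I would prove $\Delta\cC_{\labs}\le 2\kappa\sqrt{\Delta\cC_{\sfL}}$ by squaring and showing the cross terms are dominated, reducing to an inequality of the form $2\bigl(b-\Psi(c)\bigr)+(\text{prediction gap})\le 4v\cdot(\text{const})$, which holds because $v>1$ and the remaining quantities are bounded.

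Finally I would assemble the cases into the single envelope $\Gamma(t)=\max\curl{2\kappa\sqrt t,\,\lambda t}$ with $(\kappa,\lambda)=(n,n),(1,1),(\sqrt n,1)$ for $\ell_{\rm{mae}},\ell_{\rho},\ell_{\rho-\mathrm{hinge}}$. The square-root branch carries the content of the argument for small gaps, while the linear branch covers the large-gap regime trivially, since $\Delta\cC_{\labs}\le 1$; the linear branch is genuinely needed for the unbounded constrained hinge loss, where the prediction term is controlled linearly rather than by a square root. I anticipate two main obstacles: (i) verifying the exact value of $\inf_h\cC_\ell(h,x)$ for $\ell_{\rho-\mathrm{hinge}}$ under the sum-to-zero constraint, which drives the $\sqrt n$ factor; and (ii) the compound-error case above, where the squaring step must be executed carefully to pin down the constants exactly as claimed and to ensure the envelope used in the Jensen step is concave where it is active.
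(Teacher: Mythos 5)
Your proposal is correct and follows essentially the same route as the paper's proof: reduction to pointwise calibration gaps via Lemma~\ref{lemma:calibration_gap_general}, explicit evaluation of the best-in-class conditional surrogate risk as $2\sqrt{b\,\Psi(c)}$, a four-way case analysis crossing the Bayes abstention decision with the sign of $r(x)$, per-loss verification of the inequality $\sC_{\ell}(h,x)-\inf_{h\in\sH}\sC_{\ell}(h,x)\geq \kappa^{-1}\bigl(\max_{y}\sfp(y\!\mid\! x)-\sfp(\hh(x)\!\mid\! x)\bigr)$, and Jensen's inequality for the concave branch of $\Gamma$. The only differences are presentational (your two-term decomposition of $\Delta\sC_{\sfL}$ versus the paper's direct AM--GM manipulations, and your attribution of the linear branch to unboundedness when it in fact arises in all three cases from the both-predict case), neither of which affects correctness.
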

The theorem provides
strong guarantees for the predictor-rejector surrogate losses we
described in the single-stage setting. The technique used in the proof (Appendix~\ref{app:general-positive-single-stage})
is novel and requires careful analysis of various cases involving the pointwise
best-in-class predictor and rejector. This analysis is
challenging and needs to take into account the \emph{conditional risk} and \emph{calibration gap} (see Appendix~\ref{app:general}) of specific loss functions. The approach is
substantially different from the standard scenarios examined in
\citep{AwasthiMaoMohriZhong2022multi}, due to the simultaneous minimization of both
the predictor and rejector in the abstention setting. Discussions on Theorem~\ref{Thm:spcific-loss-bound} are given in Remark~\ref{remark:spcific-loss-bound} in Appendix~\ref{app:remark}. The following is a direct
consequence of Theorem~\ref{Thm:spcific-loss-bound} when $\sH$ and $\sR$ include all measurable functions,
since the minimizability gaps $\sM_{\labs}$ and $\sM_{\sfL}$ are
then zero.

\begin{restatable}[\textbf{Excess error bounds for single-stage surrogates}]{corollary}{SpecificLossBoundCor}
\label{cor:spcific-loss-bound}
For $\alpha=\beta$, and $\ell= \ell_{\rm{mae}}$, or $\ell = \ell_{\rho}$
with $\Psi(t)=t$, or
$\ell = \ell_{\rho-\mathrm{hinge}}$ with $\Psi(t)=nt$,
the following excess error bound holds for all $h\in \sH_{\rm{all}}$,
$r\in \sR_{\rm{all}}$ and any distribution:
\begin{equation*}
\sE_{\labs}(h, r) - \sE_{\labs}^*(\sH_{\rm{all}}, \sR_{\rm{all}}) 
 \leq \Gamma\paren*{\sE_{\sfL}(h, r)-\sE_{\sfL}^*(\sH_{\rm{all}}, \sR_{\rm{all}})},
\end{equation*}
where $\Gamma$ has the same form as in Theorem~\ref{Thm:spcific-loss-bound}.
\end{restatable}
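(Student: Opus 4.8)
The plan is to obtain the corollary as an immediate specialization of Theorem~\ref{Thm:spcific-loss-bound} to the families of all measurable functions, $\sH = \sH_{\rm{all}}$ and $\sR = \sR_{\rm{all}}$. First I would check that these families meet the structural hypotheses of the theorem. The family $\sH_{\rm{all}}$ is symmetric, since permuting the labels of a measurable scoring function yields another measurable function, so the induced family $\sF$ can be taken to be all measurable functions from $\sX$ to $\Rset$; it is complete because for any $(x, y)$ the set of scores $\curl*{h(x, y) \colon h \in \sH_{\rm{all}}}$ already spans all of $\Rset$. Likewise $\sR_{\rm{all}}$ is complete. Hence the hypotheses of Theorem~\ref{Thm:spcific-loss-bound} are satisfied for each of the three choices of $\ell$ together with its associated $\Psi$ and the setting $\alpha = \beta$.

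The second step is to argue that both minimizability gaps vanish on these families. Both the target loss $\labs$ of \eqref{eq:abs} and the surrogate loss $\sfL$ of \eqref{eq:sur-general} depend on a pair $(h, r)$ only through the pointwise values $h(x, \cdot)$ and $r(x)$. Consequently, the pointwise quantity $\E_x\bracket*{\inf_{h \in \sH_{\rm{all}}, r \in \sR_{\rm{all}}} \E_y\bracket*{\sfL(h, r, X, y) \mid X = x}}$ can be approached by a measurable selection and therefore coincides with $\sE_{\sfL}^*(\sH_{\rm{all}}, \sR_{\rm{all}})$; this is precisely the content of \citep[lemma~2.5]{steinwart2007compare} recalled in Section~\ref{sec:preliminaries}. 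The same reasoning applies to $\labs$. I would thus conclude that $\sM_{\sfL}(\sH_{\rm{all}}, \sR_{\rm{all}}) = 0$ and $\sM_{\labs}(\sH_{\rm{all}}, \sR_{\rm{all}}) = 0$.

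Finally, I would substitute these two vanishing gaps into the inequality guaranteed by Theorem~\ref{Thm:spcific-loss-bound}. Setting the term $\sM_{\labs}(\sH_{\rm{all}}, \sR_{\rm{all}})$ to zero on the left-hand side and $\sM_{\sfL}(\sH_{\rm{all}}, \sR_{\rm{all}})$ to zero inside the argument of $\Gamma$ yields $\sE_{\labs}(h, r) - \sE_{\labs}^*(\sH_{\rm{all}}, \sR_{\rm{all}}) \leq \Gamma\paren*{\sE_{\sfL}(h, r) - \sE_{\sfL}^*(\sH_{\rm{all}}, \sR_{\rm{all}})}$, with $\Gamma$ unchanged, which is exactly the claimed excess error bound for each of the three losses. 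Since Theorem~\ref{Thm:spcific-loss-bound} is already established, no new case analysis over the best-in-class predictor and rejector is required. The only point warranting care—and the sole potential subtlety—is the justification that the pointwise infima are actually realized over the unrestricted families so that the gaps genuinely vanish, which is where I would explicitly invoke the measurable-selection argument rather than treat it as automatic; once the pointwise structure of $\labs$ and $\sfL$ is made explicit, this is routine.
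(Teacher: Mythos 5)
Your proposal is correct and matches the paper's own argument: the corollary is stated there as a direct consequence of Theorem~\ref{Thm:spcific-loss-bound} applied with $\sH = \sH_{\rm{all}}$ and $\sR = \sR_{\rm{all}}$, where both minimizability gaps vanish by \citep[lemma~2.5]{steinwart2007compare}. Your additional care in verifying symmetry/completeness of the unrestricted families and in justifying the vanishing of the gaps via a measurable-selection argument is consistent with, and slightly more explicit than, the paper's one-line derivation.
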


The corollary resolves in a positive way the open question mentioned
by \citet{NiCHS19}. In fact it provides a stronger result since it
gives non-asymptotic excess error bounds for the three abstention
surrogate losses previously described. These are stronger guarantees
than Bayes-consistency of these loss functions, which follow
immediately by taking the limit.

It is noted that our novel single-stage predictor-rejector
surrogate losses might present some challenges for optimization. This
is due to several factors: the difficulty of optimizing the mean
absolute error loss \citep{zhang2018generalized} (also see
Section~\ref{sec:experiments-mabs}), the restriction imposed by the
constrained hinge loss, which is incompatible with the standard use of the softmax function in neural network hypotheses, and the
non-convexity of the $\rho$-margin loss. However, our primary
objective has been a theoretical analysis and the significance of
these surrogate losses lies in their novelty and strong guarantees. As
shown in Corollary~\ref{cor:spcific-loss-bound}, they are the first
Bayes-consistent surrogate losses within the predictor-rejector
formulation for multi-class abstention, addressing an open question in
the literature \citep{NiCHS19}.

\subsection{Two-stage predictor-rejector surrogate losses}
\label{sec:two-stage-mabs}

Here, we explore a two-stage algorithmic approach, for which we
introduce surrogate losses with more flexible choices of $\ell$ that
admit better optimization properties, and establish $(\sH,
\sR)$-consistency bounds for them.  This is a key scenario since, in
practice, often a large pre-trained prediction model is already
available (first stage), and retraining it would be prohibitively
expensive. The problem then consists of leaving the first stage
prediction model unchanged and of subsequently learning a useful
rejection model (second stage).

Let $\ell$ be a surrogate loss for standard multi-class classification
and $\Phi$ a function like the exponential function that determines a
margin-based loss $\Phi(yr(x))$ in binary classification, with $y \in
\curl*{-1, +1}$ for a function $r$. We propose a two-stage algorithmic
approach and a surrogate loss to minimize in the second stage: first,
find a predictor $h$ by minimizing $\ell$; second, with $h$ fixed,
find $r$ by minimizing $\ell_{\Phi, h}$, a surrogate loss function of
$r$ for all $(x, y)$, defined by
\begin{align}
\label{eq:ell-Phi-h-mabs}
\ell_{\Phi,h}\paren*{r, x, y}  
= \1_{\hh(x) \neq y} \Phi\paren*{-r(x)} + c \Phi\paren*{r(x)},
\end{align}
where $t \mapsto \Phi(t)$ is a non-increasing auxiliary function upper-bounding $\1_{t \leq 0}$.
This algorithmic approach is straightforward since the first stage
involves the familiar task of finding a predictor using a standard
surrogate loss, such as logistic loss (or cross-entropy with softmax). The second stage is also relatively simple as $h$ remains
fixed and the form of $\ell_{\Phi, h}$ is uncomplicated, with $\Phi$
possibly being the logistic or exponential loss.
It is important to underscore that the judicious selection of the
indicator function in the initial term of \eqref{eq:ell-Phi-h-mabs} plays a
crucial role in guaranteeing that the two-stage surrogate loss
benefits from $(\sH, \sR)$-consistency bounds.  If a surrogate loss is
used in the first term, this may not necessarily hold.

Note that in \eqref{eq:ell-Phi-h-mabs}, $h$ is fixed and only $r$ is learned by minimizing the surrogate loss corresponding to that $h$, while in contrast both $h$ and $r$ are jointly learned in the abstention loss \eqref{eq:abs}. We denote by $\ell_{\mathrm{abs}, h}$ the two-stage version of the abstention loss \eqref{eq:abs} with a fixed predictor $h$, defined as: for any $r \in \sR$, $x \in \sX$ and $y \in \sY$:
\begin{equation}
\label{eq:two-stage-abstention-loss}
\ell_{\mathrm{abs}, h} (r, x, y) = \1_{\hh(x) \neq y} \1_{r(x)> 0} + c \1_{r(x) \leq 0}.
\end{equation}
In other words, both $\ell_{\Phi,h}$ and $\ell_{\mathrm{abs}, h}$ are loss functions of the abstention function $r$, while $\labs$ is a loss function of the pair $(h, r) \in (\sH, \sR)$.

Define the binary zero-one classification loss as
$\ell_{0-1}^{\rm{binary}}(r, x, y) = \1_{y\neq \sign(r(x))}$, where
$\sign(t) = \1_{t>0} - \1_{t\leq 0}$.  As with the single-stage
surrogate losses, the two-stage surrogate losses benefit from strong
consistency guarantees as well. We first show that in the second stage
where a predictor $h$ is fixed, the surrogate loss function
$\ell_{\Phi,h}$ benefits from $\sR$-consistency bounds with respect to
$\ell_{\mathrm{abs}, h}$ if, $\Phi$ admits an $\sR$-consistency bound
with respect to the binary zero-one loss $\ell_{0-1}^{\rm{binary}}$.
\begin{restatable}[\textbf{$\sR$-consistency bounds for second-stage surrogates}]{theorem}{BoundGenralSecondStep}
\label{Thm:bound-general-second-step}
Fix a predictor $h$. Assume that $\Phi$ admits an $\sR$-consistency
bound with respect to $\ell_{0-1}^{\rm{binary}}$.  Thus, there exists
a non-decreasing concave function $\Gamma$ such that, for all $r \in
\sR$,
\begin{align*}
\sE_{\ell_{0-1}^{\rm{binary}}}(r) - \sE_{\ell_{0-1}^{\rm{binary}}}^*(\sR) + \sM_{\ell_{0-1}^{\rm{binary}}}(\sR)
& \leq \Gamma\paren*{\sE_{\Phi}(r)-\sE_{\Phi}^*(\sR) +\sM_{\Phi}(\sR)}.
\end{align*}
Then, the following $\sR$-consistency bound holds for all $r\in \sR$ and any distribution:
\begin{equation*}
\sE_{\ell_{\mathrm{abs}, h} }(r)-\sE_{\ell_{\mathrm{abs}, h} }^*(\sR) +\sM_{\ell_{\mathrm{abs}, h} }(\sR) \leq \Gamma\paren*{\paren*{\sE_{\ell_{\Phi, h} }(r)-\sE_{\ell_{\Phi, h} }^*(\sR) +\sM_{\ell_{\Phi, h}}(\sR)}/c}.
\end{equation*}
\end{restatable}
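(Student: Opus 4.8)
The plan is to show that, once the first-stage predictor $h$ is frozen, learning the rejector is exactly a (weighted) binary classification problem, and then to transfer the assumed $\sR$-consistency bound for $\Phi$ through this identification. First I would note that both $\ell_{\mathrm{abs}, h}$ and $\ell_{\Phi, h}$ depend on $r$ only through $r(x)$ and through $\ov c(x) = \E_y\bracket*{\1_{\hh(x) \neq y} \mid X = x} = 1 - \sfp(\hh(x) \mid x)$, so that their conditional risks are
\begin{align*}
\cC_{\ell_{\mathrm{abs}, h}}(r, x) &= \ov c(x)\, \1_{r(x) > 0} + c\, \1_{r(x) \leq 0}, \\
\cC_{\ell_{\Phi, h}}(r, x) &= \ov c(x)\, \Phi(-r(x)) + c\, \Phi(r(x)).
\end{align*}
Setting $w(x) = c + \ov c(x)$ and $\eta(x) = \tfrac{c}{c + \ov c(x)}$, each of these is $w(x)$ times the conditional risk, at positive-class probability $\eta(x)$, of the binary zero-one loss $\ell_{0-1}^{\mathrm{binary}}$ (respectively of the margin loss $\Phi$), since $w(x)\eta(x) = c$ and $w(x)(1 - \eta(x)) = \ov c(x)$. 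Thus $\ell_{\mathrm{abs}, h}$ plays the role of the binary zero-one loss and $\ell_{\Phi, h}$ that of $\Phi$, with a common, $r$-independent weight $w(x) \in [c, 1 + c]$.

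Next I would transfer the hypothesis on $\Phi$. The cleanest route is to introduce the binary distribution $\sD'$ whose input marginal has density proportional to $w(x)$ and whose positive-class probability is $\eta(x)$, with normalizer $Z = \E_x\bracket*{w(x)} = c + \sE_{\ell_{0-1}}(h)$. Because $w(x)$ is independent of $r$, it factors out of every pointwise infimum over $\sR$ and out of every expectation; consequently each of the six quantities entering the $\Phi$ bound under $\sD'$ — the expected binary zero-one loss, its infimum over $\sR$, the expected $\Phi$-loss, its infimum, and the two minimizability gaps — equals the corresponding $\ell_{\mathrm{abs}, h}$ or $\ell_{\Phi, h}$ quantity under $\sD$ divided by $Z$. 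Applying the assumed $\sR$-consistency bound for $\Phi$ to $\sD'$ and clearing the factor $Z$ then gives
\begin{equation*}
\sE_{\ell_{\mathrm{abs}, h}}(r) - \sE_{\ell_{\mathrm{abs}, h}}^*(\sR) + \sM_{\ell_{\mathrm{abs}, h}}(\sR)
\leq Z\, \Gamma\paren*{\frac{\sE_{\ell_{\Phi, h}}(r) - \sE_{\ell_{\Phi, h}}^*(\sR) + \sM_{\ell_{\Phi, h}}(\sR)}{Z}}.
\end{equation*}
This already carries the minimizability gaps correctly, so no separate argument for them is needed; an equivalent pointwise derivation instead combines the calibration inequality behind the $\Phi$ bound with Jensen's inequality, using concavity of $\Gamma$ to pass from the conditional bound to the expected one.

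The last step is to replace the data-dependent weight $Z$ by the constant $c$, and this is where I expect the main difficulty. Since the coefficient of the abstention term $\Phi(r(x))$ is exactly the constant $c$, one always has $w(x) \geq c$ and hence $Z \geq c$, so $t/Z \leq t/c$ and, $\Gamma$ being non-decreasing, $\Gamma(t/Z) \leq \Gamma(t/c)$; the delicate point is the outer factor $Z$, which appears precisely because the weight sits simultaneously inside and outside $\Gamma$. When $\Gamma$ is linear this factor cancels exactly, $Z\,\Gamma(t/Z) = \Gamma(t) \leq \Gamma(t/c)$, which is why no constant is needed in that case; for a genuinely concave $\Gamma$ one must control the outer $Z$ through the lower bound $w(x) \geq c$ together with the concavity of $\Gamma$ to arrive at the stated $\Gamma(\cdot/c)$ form. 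Finally, specializing $\sR$ to all measurable functions makes the minimizability gaps vanish and recovers the excess-error version of the guarantee.
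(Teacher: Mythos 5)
Your reduction of the second stage to a weighted binary problem is sound and is, in spirit, the paper's own strategy: the paper's Lemma~\ref{lemma:aux-mabs} is exactly the pointwise instantiation of the assumed $\sR$-consistency bound on a point mass at $x$ with class probabilities $(p_1,p_2)=\bigl(\ov c(x)/w(x),\,c/w(x)\bigr)$, $w(x)=\ov c(x)+c$, followed by Jensen's inequality via concavity of $\Gamma$, whereas you package the same computation as a single change of measure with density proportional to $w(x)$. Your bookkeeping for the six quantities and the minimizability gaps is correct. The genuine gap is the last step, which you flag as the crux but do not carry out --- and, as you have set it up, it cannot be carried out. From your intermediate bound $\Delta_{\mathrm{abs}} \leq Z\,\Gamma(\Delta_{\Phi}/Z)$ with $Z = c + \sE_{\ell_{0-1}}(h) \in [c, 1+c]$, the lower bound $Z \geq c$ only controls the denominator inside $\Gamma$. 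For the outer factor, note that for concave non-decreasing $\Gamma$ with $\Gamma(0)=0$ the map $Z \mapsto Z\Gamma(t/Z)$ is \emph{non-decreasing} (its derivative is $\Gamma(t/Z) - (t/Z)\Gamma'(t/Z) \geq 0$), so the best available conclusion is $\Delta_{\mathrm{abs}} \leq (1+c)\,\Gamma(\Delta_{\Phi}/c)$. The inequality $Z\Gamma(t/Z) \leq \Gamma(t/c)$ that your plan needs is false in general: for $\Gamma(t)=\sqrt{2t}$ one has $Z\Gamma(t/Z) = \sqrt{2tZ}$, which exceeds $\Gamma(t/c)=\sqrt{2t/c}$ whenever $Z > 1/c$, e.g.\ for $c$ close to $1$ and $\sE_{\ell_{0-1}}(h)$ close to $1$. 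Your route therefore delivers the bound with an extra multiplicative $(1+c)$ --- precisely the form of the companion result, Theorem~\ref{Thm:bound-general-two-step-mabs} --- not the stated one.

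For comparison, the paper's proof reaches the stated form at exactly this point by writing the target calibration gap as the \emph{normalized} binary regret, $\sC_{\ell_{\mathrm{abs},h}}(r,x) - \sC^*_{\ell_{\mathrm{abs},h}}(\sR,x) = p_1\1_{r(x)>0} + p_2\1_{r(x)\leq 0} - \inf_{r\in\sR}\bigl(p_1\1_{r(x)>0} + p_2\1_{r(x)\leq 0}\bigr)$, so that the weight $w(x)$ only ever appears in the denominator of $\Gamma$'s argument (where $w(x)\geq c$ suffices) and never as an outer factor. Your accounting retains the factor $w(x)$ on the left-hand side, since $\sC_{\ell_{\mathrm{abs},h}}(r,x) = w(x)\bigl[p_1\1_{r(x)>0}+p_2\1_{r(x)\leq 0}\bigr]$, and that retained factor is exactly the source of your extra $(1+c)$. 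So the missing step in your proposal is real and not closable along the path you describe; to produce the bound in the form stated you would have to adopt the paper's identification of the target gap with the normalized regret, and if you instead keep your normalization you should state and prove the $(1+c)\Gamma(\cdot/c)$ version.
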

The proof (Appendix~\ref{app:general-positive-second-stage}) consists
of analyzing the calibration gap of the abstention loss and
second-stage surrogate loss, for a fixed predictor $h$.  The
calibration gap here is more complex than that in the standard setting
as it takes into account the conditional probability, the error of
that fixed predictor and the cost, and thus requires a completely
different analysis. To establish $\sR$-consistency bounds, we need to
upper-bound the calibration gap of the abstention loss by that of the
surrogate loss. However, directly working with them is rather
difficult due to their complex forms. Instead, a key observation is
that both forms share structural similarities with the calibration
gaps in the standard classification.  Motivated by the above
observation, we construct an appropriate conditional distribution to
transform the two calibration gaps into standard ones. Then, by
applying Lemma~\ref{lemma:aux-mabs} in Appendix~\ref{app:general}, we
manage to leverage the $\sR$-consistency bound of $\Phi$ with respect
to the binary zero-one classification loss to upper-bound the target
calibration gap by that of the surrogate calibration gap.

We further discuss Theorem~\ref{Thm:bound-general-second-step} in
Remark~\ref{remark:bound-general-second-step}
(Appendix~\ref{app:remark}). In the special case where $\sH$ and $\sR$
are the family of all measurable functions, all the minimizability gap
terms in Theorem~\ref{Thm:bound-general-second-step} vanish. Thus, we
obtain the following corollary.
\begin{corollary}
\label{cor:tsr-mabs}
Fix a predictor $h$. Assume that
$\Phi$ admits an excess error bound with respect to  $\ell_{0-1}^{\rm{binary}}$.  Thus,
there exists a non-decreasing concave functions $\Gamma$
such that, for all $r \in \sR_{\rm{all}}$,
\begin{align*}
\sE_{\ell_{0-1}^{\rm{binary}}}(r) - \sE_{\ell_{0-1}^{\rm{binary}}}^*(\sR_{\rm{all}})
& \leq \Gamma\paren*{\sE_{\Phi}(r)-\sE_{\Phi}^*(\sR_{\rm{all}})}.
\end{align*}
Then, the following excess error bound holds for all $r\in \sR_{\rm{all}}$ and any distribution:
\begin{equation*}
\sE_{\ell_{\mathrm{abs}, h} }(r)-\sE_{\ell_{\mathrm{abs}, h} }^*(\sR_{\rm{all}}) \leq \Gamma\paren*{\paren*{\sE_{\ell_{\Phi, h} }(r)-\sE_{\ell_{\Phi, h} }^*(\sR_{\rm{all}})} / c}.
\end{equation*}
\end{corollary}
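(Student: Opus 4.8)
The plan is to obtain this corollary directly from Theorem~\ref{Thm:bound-general-second-step} by specializing the rejector family to $\sR = \sR_{\rm{all}}$, the family of all measurable functions, and verifying that the four minimizability gaps appearing in that theorem all vanish in this case. Once the gaps are shown to be zero, both the hypothesis and the conclusion of Theorem~\ref{Thm:bound-general-second-step} collapse to the excess error bounds stated here.

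First, I would observe that, with the predictor $h$ held fixed, each of the loss functions $\ell_{0-1}^{\rm{binary}}$, $\Phi$, $\ell_{\mathrm{abs}, h}$, and $\ell_{\Phi, h}$ depends on the rejector only through its pointwise value $r(x)$; this is immediate from their definitions in \eqref{eq:ell-Phi-h-mabs} and \eqref{eq:two-stage-abstention-loss}, where $h$ enters only through the fixed indicators $\1_{\hh(x)\neq y}$. Consequently, the argument recalled after the definition of the minimizability gap in Section~\ref{sec:preliminaries} applies: by \citep[lemma~2.5]{steinwart2007compare}, a loss depending on $r$ only pointwise has a vanishing minimizability gap over the family of all measurable functions. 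Hence $\sM_{\ell_{0-1}^{\rm{binary}}}(\sR_{\rm{all}}) = \sM_{\Phi}(\sR_{\rm{all}}) = \sM_{\ell_{\mathrm{abs}, h}}(\sR_{\rm{all}}) = \sM_{\ell_{\Phi, h}}(\sR_{\rm{all}}) = 0$.

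With these gaps set to zero, the hypothesis of Theorem~\ref{Thm:bound-general-second-step} — that $\Phi$ admits an $\sR$-consistency bound with respect to $\ell_{0-1}^{\rm{binary}}$ — reduces to exactly the excess error bound assumed in the corollary, supplying the same non-decreasing concave function $\Gamma$. I would then apply Theorem~\ref{Thm:bound-general-second-step} with $\sR = \sR_{\rm{all}}$ and drop the now-vanishing minimizability gap terms on both sides of its conclusion, which yields precisely the stated excess error bound relating $\ell_{\mathrm{abs}, h}$ to $\ell_{\Phi, h}$ through $\Gamma(\cdot/c)$.

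There is no substantive obstacle here, since the result is a direct specialization rather than a new argument; the only point requiring care is the justification that all four minimizability gaps vanish over $\sR_{\rm{all}}$, which in turn rests solely on confirming the pointwise dependence of these losses on $r$ for fixed $h$ and invoking \citep[lemma~2.5]{steinwart2007compare}.
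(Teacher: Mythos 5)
Your proposal is correct and follows essentially the same route as the paper: the paper obtains this corollary exactly by specializing Theorem~\ref{Thm:bound-general-second-step} to $\sR = \sR_{\rm{all}}$ and noting that all the minimizability gap terms vanish there, which is what you do. Your additional care in justifying the vanishing of the four gaps via the pointwise dependence of the losses on $r$ and \citep[lemma~2.5]{steinwart2007compare} is consistent with the discussion the paper gives when introducing minimizability gaps.
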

See Remark~\ref{remark:tsr} (Appendix~\ref{app:remark}) for a
brief discussion of Corollary~\ref{cor:tsr-mabs}.

We now present $(\sH, \sR)$-consistency bounds for the whole two-stage
approach with respect to the abstention loss function $\labs$. Let
$\ell_{0-1}$ be the multi-class zero-one loss: $\ell_{0-1}(h, x, y) =
1_{\hh(x) \neq y}$. We will consider hypothesis sets $\sR$ that are
\emph{regular for abstention}, that is such that for any $x\in
\sX$, there exist $f, g \in \sR$ with $f(x)>0$ and $g(x)\leq 0$.
If $\sR$ is regular for abstention, then, for any $x$, there is an
option to accept and an option to reject.  \ignore{The next result
  shows that for those margin-based loss functions $\Phi$, their
  corresponding two-stage abstention surrogate losses admit
  $(\sH,\sR)$-consistency bounds with respect to the abstention loss
  $\labs$ as well.}
\begin{restatable}[\textbf{$(\sH, \sR)$-consistency bounds for two-stage
      approach}]{theorem}{BoundGenralTwoStepMabs}
\label{Thm:bound-general-two-step-mabs} 
Suppose that $\sR$ is regular.  Assume that $\ell$ admits an
$\sH$-consistency bound with respect to $ \ell_{0-1}$ and that $\Phi$
admits an $\sR$-consistency bound with respect to
$\ell_{0-1}^{\rm{binary}}$.  Thus, there are non-decreasing concave
functions $\Gamma_1$ and $\Gamma_2$ such that, for all $h\in \sH$ and
$r \in \sR$,
\begin{align*}
\sE_{\ell_{0-1}}(h) - \sE_{\ell_{0-1}}^*(\sH) + \sM_{\ell_{0-1}}(\sH)
& \leq \Gamma_1\paren*{\sE_{\ell}(h)-\sE_{\ell}^*(\sH) +\sM_{\ell}(\sH)}\\
\sE_{\ell_{0-1}^{\rm{binary}}}(r) - \sE_{\ell_{0-1}^{\rm{binary}}}^*(\sR) + \sM_{\ell_{0-1}^{\rm{binary}}}(\sR)
& \leq \Gamma_2\paren*{\sE_{\Phi}(r)-\sE_{\Phi}^*(\sR) +\sM_{\Phi}(\sR)}.
\end{align*}
Then, the following $(\sH,\sR)$-consistency bound holds for all $ h\in \sH$, $r\in \sR$ and any distribution:
\begin{align*}
\sE_{\labs}(h, r) - \sE_{\labs}^*(\sH, \sR) + \sM_{\labs}(\sH, \sR)
& \leq \Gamma_1\paren*{\sE_{\ell}(h)-\sE_{\ell}^*(\sH) +\sM_{\ell}(\sH)}\\
& \quad + (1+c)\Gamma_2\paren*{\paren*{\sE_{\ell_{\Phi,h}}(r)-\sE_{\ell_{\Phi,h}}^*(\sR) +\sM_{\ell_{\Phi,h}}(\sR)}/c},
\end{align*}
where the
constant factors $(1 + c)$ and $\frac{1}{c}$ can be removed 
when $\Gamma_2$ is linear.
\end{restatable}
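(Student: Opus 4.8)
The plan is to argue entirely through conditional calibration gaps and to split the abstention gap into a prediction part and a rejection part. Writing the conditional risk $C_{\labs}(h,r,x) = \E_y\bracket*{\labs(h,r,X,y)\mid X=x} = \paren*{1-\sfp(\hh(x)\mid x)}\1_{r(x)>0} + c\,\1_{r(x)\le 0}$, and using that $\sH$ is symmetric and complete while $\sR$ is regular, the joint pointwise infimum is $\inf_{h',r'}C_{\labs}(h',r',x) = \min\curl*{1-\max_{y}\sfp(y\mid x),\,c}$. Since $\sE_{\labs}^*(\sH,\sR) - \sM_{\labs}(\sH,\sR) = \E_x\bracket*{\inf_{h',r'}C_{\labs}(h',r',x)}$, the left-hand side of the target inequality is exactly the expected calibration gap $\E_x\bracket*{C_{\labs}(h,r,x) - \min\curl*{1-\max_y\sfp(y\mid x),c}}$, so the whole proof reduces to controlling this expected gap.

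Next I would insert the intermediate baseline obtained by keeping the given predictor $h$ but using the best rejector for it, namely $\min\curl*{1-\sfp(\hh(x)\mid x),c}$, which (again by regularity of $\sR$) is the pointwise infimum of $C_{\ell_{\mathrm{abs},h}}$. Telescoping through this baseline and noting that $C_{\labs}(h,r,x)=C_{\ell_{\mathrm{abs},h}}(r,x)$, the calibration gap splits into a \emph{rejector regret} $\bracket*{C_{\ell_{\mathrm{abs},h}}(r,x) - \min\curl*{1-\sfp(\hh(x)\mid x),c}}$ and a \emph{prediction regret} $\bracket*{\min\curl*{1-\sfp(\hh(x)\mid x),c} - \min\curl*{1-\max_y\sfp(y\mid x),c}}$, each non-negative.

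The prediction term is the routine part. Because $t\mapsto\min\curl*{t,c}$ is non-decreasing and $1$-Lipschitz and $\max_y\sfp(y\mid x)\ge \sfp(\hh(x)\mid x)$, a short case check gives prediction regret $\le \max_y\sfp(y\mid x) - \sfp(\hh(x)\mid x)$, which is precisely the conditional calibration gap of $\ell_{0-1}$ at $x$. Taking expectations, $\E_x$ of this equals $\sE_{\ell_{0-1}}(h) - \sE_{\ell_{0-1}}^*(\sH) + \sM_{\ell_{0-1}}(\sH)$, and the assumed $\sH$-consistency bound for $\ell$ bounds it by $\Gamma_1\paren*{\sE_{\ell}(h)-\sE_{\ell}^*(\sH)+\sM_{\ell}(\sH)}$, yielding the first summand.

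The rejector term is the main obstacle. With $a(x)=1-\sfp(\hh(x)\mid x)$, the rejector regret is nonzero only when $r$ makes the wrong accept/reject decision, in which case it equals $\abs*{a(x)-c}$, while $C_{\ell_{\Phi,h}}(r,x)=a(x)\Phi(-r(x))+c\Phi(r(x))$ is a cost-weighted binary $\Phi$-risk with total weight $W(x)=a(x)+c$. Normalizing by $W(x)$ converts it to a standard binary problem with $\eta(x)=a(x)/W(x)$, and the assumed $\sR$-consistency bound for $\Phi$ gives $\abs*{a(x)-c}\,\1_{\rm{err}} \le W(x)\,\Gamma_2\paren*{\mathrm{gap}_{\Phi}(x)/W(x)}$, where $\mathrm{gap}_\Phi(x)$ is the conditional gap of $\ell_{\Phi,h}$. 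Since $\lambda\mapsto\lambda\,\Gamma_2(t/\lambda)$ is non-decreasing (concavity of $\Gamma_2$ with $\Gamma_2(0)=0$) and $W(x)\le 1+c$, together with monotonicity of $\Gamma_2$ this is at most $(1+c)\Gamma_2\paren*{\mathrm{gap}_\Phi(x)/c}$; taking $\E_x$ and applying Jensen to push the expectation inside $\Gamma_2$ produces $(1+c)\Gamma_2\paren*{(\sE_{\ell_{\Phi,h}}(r)-\sE_{\ell_{\Phi,h}}^*(\sR)+\sM_{\ell_{\Phi,h}}(\sR))/c}$ — equivalently, one simply invokes Theorem~\ref{Thm:bound-general-second-step} for this fixed $h$. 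When $\Gamma_2$ is linear the factor $W(x)$ cancels exactly, removing both $(1+c)$ and $1/c$. The delicate points are the cost-weighted-to-standard binary reduction and arranging the perspective-function and concavity manipulations so they yield precisely the stated constants; adding the two summands completes the bound.
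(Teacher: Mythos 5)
Your proof is correct and takes essentially the same route as the paper's: the same telescoping through the intermediate baseline $\min\curl*{1-\sfp(\hh(x)\mid x),\,c}$, the same $1$-Lipschitz-of-$\min$ bound reducing the prediction part to the $\ell_{0-1}$ conditional regret handled by the $\sH$-consistency bound of $\ell$, and the same normalization of the cost-weighted binary $\Phi$-risk by $W(x)=a(x)+c$ with $c\le W(x)\le 1+c$ and Jensen for the rejection part (this is exactly the paper's Lemma~\ref{lemma:aux-mabs} step). One small caveat: the theorem does not assume $\sH$ symmetric and complete, so you should keep the infimum over $\sH$ of the conditional zero-one risk in place of $1-\max_{y\in\sY}\sfp(y\mid x)$ throughout; the argument is otherwise unchanged, since the prediction regret is then bounded by the $\ell_{0-1}$ conditional regret over $\sH$ exactly as you argue.
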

In the proof (Appendix~\ref{app:general-positive-two-stage}), we
express the pointwise estimation error term for the target abstention
loss as the sum of two terms. The first term represents the pointwise
estimation error of the abstention loss with a fixed $h$, while the
second term denotes that with a fixed $r^*$. This proof is entirely
novel and distinct from the approach used for a standard loss without
abstention in \citep{AwasthiMaoMohriZhong2022multi}. Discussions on
Theorem~\ref{Thm:bound-general-two-step-mabs} are given in
Remark~\ref{remark:bound-general-two-step} in
Appendix~\ref{app:remark}. As before, when $\sH$ and $\sR$ are the
family of all measurable functions, the following result on excess
error bounds holds.
\begin{corollary}
\label{cor:tshr-mabs}
Assume that $\ell$ admits an excess error bound with respect to $
\ell_{0-1}$ and that $\Phi$ admits an excess error bound with respect
to $\ell_{0-1}^{\rm{binary}}$.  Thus, there are non-decreasing concave
functions $\Gamma_1$ and $\Gamma_2$ such that, for all $h\in
\sH_{\rm{all}}$ and $r \in \sR_{\rm{all}}$,
\begin{align*}
\sE_{\ell_{0-1}}(h) - \sE_{\ell_{0-1}}^*(\sH_{\rm{all}})
& \leq \Gamma_1\paren*{\sE_{\ell}(h)-\sE_{\ell}^*(\sH_{\rm{all}})}\\
\sE_{\ell_{0-1}^{\rm{binary}}}(r) - \sE_{\ell_{0-1}^{\rm{binary}}}^*(\sR_{\rm{all}})
& \leq \Gamma_2\paren*{\sE_{\Phi}(r)-\sE_{\Phi}^*(\sR_{\rm{all}})}.
\end{align*}
Then, the following excess error bound holds for all $ h\in \sH_{\rm{all}}$ and $r\in \sR_{\rm{all}}$ and any distribution:
\begin{align*}
\sE_{\labs}(h, r) - \sE_{\labs}^*(\sH_{\rm{all}}, \sR_{\rm{all}})
& \leq \Gamma_1\paren*{\sE_{\ell}(h)-\sE_{\ell}^*(\sH_{\rm{all}})} + (1 + c)\Gamma_2\paren*{\paren*{\sE_{\ell_{\Phi,h}}(r)-\sE_{\ell_{\Phi,h}}^*(\sR_{\rm{all}})}/c},
\end{align*}
where the
constant factors $(1 + c)$ and $\frac{1}{c}$ can be removed 
when $\Gamma_2$ is linear.
\end{corollary}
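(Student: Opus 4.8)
The plan is to derive this corollary as the specialization of Theorem~\ref{Thm:bound-general-two-step-mabs} to the family of all measurable functions, for which every minimizability gap vanishes. First I would check that the regularity hypothesis of Theorem~\ref{Thm:bound-general-two-step-mabs} is met: $\sR_{\rm{all}}$ is trivially regular for abstention, since for any $x \in \sX$ one can exhibit measurable rejectors taking a strictly positive value and a non-positive value at $x$.

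Next I would argue that the two excess error bound hypotheses assumed here are exactly the $\sH$-consistency and $\sR$-consistency bound hypotheses of Theorem~\ref{Thm:bound-general-two-step-mabs} instantiated at $\sH = \sH_{\rm{all}}$ and $\sR = \sR_{\rm{all}}$. Indeed, since $\ell$ depends only on $h(x, \cdot)$ and $\ell_{0-1}$ only on $\hh(x)$, while $\Phi$ and $\ell_{0-1}^{\rm{binary}}$ depend only on $r(x)$, the cited result \citep[lemma~2.5]{steinwart2007compare} gives $\sM_{\ell}(\sH_{\rm{all}}) = \sM_{\ell_{0-1}}(\sH_{\rm{all}}) = \sM_{\Phi}(\sR_{\rm{all}}) = \sM_{\ell_{0-1}^{\rm{binary}}}(\sR_{\rm{all}}) = 0$. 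Dropping these vanishing gaps turns the two hypotheses of Theorem~\ref{Thm:bound-general-two-step-mabs} into precisely the excess error bounds with concave functions $\Gamma_1$ and $\Gamma_2$ assumed in the corollary.

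I would then apply Theorem~\ref{Thm:bound-general-two-step-mabs} directly. Its conclusion contains, on the left, the term $\sM_{\labs}(\sH_{\rm{all}}, \sR_{\rm{all}})$ and, on the right inside $\Gamma_2$, the term $\sM_{\ell_{\Phi,h}}(\sR_{\rm{all}})$. The loss $\labs$ depends only on $h(x, \cdot)$ and $r(x)$, so $\sM_{\labs}(\sH_{\rm{all}}, \sR_{\rm{all}}) = 0$; and for the fixed predictor $h$, the second-stage loss $\ell_{\Phi, h}$ depends only on $r(x)$, so $\sM_{\ell_{\Phi, h}}(\sR_{\rm{all}}) = 0$ as well, again by the same lemma. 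Substituting these zeros collapses the bound of Theorem~\ref{Thm:bound-general-two-step-mabs} to
\begin{equation*}
\sE_{\labs}(h, r) - \sE_{\labs}^*(\sH_{\rm{all}}, \sR_{\rm{all}})
\leq \Gamma_1\paren*{\sE_{\ell}(h)-\sE_{\ell}^*(\sH_{\rm{all}})} + (1 + c)\Gamma_2\paren*{\paren*{\sE_{\ell_{\Phi,h}}(r)-\sE_{\ell_{\Phi,h}}^*(\sR_{\rm{all}})}/c},
\end{equation*}
which is the claimed inequality. The removability of the factors $(1+c)$ and $\frac1c$ when $\Gamma_2$ is linear is inherited verbatim from the corresponding statement in Theorem~\ref{Thm:bound-general-two-step-mabs}.

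There is no real obstacle here beyond bookkeeping: the entire content is carrying out the reduction to all measurable functions and invoking the vanishing of minimizability gaps. The only point requiring a moment of care is confirming that each of the six losses involved (two targets, two surrogates, and the two intermediate zero-one losses) depends on the hypotheses solely through $h(x,\cdot)$ and/or $r(x)$, which is what licenses the appeal to \citep[lemma~2.5]{steinwart2007compare}; this holds by inspection of their definitions.
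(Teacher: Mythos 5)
Your proposal is correct and matches the paper's own treatment: Corollary~\ref{cor:tshr-mabs} is obtained there exactly by specializing Theorem~\ref{Thm:bound-general-two-step-mabs} to $\sH_{\rm{all}}$ and $\sR_{\rm{all}}$ and observing that all minimizability gaps vanish. Your additional checks (regularity of $\sR_{\rm{all}}$ and the dependence of each loss only on $h(x,\cdot)$ and $r(x)$, which licenses the appeal to \citep[lemma~2.5]{steinwart2007compare}) are exactly the bookkeeping the paper leaves implicit.
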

See Remark~\ref{remark:tshr} (Appendix~\ref{app:remark}) for a
brief discussion of Corollary~\ref{cor:tshr-mabs}.

These results provide a strong guarantee for surrogate losses in the
two-stage setting. Additionally, while the choice of $\ell$ in the
single-stage setting was subject to certain conditions, here, the
multi-class surrogate loss $\ell$ can be chosen more
flexibly. Specifically, it can be selected as the logistic loss (or
cross-entropy with softmax), which is not only easier to optimize but
is also better tailored for complex neural networks. In the second
stage, the formulation is straightforward, and the choice of function
$\Phi$ is flexible, resulting in a simple smooth convex optimization
problem with respect to the rejector function $r$. Moreover, the
second stage simplifies the process as $h$ remains constant and only
the rejector is optimized. This approach can enhance optimization
efficiency. In Appendix~\ref{app:two-stage}, we further highlight the
significance of our findings regarding the two-stage formulation in
comparison with single-stage surrogate losses.

\subsection{Other advantages of the predictor-rejector formulation}
\label{sec:general-realizable}

In this section, we prove another advantage of our predictor-rejector
surrogate losses, that is \emph{realizable consistency}
\citep{long2013consistency, zhang2020bayes}. The property involves
\emph{$(\sH,\sR)$-realizable} distributions, \ignore{that are
  distributions}under which there exist $h^*\in \sH$ and $r^*\in \sR$
such that $\sE_{\labs}(h^*,r^*)=0$. The realizable distribution allows the optimal solution to abstain on points where the cost $c(x)$ is zero.
\begin{definition}
$\sfL$ is \emph{realizable $(\sH,
\sR)$-consistent} with respect to $\labs$ if, for any $(\sH,
\sR)$-realizable distribution, $\lim_{n \rightarrow +\infty}\sE_{\sfL}(h_n,r_n) =  \sE^*_{\sfL}(\sH,\sR) \implies \lim_{n \rightarrow +\infty}\sE_{\labs}(h_n,r_n) =  \sE^*_{\labs}(\sH,\sR)$.
\end{definition}  
In the following, we will establish that our predictor-rejector
surrogate losses, both the single-stage and two-stage variants, are
realizable $(\sH, \sR)$-consistent when $\sH$ and $\sR$ are
\emph{closed under scaling}. A hypothesis set $\sG$ is closed under
scaling if, $g \in \sG$ implies that $\nu g$ is also in $\sG$ for any
$\nu \in \Rset$.\ignore{ This property underscores the advantages of our
predictor-rejector formulation and is supported by the empirical
success of our proposed surrogate losses, as demonstrated in
Section~\ref{sec:experiments-mabs}.}
We will adopt the following mild assumption for the auxiliary function
$\Phi$ in the \ignore{both the single-stage and
  two-stage}predictor-rejector surrogate losses.
\begin{assumption}
\label{assumption:phi}
For any $t \in \Rset$, $\Phi(t) \geq \1_{t \leq 0}$ and $\lim_{t\to
  + \infty}\Phi(t) = 0$.
\end{assumption}

In other words, $\Phi$ upper-bounds the indicator function and
approaches zero as $t$ goes to infinity.  \ignore{This is satisfied by
  common margin-based loss functions, e.g. the hinge function $t
  \mapsto \max(0, 1 - t)$ used in support vector machines, the
  exponential function $t \mapsto e^{-t}$ in Adaboost, and the
  logistic function $t \mapsto \log (1 + e^{-t})$ in logistic
  regression, etc.}We first prove a general result showing that 
single-stage predictor-rejector surrogate losses are realizable
$(\sH,\sR)$-consistent with respect to $\sfL_{\rm{abs}}$ if the
adopted $\ell$ satisfies Assumption~\ref{assumption:ell}.
\begin{assumption}
\label{assumption:ell}
When $h(x, y) - \max_{y' \neq y}h (x, y') > 0$, $\lim_{\nu \to +
  \infty}\ell(\nu h, x,y ) = 0$ and $\ell \geq \ell_{0-1}$.
\end{assumption}
The assumption implies that for a sample $(x, y)$ for which a predictor
$h$ achieves zero error, the infimum value of the loss function $\ell$
is zero for any hypothesis set $\sH$ that includes the
predictor $h$, provided that $\sH$ is closed under scaling.

\begin{restatable}{theorem}{SpecificLossBoundRealizable}
\label{Thm:spcific-loss-bound-realizable}
Assume that $\sH$ and $\sR$ are closed under scaling. Let $\Psi(0)=0$
and $\Phi$ satisfy Assumption~\ref{assumption:phi}. Then, for any $\ell$ that satisfies Assumption~\ref{assumption:ell},
the following $(\sH, \sR)$-consistency
bound holds for any $(\sH,\sR)$-realizable distribution, $h \in \sH$ and 
$r \in \sR$:
\begin{equation*}
\sE_{\labs}(h, r) - \sE_{\labs}^*(\sH, \sR)
\leq \sE_{\sfL}(h, r)-\sE_{\sfL}^*(\sH, \sR).
\end{equation*}
\end{restatable}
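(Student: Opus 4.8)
The plan is to reduce the claimed $(\sH,\sR)$-consistency bound to two facts about the realizable setting: that both infima $\sE_{\labs}^*(\sH,\sR)$ and $\sE_{\sfL}^*(\sH,\sR)$ vanish, and that $\labs$ is dominated pointwise by $\sfL$. Once these are in place, for any $h\in\sH$ and $r\in\sR$ we obtain $\sE_{\labs}(h,r)-\sE_{\labs}^*(\sH,\sR)=\sE_{\labs}(h,r)\le\sE_{\sfL}(h,r)=\sE_{\sfL}(h,r)-\sE_{\sfL}^*(\sH,\sR)$, which is exactly the statement. First I would unpack realizability: since there is a pair $(h^*,r^*)\in\sH\times\sR$ with $\sE_{\labs}(h^*,r^*)=0$ and $\labs\ge 0$, the loss vanishes $\sD$-almost surely, so $\sE_{\labs}^*(\sH,\sR)=0$; moreover on the acceptance region $\{r^*(x)>0\}$ the label is deterministic with $\hh^*(x)=y$ at a strictly positive margin, while the rejection region $\{r^*(x)\le 0\}$ carries zero cost, $c(x)=0$.

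Next I would establish the pointwise inequality $\labs(h,r,x,y)\le\sfL(h,r,x,y)$, arguing by the sign of $r(x)$. When $r(x)>0$, the abstention loss equals $\1_{\hh(x)\neq y}$; using $\Phi(-\alpha r(x))\ge\1_{-\alpha r(x)\le 0}=1$ (Assumption~\ref{assumption:phi}) and $\ell\ge\ell_{0-1}$ (Assumption~\ref{assumption:ell}), the first term of $\sfL$ already dominates $\1_{\hh(x)\neq y}$, and the second term is nonnegative. When $r(x)\le 0$, the abstention loss equals $c(x)$, while $\Phi(\beta r(x))\ge 1$ forces $\sfL\ge\Psi(c(x))\Phi(\beta r(x))\ge\Psi(c(x))$; the comparison therefore closes provided the cost transformation dominates the cost, $\Psi(c(x))\ge c(x)$ --- which holds for the concrete choices $\Psi(t)=t$ and $\Psi(t)=nt$ paired with the surrogates of Theorem~\ref{Thm:spcific-loss-bound}, and is automatic on the realizable rejection region since there $c(x)=0=\Psi(0)$.

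The remaining and most delicate step is to show $\sE_{\sfL}^*(\sH,\sR)=0$. Since $\sfL\ge 0$ this infimum is nonnegative, so it suffices to drive $\sE_{\sfL}$ to zero along a sequence. Using closure under scaling I would take the family $(\nu h^*,\mu r^*)$ and let $\nu,\mu\to+\infty$. The rejector term $\Psi(c(x))\Phi(\beta\mu r^*(x))$ vanishes in the limit: on the acceptance region $r^*(x)>0$ gives $\Phi(\beta\mu r^*(x))\to 0$ by Assumption~\ref{assumption:phi}, and on the rejection region $\Psi(c(x))=\Psi(0)=0$. The predictor term $\ell(\nu h^*,x,y)\,\Phi(-\alpha\mu r^*(x))$ requires more care: on the acceptance region Assumption~\ref{assumption:ell} gives $\ell(\nu h^*,x,y)\to 0$ because the margin is positive, but $\Phi(-\alpha\mu r^*(x))\to+\infty$ for an exponential-type $\Phi$, so the product is an indeterminate $0\cdot\infty$.

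The main obstacle is precisely this $0\cdot\infty$ interaction, together with the domination needed to pass the limit under the expectation. I would resolve it with an order-of-limits (diagonal) argument: for each fixed $\mu$, the factor $\Phi(-\alpha\mu r^*(x))$ is bounded on the support, so sending $\nu\to+\infty$ first drives the predictor term to $0$ by dominated convergence (using boundedness of $\ell$ for the mean-absolute-error and $\rho$-margin losses, and the fact that the $\rho$-margin and constrained-hinge surrogates vanish identically once $\nu$ exceeds a margin-dependent threshold); one then lets $\mu\to+\infty$ to kill the rejector term, and extracts a single diagonal sequence $(\nu_k h^*,\mu_k r^*)$ with $\nu_k$ growing fast enough relative to $\mu_k$ that $\sE_{\sfL}(\nu_k h^*,\mu_k r^*)\to 0$. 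Combining $\sE_{\labs}^*(\sH,\sR)=0$, $\sE_{\sfL}^*(\sH,\sR)=0$, and the pointwise domination then yields the stated bound. Making the scaling limit rigorous --- choosing the two scales at compatible rates and justifying the interchange of limit and expectation --- is the crux; the two infimum computations and the case analysis for the pointwise bound are otherwise routine.
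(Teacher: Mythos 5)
Your proposal follows essentially the same route as the paper's proof: establish $\sE^*_{\labs}(\sH,\sR)=0$ from realizability, verify the pointwise domination $\labs\le\sfL$ (via $\Phi\ge\1_{\cdot\le 0}$, $\ell\ge\ell_{0-1}$, and $\Psi(c)\ge c$), and show $\sE^*_{\sfL}(\sH,\sR)=0$ by scaling the realizing pair and splitting the expectation over the acceptance and rejection regions of $r^*$. The one genuine difference is in the last step: the paper scales both components by a single parameter, bounding $\sE^*_{\sfL}(\sH,\sR)\le\lim_{\nu\to\infty}\sE_{\sfL}(\nu h^*,\nu r^*)$ and invoking dominated convergence directly, whereas you introduce two scales $(\nu h^*,\mu r^*)$ with an iterated-limit/diagonal extraction. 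Your extra care is not gratuitous: with a single scale the acceptance-region product $\ell(\nu h^*,x,y)\,\Phi(-\alpha\nu r^*(x))$ is exactly the $0\cdot\infty$ competition you identify, and for, say, $\ell=\ell_{\rm{mae}}$ with $\Phi=\exp$ it behaves like $e^{\nu(\alpha r^*(x)-\rho_{h^*}(x,y))}$, which need not vanish when $\alpha r^*(x)$ exceeds the margin; the paper's proof elides this, while your decoupled scales (grow $\nu$ fast relative to $\mu$) resolve it. Two small caveats on your write-up: the domination $\Psi(c)\ge c$ must hold for arbitrary $(h,r)$, not only on the realizable rejection region, so it is genuinely an assumption on $\Psi$ (satisfied by $\Psi(t)=t$ and $\Psi(t)=nt$); and in the iterated limit, sending $\nu\to\infty$ at fixed $\mu$ only kills the acceptance-region piece of the predictor term---the rejection-region piece is controlled by $\mu\to\infty$ uniformly in $\nu$ (using boundedness of $\ell$ there), so the two limits handle complementary pieces rather than the predictor term wholesale. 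With those adjustments your argument is correct and, at the delicate point, tighter than the paper's own.
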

\ignore{
\begin{proof}
It is straightforward to see that $\sfL$ serves as an upper bound for $\labs$ when $\ell$ serves as an upper bound for $\ell_{0-1}$ under Assumption~\ref{assumption:phi}.
By definition, for any $(\sH,\sR)$-realizable distribution, there exists $h^*\in \sH$ and $r^*\in \sR$ such that $\sE_{\labs}(h^*,r^*) = \sE_{\labs}^*(\sH, \sR) = 0$. Then, by the assumption that $\sH$ and $\sR$ are closed under scaling, for any $\nu>0$,
\begin{align*}
\sE^*_{\sfL}(\sH,\sR)
&\leq\sE_{\sfL}(\nu h^*,\nu r^*)\\
&=\mathbb{E}\bracket*{\sfL(\nu h^*,\nu r^*,x,y)\mid r^*< 0}\mathbb{P}(r^*< 0) + \mathbb{E}\bracket*{\sfL(\nu h^*,\nu r^*,x,y)\mid r^*>0}\mathbb{P}(r^*> 0)
\end{align*}
Next, we investigate the two terms.
The first term is when $r^*< 0$, then we must have $c=0$ since the data is realizable. By taking the limit, we obtain:
\begin{align*}
&\lim_{\nu\to +\infty}\mathbb{E}\bracket*{\sfL(\nu h^*,\nu r^*,x,y)\mid r^*< 0}\mathbb{P}(r^*< 0)\\
&=\lim_{\nu\to +\infty}\mathbb{E}\bracket*{\ell(\nu h^*, x, y)\Phi\paren*{-\alpha \nu r^*(x)} + \Psi(c) \Phi\paren*{\beta \nu r^*(x)}\mid r^*< 0}\mathbb{P}(r^*< 0)\\
&=\lim_{\nu\to +\infty}\mathbb{E}\bracket*{\ell(\nu h^*, x, y)\Phi\paren*{-\alpha \nu r^*(x)}\mid r^*< 0}\mathbb{P}(r^*< 0) \tag{$c=0$ and $\Psi(0)=0$}\\
&=0. \tag{by the Lebesgue dominated convergence theorem and $\lim_{t\to + \infty}\Phi(t)=0$}
\end{align*}
The second term is when $r^*> 0$, then we must have $h^*(x,y) - \max_{y'\neq y}h^*(x,y') > 0$ since the data is realizable. Thus, using the fact that $\lim_{\nu\to +\infty}\ell(\nu h^*,x,y)=0$ and taking the limit, we obtain
\begin{align*}
&\lim_{\nu\to +\infty}\mathbb{E}\bracket*{\sfL(\nu h^*,\nu r^*,x,y)\mid r^*< 0}\mathbb{P}(r^*< 0)\\
&=\lim_{\nu\to +\infty}\mathbb{E}\bracket*{\ell(\nu h^*, x, y)\Phi\paren*{-\alpha \nu r^*(x)} + \Psi(c) \Phi\paren*{\beta \nu r^*(x)}\mid r^*< 0}\mathbb{P}(r^*< 0)\\
&=0. \tag{by the Lebesgue dominated convergence theorem, $\lim_{t\to + \infty}\Phi(t)=0$, $\lim_{\nu\to +\infty}\ell(\nu h^*,x,y)=0$}
\end{align*}
Therefore, by combining the above two analysis, we obtain
\begin{align*}
\sE^*_{\sfL}(\sH,\sR)\leq \lim_{\nu\to +\infty}\sE_{\sfL}(\nu h^*,\nu r^*)=0.
\end{align*}
By using the fact that $\sfL$ serves as an upper bound for $\labs$ and $\sE_{\labs}^*(\sH, \sR)=0$, we conclude that
\begin{equation*}
\sE_{\labs}(h, r) - \sE_{\labs}^*(\sH, \sR)
\leq \sE_{\sfL}(h, r)-\sE_{\sfL}^*(\sH, \sR).
\end{equation*}
\end{proof}}
The proof is included in Appendix~\ref{app:general-positive-single-stage-realizable}. We first establish upper bounds for $\sE_{\sfL}^*(\sH, \sR)$ using the optimal predictor and rejector for the abstention loss, subsequently expressing the upper bound as the sum of two terms.  By applying the Lebesgue dominated convergence theorem, we show that both terms vanish, and thus $\sE_{\sfL}^*(\sH, \sR) = 0$. It is worth noting that $(\sH, \sR)$-consistency bounds in Theorem~\ref{Thm:spcific-loss-bound} imply the realizable-consistency for considered loss functions. This is because under the realizable assumption, the minimizability gaps vanish for these loss functions. Nevertheless, Theorem~\ref{Thm:spcific-loss-bound-realizable} proves that a more general family of loss functions can actually achieve realizable consistency.

According to
Theorem~\ref{Thm:spcific-loss-bound-realizable}, for any distribution
that is $(\sH, \sR)$-realizable, minimizing single-stage surrogate
estimation loss $\sE_{\sfL}(h, r)-\sE_{\sfL}^*(\sH, \sR)$ results in
the minimization of the abstention estimation loss $\sE_{\labs}(h, r)
- \sE_{\labs}^*(\sH, \sR)$. This suggests that the single-stage
predictor-rejector surrogate loss functions are realizable
$(\sH, \sR)$-consistent. In particular, when $\ell$ is chosen as $2\ell_{\rm{mae}}$, $\ell = \ell_{\rho}$ and $\ell =
\ell_{\rho-\mathrm{hinge}}$ as suggested in Section~\ref{sec:single-stage-mabs}, Assumption~\ref{assumption:phi} is satisfied. Thus, we obtain the following corollary.
\begin{corollary}
Under the same assumption as in Theorem~\ref{Thm:spcific-loss-bound-realizable}, for
$\ell= 2\ell_{\rm{mae}}$, $\ell = \ell_{\rho}$ and $\ell =
\ell_{\rho-\mathrm{hinge}}$,  the single-stage
predictor-rejector surrogate $\sfL$ is realizable $(\sH,
\sR)$-consistent with respect to $\labs$.

\ignore{the following $(\sH, \sR)$-consistency
bound holds for any $(\sH,\sR)$-realizable distribution, $h \in \sH$ and 
$r \in \sR$:
\begin{equation*}
\sE_{\labs}(h, r) - \sE_{\labs}^*(\sH, \sR)
\leq \sE_{\sfL}(h, r)-\sE_{\sfL}^*(\sH, \sR).
\end{equation*}}
\end{corollary}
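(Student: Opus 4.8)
The plan is to treat the statement as a direct instantiation of Theorem~\ref{Thm:spcific-loss-bound-realizable}. The structural hypotheses of that theorem are inherited here: $\sH$ and $\sR$ are closed under scaling, each single-stage surrogate $\sfL$ is built with a $\Psi$ satisfying $\Psi(0)=0$ (namely $\Psi(t)=t$ for $2\ell_{\rm{mae}}$ and $\ell_{\rho}$, and $\Psi(t)=nt$ for $\ell_{\rho-\mathrm{hinge}}$), and $\Phi$ satisfies Assumption~\ref{assumption:phi}. Thus the only remaining work is to verify that each of the three choices of $\ell$ satisfies Assumption~\ref{assumption:ell}, which has two parts: (i) $\ell \geq \ell_{0-1}$ pointwise, and (ii) whenever $h(x,y) - \max_{y'\neq y} h(x,y') > 0$, one has $\lim_{\nu\to+\infty}\ell(\nu h, x, y)=0$. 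Once both are checked, the conclusion follows verbatim from the theorem, so the entire proof reduces to these verifications.

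First I would dispatch $2\ell_{\rm{mae}}$ and $\ell_{\rho}$, where both parts are transparent. For the mean absolute error loss, scaling the scores concentrates the softmax weight on the strict argmax, so $\ell_{\rm{mae}}(\nu h, x, y) = 1 - \frac{e^{\nu h(x,y)}}{\sum_{y'} e^{\nu h(x,y')}}\to 0$ when $y$ is the strict argmax, giving (ii). For (i), when $\hh(x)\neq y$ some competitor $y''$ satisfies $h(x,y'')\ge h(x,y)$, so $\frac{e^{h(x,y)}}{\sum_{y'} e^{h(x,y')}}\le \tfrac12$ and hence $2\ell_{\rm{mae}}(h,x,y)\ge 1$; the factor $2$ is precisely what upgrades the native bound $\ell_{\rm{mae}}\ge\tfrac12\,\1_{\hh(x)\neq y}$ to $\ell\ge\ell_{0-1}$. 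For the $\rho$-margin loss, a strictly positive margin scaled by $\nu$ eventually exceeds $\rho$, so $\Phi_{\rho}(\nu\rho_h(x,y))=0$ for large $\nu$, giving (ii); and when $\hh(x)\neq y$ the margin $\rho_h(x,y)\le 0$ forces $\Phi_{\rho}(\rho_h(x,y))=1$, giving (i).

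The delicate case, and the step I expect to be the main obstacle, is the constrained $\rho$-hinge loss. Its upper-bound property (i) still holds: under the constraint $\sum_{y'} h(x,y')=0$, a misclassifying argmax $y''$ satisfies $h(x,y'')\ge 0$, so the single term $\max\{0, 1 + h(x,y'')/\rho\}\ge 1$ already bounds $\ell_{\rho-\mathrm{hinge}}(h,x,y)$ below by $\1_{\hh(x)\neq y}$. The difficulty is part (ii): since $\ell_{\rho-\mathrm{hinge}}(\nu h,x,y)=\sum_{y'\neq y}\max\{0,1+\nu h(x,y')/\rho\}$, a merely positive margin at $y$ does \emph{not} drive this to zero under scaling; one needs every non-target score strictly negative, which a positive margin alone fails to guarantee once $n\ge 3$ under the zero-sum constraint. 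To close this gap I would not invoke Assumption~\ref{assumption:ell} as a blanket property, but instead revisit the mechanism of the theorem's proof, which only evaluates the scaling limit at a fixed realizable optimum. The key step is to argue that, for an $(\sH,\sR)$-realizable distribution, one may select $h^*\in\sH$ that classifies correctly on the predicted region $\curl*{r^*(x)>0}$ \emph{with} all non-target scores strictly negative there; scaling such an $h^*$ then sends each hinge term, hence $\ell_{\rho-\mathrm{hinge}}(\nu h^*,x,y)$, to zero. With this strengthened realizability witness, the remainder of the argument—bounding $\sE_{\sfL}^*(\sH,\sR)$ by $\lim_{\nu}\sE_{\sfL}(\nu h^*,\nu r^*)=0$ via dominated convergence and using $\sfL\ge\labs$—goes through unchanged, yielding realizable $(\sH,\sR)$-consistency for all three losses.
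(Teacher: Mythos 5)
Your overall route is exactly the paper's: the corollary is obtained by instantiating Theorem~\ref{Thm:spcific-loss-bound-realizable}, so the only content is verifying that each of the three losses satisfies Assumption~\ref{assumption:ell}. The paper's own ``proof'' is a single sentence asserting that the assumption holds (it even cites Assumption~\ref{assumption:phi} where it plainly means Assumption~\ref{assumption:ell}); you carry out the verification explicitly. Your checks for $2\ell_{\rm{mae}}$ and $\ell_{\rho}$ are correct and complete: the factor $2$ is indeed what converts the native bound $\ell_{\rm{mae}} \geq \tfrac12\,\1_{\hh(x)\neq y}$ into $\ell \geq \ell_{0-1}$, and a strictly positive margin drives both losses to zero under scaling.

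Your diagnosis of the constrained $\rho$-hinge case is also correct, and it exposes a real gap that the paper papers over rather than a flaw in your own reasoning. For $n \geq 3$ under the zero-sum constraint, a hypothesis such as $h(x,\cdot) = (2, 1, -3)$ has a strictly positive margin at $y = 1$ yet $\ell_{\rho\text{-hinge}}(\nu h, x, 1) = 1 + \nu/\rho \to +\infty$, so Assumption~\ref{assumption:ell}(ii) genuinely fails, and Theorem~\ref{Thm:spcific-loss-bound-realizable} cannot be invoked verbatim for this loss. However, your proposed repair is itself incomplete as stated: realizability only supplies \emph{some} $h^* \in \sH$ with $\sE_{\labs}(h^*, r^*) = 0$, and closure under scaling (the only structural hypothesis of the theorem) does not let you replace it by a zero-loss witness whose non-target scores are all strictly negative on $\curl{r^*(x) > 0}$. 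To make that selection you need an additional richness assumption on $\sH$ — for instance symmetry and completeness, under which a witness with the target score positive and all other scores equal and negative can be constructed — or a restriction to $n = 2$, where the constraint $h(\cdot,1) + h(\cdot,2) = 0$ forces the non-target score to be negative whenever the margin is positive. As written, your argument for the first two losses is a valid proof of the corollary restricted to those cases; the third case requires either the extra hypothesis you gesture at or a direct argument the paper does not supply.
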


Next, we prove a similar result showing that the two-stage predictor-rejector surrogate losses are realizable $(\sH,\sR)$-consistent with respect to $\sfL_{\rm{abs}}$ if the multi-class surrogate loss $\ell$ is realizable $\sH$-consistent with respect to the multi-class zero-one loss $\ell_{0-1}$ when $\sH$ is closed under scaling.
\begin{definition}
We say that $\ell$ is realizable $\sH$-consistent with respect to $\ell_{0-1}$ if, for any distribution such that $\sE^*_{\ell_{0-1}}(\sH) = 0$, $\lim_{n \rightarrow +\infty}\sE_{\ell}(h_n) = \sE^*_{\ell}(\sH) \implies \lim_{n \rightarrow +\infty}\sE_{\ell_{0-1}}(h_n) =  \sE^*_{\ell_{0-1}}(\sH) = 0 $.
\end{definition}

\begin{restatable}{theorem}{BoundGenralTwoStepRealizableMabs}
\label{Thm:bound-general-two-step-realizable-mabs}
Assume that $\sH$ and $\sR$ are closed under scaling.  Let $\ell$ be
any multi-class surrogate loss that is realizable $\sH$-consistent with respect to $\ell_{0-1}$ when $\sH$ is closed under scaling and $\Phi$ satisfies Assumption~\ref{assumption:phi}.  Let $\hat h$ be the minimizer of $\sE_{\ell}$ and $\hat r$ be the minimizer of $\sE_{\ell_{\Phi, \hat h}}$.
  Then, for any
$(\sH,\sR)$-realizable distribution, $\sE_{\labs}(\hat h, \hat r) = 0$.
\end{restatable}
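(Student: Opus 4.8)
The plan is to collapse the two–stage claim onto a single identity about the second stage and then discharge it using the first–stage hypothesis together with closure under scaling. First I would record the pointwise domination $\labs(\hat h,\hat r,x,y)\le \ell_{\Phi,\hat h}(\hat r,x,y)$. This is immediate from Assumption~\ref{assumption:phi}: since $\Phi(t)\ge \1_{t\le 0}$, we have $\Phi(-\hat r(x))\ge \1_{\hat r(x)\ge 0}\ge \1_{\hat r(x)>0}$ and $\Phi(\hat r(x))\ge \1_{\hat r(x)\le 0}$, so the error term and the cost term of $\labs$ are each bounded by the corresponding term of $\ell_{\Phi,\hat h}$. Taking expectations and using that $\hat r$ minimizes $\sE_{\ell_{\Phi,\hat h}}$ over $\sR$ gives $\sE_{\labs}(\hat h,\hat r)\le \sE_{\ell_{\Phi,\hat h}}(\hat r)=\sE^*_{\ell_{\Phi,\hat h}}(\sR)$. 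Because $\labs\ge 0$, it then suffices to establish the single identity $\sE^*_{\ell_{\Phi,\hat h}}(\sR)=0$.

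Next I would unwind what realizability says about the witness pair $(h^*,r^*)$ with $\sE_{\labs}(h^*,r^*)=0$: writing $A=\{x:r^*(x)>0\}$, the vanishing of the two nonnegative terms forces $h^*$ to be correct on $A$ and $c(x)=0$ whenever $r^*(x)\le 0$, so in particular $\{x:c(x)>0\}\subseteq A$ and the labels are deterministic on $A$. The crux is to show that the first–stage minimizer $\hat h$ is also correct (almost everywhere) on $A$, and this is exactly where the realizable $\sH$-consistency of $\ell$ is used. The main obstacle I anticipate is that abstention–realizability does \emph{not} make the full classification problem realizable, since $h^*$ may err on the zero–cost reject region; hence the consistency hypothesis cannot be applied to $\sD$ directly. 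I would instead apply it to the conditional distribution $\sD(\cdot\mid X\in A)$, which \emph{is} classification–realizable (witnessed by $h^*$), and then argue that $\hat h$ remains a minimizer of this conditional $\ell$-risk. That transfer is the delicate point; I expect to justify it through closure under scaling, using that $\ell(\nu h^*,x,y)\to 0$ at every input where $h^*$ predicts with positive margin, so that the accept–region contribution to $\sE^*_{\ell}(\sH)$ can be driven to zero and $\hat h$ inherits correctness on $A$.

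Granting that the classifier induced by $\hat h$ is correct almost everywhere on $A$, its conditional error probability $\hat e(x)$ vanishes on $A$, and the conditional second–stage risk at $x$ equals $\hat e(x)\,\Phi(-r(x))+c(x)\,\Phi(r(x))$. I would then exhibit a minimizing sequence in $\sR$: taking $r=\nu r^*$ (legitimate since $\sR$ is closed under scaling) and letting $\nu\to+\infty$, the cost term $c(x)\Phi(\nu r^*(x))\to 0$ on $A\supseteq\{c>0\}$ while on the reject region $\{r^*\le 0\}$ one has $c(x)=0$ and $\hat e(x)\Phi(-\nu r^*(x))\to 0$; on $A$ the error term is identically $0$ because $\hat e\equiv 0$ there, which is precisely why correctness on $A$ was needed to avoid an indeterminate $0\cdot\infty$. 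Bounding each integrand by the constant $\Phi(0)$ and invoking the dominated convergence theorem yields $\sE_{\ell_{\Phi,\hat h}}(\nu r^*)\to 0$, hence $\sE^*_{\ell_{\Phi,\hat h}}(\sR)=0$; combined with the first paragraph this gives $\sE_{\labs}(\hat h,\hat r)=0$. The only remaining technicality is the boundary set $\{r^*=0\}$, which lies in the zero–cost region and can be absorbed either by a slight scaling perturbation or by choosing the witness $r^*$ strictly negative on the reject region.
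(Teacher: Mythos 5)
Your first and last steps coincide with the paper's: the pointwise domination $\labs(\hat h,\hat r,\cdot)\le \ell_{\Phi,\hat h}(\hat r,\cdot)$ from Assumption~\ref{assumption:phi}, the reduction to $\sE^*_{\ell_{\Phi,\hat h}}(\sR)=0$, and the final scaled-witness-plus-dominated-convergence computation are all exactly what the paper does. The genuine gap is the middle step, which you yourself flag as delicate: the claim that $\hat h$ is correct (a.e.) on the accept region $A=\{x: r^*(x)>0\}$. Realizable $\sH$-consistency, as defined, is a statement about sequences that minimize $\sE_{\ell}$ \emph{for the distribution to which it is applied}. You apply it to $\sD(\cdot\mid X\in A)$, which is indeed classification-realizable via $h^*$, but $\hat h$ minimizes the unconditional $\ell$-risk over $\sD$, not the conditional one; a minimizer of $P(A)\,\E[\ell\mid A]+P(A^c)\,\E[\ell\mid A^c]$ may well trade errors on $A$ for gains on the noisy reject region $A^c$, so it need not approach the conditional infimum on $A$. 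Your proposed repair --- that $\ell(\nu h^*,x,y)\to0$ on positive-margin points, so the $A$-contribution to $\sE^*_{\ell}(\sH)$ is zero --- invokes a property of $\ell$ in the spirit of Assumption~\ref{assumption:ell} that is \emph{not} among this theorem's hypotheses (realizable $\sH$-consistency does not imply it), and even granting it, driving the $A$-part of the infimum to zero does not force the actual minimizer $\hat h$ to be correct on $A$ unless you also control its behavior on $A^c$. Without $\hat e\equiv0$ on $A$, your own DCT step collapses, since on $A$ the factor $\Phi(-\nu r^*(x))$ blows up for losses like the exponential.

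The paper avoids this entirely by exploiting the constant cost to run a global dichotomy rather than a pointwise partition. Either no abstention occurs anywhere ($r^*>0$ on all of $\sX$), in which case $\sD$ itself is classification-realizable and realizable $\sH$-consistency applies directly to $\hat h$ with no conditioning; or abstention occurs somewhere, which forces $c=0$, and then one scales a rejector $r^{**}$ with $r^{**}<0$ everywhere so that the error term $\1_{\hat\hh(x)\neq y}\Phi(-\nu r^{**}(x))$ vanishes in the limit \emph{regardless} of whether $\hat h$ is correct --- the correctness of $\hat h$ on $A$ is simply never needed in that branch. If you want to handle a genuinely input-dependent cost with a nontrivial partition $A$, you would need an additional argument (or additional hypotheses on $\ell$) to justify the conditional transfer; as written, that step does not follow from the stated assumptions.
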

The proof is included in
Appendix~\ref{app:general-positive-two-stage-realizable}.  We first
establish the upper bound $\sE_{\labs}(\hat h, \hat r) \leq
\sE_{\ell_{\Phi, \hat h}}(\hat r)$. Next, we analyze two cases:
whether abstention occurs or not. By applying the Lebesgue dominated
convergence theorem, we show that $\sE_{\ell_{\Phi, \hat h}}(\hat r) =
0$ in both cases, consequently leading to $\sE_{\labs}(\hat h, \hat r)
= 0$.  By Theorem~\ref{Thm:bound-general-two-step-realizable-mabs}, under
the realizability assumption, minimizing a two-stage
predictor-rejector surrogate loss leads to zero abstention loss. This
implies that the two-stage predictor-rejector surrogate loss functions
are also realizable $(\sH,
\sR)$-consistent. \citet{KuznetsovMohriSyed2014} prove the realizable
$\sH$-consistency of a broad family of multi-class surrogate losses
including the logistic loss commonly used in practice. Thus, we obtain
the following corollary.
\begin{corollary}
\label{cor:bound-general-two-step-realizable}
Under the same assumption as in
Theorem~\ref{Thm:bound-general-two-step-realizable-mabs}, for $\ell$ being
the logistic loss, the two-stage predictor-rejector surrogate loss is
realizable $(\sH, \sR)$-consistent with respect to $\labs$.
\end{corollary}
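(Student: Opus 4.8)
The plan is to reduce the abstention loss to the second‑stage surrogate, use the first stage to force an almost‑surely correct predictor, and then exhibit a scaled rejector whose second‑stage surrogate loss tends to zero. First I would record the pointwise domination coming from Assumption~\ref{assumption:phi}: since $\Phi(-r(x)) \ge \1_{-r(x)\le 0} \ge \1_{r(x)>0}$ and $\Phi(r(x)) \ge \1_{r(x)\le 0}$, the second‑stage surrogate dominates the two‑stage abstention loss, $\ell_{\Phi,h}(r,x,y) \ge \ell_{\mathrm{abs},h}(r,x,y)$, whence $\sE_{\labs}(\h h,\h r) = \sE_{\ell_{\mathrm{abs},\h h}}(\h r) \le \sE_{\ell_{\Phi,\h h}}(\h r)$. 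Because $\labs \ge 0$, it then suffices to prove $\sE_{\ell_{\Phi,\h h}}(\h r) = 0$, and since $\h r$ minimizes $\sE_{\ell_{\Phi,\h h}}$ this is the same as showing $\inf_{r\in\sR}\sE_{\ell_{\Phi,\h h}}(r)=0$.

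Next I would control the first stage. From $(\sH,\sR)$‑realizability pick $h^*\in\sH$ and $r^*\in\sR$ with $\sE_{\labs}(h^*,r^*)=0$; since both summands of $\labs$ are nonnegative and $c>0$ is constant, this forces $r^*(x)>0$ almost surely and $h^*$ correct almost surely, so $\sE_{\ell_{0-1}}(h^*)=0$ and hence $\sE^*_{\ell_{0-1}}(\sH)=0$. Applying the realizable $\sH$‑consistency of $\ell$ to the constant sequence $h_n=\h h$, which satisfies $\sE_\ell(h_n)=\sE^*_\ell(\sH)$ because $\h h$ is the minimizer, yields $\sE_{\ell_{0-1}}(\h h)=0$; that is, $\h\hh(x)=y$ almost surely.

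Then I would build a near‑optimal rejector by scaling. Using that $\sR$ is closed under scaling, I evaluate the surrogate at $\nu r^*$ for $\nu>0$, obtaining $\sE_{\ell_{\Phi,\h h}}(\nu r^*) = \E\bracket*{\1_{\h\hh(x)\ne y}\,\Phi(-\nu r^*(x))} + c\,\E\bracket*{\Phi(\nu r^*(x))}$. In the no‑abstention case $r^*(x)>0$ the first expectation vanishes identically because $\h\hh(x)=y$ a.s.\ kills the indicator, and the second term satisfies $\Phi(\nu r^*(x))\to 0$ as $\nu\to\infty$ by Assumption~\ref{assumption:phi} while being dominated by the constant $\Phi(0)<\infty$ (monotonicity of $\Phi$), so the dominated convergence theorem gives $c\,\E[\Phi(\nu r^*(x))]\to 0$. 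Letting $\nu\to\infty$ shows $\inf_{r\in\sR}\sE_{\ell_{\Phi,\h h}}(r)=0$, hence $\sE_{\ell_{\Phi,\h h}}(\h r)=0$, and combining with the first paragraph gives $\sE_{\labs}(\h h,\h r)=0$.

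The main obstacle is the coupling between the two stages: the scaled rejector drives $\Phi(-\nu r^*(x))$ to $+\infty$ on the acceptance region, so the construction can only succeed if the accompanying factor is \emph{exactly} zero there. This is guaranteed solely by the first‑stage conclusion $\sE_{\ell_{0-1}}(\h h)=0$ together with the fact that the first term of $\ell_{\Phi,h}$ is a genuine $0/1$ indicator rather than a decaying surrogate; establishing $\sE^*_{\ell_{0-1}}(\sH)=0$ so that realizable $\sH$‑consistency is applicable, and invoking dominated convergence for the cost term, are the points requiring care. For an instance‑dependent cost the same scheme extends, with the abstention region $\curl*{r^*(x)\le 0}$, where $c(x)=0$, handled by the symmetric observation that $\Phi(-\nu r^*(x))\to 0$ there.
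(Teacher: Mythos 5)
Your argument is correct and follows essentially the same route as the paper's proof of Theorem~\ref{Thm:bound-general-two-step-realizable-mabs}: bound $\sE_{\labs}(\hat h,\hat r)$ by $\sE_{\ell_{\Phi,\hat h}}(\hat r)$ via Assumption~\ref{assumption:phi}, use realizability plus realizable $\sH$-consistency of $\ell$ to force $\1_{\hat{\hh}(x)\neq y}=0$, and drive the cost term to zero by scaling $r^*$ and applying dominated convergence. The only cosmetic difference is that the paper carries out an explicit two-case analysis (abstention occurs, which forces $c=0$ and uses an everywhere-negative rejector, versus no abstention), whereas you treat the no-abstention case as the main line and fold the $c=0$ case into a closing remark; the substance is the same.
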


Note that existing score-based abstention surrogate losses were shown
to be not realizable consistent in \citep{pmlr-v206-mozannar23a}.
Recall that in Section~\ref{sec:single-stage-mabs} and
Section~\ref{sec:two-stage-mabs}, the $(\sH, \sR)$-consistency bounds
guarantees (applicable to all distributions without any assumptions)
indicate that both our single-stage and two-stage predictor-rejector
surrogate losses are also Bayes-consistent, while it is unknown if the
surrogate loss proposed by \citet{pmlr-v206-mozannar23a} is.
By combining the results from Section~\ref{sec:single-stage-mabs},
Section~\ref{sec:two-stage-mabs} and Section~\ref{sec:general-realizable},
we demonstrate the advantages of the predictor-rejector
formulation. As a by-product of our results, we address an open
question in the literature \citep{NiCHS19}.

\section{Experiments}
\label{sec:experiments-mabs}

\begin{table}[t]
\caption{Abstention loss of our predictor-rejector surrogate losses against baselines:
the state-of-the-art score-based abstention surrogate losses in
\citep{mozannar2020consistent,caogeneralizing}.}
    \label{tab:comparison-mabs}
\begin{center}
    \begin{tabular}{@{\hspace{0pt}}lll@{\hspace{0pt}}}
    \toprule
      Dataset & Method & Abstention loss \\
    \toprule
    \multirow{4}{*}{SVHN} & \citep{mozannar2020consistent} &  1.61\% $\pm$ 0.06\% \\
     & \citep{caogeneralizing} & 2.16\% $\pm$ 0.04\%\\
     & single-stage predictor-rejector ($\ell_{\rm{mae}}$) &  2.22\% $\pm$ 0.01\% \\
     & \textbf{two-stage predictor-rejector}  & \textbf{0.94\% \!$\pm$ 0.02\%} \\
    \midrule
    \multirow{4}{*}{CIFAR-10} & \citep{mozannar2020consistent} &  4.48\% $\pm$ 0.10\% \\
     & \citep{caogeneralizing}  & 3.62\% $\pm$ 0.07\%  \\
     & single-stage predictor-rejector ($\ell_{\rm{mae}}$) & 3.64\% $\pm$ 0.05\% \\
     & \textbf{two-stage predictor-rejector}  & \textbf{3.31\% \!$\pm$ 0.02\%}   \\
    \midrule
    \multirow{4}{*}{CIFAR-100} & \citep{mozannar2020consistent} &  10.40\% $\pm$ 0.10\% \\
     & \citep{caogeneralizing}  & 14.99\% $\pm$ 0.01\%\\
     & single-stage predictor-rejector ($\ell_{\rm{mae}}$) & 14.99\% $\pm$ 0.01\% \\
     & \textbf{two-stage predictor-rejector} & \textbf{\phantom{0}9.23\% \!$\pm$ 0.03\%}  \\
    \bottomrule
    \end{tabular}
\end{center}
\end{table}
In this section, we present experimental results for the single-stage
and two-stage predictor-rejector surrogate losses, as well as for the
state-of-the-art score-based abstention surrogate losses
\citep{mozannar2020consistent,caogeneralizing} on three popular
datasets: SVHN \citep{Netzer2011}, CIFAR-10 and CIFAR-100
\citep{Krizhevsky09learningmultiple}. Note that the basic
confidence-based approach has already been shown in
\citep{caogeneralizing} to be empirically inferior to state-of-the-art
score-based abstention surrogate losses. More details on the
experiments are included in Appendix~\ref{app:setup}.

\textbf{Results.}  In Table~\ref{tab:comparison-mabs}, we report the mean
and standard deviation of the abstention loss over three runs for our
algorithms and the baselines.
Table~\ref{tab:comparison-mabs} shows that our two-stage predictor-rejector
surrogate loss consistently outperforms the state-of-the-art
score-based abstention surrogate losses in
\citep{mozannar2020consistent,caogeneralizing} across all cases. The
single-stage predictor-rejector surrogate loss with $\ell$ set as the
mean absolute error loss achieves comparable results. Our
predictor-rejector surrogate losses, both the single-stage and
two-stage variants, benefit from $(\sH, \sR)$-consistency bounds and
realizable $(\sH, \sR)$-consistency guarantees. 
While the optimization of mean absolute error loss is known to be
challenging, as highlighted in the study by Zhang et al. (2018), our
two-stage algorithm sidesteps this hurdle since it can use the
more tractable logistic loss.

\section{Conclusion}

We presented a series of theoretical, algorithmic, and empirical
results for multi-class classification with abstention. Our
theoretical analysis, including proofs of $(\sH, \sR)$-consistency
bounds and realizable $(\sH, \sR)$-consistency, covers both single-stage
and two-stage predictor-rejector surrogate losses.
These results further provide valuable tools applicable to the
analysis of other loss functions in learning with abstention.

Our two-stage algorithmic approach provides practical and efficient
solutions for multi-class abstention across various tasks. This
approach proves particularly advantageous in scenarios where a large
pre-trained prediction model is readily available, and the expense
associated with retraining is prohibitive. Our empirical findings
corroborate the efficacy of these algorithms, further reinforcing
their practical usefulness.
Additionally, our work reveals some limitations of the 
score-based abstention formulation, such as its inability to
consistently yield optimal solutions in certain cases. In contrast, we
present a collection of positive outcomes for various families of
predictor-rejector surrogate loss functions. Importantly, our findings
also provide resolutions to two open questions within the literature.

We believe that our analysis and the novel loss functions we
introduced can guide the design of algorithms across a
broad spectrum of scenarios beyond classification with abstention.

\chapter{Single-Stage Multi-Expert Deferral} \label{ch4}
In this chapter, we study the general framework of learning with multi-expert deferral.
We first introduce a new family of surrogate losses specifically
tailored for the multiple-expert setting, where the prediction and
deferral functions are learned simultaneously
(Section~\ref{sec:general-surrogate-losses}). Next, we prove that
these surrogate losses benefit from $\sH$-consistency bounds
(Section~\ref{sec:H-consistency-bounds}).
This implies, in particular, their Bayes-consistency.
We illustrate the application of our analysis through several examples
of practical surrogate losses, for which we give explicit guarantees.
These loss functions readily lead to the design of new learning to
defer algorithms based on their minimization. 
Our $\sH$-consistency bounds incorporate a crucial term known as the
\emph{minimizability gap}.  We show that this makes them more
advantageous guarantees than bounds based on the approximation error
(Section~\ref{sec:minimizability_gaps}).
We further demonstrate that our $\sH$-consistency bounds can be used
to derive generalization bounds for the minimizer of a surrogate loss
expressed in terms of the minimizability gaps
(Section~\ref{sec:learning-bound}). While the main focus of this chapter is a theoretical analysis, we also report the results of several experiments with SVHN and CIFAR-10 datasets (Section~\ref{sec:experiments}).
\ignore{Finally, we present the results of several experiments with multiple
datasets demonstrating the effectiveness of these algorithms
(Section~\ref{sec:experiments}).}

We start with the introduction of
preliminary definitions and notation needed for our discussion of the
problem of learning with multiple-expert deferral.

The presentation in this chapter is based on \citep{MaoMohriZhong2024deferral}.

\section{Preliminaries}
\label{sec:pre-tdef}

\begin{figure}[t]
    \centering
    \resizebox{0.9\textwidth}{!}{
    \begin{tikzpicture}[node distance=2cm]
    \node (x) [io] {input $x \in \sX$};
    \node (h) [process, right of=x, xshift=0.4cm] {hypothesis $h \in \sH$};
    
    \node (h3) [decision1, right of=h, xshift=2.5cm] {predict 3};
    \node (h2) [decision1, above of=h3, yshift=-0.4cm] {predict 2};
    \node (h1) [decision1, above of=h2, yshift=-0.4cm] {predict 1};
    \node (h4) [decision2, below of=h3, yshift=0.4cm] {expert $\expert_1$};
    \node (h5) [decision2, below of=h4, yshift=0.4cm] {expert $\expert_2$};
    
    \node (c1) [startstop1, right of=h1, xshift=1cm] {$\1_{1\neq y}$};
    \node (c2) [startstop1, right of=h2, xshift=1cm] {$\1_{2\neq y}$};
    \node (c3) [startstop1, right of=h3, xshift=1cm] {$\1_{3\neq y}$};
    \node (c4) [startstop2, right of=h4, xshift=1cm] {$c_1(x, y)$};
    \node (c5) [startstop2, right of=h5, xshift=1cm] {$c_2(x, y)$};
    \node (loss) [above of=c1, yshift=-1.2cm] {incur loss};
    
    \node (y) [io, right of=c3, xshift=1cm] {label $y \in \sY$};
    
    \draw [arrow] (x) -- (h);
    
    \draw [arrow] (h) -- node[sloped, anchor=center, above, yshift=-0.1cm] {$\hh(x) = 1$} (h1);
    \draw [arrow] (h) -- node[sloped, anchor=center, above, yshift=-0.1cm] {$\hh(x) = 2$} (h2);
    \draw [arrow] (h) -- node[sloped, anchor=center, above, yshift=-0.1cm] {$\hh(x) = 3$} (h3);
    \draw [arrow] (h) -- node[sloped, anchor=center, above, yshift=-0.1cm] {$\hh(x) = 4$} (h4);
    \draw [arrow] (h) -- node[sloped, anchor=center, above, yshift=-0.1cm] {$\hh(x) = 5$} (h5);
    
    \draw [arrow] (h1) -- (c1);
    \draw [arrow] (h2) -- (c2);
    \draw [arrow] (h3) -- (c3);
    \draw [arrow] (h4) -- (c4);
    \draw [arrow] (h5) -- (c5);
    
    \draw [arrow] (y) -- (c1);
    \draw [arrow] (y) -- (c2);
    \draw [arrow] (y) -- (c3);
    \draw [arrow] (y) -- (c4);
    \draw [arrow] (y) -- (c5);
    \end{tikzpicture}
     }
    \caption{Illustration of the scenario of learning with multiple-expert deferral ($n=3$ and $\num=2$).}
    \label{fig:deferral}
\end{figure}
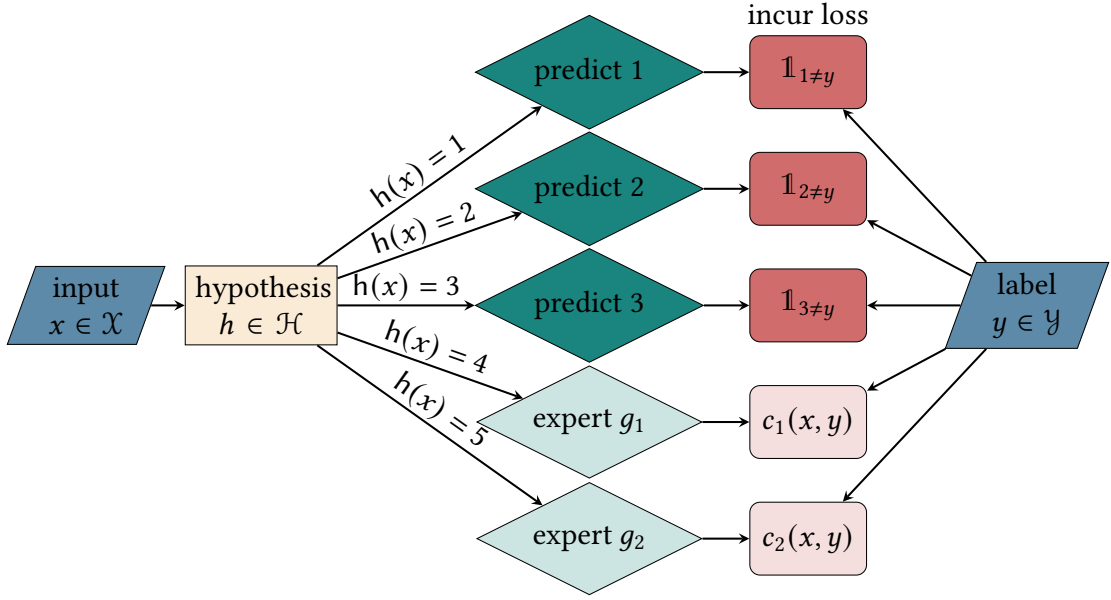

We consider the standard multi-class classification setting with an
input space $\sX$ and a set of $n \geq 2$ labels $\sY =[n]$, where we
use the notation $[n]$ to denote the set $\curl*{1, \ldots, n}$.
We study the scenario of \emph{learning with multi-expert deferral}, where the label set
$\sY$ is augmented with $\num$ additional labels $\curl*{n + 1,
  \ldots, n + \num}$ corresponding to $\num$ pre-defined experts $\expert_1,
\ldots, \expert_\num$, which are a series of functions mapping from $\sX\times \sY$ to $\Rset$.  In this scenario, the learner has the option of
returning a label $y \in \sY$, which represents the category
predicted, or a label $y = n + j$, $1 \leq j \leq \num$, in which case it is
\emph{deferring} to expert $\expert_j$.

We denote by $\ov \sY = [n + \num]$ the augmented label set and
consider a hypothesis set $ \sH$ of functions mapping from $\sX \times
\ov \sY$ to $\Rset$. The prediction associated by $h \in \sH$ to an
input $x \in \sX$ is denoted by $\hh(x)$ and defined as the element in
$\ov \sY$ with the highest score, $\hh(x) = \argmax_{y \in [n + \num]}
h(x, y)$, with an arbitrary but fixed deterministic strategy for
breaking ties. We denote by $\sH_{\rm{all}}$ the family of all
measurable functions.

The \emph{deferral loss function}
$\ldefsc$ is defined as follows for any $h \in \sH$ and $(x, y) \in
\sX \times \sY$:
\begin{equation}
\label{eq:def-score-tdef}
\ldefsc(h, x, y)
= \1_{\hh(x)\neq y} \1_{\hh(x)\in [n]}
+ \sum_{j = 1}^{\num} c_j(x, y) \1_{\hh(x) = n + j}
\end{equation}
Thus, the loss incurred coincides with the standard zero-one
classification loss when $\hh(x)$, the label predicted, is in $\sY$.
Otherwise, when $\hh(x)$ is equal to $n + j$, the loss incurred is
$c_j(x, y)$, the cost of deferring to expert $\expert_j$. We give an illustration of the scenario of learning to defer with three classes and two experts ($n=3$ and $\num=2$) in Figure~\ref{fig:deferral}. We will denote by
$\uv c_j \geq 0$ and $\ov c_j \leq 1$ finite lower and upper bounds on
the cost $c_j$, that is $c_j(x, y)\in [\uv c_j,\ov c_j]$ for all $(x,
y) \in \sX \times \sY$.
There are many possible choices for these costs. Our analysis is
general and requires no assumption other than their boundedness.
One natural choice is to define cost $c_j$ as a function relying on expert
$\expert_j$'s accuracy, for example
$c_j(x, y) = \alpha_j \1_{\expertexpert_{j}(x) \neq y} +
\beta_j$, with $\alpha_j, \beta_j > 0$, where $\expertexpert_{j}(x)=\argmax_{y\in [n]}\expert_j(x,y)$ is the
prediction made by expert $\expert_j$ for input $x$.

Given a distribution $\sD$ over $\sX \times \sY$, we will denote by
$\sE_{\ldefsc}(h)$ the expected deferral loss of a hypothesis $h \in
\sH$,
\begin{equation}
\sE_{\ldefsc}(h) = \E_{(x, y) \sim \sD}[\ldefsc(h, x, y)],
\end{equation}
and
by $\sE^*_{\ldefsc}(\sH) = \inf_{h \in \sH} \sE_{\ldefsc}(h)$ its
infimum or best-in-class expected loss. We will adopt similar
definitions for any surrogate loss function $\lsc$: 
\begin{equation}
\sE_{\lsc}(h) = \E_{(x, y) \sim \sD}[\lsc(h, x, y)],\quad\sE^*_{\lsc}(\sH) = \inf_{h \in \sH} \sE_{\lsc}(h).
\end{equation}

\section{General surrogate losses}
\label{sec:general-surrogate-losses}

In this section, we introduce a new family of surrogate losses
specifically tailored for the multiple-expert setting starting from
first principles.

The scenario we consider is one where the
prediction (first $n$ scores) and deferral functions (last $\num$
scores) are learned simultaneously.
Consider a hypothesis $h \in \sH$. Note that, for any $(x, y) \in \sX
\times \sY$, if the learner chooses to defer to an expert, $\hh(x) \in \curl*{n + 1, \ldots, n + \num}$, then it does not make a prediction of the category, and thus
$\hh(x) \neq y$\ignore{and $\hh(x) = n + j$, for some $j \in [\num]$}. This implies that
the following identity holds: 
\[\1_{\hh(x)\neq y} \1_{\hh(x) \in
  \curl*{n + 1,\ldots, n + \num}} = \1_{\hh(x) \in \curl*{n +
    1,\ldots, n + \num}}.\] 
Using this identity and $\1_{\hh(x)\in [n]} = 1 - \1_{\hh(x)
  \in \curl*{n + 1,\ldots, n + \num}}$, we can write the first term of \eqref{eq:def-score-tdef} as $\1_{\hh(x)\neq y} - \1_{\hh(x) \in \curl*{n +
    1,\ldots, n + \num}}$. Note that deferring occurs if and only if one of the experts is selected, that is $\1_{\hh(x) \in \curl*{n + 1,\ldots, n + \num}}
  = \sum_{j = 1}^{\num}\1_{\hh(x) = n + j}$. Therefore, the deferral loss function can be written in the following form for any $h \in \sH$ and $(x, y) \in \sX \times \sY$:
\begin{align*}
&\ldef(h, x, y)\\
& = \1_{\hh(x)\neq y} - \sum_{j = 1}^{\num}\1_{\hh(x) = n + j} + \sum_{j = 1}^{\num} c_j(x, y) \1_{\hh(x) = n + j}\\
& = \1_{\hh(x)\neq y} + \sum_{j = 1}^{\num} \paren*{c_j(x, y) - 1} \1_{\hh(x) = n + j}\\
& =  \1_{\hh(x)\neq y}
+ \sum_{j = 1}^{\num} \paren*{1 - c_j(x, y)} \1_{\hh(x) \neq n + j}
+ \sum_{j = 1}^{\num} \paren*{c_j(x, y) - 1}.
\end{align*}
In light of this expression, since the last term $\sum_{j = 1}^{\num}
\paren*{c_j(x, y) - 1}$ does not depend on $h$, if $\ell$ is a
surrogate loss for the zero-one multi-class classification loss over the augmented label set $\ov \sY$, then $\lsc$, defined as follows for any $h \in \sH$ and $(x, y) \in \sX \times \sY$,
is a natural surrogate loss for $\ldefsc$:
\begin{equation}
\label{eq:sur-score}
\lsc(h, x, y)
=  \ell \paren*{h, x, y} + \sum_{j = 1}^{\num} \paren*{1 - c_j(x, y)} \,  \ell\paren*{h, x, n + j}.
\end{equation}
We will study the properties of the general family of surrogate losses
$\lsc$ thereby defined. Note that in the special case where $\ell$ is
the logistic loss and $\num = 1$, that is where there is only one
pre-defined expert, $\lsc$ coincides with the surrogate loss proposed
in \citep{mozannar2020consistent,caogeneralizing,MaoMohriZhong2024score}. However, even for
that special case, our derivation of the surrogate loss from first
principle is new and it is this analysis that enables us to define a
surrogate loss for the more general case of multiple experts and other
$\ell$ loss functions. Our formulation also recovers the softmax
surrogate loss in \citep{verma2023learning} when
$\ell=\ell_{\rm{log}}$ and $c_j(x,y)=1_{\expertexpert_j(x)\neq y}$.

\section{\texorpdfstring{$\sH$}{H}-consistency bounds for surrogate losses}
\label{sec:H-consistency-bounds}

Here, we prove strong consistency guarantees for a surrogate deferral
loss $\lsc$ of the form described in the previous section, provided
that the loss function $\ell$ it is based upon admits a similar
consistency guarantee with respect to the standard zero-one
classification loss.

\textbf{$\sH$-consistency bounds.} To do so, we will adopt the notion
of \emph{$\sH$-consistency bounds} recently introduced by
\citet*{awasthi2022Hconsistency,AwasthiMaoMohriZhong2022multi} and also studied in \citep{awasthi2021calibration,awasthi2021finer,AwasthiMaoMohriZhong2023theoretically,awasthi2024dc,MaoMohriZhong2023cross,MaoMohriZhong2023ranking,MaoMohriZhong2023rankingabs,zheng2023revisiting,MaoMohriZhong2023characterization,MaoMohriZhong2023structured,mao2024h,mao2024regression,mao2024universal,mao2025enhanced,mao2024multi,mao2024realizable,MohriAndorChoiCollinsMaoZhong2024learning,cortes2024cardinality,zhong2025fundamental,MaoMohriZhong2025principled,MaoMohriZhong2025mastering,cortes2025balancing,cortes2025improved,desalvo2025budgeted,mohri2025beyond}. These
are guarantees that, unlike Bayes-consistency or excess error bound,
take into account the specific hypothesis set $\sH$ and do not assume
$\sH$ to be the family of all measurable functions. Moreover, in
contrast with Bayes-consistency, they are not just asymptotic
guarantees. In this context, they have the following form:
$\sE_{\ldefsc}(h) - \sE_{\ldefsc}^*(\sH) \leq f\paren*{\sE_{\lsc}(h) -
  \sE_{\lsc}^*(\sH)}$, where $f$ is a non-decreasing function,
typically concave.  Thus, when the surrogate estimation loss
$\paren*{\sE_{\lsc}(h) - \sE_{\lsc}^*(\sH)}$ is reduced to $\e$, the
deferral estimation loss $\paren*{\sE_{\ldefsc}(h) -
  \sE_{\ldefsc}^*(\sH)}$ is guaranteed to be at most $f(\e)$.

\textbf{Minimizability gaps.} A key quantity appearing in these bounds
is the \emph{minimizability gap} $\sM_{\ell}(\sH)$ which, for a loss
function $\ell$ and hypothesis set $\sH$, measures the difference of
the best-in-class expected loss and the expected pointwise infimum of
the loss:
\[
\sM_{\ell}(\sH) = \sE^*_{\ell}(\sH) -
\E_x\bracket[\big]{\inf_{h \in \sH} \E_{y | x} \bracket*{\ell(h, x,
    y)}}.
\]
By the super-additivity of the infimum, since $\sE^*_{\ell}(\sH) =
\inf_{h \in \sH} \E_x\bracket[\big]{\E_{y | x} \bracket*{\ell(h, x,
    y)}}$, the minimizability gap is always non-negative.

When a loss function $\ell$ only depends on $h(x,\cdot)$ for all
$h$, $x$, and $y$, that is \[\ell(h, x, y) = \Psi(h(x,1), \ldots,
h(x,n), y),\] for some function $\Psi$, then it is not hard to show
that the minimizability gap vanishes for the family of all measurable
functions: $\sM_{\ell}(\sH_{\rm{all}}) = 0$
\citep{steinwart2007compare}[lemma~2.5]. It is also null when
$\sE^*_{\ell}(\sH) = \sE^*_{\ell}(\sH_{\rm{all}})$, that is when
the Bayes-error coincides with the best-in-class error. In general,
however, the minimizabiliy gap is non-zero for a restricted hypothesis
set $\sH$ and is therefore important to analyze.  
In Section~\ref{sec:minimizability_gaps}, we will discuss in more
detail minimizability gaps for a relatively broad case and demonstrate
that $\sH$-consistency bounds with minimizability gaps can often be
more favorable than excess error bounds based on the approximation
error.

The following theorem is the main result of this section.

\begin{restatable}[\textbf{$\sH$-consistency bounds for
      score-based surrogates}]{theorem}{BoundScore}
\label{Thm:bound-score}
Assume that $\ell$ admits an $\sH$-consistency bound with respect to
the multi-class zero-one classification loss $\ell_{0-1}$. Thus, there
exists a non-decreasing concave function $\Gamma$ with $\Gamma(0)=0$
such that, for any distribution $\sD$ and for all $h \in \sH$, we have
\begin{equation*}
\sE_{\ell_{0-1}}(h) - \sE_{\ell_{0-1}}^*( \sH) + \sM_{\ell_{0-1}}( \sH)
\leq \Gamma\paren*{\sE_{\ell}(h)-\sE_{\ell}^*( \sH) + \sM_{\ell}(\sH)}.
\end{equation*}
Then, $\lsc$ admits the following $ \sH$-consistency bound with
respect to $\ldefsc$: for all $h\in \sH$,
\begin{equation}
\label{eq:H-consistency-bounds}
\sE_{\ldefsc}(h) - \sE_{\ldefsc}^*( \sH) + \sM_{\ldefsc}( \sH)
\leq \paren[\bigg]{\num + 1 - \sum_{j = 1}^{\num}\uv c_j} \Gamma\paren*{\frac{\sE_{\lsc}(h) - \sE_{\lsc}^*( \sH) + \sM_{\lsc}( \sH)}{\num + 1-\sum_{j = 1}^{\num}\ov c_j}}.
\end{equation}
Furthermore, constant factors $\paren*{\num + 1 - \sum_{j = 1}^{\num}\uv
  c_j}$ and $\frac{1}{\num + 1 - \sum_{j = 1}^{\num}\ov c_j}$ can be
removed when $\Gamma$ is linear.
\end{restatable}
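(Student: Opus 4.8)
The plan is to reduce both sides of the inequality to expectations of pointwise calibration gaps, and then, at each point $x$, to exhibit an auxiliary $(n+\num)$-class distribution under which the deferral problem becomes an ordinary classification problem, so that the assumed $\sH$-consistency bound for $\ell$ can be invoked directly. First I would recall the standard identity $\sE_{\loss}(h) - \sE_{\loss}^*(\sH) + \sM_{\loss}(\sH) = \E_x\bracket*{\cC_{\loss}(h,x) - \cC_{\loss}^*(\sH,x)}$, valid for any loss $\loss$, where $\cC_{\loss}(h,x) = \E_{y\mid x}\bracket*{\loss(h,x,y)}$ is the conditional risk and $\cC_{\loss}^*(\sH,x) = \inf_{h'\in\sH}\cC_{\loss}(h',x)$; write $\Delta\cC_{\loss}(h,x) = \cC_{\loss}(h,x)-\cC_{\loss}^*(\sH,x)$ for the calibration gap. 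Applying this identity to both $\ldefsc$ and $\lsc$, it suffices to bound $\Delta\cC_{\ldefsc}(h,x)$ by $\Delta\cC_{\lsc}(h,x)$ pointwise and then aggregate.

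For a fixed $x$, I would introduce the nonnegative weights $q_y = \sfp(y\mid x)$ for $y\in[n]$ and $q_{n+j} = 1 - \E_{y\mid x}\bracket*{c_j(x,y)}$ for $j\in[\num]$, set $Q(x) = \sum_{y}q_y = \num + 1 - \sum_{j=1}^{\num}\E_{y\mid x}\bracket*{c_j(x,y)}$, and let $\ov p$ be the distribution $\ov p_y = q_y/Q(x)$ over $\ov\sY$. A direct computation of the conditional risk of $\lsc$ from \eqref{eq:sur-score} gives $\cC_{\lsc}(h,x) = \sum_{y}q_y\,\ell(h,x,y) = Q(x)\,\cC_\ell^{\ov p}(h,x)$, where $\cC_\ell^{\ov p}$ denotes the conditional $\ell$-risk under $\ov p$. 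The more delicate identity concerns the target loss: evaluating $\cC_{\ldefsc}(h,x)$ case-by-case on $\hh(x)=k\in[n]$ (value $1-\sfp(k\mid x)$) and on $\hh(x)=n+j$ (value $\E_{y\mid x}\bracket*{c_j(x,y)}$) and comparing with $Q(x)\cC_{\ell_{0-1}}^{\ov p}(h,x)$ shows that the two differ by the \emph{same} additive constant $Q(x)-1$ in every case, i.e. $\cC_{\ldefsc}(h,x) = Q(x)\,\cC_{\ell_{0-1}}^{\ov p}(h,x) - (Q(x)-1)$. Since these identities hold for every $h$ with the same shift, taking infima over $\sH$ preserves them, so $\Delta\cC_{\ldefsc}(h,x) = Q(x)\,\Delta\cC_{\ell_{0-1}}^{\ov p}(h,x)$ and $\Delta\cC_{\lsc}(h,x) = Q(x)\,\Delta\cC_\ell^{\ov p}(h,x)$; this step needs no richness assumption beyond using the same $\sH$ throughout.

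With these identities in hand, I would invoke the assumed $\sH$-consistency bound for $\ell$ in its pointwise form applied to $\ov p$, namely $\Delta\cC_{\ell_{0-1}}^{\ov p}(h,x) \le \Gamma\paren*{\Delta\cC_\ell^{\ov p}(h,x)}$, obtaining $\Delta\cC_{\ldefsc}(h,x) \le Q(x)\,\Gamma\paren*{\Delta\cC_{\lsc}(h,x)/Q(x)}$. Writing $A = \num+1-\sum_j\uv c_j$ and $B = \num+1-\sum_j\ov c_j$, the cost bounds give $B\le Q(x)\le A$ with $B\ge 1>0$; since $\Gamma\ge 0$ is non-decreasing, the inequalities $Q(x)\le A$ and $Q(x)\ge B$ yield $Q(x)\Gamma(u/Q(x))\le A\,\Gamma(u/B)$ for $u = \Delta\cC_{\lsc}(h,x)\ge0$. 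Finally, taking expectation over $x$ and using the concavity of $t\mapsto A\,\Gamma(t/B)$ with Jensen's inequality converts the pointwise bound into the claimed expected-level inequality. In the linear case $\Gamma(t)=Lt$ the factor $Q(x)$ cancels exactly in $Q(x)\Gamma(u/Q(x)) = Lu = \Gamma(u)$ and Jensen holds with equality, so no constant factors remain.

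The main obstacle I anticipate is the availability of the pointwise form of the assumed $\sH$-consistency bound for $\ell$: the hypothesis is stated at the level of expected losses over all distributions, so one must appeal to the characterization that such bounds are equivalent to the pointwise calibration-gap inequality on conditional distributions (as established in \citep{AwasthiMaoMohriZhong2022multi}) in order to apply it to the induced $\ov p$. The only other point requiring care is verifying that the discrepancy between $\cC_{\ldefsc}$ and $Q(x)\cC_{\ell_{0-1}}^{\ov p}$ is genuinely the same constant $Q(x)-1$ for prediction on a true class and for deferral to any expert; once that is confirmed, everything else is the routine aggregation via Jensen.
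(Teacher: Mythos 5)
Your proposal is correct and follows essentially the same route as the paper's proof: the same reweighted conditional distribution $\ov q$ (your $\ov p$), the same identities reducing the deferral and surrogate conditional regrets to $Q(x)$ times the zero-one and $\ell$ conditional regrets under $\ov q$, the same pointwise invocation of the assumed bound, and the same Jensen aggregation. The only (minor, and arguably cleaner) difference is that you bound $Q(x)\Gamma(u/Q(x))\le A\,\Gamma(u/B)$ pointwise before applying Jensen to the fixed concave map $t\mapsto A\,\Gamma(t/B)$, whereas the paper applies Jensen first and then bounds $Q$; both yield the stated inequality.
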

The proof is given in Appendix~\ref{app:score}. It consists of first
analyzing the conditional regret of the deferral loss and that of a
surrogate loss. Next, we show how the former can be upper-bounded in
terms of the latter by leveraging the $\sH$-consistency bound of
$\ell$ with respect to the zero-one loss with an appropriate
conditional distribution that we construct.  This, combined with the
results of \citet{AwasthiMaoMohriZhong2022multi}, proves our
$\sH$-consistency bounds.

Let us emphasize that the theorem is broadly applicable and that there
are many choices for the surrogate loss $\ell$ meeting the assumption
of the theorem: \citet{AwasthiMaoMohriZhong2022multi} showed that a
variety of surrogate loss functions $\ell$ admit an $\sH$-consistency
bound with respect to the zero-one loss for common hypothesis sets
such as linear models and multi-layer neural networks, including
\emph{sum losses} \citep{weston1998multi}, \emph{constrained losses}
\citep{lee2004multicategory}, and, as shown more recently by
\citet{MaoMohriZhong2023cross} (see also \citep{zheng2023revisiting,MaoMohriZhong2023characterization}),
\emph{comp-sum losses}, which include the logistic loss
\citep{Verhulst1838,Verhulst1845,Berkson1944,Berkson1951}, the
\emph{sum-exponential loss} and many other loss functions.

Thus, the theorem gives a strong guarantee for a broad family of
surrogate losses $\lsc$ based upon such loss functions $\ell$.  The
presence of the minimizability gaps in these bounds is important.  In
particular, while the minimizability gap can be upper-bounded by the
approximation error $\sA_{\ell}(\sH)= \sE^*_{\ell}(\sH) -
\E_x\bracket[\big]{\inf_{h \in \sH_{\rm{all}}} \E_{y | x}
  \bracket*{\ell(h, x, y)}} =
\sE^*_{\ell}(\sH)-\sE^*_{\ell}(\sH_{\rm{all}})$, it is a finer
quantity than the approximation error and can lead to more favorable
guarantees.

Note that when the Bayes-error coincides with the best-in-class error,
$\sE^*_{\sfL}(\sH) = \sE^*_{\sfL}(\sH_{\rm{all}})$, we have
$\sM_{\lsc}( \sH)\leq \sA_{\lsc}( \sH) = 0$.  This leads to the
following corollary, using the non-negativity property of the
minimizability gap.
\begin{corollary}
\label{cor:bound-score}
Assume that $\ell$ admits an $\sH$-consistency bound with respect to
the multi-class zero-one classification loss $\ell_{0-1}$. Then, for
all $h\in \sH$ and any distribution such that
$\sE^*_{\sfL}(\sH)=\sE_{\sfL}^*(\sH_{\rm{all}})$, the following bound holds:
\begin{equation*}
\sE_{\ldefsc}(h) - \sE_{\ldefsc}^*( \sH)
\leq \paren[\bigg]{\num + 1 - \sum_{j = 1}^{\num}\uv c_j} \Gamma\paren*{\frac{\sE_{\lsc}(h) - \sE_{\lsc}^*( \sH)}{\num + 1-\sum_{j = 1}^{\num}\ov c_j}},
\end{equation*}
Furthermore, constant factors $\paren*{\num + 1 - \sum_{j = 1}^{\num}\uv c_j}$ and $\frac{1}{\num + 1-\sum_{j = 1}^{\num}\ov c_j}$ can be removed 
when $\Gamma$ is linear.
\end{corollary}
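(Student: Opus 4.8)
The plan is to obtain this corollary as an immediate specialization of Theorem~\ref{Thm:bound-score}. The hypothesis of the corollary --- that $\ell$ admits an $\sH$-consistency bound with respect to $\ell_{0-1}$ with a non-decreasing concave $\Gamma$ satisfying $\Gamma(0)=0$ --- is exactly the hypothesis of Theorem~\ref{Thm:bound-score}. That theorem therefore already supplies, for all $h\in\sH$,
\begin{equation*}
\sE_{\ldefsc}(h) - \sE_{\ldefsc}^*(\sH) + \sM_{\ldefsc}(\sH)
\leq \paren[\big]{\num + 1 - \textstyle\sum_{j=1}^{\num}\uv c_j}\,
\Gamma\paren[\big]{\tfrac{\sE_{\lsc}(h) - \sE_{\lsc}^*(\sH) + \sM_{\lsc}(\sH)}{\num + 1 - \sum_{j=1}^{\num}\ov c_j}}.
\end{equation*}
All that remains is to simplify the two minimizability-gap terms under the additional assumption $\sE^*_{\lsc}(\sH)=\sE^*_{\lsc}(\sH_{\rm{all}})$.

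First I would discard the gap on the left. By super-additivity of the infimum the deferral minimizability gap $\sM_{\ldefsc}(\sH)$ is non-negative, so $\sE_{\ldefsc}(h) - \sE_{\ldefsc}^*(\sH)\leq\sE_{\ldefsc}(h) - \sE_{\ldefsc}^*(\sH)+\sM_{\ldefsc}(\sH)$ and the chain of inequalities is preserved. Next I would show the surrogate gap vanishes. Recall the two facts noted before the corollary: the minimizability gap is always non-negative, and it is dominated by the approximation error, $\sM_{\lsc}(\sH)\leq\sA_{\lsc}(\sH)=\sE^*_{\lsc}(\sH)-\sE^*_{\lsc}(\sH_{\rm{all}})$. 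The stated assumption makes the right-hand side zero, whence $0\leq\sM_{\lsc}(\sH)\leq 0$ and $\sM_{\lsc}(\sH)=0$. Substituting $\sM_{\lsc}(\sH)=0$ into the argument of $\Gamma$ deletes it from the numerator, producing exactly the fraction appearing in the corollary; no monotonicity subtlety arises since this is an equality rather than a bound.

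Finally, the removability of the constant factors $\paren[\big]{\num + 1 - \sum_{j=1}^{\num}\uv c_j}$ and $\tfrac{1}{\num + 1 - \sum_{j=1}^{\num}\ov c_j}$ when $\Gamma$ is linear is inherited verbatim from the corresponding clause of Theorem~\ref{Thm:bound-score}, since the specialization above does not touch that argument. There is no substantive obstacle here; the one step worth stating explicitly is the observation that the hypothesis $\sE^*_{\lsc}(\sH)=\sE^*_{\lsc}(\sH_{\rm{all}})$ is precisely the condition that forces the approximation error, and hence the surrogate minimizability gap, to be zero, after which the result drops out of Theorem~\ref{Thm:bound-score} together with the non-negativity of $\sM_{\ldefsc}(\sH)$.
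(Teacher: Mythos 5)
Your proposal is correct and follows exactly the paper's own route: apply Theorem~\ref{Thm:bound-score}, drop $\sM_{\ldefsc}(\sH)$ by non-negativity, and observe that $0 \leq \sM_{\lsc}(\sH) \leq \sA_{\lsc}(\sH) = \sE^*_{\lsc}(\sH) - \sE^*_{\lsc}(\sH_{\rm{all}}) = 0$ under the stated assumption. Nothing is missing.
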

Thus, when the estimation error of the surrogate loss, $\sE_{\lsc}(h)
- \sE_{\lsc}^*(\sH)$, is reduced to $\e$, the estimation error of the
deferral loss, $\sE_{\ldefsc}(h) - \sE_{\ldefsc}^*(\sH)$, is upper
bounded by \[\paren*{\num + 1 - \sum_{j = 1}^{\num}\uv c_j}
\Gamma\paren*{\e/\paren*{\num + 1 - \sum_{j = 1}^{\num}\ov c_j}}.\]
Moreover, $\sH$-consistency holds since $\sE_{\lsc}(h) -
\sE_{\lsc}^*(\sH) \to 0$ implies $\sE_{\ldefsc}(h) -
\sE_{\ldefsc}^*(\sH)\to 0$.

Table~\ref{tab:sur-score-comp} shows several examples of surrogate
deferral losses and their corresponding $\sH$-consistency bounds,
using the multi-class $\sH$-consistency bounds known for comp-sum
losses $\ell$ with respect to the zero-one loss
\citep[Theorem~1]{MaoMohriZhong2023cross}.  The bounds have been simplified here
using the inequalities $1 \leq \num + 1 - \sum_{j = 1}^{\num}\ov
c_j\leq \num + 1 - \sum_{j = 1}^{\num}\uv c_j\leq \num + 1$. See
Appendix~\ref{app:sur-score-example-comp} for a more detailed
derivation.

Similarly, Table~\ref{tab:sur-score-sum} and
Table~\ref{tab:sur-score-cstnd} show several examples of surrogate
deferral losses with sum losses or constrained losses adopted for
$\ell$ and their corresponding $\sH$-consistency bounds, using the
multi-class $\sH$-consistency bounds in
\citep[Table~2]{AwasthiMaoMohriZhong2022multi} and
\citep[Table~3]{AwasthiMaoMohriZhong2022multi} respectively. Here too,
we present the simplified bounds by using the inequalities $1 \leq
\num + 1 - \sum_{j = 1}^{\num}\ov c_j\leq \num + 1 - \sum_{j =
  1}^{\num}\uv c_j\leq \num + 1$.  See
Appendix~\ref{app:sur-score-example-sum} and
Appendix~\ref{app:sur-score-example-cstnd} for a more detailed
derivation.

\begin{table}[t]
  \centering
  \resizebox{\textwidth}{!}{
  \begin{tabular}{@{\hspace{0cm}}lll@{\hspace{0cm}}}
    \toprule
    $\ell$  & $\sfL$   & $\sH$-consistency bounds\\
    \midrule
    $\ell_{\rm{exp}}$ & $\sum_{y'\neq y} e^{h(x, y') - h(x, y)}+\sum_{j=1}^{\num}(1-c_j(x,y))\sum_{y'\neq n+j} e^{h(x, y') - h(x, n+j)}$ & $\sqrt{2}(\num+1)\paren*{\sE_{\lsc}(h) - \sE_{\lsc}^*( \sH)}^{\frac12}$\\
    $\ell_{\rm{log}}$   & $-\log\paren*{\frac{e^{h(x,y)}}{\sum_{y'\in \ov \sY}e^{h(x,y')}}}-\sum_{j=1}^{\num}(1-c_j(x,y))\log\paren*{\frac{e^{h(x,n+j)}}{\sum_{y'\in \ov \sY}e^{h(x,y')}}}$ & $\sqrt{2}(\num+1)\paren*{\sE_{\lsc}(h) - \sE_{\lsc}^*( \sH)}^{\frac12}$   \\
    $\ell_{\rm{gce}}$    & $\frac{1}{\alpha}\bracket*{1 - \bracket*{\frac{e^{h(x,y)}}
    {\sum_{y'\in \ov \sY} e^{h(x,y')}}}^{\alpha}}+\frac{1}{\alpha}\sum_{j=1}^{\num}(1-c_j(x,y))\bracket*{1 - \bracket*{\frac{e^{h(x,n+j)}}
    {\sum_{y'\in \ov \sY} e^{h(x,y')}}}^{\alpha}}$  & $\sqrt{2n^{\alpha}}(\num+1)\paren*{\sE_{\lsc}(h) - \sE_{\lsc}^*( \sH)}^{\frac12}$  \\
    $\ell_{\rm{mae}}$ & $ 1 - \frac{e^{h(x,y)}}{\sum_{y'\in \ov \sY} e^{h(x, y')}}+\sum_{j=1}^{\num}(1-c_j(x,y))\paren*{1 - \frac{e^{h(x,n+j)}}{\sum_{y'\in \ov \sY} e^{h(x, y')}}}$ &    $n \paren*{\sE_{\lsc}(h) - \sE_{\lsc}^*( \sH)}$    \\
    \bottomrule
  \end{tabular}
  }
  \caption{Examples of the deferral surrogate loss \eqref{eq:sur-score} with comp-sum losses adopted for $\ell$
  and their associated $\sH$-consistency bounds provided by
  Corollary~\ref{cor:bound-score} (with only the surrogate portion
  displayed).}
\label{tab:sur-score-comp}
\end{table}

\begin{table}[t]
  \centering
  \resizebox{\textwidth}{!}{
  \begin{tabular}{@{\hspace{0cm}}lll@{\hspace{0cm}}}
    \toprule
    $\ell$  & $\sfL$   & $\sH$-consistency bounds\\
    \midrule
    $\Phi_{\mathrm{sq}}^{\mathrm{sum}}$ & 
    $\sum_{y'\neq y} \Phi_{\rm{sq}}\paren*{\Delta_h(x,y,y')}+\sum_{j=1}^{\num}(1-c_j(x,y))\sum_{y'\neq n+j} \Phi_{\rm{sq}}\paren*{\Delta_h(x,n+j,y')}$  & $(\num+1)\paren*{\sE_{\lsc}(h) - \sE_{\lsc}^*( \sH)}^{\frac12}$\\
    $\Phi_{\mathrm{exp}}^{\mathrm{sum}}$   & $\sum_{y'\neq y} \Phi_{\rm{exp}}\paren*{\Delta_h(x,y,y')}+\sum_{j=1}^{\num}(1-c_j(x,y))\sum_{y'\neq n+j} \Phi_{\rm{exp}}\paren*{\Delta_h(x,n+j,y')}$ & $\sqrt{2}(\num+1)\paren*{\sE_{\lsc}(h) - \sE_{\lsc}^*( \sH)}^{\frac12}$   \\
    $\Phi_{\rho}^{\mathrm{sum}}$    &  $\sum_{y'\neq y} \Phi_{\rho}\paren*{\Delta_h(x,y,y')}+\sum_{j=1}^{\num}(1-c_j(x,y))\sum_{y'\neq n+j} \Phi_{\rho}\paren*{\Delta_h(x,n+j,y')}$  & $\sE_{\lsc}(h) - \sE_{\lsc}^*( \sH)$  \\
    \bottomrule
  \end{tabular}
  }
 \caption{Examples of the deferral surrogate loss \eqref{eq:sur-score} with sum losses adopted for $\ell$
  and their associated $\sH$-consistency bounds provided by
  Corollary~\ref{cor:bound-score} (with only the surrogate portion
  displayed), where $\Delta_h(x,y,y')=h(x, y) - h(x, y')$, and $\Phi_{\mathrm{sq}}(t)=\max\curl*{0, 1 - t}^2$, $\Phi_{\mathrm{exp}}(t)=e^{-t}$,
and $\Phi_{\rho}(t)=\min\curl*{\max\curl*{0,1 - t/\rho},1}$.}
\label{tab:sur-score-sum}
\end{table}

\begin{table}[t]
  \centering
  \resizebox{\textwidth}{!}{
  \begin{tabular}{@{\hspace{0cm}}lll@{\hspace{0cm}}}
    \toprule
    $\ell$  & $\sfL$   & $\sH$-consistency bounds\\
    \midrule
    $\Phi_{\mathrm{hinge}}^{\mathrm{cstnd}}$ & $\sum_{y'\neq y}\Phi_{\mathrm{hinge}}\paren*{-h(x, y')}+\sum_{j=1}^{\num}(1-c_j(x,y))\sum_{y'\neq n+j}\Phi_{\mathrm{hinge}}\paren*{-h(x, y')}$ & $\sE_{\lsc}(h) - \sE_{\lsc}^*( \sH)$\\
    $\Phi_{\mathrm{sq}}^{\mathrm{cstnd}}$   & $\sum_{y'\neq y}\Phi_{\mathrm{sq}}\paren*{-h(x, y')}+\sum_{j=1}^{\num}(1-c_j(x,y))\sum_{y'\neq n+j}\Phi_{\mathrm{sq}}\paren*{-h(x, y')}$ & $(\num+1)\paren*{\sE_{\lsc}(h) - \sE_{\lsc}^*( \sH)}^{\frac12}$   \\
    $\Phi_{\mathrm{exp}}^{\mathrm{cstnd}}$    & $\sum_{y'\neq y}\Phi_{\mathrm{exp}}\paren*{-h(x, y')}+\sum_{j=1}^{\num}(1-c_j(x,y))\sum_{y'\neq n+j}\Phi_{\mathrm{exp}}\paren*{-h(x, y')}$  & $\sqrt{2}(\num+1)\paren*{\sE_{\lsc}(h) - \sE_{\lsc}^*( \sH)}^{\frac12}$  \\
    $\Phi_{\rho}^{\mathrm{cstnd}}$ & $\sum_{y'\neq y}\Phi_{\rho}\paren*{-h(x, y')}+\sum_{j=1}^{\num}(1-c_j(x,y))\sum_{y'\neq n+j}\Phi_{\rho}\paren*{-h(x, y')}$ &    $\sE_{\lsc}(h) - \sE_{\lsc}^*( \sH)$    \\
    \bottomrule
  \end{tabular}
  }
 \caption{Examples of the deferral surrogate loss \eqref{eq:sur-score} with constrained losses adopted for $\ell$
  and their associated $\sH$-consistency bounds provided by
  Corollary~\ref{cor:bound-score} (with only the surrogate portion
  displayed), where $\Phi_{\mathrm{hinge}}(t) = \max\curl*{0,1 - t}$, $\Phi_{\mathrm{sq}}(t)=\max\curl*{0, 1 - t}^2$,
  $\Phi_{\mathrm{exp}}(t)=e^{-t}$,
and
$\Phi_{\rho}(t)=\min\curl*{\max\curl*{0,1 - t/\rho},1}$ with the constraint that $\sum_{y\in \sY}h(x,y)=0$.}
\label{tab:sur-score-cstnd}
\end{table}

\section{Benefits of minimizability gaps}
\label{sec:minimizability_gaps}

As already pointed out, the minimizabiliy gap can be upper-bounded by
the approximation error $\sA_{\ell}(\sH)= \sE^*_{\ell}(\sH) -
\E_x\bracket[\big]{\inf_{h \in \sH_{\rm{all}}} \E_{y | x}
  \bracket*{\ell(h, x,
    y)}}=\sE^*_{\ell}(\sH)-\sE^*_{\ell}(\sH_{\rm{all}})$.  It is
however a finer quantity than the approximation error and can thus
lead to more favorable guarantees.  More precisely, as shown by
\citep{awasthi2022Hconsistency,AwasthiMaoMohriZhong2022multi}, for a
target loss function $\ell_2$ and a surrogate loss function $\ell_1$,
the excess error bound can be rewritten as
\begin{equation*}
\sE_{\ell_2} (h) - \sE^*_{\ell_2}(\sH) +\sA_{\ell_2}(\sH)
\leq \Gamma\paren*{ \sE_{\ell_1} (h) - \sE^*_{\ell_1}(\sH)+\sA_{\ell_1}(\sH)},
\end{equation*}
where $\Gamma$ is typically linear or the square-root function modulo
constants.  On the other hand, an $\sH$-consistency bound can be
expressed as follows:
\begin{equation*}
\sE_{\ell_2} (h) - \sE^*_{\ell_2}(\sH) +  \sM_{\ell_2}(\sH)  \leq \Gamma\paren*{ \sE_{\ell_1} (h) - \sE^*_{\ell_1}(\sH) + \sM_{\ell_1}(\sH)}.
\end{equation*}
For a target loss function $\ell_2$ with discrete outputs, such as the
zero-one loss or the deferral loss, we have
$\E_{x}\bracket[\big]{\inf_{h \in\sH}\E_{y | x}\bracket*{\ell_2(h, x,
    y)}}=\E_x\bracket[\big]{\inf_{h \in \sH_{\rm{all}}} \E_{y | x}
  \bracket*{\ell_2(h,x, y)}}$ when the hypothesis set generates labels
that cover all possible outcomes for each input (See
\citep[Lemma~3]{AwasthiMaoMohriZhong2022multi},
Lemma~\ref{lemma:calibration_gap_score} in
Appendix~\ref{sec:app_def_cond}). Consequently, we have
$\sM_{\ell_2}(\sH) = \sA_{\ell_2}(\sH)$. For a surrogate loss function
$\ell_1$, the minimizability gap is upper-bounded by the approximation
error, $\sM_{\ell_1}(\sH)\leq \sA_{\ell_1}(\sH)$, and is generally
finer.

Consider a simple binary classification example with the conditional
distribution denoted as $\eta(x)=D(Y=1 | X=x)$. Let $\sH$ be a family
of functions $h$ such that $|h(x)| \leq \Lambda$ for all $x \in \sX$,
for some $\Lambda > 0$, and such that all values in the range
$[-\Lambda, +\Lambda]$ can be achieved. For the exponential-based
margin loss, defined as $\ell(h, x, y) = e^{-yh(x)}$, we have
\begin{equation*}
\E_{y | x}[\ell(h, x, y)] = \eta(x)
e^{-h(x)} + (1 - \eta(x)) e^{h(x)}.  
\end{equation*}
It can be observed that the infimum over all measurable functions can
be written as follows, for all $x$:
\begin{equation*}
\inf_{h
  \in \sH_{\mathrm{all}}}\E_{y | x}[\ell(h, x, y)] =
2\sqrt{\eta(x)(1-\eta(x))},
\end{equation*}
while the infimum over $\sH$, $\inf_{h \in \sH}\E_{y
  | x}[\ell(h, x, y)]$, depends on $\Lambda$. That infimum over $\sH$ is achieved by
  \begin{equation*}
   h(x)=\begin{cases}
   \min\curl*{\frac{1}{2} \log \frac{\eta(x)}{1
    -\eta(x)}, \Lambda} & \eta(x) \geq 1/2\\
     \max\curl*{\frac{1}{2}\log \frac{\eta(x)}{1 - \eta(x)}, -\Lambda} & \text{otherwise}.
   \end{cases} 
  \end{equation*}
Thus, in the deterministic case, we can explicitly compute the
difference between the approximation error and the minimizability gap:
\begin{align*}
\sA_{\ell}(\sH)-\sM_{\ell}(\sH) = \E_{x}\bracket[\big]{\inf_{h \in\sH}\E_{y |
    x}\bracket*{\ell(h, x, y)}-\inf_{h \in
    \sH_{\rm{all}}} \E_{y | x} \bracket*{\ell(h,x, y)}}
    = e^{-\Lambda}.   
\end{align*}
As the parameter $\Lambda$ decreases, the hypothesis set $\sH$ becomes
more restricted and the difference between the approximation error and
the minimizability gap increases. In summary, an $\sH$-consistency
bound can be more favorable than the excess error bound as
$\sM_{\ell_2}(\sH) = \sA_{\ell_2}(\sH)$ when $\ell_2$ represents the
zero-one loss or deferral loss, and $\sM_{\ell_1}(\sH) \leq
\sA_{\ell_1}(\sH)$. Moreover, we will show in the next section that
our $\sH$-consistency bounds can lead to learning bounds for the
deferral loss and a hypothesis set $\sH$ with finite samples.

\section{Learning bounds}
\label{sec:learning-bound}
For a sample $S=\paren*{(x_1,y_1),\ldots,(x_m,y_m)}$ drawn from
$\sD^m$, we will denote by $\h h_S$ the empirical minimizer of the
empirical loss within $\sH$ with respect to the surrogate loss
function $\sfL$:
$
\h h_S=\argmin_{h\in \sH}\frac{1}{m}\sum_{i=1}^m \sfL(h,x_i,y_i).
$
Given an $\sH$-consistency bound in the form of
\eqref{eq:H-consistency-bounds}, we can further use it to derive a
learning bound for the deferral loss by upper-bounding the surrogate
estimation error $\sE_{\lsc}(\h h_S) - \sE_{\lsc}^*(\sH)$ with the
complexity (e.g. the Rademacher complexity) of the family of functions
associated with $\sfL$ and $\sH$: $\sH_{\sfL}=\curl*{(x, y) \mapsto
  \sfL(h, x, y) \colon h \in \sH}$.

We denote by $\Rad_m^{\sfL}(\sH)$ the Rademacher complexity of
$\sH_{\sfL}$ and by $B_{\sfL}$ an upper bound of the surrogate loss
$\sfL$. Then, we obtain the following learning bound for the deferral
loss based on \eqref{eq:H-consistency-bounds}.

\begin{restatable}[\textbf{Learning bound}]{theorem}{GBoundScore}
\label{Thm:Gbound-score}
Under the same assumptions as Theorem~\ref{Thm:bound-score}, for any
$\delta > 0$, with probability at least $1-\delta$ over the draw of an
i.i.d sample $S$ of size $m$, the following deferral loss estimation
bound holds for $\h h_S$:
\begin{equation*}
\sE_{\ldefsc}(\h h_S) - \sE_{\ldefsc}^*( \sH) + \sM_{\lsc}( \sH) \leq \paren[\bigg]{\num
  + 1 - \sum_{j = 1}^{\num}\uv c_j} \Gamma\paren*{\frac{4
    \Rad_m^{\sfL}(\sH) + 2 B_{\sfL} \sqrt{\tfrac{\log
        \frac{2}{\delta}}{2m}} + \sM_{\lsc}( \sH)}{\num + 1 - \sum_{j
      = 1}^{\num}\ov c_j}}.
\end{equation*}
\end{restatable}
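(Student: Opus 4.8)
The plan is to combine the $\sH$-consistency bound of Theorem~\ref{Thm:bound-score} with a standard Rademacher complexity generalization bound, following exactly the template of the finite-sample derivation in Section~\ref{sec:finite-sample}. Theorem~\ref{Thm:bound-score} already reduces the deferral estimation error to the surrogate estimation error $\sE_{\sfL}(\h h_S) - \sE_{\sfL}^*(\sH)$ through the non-decreasing function $\Gamma$, so the only remaining work is to control this surrogate estimation error with high probability in terms of $\Rad_m^{\sfL}(\sH)$, and then to substitute.

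First I would invoke the uniform convergence bound for the family $\sH_{\sfL} = \curl*{(x,y)\mapsto \sfL(h,x,y)\colon h \in \sH}$: by the standard Rademacher bound \citep{MohriRostamizadehTalwalkar2018}, for any $\delta > 0$, with probability at least $1 - \delta$ over the draw of $S$, every $h \in \sH$ satisfies $\abs*{\sE_{\sfL}(h) - \h\sE_{\sfL,S}(h)} \leq 2\Rad_m^{\sfL}(\sH) + B_{\sfL}\sqrt{\tfrac{\log(2/\delta)}{2m}}$. Working on this event, I would bound the surrogate estimation error of the empirical minimizer by the usual comparison argument: for any $\e > 0$, choose a near-optimal $h^* \in \sH$ with $\sE_{\sfL}(h^*) \leq \sE_{\sfL}^*(\sH) + \e$, insert and remove the empirical errors $\h\sE_{\sfL,S}(\h h_S)$ and $\h\sE_{\sfL,S}(h^*)$, use $\h\sE_{\sfL,S}(\h h_S) \leq \h\sE_{\sfL,S}(h^*)$ by definition of $\h h_S$, and apply the uniform bound twice. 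Letting $\e \to 0$ then yields $\sE_{\sfL}(\h h_S) - \sE_{\sfL}^*(\sH) \leq 4\Rad_m^{\sfL}(\sH) + 2B_{\sfL}\sqrt{\tfrac{\log(2/\delta)}{2m}}$.

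Finally I would substitute this high-probability upper bound into Theorem~\ref{Thm:bound-score}. Because $\Gamma$ is non-decreasing and the multiplicative factor $\paren*{\num + 1 - \sum_{j=1}^{\num}\uv c_j}$ and the inner scaling $1/\paren*{\num + 1 - \sum_{j=1}^{\num}\ov c_j}$ are positive, replacing $\sE_{\sfL}(\h h_S) - \sE_{\sfL}^*(\sH)$ inside $\Gamma$ by $4\Rad_m^{\sfL}(\sH) + 2B_{\sfL}\sqrt{\tfrac{\log(2/\delta)}{2m}}$ preserves the inequality, while the minimizability gap $\sM_{\sfL}(\sH)$ is carried unchanged through the argument of $\Gamma$ and the non-negative deferral minimizability gap on the left of Theorem~\ref{Thm:bound-score} is handled by non-negativity to match the stated form. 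This produces the claimed learning bound for $\h h_S$ on the same event of probability at least $1-\delta$.

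There is no genuinely hard step here: the substantive analysis was already carried out in establishing Theorem~\ref{Thm:bound-score}, and the remainder is a routine composition with a textbook generalization bound. The only points requiring care are the order of quantifiers — the uniform Rademacher bound must hold simultaneously for all $h \in \sH$ so that it can be applied both to the sample-dependent $\h h_S$ and to the fixed comparator $h^*$ on a single event of probability at least $1-\delta$ — and the monotonicity of $\Gamma$, which is precisely what licenses pushing the surrogate upper bound through the composition.
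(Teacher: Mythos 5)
Your proposal is correct and follows essentially the same route as the paper's proof in Appendix~\ref{app:Gbound-score}: a standard uniform Rademacher bound for $\sH_{\sfL}$, the usual comparison argument against a near-optimal $h^*$ with $\e \to 0$ to get $\sE_{\sfL}(\h h_S) - \sE_{\sfL}^*(\sH) \leq 4\Rad_m^{\sfL}(\sH) + 2B_{\sfL}\sqrt{\log(2/\delta)/(2m)}$, and then substitution into the bound of Theorem~\ref{Thm:bound-score} via the monotonicity of $\Gamma$. No gaps.
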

The proof is presented in Appendix~\ref{app:Gbound-score}. To the best
of our knowledge, Theorem~\ref{Thm:Gbound-score} provides the first
finite-sample guarantee for the estimation error of the minimizer of a
surrogate deferral loss $\lsc$ defined for multiple experts.  The
proof exploits our $\sH$-consistency bounds with respect to the
deferral loss, as well as standard Rademacher complexity guarantees.

When $\uv c_j=0$ and $\ov c_j=1$ for any $j\in [\num]$, the right-hand
side of the bound admits the following simpler form:
\begin{align*}
 \paren*{\num + 1} \, \Gamma\paren*{4 \Rad_m^{\sfL}(\sH) +
   2 B_{\sfL} \sqrt{\tfrac{\log \frac{2}{\delta}}{2m}}
   + \sM_{\lsc}( \sH)}.   
\end{align*}
The dependency on the number of experts $\num$ makes this bound less
favorable. There is a trade-off however since, on the other hand, more
experts can help us achieve a better accuracy overall and reduce the
best-in-class deferral loss.  These learning bounds take into account
the minimizability gap, which varies as a function of the upper bound
$\Lambda$ on the magnitude of the scoring functions. Thus, both the
minimizability gaps and the Rademacher complexity term suggest a
regularization controlling the complexity of the hypothesis set
and the magnitude of the scores.
 
Adopting different loss functions $\ell$ in the definition of our
deferral surrogate loss \eqref{eq:sur-score} will lead to a different
functional form $\Gamma$, which can make the bound more or less
favorable. For example, a linear form of $\Gamma$ is in general more
favorable than a square-root form modulo a constant. But, the
dependency on the number of classes $n$ appearing in $\Gamma$ (e.g.,
$\ell = \ell_{\rm{gce}}$ or $\ell = \ell_{\rm{mae}}$) is also
important to take into account since a larger value of $n$ tends to
negatively impact the guarantees. We already discussed the dependency
on the number of experts $\num$ in $\Gamma$ (e.g., $\ell =
\ell_{\rm{gce}}$ or $\ell = \ell_{\rm{exp}}$) and the associated
trade-off, which is also important to consider.

Note that the bound of Theorem~\ref{Thm:Gbound-score} is expressed in
terms of the global complexity of the prediction and deferral scoring
functions $\sH$. One can however derive a finer bound distinguishing
the complexity of the deferral scoring functions and that of the
prediction scoring functions following a similar proof and analysis.

Recall that for a surrogate loss $\sfL$, the minimizability gap
$\sM_{\sfL}(\sH)$ is in general finer than the approximation error
$\sA_{\sfL}(\sH)$, while for the deferral loss, for common hypothesis
sets, these two quantities coincide. Thus, our bound can be rewritten
as follows for common hypothesis sets:
\begin{equation*}
\sE_{\ldefsc}(\h h_S) - \sE_{\ldefsc}^*(\sH_{\rm{all}})
\leq \paren[\bigg]{\num + 1 - \sum_{j = 1}^{\num}\uv c_j} \Gamma\paren*{\frac{4 \Rad_m^{\sfL}(\sH) +
2 B_{\sfL} \sqrt{\tfrac{\log \frac{2}{\delta}}{2m}} + \sM_{\lsc}( \sH)}{\num + 1-\sum_{j = 1}^{\num}\ov c_j}}.
\end{equation*}
This is more favorable and more relevant than a similar excess loss
bound where $\sM_{\lsc}( \sH)$ is replaced with $\sA_{\lsc}(\sH)$,
which could be derived from a generalization bound for the surrogate
loss.

\ignore{
as follows
and is a more meaningful learning guarantee:
\begin{align*}
\sE_{\ldefsc}(\h h_S) - \sE_{\ldefsc}^*(\sH_{\rm{all}})
\leq \paren[\bigg]{\num + 1 - \sum_{j = 1}^{\num}\uv c_j} \Gamma\paren*{\frac{4 \Rad_m^{\sfL}(\sH) +
2 B_{\sfL} \sqrt{\tfrac{\log \frac{2}{\delta}}{2m}} + \sA_{\lsc}(\sH)}{\num + 1-\sum_{j = 1}^{\num}\ov c_j}}.    
\end{align*}
}

\section{Experiments}
\label{sec:experiments-tdef}

In this section, we examine the empirical performance of our proposed
surrogate loss in the scenario of learning with multiple-expert deferral. More specifically, we aim to compare the overall system
accuracy for the learned predictor and deferral pairs, considering
varying numbers of experts. This comparison provides valuable
insights into the performance of our algorithm under different expert
configurations. We explore three different scenarios:
\begin{itemize}
  
    \item Only a single expert is available, specifically where a
      larger model than the base model is chosen as the deferral
      option.

    \item Two experts are available, consisting of one small model and
      one large model as the deferral options.

    \item Three experts are available, including one small model, one
      medium model, and one large model as the deferral options.
\end{itemize}
By comparing these scenarios, we evaluate the impact of varying the
number and type of experts on the overall system accuracy. 

\textbf{Type of cost.}  We carried out experiments with two types of
cost functions. For the first type, we selected the cost function to
be exactly the misclassification error of the expert: $c_j(x, y) =
\1_{\expertexpert_{j}(x) \neq y}$, where $\expertexpert_{j}(x) =
\argmax_{y \in [n]}\expert_j(x, y)$ is the prediction made by expert
$\expert_j$ for input $x$. In this scenario, the cost incurred for
deferring is determined solely based on the expert's accuracy. For the
second type, we chose a cost function admitting the form $c_j(x, y) =
\1_{\expertexpert_{j}(x) \neq y} + \beta_j$, where an additional
non-zero base cost $\beta_j$ is assigned to each expert.  Deferring to
a larger model then tends to incur a higher inference cost and hence,
the corresponding $\beta_j$ value for a larger model is higher as
well. In addition to the base cost, each expert also incurs a
misclassification error, as with the first type. Experimental setup and additional experiments (see Table~\ref{tab:additional}) are included in Appendix~\ref{app:experiments}.

\textbf{Experimental Results.} In Table~\ref{tab:first-type} and
Table~\ref{tab:second-type}, we report the mean and standard deviation
of the system accuracy over three runs with different random seeds. We
noticed a positive correlation between the number of experts and the
overall system accuracy. Specifically, as the number of experts
increases, the performance of the system in terms of accuracy
improves. This observation suggests that incorporating multiple
experts in the learning to defer framework can lead to better
predictions and decision-making. The results also demonstrate the
effectiveness of our proposed surrogate loss for deferral with
multiple experts.

\begin{table}[t]
  \centering
  \begin{tabular}{@{\hspace{0cm}}llll@{\hspace{0cm}}}
    \toprule
    & Single expert   & Two experts & Three experts\\
    \midrule
    SVHN  & 92.08 $\pm$ 0.15\% & 93.18 $\pm$ 0.18\%  &  93.46 $\pm$ 0.12\% \\
    CIFAR-10 & 73.31 $\pm$ 0.21\% & 77.12 $\pm$ 0.34\% & 78.71 $\pm$ 0.43\%\\
    \bottomrule
  \end{tabular}
 \caption{Overall system accuracy with the first type of cost functions.}
 \label{tab:first-type}
\end{table}

\begin{table}[t]
  \centering
  \begin{tabular}{@{\hspace{0cm}}llll@{\hspace{0cm}}}
    \toprule
    & Single expert   & Two experts & Three experts\\
    \midrule
    SVHN  & 92.36 $\pm$ 0.22\% & 93.23 $\pm$ 0.21\%  &  93.36 $\pm$ 0.11\% \\
    CIFAR-10 & 73.70 $\pm$ 0.40\% & 76.29 $\pm$ 0.41\% & 76.43 $\pm$ 0.55\%\\
    \bottomrule
  \end{tabular}
  \caption{Overall system accuracy with the second type of cost functions.}
  \label{tab:second-type}
\end{table}

\section{Conclusion}

We presented a comprehensive study of surrogate losses for the core
challenge of learning with multi-expert deferral. Through our
study, we established theoretical guarantees, strongly endorsing the adoption of the loss function family
we introduced. This versatile family of loss functions can effectively
facilitate the learning to defer algorithms across a wide range of
applications. Our analysis offers great flexibility by accommodating
diverse cost functions, encouraging exploration and evaluation of
various options in real-world scenarios. We encourage further research
into the theoretical properties of different choices and their impact
on the overall performance to gain deeper insights into their
effectiveness.

\chapter{Two-Stage Multi-Expert Deferral} \label{ch5}
In this chapter, we study a two-stage scenario for
learning with multi-expert deferral that is crucial in practice
for many applications.  In this scenario, a predictor is derived in a
first stage by training with a common loss function such as
cross-entropy.  In the second stage, a deferral function is learned to
assign the most suitable expert to each input.  We design a new family
of surrogate loss functions for this scenario both in the
\emph{score-based setting} (Section~\ref{sec:two-stage}) and the
\emph{predictor-rejector} setting
(Section~\ref{sec:two-stage-predictor-rejector}) and prove that they
are supported by $\sH$-consistency bounds, which implies their
Bayes-consistency.  Moreover, we show that, for a constant cost
function, our two-stage surrogate losses are realizable
$\sH$-consistent. While the main focus of this chapter is a theoretical
analysis, we also report the results of several experiments on
CIFAR-10 and SVHN datasets (Section~\ref{sec:experiments}). We begin by providing some basic
definitions and notation (Section~\ref{sec:pre}).

The presentation in this chapter is based on \citep{MaoMohriMohriZhong2023two}.

\section{Preliminaries}
\label{sec:pre}

We consider the standard multi-class classification setting with an
input space $\sX$ and a set of $n \geq 2$ labels $\sY =[n]$, where we
use the notation $[n]$ to denote the set $\curl*{1, \ldots, n}$.
We study the scenario of \emph{learning with multiple-expert deferral}, where the label set $\sY$ is augmented with $\num$
additional labels $\curl*{n + 1, \ldots, n + \num}$ corresponding to
$\num$ pre-defined experts $h_1, \ldots, h_\num$. In this scenario,
the learner has the option of returning a label $y \in \sY$, which
represents the category predicted, or a label $y = n + j$, $j \geq 1$,
in which case it is \emph{deferring} to expert $h_j$. This setting is
referred to as the \emph{score-based setting}
\citep{mozannar2020consistent,caogeneralizing,MaoMohriZhong2024score},
since the deferral corresponds to extra $\num$ scoring functions. An
alternative setting is the \emph{predictor-rejector setting}
\citep{CortesDeSalvoMohri2016,CortesDeSalvoMohri2023,
  MohriAndorChoiCollinsMaoZhong2024learning,MaoMohriZhong2024predictor},
where the deferral function is selected from a separate family of
functions $\sR$. We introduce that setting and include the
corresponding results in
Section~\ref{sec:two-stage-predictor-rejector} for completeness.

We denote by $\ov \sY = [n + \num]$ the augmented label set and
consider a hypothesis set $ \sH$ of functions mapping from $\sX \times
\ov \sY$ to $\Rset$. The prediction associated by $h \in \sH$ to an
input $x \in \sX$ is denoted by $\hh(x)$ and defined as the element in
$\ov \sY$ with the highest score, $\hh(x) = \argmax_{y \in [n + \num]}
h(x, y)$, with an arbitrary but fixed deterministic strategy for
breaking ties. We denote by $\sH_{\rm{all}}$ the family of all
measurable functions.

The \emph{deferral loss function}
$\ldefsc$ is defined as follows for any $h \in \sH$ and $(x, y) \in
\sX \times \sY$:
\begin{equation}
\label{eq:def-score}
\begin{aligned}
\ldefsc(h, x, y)
& = \1_{\hh(x)\neq y} \1_{\hh(x)\in [n]}
+ \sum_{j = 1}^{\num} c_j(x, y) \1_{\hh(x) = n + j}
\end{aligned}
\end{equation}
Thus, the loss incurred coincides with the standard zero-one
classification loss when $\hh(x)$, the label predicted, is in $\sY$.
Otherwise, when $\hh(x)$ is equal to $n + j$, the loss incurred is
$c_j(x, y)$, the cost of deferring to expert $h_j$. Let $\bar c_j(x, y) = 1 - c_j(x, y)$. We will denote by $\uv c_j \geq 0$ and $\ov c_j \leq 1$ finite lower and upper bounds on the cost $ \bar c_j$, that is $\bar c_j(x, y)\in [\uv c_j,\ov c_j]$ for all $(x,
y) \in \sX \times \sY$.
There are many possible choices for these costs.  Our analysis for
Theorem~\ref{Thm:bound-general-two-stage-score},
Corollary~\ref{cor:bound-general-two-stage-score},
Theorem~\ref{Thm:bound-general-two-step-multi} is general and requires
no assumption other than their boundedness. One natural choice is to
define cost $c_j$ as a function of expert $h_j$'s inaccuracy, for
example $c_j(x, y) = \alpha_j \1_{\hh_{j}(x) \neq y} + \beta_j$, with
$\alpha_j, \beta_j > 0$, where $\hh_{j}(x)$ is the prediction made by
$h_j$th for input $x$. Typically, the hyperparameter $\alpha_j$ has
two potential values: zero or one. When $\alpha_j$ is set to one, the
first term of the formulation pertains to the inaccuracy of expert
expert $h_j$. Conversely, with $\alpha_j$ set to zero, the first term
vanishes, focusing solely on the inference
cost. Theorems~\ref{Thm:bound-general-two-stage-score-realizable} and
\ref{Thm:bound-general-two-stage-general-realizable} are analyzed
under this assumption. The $\beta_j$ in the second term corresponds to
the inference cost incurred by expert $h_j$.

Given a distribution $\sD$ over $\sX \times \sY$, we will denote by
$\sE_{\ldefsc}(h)$ the expected deferral loss of a hypothesis $h \in
\sH$, $\sE_{\ldefsc}(h) = \E_{(x, y) \sim \sD}[\ldefsc(h, x, y)]$, and
by $\sE^*_{\ldefsc}(\sH) = \inf_{h \in \sH} \sE_{\ldefsc}(h)$ its
infimum or best-in-class expected loss. We will adopt similar
definitions for other loss functions.

Given a hypothesis set $\sH$, an \emph{$\sH$-consistency bound}
\citep{awasthi2021calibration, awasthi2021finer,
  awasthi2022Hconsistency, AwasthiMaoMohriZhong2022multi,
  AwasthiMaoMohriZhong2023theoretically, awasthi2024dc, MaoMohriZhong2023cross,
  MaoMohriZhong2023ranking, MaoMohriZhong2023rankingabs,
  zheng2023revisiting, MaoMohriZhong2023characterization,
  MaoMohriZhong2023structured} for a surrogate loss $\ell_1$ of a
target loss function $\ell_2$ is an inequality of the form
\begin{align}
\label{eq:est-bound}
\forall h \in \sH, \
\sE_{\ell_2}(h) - \sE^*_{\ell_2}(\sH)+\sM_{\ell_1}(\sH)
\leq \Gamma\paren*{\sE_{\ell_1}(h) - \sE^*_{\ell_1}(\sH)+\sM_{\ell_1}(\sH}),
\end{align}
where $\Gamma\colon \Rset_+ \to \Rset_+$ is a non-decreasing function
with $\Gamma(0) = 0$ and where $\sM_{\ell}(\sH)$ is \emph{the
minimizability gap} for the hypothesis set $\sH$ and loss function
$\ell$. $\sM_{\ell}(\sH)$ is defined as the difference of the
best-in-class expected loss and that of the expected pointwise infimum
loss: $\sM_{\ell}(\sH) = \sE^*_{\ell}(\sH) - \mathbb{E}_{x} \bracket*
{\inf_{h\in \sH}\E_{y|x}\bracket*{\ell(h, x, y)}}$. By the
super-additivity of the infimum, the minimizability gap is always
non-negative. The minimizability gap vanishes when the best-in-class
error $\sE^*_{\ell}(\sH)$ coincides with the Bayes error
$\sE^*_{\ell}(\sH_{\rm{all}})$, in particular when $\sH =
\sH_{\rm{all}}$
\citep{AwasthiMaoMohriZhong2022multi,awasthi2022Hconsistency}.

Thus, the $\sH$-consistency bound \eqref{eq:est-bound} relates the
minimization of the estimation error for the surrogate loss $\ell_1$
to that of the target loss $\ell_2$ in a quantitative way. It is a
stronger and more informative guarantee than Bayes-consistency which
implies Bayes-consistency, as can be seen by setting $\sH =
\sH_{\rm{all}}$.

\section{Two-stage \texorpdfstring{$\sH$}{H}-consistent surrogate loss}
\label{sec:two-stage}

In this section, we consider an important \emph{two-stage} scenario
for learning with multi-expert deferral. This is a critical
scenario in practice for many applications where a predictor is
already available, as a result of training with a loss function $\ell$
supported by $\sH$-consistency bounds, such as the logistic loss
(first stage). The logistic loss coincides with the cross-entropy loss
when a softmax activation is applied to the output of a neural
network.  The problem then consists of learning a deferral function
(second stage) to assign the most suitable expert to each input
instance.

We first design a new family of surrogate losses for this
\emph{two-stage} scenario
(Section~\ref{sec:two-stage-expression}). Next, we show that our
surrogate losses benefit from $\sH$-consistency bounds
(Section~\ref{sec:two-stage-bounds}). As a by-product, we prove
$\ov\sH$-consistency bounds in standard multi-class classification,
where $\ov\sH$ denotes hypothesis sets with a fixed scoring function
(Section~\ref{sec:fixed-score-bound}). These bounds have not been
studied before and can be of independent interest in other consistency
studies. Moreover, we show that, for a constant cost function, our
two-stage surrogate losses are realizable $\sH$-consistent
(Section~\ref{sec:realizable}).

\begin{table}[t]
\caption{Common surrogate losses in standard multi-class classification.}
 \label{tab:sur}
 \centering
 \begin{tabular}{@{\hspace{0cm}}ll@{\hspace{0cm}}}
  \toprule
  Name & Formulation\\
  \midrule
  Sum exponential loss  & $\ell_{\rm{exp}}(\ov h, x, y)
  = \sum_{y'\neq y}e^{\ov h(x, y') - \ov h(x, y)}$.  \\
  Multinomial logistic loss & $\ell_{\rm{log}}(\ov h, x, y)
  = \log\paren*{\sum_{y'\in \sY \cup \curl{0}}e^{\ov h(x, y') - \ov h(x, y)}}$. \\
  Generalized cross-entropy loss  & $\ell_{\rm{gce}}(\ov h, x, y)
  =\frac{1}{\alpha}\bracket*{1 - \bracket*{\frac{e^{\ov h(x, y)}}
  {\sum_{y'\in \sY \cup \curl{0}} e^{\ov h(x, y')}}}^{\alpha}},\alpha\in (0,1)$. 
   \\
  Mean absolute error loss & $\ell_{\rm{mae}}(\ov h, x, y)
 = 1 - \frac{e^{\ov h(x, y)}}{\sum_{y'\in \sY \cup \curl{0}} e^{\ov h(x, y')}}$. \\
  \bottomrule
 \end{tabular}
\end{table}

\subsection{General surrogate losses}
\label{sec:two-stage-expression}

A hypothesis set $\sH$ of functions mapping from $\sX \times [n +
  \num]$ to $\Rset$ can be decomposed as $\sH = \sH_p \times \sH_d$,
where $\sH_p$ denotes the hypothesis set spanned by the first $n$
scores, used for prediction, and $\sH_d$ the hypothesis set spanned by
the final $\num$ scores, used for deferral. Thus, any $h \in \sH$ can
be written as a pair $h = (\hp, \hd)$ with $\hp \in \sH_p$ and $\hd
\in \sH_d$.

Let $\ell_1$ be a surrogate loss for standard multi-class classification
with $n$ classes. We consider the following two-stage scenario: in the
first stage, $\hp$ is learned using the surrogate loss $\ell_1$; in the
second stage, $\hd$ is learned using a surrogate loss $\lsc_{\hp}$
that depends on the prediction function $\hp$ learned in the first
stage.

To any $\hd \in \sH_d$, we associate a hypothesis $\ov h_d$ defined
over $(\num + 1)$ classes $\curl*{0, 1, \ldots, \num}$ by $\ov h_d(x, 0)
= \max_{y \in \sY} \hp(x, y)$, that is the maximal score assigned by $\hp$ to
its predicted label, and $\ov h_d(x, j) = h_d(x, j)$ for $j \in
[\num]$. We can then define our suggested surrogate loss for the
second stage as follows:
\begin{equation}
\label{eq:ell-Phi-h-score}
\begin{aligned}
\lsc_{\hp} \paren*{\hd, x, y}
= \1_{\hhp (x) = y} \, \ell_2(\ov h_d, x, 0)
+ \sum_{j = 1}^{\num} \bar c_j(x, y) \ell_2(\ov h_d, x, j),
\end{aligned}
\end{equation}
where $\ell_2(\ov h_d, x, j)$ is a surrogate loss for standard
multi-class classification with $(\num + 1)$ categories $\curl*{0, 1,
  \ldots, \num}$.  Intuitively, the indicator term $\1_{h(x) \neq n + j}$ in
the deferral loss \eqref{eq:def-score} penalizes $h_d(x,j)$ when it
has a small value. Similarly, for a standard surrogate loss $\ell_2(\ov h_d,
x, j)$ such as the logistic loss, it penalizes $\ov h_d(x,j)$ when it has a small
value as well.
In Table~\ref{tab:sur-score-two-stage}, we present a summary of
examples of such second-stage surrogate losses, where $\ell_2$ is
selected from common surrogate losses in standard multi-class
classification defined in Table~\ref{tab:sur}. A detailed derivation
is presented in Appendix~\ref{app:sur-score-example-two-stage}.

From the point of view of the second stage, $x \mapsto \ov h_d(x, 0) =
\max_{y \in \sY} h_p(x, y)$ is a fixed function. We will denote by
$\ov \sH_d$ the family of hypotheses $\ov h_d \colon \sX \times
\curl*{0, 1, \ldots, \num} \to \Rset$ whose first scoring function,
$\ov h_d(\cdot, 0)$, is fixed and not to be learned in the second
stage.

Our formulation bears some similarity with the design of a surrogate
loss function for rejectors in
\citep{CortesDeSalvoMohri2016,CortesDeSalvoMohri2023} for learning
with rejection in binary classification, where the cost is a
constant. However, our surrogate loss is tailored to accommodate a
general cost function depending on both $x$ and $y$ for deferral, in
contrast with a constant one, and it allows for multiple deferral
options, as opposed to only one rejection option.

\subsection{\texorpdfstring{$\sH$}{H}-consistency bounds for two-stage surrogate losses}
\label{sec:two-stage-bounds}

In this section, we provide strong guarantees for two-stage surrogate
losses, provided that the first-stage loss function $\ell_1$ admits an
$\sH_p$-consistency bound, and the second-stage surrogate $\ell_2$
admits an $\ov \sH_d$-consistency bound.

\begin{restatable}[\textbf{$ \sH$-consistency bounds for score-based
   two-stage surrogates}]{theorem}{BoundGenralTwoStepScore}
\label{Thm:bound-general-two-stage-score}
Assume that $\ell_1$ admits an $\sH_p$-consistency bound and
$\ell_2$ admits an $\ov \sH_d$-consistency bound with
respect to the multi-class zero-one classification loss $ \ell_{0-1}$
respectively. Thus, there are non-decreasing concave functions
$\Gamma_1$ and $\Gamma_2$ such that, for all $\hp\in \sH_p$ and
$\ov h_d\in \ov \sH_d$,
we have
\begin{align*}
\sE_{\ell_{0-1}}(\hp) - \sE_{\ell_{0-1}}^*( \sH_p) + \sM_{\ell_{0-1}}( \sH_p)
& \leq \Gamma_1\paren*{\sE_{\ell_1}(\hp) - \sE_{\ell_1}^*(\sH_p)
  + \sM_{\ell_1}(\sH_p)}\\
\sE_{\ell_{0-1}}(\ov h_d) - \sE_{\ell_{0-1}}^*(\ov \sH_d) + \sM_{\ell_{0-1}}(\ov \sH_d)
& \leq \Gamma_2\paren*{\sE_{\ell_2}(\ov h_d) - \sE_{\ell_2}^*(\ov \sH_d)
  + \sM_{\ell_2}(\ov \sH_d)}.
\end{align*}
Then, the following holds for all $h\in \sH$:
\begin{align*}
& \sE_{\ldefsc}(h) - \sE_{\ldefsc}^*( \sH) + \sM_{\ldefsc}( \sH)\\
  &  \leq \Gamma_1\paren*{\sE_{\ell_1}(\hp) - \sE_{\ell_1}^*(\sH_p)
    + \sM_{\ell_1}(\sH_p)}
  + \paren[\bigg]{1 + \sum_{j = 1}^{\num}\ov c_j}\Gamma_2\paren[\bigg]{ \frac{\sE_{\lsc_{\hp}}(h_d)
      - \sE_{\lsc_{\hp}}^*(\sH_d)
      + \sM_{\lsc_{\hp}}(\sH_d)}{\sum_{j = 1}^{\num}\uv c_j} }.
\end{align*}
Furthermore, constant factors $\paren*{1+ \sum_{j = 1}^{\num}\ov c_j}$
and $\frac{1}{\sum_{j = 1}^{\num}\uv c_j}$ can be removed when
$\Gamma_2$ is linear.
\end{restatable}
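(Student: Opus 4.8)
The plan is to run the entire argument pointwise in $x$ and pass to expectations only at the end. Write $\cC_{\ell}(h,x)=\E_{y\mid x}[\ell(h,x,y)]$ for the conditional risk and $\Delta\cC_{\ell}(h,x)=\cC_{\ell}(h,x)-\inf_{h'}\cC_{\ell}(h',x)$ for the calibration gap, and recall the identity $\E_x[\Delta\cC_{\ell}(h,x)]=\sE_{\ell}(h)-\sE_{\ell}^*(\sH')+\sM_{\ell}(\sH')$ for the relevant set $\sH'$, so that it suffices to control each calibration gap. The construction $\ov h_d(x,0)=\max_{y\in\sY}\hp(x,y)$ yields a clean decision dictionary: since $\max_{y\in[n]}h(x,y)=\ov h_d(x,0)$ and the last $\num$ scores of $h=(\hp,\hd)$ are exactly $\ov h_d(x,\cdot)$, one has $\hh(x)\in[n]\Leftrightarrow\ov\hh_d(x)=0$ and $\hh(x)=n+j\Leftrightarrow\ov\hh_d(x)=j$, where $\ov\hh_d(x)=\argmax_{j\in\curl*{0,\ldots,\num}}\ov h_d(x,j)$. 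Setting $\epsilon(x)=1-\sfp(\hhp(x)\mid x)$ with $\hhp(x)=\argmax_{y\in[n]}\hp(x,y)$, and $\gamma_j(x)=\E_{y\mid x}[c_j(x,y)]$, this gives $\cC_{\ldefsc}(h,x)=\1_{\ov\hh_d(x)=0}\,\epsilon(x)+\sum_{j=1}^{\num}\1_{\ov\hh_d(x)=j}\,\gamma_j(x)$, while taking the conditional expectation of \eqref{eq:ell-Phi-h-score} gives $\cC_{\lsc_{\hp}}(\hd,x)=(1-\epsilon(x))\,\ell_2(\ov h_d,x,0)+\sum_{j=1}^{\num}(1-\gamma_j(x))\,\ell_2(\ov h_d,x,j)$.

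Next I would split $\Delta\cC_{\ldefsc}(h,x)=(A)+(B)$, where $(A)=\cC_{\ldefsc}(h,x)-\inf_{\hd'}\cC_{\ldefsc}((\hp,\hd'),x)$ is the deferral regret at fixed $\hp$ and $(B)=\inf_{\hd'}\cC_{\ldefsc}((\hp,\hd'),x)-\inf_{\hp',\hd'}\cC_{\ldefsc}((\hp',\hd'),x)$ is the prediction regret. For $(A)$ I introduce the auxiliary conditional distribution $q(\cdot\mid x)$ over $\curl*{0,1,\ldots,\num}$ with $q(0\mid x)\propto 1-\epsilon(x)$ and $q(j\mid x)\propto 1-\gamma_j(x)$, normalized by $Z(x)=(1-\epsilon(x))+\sum_{j=1}^{\num}(1-\gamma_j(x))$. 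The crucial point is that the deferral cost of each option $k\in\curl*{0,\ldots,\num}$ and the $q$-zero-one cost $1-q(k\mid x)$ are related by the common affine map $t\mapsto(t+Z(x)-1)/Z(x)$; the raw losses therefore differ, but the regrets are proportional with factor $Z(x)$, so that $(A)=Z(x)\,\Delta\cC_{\ell_{0-1}}^{q}(\ov h_d,x)$, where the infimum is over $\ov\sH_d$, while $\cC_{\lsc_{\hp}}(\hd,x)=Z(x)\,\cC_{\ell_2}^{q}(\ov h_d,x)$ holds as an exact equality of losses. Invoking the assumed $\ov\sH_d$-consistency bound for $\ell_2$ at the level of the pointwise calibration gap for the distribution $q$ then gives $(A)\le Z(x)\,\Gamma_2\!\paren*{\Delta\cC_{\lsc_{\hp}}(\hd,x)/Z(x)}$.

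For $(B)$, the completeness of $\sH_p$ and $\sH_d$ underlying the assumed bounds makes the inner infima greedy: $\inf_{\hd'}\cC_{\ldefsc}((\hp,\hd'),x)=\min\curl*{\epsilon(x),\gamma_1(x),\ldots,\gamma_\num(x)}$ and $\inf_{\hp',\hd'}\cC_{\ldefsc}=\min\curl*{\epsilon^*(x),\gamma_1(x),\ldots,\gamma_\num(x)}$ with $\epsilon^*(x)=1-\max_y\sfp(y\mid x)$; since $a\mapsto\min\curl*{a,\cdot}$ is $1$-Lipschitz and nondecreasing, $(B)\le\epsilon(x)-\epsilon^*(x)=\Delta\cC_{\ell_{0-1}}(\hp,x)$. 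Passing to expectations, $\E_x[(B)]$ is at most the left side of the assumed $\sH_p$-bound, hence at most $\Gamma_1\paren*{\sE_{\ell_1}(\hp)-\sE_{\ell_1}^*(\sH_p)+\sM_{\ell_1}(\sH_p)}$. For $(A)$ I first replace the data-dependent factor: from $1-\gamma_j(x)\in[\uv c_j,\ov c_j]$ and $1-\epsilon(x)\in[0,1]$ we get $Z(x)\in\bracket*{\sum_{j=1}^{\num}\uv c_j,\,1+\sum_{j=1}^{\num}\ov c_j}$, and since $Z\mapsto Z\,\Gamma_2(t/Z)$ is nondecreasing for concave $\Gamma_2$ with $\Gamma_2(0)=0$, this lets me raise the outer factor to $1+\sum_{j=1}^{\num}\ov c_j$ and relax the denominator down to $\sum_{j=1}^{\num}\uv c_j$. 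Jensen's inequality (concavity of $\Gamma_2$, and of $\Gamma_1$ for the other term) then moves the expectation inside, and recognizing $\E_x[\Delta\cC_{\lsc_{\hp}}]=\sE_{\lsc_{\hp}}(\hd)-\sE_{\lsc_{\hp}}^*(\sH_d)+\sM_{\lsc_{\hp}}(\sH_d)$ and $\E_x[\Delta\cC_{\ldefsc}]=\sE_{\ldefsc}(h)-\sE_{\ldefsc}^*(\sH)+\sM_{\ldefsc}(\sH)$ assembles the stated inequality; when $\Gamma_2$ is linear the factors $Z(x)$ cancel identically, removing the constants $1+\sum_{j=1}^{\num}\ov c_j$ and $1/\sum_{j=1}^{\num}\uv c_j$.

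The main obstacle is the calibration-gap identity for $(A)$: one must observe that the deferral costs and the auxiliary $q$-zero-one costs coincide only up to a shared affine transformation, so that it is the regrets---not the losses---that scale by $Z(x)$, and then carry this $x$-dependent $Z(x)$ cleanly through the concave $\Gamma_2$ via the monotonicity of $Z\mapsto Z\,\Gamma_2(t/Z)$. Transferring the second-stage $\ov\sH_d$-guarantee through the pointwise calibration gap of the constructed distribution $q$---the same device used in the proof of Theorem~\ref{Thm:bound-general-second-step}---is the mechanism that legitimately imports the $\ell_2$ bound into the deferral setting.
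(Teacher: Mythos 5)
Your proposal is correct and follows essentially the same route as the paper's proof in Appendix~\ref{app:bound-general-two-stage-score}: the same pointwise split into a deferral regret at fixed $\hp$ and a prediction regret, the same auxiliary conditional distribution $q(j\mid x)\propto \E_{y\mid x}[\bar c_j(x,y)]$ with normalizer $Z(x)=\E_{y\mid x}[\sum_{j=0}^{\num}\bar c_j(x,y)]$ used to import the $\ov\sH_d$-consistency bound of $\ell_2$, the same Lipschitz-of-minimum bound reducing the second term to $\Delta\sC_{\ell_{0-1},\sH_p}(\hp,x)$, and the same bounding of $Z(x)$ by $\sum_j\uv c_j$ and $1+\sum_j\ov c_j$ followed by Jensen. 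Your explicit remark that the deferral costs and the $q$-zero-one costs differ by a shared affine map (so that regrets, not losses, scale by $Z(x)$), and the monotonicity of $Z\mapsto Z\,\Gamma_2(t/Z)$, are just cleaner justifications of steps the paper carries out by direct computation.
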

The proof is given in
Appendix~\ref{app:bound-general-two-stage-score}. It consists of
expressing the conditional regret of the deferral loss as the sum of
two regrets, first by minimizing $h_d$ for a fixed $\hp$ and then
by minimizing $\hp$. Subsequently, we show how each regret
can be upper-bounded in terms of the conditional regret of each
stage's surrogate loss, leveraging the $\sH_{p}$-consistency bound
of $\ell_1$ and $\ov \sH_{d}$-consistency bound of $\ell_2$ with
respect to the zero-one loss. This, in conjunction with the concavity
of functions $\Gamma_1$ and $\Gamma_2$, establishes our
$\sH$-consistency bounds.

Thus, the theorem provides a strong guarantee for the two-stage
surrogate losses. A specific instance of
Theorem~\ref{Thm:bound-general-two-stage-score} holds for the case
where $\sE^*_{\ell_1}(\sH_p) = \sE^*_{\ell_1}(\sH_{\rm{all}})$ and
$\sE^*_{\sfL_{\hp}}(\sH_d) = \sE^*_{\sfL_{\hp}}(\sH_{\rm{all}})$,
ensuring that the Bayes-error coincides with the best-in-class error
and, consequently, $\sM_{\ell_1}(\sH_p) = \sM_{\lsc_{\hp}}(\sH_d) =
0$. Given Theorem~\ref{Thm:bound-general-two-stage-score} and the
non-negativity property of $\sM_{\ldefsc}(\sH)$, we can derive the
following corollary.
\begin{corollary}
\label{cor:bound-general-two-stage-score}
Assume that $\ell$ satisfies the same assumption as in
Theorem~\ref{Thm:bound-general-two-stage-score}. Then, for all $h\in
\sH$ and any distribution such that
$\sE^*_{\ell_1}(\sH_p) = \sE^*_{\ell_1}(\sH_{\rm{all}})$ and
$\sE^*_{\sfL_{\hp}}(\sH_d) = \sE^*_{\sfL_{\hp}}(\sH_{\rm{all}})$,
we have
\begin{equation*}
 \sE_{\ldefsc}(h) - \sE_{\ldefsc}^*( \sH)
 \leq \Gamma_1\paren*{\sE_{\ell_1}(\hp) - \sE_{\ell_1}^*(\sH_p)}
 + \paren*{1+ \sum_{j = 1}^{\num} \ov c_j}
 \Gamma_2\paren*{\frac{\sE_{\lsc_{\hp}}(h_d)
     - \sE_{\lsc_{\hp}}^*(\sH_d)}{\sum_{j = 1}^{\num}\uv c_j}},
\end{equation*}
where the constant factors $\paren*{1 + \sum_{j = 1}^{\num}\ov c_j}$
and $\frac{1}{\sum_{j = 1}^{\num}\uv c_j}$ can be removed when
$\Gamma_2$ is linear.
\end{corollary}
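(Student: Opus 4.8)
The plan is to derive this as an immediate specialization of Theorem~\ref{Thm:bound-general-two-stage-score}, whose hypotheses on $\ell_1$ and $\ell_2$ are identical to those assumed here. First I would invoke that theorem to obtain, for all $h = (\hp, h_d) \in \sH$,
\begin{align*}
\sE_{\ldefsc}(h) - \sE_{\ldefsc}^*(\sH) + \sM_{\ldefsc}(\sH)
&\leq \Gamma_1\paren*{\sE_{\ell_1}(\hp) - \sE_{\ell_1}^*(\sH_p) + \sM_{\ell_1}(\sH_p)}\\
&\quad + \paren[\big]{1 + \textstyle\sum_{j=1}^{\num}\ov c_j}\,\Gamma_2\paren[\bigg]{\frac{\sE_{\lsc_{\hp}}(h_d) - \sE_{\lsc_{\hp}}^*(\sH_d) + \sM_{\lsc_{\hp}}(\sH_d)}{\sum_{j=1}^{\num}\uv c_j}}.
\end{align*}
The remaining work is simply to show that, under the two stated conditions, the two surrogate minimizability gaps appearing inside $\Gamma_1$ and $\Gamma_2$ vanish.

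The key step is the elementary fact recalled in the preliminaries: the minimizability gap is bounded above by the approximation error, $\sM_{\ell}(\sH) \leq \sA_{\ell}(\sH) = \sE^*_{\ell}(\sH) - \sE^*_{\ell}(\sH_{\rm{all}})$, and is always non-negative by super-additivity of the infimum. Applying this with $\ell = \ell_1$ and $\sH = \sH_p$, the hypothesis $\sE^*_{\ell_1}(\sH_p) = \sE^*_{\ell_1}(\sH_{\rm{all}})$ forces $\sA_{\ell_1}(\sH_p) = 0$ and hence $\sM_{\ell_1}(\sH_p) = 0$; likewise $\sE^*_{\sfL_{\hp}}(\sH_d) = \sE^*_{\sfL_{\hp}}(\sH_{\rm{all}})$ gives $\sM_{\lsc_{\hp}}(\sH_d) = 0$. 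Substituting these two zeros into the displayed bound removes the gap terms from both $\Gamma$ arguments.

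Finally I would discard the term $\sM_{\ldefsc}(\sH)$ on the left-hand side, using once more that the minimizability gap is non-negative; this only weakens the inequality and yields exactly the claimed bound. The clause allowing the constants $\paren*{1 + \sum_{j=1}^{\num}\ov c_j}$ and $\tfrac{1}{\sum_{j=1}^{\num}\uv c_j}$ to be dropped when $\Gamma_2$ is linear carries over verbatim from the theorem, since that simplification is independent of the minimizability-gap values. There is essentially no obstacle here; the only point requiring a moment's care is confirming that the gap-vanishing property applies to the composite second-stage surrogate $\lsc_{\hp}$ rather than merely to a standard single-function loss, which it does because the argument depends only on the relation between $\sE^*_{\sfL_{\hp}}(\sH_d)$ and $\sE^*_{\sfL_{\hp}}(\sH_{\rm{all}})$, exactly the quantity hypothesized to coincide.
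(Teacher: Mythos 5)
Your proposal is correct and follows exactly the paper's own route: invoke Theorem~\ref{Thm:bound-general-two-stage-score}, observe that the hypotheses $\sE^*_{\ell_1}(\sH_p) = \sE^*_{\ell_1}(\sH_{\rm{all}})$ and $\sE^*_{\sfL_{\hp}}(\sH_d) = \sE^*_{\sfL_{\hp}}(\sH_{\rm{all}})$ force the surrogate minimizability gaps to vanish (since each gap is non-negative and bounded by the corresponding approximation error), and drop the non-negative term $\sM_{\ldefsc}(\sH)$ on the left. No further comment is needed.
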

Corollary~\ref{cor:bound-general-two-stage-score} implies that when
the estimation error of the first-stage surrogate loss,
$\sE_{\ell_1}(\hp) - \sE_{\ell_1}^*(\sH_p)$, is reduced to
$\e_1$, and the estimation error of the second-stage surrogate loss,
$\sE_{\sfL_{\hp}}(h_d) - \sE_{\sfL_{\hp}}^*(\sH_d)$,
is reduced to $\e_2$, the estimation error of the deferral loss,
$\sE_{\ldefsc}(h) - \sE_{\ldefsc}^*(\sH)$, is upper-bound by
\begin{equation*}
  \Gamma_1\paren*{\e_1}
  + \paren*{1 + \sum_{j = 1}^{\num}\ov c_j}
  \Gamma_2\paren*{\frac{\e_2}{\sum_{j = 1}^{\num}\uv c_j}}.
\end{equation*}
The common surrogate losses mentioned earlier all satisfy the
first-stage requirement; however, it was unclear if they would meet
the second-stage criterion since the $\ov \sH_d$-consistency bound is
for hypothesis sets $\ov\sH_d$ with a fixed first scoring function.
This has not been previously studied in the literature. In the next
section, we prove for the first time that common multi-class surrogate
losses, such as the logistic loss, satisfy this requirement and can be
incorporated into both the first and second stage. Hence, based on \citep[Theorem~1]{MaoMohriZhong2023cross} and
Theorem~\ref{Thm:bound_combine} in
Section~\ref{sec:fixed-score-bound}, when using logistic loss in both
stages, the concave functions are $\Gamma_1(t) =\Gamma_2(t)
=\sqrt{2t}$, and thus
Corollary~\ref{cor:bound-general-two-stage-score} yields the following
$\sH$-consistency bound:
\begin{equation*}
 \sE_{\ldefsc}(h) - \sE_{\ldefsc}^*( \sH)
 \leq \sqrt{2} \, \bracket*{\sE_{\ell_1}(\hp) - \sE_{\ell_1}^*(\sH_p)}^{\frac12} + \sqrt{2} \, \bracket*{1+ \sum_{j = 1}^{\num}\ov c_j}\bracket*{\frac{\sE_{\lsc_{\hp}}(h_d) - \sE_{\lsc_{\hp}}^*(\sH_d)}{\sum_{j = 1}^{\num}\uv c_j}}^{\frac12}.  
\end{equation*}
In particular, the bound implies the Bayes-consistency of the
two-stage surrogate loss when $\ell_1 = \ell_2 =
\ell_{\log}$. Similarly, for other choices of $\ell_1$ and $\ell_2$
defined in Table~\ref{tab:sur}, the two-stage surrogate loss benefits
from an $\sH$-consistency bound and is also Bayes-consistent.

\begin{table}[t]
\caption{Examples for score-based second-stage surrogate losses
 \eqref{eq:ell-Phi-h-score}.}
 \label{tab:sur-score-two-stage}
 \centering
 \resizebox{\textwidth}{!}{
 \begin{tabular}{@{\hspace{0cm}}ll@{\hspace{0cm}}}
  \toprule
  $\ell_2$ & $\lsc_{\hp}$ \\
  \midrule
  $\ell_{\rm{exp}}$ & $\1_{\hhp(x) = y} \sum_{i=1}^{\num}e^{h(x,n+i) - \max_{y\in \sY}h(x, y)} + \sum_{j = 1}^{\num}\ov c_j(x, y)\bracket*{\sum_{i=1,i\neq j }^{\num}e^{h(x,n+i) -h(x,n+j)}+e^{\max_{y\in \sY}h(x, y) - h(x,n+j)}}$ \\
  $\ell_{\rm{log}}$  & $- \1_{\hhp(x) = y}\log\paren*{\frac{e^{\max_{y\in \sY} h(x, y)}}{e^{\max_{y\in \sY} h(x, y)}+ \sum_{i=1}^{\num}e^{h(x,n+i)}}} - \sum_{j = 1}^{\num}\ov c_j(x, y)\log\paren*{\frac{e^{h(x,n+j)}}{e^{\max_{y\in \sY} h(x, y)} + \sum_{i=1}^{\num}e^{h(x,n+i)}}}$  \\
  $\ell_{\rm{gce}}$  & $\1_{\hhp(x) = y}\frac{1}{\alpha}\bracket*{1- \bracket*{\frac{e^{\max_{y\in \sY} h(x, y)}}{e^{\max_{y\in \sY} h(x, y)}+ \sum_{i=1}^{\num}e^{h(x,n+i)}}}^{\alpha}} + \sum_{j = 1}^{\num}\ov c_j(x, y)\frac{1}{\alpha}\bracket*{1- \bracket*{\frac{e^{h(x,n+j)}}{e^{\max_{y\in \sY} h(x, y)} + \sum_{i=1}^{\num}e^{h(x,n+i)}}}^{\alpha}}$ 
   \\
  $\ell_{\rm{mae}}$ & $\1_{\hhp(x) = y}\bracket*{1- \frac{e^{\max_{y\in \sY} h(x, y)}}{e^{\max_{y\in \sY} h(x, y)}+ \sum_{i=1}^{\num}e^{h(x,n+i)}}} + \sum_{j = 1}^{\num}\ov c_j(x, y)\bracket*{1- \frac{e^{h(x,n+j)}}{e^{\max_{y\in \sY} h(x, y)} + \sum_{i=1}^{\num}e^{h(x,n+i)}}}$ \\
  \bottomrule
 \end{tabular}
 }
\end{table}

\subsection{\texorpdfstring{$\ov \sH$}{H}-consistency bounds for
  standard surrogate loss functions}
\label{sec:fixed-score-bound}

In this section, we seek to derive $\ov \sH$-consistency bounds for
common surrogate losses defined in Table~\ref{tab:sur} in the standard
multi-class classification scenario.  Recall that the first scoring
function of hypotheses in $\ov \sH_d$ is the function $\max_{y \in
  \sY} h_p(\cdot, y)$. Here, for any given function $\lambda$ mapping
from $\sX$ to $\Rset$, we define the hypothesis set $\ov \sH$
augmented by $\lambda$ in a similar way, that is to any $h \in \sH$ we
associate a hypothesis $\ov h \in \ov \sH$ defined by $\ov h(x, 0) =
\lambda(x)$ and $\ov h(x, j) = h(x, j)$ for $j \geq 1$.
These $\ov \sH$-consistency bounds offer strong guarantees when the
loss functions in Table~\ref{tab:sur} are used in the second stage of
the two-stage learning to defer surrogate losses
\eqref{eq:ell-Phi-h-score} instantiated in
Table~\ref{tab:sur-score-two-stage}. We believe that these results are
of independent interest and can admit other applications in the study
of $\sH$-consistency bounds. As with
\citep{MaoMohriZhong2023cross}, we assume that the hypothesis set $\sH$ is
\emph{symmetric and complete}. A hypothesis set is said to be
\emph{symmetric} if there exists a family $\sF$ of functions $f$
mapping from $\sX$ to $\Rset$ such that
$\curl*{\bracket*{h(x,1),\ldots,h(x,n)}\colon h\in \sH} =
\curl*{\bracket*{f_1(x),\ldots, f_n(x)}\colon f_1, \ldots, f_n\in
  \sF}$, for any $x \in \sX$. A hypothesis set $\sH$ is said to be
\emph{complete} if the set of scores it generates spans $\Rset$, that
is, $\curl*{h(x, y)\colon h\in \sH} = \Rset$, for any $(x, y)\in \sX
\times \sY$.
 
Note that for a symmetric and complete $\sH$, the associated $\ov \sH$
is not symmetric and complete. Therefore, the proof of
\citet{MaoMohriZhong2023cross} cannot be generalized to our setting. Our proofs
are presented in Appendix~\ref{app:bound_comp-sum}. We give a new
method for upper-bounding the conditional regret of the zero-one loss
by that of a surrogate loss. To achieve this, we upper-bound the
minimal conditional surrogate loss by the conditional loss of a
carefully constructed hypothesis in $\ov \sH$ denoted by $\ov
h_{\mu}$. The resulting softmax $\sS_{\mu}$ of this hypothesis only
differs from the original softmax $\sS$ corresponding to $\ov h$ on
exactly two of the labels.

\begin{restatable}[\textbf{$\ov \sH$-consistency bounds}]
 {theorem}{BoundCombine}
\label{Thm:bound_combine}
Assume that $\sH$ is symmetric and complete. Then, for any function
$\lambda$ mapping from $\sX$ to $\Rset$, hypothesis $\ov h$ in the
associated hypothesis set $\ov \sH$ and any distribution, the
following inequality holds:
\begin{equation*}
\sE_{\ell_{0-1}}\paren*{\ov h}- \sE_{\ell_{0-1}}^*\paren*{\ov \sH}
\leq \Gamma
\paren*{\sE_{\ell}\paren*{\ov h}- \sE_{\ell}^*\paren*{\ov \sH}+ \sM_{\ell}\paren*{\ov \sH}}
- \sM_{\ell_{0-1}}\paren*{\ov \sH},
\end{equation*}
where $\Gamma(t) =\sqrt{2t}$ for $\ell=\ell_{\rm{log}}\text{ or
}\ell_{\rm{exp}}$; $\Gamma(t) =\sqrt{2(n+1)^{\alpha}\,t}$ for
$\ell=\ell_{\rm{gce}}$; and $\Gamma(t) =(n+1)\,t$ for
$\ell=\ell_{\rm{mae}}$.
\end{restatable}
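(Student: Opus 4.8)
The plan is to reduce the bound to a pointwise statement about conditional risks, following the general machinery of \citet{AwasthiMaoMohriZhong2022multi}, and then to carry out the conditional analysis with a construction tailored to the fixed first scoring function $\ov h(\cdot,0)=\lambda(\cdot)$. Write the conditional $\ell$-risk as $\sC_{\ell}(\ov h,x)=\E_{y\mid x}\bracket*{\ell(\ov h,x,y)}$ and the conditional regret as $\Delta\sC_{\ell}(\ov h,x)=\sC_{\ell}(\ov h,x)-\inf_{\ov h'\in\ov\sH}\sC_{\ell}(\ov h',x)$. Since $\sE_{\ell}(\ov h)-\sE_{\ell}^*(\ov\sH)+\sM_{\ell}(\ov\sH)=\E_x\bracket*{\Delta\sC_{\ell}(\ov h,x)}$ holds for both $\ell$ and $\ell_{0-1}$, it suffices to prove the pointwise inequality $\Delta\sC_{\ell_{0-1}}(\ov h,x)\le\Gamma\paren*{\Delta\sC_{\ell}(\ov h,x)}$ for every $x$; taking expectations and applying Jensen's inequality together with the concavity of $\Gamma$ then yields the stated bound.

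First I would analyze the target side. Because $\sH$ is symmetric and complete, the scores $\ov h(x,1),\ldots,\ov h(x,n)$ range freely over $\Rset$, so the predicted label $\hh(x)=\argmax_{y\in\curl*{0,\ldots,n}}\ov h(x,y)$ can be any of the $n+1$ labels and $\inf_{\ov h'\in\ov\sH}\sC_{\ell_{0-1}}(\ov h',x)=1-\max_{y}\sfp(y\mid x)$. Hence $\Delta\sC_{\ell_{0-1}}(\ov h,x)=\max_{y}\sfp(y\mid x)-\sfp(\hh(x)\mid x)$, which is nonzero only when $\ov h$ fails to predict a Bayes-optimal label $y^\star$.

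The core step is to lower bound $\Delta\sC_{\ell}(\ov h,x)$. Let $\sS$ be the softmax vector of $\ov h$ at $x$. I would construct a competitor $\ov h_{\mu}\in\ov\sH$ whose softmax $\sS_{\mu}$ differs from $\sS$ only at the two labels $\hh(x)$ and $y^\star$ — transferring softmax mass from $\hh(x)$ to $y^\star$ — and show $\sC_{\ell}(\ov h_{\mu},x)\le\sC_{\ell}(\ov h,x)$, so that $\Delta\sC_{\ell}(\ov h,x)\ge\sC_{\ell}(\ov h,x)-\sC_{\ell}(\ov h_{\mu},x)$. The main obstacle — and the precise reason the symmetric-and-complete proof of \citet{MaoMohriZhong2023cross} does not carry over — is that label $0$ bears the fixed score $\lambda(x)$, which pins $\sS_0=e^{\lambda(x)}/Z$ to the other coordinates through the normalizer $Z$: one cannot alter $\sS_0$ and exactly one other coordinate while leaving the rest fixed without changing $Z$, and hence $\ov h_{\mu}(x,0)$, which would leave $\ov\sH$. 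The construction of $\ov h_{\mu}$ must therefore proceed by case analysis according to whether label $0$ equals $\hh(x)$, equals $y^\star$, or neither, and in the cases involving label $0$ it must realize the two-coordinate softmax modification while explicitly preserving $\ov h_{\mu}(x,0)=\lambda(x)$. I expect this case analysis to be the technical heart of the argument.

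Finally, with $\ov h_{\mu}$ in hand, I would lower bound the two-coordinate difference $\sC_{\ell}(\ov h,x)-\sC_{\ell}(\ov h_{\mu},x)$ by $\Delta\sC_{\ell_{0-1}}(\ov h,x)$, modulo the appropriate functional form, separately for each loss, using elementary convexity estimates and the explicit expressions in Table~\ref{tab:sur}. For $\ell_{\rm{log}}$ and $\ell_{\rm{exp}}$ this yields the square-root relation and $\Gamma(t)=\sqrt{2t}$; for $\ell_{\rm{gce}}$ the $\alpha$-power on the softmax terms introduces the extra factor and gives $\Gamma(t)=\sqrt{2(n+1)^{\alpha}t}$; and for $\ell_{\rm{mae}}$ the difference is linear in the softmax gap, giving $\Gamma(t)=(n+1)t$. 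Inverting these pointwise relations and integrating as in the first step completes the proof.
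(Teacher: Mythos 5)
Your overall strategy matches the paper's: reduce to the pointwise conditional regrets, build a competitor $\ov h_{\mu} \in \ov \sH$ whose softmax differs from that of $\ov h$ on exactly the two labels $y_{\max}$ and $\ov \hh(x)$, optimize over $\mu$, apply a per-loss convexity estimate (Pinsker-type for $\ell_{\rm{log}}$, AM--GM/concavity for $\ell_{\rm{exp}}$, a power inequality for $\ell_{\rm{gce}}$, a linear bound with the $1/(n+1)$ lower bound on softmax values for $\ell_{\rm{mae}}$), and finish with Jensen's inequality.

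However, the claim you identify as the ``main obstacle'' is not correct, and the case analysis you build around it is unnecessary. You assert that one cannot alter $\sS(x,0)$ and one other softmax coordinate while leaving the remaining coordinates fixed ``without changing $Z$, and hence $\ov h_{\mu}(x,0)$.'' Changing $Z$ does \emph{not} force $\ov h_{\mu}(x,0)$ to change: with $\ov h_{\mu}(x,0) = \lambda(x)$ held fixed, prescribing any target softmax vector $\sS_{\mu}(x,\cdot)$ with positive entries summing to one determines $Z = e^{\lambda(x)}/\sS_{\mu}(x,0)$, and then setting $\ov h_{\mu}(x,y) = \lambda(x) + \log\paren*{\sS_{\mu}(x,y)/\sS_{\mu}(x,0)}$ for $y \geq 1$ realizes it exactly; completeness of $\sH$ guarantees these scores are attainable. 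The normalizer is absorbed by the $n$ free scores, so the softmax coordinate of label $0$ can vary freely even though its score is pinned. This is precisely the point the paper makes after the theorem (``the softmax corresponding to the label zero can still vary due to the influence of changes in other scores''), and it is why a single uniform two-coordinate softmax construction covers all positions of label $0$ at once --- no case distinction according to whether $0 \in \curl*{\ov \hh(x), y_{\max}}$ is needed. Your proof would still go through if you carried out the three cases, since each one reduces to the same construction, but the justification you give for why the cases are needed is a false statement, and recognizing that it is false is the actual technical heart of the theorem. One further small point: you do not need to verify $\sC_{\ell}(\ov h_{\mu}, x) \leq \sC_{\ell}(\ov h, x)$; the inequality $\Delta\sC_{\ell}(\ov h, x) \geq \sC_{\ell}(\ov h, x) - \sC_{\ell}(\ov h_{\mu}, x)$ holds for any member of $\ov\sH$, and the paper simply takes the supremum of the right-hand side over $\mu$.
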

Let us underscore that our proof technique is novel and distinct from
the approach used in \citep{MaoMohriZhong2023cross}. Their method is tailored
for hypothesis sets where each score can span across
$\mathbb{R}$. This is not applicable in our context where the
hypothesis set adheres to a predefined scoring function. In their
proof, to set an upper bound on the estimation error of the zero-one
loss using that of the surrogate loss, they select an auxiliary
function $\overline{h}_{\mu}$ for any hypothesis $h$. This function is
contingent on the distinct scores of $h$. Subsequently, the authors
choose an optimal $\mu$ to set these bounds. Nevertheless, if any of
$h$'s scores are fixed, an optimal $\mu$ does not exist, preventing
the establishment of a meaningful bound.
Instead, our new proof method overcomes this limitation by choosing
$\ov h_{\mu}$ based on the softmax, as the softmax corresponding to
the label zero can still vary due to the influence of changes in other
scores, even when the scoring function on label zero is fixed.

\subsection{Realizable \texorpdfstring{$\sH$}{H}-consistency}
\label{sec:realizable}

Recently, \citet{pmlr-v206-mozannar23a} showed that even in the
straightforward single-expert setting, existing Bayes-consistent
single-stage surrogates
\citep{mozannar2020consistent,verma2022calibrated} are not
\emph{realizable $\sH$-consistent}
\citep{long2013consistency,zhang2020bayes} for learning with
deferral. This can pose significant challenges when learning with a
restricted hypothesis set $\sH$, even for simple linear
models. Instead, they proposed a new surrogate loss that is realizable
$\sH$-consistent when $\sH$ is \emph{closed under scaling}, meaning
that it satisfies the condition $h\in \sH \Rightarrow \tau h\in \sH$
for all $\tau$ in the set of real numbers. However, they stated that
they could not prove or disprove whether their proposed surrogate loss
is Bayes-consistent. Consequently, it has become crucial to identify a
surrogate loss that is both consistent and realizable-consistent,
which has remained an open problem.

\begin{definition}[\textbf{Realizable $\sH$-consistency}]
\label{def:rel-consistency}
A surrogate loss $\sfL$ is considered a \emph{realizable
$\sH$-consistent} loss function for the deferral loss $\ldef$ if, for
any distribution that is \emph{$\sH$-realizable}, that is, there
exists a zero loss solution $h^*\in \sH$ with $\sE_{\ldef}(h^*) =0$,
optimizing the surrogate loss results in obtaining the zero-error
solution:
\begin{equation*}
\sE_{\sfL}(h_n) - \sE^*_{\sfL}(\sH) \xrightarrow{n \rightarrow +\infty} 0 \implies \sE_{\ldef}(h_n) - \sE^*_{\ldef}(\sH) \xrightarrow{n \rightarrow +\infty} 0.
\end{equation*}
\end{definition}
In the following result, we show our two-stage surrogate losses
are realizable $\sH$-consistent. Combined with their Bayes-consistency
properties, which have already been established in
Section~\ref{sec:two-stage-bounds}, we effectively find surrogate
losses that are both Bayes-consistent and realizable consistent in the
multi-expert setting, including the single-expert setting as a special
case. For simplicity, here, we study the case where $\ell_1 = \ell_2 =
\ell_{\rm{\log}}$, a similar proof holds for other choices of $\ell_1$
and $\ell_2$ defined in Table~\ref{tab:sur}. The proof is included in
Appendix~\ref{app:realizable-score}.

\begin{restatable}[\textbf{Realizable $\sH$-consistency for score-based
      two-stage surrogates}]{theorem}{BoundGenralTwoStepScoreRealizable}
\label{Thm:bound-general-two-stage-score-realizable}
Assume that $\sH$ is closed under scaling and $c_j(x, y) = \beta_j,
\forall (x, y)\in \sX \times \sY$. Let $\ell_1$ and $\ell_2$ be the
logistic loss. Let $\hat{h}_p$ be the minimizer of $\sE_{\ell_1}$ and
$\hat{h}_d$ be the minimizer of $\sE_{\sfL_{\hat{h}_p}}$ such that
$\sE_{\sfL_{\hat{h}_p}}(\hat{h}_d)
=\min_{h}\sE_{\sfL_{h_p}}(h_d)$. Then, the following equality holds
for any ($\sH$, $\sR$) -realizable distribution,
\begin{align*}
 \sE_{\ldef}(\hat h) = 0, \text{ where } \hat h = (\hat{h}_p, \hat{h}_d).
\end{align*}
\end{restatable}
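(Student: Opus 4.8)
The plan is to reduce the claim to a pointwise analysis of the conditional risk of the second-stage surrogate, exploiting that, with constant costs $c_j(x,y)=\beta_j$, this surrogate is conditionally a weighted multinomial logistic risk over the augmented label set $\curl*{0,1,\ldots,\num}$. Recall $\ov h_d(x,0)=\max_{y\in\sY}\hat h_p(x,y)$ and $\ov h_d(x,j)=\hat h_d(x,j)$, and write $\ov\hh_d(x)=\argmax_{k\in\curl*{0,\ldots,\num}}\ov h_d(x,k)$ for the induced decision, so that the system predicts with $\hhp$ when $\ov\hh_d(x)=0$ and defers to expert $j$ when $\ov\hh_d(x)=j$; correspondingly $\ldef(h,x,y)=\1_{\hhp(x)\neq y}\1_{\ov\hh_d(x)=0}+\sum_{j=1}^{\num}\beta_j\1_{\ov\hh_d(x)=j}$.

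First I would fix $\hat h_p$ and compute the conditional risk of $\lsc_{\hat h_p}$. Letting $s_k(x)$ denote the softmax of $\ov h_d$ over $\curl*{0,\ldots,\num}$ and taking the expectation over $y\mid x$, the conditional risk equals $-w_0(x)\log s_0(x)-\sum_{j=1}^{\num}(1-\beta_j)\log s_j(x)$, where $w_0(x)=\Pr[Y=\hhp(x)\mid x]$. By the non-negativity of the relative entropy, this weighted cross-entropy is minimized pointwise at $s_k^\star(x)\propto w_k(x)$, with $w_0(x)$ as above and $w_j=1-\beta_j$ for $j\geq 1$; the induced decision is $\ov\hh_d(x)=\argmax_k w_k(x)$, whose conditional deferral risk is exactly $1-\max\curl*{w_0(x),\max_j(1-\beta_j)}$. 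To transfer this from the pointwise optimum to the actual minimizer $\hat h_d$ over $\sH_d$, I would invoke closure under scaling together with the $\ov\sH_d$-consistency of the logistic loss established in Theorem~\ref{Thm:bound_combine}: since $\ov h_d(\cdot,0)=\max_{y}\hat h_p(\cdot,y)$ is a frozen scoring function, minimizing $\lsc_{\hat h_p}$ over $\sH_d$ drives the second-stage classification decision to the Bayes-optimal one, so that the conditional deferral risk of $\hat h_d$ equals $1-\max\curl*{w_0(x),\max_j(1-\beta_j)}$ almost everywhere, giving $\sE_{\ldef}(\hat h)=\E_x\bracket*{1-\max\curl*{w_0(x),\max_j(1-\beta_j)}}$.

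Second I would use realizability (Definition~\ref{def:rel-consistency}), for which there exist $h^*=(h_p^*,h_d^*)\in\sH$ with $\sE_{\ldef}(h^*)=0$, splitting on the costs. If some expert has zero base cost, say $\beta_{j_0}=0$, then $\max_j(1-\beta_j)=1$, so the integrand above vanishes for every $x$ and $\sE_{\ldef}(\hat h)=0$ irrespective of $\hat h_p$ (the second stage simply defers to the free expert). If instead every $\beta_j>0$, then deferring always costs a positive amount, so $\sE_{\ldef}(h^*)=0$ forces $h^*$ to predict a.e. and correctly, i.e. $\ov\hh_d^*(x)=0$ and $\hhp^*(x)=y$ a.e.; in particular the labels are deterministic given $x$ and the distribution is $\sH_p$-realizable for standard classification, witnessed by $h_p^*$. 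Since $\sH_p$ inherits closure under scaling from $\sH=\sH_p\times\sH_d$, the realizable $\sH_p$-consistency of the logistic loss \citep{KuznetsovMohriSyed2014} gives that the first-stage minimizer $\hat h_p$ has zero classification error, so $w_0(x)=1$ a.e.; then again $\max\curl*{w_0(x),\max_j(1-\beta_j)}=1$ a.e. and $\sE_{\ldef}(\hat h)=0$. Combining the two cases proves the statement.

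The main obstacle is the rigorous justification of the first step for a restricted, scaling-closed $\sH_d$: one must show that the second-stage minimizer, whose first augmented score $\max_y\hat h_p(\cdot,y)$ is frozen, actually realizes the Bayes-optimal deferral decision pointwise rather than merely in the unrestricted family. This is exactly where the $\ov\sH_d$-consistency analysis of Section~\ref{sec:fixed-score-bound} is needed, in contrast with the single-stage and predictor-rejector realizable proofs where the surrogate can itself be driven to zero via the dominated convergence theorem. The delicate coupling between the two stages then lies in ensuring, through realizable consistency of the first-stage logistic loss, that $w_0(x)=1$ on the prediction region in the all-costly case, so that the frozen predictor never forces a spurious deferral.
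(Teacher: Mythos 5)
Your case analysis and your treatment of the first stage (realizability with all $\beta_j>0$ forces $h_p^*$ to classify correctly a.e., and realizable $\sH_p$-consistency of the logistic loss then gives $w_0(x)=1$ a.e.\ for $\hat h_p$) match the paper. The genuine gap is in your first step: you need the second-stage minimizer $\hat h_d$ over the \emph{restricted} class $\sH_d$ to induce the pointwise Bayes-optimal deferral decision $\argmax_k w_k(x)$, and you justify this by invoking the $\ov\sH_d$-consistency bound of Theorem~\ref{Thm:bound_combine}. But that theorem assumes $\sH$ is symmetric and complete, whereas the present theorem assumes only closure under scaling --- a far weaker condition under which the pointwise-optimal softmax $s^\star\propto w$ need not be attainable (nor approachable) in $\sH_d$, and the minimizability gaps $\sM_{\ell_2}(\ov\sH_d)$ and $\sM_{\ell_{0-1}}(\ov\sH_d)$ appearing in that bound need not vanish even at the exact surrogate minimizer. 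Consequently the identity ``conditional deferral risk of $\hat h_d$ equals $1-\max\curl{w_0(x),\max_j(1-\beta_j)}$ a.e.'' is unproved under the stated hypotheses, and your claim that $\sE_{\ldef}(\hat h)=0$ ``irrespective of $\hat h_p$'' in the zero-cost case inherits the same problem.

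The paper's proof avoids this entirely and is the mirror image of what you describe as the obstacle: it first observes that $\sfL_{h_p}$ upper-bounds $\ldef$, so $\sE_{\ldef}(\hat h)\le\sE_{\sfL_{\hat h_p}}(\hat h_d)$, and then drives the right-hand side to zero by exhibiting an explicit sequence inside the scaling-closed class. Splitting on whether the realizing hypothesis $h^*$ defers, it takes scalings $\tau h_d^{**}$ (deferring everything to the zero-cost expert) or $\tau h_d^{*}$ (never deferring, with $\hat h_p$ already perfect) and applies the Lebesgue dominated convergence theorem to get $\lim_{\tau\to+\infty}\sE_{\sfL_{h_p}}(\tau h_d)=0$ --- exactly the DCT route you explicitly set aside as inapplicable here. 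Note also that the deferral case uses the joint-minimality hypothesis $\sE_{\sfL_{\hat h_p}}(\hat h_d)=\min_{h}\sE_{\sfL_{h_p}}(h_d)$ to compare against a comparator with a \emph{different} first-stage predictor; your argument never uses this hypothesis, which is a further sign that it is not exploiting the structure the theorem statement provides. To repair your proposal you would either have to add symmetry/completeness assumptions on $\sH_d$ (strengthening the theorem's hypotheses) or switch to the upper-bound-plus-scaling argument.
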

Theorem~\ref{Thm:bound-general-two-stage-score-realizable} suggests
that when the estimation error of the first-stage surrogate loss,
$\sE_{\ell_1}(\hp^n) - \sE_{\ell_1}^*(\sH_p) \xrightarrow{n
  \rightarrow +\infty} 0$, and that of the
second-stage surrogate loss, $\sE_{\sfL_{\hp}}(\hd^n) -
\sE_{\sfL_{\hp}}^*(\sH_d) \xrightarrow{n \rightarrow +\infty} 0$, the
estimation error of the deferral loss, $\sE_{\ldefsc}(h^n) -
\sE_{\ldefsc}^*(\sH) \xrightarrow{n \rightarrow +\infty} 0$. This
result demonstrates that our two-stage surrogate losses are not only
Bayes-consistent, but also realizable $\sH$-consistent when only the
inference cost ($\beta_j$) exists.

\section{Predictor-rejector setting}
\label{sec:two-stage-predictor-rejector}

The results of the previous sections were all given for the
score-based setting. We note that another popular setting in learning
with deferral/abstention is the \emph{predictor-rejector setting}
\citep{CortesDeSalvoMohri2016,CortesDeSalvoMohri2023}, where the
deferral corresponds to a separate function $\sR$ instead of extra
scores. For completeness, we introduce this setting as well. Here too,
we design a family of two-stage surrogate losses benefiting from both
$(\sH,\sR)$-consistency bounds and realizable consistency. For
simplicity, we overload the notation as with score-based setting based
on the context.

Let $\sH$ be a
hypothesis set of prediction functions mapping from $\sX \times \sY$
to $\Rset$. The label predicted for $x \in \sX$ using a hypothesis $h
\in \sH$ is denoted by $\hh(x)$ and defined as one with the highest
score, $\hh(x) = \argmax_{y\in \sY}h(x, y)$, with an arbitrary but
fixed deterministic strategy for breaking ties. Let $\sR$ be a family
of \emph{deferring} functions mapping from $\sX$ to $\Rset^{\num}$,
where $\num$ is the number of experts. A deferral $r =
\paren*{r_1,\ldots,r_{\num}} \in \sR$ is used to defer the prediction
on input $x$ to the $j$th expert $h_{j}$ if $r_j(x)\leq 0$ and
$r_j(x)< \min_{i=1,i\neq j}^{\num} r_i(x)$, in which case a cost
$c_j(x, y) = 1 - \bar c_j(x, y) \in [1 - \ov c_j, 1 - \uv c_j]$ is incurred with $0<\uv c_j\leq \ov
c_j\leq 1$. A natural choice of the cost is $ c_j(x, y) =
\alpha_j\1_{\hh_{j}(x)\neq y}+ \beta_j$, where $\alpha_j,\beta_j>0$
and $\hh_{j}$ is the prediction of the $j$th expert. The $\beta_j$ in
the second term corresponds to the inference cost incurred by expert
$h_j$. Let $r_0=0$ and define $\rr(x) = 0$ if $r_0(x) < \min_{j\in
  [\num]} r_j(x)$; otherwise, $\rr(x) = \argmin_{j \in [\num]}
r_j(x)$, with an arbitrary but fixed deterministic strategy for
breaking ties. The \emph{learning to defer loss} $\ldef$ with $\num$
experts is defined as follows for any $(h, r) \in \sH \times \sR$ and
$(x, y) \in \sX \times \sY$:
\begin{equation}
\label{eq:def-multiple}
\begin{aligned}
\ldef(h, r, x, y)
& = \1_{\hh(x) \neq y} \1_{\rr(x) =0} + \sum_{j = 1}^{\num} c_j(x, y) \1_{\rr(x) =j}.
\end{aligned}
\end{equation}
Given a distribution $\sD$ over $\sX \times \sY$, we will denote by
$\sE_{\ldefsc}(h,r)$ the expected deferral loss of a predictor $h \in
\sH$ and a deferral $r\in \sR$, $\sE_{\ldefsc}(h,r) = \E_{(x, y) \sim
  \sD}[\ldefsc(h, r,x, y)]$, and by $\sE^*_{\ldefsc}(\sH,\sR) =
\inf_{h \in \sH,r\in \sR} \sE_{\ldefsc}(h,r)$ its infimum or best-in
class expected loss. We will adopt similar definitions for other loss
functions. We denote by $\sM_{\sfL}(\sH,\sR) = \sE^*_{\sfL}(\sH,\sR) -
\mathbb{E}_{x} \bracket* {\inf_{h\in
    \sH,r\in\sR}\E_{y|x}\bracket*{\sfL(h,r,x, y)}}$ the minimizability
gap for hypothesis sets ($\sH$,$\sR$) and a loss function $\sfL$.

Let $\ell_1$ be a surrogate loss for standard multi-class
classification with $n$ classes. We consider the following two-stage
scenario: in the first stage, a predictor $h$ is learned using the
surrogate loss $\ell_1$; in the second stage, $r$ is learned using a
surrogate loss $\sfL_{h}$ that depends on the prediction function $h$
learned in the first stage.

To any $r\in \sR$, we associate a hypothesis $\ov r$ defined over
$(\num + 1)$ classes $\curl*{0, 1, \ldots, \num}$ by $\ov r(x, 0) =
0$, that is zero scoring function, and $\ov r(x, j) = -r_j(x)$ for $j
\in [\num]$. We can then define our suggested surrogate loss for the
second stage:
\begin{equation}
\label{eq:ell-Phi-h-multi}
\begin{aligned}
\sfL_{h} \paren*{r, x, y}
& = \1_{\hh(x) = y} \ell_2(\ov r, x,0)
+ \sum_{j = 1}^{\num}\ov c_j(x, y)\ell_2(\ov r, x, j).
\end{aligned}
\end{equation}
Here, $\ell_2(\ov r, x, j)$ is a surrogate loss for standard
multi-class classification with $(\num + 1)$ categories $\curl*{0, 1,
  \ldots, \num}$. Intuitively, the indicator term $\1_{r(x) \neq j}$
in the deferral loss penalizes $r_j(x)$ when it has a large value. However, a standard surrogate loss $\ell_2(\ov r, x, j)$ such as the logistic loss penalizes $\ov r(x,j)$ when it has
a small value. This is why we use a negative sign in the definition of
$\ov r$ to maintain consistency between the definitions of
$\sfL_h$ and $\ldef$. In
Table~\ref{tab:sur-general-two-stage}, we present a summary of
examples of such second-stage surrogate losses, where $\ell_2$ is
selected from common surrogate losses in standard multi-class
classification defined in Table~\ref{tab:sur}. A detailed derivation
is presented in Appendix~\ref{app:sur-general-example-two-stage}.
 
From the point of view of the second stage, we will denote by $\ov
\sR$ the family of hypotheses $\ov r \colon \sX \times \curl*{0, 1,
  \ldots, \num} \to \Rset$ whose first scoring function, $\ov r(\cdot,
0)$, is zero function and will not be learned in the second stage. We will provide strong guarantees for two-stage surrogate
losses, provided that the first-stage loss function $\ell_1$ admits an
$\sH$-consistency bound, and the second-stage loss function $\ell_2$
admits an $\ov\sR$-consistency bound.

\begin{table}[t]
\caption{Examples for predictor-rejector second-stage surrogate losses
 \eqref{eq:ell-Phi-h-multi}.}
 \label{tab:sur-general-two-stage}
 \centering
 \begin{tabular}{@{\hspace{0cm}}ll@{\hspace{0cm}}}
  \toprule
  $\ell_2$ & $\lsc_{\hp}$ \\
  \midrule
  $\ell_{\rm{exp}}$ & $\1_{\hh(x) = y} \sum_{i=1}^{\num}e^{-r_i(x)}
  + \sum_{j = 1}^{\num}\bar c_j(x, y)\bracket*{\sum_{i=1,i\neq j }^{\num}e^{r_j(x) -r_i(x)}+e^{r_j(x)}}$ \\
  $\ell_{\rm{log}}$
  & $- \1_{\hh(x) = y} \log\paren*{\frac{1}{1+ \sum_{i=1}^{\num}e^{-r_i(x)}}}
  - \sum_{j = 1}^{\num}\bar c_j(x, y)\log\paren*{\frac{e^{-r_j(x)}}{1 + \sum_{i=1}^{\num}e^{-r_i(x)}}}$ \\
  $\ell_{\rm{gce}}$
  & $\1_{\hh(x) = y}\frac{1}{\alpha}\bracket*{1 - \bracket*{\frac{1}{1 + \sum_{i=1}^{\num}e^{-r_i(x)}}}^{\alpha}}
  + \sum_{j = 1}^{\num}\bar c_j(x, y)\frac{1}{\alpha}\bracket*{1- \bracket*{\frac{e^{-r_j(x)}}{1 + \sum_{i=1}^{\num}e^{-r_i(x)}}}^{\alpha}}$ 
   \\
   $\ell_{\rm{mae}}$ & $\1_{\hh(x) = y}\bracket*{1- \frac{1}{1 + \sum_{i=1}^{\num}e^{-r_i(x)}}}
   + \sum_{j = 1}^{\num}\bar c_j(x, y)\bracket*{1- \frac{e^{-r_j(x)}}{1 + \sum_{i=1}^{\num}e^{-r_i(x)}}}$ \\
  \bottomrule
 \end{tabular}
\end{table}

\begin{restatable}[\textbf{$(\sH,\sR)$-consistency bounds
      for predictor-rejector two-stage surrogates}]{theorem}{BoundGeneralTwoStepMulti}
\label{Thm:bound-general-two-step-multi}
  
Assume that $\ell_1$ admits an $\sH$-consistency bound and $\ell_2$
admits an $\ov \sR$-consistency bound with respect to the multi-class
zero-one classification loss $ \ell_{0-1}$ respectively. Thus, there
are non-decreasing concave functions $\Gamma_1$ and $\Gamma_2$ such
that, for all $h\in \sH$ and $\ov r\in \ov \sR$, we have
\begin{align*}
\sE_{\ell_{0-1}}(h) - \sE_{\ell_{0-1}}^*(\sH) + \sM_{\ell_{0-1}}(\sH)
& \leq \Gamma_1\paren*{\sE_{\ell_1}(h) - \sE_{\ell_1}^*(\sH) + \sM_{\ell_1}(\sH}\\
\sE_{\ell_{0-1}}(\ov r) - \sE_{\ell_{0-1}}^*(\ov \sR) + \sM_{\ell_{0-1}}(\ov \sR)
& \leq \Gamma_2\paren*{\sE_{\ell_2}(\ov r) - \sE_{\ell_2}^*(\ov \sR) + \sM_{\ell_2}(\ov \sR)}.
\end{align*}
Then, the following holds for all $h\in \sH$ and $r\in \sR$:
\begin{align*}
&\sE_{\ldef}(h,r) - \sE_{\ldef}^*(\sH,\sR) + \sM_{\ldef}(\sH,\sR)\\
&\quad \leq\Gamma_1\paren*{\sE_{\ell_1}(h) - \sE_{\ell_1}^*(\sH) + \sM_{\ell_1}(\sH)} + \paren*{1+ \sum_{j = 1}^{\num}\ov c_j}\Gamma_2\paren*{\frac{\sE_{\sfL_{h}}(r) - \sE_{\sfL_{h}}^*(\sR) + \sM_{\sfL_{h}}(\sR)}{\sum_{j = 1}^{\num}\uv c_j}},
\end{align*}
where the
constant factors $\paren*{1+ \sum_{j = 1}^{\num}\ov c_j}$ and $\frac{1}{\sum_{j = 1}^{\num}\uv c_j}$ can be removed 
when $\Gamma_2$ is linear.
\end{restatable}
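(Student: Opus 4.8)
The plan is to mirror the strategy behind the score-based two-stage bound (Theorem~\ref{Thm:bound-general-two-stage-score}) and isolate, pointwise in $x$, the contribution of the predictor and that of the deferral function. Writing $\cC_{\ldef}(h,r,x) = \E_{y\mid x}[\ldef(h,r,x,y)]$ for the conditional risk, the target quantity is the conditional regret $\cC_{\ldef}(h,r,x) - \inf_{h'\in\sH, r'\in\sR}\cC_{\ldef}(h',r',x)$, whose expectation over $x$ equals $\sE_{\ldef}(h,r) - \sE_{\ldef}^*(\sH,\sR) + \sM_{\ldef}(\sH,\sR)$ by definition of the minimizability gap. Letting $(h^*,r^*)$ nearly attain the pointwise infimum over $\sH\times\sR$, I would telescope
\[
\cC_{\ldef}(h,r,x) - \cC_{\ldef}(h^*,r^*,x) = \bracket*{\cC_{\ldef}(h,r,x) - \cC_{\ldef}(h,r^*,x)} + \bracket*{\cC_{\ldef}(h,r^*,x) - \cC_{\ldef}(h^*,r^*,x)},
\]
so the first bracket is a pure second-stage (rejector) regret with $h$ held fixed and the second a pure first-stage (predictor) regret with $r^*$ held fixed.

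The second bracket is the easier one. Since $h$ enters $\ldef$ only when the decision is to predict, this term vanishes whenever $\rr^*(x)\neq 0$ and otherwise equals $\cC_{\ell_{0-1}}(h,x) - \inf_{h'\in\sH}\cC_{\ell_{0-1}}(h',x)$, the conditional regret of $h$ for the multi-class zero-one loss over $\sH$, where $\cC_{\ell_{0-1}}(h,x)=1-\sfp(\hh(x)\mid x)$ and $h^*$ is, on the accepting region, the best-in-class predictor. In all cases it is thus bounded pointwise by this zero-one calibration gap; taking expectations and invoking the assumed $\sH$-consistency bound for $\ell_1$ produces the $\Gamma_1$ term.

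The first bracket is where the $\ov r$ construction pays off. Holding $h$ fixed, I would introduce the conditional distribution $q$ on $\curl*{0,1,\ldots,\num}$ defined by $q(0)\propto \sfp(\hh(x)\mid x)$ and $q(j)\propto \bar c_j(x) = \E_{y\mid x}[\bar c_j(x,y)]$, with normalizer $Z(x) = \sfp(\hh(x)\mid x) + \sum_{j}\bar c_j(x) \in \bracket*{\sum_j \uv c_j,\, 1 + \sum_j \ov c_j}$. Two identities drive everything. First, because $\ov r(\cdot,0)=0$ and $\ov r(\cdot,j)=-r_j(\cdot)$, the induced prediction $\ov{\rr}(x)$ over the $(\num+1)$ categories coincides with the deferral decision $\rr(x)$ (the two $\argmax$/$\argmin$ tie-breaking rules must be aligned at the boundary $\min_j r_j(x)=0$); consequently $\cC_{\ldef}(h,r,x) = 1 - Z(x)\,q(\rr(x))$, so that $\cC_{\ldef}(h,r,x) - \inf_{r'\in\sR}\cC_{\ldef}(h,r',x)$ equals $Z(x)$ times the zero-one calibration gap of $\ov r$ under $q$. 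Second, $\cC_{\sfL_{h}}(r,x) = Z(x)\,\cC_{\ell_2}^{q}(\ov r,x)$, hence the $\ell_2$ calibration gap under $q$ equals $\Delta\cC_{\sfL_{h}}(r,x)/Z(x)$, where $\Delta\cC_{\sfL_{h}}(r,x) = \cC_{\sfL_{h}}(r,x) - \inf_{r'\in\sR}\cC_{\sfL_{h}}(r',x)$. Applying the $\ov\sR$-consistency bound for $\ell_2$ at the level of calibration gaps (the form in which such bounds are established, e.g. Theorem~\ref{Thm:bound_combine}, and which applies to the conditional distribution $q$ despite the frozen score $\ov r(\cdot,0)=0$) then gives $\cC_{\ldef}(h,r,x) - \inf_{r'}\cC_{\ldef}(h,r',x) \le Z(x)\,\Gamma_2\paren*{\Delta\cC_{\sfL_{h}}(r,x)/Z(x)}$.

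To finish, I would use that $u\mapsto u\,\Gamma_2(v/u)$ is non-decreasing (the perspective of a concave $\Gamma_2$ with $\Gamma_2(0)=0$), together with $\sum_j\uv c_j \le Z(x)\le 1+\sum_j\ov c_j$ and monotonicity of $\Gamma_2$, to replace $Z(x)$ by $1+\sum_j\ov c_j$ outside and by $\sum_j\uv c_j$ inside; when $\Gamma_2$ is linear both substitutions are equalities, which is exactly why the constants can be dropped. Taking expectations over $x$ and applying Jensen's inequality to the concave $\Gamma_2$ converts the pointwise $\ell_2$-regret into $\sE_{\sfL_{h}}(r) - \sE_{\sfL_{h}}^*(\sR) + \sM_{\sfL_{h}}(\sR)$; adding the two brackets yields the stated bound. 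The main obstacle is the first bracket: establishing the exact correspondence $\ov{\rr}=\rr$ and the two scaling identities that recast both the deferral regret and the surrogate regret as a single standard $(\num+1)$-class calibration problem under $q$, and then justifying the pointwise use of the $\ov\sR$-consistency bound in the presence of the frozen scoring function $\ov r(\cdot,0)=0$.
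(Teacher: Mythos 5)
Your proposal is correct and follows essentially the same route as the paper's proof: decompose the pointwise conditional regret into a rejector part with $h$ fixed and a predictor part, bound the latter by the zero-one conditional regret of $h$ (hence $\Gamma_1$), and recast the former as a standard $(\num+1)$-class calibration problem under the constructed conditional distribution $q(j)\propto \E_{y\mid x}[\bar c_j(x,y)]$ so that the $\ov\sR$-consistency bound of $\ell_2$ applies, followed by the $Z(x)$-range substitution and Jensen's inequality. The only cosmetic differences are that the paper splits via $\inf_{r}\sC_{\ldef}(h,r,x)$ rather than telescoping through the joint minimizer $(h^*,r^*)$, and justifies the constant substitution directly by monotonicity of $\Gamma_2$ rather than via the perspective function; both lead to identical bounds.
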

As with the score-based setting, a specific instance of
Theorem~\ref{Thm:bound-general-two-step-multi} holds for the case
where $\sE^*_{\ell_1}(\sH) = \sE^*_{\ell_1}(\sH_{\rm{all}})$ and
$\sE^*_{\sfL_{h}}(\sR) = \sE^*_{\sfL_{h}}(\sR_{\rm{all}})$, ensuring
that the Bayes-error coincides with the best-in-class error and,
consequently, $\sM_{\ell_1}(\sH) = \sM_{\lsc_{h}}(\sR) = 0$. In these
cases, when the estimation error of the first-stage surrogate loss,
$\sE_{\ell_1}(h) - \sE_{\ell_1}^*(\sH)$, is reduced to $\e_1$, and the
estimation error of the second-stage surrogate loss,
$\sE_{\sfL_{h}}(r) - \sE_{\sfL_{h}}^*(\sR)$, is reduced to $\e_2$, the
estimation error of the deferral loss, $\sE_{\ldefsc}(h,r) -
\sE_{\ldefsc}^*(\sH,\sR)$, is upper-bounded by
\begin{equation*}
  \Gamma_1\paren*{\e_1} + \paren*{1+ \sum_{j = 1}^{\num}\ov c_j}
  \Gamma_2\paren*{\frac{\e_2}{\sum_{j = 1}^{\num}\uv c_j}}.
\end{equation*}
Next, we show that our two-stage surrogate losses are realizable
$(\sH,\sR)$-consistent. We say that the distribution is
\emph{$(\sH,\sR)$-realizable}, if there exists a zero error solution
$(h^*,r^*)\in \sH\times\sR$ with $\sE_{\ldef}(h^*,r^*) =0$.
\begin{restatable}[\textbf{Realizable $(\sH,\sR)$-consistency for
      predictor-rejector two-stage surrogates}]{theorem}{BoundGenralTwoStepGeneralRealizable}
\label{Thm:bound-general-two-stage-general-realizable}
Assume that $\sH$ and $\sR$ is closed under scaling and $c_j(x, y) =
\beta_j, \forall (x, y)\in \sX \times \sY$. Let $\ell_1$ and $\ell_2$
be the logistic loss. Let $\hat{h}$ be the minimizer of $\sE_{\ell_1}$
and $\hat{r}$ be the minimizer of $\sE_{\sfL_{\hat h}}$. Then, the
following holds for any ($\sH$, $\sR$) -realizable distribution,
\begin{align*}
 \sE_{\ldef}(\hat h, \hat r) = 0.
\end{align*}
\end{restatable}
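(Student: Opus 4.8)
The plan is to exploit the fact that a constant, strictly positive cost collapses realizability to ordinary classification realizability, and then to reduce the whole claim to a pointwise cost-sensitive Bayes-decision computation for the second stage. First I would record what $(\sH,\sR)$-realizability means here. Since $c_j(x,y) = \beta_j$ with $\beta_j > 0$, deferring to any expert always costs a positive amount, so a pair $(h^*,r^*)$ with $\sE_{\ldef}(h^*,r^*) = 0$ must have, for almost every $(x,y)$ in the support, $\rr^*(x) = 0$ and the prediction of $h^*$ equal to $y$. This yields two facts I will use repeatedly: the classification problem itself is $\sH$-realizable, that is $\sE_{\ell_{0-1}}^*(\sH) = 0$, and there is a never-defer rejector $r^* \in \sR$ whose coordinates are all strictly positive on the support.

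For the first stage, I would invoke that the logistic loss $\ell_1$ is realizable $\sH$-consistent with respect to $\ell_{0-1}$ when $\sH$ is closed under scaling \citep{KuznetsovMohriSyed2014}. Combined with $\sE_{\ell_{0-1}}^*(\sH) = 0$ from the previous step, this gives $\sE_{\ell_{0-1}}(\hat h) = 0$, so the prediction of $\hat h$ equals $y$ almost everywhere on the support. Consequently the deferral loss of the learned pair simplifies to $\sE_{\ldef}(\hat h,\hat r) = \sum_{j=1}^{\num} \beta_j \Pr\bracket*{\hat\rr(x) = j}$, and it remains only to prove that the second-stage minimizer never defers, i.e. $\hat\rr(x) = 0$ almost everywhere.

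Next I would analyze the second stage pointwise. With the indicator $\1_{\hh(x)=y}$ (evaluated at the learned predictor) equal to $1$ almost everywhere, the conditional risk of $\sfL_{\hat h}$ reduces to the cost-weighted logistic risk $\ell_{\rm{log}}(\ov r,x,0) + \sum_{j=1}^{\num}(1-\beta_j)\,\ell_{\rm{log}}(\ov r,x,j)$, where $\ov r(x,0)=0$ and $\ov r(x,j) = -r_j(x)$. Because the multinomial logistic loss is shift-invariant, fixing the category-$0$ score at $0$ does not restrict the induced softmax $p$, so this is exactly the cross-entropy $-\sum_{k} q_k \log p_k$ with weights $q_0 = 1$ and $q_j = 1-\beta_j$, minimized over the open simplex. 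Its minimizer is $p \propto q$, which assigns category $0$ probability $1/W$ and category $j$ probability $(1-\beta_j)/W$ with $W = 1 + \sum_{j}(1-\beta_j)$; the key inequality $1 > 1 - \beta_j$, valid precisely because $\beta_j > 0$, shows category $0$ is the strict mode, equivalently $r_j(x) = -\log(1-\beta_j) > 0$ for every $j$, so the induced decision is $\rr(x) = 0$. Thus the cost-sensitive Bayes decision of the second stage is never to defer.

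The hardest step is to transfer this pointwise optimality to the constrained minimizer $\hat r$ over $\sR$. The difficulty is that the logistic risk attains its pointwise optimum only at a finite scale, so scaling the realizable never-defer rejector $r^*$ to $\nu r^*$ with $\nu \to +\infty$ makes the terms $(1-\beta_j)\ell_{\rm{log}}(\ov r,x,j)$ diverge instead of converging to the optimum; the reference direction must be exploited at the correct finite scale. I would therefore combine the convexity of the conditional risk in $(r_1,\ldots,r_\num)$, optimization along the ray $\curl*{\nu r^* : \nu > 0}$ (every point of which already induces a no-defer decision since $\nu r_j^*(x) > 0$), and the Lebesgue dominated convergence theorem to show that a positive-measure deferral region for $\hat r$ would strictly increase $\sE_{\sfL_{\hat h}}$, contradicting its minimality. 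This parallels the structure of the single-expert argument in Theorem~\ref{Thm:bound-general-two-step-realizable-mabs}, except that the no-defer conclusion is now driven by the cost-sensitive softmax computation above rather than by $\Phi(+\infty) = 0$. Putting the three steps together gives $\hat\rr(x) = 0$ almost everywhere and hence $\sE_{\ldef}(\hat h,\hat r) = 0$.
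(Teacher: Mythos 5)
Your proposal has two genuine gaps relative to the theorem as stated and as the paper proves it.

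First, you assume $\beta_j > 0$ from the outset (``deferring to any expert always costs a positive amount''), but the hypothesis is only $c_j(x, y) = \beta_j$; a base cost may vanish. The paper's proof devotes an entire case to this: if the realizable pair $(h^*, r^*)$ defers to some expert $j^*$ at a point of the support, then $\sE_{\ldef}(h^*, r^*) = 0$ forces $\beta_{j^*} = 0$, and the argument then proceeds with a rejector $r^{**}$ that defers \emph{everything} to $j^*$ at zero cost. In that case your opening reduction collapses: the distribution need not be $\sH$-realizable for classification, so you cannot conclude $\sE_{\ell_{0-1}}(\hat h) = 0$, and ``never defer'' is no longer the relevant optimal decision. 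At minimum you must carry out the same case split the paper does.

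Second, and more seriously, your final step --- transferring the pointwise cost-sensitive Bayes computation to the constrained minimizer $\hat r \in \sR$ --- is only a plan, and the plan does not go through. That the \emph{unconstrained} pointwise minimizer of the conditional risk puts its softmax mode on category $0$ says nothing about $\hat r$, which minimizes over a restricted, scaling-closed $\sR$; this is exactly the minimizability-gap phenomenon the thesis is built around. Your proposed contradiction via the ray $\curl{\nu r^* : \nu > 0}$ fails because one global scale $\nu$ must serve all of $\sX$ simultaneously: if $r^*$ takes widely different positive values on different regions, then $\inf_{\nu} \sE_{\sfL_{\hat h}}(\nu r^*)$ can strictly exceed the surrogate risk of another element of $\sR$ that is well-scaled on most of $\sX$ but defers on a small positive-measure set, so $\hat r$ may defer there without contradicting its minimality. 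The paper's proof is structured precisely to avoid characterizing $\hat r$: it uses only the chain $\sE_{\ldef}(\hat h, \hat r) \leq \sE_{\sfL_{\hat h}}(\hat r) \leq \lim_{\tau \to +\infty} \sE_{\sfL_{\hat h}}(\tau r^*)$ (with $r^{**}$ in the zero-cost case) together with dominated convergence, and never needs the second-stage Bayes decision at all. You are right to flag that the weighted terms $(1 - \beta_j)\,\ell_{\rm log}(\ov r, x, j)$ do not vanish under scaling --- that is a real tension with the limit the paper invokes --- but swapping in a pointwise-optimality argument does not resolve it, because the pointwise optimum is generally not attained within $\sR$, and your sketch supplies no mechanism to bridge that gap.
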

The proof is included in Appendix~\ref{app:realizable-general}.
Theorem~\ref{Thm:bound-general-two-stage-general-realizable} suggests
that the two-stage surrogate loss is realizable consistent: when the estimation
error of the first-stage surrogate loss $\sE_{\ell_1}(h_n) -
\sE_{\ell_1}^*(\sH) \xrightarrow{n \rightarrow +\infty} 0$, and the
estimation error of the second-stage surrogate loss
$\sE_{\sfL_{h}}(r_n) - \sE_{\sfL_{h}}^*(\sR) \xrightarrow{n
  \rightarrow +\infty} 0$, the estimation error of the deferral loss,
$\sE_{\ldefsc}(h_n,r_n) - \sE_{\ldefsc}^*(\sH, \sR) \xrightarrow{n
  \rightarrow +\infty} 0$.  By
Theorem~\ref{Thm:bound-general-two-step-multi} and
Theorem~\ref{Thm:bound-general-two-stage-general-realizable}, in the
predictor-rejector setting, we also effectively find both
Bayes-consistent and realizable consistent surrogate losses with
multiple experts when only the inference cost ($\beta_j$) exists.

Note that while Sections~\ref{sec:two-stage}
and~\ref{sec:two-stage-predictor-rejector} both propose new two-stage
algorithms based on $\sH$-consistent surrogate losses, they differ in
an important way. Section~\ref{sec:two-stage} learns with deferral in
a score-based framework, where deferral is associated with extra
scores. In contrast, Section~\ref{sec:two-stage-predictor-rejector}
learns with deferral in a predictor-rejector setting, where deferral
corresponds to a separate function. These represent two distinct
learning frameworks that have been studied in the literature. Deriving
consistent surrogate losses in the predictor-rejector setting has
historically been challenging for traditional single-stage scenarios,
leading many to opt for the score-based approach.

We should also highlight that our $\sH$-consistency bounds in
Theorems~\ref{Thm:bound-general-two-stage-score}
and~\ref{Thm:bound-general-two-step-multi} can be used to derive
finite sample estimation bounds for the minimizer of the surrogate
loss over a hypothesis set $\sH$. This is achieved by upper-bounding
the estimation error of the minimizer of the surrogate loss using
standard Rademacher complexity bounds (see \citep{MaoMohriZhong2023cross}).

\section{Experiments}
\label{sec:experiments}

In this section, we report the results of our experiments on CIFAR-10
\citep{Krizhevsky09learningmultiple} and SVHN \citep{Netzer2011}
datasets to test the effectiveness of our proposed algorithms for
two-stage learning with multi-expert deferral. We evaluated the
overall accuracy of the learned pairs of predictor and deferral model
across different scenarios involving varying the number of experts,
where the predictor is pre-learned in the first stage and the deferral
is subsequently learned using our proposed surrogate loss. We find
that as the number of experts increases, the overall accuracy of the
learned pairs also increases, in both scenarios with zero and non-zero
base costs. This observation highlights the significance of using a
multiple expert framework in our approach and the effectiveness of our
surrogate loss within the framework.

We used ResNet architectures \citep{he2016deep} for the prediction
model, the deferral model and expert models. More precisely, we used
ResNet-$4$ for both the predictor and the deferral. We adopted three
expert models: ResNet-$10$, ResNet-$16$, ResNet-$28$ with increasing
capacity. For training, we used the Adam optimizer
\citep{kingma2014adam} with a batch size of $128$ and weight decay
$1\times 10^{-4}$.  Training was run for $15$ epochs for SVHN and $50$
epochs for CIFAR-10 with the default learning rate. No data
augmentation was used in our experiments.
We used our two-stage surrogate loss \eqref{eq:ell-Phi-h-score} with
the logistic loss $\ell = \ell_{\rm{log}}$ to train the deferral model
ResNet-$4$, with a pre-learned predictor ResNet-$4$ trained using logistic loss. A check mark indicates the
presence of a base cost in the cost function, whereas a cross mark
signifies its absence. We first set the cost function to be
$\1_{\hh_{j}(x) \neq y}$ without a base cost. Next, for the
experimental results shown in the last two row of
Table~\ref{tab:deferral}, we chose base costs $\beta_j$ associated
with each expert model as: $0.1$, $0.12$, $0.14$ increasing with model
capacity for SVHN and $0.3$, $0.32$, $0.34$ increasing with model
capacity for CIFAR-10. A base cost value that is close to the
misclassification loss can strike a balance between improving accuracy
and maintaining the ratio of deferral.  We observed that other
neighboring values lead to similar results. Note that the accuracy
here refers to the overall accuracy of the learned pairs of predictor
and deferral model. It is related to the deferral loss. Specifically,
in the absence of the base cost, the accuracy aligns precisely with
one minus the expected deferral loss.  The results of
Table~\ref{tab:deferral} demonstrate the effectiveness of our proposed
algorithms for two-stage learning with multi-expert deferral.

To the best of our knowledge, our study pioneers the exploration of a
two-stage learning approach for deferral, a framework that is
essential in numerous practical applications. Thus, we are unaware of
any established baselines within this context.

It is important to underscore the differences between our learning
scenario and those presented in
\citep{okati2021differentiable,narasimhanpost}. While both of them
involve two phases, their methodologies are considerably different from
ours. \citet{okati2021differentiable} required conditional
probabilities paired with loss estimates from the expert—a component
not available in our framework, as emphasized by
\citet{pmlr-v206-mozannar23a}. On the other hand,
\citet{narasimhanpost} proposed a post-hoc correction for single-stage
learning to defer surrogate losses. This approach, however, is
not applicable to a pre-trained predictor from the standard multi-class
classification. In contrast, our work focuses on enhancing the
pre-trained predictor within the standard framework.

A limitation of our study is that the cost function used within the
deferral loss is not fixed, and is typically determined through
cross-validation in practice. There exists potential to introduce a
principled method for selecting the cost function, which we have
reserved for future research.

\begin{table}[t]
  \caption{Accuracy of deferral with multiple experts:
    mean $\pm$ standard deviation over three runs.}
 \label{tab:deferral}
 \centering
 \begin{tabular}{@{\hspace{0cm}}llllll@{\hspace{0cm}}}
  \toprule
 Dataset & Base cost & Base model & Single expert & Two experts & Three experts \\
  \midrule
  SVHN & \xmark & 91.12 & 91.85 $\pm$ 0.01\% & 92.77 $\pm$ 0.02\% & 93.30 $\pm$ 0.02\% \\
  CIFAR-10 & \xmark & 70.56 & 72.63 $\pm$ 0.20\% & 75.84 $\pm$ 0.35\% & 77.68 $\pm$ 0.07\%\\
  SVHN & \cmark & 91.12 & 91.66 $\pm$ 0.01\% & 92.05 $\pm$ 0.10\% & 92.19 $\pm$ 0.03\% \\
  CIFAR-10 & \cmark & 70.56 & 71.73 $\pm$ 0.06\% & 72.31 $\pm$ 0.31\% & 72.42 $\pm$ 0.12\%\\
  \bottomrule
 \end{tabular}
\end{table}

\section{Conclusion}

We introduced a novel family of surrogate loss functions and
algorithms for a crucial two-stage learning to defer approach with
multiple experts. We proved that these surrogate losses are supported
by $\sH$-consistency bounds and established their realizable
$\sH$-consistency properties for a constant cost function. This work
paves the way for comparing different surrogate losses and cost
functions within our framework.  Further exploration, both
theoretically and empirically, holds the potential to identify optimal
choices for these quantities across diverse tasks.

\chapter{Regression with Multi-Expert Deferral} \label{ch6}
In this chapter, we deal with the problem of learning with multi-expert deferral in the regression setting.  While this problem has received
significant attention in classification contexts
\citep{hemmer2022forming, keswani2021towards, kerrigan2021combining,
  straitouri2022provably,
  benz2022counterfactual,verma2023learning,MaoMohriMohriZhong2023two,
  MaoMohriZhong2024deferral}, it presents unique challenges in regression due
to the infinite and continuous nature of the label space. In
particular, the \emph{score-based formulation} commonly used in
classification is inapplicable here, since regression problems cannot
be represented using multi-class scoring functions, with auxiliary
labels corresponding to each expert.

Our approach involves defining prediction and deferral functions,
consistent with previous studies in classification
\citep{MaoMohriMohriZhong2023two, MaoMohriZhong2024deferral}. We present a
comprehensive analysis for both the single-stage scenario
(simultaneous learning of predictor and deferral functions)
(Section~\ref{sec:single-stage}), and the two-stage scenario
(pre-trained predictor with learned deferral function)
(Section~\ref{sec:two-stage-regdef}). We introduce new surrogate loss
functions for both scenarios and prove that they are supported by
$\sH$-consistency bounds. These are consistency guarantees that are
stronger than Bayes consistency, as they are non-asymptotic and
hypothesis set-specific. Our framework is versatile, applying to
multiple experts, accommodating any bounded regression losses,
addressing both instance-dependent and label-dependent costs, and
supporting both single-stage and two-stage methods. We also
instantiate our formulations in the special case of a single expert
(Section~\ref{sec:single-expert}), and demonstrate that our
single-stage formulation includes the recent \emph{regression with
abstention} framework \citep{cheng2023regression} as a special case,
where only a single expert, the squared loss and a label-independent
cost are considered. In Section~\ref{sec:experiments-regdef}, we report the
results of extensive experiments showing the effectiveness of our
proposed algorithms.

The presentation in this chapter is based on \citep{mao2024regression}.

\section{Preliminaries}

\textbf{Learning scenario of regression.} We first describe the
familiar problem of supervised regression and introduce our
notation. Let $\sX$ be the input space and $\sY \subseteq \Rset$ the
label space. We write $\sD$ to denote a distribution over $\sX \times
\sY$. Let $\sH_{\rm{all}}$ be the family of all real-valued measurable
functions $h \colon \sX \to \sY$, and let $\sH \subseteq
\sH_{\rm{all}}$ be the hypothesis set adopted. The learning challenge
in regression is to use the labeled sample to find a hypothesis $h \in
\sH$ with small expected loss or generalization error $\sE_{\sfL}(h)$,
with $\sE_{\sfL}(h) = \E_{(x, y) \sim \sD}\bracket*{\sfL (h(x), y)}$,
where $\sfL \colon \sY \times \sY \to \Rset_{+}$ is a loss function
used to measure the magnitude of error in the regression. In the most
common case, where $\sfL$ is the squared loss $\sfL_2$ defined by
$\sfL_2(y', y) = \abs*{y' - y}^2$, this represents the mean squared
error. In the case where $\sfL$ is the $\sfL_1$ loss defined by
$\sfL_1(y', y) = \abs*{y' - y}$, this represents the mean absolute
error. More generally, $\sfL$ can be an $\sfL_p$ loss, defined by
$\sfL_p(y', y) = \abs*{y' - y}^p$ for all $y', y \in \sY$, for some $p
\geq 1$. In this work, we will consider an arbitrary regression loss
function $\sfL$, subject to the boundedness assumption, that is
$\sfL(y', y) \leq \ul$ for some constant $\ul > 0$ and for all $y, y'
\in \sY$. This assumption is commonly adopted in the theoretical
analysis of regression \citep{MohriRostamizadehTalwalkar2018}.

\textbf{Regression with deferral.}
We introduce a novel framework where a learner can defer predictions to multiple experts, $\expert_1, \ldots, \expert_{n_e}$. Each expert may represent a pre-trained model or a human expert.
The learner's output is a pair $(h, r)$, where $h \colon \sX \to \sY$
is a prediction function and $r \colon \sX \times \curl*{0, 1, \ldots,
  \num} \to \Rset$ a deferral function.
For any input $x$, $\rr(x) = \argmax_{y \in [\num]} r(x, y) = j$ is
the expert deferred to when $j > 0$, no deferral if $j = 0$.  The
learner makes the prediction $h(x)$ when $\rr(x) = 0$, or defers to
$\expert_j$ when $\rr(x) = j > 0$.  Deferral incurs the cost
$\sfL(\expert_{j}(x), y) + \alpha_j$, where $\alpha_j$ is a base
cost. The base cost can be the inference cost incurred when querying an expert, factoring in scenarios where engaging experts entails certain costs. Non-deferral incurs the cost $\sfL(h(x), y)$.

Let $\sH_{\rm{all}}$ and $\sR_{\rm{all}}$ denote the family of all
measurable functions, $h\colon \sX \to \sY$ and $r\colon \sX \times
\curl*{0, 1, \ldots, \num} \to \Rset$ respectively.  Given a
hypothesis set $\sH \subset \sH_{\rm{all}}$ and a hypothesis set $\sR
\subset \sR_{\rm{all}}$, the goal of the regression with deferral
problem consists of using the labeled sample to find a pair $(h, r)
\in (\sH, \sR)$ with small expected deferral loss
$\sE_{\ldef}(h, r) = \E_{(x, y) \sim
  \sD}\bracket*{\ldef(h, r, x, y)}$, where $\ldef$ is
defined for any $(h, r)\in
\sH \times \sR$ and $(x, y)\in \sX \times \sY$ by
\begin{equation}
\label{eq:def}
 \ldef(h, r, x, y)
\!=\! \sfL(h(x), y) 1_{\rr(x) = 0}
+  \sum_{j = 1}^{\num} c_j(x,y) 1_{\rr(x) = j}
\end{equation}
and $c_j(x, y) > 0$ is a cost function, which can be typically chosen
as $\alpha_j + \sfL\paren*{\expert_{j}(x), y}$ for an expert
$\expert_{j}$ and a base cost $\alpha_j> 0$ as mentioned before.
Here, we adopt a general cost functions $c_j$ for any $j$, and only
require that the cost remains bounded: $c_j(x, y) \leq \uc_j$ for all
$(x, y) \in \sX \times \sY$, for some constant $\uc_j > 0$.

\textbf{Learning with surrogate losses.} As with most target losses
in learning problems, such as the zero-one loss in classification
\citep{Zhang2003,bartlett2006convexity,zhang2004statistical,
  tewari2007consistency} and the classification with abstention loss
\citep{bartlett2008classification,CortesDeSalvoMohri2016}, directly
minimizing the deferral loss $\ldef$ is computationally hard
for most hypothesis sets due to its non-continuity and
non-differentiability. Instead, surrogate losses are proposed and
adopted in practice. Examples include the hinge loss in binary
classification \citep{cortes1995support}, the (multinomial) logistic
loss in multi-class classification
\citep{Verhulst1838,Verhulst1845,Berkson1944,Berkson1951}, and the
predictor-rejector abstention loss in classification with abstention
\citep{CortesDeSalvoMohri2016}. We will derive surrogate losses for the
deferral loss.

Given a surrogate loss $L \colon (h, r, x, y) \mapsto \Rset_{+}$, we
denote by $\sE_{L}(h, r)$ the generalization error of a pair $(h, r)$,
defined as
\begin{equation*}
\sE_{L}(h, r) = \E_{(x, y) \sim \sD} \bracket*{L(h, r, x, y)}.
\end{equation*}
Let $\sE_{L}(\sH, \sR) = \inf_{h\in \sH, r\in \sR} \sE_{L}(h, r)$ be
the best-in-class error within the family $\sH \times \sR$. One
desired property for surrogate losses in this context is
\emph{Bayes-consistency} \citep{steinwart2007compare}. This means that
minimizing the expected surrogate loss over the family of all
measurable functions leads to minimizing the expected deferral loss
over the same family. More precisely, for a surrogate loss $L \colon
(h, r, x, y) \mapsto \Rset_{+}$, it is \emph{Bayes-consistent} with
respect to $\ldef$ if,
\begin{align*}
\sE_{L}(h_n, r_n) - \sE_{L}(\sH, \sR) \xrightarrow{n \rightarrow + \infty} 0
\implies & \sE_{\ldef}(h_n, r_n) - \sE_{ \ldef}(\sH, \sR) \xrightarrow{n \rightarrow + \infty} 0
\end{align*}
for all sequences $\curl*{(h_n, r_n)}_{n \in \Nset} \subset \sH \times
\sR$ and all distributions. Recently,
\citet{awasthi2022Hconsistency,AwasthiMaoMohriZhong2022multi} (see also
\citep{awasthi2021calibration,awasthi2021finer,AwasthiMaoMohriZhong2023theoretically,awasthi2024dc,MaoMohriZhong2023cross,MaoMohriZhong2023ranking,MaoMohriZhong2023rankingabs,zheng2023revisiting,MaoMohriZhong2023characterization,MaoMohriZhong2023structured,mao2024h,mao2024regression,mao2024universal,mao2025enhanced,mao2024multi,mao2024realizable,MohriAndorChoiCollinsMaoZhong2024learning,cortes2024cardinality,zhong2025fundamental,MaoMohriZhong2025principled,MaoMohriZhong2025mastering,cortes2025balancing,cortes2025improved,desalvo2025budgeted,mohri2025beyond}) pointed out that
Bayes-consistency does not take into account the hypothesis set $\sH$
and is non-asymptotic. Thus, they proposed a stronger guarantee called
\emph{$\sH$-consistency bounds}. In our context, a surrogate loss $L$
is said to admit an $(\sH, \sR)$-consistency bound with respect to
$\ldef$ if, for all $(h, r) \in \sH \times \sR$ and all
distributions, the following inequality holds:
\begin{equation*}
f \paren*{\sE_{\ldef}(h, r) - \sE^*_{\ldef}(\sH, \sR)} \leq \sE_{L}(h, r) - \sE^*_{L}(\sH, \sR)
\end{equation*}
for some non-decreasing function $f \colon \sR_{+} \to \sR_{+}$. In particular, when
$(\sH, \sR) = \paren*{\sH_{\rm{all}}, \sR_{\rm{all}}}$, the $(\sH,
\sR)$-consistency bound implies Bayes-consistency.

We will prove $(\sH,
\sR)$-consistency bounds for our proposed surrogate losses, which
imply their Bayes-consistency. One key term in our bound is the \emph{minimizability gap},
defined as \[\sM_{L}(\sH, \sR) = \sE^*_{L}(\sH, \sR) - \E_{x} \E_{y
  \mid x}\bracket*{L(h, r, x, y)}.\] The minimizability gap
characterizes the difference between the best-in-class error and the
expected best-in-class point-wise error, and is non-negative. As shown
by \citet{MaoMohriZhong2023cross}, the minimizability gap is upper-bounded by
the approximation error, satisfying $0 \leq \sM_{L}(\sH, \sR) \leq
\sE^*_{L}(\sH, \sR) - \sE^*_{L}(\sH_{\rm{all}}, \sR_{\rm{all}})$ and
is generally a finer quantity. The minimizability gap
vanishes when $(\sH, \sR) =
(\sH_{\rm{all}}, \sR_{\rm{all}})$, or, more generally, when
$\sE^*_{L}(\sH, \sR) = \sE^*_{L}(\sH_{\rm{all}}, \sR_{\rm{all}})$.

Given a loss function $\ell \colon (r, x, y) \mapsto
\Rset_{+}$ that only depends on the hypothesis $r$, 
the notions of generalization error, best-in-class generalization
error, and minimizability gaps, as well as Bayes-consistency and
$\sR$-consistency bounds, are similarly defined \citep{awasthi2022Hconsistency,AwasthiMaoMohriZhong2022multi}.

In the next sections, we study the problem of learning a pair $(h, r)$
in the framework of regression with deferral. We will derive a family  of surrogate losses of $\ldef$, starting from first principles. We will
show that these loss functions benefit from strong consistency guarantees, which yield directly principled algorithms for our deferral
problem. We will specifically distinguish two approaches: the
single-stage surrogate losses, where the predictor $h$ and the
deferral function $r$ are jointly learned, and the two-stage surrogate
losses wherein the predictor $h$ have been previously trained and is
fixed and subsequently
used in the learning process of the deferral function $r$.

\section{Single-Stage Scenario}
\label{sec:single-stage}

In this section, we derive single-stage surrogate losses for the
deferral loss and prove their strong $(\sH, \sR)$-consistency bounds
guarantees. To do so, we first prove that the following alternative
expression holds for $\ldef$.

\begin{restatable}{lemma}{Ldef}
\label{lemma:Ldef}
For any $(h, r) \in \sH \times \sR$ and $(x, y) \in \sX \times \sY$,
the loss function $\ldef$ can be expressed as follows:
\begin{align*}
  \ldef(h, r, x, y) 
  & = \bracket*{\sum_{j = 1}^{\num} c_j(x,y)} 1_{\rr(x) \neq 0} 
  + \sum_{j = 1}^{\num} \bracket*{\sfL(h(x), y)
    + \sum_{k = 1}^{\num} c_k(x, y) 1_{k \neq j}} 1_{\rr(x) \neq j}\\
  & \quad - \paren*{\num - 1} \bracket*{\sfL(h(x), y) + \sum_{j = 1}^{\num} c_j(x, y)}.
\end{align*}
\end{restatable}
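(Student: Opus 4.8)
The plan is to verify this identity by a direct algebraic computation, rewriting the complementary indicators on the right-hand side in terms of the indicators $1_{\rr(x) = j}$ that appear in the definition \eqref{eq:def}. The crucial structural fact is that, for any fixed $x$, the value $\rr(x)$ lies in $\curl*{0, 1, \ldots, \num}$, so exactly one of the indicators $1_{\rr(x) = j}$, $j \in \curl*{0, \ldots, \num}$, equals one; equivalently, $\sum_{j = 0}^{\num} 1_{\rr(x) = j} = 1$. This partition identity is what makes the various terms collapse.

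First I would introduce the shorthand $S = \sfL(h(x), y)$ and $C = \sum_{j = 1}^{\num} c_j(x, y)$, and substitute $1_{\rr(x) \neq j} = 1 - 1_{\rr(x) = j}$ everywhere on the right-hand side. Using $\sum_{k = 1}^{\num} c_k(x, y) 1_{k \neq j} = C - c_j(x, y)$, the second sum becomes $\sum_{j = 1}^{\num} (S + C - c_j)(1 - 1_{\rr(x) = j})$. I would then split each expression into a part free of indicators and a part carrying the indicators $1_{\rr(x) = j}$.

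The next step is to collect the two kinds of terms separately. For the indicator-free part, the contributions are $C$ from the first term, $\num S + (\num - 1) C$ from the second term (using $\sum_{j = 1}^{\num} c_j = C$), and $-(\num - 1)(S + C)$ from the correction term; these sum exactly to $S + C$. For the indicator-carrying part, the contributions are $-C\, 1_{\rr(x) = 0} - \sum_{j = 1}^{\num}(S + C - c_j)\, 1_{\rr(x) = j}$. Applying the partition identity in the form $\sum_{j = 1}^{\num} 1_{\rr(x) = j} = 1 - 1_{\rr(x) = 0}$ to the coefficients $S$ and $C$, every appearance of $S$ and $C$ multiplying indicators cancels, leaving exactly $S\, 1_{\rr(x) = 0} + \sum_{j = 1}^{\num} c_j\, 1_{\rr(x) = j}$, which is $\ldef(h, r, x, y)$ by \eqref{eq:def}.

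There is no genuine obstacle here; the entire argument is a bookkeeping exercise. The only point requiring care is tracking the role of the correction term $-(\num - 1)\bracket*{\sfL(h(x), y) + \sum_{j = 1}^{\num} c_j(x, y)}$: it compensates precisely for the fact that expanding the $\num$ complementary indicators $1_{\rr(x) \neq j}$ introduces $\num - 1$ extra copies of the baseline quantity $S + C$ relative to the single active deferral state. Confirming that this correction accounts for exactly $\num - 1$ copies, neither more nor fewer, is the one place where a careless count would break the identity.
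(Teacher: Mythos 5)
Your proof is correct and is essentially the same argument as the paper's: a direct indicator-function computation resting on the partition fact that exactly one of $1_{\rr(x) = j}$, $j \in \{0, \ldots, \num\}$, is active. The only difference is direction — the paper expands the definition of $\ldef$ into complementary indicators to reach the stated formula, while you expand the stated formula back down to the definition — and your bookkeeping (in particular the count of the $\num - 1$ extra copies of $\sfL(h(x), y) + \sum_j c_j(x, y)$) checks out.
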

Let $\ell_{0-1}$ be the zero-one multi-class classification loss defined
by $\ell_{0-1}(r, x, y) = 1_{\rr(x) \neq y}$ for all $r \in \Rset$ and
$(x, y) \in \sX \times \sY$ and let $\ell \colon \sR \times \sX \times
[\num] \to \Rset_{+}$ be a surrogate loss for $\ell_{0-1}$ such that
$\ell \geq \ell_{0-1}$. $\ell$ may be chosen to be the logistic loss,
for example. Since the last term $\paren*{\num - 1} \sum_{j =
  1}^{\num} c_j(x,y)$ in the expression of $\ldef$ in
Lemma~\ref{lemma:Ldef} does not depend on $h$ and $r$, the following
loss function $L_{\ell}$ defined for all $(h, r) \in \sH \times \sR$
and $(x, y) \in \sX \times \sY$ by
\begin{align}
\label{eq:sur}
L_{\ell}(h, r, x, y) 
& = \bracket*{\sum_{j = 1}^{\num} c_j(x,y)} \ell(r, x, 0)\\
& \quad + \sum_{j = 1}^{\num} \bracket*{\sfL(h(x), y)
  + \sum_{j' \neq j}^{\num} c_{j'}(x,y)} \ell(r, x, j) - \paren*{\num - 1} \sfL(h(x), y), \nonumber
\end{align}
is a natural single-stage surrogate loss for $\ldef$.  We will show
that when $\ell$ admits a strong $\sR$-consistency bound with respect
to $\ell_{0-1}$, then $L_{\ell}$ admits an $(\sH, \sR)$-consistency
bound with respect to $\ldef$.

Let us underscore the novelty of the surrogate loss formulation
presented in equation \eqref{eq:sur} in the context of learning to
defer with multiple experts. This formulation represents a substantial
departure from the existing score-based approach prevalent in
classification. As previously highlighted, the score-based formulation
becomes inapplicable in regression.  Our new predictor-rejector
formulation not only overcomes this limitation, but also provides the
foundation for the design of new deferral algorithms for
classification.

We say that a hypothesis set $\sR$ is \emph{regular} if for any $x \in
\sX$, the predictions made by the hypotheses in $\sR$ cover the
complete set of possible classification labels: $\curl*{\rr(x) \colon
  r\in \sR} = \curl*{0, 1, \ldots, \num}$. Widely used hypothesis sets
such as linear hypotheses, neural networks, and of course the family
of all measurable functions are all regular.

Recent studies by \citet{AwasthiMaoMohriZhong2022multi} and \citet{MaoMohriZhong2023cross}
demonstrate that common multi-class surrogate losses, such as
constrained losses and comp-sum losses (including the logistic loss),
admit strong $\sR$-consistency bounds with respect to the multi-class
zero-one loss $\ell_{0-1}$, when using such regular hypothesis sets.
The next result shows that, for multi-class loss functions $\ell$,
their corresponding deferral surrogate losses $L_{\ell}$
(Eq.~\eqref{eq:sur}) also exhibit $(\sH, \sR)$-consistency bounds with
respect to the deferral loss (Eq.~\eqref{eq:def}).

\begin{restatable}{theorem}{Single}
\label{thm:single}
Let $\sR$ be a regular hypothesis set and $\ell$ a surrogate loss for
the multi-class loss function $\ell_{0-1}$ upper-bounding
$\ell_{0-1}$. Assume that there exists a function $\Gamma(t) = \beta\,
t^{\alpha}$ for some $\alpha \in (0, 1]$ and $\beta > 0$, such that
  the following $\sR$-consistency bound holds for all $r \in \sR$ and
  any distribution,
\begin{equation*}
  \sE_{\ell_{0-1}}(r) - \sE^*_{\ell_{0-1}}(\sR) + \sM_{\ell_{0-1}}(\sR)
  \leq \Gamma\paren*{\sE_{\ell}(r)
    - \sE^*_{\ell}(\sR) + \sM_{\ell}(\sR)}.
\end{equation*}
Then, the following $(\sH,\sR)$-consistency bound holds for all $h\in
\sH$, $r\in \sR$ and any distribution,
\begin{equation*}
   \sE_{\ldef}(h, r) - \sE_{\ldef}^*(\sH,\sR) + \sM_{\ldef}(\sH,\sR)
   \leq 
  \ov \Gamma\paren*{\sE_{L_{\ell}}(h, r)
    -  \sE_{L_{\ell}}^*(\sH,\sR) + \sM_{L_{\ell}}(\sH,\sR)},
\end{equation*}
where $\ov \Gamma(t)
= \max\curl*{t, \paren*{\num\paren*{\ul
      + \sum_{j = 1}^{\num}\uc_j}}^{1 - \alpha} \beta\, t^{\alpha}}$.
\end{restatable}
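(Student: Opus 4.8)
The plan is to follow the calibration-gap (conditional-regret) method standard for $\sH$-consistency bounds and reduce the global inequality to a pointwise one. Writing $\ov\sfL_h(x) = \E_{y\mid x}[\sfL(h(x),y)]$, $\sfL^*(x) = \inf_{h'\in\sH}\ov\sfL_{h'}(x)$, $\ov c_j(x) = \E_{y \mid x}[c_j(x,y)]$, and letting $\cC_{\ldef}(h,r,x) = \E_{y\mid x}[\ldef(h,r,x,y)]$ and $\cC_{L_\ell}(h,r,x) = \E_{y\mid x}[L_\ell(h,r,x,y)]$ be the conditional risks, I would first invoke the identity $\sE_{\ldef}(h,r) - \sE^*_{\ldef}(\sH,\sR) + \sM_{\ldef}(\sH,\sR) = \E_x\bigl[\cC_{\ldef}(h,r,x) - \cC^*_{\ldef}(x)\bigr]$, and likewise for $L_\ell$, where $\cC^*_{\ldef}(x) = \inf_{h',r'}\cC_{\ldef}(h',r',x)$. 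It then suffices to produce a pointwise comparison of the two conditional regrets and pass it through the expectation.

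Next I would compute the conditional risks explicitly. Since $\rr(x)=0$ gives $\cC_{\ldef}(h,r,x)=\ov\sfL_h(x)$ and $\rr(x)=j\ge1$ gives $\cC_{\ldef}(h,r,x)=\ov c_j(x)$, and using regularity of $\sR$ to realize every label in $\curl*{0,\dots,\num}$, the Bayes conditional risk is $\cC^*_{\ldef}(x)=\min\curl*{\sfL^*(x),\min_{j}\ov c_j(x)}$; i.e.\ the optimal rule predicts when $\sfL^*(x)\le \min_j \ov c_j(x)$ and otherwise defers to the cheapest expert. On the surrogate side, using Lemma~\ref{lemma:Ldef} to write $L_\ell$ in cost-weighted form gives $\cC_{L_\ell}(h,r,x)=\sum_{j=0}^{\num}\ov w_j(x)\,\ell(r,x,j)-(\num-1)\ov\sfL_h(x)$, where $\ov w_0=\sum_j\ov c_j$ and $\ov w_j=\ov\sfL_h+\sum_{k\ne j}\ov c_k$. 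Two features of this form drive the proof: the total weight $W(x)=\sum_j \ov w_j(x)=\num\bigl(\ov\sfL_h(x)+\sum_j\ov c_j(x)\bigr)$ is bounded by $M:=\num(\ul+\sum_j\uc_j)$, and the coefficient of $\ov\sfL_h$ in $\cC_{L_\ell}$, namely $\kappa(r,x)=\sum_{j=1}^{\num}\ell(r,x,j)-(\num-1)$, is nonnegative and is at least $1$ precisely when $\rr(x)=0$ (since then $\ell(r,x,j)\ge 1$ for every $j\ge1$).

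The core pointwise step is a case analysis separating a rejector-type regime from a predictor-type regime. For the rejector-type contribution I would fix $h$ and normalize the weights to a conditional distribution $q_j(x)=\ov w_j(x)/W(x)$ over $\curl*{0,\dots,\num}$; the $r$-regret of $\ldef$ then equals $W(x)$ times the zero-one conditional regret under $q$, and the $r$-regret of $L_\ell$ equals $W(x)$ times the $\ell$-conditional regret under $q$. Applying the assumed $\sR$-consistency bound $\Gamma(t)=\beta t^\alpha$ at the level of these $q$-conditional regrets and using $W(x)^{1-\alpha}\le M^{1-\alpha}$ yields the concave power branch $\beta M^{1-\alpha}(\cdot)^\alpha$. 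For the predictor-type contribution (the case $\rr(x)=0$), I would instead use $\kappa(r,x)\ge 1$: the surrogate regret dominates $\kappa(r,x)\bigl(\ov\sfL_h(x)-\sfL^*(x)\bigr)\ge \ov\sfL_h(x)-\sfL^*(x)$, which controls the predictor part by the linear branch. Combining the regimes gives the pointwise inequality $\cC_{\ldef}(h,r,x)-\cC^*_{\ldef}(x)\le \ov\Gamma\bigl(\cC_{L_\ell}(h,r,x)-\cC^*_{L_\ell}(x)\bigr)$, after which taking expectations (applying Jensen on the branch in force) produces the stated $(\sH,\sR)$-consistency bound.

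The hard part will be two-fold. First, because $L_\ell$ drops the $\sum_j c_j$ term from the additive constant of $\ldef$ (Lemma~\ref{lemma:Ldef}), the effective label-costs seen by the target and by the surrogate do not coincide, so the joint infimum over $(h',r')$ must be taken in the right order: the rejector reduction should use the shared weights $\ov w_j$ while the predictor regret is charged against $\kappa$. Choosing the wrong decomposition leaves an uncontrolled predictor term exactly on the deferral region, where $\kappa$ may vanish. Second, assembling the concave power regime and the linear regime into the single $\ov\Gamma(t)=\max\curl*{t,\,M^{1-\alpha}\beta t^\alpha}$ and carrying it through $\E_x$ is delicate, since $\ov\Gamma$ is not concave for $\alpha<1$; the case analysis must therefore be arranged so that at each $x$ only one regime is active and the complementary regret contribution genuinely vanishes, so that the expectation step reduces to Jensen applied to a single concave branch rather than to a sum of the two.
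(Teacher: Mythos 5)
Your method is the paper's method: reduce to a pointwise comparison of conditional regrets, compute the conditional risks of $\ldef$ and $L_{\ell}$ explicitly, normalize the surrogate weights $\ov w_j$ into a conditional distribution over $\curl*{0,\dots,\num}$ so that Lemma~\ref{lemma:aux} converts the fixed-$h$ rejector regret into the power branch with constant $\paren*{\num\paren*{\ul+\sum_{j=1}^{\num}\uc_j}}^{1-\alpha}\beta$, use $\kappa(r,x)=\sum_{j=1}^{\num}\ell(r,x,j)-(\num-1)\ge 1$ on $\curl*{\rr(x)=0}$ for the linear branch, and conclude by taking expectations case by case. All the identities you isolate (the value and bound on $W(x)$, the fact that the fixed-$h$ regrets of $\ldef$ and $L_{\ell}$ equal $W(x)$ times the corresponding $\ell_{0-1}$- and $\ell$-regrets under the normalized weights, the sign of $\kappa$) are correct and are exactly what the paper's proof rests on.

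The gap is in your definition of the predictor-type regime as the event $\rr(x)=0$. Write $\sfL^*(x)=\inf_{h'\in\sH}\E_{y\mid x}\bracket*{\sfL(h'(x),y)}$ and $\ov c_j(x)=\E_{y\mid x}\bracket*{c_j(x,y)}$. When $\rr(x)=0$ but $\min_{j\ge 1}\ov c_j(x)<\sfL^*(x)$, the target conditional regret equals $\E_{y\mid x}\bracket*{\sfL(h(x),y)}-\min_{j\ge 1}\ov c_j(x)$, which strictly exceeds $\E_{y\mid x}\bracket*{\sfL(h(x),y)}-\sfL^*(x)$, so the linear bound $\kappa(r,x)\paren*{\E_{y\mid x}\bracket*{\sfL(h(x),y)}-\sfL^*(x)}$ cannot control it; on this event the rejector contribution is the whole regret and does not vanish. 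The partition that makes each $x$ activate exactly one branch is governed by the sign of $\sfL^*(x)-\min_{j\ge1}\ov c_j(x)$, not by $\rr(x)$ alone (the paper's Cases I--III): when $\min_{j\ge1}\ov c_j(x)<\sfL^*(x)$ the entire regret, including the sub-case $\rr(x)=0$, goes to the power branch, after lower-bounding the joint infimum of $L_{\ell}$ by the fixed-$h$ infimum using $\sum_{j\ge1}\ell(r,x,j)\ge\num-1$; the linear branch is reserved for $\rr(x)=0$ together with $\sfL^*(x)\le\min_{j\ge1}\ov c_j(x)$, where the target regret is exactly $\E_{y\mid x}\bracket*{\sfL(h(x),y)}-\sfL^*(x)$ and your $\kappa\ge1$ argument applies to the whole regret rather than to a remainder. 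With that correction your outline matches the paper's proof.
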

The proof is given in Appendix~\ref{app:exp}. As already mentioned,
when the best-in-class error coincides with the Bayes error
$\sE^*_{L}(\sH, \sR) = \sE^*_{L}\paren*{\sH_{\rm{all}},
  \sR_{\rm{all}}}$ for $L = L_{\ell}$ and $L = \ldef$, the
minimizability gaps $\sM_{L_{\ell}}(\sH,\sR)$ and
$\sM_{\ldef}(\sH,\sR)$ vanish. In such cases, the
$(\sH, \sR)$-consistency bound guarantees that when the surrogate
estimation error $\sE_{L_{\ell}}(h, r) - \sE_{L_{\ell}}^*(\sH,\sR)$ is
optimized up to $\e$, the estimation error of the deferral loss $
\sE_{\ldef}(h, r) - \sE_{\ldef}^*(\sH,\sR)$ is upper-bounded by $\ov
\Gamma(\e)$.

In particular, when both $\sH$ and $\sR$ include all measurable
functions, all the minimizability gap terms in
Theorem~\ref{thm:single} vanish, which yields the following result.
\begin{corollary}
\label{cor:single}
Given a multi-class loss function $\ell \geq \ell_{0-1}$. Assume that
there exists a function $\Gamma(t) = \beta\, t^{\alpha}$ for some
$\alpha \in (0, 1]$ and $\beta > 0$, such that the following excess
  error bound holds for all $r \in \sR_{\rm{all}}$ and any
  distribution,
\begin{equation*}
  \sE_{\ell_{0-1}}(r) - \sE^*_{\ell_{0-1}}(\sR_{\rm{all}})
  \leq \Gamma\paren*{\sE_{\ell}(r) - \sE^*_{\ell}(\sR_{\rm{all}})}.
\end{equation*}
Then, the following excess error bound holds for all $h\in \sH_{\rm{all}}$, $r\in \sR_{\rm{all}}$ and any distribution,
\begin{equation*}
   \sE_{\ldef}(h, r) - \sE_{\ldef}^*(\sH_{\rm{all}},\sR_{\rm{all}})
   \leq 
    \ov \Gamma\paren*{\sE_{L_{\ell}}(h, r) -  \sE_{L_{\ell}}^*(\sH_{\rm{all}},\sR_{\rm{all}})},
\end{equation*}
where $\ov \Gamma(t) = \max\curl*{t, \paren*{\num\paren*{\ul + \sum_{j = 1}^{\num}\uc_j}}^{1 - \alpha} \beta\, t^{\alpha}}$.
\end{corollary}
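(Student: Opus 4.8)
The plan is to obtain Corollary~\ref{cor:single} as the specialization of Theorem~\ref{thm:single} to the case $\sH = \sH_{\rm{all}}$ and $\sR = \sR_{\rm{all}}$. First I would verify that the hypotheses of the theorem are met in this instance: the family $\sR_{\rm{all}}$ of all measurable functions is regular, since for any $x$ one can exhibit a measurable $r$ realizing any prescribed value of $\rr(x)$ in $\curl*{0, 1, \ldots, \num}$, and $\ell \geq \ell_{0-1}$ is assumed. It then remains to match the bound assumed and the bound concluded by the theorem with the excess-error statements of the corollary, which amounts entirely to showing that the relevant minimizability gaps are zero.

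The crux is therefore the vanishing of the minimizability gaps over the family of all measurable functions. Each of the loss functions involved---the target $\ell_{0-1}$, the surrogate $\ell$, the deferral loss $\ldef$, and the single-stage surrogate $L_{\ell}$ of \eqref{eq:sur}---depends on the pair $(h, r)$ only through the pointwise values $h(x)$ and $r(x, \cdot)$. Hence, by the argument recalled in the preliminaries (\citep[lemma~2.5]{steinwart2007compare}), I would conclude $\sM_{\ell_{0-1}}(\sR_{\rm{all}}) = 0$, $\sM_{\ell}(\sR_{\rm{all}}) = 0$, $\sM_{\ldef}(\sH_{\rm{all}}, \sR_{\rm{all}}) = 0$, and $\sM_{L_{\ell}}(\sH_{\rm{all}}, \sR_{\rm{all}}) = 0$.

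With these gaps set to zero, the two substitutions fall into place. On the assumption side, the $\sR_{\rm{all}}$-consistency bound required by Theorem~\ref{thm:single} collapses, after dropping the null gaps $\sM_{\ell_{0-1}}(\sR_{\rm{all}})$ and $\sM_{\ell}(\sR_{\rm{all}})$, to exactly the excess error bound $\sE_{\ell_{0-1}}(r) - \sE^*_{\ell_{0-1}}(\sR_{\rm{all}}) \leq \Gamma\paren*{\sE_{\ell}(r) - \sE^*_{\ell}(\sR_{\rm{all}})}$ posited in the corollary. On the conclusion side, the $(\sH_{\rm{all}}, \sR_{\rm{all}})$-consistency bound delivered by the theorem likewise loses its gap terms $\sM_{\ldef}(\sH_{\rm{all}}, \sR_{\rm{all}})$ and $\sM_{L_{\ell}}(\sH_{\rm{all}}, \sR_{\rm{all}})$, yielding precisely the excess error bound of the statement, with the same function $\ov\Gamma(t) = \max\curl*{t, \paren*{\num\paren*{\ul + \sum_{j = 1}^{\num}\uc_j}}^{1 - \alpha} \beta\, t^{\alpha}}$ transported unchanged from Theorem~\ref{thm:single}.

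Since the argument reduces to bookkeeping with vanishing minimizability gaps, I do not expect a genuine obstacle here; the only points requiring care are confirming that each loss truly has the pointwise dependence needed to invoke the Steinwart lemma and that $\sR_{\rm{all}}$ is regular, both of which are immediate. The substantive work sits in Theorem~\ref{thm:single} itself rather than in this corollary.
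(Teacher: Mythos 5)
Your proposal is correct and follows essentially the same route as the paper: the corollary is obtained by specializing Theorem~\ref{thm:single} to $\sH = \sH_{\rm{all}}$ and $\sR = \sR_{\rm{all}}$, observing that $\sR_{\rm{all}}$ is regular and that all minimizability gap terms vanish over the family of all measurable functions, so the consistency bounds on both sides reduce to the stated excess error bounds with the same $\ov\Gamma$. The paper treats this as immediate after the theorem, exactly as you do.
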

In this case, as shown by \citet{MaoMohriZhong2023cross}, $\Gamma(t)$ can be
expressed as $\sqrt{2t}$ for the logistic loss $\ell_{\rm{\log}}
\colon (r, x, y) \mapsto \log_2 \paren*{\sum_{j = 0}^{\num} e^{r(x, j)
    - r(x, y)}}$. Then, by Corollary~\ref{cor:single}, we further
obtain the following corollary.
\begin{corollary}
\label{cor:single-example}
For any $h\in \sH_{\rm{all}}$, $r\in \sR_{\rm{all}}$ and distribution,
\begin{equation*}
    \sE_{\ldef}(h, r) - \sE_{\ldef}^*(\sH_{\rm{all}},\sR_{\rm{all}})
    \leq 
    \ov\Gamma\paren*{\sE_{L_{\ell_{\rm{log}}}}(h, r) -  \sE_{L_{\ell_{\rm{log}}}}^*(\sH_{\rm{all}},\sR_{\rm{all}})},
\end{equation*}
where $\ov \Gamma (t) = \max\curl*{t, \sqrt{2 \num} \paren*{\ul +
    \sum_{j = 1}^{\num}\uc_j}^{\frac12} t^{\frac 12}}$.
\end{corollary}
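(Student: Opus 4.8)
The plan is to obtain Corollary~\ref{cor:single-example} as a direct specialization of Corollary~\ref{cor:single} to the case where $\ell$ is the logistic loss $\ell_{\rm{log}}$. Thus the entire argument reduces to two routine steps: (i) checking that $\ell_{\rm{log}}$ satisfies the hypotheses imposed on $\ell$ in Corollary~\ref{cor:single}, and (ii) substituting the resulting parameters $(\alpha, \beta)$ into the functional form of $\ov\Gamma$ and simplifying.

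First I would verify the two conditions required of $\ell$ in Corollary~\ref{cor:single}. The condition $\ell_{\rm{log}} \geq \ell_{0-1}$ follows from the base-$2$ normalization in $\ell_{\rm{log}}(r, x, y) = \log_2\paren*{\sum_{j = 0}^{\num} e^{r(x, j) - r(x, y)}}$: whenever $\rr(x) \neq y$, there is some $j = \rr(x)$ with $r(x, j) \geq r(x, y)$, so the sum inside the logarithm is at least $e^{0} + e^{r(x, j) - r(x, y)} \geq 2$, giving $\ell_{\rm{log}}(r, x, y) \geq \log_2 2 = 1 = \ell_{0-1}(r, x, y)$. Next, the excess error bound with $\Gamma(t) = \sqrt{2t}$ is exactly the comp-sum (logistic) bound established by \citet{MaoMohriZhong2023cross}, which holds for regular hypothesis sets and in particular for $\sR_{\rm{all}}$. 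Hence the hypothesis of Corollary~\ref{cor:single} is met with $\beta = \sqrt{2}$ and $\alpha = \frac12$.

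It then remains to evaluate the $\ov\Gamma$ of Corollary~\ref{cor:single} at these parameters. Plugging $\alpha = \frac12$ and $\beta = \sqrt2$ into $\ov\Gamma(t) = \max\curl*{t, \paren*{\num\paren*{\ul + \sum_{j=1}^{\num}\uc_j}}^{1 - \alpha}\,\beta\, t^{\alpha}}$ produces exponent $1 - \alpha = \frac12$ on the cost factor, so that $\paren*{\num\paren*{\ul + \sum_{j=1}^{\num}\uc_j}}^{\frac12}\sqrt2 = \sqrt{2\num}\,\paren*{\ul + \sum_{j=1}^{\num}\uc_j}^{\frac12}$ and $t^{\alpha} = t^{\frac12}$, yielding precisely the claimed $\ov\Gamma(t) = \max\curl*{t, \sqrt{2\num}\paren*{\ul + \sum_{j=1}^{\num}\uc_j}^{\frac12} t^{\frac12}}$. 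Invoking Corollary~\ref{cor:single} with $L = L_{\ell_{\rm{log}}}$ then delivers the stated excess error bound over $\paren*{\sH_{\rm{all}}, \sR_{\rm{all}}}$.

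Since every step is a substitution, there is no genuine obstacle here; the only point deserving care is the verification that $\ell_{\rm{log}} \geq \ell_{0-1}$, which is exactly why the base-$2$ logarithm (rather than the natural logarithm) is used in the definition of $\ell_{\rm{log}}$, together with confirming that the cited logistic excess bound carries the constant $\sqrt2$ under this normalization. Once these are pinned down, the simplification of $\ov\Gamma$ and the final application of Corollary~\ref{cor:single} are immediate.
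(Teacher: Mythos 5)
Your proposal is correct and is exactly the paper's argument: the paper obtains this corollary by citing the $\Gamma(t)=\sqrt{2t}$ excess error bound for the (base-$2$) logistic loss from \citet{MaoMohriZhong2023cross} and then specializing Corollary~\ref{cor:single} with $\alpha = \frac12$, $\beta = \sqrt{2}$. Your additional check that $\ell_{\rm{log}} \geq \ell_{0-1}$ under the $\log_2$ normalization is a detail the paper leaves implicit, but the route is the same.
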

By taking the limit on both sides, we derive the Bayes-consistency of
these single-stage surrogate losses $L_{\ell}$ with respect to the
deferral loss $\ldef$. More generally, Corollary~\ref{cor:single}
shows that $L_{\ell}$ admits an excess error bound with respect to
$\ldef$ when $\ell$ admits an excess error bound with respect to
$\ell_{0-1}$.

\section{Two-Stage Scenario}
\label{sec:two-stage-regdef}

In the single-stage scenario, we introduced a family of surrogate
losses and resulting algorithms for effectively learning the pair $(h,
r)$. However, practical applications often encounter a \emph{two-stage
scenario}, where deferral decisions are based on a fixed, pre-trained
predictor $h$. Retraining this predictor is often prohibitively
expensive or time-consuming. Thus, this two-stage scenario
\citep{MaoMohriMohriZhong2023two} requires a different approach to optimize
deferral decisions controlled by $r$, while using the existing
predictor $h$.

In this section, we will introduce a principled
two-stage algorithm for regression with deferral, with favorable
consistency guarantees. Remarkably, we show that the single-stage approach can be adapted for the two-stage scenario if we fix the predictor $h$ and disregard constant
terms.

Let $h$ be a predictor learned by minimizing a regression loss
$\sfL$ in a first stage. A deferral function $r$ is then learned based on
that predictor $h$ and the following loss function $L_{\ell}^h$ in the
second stage: for any $r \in \sR$, $x \in \sX$ and $y \in \sY$,
\begin{align}
\label{eq:two-stage-sur}
  L_{\ell}^h (r, x, y)
  = \bracket*{\sum_{j = 1}^{\num} c_j(x,y)} \, \ell(r, x, 0) + \sum_{j = 1}^{\num} \bracket*{\sfL(h(x), y)
    + \sum_{j' \neq j}^{\num} c_{j'}(x,y)} \, \ell(r, x, j),
\end{align}
where $\ell$ is a surrogate loss in the standard multi-class
classification.  
Equation \eqref{eq:two-stage-sur} resembles \eqref{eq:sur}, except for the constant term $(\num - 1)\sfL(h(x), y)$. In \eqref{eq:two-stage-sur}, the predictor $h$ remains fixed while only the deferral function $r$ is optimized. In \eqref{eq:sur}, both $h$ and $r$ are learned jointly.
\ignore{
Note that \eqref{eq:two-stage-sur} has a same
formulation as \eqref{eq:sur} modulo the constant term $(\num -
1)\sfL(h(x), y)$ in the second-stage. However, the main difference is
that in \eqref{eq:two-stage-sur}, $h$ is fixed and only $r$ is
learned, while both $h$ and $r$ are jointly learned in
\eqref{eq:sur}. 
}

Similarly, we define $\ldef^h$ as the
deferral loss \eqref{eq:def} with a fixed predictor $h$ as follows:
\begin{equation}
\label{eq:two-stage-deferral-loss}
\ldef^h (r, x, y) = \sfL(h(x), y) 1_{\rr(x) = 0} + \sum_{j = 1}^{\num} c_j(x,y) 1_{\rr(x) = j}.
\end{equation}
Here too, $h$ is fixed in \eqref{eq:two-stage-deferral-loss}. Both $L_{\ell}^h$ and $\ldef^h$ are loss functions defined for deferral function $r$, while $\ell_{\ell}$ and $\ell_{\rm{def}}$ are loss functions defined for pairs $(h, r) \in (\sH, \sR)$.

As with the proposed single-stage approach, the two-stage surrogate losses $L_{\ell}^h$ benefit from strong consistency guarantees. We show that in the second stage where a predictor $h$ is fixed, the surrogate loss function $L_{\ell}^h$ benefits from $\sR$-consistency bounds with respect to $\ldef^h$ when $\ell$ admits a strong $\sR$-consistency bound with respect to the multi-class zero-one loss $\ell_{0-1}$. 
\begin{restatable}{theorem}{TwostageR}
\label{thm:tsr}
Given a hypothesis set $\sR$, a multi-class loss function $\ell \geq \ell_{0-1}$ and a predictor $h$. Assume that there exists a function $\Gamma(t) = \beta\, t^{\alpha}$ for some $\alpha \in (0, 1]$ and $\beta > 0$, such that the following $\sR$-consistency bound holds for all $r \in \sR$ and any distribution,
\begin{align*}
\sE_{\ell_{0-1}}(r) - \sE^*_{\ell_{0-1}}(\sR) + \sM_{\ell_{0-1}}(\sR) \leq \Gamma\paren*{\sE_{\ell}(r) - \sE^*_{\ell}(\sR) + \sM_{\ell}(\sR)}.
\end{align*}
Then, the following $\sR$-consistency bound holds for all $r\in \sR$ and any distribution,
\begin{align*}
   \sE_{L^h_{\rm{def}}}(r) - \sE_{L^h_{\rm{def}}}^*(\sR) + \sM_{L^h_{\rm{def}}}(\sR) \leq 
    \ov \Gamma\paren*{\sE_{L^h_{\ell}}(r) -  \sE_{L^h_{\ell}}^*(\sR) + \sM_{L^h_{\ell}}(\sR)},
\end{align*}
where $\ov \Gamma(t) = \paren*{\num\paren*{\ul + \sum_{j = 1}^{\num}\uc_j}}^{1 - \alpha} \beta\, t^{\alpha}$.
\end{restatable}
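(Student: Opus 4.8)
The plan is to prove the bound by a pointwise analysis of calibration gaps, then lift it to an expectation via concavity, mirroring the technique of Theorem~\ref{thm:single}. For a loss $L$ and predictor-free deferral function $r$, write $\cC_L(r,x) = \E_{y\mid x}[L(r,x,y)]$ for the conditional risk and $\Delta\cC_L(r,x) = \cC_L(r,x) - \inf_{r'\in\sR}\cC_L(r',x)$ for the calibration gap. Using the definition of the minimizability gap together with $\sE_L^*(\sR) = \inf_{r}\E_x[\cC_L(r,x)]$, one gets the identity $\sE_L(r) - \sE_L^*(\sR) + \sM_L(\sR) = \E_x[\Delta\cC_L(r,x)]$ for both $L = L^h_{\rm{def}}$ and $L = L^h_{\ell}$. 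This reduces the theorem to controlling $\E_x[\Delta\cC_{L^h_{\rm{def}}}(r,x)]$ by $\ov\Gamma(\E_x[\Delta\cC_{L^h_{\ell}}(r,x)])$.

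First I would compute the two conditional risks in terms of the conditional averages $\ell_h(x) = \E_{y\mid x}[\sfL(h(x),y)]$ and $\overline{c}_j(x) = \E_{y\mid x}[c_j(x,y)]$. Setting $T(x) = \ell_h(x) + \sum_{j=1}^{\num}\overline{c}_j(x)$, a direct expansion gives $\cC_{L^h_{\ell}}(r,x) = \num\,T(x)\sum_{j=0}^{\num} q(j\mid x)\,\ell(r,x,j)$, where $q(0\mid x) = (T(x)-\ell_h(x))/(\num\,T(x))$ and $q(j\mid x) = (T(x)-\overline{c}_j(x))/(\num\,T(x))$ for $j\ge 1$. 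The key structural step is to verify that $q(\cdot\mid x)$ is a genuine probability distribution over $\curl*{0,1,\ldots,\num}$: its weights are nonnegative and the total weight collapses to $W(x) = \num\,T(x)$, using that the coefficient of $\ell(r,x,0)$ is $\sum_j \overline{c}_j(x) = T(x)-\ell_h(x)$ and the coefficient of $\ell(r,x,j)$ is $\ell_h(x)+\sum_{j'\ne j}\overline{c}_{j'}(x) = T(x)-\overline{c}_j(x)$. With the same $q$, one checks the affine identity $\cC_{L^h_{\rm{def}}}(r,x) = \num\,T(x)\,\cC^q_{\ell_{0-1}}(r,x) - (\num-1)T(x)$, where $\cC^q_{\ell_{0-1}}(r,x) = 1 - q(\rr(x)\mid x)$ is the zero-one conditional risk under $q$; since the additive term $(\num-1)T(x)$ is independent of $r$, it cancels in the calibration gap. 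Hence both gaps rescale by the identical factor: $\Delta\cC_{L^h_{\rm{def}}}(r,x) = \num\,T(x)\,\Delta\cC^q_{\ell_{0-1}}(r,x)$ and $\Delta\cC_{L^h_{\ell}}(r,x) = \num\,T(x)\,\Delta\cC^q_{\ell}(r,x)$.

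Next I would invoke the hypothesis. Because the assumed $\sR$-consistency bound for $\ell$ holds for every distribution, its pointwise calibration-gap form $\Delta\cC^q_{\ell_{0-1}}(r,x) \le \Gamma(\Delta\cC^q_{\ell}(r,x)) = \beta\paren*{\Delta\cC^q_{\ell}(r,x)}^{\alpha}$ applies with the constructed conditional distribution $q(\cdot\mid x)$. Substituting $\Delta\cC^q_{\ell}(r,x) = \Delta\cC_{L^h_{\ell}}(r,x)/(\num\,T(x))$ gives $\Delta\cC_{L^h_{\rm{def}}}(r,x) \le \beta\,(\num\,T(x))^{1-\alpha}\paren*{\Delta\cC_{L^h_{\ell}}(r,x)}^{\alpha}$. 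Since $\alpha\in(0,1]$ and the boundedness assumptions yield $\num\,T(x) \le \num\paren*{\ul + \sum_{j=1}^{\num}\uc_j}$, monotonicity of $t\mapsto t^{1-\alpha}$ bounds $(\num\,T(x))^{1-\alpha}$ by the constant appearing in $\ov\Gamma$, producing the pointwise inequality $\Delta\cC_{L^h_{\rm{def}}}(r,x) \le \ov\Gamma\paren*{\Delta\cC_{L^h_{\ell}}(r,x)}$. The degenerate case $T(x)=0$ forces $\ell_h(x)$ and all $\overline{c}_j(x)$ to vanish, so both gaps are zero and the inequality holds trivially.

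Finally I would take the expectation over $x$ and use that $\ov\Gamma(t) = \paren*{\num\paren*{\ul + \sum_{j=1}^{\num}\uc_j}}^{1-\alpha}\beta\,t^{\alpha}$ is non-decreasing and concave (as $\alpha\le 1$), so Jensen's inequality gives $\E_x[\ov\Gamma(\Delta\cC_{L^h_{\ell}}(r,x))] \le \ov\Gamma(\E_x[\Delta\cC_{L^h_{\ell}}(r,x)])$; combined with the minimizability-gap identity from the first paragraph, this is exactly the claimed bound. I expect the main obstacle to be the construction of $q$ and the verification of the two rescaling identities — in particular confirming that the $r$-independent constant $(\num-1)T(x)$ is precisely what renders the deferral conditional risk an affine image of the zero-one risk under $q$, and that the total weight $W(x)$ collapses to $\num\,T(x)$. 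Once these algebraic identities are secured, the transfer of the consistency bound through $q$ and the concluding Jensen step are routine.
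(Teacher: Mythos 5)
Your proof is correct and follows essentially the same route as the paper's: your conditional distribution $q(\cdot\mid x)$ coincides exactly with the paper's weights $p_0,\ldots,p_{\num}$, your pointwise transfer of the assumed $\sR$-consistency bound is precisely the paper's Lemma~\ref{lemma:aux} (applied to a Dirac measure at $x$), and the rescaling by $\num\,T(x)$, the bound $\num\,T(x)\le \num\paren*{\ul+\sum_{j=1}^{\num}\uc_j}$, and the concluding Jensen step all match the argument in Appendix~\ref{app:tsr}. The only cosmetic difference is that you phrase the reduction as an affine identity between $\cC_{L^h_{\rm def}}$ and the zero-one conditional risk under $q$, whereas the paper writes the calibration gaps directly as $\num\,T(x)$ times the corresponding gaps under $p$; these are the same computation.
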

The proof is given in Appendix~\ref{app:tsr}. When the best-in-class error coincides with the Bayes error, $\sE^*_{L}(\sR) = \sE^*_{L}\paren*{\sR_{\rm{all}}}$ for $L = L^h_{\ell}$ and $L = L^h_{\rm{def}}$, the minimizability gaps $\sM_{L^h_{\ell}}(\sR)$ and $\sM_{L^h_{\rm{def}}}(\sR)$ vanish. In that case, the  $\sR$-consistency bound guarantees that when the surrogate estimation error $\sE_{L^h_{\ell}}(r) -  \sE_{L^h_{\ell}}^*(\sR)$ is optimized up to $\e$, the target estimation error $ \sE_{L^h_{\rm{def}}}(r) - \sE_{L^h_{\rm{def}}}^*(\sR)$ is upper-bounded by $\ov \Gamma(\e)$. In the special case where $\sH$ and $\sR$ are the family of all measurable functions, all the minimizability gap terms in Theorem~\ref{thm:tsr} vanish. Thus, we obtain the following corollary.
\begin{corollary}
\label{cor:tsr}
Given a multi-class loss function $\ell \geq \ell_{0-1}$ and a predictor $h$. Assume that there exists a function $\Gamma(t) = \beta\, t^{\alpha}$ for some $\alpha \in (0, 1]$ and $\beta > 0$, such that the following excess error bound holds for all $r \in \sR$ and any distribution,
\begin{equation*}
\sE_{\ell_{0-1}}(r) - \sE^*_{\ell_{0-1}}(\sR_{\rm{all}}) \leq \Gamma\paren*{\sE_{\ell}(r) - \sE^*_{\ell}(\sR_{\rm{all}})}.
\end{equation*}
Then, the following excess error bound holds for all $r\in \sR_{\rm{all}}$ and any distribution,
\begin{equation}
\label{eq:bound-tsr-all}
   \sE_{L^h_{\rm{def}}}(r) - \sE_{L^h_{\rm{def}}}^*(\sR_{\rm{all}})) \leq 
    \ov \Gamma\paren*{\sE_{L^h_{\ell}}(r) -  \sE_{L^h_{\ell}}^*(\sR_{\rm{all}})},
\end{equation}
where $\ov \Gamma(t) = \paren*{\num\paren*{\ul + \sum_{j = 1}^{\num}\uc_j}}^{1 - \alpha} \beta\, t^{\alpha}$.
\end{corollary}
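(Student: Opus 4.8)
The plan is to obtain Corollary~\ref{cor:tsr} as the specialization of Theorem~\ref{thm:tsr} to the case $\sR = \sR_{\rm{all}}$, the family of all measurable deferral functions. The entire argument reduces to showing that every minimizability gap appearing in Theorem~\ref{thm:tsr} vanishes when the hypothesis set is unrestricted, so that both the hypothesis and the conclusion of that theorem collapse to the excess-error statements of the corollary.

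First I would record the relevant structural observation: with the predictor $h$ held fixed, both the second-stage surrogate $L^h_{\ell}$ defined in \eqref{eq:two-stage-sur} and the second-stage deferral loss $L^h_{\rm{def}}$ defined in \eqref{eq:two-stage-deferral-loss} are loss functions of the single argument $r$, whose value at $x$ depends on $r$ only through $r(x, \cdot)$; the quantities $\sfL(h(x), y)$ and $c_j(x, y)$ enter merely as coefficients that are functions of $(x, y)$. The same pointwise decomposability holds for $\ell_{0-1}$ and for the surrogate $\ell$. Consequently, by the standard interchange of infimum and expectation for such losses \citep[lemma~2.5]{steinwart2007compare}, the minimizability gaps $\sM_{\ell_{0-1}}(\sR_{\rm{all}})$, $\sM_{\ell}(\sR_{\rm{all}})$, $\sM_{L^h_{\ell}}(\sR_{\rm{all}})$, and $\sM_{L^h_{\rm{def}}}(\sR_{\rm{all}})$ are all equal to zero.

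With these gaps vanishing, the hypothesis of Corollary~\ref{cor:tsr}, namely the excess error bound $\sE_{\ell_{0-1}}(r) - \sE^*_{\ell_{0-1}}(\sR_{\rm{all}}) \leq \Gamma\paren*{\sE_{\ell}(r) - \sE^*_{\ell}(\sR_{\rm{all}})}$, is precisely the $\sR$-consistency bound assumed in Theorem~\ref{thm:tsr} instantiated at $\sR = \sR_{\rm{all}}$. I would then invoke Theorem~\ref{thm:tsr} directly to obtain
\[
\sE_{L^h_{\rm{def}}}(r) - \sE_{L^h_{\rm{def}}}^*(\sR_{\rm{all}}) + \sM_{L^h_{\rm{def}}}(\sR_{\rm{all}}) \leq \ov\Gamma\paren*{\sE_{L^h_{\ell}}(r) - \sE_{L^h_{\ell}}^*(\sR_{\rm{all}}) + \sM_{L^h_{\ell}}(\sR_{\rm{all}})},
\]
with the same $\ov\Gamma(t) = \paren*{\num\paren*{\ul + \sum_{j = 1}^{\num}\uc_j}}^{1-\alpha}\beta\, t^{\alpha}$; dropping the two zero gaps yields \eqref{eq:bound-tsr-all} exactly. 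Since all of the heavy lifting, namely the reduction of the deferral calibration gap to a standard multi-class one and the propagation of $\Gamma$ into $\ov\Gamma$, is already carried out in the proof of Theorem~\ref{thm:tsr}, the only point requiring care here is the verification that the four minimizability gaps vanish over $\sR_{\rm{all}}$, which is where I would concentrate the (brief) justification.
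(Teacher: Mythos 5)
Your proposal matches the paper's own derivation exactly: the corollary is obtained by specializing Theorem~\ref{thm:tsr} to $\sR = \sR_{\rm{all}}$ and observing that all minimizability gap terms vanish over the family of all measurable functions (as the paper notes in the sentence preceding the corollary, invoking the same pointwise-decomposability fact from \citep[lemma~2.5]{steinwart2007compare}). Your additional care in verifying that each of the four gaps vanishes is a correct and slightly more explicit rendering of the same argument.
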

Corollary~\ref{cor:tsr} shows that $L^h_{\ell}$ admits an excess error bound with respect to $L^h_{\rm{def}}$ when $\ell$ admits an excess error bound with respect to $\ell_{0-1}$.

We now establish $(\sH, \sR)$-consistency bounds the entire two-stage approach with respect to the deferral loss function $\ldef$. This result applies to any multi-class loss function $\ell$ that satisfies a strong $\sR$-consistency bound with respect to the multi-class zero-one loss $\ell_{0-1}$.
\begin{restatable}{theorem}{TwostageHR}
\label{thm:tshr}
Given a hypothesis set $\sH$, a regular hypothesis set $\sR$ and a multi-class loss function $\ell \geq \ell_{0-1}$. Assume that there exists a function $\Gamma(t) = \beta\, t^{\alpha}$ for some $\alpha \in (0, 1]$ and $\beta > 0$, such that the following $\sR$-consistency bound holds for all $r \in \sR$ and any distribution,
\begin{align*}
\sE_{\ell_{0-1}}(r) - \sE^*_{\ell_{0-1}}(\sR) + \sM_{\ell_{0-1}}(\sR) \leq \Gamma\paren*{\sE_{\ell}(r) - \sE^*_{\ell}(\sR) + \sM_{\ell}(\sR)}.
\end{align*}
Then, the following $(\sH,\sR)$-consistency bound holds for all $h \in \sH$, $r \in \sR$ and any distribution,
\begin{align*}
& \sE_{\ldef}(h, r) - \sE_{\ldef}^*(\sH,\sR) + \sM_{\ldef}(\sH,\sR)\\
& \qquad \leq \sE_{\sfL}(h) - \sE_{\sfL}(\sH) + \sM_{\sfL}(\sH) + \ov \Gamma\paren*{\sE_{L^h_{\ell}}(r) -  \sE_{L^h_{\ell}}^*(\sR) + \sM_{L^h_{\ell}}(\sR)},
\end{align*}
where $\ov \Gamma(t) = \paren*{\num\paren*{\ul + \sum_{j = 1}^{\num}\uc_j}}^{1 - \alpha}\beta\, t^{\alpha}$.
\end{restatable}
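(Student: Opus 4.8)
The plan is to reduce the two-stage $(\sH,\sR)$-consistency bound to the second-stage $\sR$-consistency bound of Theorem~\ref{thm:tsr}, together with a pointwise term controlled by the first-stage regression regret. I work with conditional risks: for a pair $(h,r)$ I write $\sC_{\ldef}(h, r, x) = \E_{y \mid x}\bracket*{\ldef(h, r, x, y)}$ and $\sC_{\sfL}(h, x) = \E_{y \mid x}\bracket*{\sfL(h(x), y)}$. The identity I use repeatedly is that the expected conditional regret equals the estimation error plus the minimizability gap; e.g.\ $\E_x\bracket[\big]{\sC_{\ldef}(h, r, x) - \inf_{h', r'}\sC_{\ldef}(h', r', x)} = \sE_{\ldef}(h, r) - \sE_{\ldef}^*(\sH, \sR) + \sM_{\ldef}(\sH, \sR)$, which follows directly from the definitions of $\sE_{\ldef}^*(\sH,\sR)$ and $\sM_{\ldef}(\sH,\sR)$.

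The core of the argument is a pointwise decomposition of the deferral conditional regret, obtained by inserting the fixed-$h$ pointwise optimum:
\begin{align*}
\sC_{\ldef}(h, r, x) - \inf_{h', r'}\sC_{\ldef}(h', r', x)
&= \bracket[\Big]{\sC_{\ldef}(h, r, x) - \inf_{r'}\sC_{\ldef}(h, r', x)}\\
&\quad + \bracket[\Big]{\inf_{r'}\sC_{\ldef}(h, r', x) - \inf_{h', r'}\sC_{\ldef}(h', r', x)}.
\end{align*}
Because $\ldef$ depends on $h$ only through $\sfL(h(x), y)$ on the non-deferral event and $\sR$ is regular, the pointwise infimum over $r'$ is a minimum over the $\num + 1$ assignments, $\inf_{r'}\sC_{\ldef}(h, r', x) = \min\curl[\big]{\sC_{\sfL}(h, x), C_1(x), \ldots, C_{\num}(x)}$, where $C_j(x) = \E_{y\mid x}\bracket*{c_j(x, y)}$ is independent of $h$. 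The joint infimum merely replaces $\sC_{\sfL}(h, x)$ by $\inf_{h'}\sC_{\sfL}(h', x)$. Since $t \mapsto \min\curl*{t, m}$ is $1$-Lipschitz for the fixed value $m = \min_{j}C_j(x)$ and $\sC_{\sfL}(h, x) \geq \inf_{h'}\sC_{\sfL}(h', x)$, the second bracket is bounded above by $\sC_{\sfL}(h, x) - \inf_{h'}\sC_{\sfL}(h', x)$, the conditional regret of the regression loss. The first bracket is exactly the conditional regret of $L^h_{\rm{def}} = \ldef^h$.

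Taking the expectation over $x$ and applying the regret/estimation-error identity to each term converts this into
\begin{align*}
\sE_{\ldef}(h, r) - \sE_{\ldef}^*(\sH, \sR) + \sM_{\ldef}(\sH, \sR)
&\leq \bracket[\big]{\sE_{L^h_{\rm{def}}}(r) - \sE_{L^h_{\rm{def}}}^*(\sR) + \sM_{L^h_{\rm{def}}}(\sR)}\\
&\quad + \bracket[\big]{\sE_{\sfL}(h) - \sE_{\sfL}^*(\sH) + \sM_{\sfL}(\sH)}.
\end{align*}
It then remains to invoke Theorem~\ref{thm:tsr}, which upper-bounds the first bracket by $\ov\Gamma\paren*{\sE_{L^h_{\ell}}(r) - \sE_{L^h_{\ell}}^*(\sR) + \sM_{L^h_{\ell}}(\sR)}$, giving precisely the claimed inequality.

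I expect the second bracket to be the main obstacle: controlling the extra cost of freezing the first-stage predictor $h$ rather than jointly optimizing $h$ and $r$. The clean $1$-Lipschitz estimate hinges on two structural facts that must be verified with care, namely that the expert costs $C_j(x)$ do not depend on $h$, so that only one entry of the $\min$ changes, and that regularity of $\sR$ lets the pointwise infimum over $r'$ range over all $\num + 1$ deferral assignments; without regularity the minimum would be taken over a proper subset and the decomposition could break. One should also confirm that the pointwise infima over $h'$ appearing in both $\sM_{\ldef}(\sH,\sR)$ and $\sM_{\sfL}(\sH)$ are over the same hypothesis set $\sH$, so that the regression term matches $\sM_{\sfL}(\sH)$ exactly rather than a gap relative to $\sH_{\rm{all}}$.
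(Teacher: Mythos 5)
Your proposal is correct and follows essentially the same route as the paper's proof: the same pointwise decomposition of the deferral conditional regret into a fixed-$h$ regret over $r$ plus the gap between the fixed-$h$ and joint pointwise optima, with the latter bounded via the $1$-Lipschitzness of $t \mapsto \min\curl*{t, m}$ exactly as in Appendix~\ref{app:tshr}. The only difference is presentational: you bound the first term by identifying its expectation with $\sE_{L^h_{\rm{def}}}(r) - \sE_{L^h_{\rm{def}}}^*(\sR) + \sM_{L^h_{\rm{def}}}(\sR)$ and invoking Theorem~\ref{thm:tsr}, whereas the paper re-derives that same estimate inline via Lemma~\ref{lemma:aux} and Jensen's inequality; both yield the identical $\ov\Gamma$.
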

The proof is presented in Appendix~\ref{app:tshr}. As before, when $\sH$ and $\sR$ are the family of all measurable functions, all the minimizability gap terms in Theorem~\ref{thm:tshr} vanish. In particular, $\Gamma(t)$ can be expressed as $\sqrt{2t}$ for the logistic loss. Thus, we obtain the following on excess error bounds.
\begin{corollary}
\label{cor:tshr}
Given a multi-class loss function $\ell \geq \ell_{0-1}$. Assume that there exists a function $\Gamma(t) = \beta\, t^{\alpha}$ for some $\alpha \in (0, 1]$ and $\beta > 0$, such that the following excess error bound holds for all $r \in \sR_{\rm{all}}$ and any distribution,
\begin{equation*}
\sE_{\ell_{0-1}}(r) - \sE^*_{\ell_{0-1}}(\sR_{\rm{all}}) \leq \Gamma\paren*{\sE_{\ell}(r) - \sE^*_{\ell}(\sR_{\rm{all}})}.
\end{equation*}
Then, the following excess error bound holds for all $h\in \sH_{\rm{all}}$, $r\in \sR_{\rm{all}}$ and any distribution,
\begin{equation}
\begin{aligned}
\label{eq:bound-tshr-all}
   \sE_{\ldef}(h, r) - \sE_{\ldef}^*(\sH_{\rm{all}},\sR_{\rm{all}}) \leq \sE_{\sfL}(h) - \sE_{\sfL}(\sH_{\rm{all}}) + \ov \Gamma\paren*{\sE_{L^h_{\ell}}(r) -  \sE_{L^h_{\ell}}^*(\sR_{\rm{all}})},
\end{aligned}
\end{equation}
where $\ov \Gamma(t) = \paren*{\num\paren*{\ul + \sum_{j = 1}^{\num}\uc_j}}^{1 - \alpha} \beta\, t^{\alpha}$. In particular, $\ov \Gamma (t) = \sqrt{2 \num} \paren*{\ul + \sum_{j = 1}^{\num}\uc_j}^{\frac12} t^{\frac 12}$ for $\ell = \ell_{\rm{log}}$.
\end{corollary}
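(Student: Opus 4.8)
The plan is to derive Corollary~\ref{cor:tshr} as the specialization of Theorem~\ref{thm:tshr} to the family of all measurable functions, for which every minimizability gap appearing in that theorem vanishes. No independent argument is needed: the substantive work (the two-stage decomposition of the conditional regret of $\ldef$ into a prediction part and a deferral part, and the bounding of each via the respective consistency guarantee) already lives in the proof of Theorem~\ref{thm:tshr} in Appendix~\ref{app:tshr}.

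First I would check that the hypothesis of the corollary is exactly the $\sR_{\rm{all}}$-instance of the hypothesis of Theorem~\ref{thm:tshr}. The theorem requires an $\sR$-consistency bound $\sE_{\ell_{0-1}}(r) - \sE^*_{\ell_{0-1}}(\sR) + \sM_{\ell_{0-1}}(\sR) \leq \Gamma\paren*{\sE_{\ell}(r) - \sE^*_{\ell}(\sR) + \sM_{\ell}(\sR)}$. Since $\ell_{0-1}$ and $\ell$ depend on $r$ only through $r(x,\cdot)$, by \citep[lemma~2.5]{steinwart2007compare} their minimizability gaps over all measurable functions are null, $\sM_{\ell_{0-1}}(\sR_{\rm{all}}) = \sM_{\ell}(\sR_{\rm{all}}) = 0$, so the assumed bound over $\sR = \sR_{\rm{all}}$ collapses to precisely the excess error bound stated in the corollary. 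I would also note that $\sR_{\rm{all}}$ is regular, so the regularity requirement of Theorem~\ref{thm:tshr} is satisfied.

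Next I would apply Theorem~\ref{thm:tshr} with $\sH = \sH_{\rm{all}}$ and $\sR = \sR_{\rm{all}}$. By the same pointwise argument, the three minimizability gaps in the theorem's conclusion all vanish: $\sM_{\ldef}(\sH_{\rm{all}}, \sR_{\rm{all}}) = 0$ because $\ldef$ depends only on $\paren*{h(x), r(x,\cdot)}$; $\sM_{\sfL}(\sH_{\rm{all}}) = 0$ because $\sfL$ depends only on $h(x)$; and $\sM_{L^h_{\ell}}(\sR_{\rm{all}}) = 0$ because, with $h$ fixed, $L^h_{\ell}$ depends only on $r$. Dropping these null terms yields directly inequality~\eqref{eq:bound-tshr-all} with $\ov \Gamma(t) = \paren*{\num\paren*{\ul + \sum_{j = 1}^{\num}\uc_j}}^{1 - \alpha}\beta\, t^{\alpha}$. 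Finally, for the explicit logistic form, I would invoke the excess error bound of \citet{MaoMohriZhong2023cross}, giving $\Gamma(t) = \sqrt{2t}$ for $\ell = \ell_{\rm{log}}$, that is $\beta = \sqrt{2}$ and $\alpha = \tfrac12$; substituting into $\ov\Gamma$ gives $\ov\Gamma(t) = \paren*{\num\paren*{\ul + \sum_{j = 1}^{\num}\uc_j}}^{1/2}\sqrt{2}\, t^{1/2} = \sqrt{2\num}\paren*{\ul + \sum_{j = 1}^{\num}\uc_j}^{1/2} t^{1/2}$, as claimed.

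Since this is a direct corollary, there is essentially no obstacle beyond correctly tracking which minimizability gaps vanish. The one point requiring a little care is confirming that the corollary's hypothesis is genuinely the $\sR_{\rm{all}}$-instance of the theorem's hypothesis and not a strictly weaker assertion — which holds precisely because $\sM_{\ell_{0-1}}(\sR_{\rm{all}})$ and $\sM_{\ell}(\sR_{\rm{all}})$ are both zero, so that the excess error form and the minimizability-gap form of the assumption coincide over the family of all measurable functions.
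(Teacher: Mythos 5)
Your proposal is correct and matches the paper's own treatment: the paper obtains Corollary~\ref{cor:tshr} precisely by specializing Theorem~\ref{thm:tshr} to $\sH = \sH_{\rm{all}}$ and $\sR = \sR_{\rm{all}}$, noting that all minimizability gaps vanish there and plugging in $\Gamma(t) = \sqrt{2t}$ for the logistic loss. Your additional check that the corollary's hypothesis coincides with the theorem's hypothesis over $\sR_{\rm{all}}$ is a sensible (if routine) verification that the paper leaves implicit.
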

Corollary~\ref{cor:tshr} shows that our two-stage approach admits an excess error bound with respect to $\ldef$ when $\ell$ admits an excess error bound with respect to $\ell_{0-1}$. More generally, when the minimizability gaps are zero, as when the best-in-class errors coincide with the Bayes errors, the $(\sH, \sR)$-consistency bound of Theorem~\ref{thm:tshr} guarantees that the target estimation error, $\sE_{\ldef}(h, r) - \sE_{\ldef}^*(\sH,\sR)$, is upper-bounded by $\e_1 + \ov \Gamma(\e_2)$
provided that the surrogate estimation error in the first stage, $\sE_{\sfL}(h) - \sE_{\sfL}(\sH)$, is reduced to $\e_1$ and the surrogate estimation error in the second stage, $\sE_{L^h_{\ell}}(r) -  \sE_{L^h_{\ell}}^*(\sR)$, reduced to $\e_2$.

\textbf{Significance and novelty.} The challenges in dealing with
multiple experts in the theoretical analysis of learning with deferral
in regression arise first from the need to formulate new surrogate
losses that cannot be directly extended from previous
work. Furthermore, proving theoretical guarantees requires analyzing
the conditional regret for both the surrogate and target deferral
loss, which becomes more complex with multiple experts. The novelty
and significance of our work are rooted in these new surrogate losses
and algorithmic solutions, which come with strong theoretical
guarantees specifically tailored for this context. These enhancements
are non-trivial and represent a substantial extension beyond the
existing framework of regression with abstention, which is limited in
scope to a single expert, squared loss, and label-independent cost.

\section{Special Case of a Single Expert}
\label{sec:single-expert}

In the special case of a single expert, $\num = 1$, both the
single-stage surrogate loss $L_{\ell}$ and the two-stage surrogate
loss $L^h_{\ell}$ can be simplified as follows:
\begin{equation*}
c(x,y) \ell(r, x, 0) + \sfL(h(x), y) \ell(r, x, 1).
\end{equation*}
Let $\ell(r, x, 0) = \Phi(r(x))$ and $\ell(r, x, 1) = \Phi(-r(x))$, where $\Phi\colon \Rset \to \Rset_{+}$ is a non-increasing auxiliary function upper-bounding the indicator $u \mapsto 1_{u \leq 0}$. Here, $r \colon \sX \to \Rset$ is a function whose sign determines if there is
deferral, that is $r(x) \leq 0$:
\begin{equation*}
 \ell_{\rm{def}}(h, r, x, y) = \sfL(h(x), y) 1_{r(x) > 0} + c(x,y) 1_{r(x) \leq 0}.
\end{equation*}
As an example, $\Phi$ can be the auxiliary function that defines a margin-based loss in the binary classification. Thus, both the single-stage surrogate loss $\ell_{\Phi}$ and the two-stage surrogate loss $\ell^h_{\Phi}$ can be reformulated as follows: 
\begin{equation}
\label{eq:single}
c(x,y) \Phi(r(x)) + \sfL(h(x), y) \Phi(-r(x)).
\end{equation}
Some common examples of $\Phi$ are listed in Table~\ref{tab:sur-binary} in Appendix~\ref{app:sur-binary}.
Note that \eqref{eq:single} is a straightforward extension of the two-stage formulation given in \citep[Eq.~(5)]{MaoMohriZhong2024predictor}. In their formulation, the zero-one loss function replaces the regression loss and is tailored for the classification context. A special case of the straightforward extension \eqref{eq:single} is one where the cost does not depend on the label $y$ and the squared loss is considered. This coincides with the loss function \citep[Eq.~(10)]{cheng2023regression} in the context of regression with abstention. It is important to note that incorporating $y$ as argument of the cost functions is crucial in the more general deferral setting, as each cost takes into account the accuracy of the corresponding expert. 

Let the binary zero-one loss be $\ell^{\rm{bi}}_{0-1}(r, x, y) = 1_{\sign\paren*{r(x)} \neq y}$, where $\sign(\alpha) = 1_{\alpha > 0} - 1_{\alpha \leq 0}$. We say a hypothesis set $\sR$ consists of functions mapping from $\sX$ to $\Rset$ is \emph{regular}, if  $\curl*{\sign \paren*{r(x)} \colon r \in \sR} = \curl*{+ 1, - 1}$ for any $x \in \sX$.

Then, Theorems~\ref{thm:single} and ~\ref{thm:tshr} can be reduced to Theorems~\ref{thm:single-binary} and ~\ref{thm:tshr-binary} below respectively. We present these guarantees and their corresponding corollaries in the following sections.

\subsection{Single-Stage Guarantees}

\begin{table*}[t]
  \caption{System MSE of deferral with multiple experts:
    mean $\pm$ standard deviation over three runs.}
 \label{tab:deferral-1}
 \centering
 \begin{tabular}{@{\hspace{0cm}}lllllll@{\hspace{0cm}}}
  \toprule
 Dataset & Base cost & Method & Base model & Single expert & Two experts & Three experts \\
  \midrule
  \multirow{4}{*}{\texttt{Airfoil}} & \xmark & Single & ---              & $18.98 \pm 2.44$ & $13.16 \pm 0.93$ & \bm{$8.53 \pm 1.57$} \\
                           & \xmark & Two    & $23.35 \pm 1.90$ & $18.64 \pm 1.96$ & $13.33 \pm 0.92 $ & \bm{$8.81 \pm 1.56$} \\
                           & \cmark & Single & ---              & $18.83 \pm 2.14$ & $13.79 \pm 0.75 $ & \bm{$8.64 \pm 1.40$} \\
                           & \cmark & Two    & $23.35 \pm 1.90$ & $19.15 \pm 1.99$ & $15.12 \pm 0.62 $ & \bm{$10.06 \pm 1.54$}\\
  \midrule
  \multirow{4}{*}{\texttt{Housing}} & \xmark & Single & ---              & $14.85 \pm 5.40$ & $14.75 \pm 3.53$ & \bm{$12.43 \pm 2.03$} \\
                           & \xmark & Two    & $22.72 \pm 7.68$ & $16.26 \pm 5.58$ & $14.82 \pm 3.60$ & \bm{$12.02 \pm 1.97$} \\
                           & \cmark & Single & ---              & $15.17 \pm 5.18$ & $15.07 \pm 2.88$ & \bm{$14.80 \pm 3.48$} \\
                           & \cmark & Two    & $22.72 \pm 7.68$ & $16.24 \pm 4.64$ & $15.62 \pm 3.04$ & \bm{$14.87 \pm 4.04$} \\
  \midrule
  \multirow{4}{*}{\texttt{Concrete}}& \xmark & Single & ---              & $104.38 \pm 5.55$ & $41.08 \pm 2.05$ & \bm{$37.83 \pm 2.60$} \\
                           & \xmark & Two    & $120.20 \pm 8.09$& $114.73 \pm 6.50$ & $44.46 \pm 5.34$ & \bm{$36.75 \pm 1.76$} \\
                           & \cmark & Single & ---              & $105.01 \pm 5.40$ & $39.52 \pm 2.81$ & \bm{$38.46 \pm 1.79$} \\
                           & \cmark & Two    & $120.20 \pm 8.09$& $114.11 \pm 5.34$ & $39.93 \pm 2.77$ & \bm{$37.51 \pm 2.32$} \\
  \bottomrule
 \end{tabular}
\end{table*}

Here, we present guarantees for the single-stage surrogate.
\begin{restatable}{theorem}{SingleBinary}
\label{thm:single-binary}
Given a hypothesis set $\sH$, a regular hypothesis set $\sR$ and a margin-based loss function $\Phi$. Assume that there exists a function $\Gamma(t) = \beta\, t^{\alpha}$ for some $\alpha \in (0, 1]$ and $\beta > 0$, such that the following $\sR$-consistency bound holds for all $r \in \sR$ and any distribution,
\begin{align*}
\sE_{\ell^{\rm{bi}}_{0-1}}(r) - \sE^*_{\ell^{\rm{bi}}_{0-1}}(\sR) + \sM_{\ell^{\rm{bi}}_{0-1}}(\sR) \leq \Gamma\paren*{\sE_{\Phi}(r) - \sE^*_{\Phi}(\sR) + \sM_{\Phi}(\sR)}.
\end{align*}
Then, the following $(\sH,\sR)$-consistency bound holds for all $h\in \sH$, $r\in \sR$ and any distribution,
\begin{align*}
   \sE_{\ell_{\rm{def}}}(h, r) - \sE_{\ell_{\rm{def}}}^*(\sH,\sR) + \sM_{\ell_{\rm{def}}}(\sH,\sR) \leq 
    \ov \Gamma\paren*{\sE_{\ell_{\Phi}}(h, r) -  \sE_{\ell_{\Phi}}^*(\sH,\sR) + \sM_{\ell_{\Phi}}(\sH,\sR)},
\end{align*}
where $\ov \Gamma(t) = \max\curl*{t, \paren*{\ul + \uc}^{1 - \alpha}
  \beta\, t^{\alpha}}$.
\end{restatable}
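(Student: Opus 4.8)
The plan is to recognize Theorem~\ref{thm:single-binary} as the single-expert ($\num = 1$) instance of Theorem~\ref{thm:single}, obtained through the standard margin-based embedding of a two-class rejector into a scalar one. Concretely, for $\num = 1$ the deferral loss \eqref{eq:def} reduces to $\ell_{\rm{def}}(h,r,x,y) = \sfL(h(x),y)\,1_{r(x)>0} + c(x,y)\,1_{r(x)\le 0}$, and the single-stage surrogate \eqref{eq:sur} collapses to $\ell_\Phi(h,r,x,y) = c(x,y)\Phi(r(x)) + \sfL(h(x),y)\Phi(-r(x))$, which is exactly $L_\ell$ with the two scores $\ell(r,x,0)=\Phi(r(x))$ and $\ell(r,x,1)=\Phi(-r(x))$. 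First I would check that the hypotheses transfer: an $\sR$-consistency bound for the margin loss $\Phi$ with respect to the binary zero-one loss $\ell^{\rm{bi}}_{0-1}$ is, by the very definition of this margin reduction, the same as an $\sR$-consistency bound for $\ell$ with respect to the (two-class) multi-class loss $\ell_{0-1}$ required by Theorem~\ref{thm:single}, with the same $\Gamma(t)=\beta t^\alpha$. Substituting $\num = 1$ and a single cost bound into the functional form of Theorem~\ref{thm:single} turns $\max\{t,(\num(\ul+\sum_j\uc_j))^{1-\alpha}\beta t^\alpha\}$ into precisely $\ov\Gamma(t)=\max\{t,(\ul+\uc)^{1-\alpha}\beta t^\alpha\}$, so the conclusion matches verbatim and the theorem follows.

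To see why this holds (and to make the argument self-contained), I would specialize the conditional-regret computation behind Theorem~\ref{thm:single}. Writing $A(h,x)=\E_{y\mid x}[\sfL(h(x),y)]$, $A^*(x)=\inf_{h\in\sH}A(h,x)$ and $B(x)=\E_{y\mid x}[c(x,y)]$, regularity of $\sR$ gives the pointwise best-in-class values $\inf_{h,r}\cC_{\ell_{\rm{def}}}(h,r,x)=\min\{A^*(x),B(x)\}$ and $\inf_{h,r}\cC_{\ell_\Phi}(h,r,x)=\inf_{r'}\{B\Phi(r')+A^*\Phi(-r')\}$. The surrogate conditional regret then splits additively as $\Delta\cC_{\ell_\Phi}(h,r,x)=\Delta_r(x)+(A(h,x)-A^*(x))\Phi(-r(x))$, where $\Delta_r$ is the conditional regret of the scalar rejector in the weighted binary problem with $\eta(x)=B(x)/(A^*(x)+B(x))$ and scale $s(x)=A^*(x)+B(x)$. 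Constructing this binary distribution transforms the calibration gaps of $\ell_{\rm{def}}$ and $\ell_\Phi$ into the standard binary ones, so the assumed form of the $\Phi$-bound (which holds for every conditional probability) yields $s\,\Gamma(\Delta_r/s)\le(\ul+\uc)^{1-\alpha}\beta\,\Delta_r^\alpha$ as an upper bound on the zero-one conditional regret of $r$, using $s(x)\le\ul+\uc$. A case analysis on whether $r(x)>0$ and on the sign of $A^*-B$, together with $\Phi(-r)\ge 1$ whenever $r>0$, then bounds the target conditional regret; applying the concavity of $\ov\Gamma$ with Jensen's inequality and the identity relating expected conditional regrets to estimation errors plus minimizability gaps delivers the stated bound.

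The main obstacle is the final bookkeeping step: showing that the predictor contribution $P=(A(h,x)-A^*)\Phi(-r)$ and the rejector contribution $\Delta_r$ can be folded into the single $\max$-shaped $\ov\Gamma$ evaluated at the combined surrogate regret $P+\Delta_r$, rather than appearing as a sum $P + (\ul+\uc)^{1-\alpha}\beta\,\Delta_r^\alpha$, which is not in general bounded by $\ov\Gamma(P+\Delta_r)$. The resolution hinges on the fact that the apparently dangerous regime — $P$ large while $\Delta_r$ is small — cannot occur: when $r(x)>0$ both quantities are driven by the same factor $\Phi(-r(x))$, so a large predictor term forces a correspondingly large rejector regret, while when $r(x)\le 0$ the target regret no longer involves $A(h,x)$ at all and reduces to a pure rejector regret that the calibration bound controls directly. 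Carrying out this joint control of $P$ and $\Delta_r$ across the cases — or, more economically, trusting that it is already subsumed in the proof of Theorem~\ref{thm:single} — is the delicate part; everything else is substitution and the routine passage from pointwise to expected bounds.
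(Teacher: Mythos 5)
Your proposal is correct and takes essentially the same route as the paper: Theorem~\ref{thm:single-binary} is obtained there precisely as the $\num=1$ specialization of Theorem~\ref{thm:single} under the identification $\ell(r,x,0)=\Phi(r(x))$, $\ell(r,x,1)=\Phi(-r(x))$ (with $\ell^{\rm{bi}}_{0-1}$ matching the two-class $\ell_{0-1}$ and the constant collapsing to $(\ul+\uc)^{1-\alpha}$), which is exactly your reduction. Your supplementary case analysis of the conditional regrets simply replays the argument of Appendix~\ref{app:exp}, including the correct resolution of the predictor-versus-rejector bookkeeping via $\Phi(-r(x))\ge 1$ when $r(x)>0$.
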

In particular, when $\sH$ and $\sR$ are the family of all measurable
functions, all the minimizability gap terms in
Theorem~\ref{thm:single-binary} vanish. In this case, as shown by
\citet{awasthi2022Hconsistency}, $\Gamma(t)$ can be expressed as $\sqrt{2t}$
for exponential and logistic loss, $\sqrt{t}$ for quadratic loss and $t$
for hinge, sigmoid and $\rho$-margin losses. Thus, the following
result holds.
\begin{corollary}
\label{cor:single-binary}
Given a margin-based loss function $\Phi$. Assume that there exists a
function $\Gamma(t) = \beta\, t^{\alpha}$ for some $\alpha \in (0, 1]$
  and $\beta > 0$, such that the following excess error bound holds
  for all $r \in \sR_{\rm{all}}$ and any distribution,
\begin{equation*}
  \sE_{\ell^{\rm{bi}}_{0-1}}(r) - \sE^*_{\ell^{\rm{bi}}_{0-1}}(\sR_{\rm{all}})
  \leq \Gamma\paren*{\sE_{\Phi}(r) - \sE^*_{\Phi}(\sR_{\rm{all}})}.
\end{equation*}
Then, the following excess error bound holds for all $h\in
\sH_{\rm{all}}$, $r\in \sR_{\rm{all}}$ and any distribution,
\begin{align*}
   \sE_{\ell_{\rm{def}}}(h, r) - \sE_{\ell_{\rm{def}}}^*(\sH_{\rm{all}}, \sR_{\rm{all}}) \leq 
    \ov \Gamma\paren*{\sE_{\ell_{\Phi}}(h, r) -  \sE_{\ell_{\Phi}}^*(\sH_{\rm{all}},\sR_{\rm{all}})},
\end{align*}
where $\ov \Gamma(t) = \max\curl*{t, \paren*{\ul + \uc}^{1 - \alpha}
  \beta\, t^{\alpha}}$. In particular, $\ov \Gamma (t) = \max\curl*{t,
  \frac12 \paren{\ul + \uc}^{\frac12} t^{\frac 12}}$ for $\Phi =
\Phi_{\rm{exp}}$ and $\Phi_{\rm{log}}$, $\ov \Gamma (t) =
\max\curl*{t, \paren{\ul + \uc}^{\frac12} t^{\frac 12}}$ for $\Phi =
\Phi_{\rm{quad}}$, and $\ov \Gamma (t) = t$ for $\Phi =
\Phi_{\rm{hinge}}$, $\Phi_{\rm{sig}}$, and $\Phi_{\rho}$.
\end{corollary}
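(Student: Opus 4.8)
The plan is to obtain Corollary~\ref{cor:single-binary} as the specialization of Theorem~\ref{thm:single-binary} to the families of all measurable functions, in which the minimizability gaps vanish, followed by substitution of the known functional forms of $\Gamma$ for each margin-based loss. First I would verify that $\sR_{\rm{all}}$ is regular in the binary sense required by Theorem~\ref{thm:single-binary}: since at every $x$ both a strictly positive and a non-positive value of $r(x)$ are attained by measurable functions, we have $\curl*{\sign\paren*{r(x)}\colon r \in \sR_{\rm{all}}} = \curl*{+1, -1}$, so the regularity hypothesis holds with $\sR = \sR_{\rm{all}}$.

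Next I would observe that the hypothesis of the corollary is exactly the $\sR$-consistency-bound hypothesis of Theorem~\ref{thm:single-binary} evaluated at $\sR = \sR_{\rm{all}}$. Indeed, both $\ell^{\rm{bi}}_{0-1}$ and $\Phi$ depend on a hypothesis $r$ only through the pointwise value $r(x)$, so by \citep[lemma~2.5]{steinwart2007compare} their minimizability gaps over all measurable functions are null, $\sM_{\ell^{\rm{bi}}_{0-1}}(\sR_{\rm{all}}) = \sM_{\Phi}(\sR_{\rm{all}}) = 0$; dropping these terms turns the $\sR$-consistency bound into the excess error bound assumed in the corollary. With this hypothesis verified, I would apply Theorem~\ref{thm:single-binary} with $\sH = \sH_{\rm{all}}$ and $\sR = \sR_{\rm{all}}$. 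On the conclusion side, $\ell_{\rm{def}}$ and $\ell_{\Phi}$ likewise depend on $(h, r)$ only through the pointwise values $h(x)$ and $r(x)$, so the same lemma gives $\sM_{\ell_{\rm{def}}}(\sH_{\rm{all}}, \sR_{\rm{all}}) = \sM_{\ell_{\Phi}}(\sH_{\rm{all}}, \sR_{\rm{all}}) = 0$. Substituting these vanishing gaps into the bound of Theorem~\ref{thm:single-binary} yields precisely the excess error bound of the corollary, with the same $\ov\Gamma(t) = \max\curl*{t, \paren*{\ul + \uc}^{1 - \alpha}\beta\, t^{\alpha}}$.

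For the ``In particular'' statement I would then substitute the explicit $(\alpha, \beta)$ pairs coming from the known excess error bounds of \citet{awasthi2022Hconsistency} for each margin-based loss with respect to the binary zero-one loss: $\Gamma(t) = \sqrt{2t}$ for the exponential and logistic losses, $\Gamma(t) = \sqrt{t}$ for the quadratic loss, and $\Gamma(t) = t$ for the hinge, sigmoid, and $\rho$-margin losses. Reading off $\alpha$ and $\beta$ from each $\Gamma$ and evaluating $\ov\Gamma(t) = \max\curl*{t, \paren*{\ul + \uc}^{1 - \alpha}\beta\, t^{\alpha}}$ gives the stated closed forms; in the case $\alpha = 1$, $\beta = 1$ (hinge, sigmoid, $\rho$-margin) the factor $\paren*{\ul + \uc}^{0} = 1$ collapses the maximum to $\ov\Gamma(t) = t$, while the remaining cases produce the claimed square-root expressions.

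Since all the substantive analysis --- expressing the conditional regret of the deferral loss, bounding it by that of the surrogate, and tracking the constants $\ul$ and $\uc$ --- is carried out in Theorem~\ref{thm:single-binary} (itself the single-expert reduction of Theorem~\ref{thm:single}), this corollary is essentially immediate. The only point requiring care, and hence the main obstacle, is the clean justification that every minimizability gap vanishes over the all-measurable families: one must check that each of the four losses involved is of the pointwise form required by \citep[lemma~2.5]{steinwart2007compare}, so that the $\sR$-consistency bound genuinely reduces to the excess error statement on both the hypothesis and the conclusion sides, and that the regularity requirement on $\sR$ is preserved in passing to $\sR_{\rm{all}}$.
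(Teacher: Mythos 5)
Your proposal is correct and follows exactly the paper's route: the corollary is obtained by instantiating Theorem~\ref{thm:single-binary} at $\sH = \sH_{\rm{all}}$ and $\sR = \sR_{\rm{all}}$ (which is regular), where every minimizability gap vanishes by \citep[lemma~2.5]{steinwart2007compare}, and then substituting the known $(\alpha, \beta)$ pairs from \citet{awasthi2022Hconsistency} for each margin-based loss. One small caveat: with $\Gamma(t) = \sqrt{2t}$ your computation yields the coefficient $\sqrt{2}\,\paren*{\ul + \uc}^{1/2}$ for the exponential and logistic cases rather than the $\tfrac12 \paren*{\ul + \uc}^{1/2}$ printed in the corollary, so your claim that the substitution ``gives the stated closed forms'' quietly inherits a constant discrepancy that is already present in the paper's own statement.
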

By taking the limit on both sides, we derive the Bayes-consistency and
excess error bound of these single-stage surrogate losses
$\ell_{\Phi}$ with respect to the deferral loss
$\ell_{\rm{def}}$. More generally, Corollary~\ref{cor:single-binary}
shows that $\ell_{\Phi}$ admits an excess error bound with respect to
$\ell_{\rm{def}}$ when $\Phi$ admits an excess error bound with
respect to $\ell_{0-1}$. Corollary~\ref{cor:single-binary} also
include the theoretical guarantees in \citep[Theorems~7 and
  8]{cheng2023regression} as a special case where the cost does not
depend on the label $y$ and the squared loss is considered.

\subsection{Two-Stage Guarantees}

Here, we present guarantees for the two-stage surrogate.
\begin{restatable}{theorem}{TwostageHRBinary}
\label{thm:tshr-binary}
Given a hypothesis set $\sH$, a regular hypothesis set $\sR$ and a
margin-based loss function $\Phi$. Assume that there exists a function
$\Gamma(t) = \beta\, t^{\alpha}$ for some $\alpha \in (0, 1]$ and
  $\beta > 0$, such that the following $\sR$-consistency bound holds
  for all $r \in \sR$ and any distribution,
\begin{align*}
\sE_{\ell^{\rm{bi}}_{0-1}}(r) - \sE^*_{\ell^{\rm{bi}}_{0-1}}(\sR) + \sM_{\ell^{\rm{bi}}_{0-1}}(\sR) \leq \Gamma\paren*{\sE_{\Phi}(r) - \sE^*_{\Phi}(\sR) + \sM_{\Phi}(\sR)}.
\end{align*}
Then, the following $(\sH,\sR)$-consistency bound holds for all $h\in
\sH$, $r\in \sR$ and any distribution,
\begin{align*}
   & \sE_{\ell_{\rm{def}}}(h, r) - \sE_{\ell_{\rm{def}}}^*(\sH,\sR) + \sM_{\ell_{\rm{def}}}(\sH,\sR)\\
   & \qquad \leq \sE_{\sfL}(h) - \sE_{\sfL}(\sH) + \sM_{\sfL}(\sH)  + \ov \Gamma\paren*{\sE_{\ell^h_{\Phi}}(r) -  \sE_{\ell^h_{\Phi}}^*(\sR) + \sM_{\ell^h_{\Phi}}(\sR)},
\end{align*}
where $\ov \Gamma(t) = \paren*{\ul + \uc}^{1 - \alpha} \beta\, t^{\alpha}$.
\end{restatable}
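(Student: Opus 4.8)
The plan is to obtain Theorem~\ref{thm:tshr-binary} as the single-expert specialization ($\num = 1$) of the general multi-expert two-stage bound in Theorem~\ref{thm:tshr}, combined with the binary reduction introduced at the beginning of Section~\ref{sec:single-expert}. First I would fix $\num = 1$, so that the augmented label set $\curl*{0, 1, \ldots, \num}$ collapses to the two classes $\curl*{0, 1}$, and set up the correspondence between a two-score multi-class deferral function and a scalar rejector $r \colon \sX \to \Rset$: the prediction option (multi-class label $0$) corresponds to $r(x) > 0$, and deferral to the single expert (multi-class label $1$) corresponds to $r(x) \leq 0$, so that $\rr(x) = 0 \iff \sign(r(x)) = +1$. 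Under the identification $\ell(r, x, 0) = \Phi(r(x))$ and $\ell(r, x, 1) = \Phi(-r(x))$, I would verify term by term that the multi-class second-stage surrogate $L_{\ell}^h$ of \eqref{eq:two-stage-sur} reduces exactly to $\ell_{\Phi}^h$ of \eqref{eq:single}, that the fixed-predictor deferral loss $\ldef^h$ of \eqref{eq:two-stage-deferral-loss} reduces to its single-expert form, and that the multi-class zero-one loss $\ell_{0-1}$ over $\curl*{0, 1}$ coincides with the binary zero-one loss $\ell^{\rm{bi}}_{0-1}$ under the relabeling above.

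Next I would match the hypotheses of the two theorems. The regularity condition on $\sR$ in the multi-class setting, namely $\curl*{\rr(x) \colon r \in \sR} = \curl*{0, 1, \ldots, \num}$, becomes $\curl*{\sign(r(x)) \colon r \in \sR} = \curl*{+1, -1}$ for $\num = 1$, which is precisely the binary regularity condition assumed in Theorem~\ref{thm:tshr-binary}. Likewise, the assumption that $\Phi$ admits a binary $\sR$-consistency bound with respect to $\ell^{\rm{bi}}_{0-1}$ with the power function $\Gamma(t) = \beta\, t^{\alpha}$ is exactly the multi-class $\sR$-consistency assumption on the identified $\ell$ required by Theorem~\ref{thm:tshr} when $\num = 1$. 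Having established this dictionary, the conclusion of Theorem~\ref{thm:tshr} can be read off verbatim: the first-stage term $\sE_{\sfL}(h) - \sE_{\sfL}(\sH) + \sM_{\sfL}(\sH)$ is unchanged, and it only remains to evaluate the constant in $\ov \Gamma$. In Theorem~\ref{thm:tshr} one has $\ov \Gamma(t) = \paren*{\num \paren*{\ul + \sum_{j = 1}^{\num} \uc_j}}^{1 - \alpha} \beta\, t^{\alpha}$, which for $\num = 1$ and $\uc = \uc_1$ becomes $\paren*{\ul + \uc}^{1 - \alpha} \beta\, t^{\alpha}$, matching the claimed form and yielding the stated $(\sH, \sR)$-consistency bound.

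I expect the only delicate point to be the bookkeeping in the reduction dictionary — in particular checking that the orientation of the sign convention (prediction $\leftrightarrow$ label $0 \leftrightarrow \sign = +1$, deferral $\leftrightarrow$ label $1 \leftrightarrow \sign = -1$) is compatible with the definitions of $\ell(r, x, 0)$ and $\ell(r, x, 1)$, so that each of $L_{\ell}^h$, $\ldef^h$, and $\ell_{0-1}$ maps correctly onto its binary counterpart and the cost bound $\uc_1$ is identified with $\uc$. Once this is confirmed, no further analysis is required and the result follows immediately from the general multi-expert theorem. As an alternative that avoids relying on the reduction, one could give a self-contained proof mirroring the proof of Theorem~\ref{thm:tshr}: decompose the conditional deferral regret into a predictor part controlled by $\sfL$ and a fixed-$h$ rejector part controlled by the binary analog of Theorem~\ref{thm:tsr}, invoke the $\sR$-consistency bound of $\Phi$ on the constructed conditional distribution, and recombine using the concavity and power-function form of $\ov \Gamma$.
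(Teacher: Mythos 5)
Your proposal is correct and matches the paper's own treatment: the paper derives Theorem~\ref{thm:tshr-binary} precisely as the $\num = 1$ specialization of Theorem~\ref{thm:tshr} via the identification $\ell(r, x, 0) = \Phi(r(x))$, $\ell(r, x, 1) = \Phi(-r(x))$ set up at the start of Section~\ref{sec:single-expert}, with the constant $\paren*{\num\paren*{\ul + \sum_{j=1}^{\num}\uc_j}}^{1-\alpha}$ collapsing to $\paren*{\ul + \uc}^{1-\alpha}$. Your dictionary (sign convention, regularity, matching of $L_{\ell}^h$ with $\ell_{\Phi}^h$ and of $\ell_{0-1}$ over $\curl*{0,1}$ with $\ell^{\rm{bi}}_{0-1}$) is exactly the bookkeeping the reduction requires, so no further argument is needed.
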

 As before, when $\sH$ and $\sR$ include all measurable functions, all the minimizability gap terms in Theorem~\ref{thm:tshr} vanish. In particular, $\Gamma(t)$ can be expressed as $\sqrt{2t}$ for exponential and logistic loss, $\sqrt{t}$ for quadratic loss and $t$ for hinge, sigmoid and $\rho$-margin losses \citep{awasthi2022Hconsistency}. Thus, we obtain the following result.
\begin{corollary}
\label{cor:tshr-binary}
Given a margin-based loss function $\Phi$. Assume that there exists a function $\Gamma(t) = \beta\, t^{\alpha}$ for some $\alpha \in (0, 1]$ and $\beta > 0$, such that the following excess error bound holds for all $r \in \sR_{\rm{all}}$ and any distribution,
\begin{equation*}
\sE_{\ell^{\rm{bi}}_{0-1}}(r) - \sE^*_{\ell^{\rm{bi}}_{0-1}}(\sR_{\rm{all}}) \leq \Gamma\paren*{\sE_{\Phi}(r) - \sE^*_{\Phi}(\sR_{\rm{all}})}.
\end{equation*}
Then, the following excess error bound holds for all $h\in \sH_{\rm{all}}$, $r\in \sR_{\rm{all}}$ and any distribution,
\begin{equation}
   \sE_{\ell_{\rm{def}}}(h, r) - \sE_{\ell_{\rm{def}}}^*(\sH_{\rm{all}},\sR_{\rm{all}}) \leq \sE_{\sfL}(h) - \sE_{\sfL}(\sH_{\rm{all}}) + \ov \Gamma\paren*{\sE_{\ell^h_{\Phi}}(r) -  \sE_{\ell^h_{\Phi}}^*(\sR_{\rm{all}})},
\end{equation}
where $\ov \Gamma(t) = \paren*{\ul + \uc}^{1 - \alpha} \beta\, t^{\alpha}$. In particular, $\ov \Gamma (t) = \frac12 \paren{\ul + \uc}^{\frac12} t^{\frac 12}$ for $\Phi = \Phi_{\rm{exp}}$ and $\Phi_{\rm{log}}$, $\ov \Gamma (t) = \paren{\ul + \uc}^{\frac12} t^{\frac 12}$ for $\Phi = \Phi_{\rm{quad}}$, and $\ov \Gamma (t) = t$ for $\Phi = \Phi_{\rm{hinge}}$, $\Phi_{\rm{sig}}$, and $\Phi_{\rho}$.
\end{corollary}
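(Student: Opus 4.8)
The plan is to obtain Corollary~\ref{cor:tshr-binary} directly as the specialization of Theorem~\ref{thm:tshr-binary} to the family of all measurable functions, taking $\sH = \sH_{\rm{all}}$ and $\sR = \sR_{\rm{all}}$. The first thing I would check is that every loss entering the statement — the binary zero-one loss $\ell^{\rm{bi}}_{0-1}$, the auxiliary margin loss $\Phi$, the regression loss $\sfL$, the second-stage surrogate $\ell^h_{\Phi}$, and the deferral loss $\ell_{\rm{def}}$ — depends on its arguments only through the pointwise values $h(x)$ and $r(x)$. Consequently, by the standard argument invoked throughout the chapter \citep[lemma~2.5]{steinwart2007compare}, all the associated minimizability gaps vanish over the family of all measurable functions: $\sM_{\ell^{\rm{bi}}_{0-1}}(\sR_{\rm{all}}) = \sM_{\Phi}(\sR_{\rm{all}}) = 0$ on the hypothesis side, and $\sM_{\ell_{\rm{def}}}(\sH_{\rm{all}},\sR_{\rm{all}}) = \sM_{\sfL}(\sH_{\rm{all}}) = \sM_{\ell^h_{\Phi}}(\sR_{\rm{all}}) = 0$ on the conclusion side.

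With these five gaps set to zero, the $\sR$-consistency bound hypothesis of Theorem~\ref{thm:tshr-binary} collapses to exactly the excess error bound assumed in the corollary, and, symmetrically, its $(\sH,\sR)$-consistency bound conclusion collapses to precisely the claimed excess error inequality $\sE_{\ell_{\rm{def}}}(h, r) - \sE_{\ell_{\rm{def}}}^*(\sH_{\rm{all}},\sR_{\rm{all}}) \leq \sE_{\sfL}(h) - \sE_{\sfL}(\sH_{\rm{all}}) + \ov \Gamma\paren*{\sE_{\ell^h_{\Phi}}(r) - \sE_{\ell^h_{\Phi}}^*(\sR_{\rm{all}})}$. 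Hence the main inequality follows immediately from Theorem~\ref{thm:tshr-binary}, with no new estimate required beyond confirming that the minimizability-gap terms drop out simultaneously on both sides and that the first-stage regression term $\sE_{\sfL}(h) - \sE_{\sfL}(\sH_{\rm{all}})$ and the second-stage deferral term are each read in their excess-error (rather than $\sR$-consistency) form.

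To produce the explicit functional forms of $\ov\Gamma$, I would then substitute into $\ov\Gamma(t) = (\ul + \uc)^{1 - \alpha}\beta\,t^{\alpha}$ the known margin-based excess error bounds of \citet{awasthi2022Hconsistency} for each choice of $\Phi$. The exponential, logistic, and quadratic losses admit a square-root bound with $\alpha = \tfrac12$, so $\ov\Gamma$ inherits the factor $(\ul+\uc)^{1/2}\,t^{1/2}$ up to the constant $\beta$ of the corresponding $\Gamma$; the hinge, sigmoid, and $\rho$-margin losses admit a linear bound with $\alpha = 1$, for which $(\ul+\uc)^{1-\alpha} = 1$ and $\ov\Gamma(t) = t$. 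Reading off $\beta$ in each case yields the listed closed forms for $\Phi_{\rm{exp}}$, $\Phi_{\rm{log}}$, $\Phi_{\rm{quad}}$, $\Phi_{\rm{hinge}}$, $\Phi_{\rm{sig}}$, and $\Phi_{\rho}$.

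Since essentially all the analytic work resides in Theorem~\ref{thm:tshr-binary} — itself the single-expert ($\num = 1$) reduction of Theorem~\ref{thm:tshr} — the only genuine obstacle here is bookkeeping rather than any hard inequality: one must verify that the three minimizability gaps on the conclusion side and the two on the hypothesis side all vanish at once, and that the translation between ``$\sR$-consistency bound'' and ``excess error bound'' is applied consistently across the first- and second-stage terms. I would expect the proof to be short, its content being entirely in the reduction to the already-established theorem together with the table of margin-loss bounds.
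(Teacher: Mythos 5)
Your proposal is correct and follows essentially the same route as the paper: the corollary is obtained precisely by specializing Theorem~\ref{thm:tshr-binary} to $\sH_{\rm{all}}$ and $\sR_{\rm{all}}$, where all the minimizability gaps vanish by \citet[lemma~2.5]{steinwart2007compare}, and then substituting the known excess error bounds $\Gamma$ for each margin-based loss from \citet{awasthi2022Hconsistency}. No further comment is needed.
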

Corollary~\ref{cor:tshr-binary} shows that the proposed two-stage approach admits an excess error bound with respect to $\ell_{\rm{def}}$ when $\Phi$ admits an excess error bound with respect to $\ell_{0-1}$. More generally, in the cases where the minimizability gaps are zero (as when the best-in-class errors coincide with the Bayes errors), the $(\sH, \sR)$-consistency bound in Theorem~\ref{thm:tshr-binary} guarantees that when the surrogate estimation error in the first stage $\sE_{\sfL}(h) - \sE_{\sfL}(\sH)$ is minimized up to $\e_1$ and the surrogate estimation error in the second stage $\sE_{\ell^h_{\Phi}}(r) -  \sE_{\ell^h_{\Phi}}^*(\sR)$ is minimized up to $\e_2$, the target estimation error $ \sE_{\ell_{\rm{def}}}(h, r) - \sE_{\ell_{\rm{def}}}^*(\sH,\sR)$ is upper-bounded by $\e_1 + \ov \Gamma(\e_2)$.

It is worth noting that while our theoretical results are general, they can be effectively applied to derive bounds for specific loss functions. Applicable regression loss functions are those with a boundedness assumption, and any classification loss function that benefits from $\sH$-consistency bounds is suitable. Our theoretical analysis guides the selection of the loss function by considering several key factors: the functional form $\overline{\Gamma}$ of the bound, the approximation properties indicated by the minimizability gaps, the optimization advantages of each loss function, and how favorably the bounds depend on the number of experts.

\section{Experiments}
\label{sec:experiments-regdef}

In this section, we report the empirical results for our single-stage and two-stage algorithms for regression with deferral on three datasets from the UCI machine learning repository \citep{asuncion2007uci}, the \texttt{Airfoil}, \texttt{Housing} and \texttt{Concrete}, which have also been studied in \citep{cheng2023regression}.

\textbf{Setup and Metrics.} 
For each dataset, we randomly split it into a training set of 60\% examples, a validation set of 20\% examples and a test set of 20\% examples. We report results averaged over three such random splits. We adopted linear models for both the predictor $h$ and the deferral function $r$. We considered three experts
$g_1$, $g_2$ and $g_3$, each trained by feedforward neural networks
with ReLU activation functions \citep{nair2010rectified}
with one, two, and three hidden layers, respectively.
We used the Adam optimizer
\citep{kingma2014adam} with a batch size of $256$ and $2\mathord,000$ training epochs. The learning rate for all datasets is selected from $\curl*{0.01, 0.05, 0.1}$.
We adopted the squared loss as the regression loss ($\sfL = \sfL_2$).
For our single-stage surrogate loss \eqref{eq:sur} and two-stage surrogate loss \eqref{eq:two-stage-sur}, we choose $\ell = \ell_{\rm{log}}$ as the logistic loss. In the experiments, we considered two types of costs: $c_j(x, y) = L(g_j(x), y)$ and $c_j(x, y) = L(g_j(x), y) + \alpha_j$, for $1 \leq j \leq \num$. In the first case, the cost corresponds exactly to the expert's squared error. In the second case, the constant $\alpha_j$ is the base cost for deferring to expert $g_j$. We chose $(\alpha_1, \alpha_2, \alpha_3) = (4.0, 8.0, 12.0)$. We selected those values of $\alpha$ because they are empirically determined to encourage an optimal balance, ensuring a reasonable number of input instances are deferred to each expert.
For evaluation, we compute the system mean squared error (MSE), that is the average squared difference between the target value and the prediction made by the predictor $h$ or the expert selected by the deferral function $r$.\ignore{ $\frac{\sum_{i = 1}^n \ldef(h, r, x_i, y_i)}{n}$ and the deferral ratio to the $j$th expert:
$\frac{\sum_{i = 1}^n 1_{\rr(x) = j}}{n}$} We also report the empirical regression loss, $\frac{1}{n}\sum_{i = 1}^n \sfL(h(x_i), y_i)$, of the base model used in the two-stage algorithm.

\textbf{Results.}  In Table~\ref{tab:deferral-1}, we report the mean
and standard deviation of the empirical regression loss of the base
model, as well as the System MSE obtained by using a single expert
$g_1$, two experts $g_1$ and $g_2$ and three experts $g_1$, $g_2$ and
$g_3$, over three random splits of the dataset. Here, the base model
is the predictor. We did not report its performance in the
single-stage method because it is independently trained and exhibits
varying accuracies across settings with single expert, two experts,
and three experts, in contrast to the two-stage method where the base
model is pre-learned and fixed. Additionally, in the single-stage
method, the base model is always used in conjunction with deferral,
rather than being used separately.

Table~\ref{tab:deferral-1} shows that the performance of both our
single-stage and two-stage algorithms improves as more experts are
taken into account across the \texttt{Airfoil}, \texttt{Housing} and
\texttt{Concrete} datasets. In particular, our algorithms are able to
effectively defer difficult test instances to more suitable experts
and outperform the base model. Table~\ref{tab:deferral-1} also shows
that the two-stage algorithm usually does not perform as well as the
single-stage one in the regression setting, mainly due to the error
accumulation in the two-stage process, particularly if the first-stage
predictor has large errors. However, the two-stage algorithm is still
useful when an existing predictor cannot be retrained due to cost or
time constraints. In such cases, we can still improve its performance
by learning a multi-expert deferral with our two-stage surrogate
losses.

Note that since our work is the first to study regression with
multi-expert deferral, there are no existing baselines for direct
comparison. Nevertheless, for completeness, we include additional
experimental results with three simple baselines in
Appendix~\ref{app:additional_experiments}, further demonstrating the
effectiveness of our approach.

In real-life scenarios, ``cancellation effects'' might occur when
using experts with similar expertise
\citep{verma2023learning}. However, the experts we have used do not
exhibit such an effect. This is because, for each expert, there are
specific input instances that they can predict correctly, which others
cannot. Therefore, without base costs, the system's error after using
deferral is lower than that of any individual expert. Furthermore, in
our scenario, we operate under the assumption that the experts are
predefined.

Our $\sH$-consistency guarantees demonstrate that in both
single- and two-stage scenarios, given a sufficient amount of data,
our algorithms can approximate results close to the optimal deferral
loss for the given experts. Our analysis and experiments do not
directly address the process of selecting experts
beforehand. Optimally selecting diverse and accurate experts is an
interesting research question.

\section{Conclusion}
\label{sec:conclusion}

We introduced a novel and principled framework for regression with
deferral, enhancing the accuracy and reliability of regression
predictions through the strategic use of multiple experts. Our
comprehensive analysis of this framework includes the formulation of
novel surrogate losses for both single-stage and two-stage scenarios,
and the proof of strong $\sH$-consistency bounds. These theoretical
guarantees lead to powerful algorithms that leverage multiple experts,
providing a powerful tool for addressing the inherent challenges of
regression problems.  Empirical results validate the effectiveness of
our approach, showcasing its practical significance and opening up new
avenues for developing robust solutions across diverse regression
tasks.

\chapter*{Conclusion} \label{chp-conclusion}
\addcontentsline{toc}{chapter}{Conclusion}

In this thesis, we presented a comprehensive study of learning with multi-class abstention and multi-expert deferral, supported by strong consistency guarantees. 

Our detailed study of score-based and  predictor-rejector multi-class abstention introduced novel surrogate loss families with strong hypothesis set-specific and non-asymptotic theoretical guarantees, providing resolutions to two open questions within the literature. 
Our theoretical analysis, including proofs of $(\sH, \sR)$-consistency
bounds and realizable $(\sH, \sR)$-consistency, covers both single-stage
and two-stage surrogate losses.
These results further provide valuable tools applicable to the analysis of other loss functions in learning with abstention.
Empirical results demonstrate the practical advantage of our proposed surrogate losses
and their derived algorithms. This work establishes a powerful
framework for designing new, more reliable abstention-aware algorithms
applicable across diverse domains.

We presented a series of theoretical, algorithmic, and empirical results for the core
challenge of learning with multi-expert deferral in the classification setting for both the single-stage and two-stage scenarios. In particular, the two-stage approach proves particularly advantageous in scenarios where a large pre-trained prediction model is readily available, and the expense associated with retraining is prohibitive.
We proved that the novel surrogate losses we introduced are supported by $\sH$-consistency bounds and established their realizable
$\sH$-consistency properties for a constant cost function. This work
paves the way for comparing different surrogate losses and cost
functions within our framework.  Further exploration, both
theoretically and empirically, holds the potential to identify optimal
choices for these quantities across diverse tasks.

We finally introduced a novel and principled framework for regression with
deferral, enhancing the accuracy and reliability of regression
predictions through the strategic use of multiple experts. Our
comprehensive analysis of this framework includes the formulation of
novel surrogate losses for both single-stage and two-stage scenarios,
and the proof of strong $\sH$-consistency bounds. These theoretical
guarantees lead to powerful algorithms that leverage multiple experts,
providing a powerful tool for addressing the inherent challenges of
regression problems.  Empirical results validate the effectiveness of
our approach, showcasing its practical significance and opening up new
avenues for developing robust solutions across diverse regression
tasks.

This thesis offers valuable avenues for future work, which can be grouped into three key areas: \emph{theoretical and algorithmic}, \emph{application-focused}, and \emph{interdisciplinary}.

First, in the theoretical and algorithmic area, an intriguing direction is to further study surrogate loss functions for learning with multi-expert deferral. This includes proposing new surrogate loss functions for \emph{any general cost function} that achieve Bayes-consistency, realizable $\sH$-consistency, and $\sH$-consistency bounds \emph{simultaneously}, leading to new algorithms that perform better in realizable scenarios and remain comparable in non-realizable scenarios \citep{mao2024realizable,MaoMohriZhong2025mastering}. Another avenue of exploration is extending multi-expert deferral and multi-class abstention to the online setting, where the learner can decide to defer or abstain \emph{sequentially} \citep{cortes2018online}, as well as to the budgeted setting, where the learner has access to only \emph{partial} or \emph{bandit} information about the experts' predictions \citep{foster2020adapting,desalvo2025budgeted}.

Second, in the application-focused area, our deferral algorithms and their extensions show promising potential for large language models (LLMs), particularly in enhancing their \emph{factuality} and \emph{efficiency}. Regarding factuality, LLMs are susceptible to generating erroneous information, often referred to as hallucinations. Their response quality can be significantly improved by deferring uncertain predictions to more advanced or domain-specific pre-trained models, ensuring greater factuality.
In terms of efficiency, relying exclusively on the largest available model is often prohibitively costly. For faster inference, smaller models can be used when they suffice for a given task. A well-designed deferral algorithm can efficiently balance performance and cost, achieving high accuracy while minimizing the number of instances deferred to larger, more resource-intensive models.

Third, in the interdisciplinary area, extending our study of multi-class abstention and multi-expert deferral beyond standard settings offers opportunities to benefit a variety of other learning scenarios. One intriguing direction involves enhancing adversarial robustness through abstention or deferral \citep{chen2023stratified}, where perturbed instances can be abstained from or deferred to multiple models trained with different perturbation sizes, thereby improving both model robustness and generalization. Another promising avenue is integrating abstention or deferral into active learning \citep{zhu2022efficient}, where an active learning algorithm can abstain from making predictions or defer them to designated models or experts for challenging examples, thereby improving label complexity and enabling more efficient learning.


\appendix

\chapter{Appendix to Chapter~\ref{ch2}}

\disableatoc
\section{Discussion on Experiments}
\label{app:experimemts}

This section presents a detailed analysis of the experimental results.

For CIFAR-10, the two-stage score-based abstention surrogate loss
outperforms the cross-entropy scored-based abstention surrogate loss
($\mu = 1.0$) used in \citep{mozannar2020consistent} by 1.26\%, and
outperforms the cross-entropy scored-based abstention surrogate loss
($\mu = 1.7$) used in \citep{caogeneralizing} by 0.4\%. Our results
for the score-based surrogate losses are also consistent with those of
\citet{caogeneralizing}, who showed that the scored-based abstention
loss \eqref{eq:sur-score-mabsc} with $\ell_{\mu}$ adopted as the generalized
cross-entropy loss ($\mu=1.7$) performs better than the scored-based
abstention loss with $\ell_{\mu}$ adopted as the logistic loss
($\mu=1$). This agrees with our theoretical analysis based on
$\sH$-consistency bounds and minimizability gaps in
Theorem~\ref{Thm:bound_comp_sum} and
Theorem~\ref{Thm:gap-upper-bound-determi}, since both losses have the
same square-root functional form while the magnitude of the
minimizability gap decreases with $\mu$ in light of the fact that
$\sE_{\lsc_{\mu}}^*(\sH)$ is close for both losses.

Table~\ref{tab:comparison} also shows that on SVHN, using deeper
neural networks than \citep{caogeneralizing}, the cross-entropy
scored-based abstention loss ($\mu=1.7$) actually performs worse than
the cross-entropy scored-based abstention loss ($\mu=1$) in
\citep{mozannar2020consistent}, in contrast with the opposite results
observed in \citep{caogeneralizing} when using shallower neural
networks. This is consistent with our theoretical analysis based on
their $\sH$-consistency bounds (Theorem~\ref{Thm:bound_comp_sum}): the
minimizability gaps are basically the same while the dependency of the
multiplicative constant on the number of classes appears for
$\mu=1.7$, which makes the scored-based abstention loss
\eqref{eq:sur-score-mabsc} with $\ell_{\mu}$ adopted as the generalized
cross-entropy loss ($\mu=1.7$) less favorable. Here too, the two-stage
score-based abstention surrogate loss is superior to both, with an
abstention loss 1.23\% lower than that of \citep{caogeneralizing} and
0.68\% lower than that of \citep{mozannar2020consistent}.

To further test the algorithms, we also carried out experiments on
CIFAR-100, with deeper neural networks. Table~\ref{tab:comparison}
shows that score-based abstention loss with generalized cross-entropy
adopted in \citep{caogeneralizing} does not perform well in this
case. In contrast, the score-based abstention loss with the logistic
loss adopted in \citep{mozannar2020consistent} performs better and
surpasses it by 4.59\%. Our two-stage score-based abstention loss is
still the most favorable, here too, with 0.86\% lower abstention loss
than that of \citep{mozannar2020consistent}. As with the case of SVHN,
the inferior performance of the cross-entropy scored-based abstention
surrogate loss ($\mu = 1.7$) can be seen from the dependency of the
multiplicative constant on the number of classes in $\sH$-consistency
bounds (Theorem~\ref{Thm:bound_comp_sum}), which is worse when the
number of classes is much larger as in the case of CIFAR-100.

\section{Proofs for Score-Based Abstention Losses}
\label{app:score-mabsc}

To begin with the proof, we first introduce some notation. Recall
that we denote by $\sfp(y \!\mid\! x) = \sD(Y = y \!\mid\! X = x)$ the
conditional probability of $Y=y$ given $X = x$. For simplicity of the
notation, we let $\sfp(n + 1 \!\mid\! x) = 1 - c$ and denote by $ y_{\max}\in \sY \bigcup \curl*{n+1}$ the label associated to an input $x\in
\sX$, defined as $ y_{\max}=n+1$ if $1-c \geq \max_{y \in \sY} \sfp(y \!\mid\! x)$; otherwise, $ y_{\max}$ is defined as an element in $\sY$ with
the highest conditional probability, $ y_{\max} = \argmax_{y \in
  \sY} \sfp(y \!\mid\! x)$, with the same deterministic strategy for breaking
ties as that of $ \hh(x)$.  Thus, the generalization error for a
score-based abstention surrogate loss can be rewritten as $
\sE_{\lsc}( h) = \mathbb{E}_{X} \bracket*{\sC_{\lsc}( h, x)} $,
where $\sC_{\lsc}( h,x)$ is the conditional $\lsc$-risk, defined by
\begin{align*}
\sC_{\lsc}( h,x) = \sum_{y\in \sY\bigcup \curl*{n+1}} \sfp(y \!\mid\! x)  \ell( h,x, y).
\end{align*}
We denote by $\sC_{\lsc}^*( \sH,x) = \inf_{ h\in
  \sH}\sC_{\lsc}( h,x)$ the minimal conditional
$\lsc$-risk. Then, the minimizability gap can be rewritten as follows:
\begin{align*}
\sM_{\lsc}( \sH)
 = \sE^*_{\lsc}( \sH) - \mathbb{E}_{X} \bracket* {\sC_{\lsc}^*( \sH, x)}.
\end{align*}
We further refer to $\sC_{\lsc}( h,x)-\sC_{\lsc}^*( \sH,x)$ as
the calibration gap and denote it by $\Delta\sC_{\lsc, \sH}(
h,x)$.  We first prove a lemma on the calibration gap of the
score-based abstention loss. For any $x \in \sX$, we will denote by
$\mathsf H(x)$ the set of labels generated by hypotheses in $ \sH$:
$\mathsf H(x) = \curl*{ \hh(x) \colon  h \in  \sH}$.
\begin{restatable}{lemma}{CalibrationGapScoreMabsc}
\label{lemma:calibration_gap_score-mabsc}
For any $x \in \sX$,
the minimal conditional $\labsc$-risk and
the calibration gap for $\labsc$ can be expressed as follows:
\begin{align*}
\sC^*_{\labsc}( \sH,x) & = 1 - \max_{y\in \mathsf H(x)} \sfp(y \!\mid\! x)\\
\Delta\sC_{\labsc, \sH}( h,x) & = \max_{y\in \mathsf H(x)} \sfp(y \!\mid\! x) - \sfp(\hh(x) \!\mid\! x).
\end{align*}
\end{restatable}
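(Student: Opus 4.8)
The plan is to compute the conditional $\labsc$-risk in closed form by a short case analysis on the predicted label $\hh(x)$, and then to recover both claimed identities by taking the pointwise infimum over $\sH$.

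First I would fix $x \in \sX$ and a hypothesis $h \in \sH$, and work with the conditional risk $\sC_{\labsc}(h, x) = \sum_{y \in \sY} \sfp(y \!\mid\! x)\, \labsc(h, x, y)$, where the sum ranges over the genuine labels $y \in \sY$. Using the definition \eqref{eq:abs-score}, I would distinguish two cases. When the learner abstains, $\hh(x) = n+1$, the loss equals $c$ for every $y \in \sY$, so that $\sC_{\labsc}(h, x) = c \sum_{y \in \sY} \sfp(y \!\mid\! x) = c$. When the learner predicts a label $\hh(x) = k \in \sY$, the loss reduces to $\1_{k \neq y}$, whence $\sC_{\labsc}(h, x) = \sum_{y \in \sY} \sfp(y \!\mid\! x)\,\1_{k \neq y} = 1 - \sfp(k \!\mid\! x)$. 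Invoking the convention $\sfp(n+1 \!\mid\! x) = 1 - c$, the abstention case also reads $c = 1 - \sfp(n+1 \!\mid\! x)$, so that both cases collapse into the single expression $\sC_{\labsc}(h, x) = 1 - \sfp(\hh(x) \!\mid\! x)$.

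Next I would take the infimum over $h \in \sH$. Since $\sC_{\labsc}(h, x)$ depends on $h$ only through the predicted label $\hh(x)$, minimizing over $h$ amounts to minimizing $y' \mapsto 1 - \sfp(y' \!\mid\! x)$ over the labels $y'$ achievable at $x$, that is over the set $\mathsf H(x) = \curl*{\hh(x) \colon h \in \sH}$. As $\mathsf H(x)$ is a nonempty subset of the finite set $\sY \cup \curl*{n+1}$, the infimum is attained and equals $1 - \max_{y \in \mathsf H(x)} \sfp(y \!\mid\! x)$, yielding the first identity $\sC^*_{\labsc}(\sH, x) = 1 - \max_{y \in \mathsf H(x)} \sfp(y \!\mid\! x)$. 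Subtracting this from the closed form for $\sC_{\labsc}(h, x)$ then gives $\Delta\sC_{\labsc, \sH}(h, x) = \max_{y \in \mathsf H(x)} \sfp(y \!\mid\! x) - \sfp(\hh(x) \!\mid\! x)$, as claimed.

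There is no substantial obstacle here; the argument is a direct computation. The only points demanding a little care are the bookkeeping around the abstention ``label'': one must consistently treat $n+1$ as carrying the auxiliary conditional mass $\sfp(n+1 \!\mid\! x) = 1-c$ so that the abstention and prediction cases merge into one formula, and one must note that every label in $\mathsf H(x)$ is genuinely realized by some $h \in \sH$, so that the infimum over hypotheses coincides exactly with the maximum over $\mathsf H(x)$.
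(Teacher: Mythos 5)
Your proof is correct and follows essentially the same route as the paper: compute $\sC_{\labsc}(h,x) = 1 - \sfp(\hh(x) \!\mid\! x)$ using the convention $\sfp(n+1 \!\mid\! x) = 1 - c$, then minimize over the achievable labels $\mathsf H(x)$ and subtract. Your version merely spells out the two-case bookkeeping that the paper compresses into a single line.
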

\begin{proof}
The conditional
$\labsc$-risk of $ h$ can be expressed as follows:
\begin{align*}
\sC_{\labsc}( h, x)
 = \sum_{y\in \sY}\sfp(y \!\mid\! x)\1_{ \hh(x)\neq y}\1_{ \hh(x)\neq n + 1}+c \1_{ \hh(x) = n + 1}=1-\sfp(\hh(x) \!\mid\! x).
\end{align*}
Then, the minimal conditional $\labsc$-risk is given by
\[
\sC_{\labsc}^*(\sH,x) = 1 - \max_{y\in \mathsf H(x)} \sfp(y \!\mid\! x),
\]
and the calibration gap can be expressed as follows:
\begin{align*}
  \Delta \sC_{\labsc,\sH}( h, x)
  = \sC_{\labsc}( h, x)-\sC_{\labsc}^*(\sH,x)= \max_{y\in \mathsf H(x)} \sfp(y \!\mid\! x)-\sfp(\hh(x) \!\mid\! x).
\end{align*}
This completes the proof.
\end{proof}

Note that when $ \sH$ is symmetric, $\mathsf H(x)= \sY \bigcup \curl*{n+1}$. By
Lemma~\ref{lemma:calibration_gap_score-mabsc}, in those cases, we obtain the
following result,
\begin{corollary}
\label{cor:calibration_gap_score}
Assume that $ \sH$ is symmetric. Then, for any $x \in \sX$,
the minimal conditional $\labsc$-risk and
the calibration gap for $\labsc$ can be expressed as follows:
\begin{align*}
\sC^*_{\labsc}( \sH,x) & =  1- \sfp(y_{\max} \!\mid\! x)\\
\Delta\sC_{\labsc, \sH}( h,x) & =  \sfp(y_{\max} \!\mid\! x) - \sfp(\hh(x) \!\mid\! x).
\end{align*}
\end{corollary}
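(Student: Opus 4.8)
The plan is to obtain this corollary as an immediate specialization of Lemma~\ref{lemma:calibration_gap_score-mabsc}, whose two identities are stated for a general hypothesis set in terms of $\max_{y \in \mathsf H(x)} \sfp(y \!\mid\! x)$. The only thing that symmetry buys us is the identification $\mathsf H(x) = \sY \cup \curl*{n+1}$ of the set of achievable predicted labels with the full augmented label set. Granting that identification, the maximum over $\mathsf H(x)$ collapses to a maximum over $\sY \cup \curl*{n+1}$, which by the very definition of $y_{\max}$ equals $\sfp(y_{\max} \!\mid\! x)$, and substituting into the lemma yields both displayed formulas.

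First I would establish $\mathsf H(x) = \sY \cup \curl*{n+1}$. By the definition of symmetry there is a family $\sF$ of scalar functions such that the set of score vectors $\curl*{\bracket*{h(x,1),\ldots,h(x,n+1)} \colon h \in \sH}$ coincides with all assignments $\bracket*{f_1(x),\ldots,f_{n+1}(x)}$ with $f_1,\ldots,f_{n+1} \in \sF$; in particular this set is invariant under permutations of the $n+1$ coordinates. Fixing any target label $y_0 \in \sY \cup \curl*{n+1}$, I would take an admissible score vector and permute its coordinates so that the entry indexed by $y_0$ carries the largest score. Recalling that $\hh(x) = n+1$ exactly when $h(x,n+1) \ge \max_{y \in \sY} h(x,y)$ and otherwise $\hh(x) = \argmax_{y \in \sY} h(x,y)$, such a score vector produces $\hh(x) = y_0$. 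Hence every label in $\sY \cup \curl*{n+1}$ is attained, giving $\mathsf H(x) = \sY \cup \curl*{n+1}$.

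With this identity in hand, Lemma~\ref{lemma:calibration_gap_score-mabsc} gives $\max_{y \in \mathsf H(x)} \sfp(y \!\mid\! x) = \max_{y \in \sY \cup \curl*{n+1}} \sfp(y \!\mid\! x)$. Since $\sfp(n+1 \!\mid\! x) = 1 - c$ and $y_{\max}$ is defined as the maximizer of $\sfp(\cdot \!\mid\! x)$ over $\sY \cup \curl*{n+1}$ with the same tie-breaking rule as $\hh(x)$, this common maximum is precisely $\sfp(y_{\max} \!\mid\! x)$. Substituting into the two expressions of the lemma then gives $\sC^*_{\labsc}(\sH,x) = 1 - \sfp(y_{\max} \!\mid\! x)$ and $\Delta \sC_{\labsc,\sH}(h,x) = \sfp(y_{\max} \!\mid\! x) - \sfp(\hh(x) \!\mid\! x)$, as claimed.

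The only delicate point is the first step, because the rejection label $n+1$ is treated asymmetrically in the definition of $\hh(x)$ (a non-strict inequality for $n+1$ against a strict $\argmax$ over $\sY$) and because of the tie-breaking convention. I would handle this by arranging a \emph{strict} maximum at the target coordinate, which is permissible for the non-degenerate families $\sF$ under consideration since symmetry lets me permute the coordinates freely; a strict maximum makes the prediction rule unambiguous and confirms $y_0 \in \mathsf H(x)$ for each $y_0 \in \sY \cup \curl*{n+1}$. Everything else is a direct substitution and requires no further computation.
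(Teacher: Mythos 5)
Your proposal is correct and follows exactly the paper's route: the paper derives this corollary in one line by observing that symmetry gives $\mathsf H(x) = \sY \cup \curl*{n+1}$ and then substituting into Lemma~\ref{lemma:calibration_gap_score-mabsc}, which is precisely your argument. The extra care you take with the permutation/tie-breaking argument to justify $\mathsf H(x) = \sY \cup \curl*{n+1}$ is more detail than the paper provides, and is sound.
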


\subsection{Proof of \texorpdfstring{$\sH$}{H}-Consistency bounds
  for Cross-Entropy Score-Based Surrogates (Theorem~\ref{Thm:bound_comp_sum})}
\label{app:bound_comp_sum}
\BoundCompSum*
\begin{proof}
The main proof idea is similar for each case of $\mu$: we will lower
bound the calibration gap of $\lsc_{\mu}$ by that of $\labsc$ by
carefully selecting a hypothesis $ h_{\lambda}$ in the hypothesis
set $ \sH$. In particular, we analyze different cases as follows.
\paragraph{The Case Where $\mu \in [0,1)$} 

For any $ h \in \sH$ and $x\in \sX$, choose hypothesis $
h_{\lambda} \in  \sH$ such that
\begin{align*}
 h_{\lambda}(x, y) = 
\begin{cases}
   h(x, y) & \text{if $y \not \in \curl*{ y_{\max},  \hh(x)}$}\\
  \log\paren*{\exp\bracket*{ h(x, y_{\max})} + \lambda} & \text{if $y =  \hh(x)$}\\
  \log\paren*{\exp\bracket*{ h(x, \hh(x))} -\lambda} & \text{if $y = y_{\max}$},
\end{cases} 
\end{align*}
where $\lambda = \frac{\exp\bracket*{ h(x, \hh(x))}\sfp(\hh(x) \!\mid\! x)^{\frac1{2-\mu}}-\exp\bracket*{ h(x, y_{\max})}\sfp(y_{\max} \!\mid\! x)^{\frac1{2-\mu}}}{\sfp(y_{\max} \!\mid\! x)^{\frac1{2-\mu}}+\sfp(\hh(x) \!\mid\! x)^{\frac1{2-\mu}}}$. The existence of such a $ h_{\lambda}$
in the hypothesis set $ \sH$ is guaranteed by the assumption that $ \sH$
is symmetry and complete. Thus, the calibration gap can be expressed
and lower-bounded as follows:
\begin{align*}
&(1-\mu)\Delta \sC_{\lsc_{\mu}, \sH}( h, x)\\
& = (1-\mu)\paren*{\sC_{\lsc_{\mu}}( h, x) - \sC^*_{\lsc_{\mu}}( \sH, x)}\\
& \geq (1-\mu)\paren*{\sC_{\lsc_{\mu}}( h, x) - \sC_{\lsc_{\mu}}( h_{\lambda}, x)}\\
& = \sfp(y_{\max} \!\mid\! x) \paren*{\bracket*{\sum_{y'\in  \sY\bigcup \curl*{n+1}} e^{ h(x, y')- h(x, y_{\max})}}^{1 - \mu }-1}\\
&\qquad +\sfp(\hh(x) \!\mid\! x) \paren*{\bracket*{\sum_{y'\in  \sY\bigcup \curl*{n+1}} e^{ h(x, y')- h(x, \hh(x))}}^{1 - \mu }-1}\\
&  \qquad \quad -\sfp(y_{\max} \!\mid\! x) \paren*{\bracket*{\sum_{y'\in  \sY\bigcup \curl*{n+1}}e^{ h(x, y')- h(x, \hh(x))+\lambda}}^{1 - \mu }-1}\\
&\qquad \quad \quad -\sfp(\hh(x) \!\mid\! x) \paren*{\bracket*{\sum_{y'\in  \sY\bigcup \curl*{n+1}}e^{ h(x, y')- h(x, y_{\max})-\lambda}}^{1 - \mu }-1}\\
& = \sfp(y_{\max} \!\mid\! x)\bracket*{\sum_{y'\in  \sY\bigcup \curl*{n+1}} e^{ h(x, y')-e^{ h(x, y_{\max})}}}^{1 - \mu }\\
&\qquad - \sfp(y_{\max} \!\mid\! x)\bracket*{\frac{\sum_{y'\in  \sY\bigcup \curl*{n+1}}e^{ h(x, y')}\bracket*{\sfp(y_{\max} \!\mid\! x)^{\frac1{2 - \mu }}+\sfp(\hh(x) \!\mid\! x)^{\frac1{2 - \mu }}}}{\bracket*{e^{ h(x, y_{\max})} + e^{ h(x, \hh(x))}}\sfp(y_{\max} \!\mid\! x)^{\frac1{2 - \mu }}}}^{1 - \mu }\\
&+\sfp(\hh(x) \!\mid\! x)\bracket*{\sum_{y'\in  \sY\bigcup \curl*{n+1}} e^{ h(x, y')- h(x, \hh(x))}}^{1 - \mu }\\
&\qquad - \sfp(\hh(x) \!\mid\! x)\bracket*{\frac{\sum_{y'\in  \sY\bigcup \curl*{n+1}}e^{ h(x, y')}\bracket*{\sfp(y_{\max} \!\mid\! x)^{\frac1{2 - \mu }}+\sfp(\hh(x) \!\mid\! x)^{\frac1{2 - \mu }}}}{\bracket*{e^{ h(x, y_{\max})} + e^{ h(x, \hh(x))}}\sfp(\hh(x) \!\mid\! x)^{\frac1{2 - \mu }}}}^{1 - \mu }.
\end{align*}
Then, by using the fact that $\sum_{y'\in  \sY\bigcup \curl*{n+1}} e^{ h(x, y')}\geq e^{ h(x, \hh(x))}+e^{ h(x, y_{\max})}$, we further have
\begin{align*}
&(1-\mu)\Delta \sC_{\lsc_{\mu}, \sH}( h, x)\\
&\geq \sfp(y_{\max} \!\mid\! x)\bracket*{e^{ h(x, \hh(x))- h(x, y_{\max})}+1}^{1 - \mu } - \sfp(y_{\max} \!\mid\! x)\bracket*{\frac{\sfp(y_{\max} \!\mid\! x)^{\frac1{2 - \mu }}+\sfp(\hh(x) \!\mid\! x)^{\frac1{2 - \mu }}}{\sfp(y_{\max} \!\mid\! x)^{\frac1{2 - \mu }}}}^{1 - \mu }\\
&+\sfp(\hh(x) \!\mid\! x)\bracket*{e^{ h(x, y_{\max})- h(x, \hh(x))}+1}^{1 - \mu } - \sfp(\hh(x) \!\mid\! x)\bracket*{\frac{\sfp(y_{\max} \!\mid\! x)^{\frac1{2 - \mu }}+\sfp(\hh(x) \!\mid\! x)^{\frac1{2 - \mu }}}{\sfp(\hh(x) \!\mid\! x)^{\frac1{2 - \mu }}}}^{1 - \mu }
\tag{$\sum_{y'\in  \sY\bigcup \curl*{n+1}} e^{ h(x, y')}\geq e^{ h(x, \hh(x))}+e^{ h(x, y_{\max})}$}\\
&\geq \sfp(y_{\max} \!\mid\! x)2^{1 - \mu } - \sfp(y_{\max} \!\mid\! x)^{\frac1{2 - \mu }} \bracket*{\sfp(y_{\max} \!\mid\! x)^{\frac1{2 - \mu }}+\sfp(\hh(x) \!\mid\! x)^{\frac1{2 - \mu }}}^{1 - \mu }\\
&+\sfp(\hh(x) \!\mid\! x)2^{1 - \mu } - \sfp(\hh(x) \!\mid\! x)^{\frac1{2 - \mu }}\bracket*{\sfp(y_{\max} \!\mid\! x)^{\frac1{2 - \mu }}+\sfp(\hh(x) \!\mid\! x)^{\frac1{2 - \mu }}}^{1 - \mu }
\tag{minimum is attained when $e^{ h(x, \hh(x))}=e^{ h(x, y_{\max})}$}\\
& = 2^{1-\mu}\paren*{\sfp(y_{\max} \!\mid\! x)+\sfp(\hh(x) \!\mid\! x)}-\bracket*{\sfp(y_{\max} \!\mid\! x)^{\frac1{2 - \mu }}+\sfp(\hh(x) \!\mid\! x)^{\frac1{2 - \mu }}}^{2 - \mu }\\
& = 2^{2-\mu}\bracket*{\paren*{\frac{\sfp(y_{\max} \!\mid\! x)+\sfp(\hh(x) \!\mid\! x)}{2}}-\bracket*{\frac{\sfp(y_{\max} \!\mid\! x)^{\frac1{2 - \mu }}+\sfp(\hh(x) \!\mid\! x)^{\frac1{2 - \mu }}}{2}}^{2 - \mu }}\\
& \geq \frac{1-\mu}{(2-c)2^{\mu}(2-\mu)}\paren*{\sfp(y_{\max} \!\mid\! x) - \sfp(\hh(x) \!\mid\! x)}^2
\tag{$\sfp(y_{\max} \!\mid\! x)+\sfp(\hh(x) \!\mid\! x)\leq 2-c$ and by analyzing the Taylor expansion}\\
& = \frac{1-\mu}{(2-c)2^{\mu}(2-\mu)}\Delta\sC_{\labsc, \sH}( h,x)^2 \tag{Corollary~\ref{cor:calibration_gap_score}}
\end{align*}
Thus, we have
\begin{align*}
\sE_{\labsc}( h) - \sE_{\labsc}^*( \sH) + \sM_{\labsc}( \sH)
& = \E_{X}\bracket*{\Delta \sC_{\labsc,\sH}( h, x)}\\
& \leq \E_X\bracket*{\Gamma_{\mu}\paren*{\Delta \sC_{\lsc_{\mu},\sH}( h, x)}}\\
& \leq \Gamma_{\mu}\paren*{\E_X\bracket*{\Delta \sC_{\lsc_{\mu},\sH}( h, x)}}
\tag{$\Gamma_{\mu}$ is concave}\\
& = \Gamma_{\mu}\paren*{\sE_{\lsc_{\mu}}( h)-\sE_{\lsc_{\mu}}^*( \sH) +\sM_{\lsc_{\mu}}( \sH)},
\end{align*}
where $\Gamma_{\mu}(t)=\sqrt{(2-c)2^{\mu}(2-\mu) t}$.

\paragraph{The Case Where $\mu =1$} 

For any $ h \in \sH$ and $x\in \sX$, choose hypothesis $ h_{\lambda} \in  \sH$ such that
\begin{align*}
 h_{\lambda}(x, y) = 
\begin{cases}
   h(x, y) & \text{if $y \not \in \curl*{ y_{\max},  \hh(x)}$}\\
  \log\paren*{\exp\bracket*{ h(x, y_{\max})} + \lambda} & \text{if $y =  \hh(x)$}\\
  \log\paren*{\exp\bracket*{ h(x, \hh(x))} -\lambda} & \text{if $y = y_{\max}$}
\end{cases} 
\end{align*}
where $\lambda = \frac{\exp\bracket*{ h(x, \hh(x))}\sfp(\hh(x) \!\mid\! x)-\exp\bracket*{ h(x,  y_{\max})}\sfp(y_{\max} \!\mid\! x)}{\sfp(y_{\max} \!\mid\! x)+\sfp(\hh(x) \!\mid\! x)}$. The existence of
such a $ h_{\lambda}$ in hypothesis set $ \sH$ is guaranteed by
the fact that $ \sH$ is symmetry and complete. Thus, the
calibration gap can be expressed and lower-bounded as follows:
\begin{align*}
& \Delta \sC_{\lsc_{\mu}, \sH}( h, x)\\
& = \sC_{\lsc_{\mu}}( h, x) - \sC^*_{\lsc_{\mu}}( \sH, x)\\
& \geq \sC_{\lsc_{\mu}}( h, x) - \sC_{\lsc_{\mu}}( h_{\lambda}, x)\\
& =-\sfp(y_{\max} \!\mid\! x) \log\bracket*{e^{ h(x, y_{\max})}}-\sfp(\hh(x) \!\mid\! x) \log\bracket*{e^{ h(x, \hh(x))}}\\
& \qquad +\sfp(y_{\max} \!\mid\! x)\log\bracket*{ e^{ h(x, \hh(x))}-\lambda}+\sfp(\hh(x) \!\mid\! x)\log\bracket*{e^{ h(x, y_{\max})}+\lambda}\\
& = \sfp(y_{\max} \!\mid\! x)\log\bracket*{\frac{\bracket*{e^{ h(x, y_{\max})} + e^{ h(x, \hh(x))}}\sfp(y_{\max} \!\mid\! x)}{e^{ h(x, y_{\max})}\bracket*{\sfp(y_{\max} \!\mid\! x) + \sfp(\hh(x) \!\mid\! x)}}}\\ 
&\qquad + \sfp(\hh(x) \!\mid\! x)\log\bracket*{\frac{\bracket*{e^{ h(x, y_{\max})} + e^{ h(x, \hh(x))}}\sfp(\hh(x) \!\mid\! x)}{ e^{ h(x, \hh(x))}\bracket*{\sfp(y_{\max} \!\mid\! x)+\sfp(\hh(x) \!\mid\! x)}}}\\
&\geq \sfp(y_{\max} \!\mid\! x)\log\bracket*{\frac{2\sfp(y_{\max} \!\mid\! x)}{\sfp(y_{\max} \!\mid\! x)+\sfp(\hh(x) \!\mid\! x)}} + \sfp(\hh(x) \!\mid\! x)\log\bracket*{\frac{2\sfp(\hh(x) \!\mid\! x)}{\sfp(y_{\max} \!\mid\! x)+\sfp(\hh(x) \!\mid\! x)}}
\tag{minimum is attained when $e^{ h(x, \hh(x))}=e^{ h(x, y_{\max})}$}.
\end{align*}
Then, by using the Pinsker’s inequality \citep[Proposition~E.7]{MohriRostamizadehTalwalkar2018}, we further have
\begin{align*}
& \Delta \sC_{\lsc_{\mu}, \sH}( h, x)\\
& \geq \bracket*{\sfp(y_{\max} \!\mid\! x)+\sfp(\hh(x) \!\mid\! x)}\\
&\qquad \times \frac12\bracket*{ \abs*{\frac{\sfp(y_{\max} \!\mid\! x)}{\sfp(y_{\max} \!\mid\! x)+\sfp(\hh(x) \!\mid\! x)}-\frac12}+\abs*{\frac{\sfp(\hh(x) \!\mid\! x)}{\sfp(y_{\max} \!\mid\! x)+\sfp(\hh(x) \!\mid\! x)}-\frac12}}^2
\tag{Pinsker’s inequality \citep[Proposition~E.7]{MohriRostamizadehTalwalkar2018}}\\
& = \bracket*{\sfp(y_{\max} \!\mid\! x)+\sfp(\hh(x) \!\mid\! x)} \times \frac12 \bracket*{\frac{\sfp(y_{\max} \!\mid\! x)-\sfp(\hh(x) \!\mid\! x)}{\sfp(y_{\max} \!\mid\! x)+\sfp(\hh(x) \!\mid\! x)}}^2
\tag{$\sfp(y_{\max} \!\mid\! x)\geq \sfp(\hh(x) \!\mid\! x)$}\\\\
& \geq \frac1{2(2-c)} \paren*{\sfp(y_{\max} \!\mid\! x) - \sfp(\hh(x) \!\mid\! x)}^2\\
\tag{$\sfp(y_{\max} \!\mid\! x)+\sfp(\hh(x) \!\mid\! x)\leq 2-c$}\\
& = \frac1{2(2-c)}\Delta\sC_{\labsc, \sH}( h,x)^2 \tag{Corollary~\ref{cor:calibration_gap_score}}
\end{align*}
Thus, we have
\begin{align*}
\sE_{\labsc}( h) - \sE_{\labsc}^*( \sH) + \sM_{\labsc}( \sH)
& = \E_{X}\bracket*{\Delta \sC_{\labsc,\sH}( h, x)}\\
& \leq \E_X\bracket*{\Gamma_{\mu}\paren*{\Delta \sC_{\lsc_{\mu},\sH}( h, x)}}\\
& \leq \Gamma_{\mu}\paren*{\E_X\bracket*{\Delta \sC_{\lsc_{\mu},\sH}( h, x)}}
\tag{$\Gamma_{\mu}$ is concave}\\
& = \Gamma_{\mu}\paren*{\sE_{\lsc_{\mu}}( h)-\sE_{\lsc_{\mu}}^*( \sH) +\sM_{\lsc_{\mu}}( \sH)},
\end{align*}
where $\Gamma_{\mu}(t)=\sqrt{2(2-c)t }$.

\paragraph{The Case Where $\mu \in [2,+ \infty)$} 

For any $ h \in \sH$ and $x\in \sX$, choose hypothesis $ h_{\lambda} \in  \sH$ such that
\begin{align*}
 h_{\lambda}(x, y) = 
\begin{cases}
   h(x, y) & \text{if $y \not \in \curl*{ y_{\max},  \hh(x)}$}\\
  \log\paren*{\exp\bracket*{ h(x, y_{\max})} + \lambda} & \text{if $y =  \hh(x)$}\\
  \log\paren*{\exp\bracket*{ h(x, \hh(x))} -\lambda} & \text{if $y = y_{\max}$}
\end{cases} 
\end{align*}
where $\lambda = -\exp\bracket*{ h(x, y_{\max})}$. The existence of
such a $ h_{\lambda}$ in hypothesis set $ \sH$ is guaranteed by
the fact that $ \sH$ is symmetry and complete. Thus, the
calibration gap can be expressed and lower-bounded as follows:
\begin{align*}
& (\mu-1)\Delta \sC_{\lsc_{\mu}, \sH}( h, x)\\
& = (\mu-1) \paren*{ \sC_{\lsc_{\mu}}( h, x) - \sC^*_{\lsc_{\mu}}( \sH, x)}\\
& \geq (\mu-1)\paren*{\sC_{\lsc_{\mu}}( h, x) - \sC_{\lsc_{\mu}}( h_{\lambda}, x)}\\
& =\sfp(y_{\max} \!\mid\! x) \paren*{1-\bracket*{\frac{e^{ h(x, y_{\max})}}{\sum_{y'\in  \sY\bigcup \curl*{n+1}}e^{ h(x, y')}}}^{\mu-1}} +\sfp(\hh(x) \!\mid\! x) \paren*{1-\bracket*{\frac{e^{ h(x, \hh(x))}}{\sum_{y'\in  \sY\bigcup \curl*{n+1}}e^{ h(x, y')}}}^{\mu-1}}\\
&  -\sfp(y_{\max} \!\mid\! x) \paren*{1-\bracket*{\frac{e^{ h(x, \hh(x))}-\mu}{\sum_{y'\in  \sY\bigcup \curl*{n+1}}e^{ h(x, y')}}}^{\mu-1}} -\sfp(\hh(x) \!\mid\! x) \paren*{1-\bracket*{\frac{e^{ h(x, y_{\max})}+\mu}{\sum_{y'\in  \sY\bigcup \curl*{n+1}}e^{ h(x, y')}}}^{\mu-1}}\\
& = \sfp(y_{\max} \!\mid\! x)\bracket*{\frac{e^{ h(x, \hh(x))}+e^{ h(x, y_{\max})}}{\sum_{y'\in  \sY\bigcup \curl*{n+1}}e^{h(x, y')}}}^{\mu-1}-\sfp(y_{\max} \!\mid\! x)\bracket*{\frac{e^{ h(x, y_{\max})}}{\sum_{y'\in  \sY\bigcup \curl*{n+1}}e^{ h(x, y')}}}^{\mu-1}\\
& \qquad -\sfp(\hh(x) \!\mid\! x)\bracket*{\frac{e^{ h(x, \hh(x))}}{\sum_{y'\in  \sY\bigcup \curl*{n+1}}e^{ h(x, y')}}}^{\mu-1}\\
&\geq \sfp(y_{\max} \!\mid\! x)\bracket*{\frac{e^{ h(x, \hh(x))}}{\sum_{y'\in  \sY\bigcup \curl*{n+1}}e^{ h(x, y')}}}^{\mu-1}-\sfp(\hh(x) \!\mid\! x)\bracket*{\frac{e^{ h(x, \hh(x))}}{\sum_{y'\in  \sY\bigcup \curl*{n+1}}e^{ h(x, y')}}}^{\mu-1}
\tag{$(x+y)^{\mu-1}\geq x^{\mu -1 } + y^{\mu-1}$, $\forall\, x, y\geq 0$, $\mu\geq 2$}\\
&\geq \frac{1}{(n+1)^{\mu-1}}
\paren*{\sfp(y_{\max} \!\mid\! x) - \sfp(\hh(x) \!\mid\! x)} \tag{$\frac{e^{ h(x, \hh(x))}}{\sum_{y'\in  \sY\bigcup \curl*{n+1}}e^{h(x, y')}}\geq \frac1{n+1}$}\\
& = \frac{1}{(n+1)^{\mu-1}}\Delta\sC_{\labsc, \sH}( h,x) \tag{Corollary~\ref{cor:calibration_gap_score}}
\end{align*}
Thus, we have
\begin{align*}
\sE_{\labsc}( h) - \sE_{\labsc}^*( \sH) + \sM_{\labsc}( \sH)
& = \E_{X}\bracket*{\Delta \sC_{\labsc,\sH}( h, x)}\\
& \leq \E_X\bracket*{\Gamma_{\mu}\paren*{\Delta \sC_{\lsc_{\mu},\sH}( h, x)}}\\
& \leq \Gamma_{\mu}\paren*{\E_X\bracket*{\Delta \sC_{\lsc_{\mu},\sH}( h, x)}}
\tag{$\Gamma_{\mu}$ is concave}\\
& = \Gamma_{\mu}\paren*{\sE_{\lsc_{\mu}}( h)-\sE_{\lsc_{\mu}}^*( \sH) +\sM_{\lsc_{\mu}}( \sH)},
\end{align*}
where $\Gamma_{\mu}(t)=(\mu - 1)(n+1)^{\mu - 1} t$.

\paragraph{The Case Where $\mu \in (1,2)$}

For any $ h \in \sH$ and $x\in \sX$, choose hypothesis $ h_{\lambda} \in  \sH$ such that
\begin{align*}
 h_{\lambda}(x, y) = 
\begin{cases}
   h(x, y) & \text{if $y \not \in \curl*{ y_{\max},  \hh(x)}$}\\
  \log\paren*{\exp\bracket*{ h(x, y_{\max})} + \lambda} & \text{if $y =  \hh(x)$}\\
  \log\paren*{\exp\bracket*{ h(x, \hh(x))} -\lambda} & \text{if $y = y_{\max}$}
\end{cases} 
\end{align*}
where $\lambda = \frac{\exp\bracket*{ h(x, \hh(x))}\sfp(y_{\max} \!\mid\! x)^{\frac1{\mu-2}}-\exp\bracket*{ h(x,
    y_{\max})}\sfp(\hh(x) \!\mid\! x)^{\frac1{\mu-2}}}{\sfp(y_{\max} \!\mid\! x)^{\frac1{\mu-2}}+\sfp(\hh(x) \!\mid\! x)^{\frac1{\mu-2}}}$. The
existence of such a $ h_{\lambda}$ in hypothesis set $ \sH$ is
guaranteed by the fact that $ \sH$ is symmetry and complete. Thus,
the calibration gap can be lower-bounded as follows:
\begin{align*}
& (\mu-1)\Delta \sC_{\lsc_{\mu}, \sH}( h, x)\\
& = (\mu-1)\paren*{\sC_{\lsc_{\mu}}( h, x) - \sC^*_{\lsc_{\mu}}( \sH, x)}\\
& \geq (\mu-1)\paren*{\sC_{\lsc_{\mu}}( h, x) - \sC_{\lsc_{\mu}}( h_{\lambda}, x)}\\
& = \sfp(y_{\max} \!\mid\! x) \paren*{1-\bracket*{\sum_{y'\in  \sY\bigcup \curl*{n+1}} e^{ h(x, y')- h(x, y_{\max})}}^{1 - \mu }}\\ &\qquad +\sfp(\hh(x) \!\mid\! x) \paren*{1-\bracket*{\sum_{y'\in  \sY\bigcup \curl*{n+1}} e^{ h(x, y')- h(x, \hh(x))}}^{1 - \mu }}\\
& \qquad \quad -\sfp(y_{\max} \!\mid\! x) \paren*{1-\bracket*{\sum_{y'\in  \sY\bigcup \curl*{n+1}}e^{ h(x, y')- h(x, \hh(x))+\lambda}}^{1 - \mu }}\\
&\qquad \quad \quad -\sfp(\hh(x) \!\mid\! x) \paren*{1-\bracket*{\sum_{y'\in  \sY\bigcup \curl*{n+1}}e^{ h(x, y')- h(x, y_{\max})-\lambda}}^{1 - \mu }}\\
& = -\sfp(y_{\max} \!\mid\! x)\bracket*{\sum_{y'\in  \sY\bigcup \curl*{n+1}} e^{ h(x, y')-e^{ h(x, y_{\max})}}}^{1 - \mu }\\
&\qquad + \sfp(y_{\max} \!\mid\! x)\bracket*{\frac{\sum_{y'\in  \sY\bigcup \curl*{n+1}}e^{ h(x, y')}\bracket*{\sfp(y_{\max} \!\mid\! x)^{\frac1{\mu-2}}+\sfp(\hh(x) \!\mid\! x)^{\frac1{\mu-2 }}}}{\bracket*{e^{ h(x, y_{\max})} + e^{ h(x, \hh(x))}}\sfp(\hh(x) \!\mid\! x)^{\frac1{\mu-2}}}}^{1 - \mu }\\
&-\sfp(\hh(x) \!\mid\! x)\bracket*{\sum_{y'\in  \sY\bigcup \curl*{n+1}} e^{ h(x, y')- h(x, \hh(x))}}^{1 - \mu }\\
&\qquad + \sfp(\hh(x) \!\mid\! x)\bracket*{\frac{\sum_{y'\in  \sY\bigcup \curl*{n+1}}e^{ h(x, y')}\bracket*{\sfp(y_{\max} \!\mid\! x)^{\frac1{\mu-2 }}+\sfp(\hh(x) \!\mid\! x)^{\frac1{\mu-2}}}}{\bracket*{e^{ h(x, y_{\max})} + e^{ h(x, \hh(x))}}\sfp(y_{\max} \!\mid\! x)^{\frac1{\mu-2}}}}^{1 - \mu }.
\end{align*}
Then, by using the fact that $\frac{e^{ h(x, \hh(x))}}{\sum_{y'\in  \sY\bigcup \curl*{n+1}} e^{ h(x, y')}}\geq \frac{1}{(n+1)^{\mu-1}}$, we further have
\begin{align*}
& (\mu-1)\Delta \sC_{\lsc_{\mu}, \sH}( h, x)\\
&\geq \frac{1}{(n+1)^{\mu-1}}\paren[\Bigg]{\sfp(y_{\max} \!\mid\! x)\bracket*{\frac{\bracket*{e^{h(x, y_{\max})} + e^{h(x, \hh(x))}}\sfp(\hh(x) \!\mid\! x)^{\frac1{\mu-2}}}{e^{h(x, \hh(x))}\bracket*{\sfp(y_{\max} \!\mid\! x)^{\frac1{\mu-2}}+\sfp(\hh(x) \!\mid\! x)^{\frac1{\mu-2}}}}}^{\mu-1}\\
&\qquad -\sfp(y_{\max} \!\mid\! x)\bracket*{e^{h(x, y_{\max})-h(x, \hh(x))}}^{\mu-1}}\\
&\qquad \quad +\frac{1}{(n+1)^{\mu-1}}\paren*{\sfp(\hh(x) \!\mid\! x)\bracket*{\frac{\bracket*{e^{h(x, y_{\max})} + e^{h(x, \hh(x))}}\sfp(y_{\max} \!\mid\! x)^{\frac1{\mu-2}}}{e^{h(x, \hh(x))}\bracket*{\sfp(y_{\max} \!\mid\! x)^{\frac1{\mu-2}}+\sfp(\hh(x) \!\mid\! x)^{\frac1{\mu-2}}}}}^{\mu-1}-\sfp(\hh(x) \!\mid\! x)}
\tag{$\frac{e^{ h(x, \hh(x))}}{\sum_{y'\in  \sY\bigcup \curl*{n+1}} e^{ h(x, y')}}\geq \frac{1}{(n+1)^{\mu-1}}$}\\
&\geq \frac{1}{(n+1)^{\mu-1}}\paren*{\sfp(y_{\max} \!\mid\! x)\bracket*{\frac{2\sfp(\hh(x) \!\mid\! x)^{\frac1{\mu-2}}}{\sfp(y_{\max} \!\mid\! x)^{\frac1{\mu-2}}+\sfp(\hh(x) \!\mid\! x)^{\frac1{\mu-2}}}}^{\mu-1}-\sfp(y_{\max} \!\mid\! x)}\\
& \qquad + \frac{1}{(n+1)^{\mu-1}}\paren*{\sfp(\hh(x) \!\mid\! x)\bracket*{\frac{2\sfp(y_{\max} \!\mid\! x)^{\frac1{\mu-2}}}{\sfp(y_{\max} \!\mid\! x)^{\frac1{\mu-2}}+\sfp(\hh(x) \!\mid\! x)^{\frac1{\mu-2}}}}^{\mu-1}-\sfp(\hh(x) \!\mid\! x)}
\tag{minimum is attained when $e^{ h(x, \hh(x))}=e^{ h(x, y_{\max})}$}\\
& = \frac{1}{(n+1)^{\mu-1}}\paren*{2^{\mu-1}\bracket*{\sfp(y_{\max} \!\mid\! x)^{\frac1{2-\mu}}+\sfp(\hh(x) \!\mid\! x)^{\frac1{2-\mu}}}^{2-\mu}-\sfp(y_{\max} \!\mid\! x)-\sfp(\hh(x) \!\mid\! x)}\\
& = \frac{2}{(n+1)^{\mu-1}}\paren*{\bracket*{\frac{\sfp(y_{\max} \!\mid\! x)^{\frac1{2-\mu}}+\sfp(\hh(x) \!\mid\! x)^{\frac1{2-\mu}}}{2}}^{2-\mu}-\frac{\sfp(y_{\max} \!\mid\! x)+\sfp(\hh(x) \!\mid\! x)}{2}}\\
& \geq \frac{\mu-1}{2(2-c)(n+1)^{\mu-1}}\paren*{\sfp(y_{\max} \!\mid\! x) - \sfp(\hh(x) \!\mid\! x)}^2
\tag{$\sfp(y_{\max} \!\mid\! x)+\sfp(\hh(x) \!\mid\! x)\leq 2-c$ and by analyzing the Taylor expansion}\\
& = \frac{\mu-1}{2(2-c)(n+1)^{\mu-1}}\Delta\sC_{\labsc, \sH}( h,x)^2 \tag{Corollary~\ref{cor:calibration_gap_score}}
\end{align*}
Thus, we have
\begin{align*}
\sE_{\labsc}( h) - \sE_{\labsc}^*( \sH) + \sM_{\labsc}( \sH)
& = \E_{X}\bracket*{\Delta \sC_{\labsc,\sH}( h, x)}\\
& \leq \E_X\bracket*{\Gamma_{\mu}\paren*{\Delta \sC_{\lsc_{\mu},\sH}( h, x)}}\\
& \leq \Gamma_{\mu}\paren*{\E_X\bracket*{\Delta \sC_{\lsc_{\mu},\sH}( h, x)}}
\tag{$\Gamma_{\mu}$ is concave}\\
& = \Gamma_{\mu}\paren*{\sE_{\lsc_{\mu}}( h)-\sE_{\lsc_{\mu}}^*( \sH) +\sM_{\lsc_{\mu}}( \sH)},
\end{align*}
where $\Gamma_{\mu}(t)=\sqrt{2(2-c)(n+1)^{\mu-1}t }$.
\end{proof}

\subsection{Characterization of Minimizability
  Gaps (Theorem~\ref{Thm:gap-upper-bound-determi})}
\label{app:gap-upper-bound-determi}
\GapUpperBoundDetermi*
\begin{proof}
Let $s_{ h}(x, y)=\frac{e^{ h(x, y)}}{\sum_{y'\in  \sY\bigcup \curl*{n+1}} h(x, y')}\in [0,1]$, $\forall y\in  \sY$.
By the definition, for any deterministic distribution, $\sM_{\lsc_{\mu}}( \sH)
 = \sE^*_{\lsc_{\mu}}( \sH) - \mathbb{E}_{X} \bracket* {\inf_{ h \in  \sH}\sC_{\lsc_{\mu}}( \sH, x)}$, where 
\begin{align*}
&\sC_{\lsc_{\mu}}( h,x)\\
& = \sum_{y\in \sY\bigcup \curl*{n+1}} \sfp(y \!\mid\! x)  \ell_{\mu}( h,x, y)\\
& =  \ell_{\mu}( h,x, y_{\max}) + (1-c) \ell_{\mu}( h,x,n+1)\\
& =\begin{cases}
\frac{1}{1 - \mu} \paren[\Big]{\bracket*{\sum_{y'\in \sY\bigcup \curl*{n+1}} e^{{ h(x, y') -  h(x, y_{\max})}}}^{1 - \mu} - 1}\\ \qquad + (1-c) \frac{1}{1 - \mu} \paren*{\bracket*{\sum_{y'\in \sY\bigcup \curl*{n+1}} e^{{ h(x, y') -  h(x, n+1)}}}^{1 - \mu} - 1} & \mu\neq 1  \\
\log\paren*{\sum_{y'\in  \sY\bigcup \curl*{n+1}} e^{ h(x, y') -  h(x, y_{\max})}}\\ \qquad + (1-c) \log\paren*{\sum_{y'\in  \sY\bigcup \curl*{n+1}} e^{ h(x, y') -  h(x, n+1)}} &  \mu = 1.
\end{cases}\\
& = \begin{cases}
\frac{1}{1 - \mu} \paren*{s_{ h}\paren*{x, y_{\max}}^{\mu-1} - 1} + (1-c) \frac{1}{1 - \mu} \paren*{\bracket*{s_{ h}\paren*{x,n+1}}^{\mu-1} - 1} & \mu\neq 1  \\
-\log\paren*{s_{ h}\paren*{x, y_{\max}}} - (1-c) \log\paren*{s_{ h}\paren*{x,n+1}} & \mu = 1.
\end{cases}
\end{align*}
Since $0\leq s_{ h}(x, y_{\max})+s_{ h}(x,n+1)\leq 1$, by taking the partial derivative, we obtain that the minimum can be attained by
\begin{align}
\label{eq:min}
\begin{cases}
  s^*_{ h}(x, y_{\max})=\frac{1}{1+(1-c)^{\frac{1}{2-\mu}}} \text{ and } s^*_{ h}(x,n+1)
  =\frac{(1-c)^{\frac{1}{2-\mu}}}{1+(1-c)^{\frac{1}{2-\mu}}} & \mu \neq 2\\
s^*_{ h}(x, y_{\max}) = 1 \text{ and } s^*_{ h}(x,n+1)=0 & \mu =2.
\end{cases}
\end{align}
Since $ \sH$ is symmetric and complete, there exists $ h \in 
\sH$ such that \eqref{eq:min} is achieved. Therefore,
\begin{align*}
\inf_{ h \in  \sH}\sC_{\lsc_{\mu}}( \sH, x) 
& =
\begin{cases}
\frac{1}{1 - \mu} \paren*{s^*_{ h}\paren*{x, y_{\max}}^{\mu-1} - 1} + (1-c) \frac{1}{1 - \mu} \paren*{\bracket*{s^*_{ h}\paren*{x,n+1}}^{\mu-1} - 1} & \mu\neq 1  \\
-\log\paren*{s^*_{ h}\paren*{x, y_{\max}}} - (1-c) \log\paren*{s^*_{ h}\paren*{x,n+1}} & \mu = 1
\end{cases}\\
& =
 \begin{cases}
   \frac{1}{1 - \mu} \bracket*{\bracket*{1+\paren*{1-c}^{\frac{1}{2-\mu}}}^{2 - \mu}
     - (2-c)} & \mu \notin \curl*{1,2}\\
-\log \paren*{\frac{1}{2-c}}-(1-c)\log \paren*{\frac{1-c}{2-c}}
 & \mu=1\\
1-c & \mu =2.
\end{cases}
\end{align*}
Since $\inf_{ h \in  \sH}\sC_{\lsc_{\mu}}( \sH, x) $ is
independent of $x$, we obtain $\mathbb{E}_{X} \bracket*
{\inf_{ h \in  \sH}\sC_{\lsc_{\mu}}( \sH, x)}=\inf_{ h \in
   \sH}\sC_{\lsc_{\mu}}( \sH, x)$,
which completes the proof.
\end{proof}

\subsection{Proof of General Transformation of \texorpdfstring{$\sH$}{H}-Consistency Bounds (Theorem~\ref{Thm:bound-score-mabsc})}
\label{app:bound-score}
\BoundScoreMabsc*
\begin{proof}
By Lemma~\ref{lemma:calibration_gap_score-mabsc}, the calibration gap of
$\labsc$ can be expressed and upper-bounded as follows:
\begin{align*}
& \Delta \sC_{\labsc,\sH}( h, x)\\
& = \sC_{\labsc}( h, x)-\sC_{\labsc}^*(\sH,x)\\
& = \max_{y\in \mathsf H(x)} \sfp(y \!\mid\! x) - \sfp(\hh(x) \!\mid\! x)\\
& = (2 - c) \paren*{\max_{y\in \mathsf H(x)} \ov \sfp(y \!\mid\! x) -  \ov \sfp(\hh(x) \!\mid\! x)}\tag{Let $ \ov \sfp(y \!\mid\! x) = \frac{\sfp(y \!\mid\! x)}{2 - c}\1_{y\in \sY}+\frac{1 - c}{2 - c}\1_{y = n + 1}$}\\
& =(2 - c) \Delta \sC_{\ell_{0-1},\sH}( h, x)\tag{By \citep[Lemma~3]{AwasthiMaoMohriZhong2022multi}}\\
& \leq (2 - c)\Gamma\paren*{\Delta \sC_{ \ell, \sH}( h, x)} \tag{By $ \sH$-consistency
bound of $ \ell$}\\
& = (2 - c)\Gamma\paren*{\sum_{y\in \sY \bigcup \curl*{n+1}} \ov \sfp(y \!\mid\! x) \ell( h, x, y)-\inf_{ h \in \sH}\sum_{y\in \sY \bigcup \curl*{n+1}} \ov \sfp(y \!\mid\! x) \ell( h, x, y)} \\
& = (2 - c)\Gamma\paren[\Bigg]{\sum_{y\in \sY}\frac{\sfp(y \!\mid\! x)}{2 - c} \ell( h, x, y)+\frac{1 - c}{2 - c} \ell( h, x, n + 1)\\
&\qquad -\inf_{ h \in \sH}\paren*{\sum_{y\in \sY}\frac{\sfp(y \!\mid\! x)}{2 - c} \ell( h, x, y)+\frac{1 - c}{2 - c} \ell( h, x, n + 1)}}\tag{Plug in $ \ov \sfp(y \!\mid\! x) = \frac{\sfp(y \!\mid\! x)}{2 - c}\1_{y\in \sY}+\frac{1 - c}{2 - c}\1_{y = n + 1}$} \\
& = (2 - c)\Gamma\paren*{\frac{1}{2 - c}\bracket*{\sum_{y\in \sY}\sfp(y \!\mid\! x)\lsc( h, x, y)-\inf_{ h \in \sH}\sum_{y\in \sY}\sfp(y \!\mid\! x)\lsc( h, x, y)}}\\
& = (2 - c)\Gamma\paren*{\frac{1}{2 - c}\Delta \sC_{\lsc,\sH}( h, x)}.
\end{align*}
Thus, we have
\begin{align*}
\sE_{\labsc}( h) - \sE_{\labsc}^*( \sH) + \sM_{\labsc}( \sH)
& = \E_{X}\bracket*{\Delta \sC_{\labsc,\sH}( h, x)}\\
& \leq \E_X\bracket*{(2 - c)\Gamma\paren*{\frac{1}{2 - c}\Delta \sC_{\lsc,\sH}( h, x)}}\\
& \leq (2 - c) \Gamma\paren*{\frac{1}{2 - c} \E_X\bracket*{\Delta \sC_{\lsc,\sH}( h, x)}}
\tag{$\Gamma$ is concave}\\
& = (2 - c) \Gamma\paren*{\frac{\sE_{\lsc}( h)-\sE_{\lsc}^*( \sH) +\sM_{\lsc}( \sH)}{2 - c}},
\end{align*}
which completes the proof.
\end{proof}

\subsection{Proof of \texorpdfstring{$\sH$}{H}-Consistency Bounds for Two-Stage Surrogates (Theorem~\ref{Thm:bound-general-two-step})}
\label{app:bound-general-two-step}
\BoundGenralTwoStep*
\begin{proof}
For any $h=(h_{\sY},h_{n+1})$,
we can rewrite $\sE_{\labs}(h)-\sE^*_{\labs}\paren*{\sH}+\sM_{\labs}(\sH)$ as 
\begin{equation}
\label{eq:expression-two-step}
\begin{aligned}
& \sE_{\labs}(h)-\sE^*_{\labs}\paren*{\sH}+\sM_{\labs}(\sH)\\
& =  \E_{X}\bracket*{\sC_{\labs}(h,x)-\sC^*_{\labs}(\sH,x)} \\
& =  \E_{X}\bracket*{\sC_{\labs}(h,x)-\inf_{h_{n+1}\in \sH_{n+1}}\sC_{\labs}(h,x)+\inf_{h_{n+1}\in \sH_{n+1}}\sC_{\labs}(h,x)-\sC^*_{\labs}(\sH, x)}\\
& = \E_{X}\bracket*{\sC_{\labs}(h,x)-\inf_{h_{n+1}\in \sH_{n+1}}\sC_{\labs}(h,x)}+\E_{X}\bracket*{\inf_{h_{n+1}\in \sH_{n+1}}\sC_{\labs}(h,x)-\sC^*_{\labs}(\sH,x)}
\end{aligned}
\end{equation}
By the assumptions, we have
\begin{align*}
& \sC_{\labs}(h,x)-\inf_{h_{n+1}\in \sH_{n+1}}\sC_{\labs}(h, x)\\
& = \sum_{y\in \sY}\sfp(y \!\mid\! x)\1_{ \hh_{\sY}(x)\neq y}\1_{ \hh(x)\neq n + 1} + c \1_{ \hh(x) = n + 1}-\inf_{h_{n+1}\in \sH_{n+1}}\paren*{\sum_{y\in \sY}\sfp(y \!\mid\! x)\1_{ \hh_{\sY}(x)\neq y}\1_{ \hh(x)\neq n + 1} + c \1_{ \hh(x) = n + 1}}\\
& = \paren*{\sum_{y\in \sY}\sfp(y \!\mid\! x)\1_{\hh_{\sY}(x)\neq y} + c}\times \bigg[\eta(x)\ell_{0-1}^{\rm{binary}}\paren*{h_{n+1}-\max_{y\in \sY}h_{\sY}(x, y),x,+1}\\
&\qquad +(1-\eta(x))\ell_{0-1}^{\rm{binary}}\paren*{h_{n+1}-\max_{y\in \sY}h_{\sY}(x, y),x,-1}\\
& \quad \quad -\inf_{h_{n+1}\in \sH_{n+1}}\paren[\bigg]{\eta(x)\ell_{0-1}^{\rm{binary}}\paren*{h_{n+1}-\max_{y\in \sY}h_{\sY}(x, y),x,+1}\\
&\qquad \quad \quad +(1-\eta(x))\ell_{0-1}^{\rm{binary}}\paren*{h_{n+1}-\max_{y\in \sY}h_{\sY}(x, y),x,-1}}\bigg]\tag{Let $\eta(x) = \frac{\sum_{y\in \sY}\sfp(y \!\mid\! x)\1_{\hh_{\sY}(x)\neq y}}{\sum_{y\in \sY}\sfp(y \!\mid\! x)\1_{\hh_{\sY}(x)\neq y} + c}$}.
\end{align*}
Then, by $\sH_{n+1}^{\tau}$-consistency bounds of $\Phi$ under assumption, $\tau=\max_{y\in \sY}h_{\sY}(x, y)$, we further have
\begin{align*}
& \sC_{\labs}(h,x)-\inf_{h_{n+1}\in \sH_{n+1}}\sC_{\labs}(h, x)\\
& \leq \paren*{\sum_{y\in \sY}\sfp(y \!\mid\! x)\1_{\hh_{\sY}(x)\neq y} + c}\Gamma_2\bigg[\eta(x)\Phi\paren*{h_{n+1}(x)-\max_{y\in \sY}h_{\sY}(x, y)}\\
&\qquad +(1-\eta(x))\Phi\paren*{\max_{y\in \sY}h_{\sY}(x, y)-h_{n+1}(x)}\\
&\qquad \quad-\inf_{h_{n+1}\in \sH_{n+1}}\paren*{\eta(x)\Phi\paren*{h_{n+1}(x)-\max_{y\in \sY}h_{\sY}(x, y)} +(1-\eta(x))\Phi\paren*{\max_{y\in \sY}h_{\sY}(x, y)-h_{n+1}(x)}}\bigg]\tag{By $\sH_{n+1}^{\tau}$-consistency bounds of $\Phi$ under assumption, $\tau=\max_{y\in \sY}h_{\sY}(x, y)$}\\
& =  \paren*{\sum_{y\in \sY}\sfp(y \!\mid\! x)\1_{\hh_{\sY}(x)\neq y} + c}
\\ & \qquad \Gamma_2\paren*{\frac{\sum_{y\in \sY}\sfp(y \!\mid\! x)\ell_{h_{\sY}}(h_{n+1},x, y)-\inf_{h_{n+1}\in \sH_{n+1}}\sum_{y\in \sY}\sfp(y \!\mid\! x)\ell_{h_{\sY}}(h_{n+1},x, y)}{ \sum_{y\in \sY}\sfp(y \!\mid\! x)\1_{\hh_{\sY}(x)\neq y} + c}}\tag{ $\eta(x) = \frac{\sum_{y\in \sY}\sfp(y \!\mid\! x)\1_{\hh_{\sY}(x)\neq y}}{\sum_{y\in \sY}\sfp(y \!\mid\! x)\1_{\hh_{\sY}(x)\neq y} + c}$ and formulation \eqref{eq:ell-Phi-h}}\\
& = \paren*{\sum_{y\in \sY}\sfp(y \!\mid\! x)\1_{\hh_{\sY}(x)\neq y} + c} \Gamma_2\paren*{\frac{\sC_{\ell_{h_{\sY}}}(h_{n+1},x)-\sC^*_{\ell_{h_{\sY}}}(\sH_{n+1},x)}{ \sum_{y\in \sY}\sfp(y \!\mid\! x)\1_{\hh_{\sY}(x)\neq y} + c}}\\
& \leq
\begin{cases}
\Gamma_2\paren*{\sC_{\ell_{h_{\sY}}}(h_{n+1},x)-\sC^*_{\ell_{h_{\sY}}}(\sH_{n+1},x)} & \text{when $\Gamma_2$ is linear}\\
(1+c)\Gamma_2\paren*{\frac {\sC_{\ell_{h_{\sY}}}(h_{n+1},x)-\sC^*_{\ell_{h_{\sY}}}(\sH_{n+1},x)}{c}} & \text{otherwise}
\end{cases}\\
\tag{$c\leq \sum_{y\in \sY}\sfp(y \!\mid\! x)\1_{\hh_{\sY}(x)\neq y} + c\leq 1+c$ and $\Gamma_2$ is non-decreasing}\\
& = \begin{cases}
\Gamma_2\paren*{\Delta\sC_{\ell_{h_{\sY}},\sH_{n+1}}(h_{n+1},x)}  & \text{when $\Gamma_2$ is linear}\\
(1+c)\Gamma_2\paren*{\frac {\Delta\sC_{\ell_{h_{\sY}},\sH_{n+1}}(h_{n+1},x)}{c}} & \text{otherwise}
\end{cases}
\end{align*}
and 
\begin{align*}
& \inf_{h_{n+1}\in \sH_{n+1}}\sC_{\labs}(h,x)-\sC^*_{\labs}(\sH,x)\\
& = \inf_{h_{n+1}\in \sH_{n+1}}\sC_{\labs}(h,x)-\inf_{h_{\sY}\in \sH_{\sY},h_{n+1}\in \sH_{n+1}}\sC_{\labs}(h,x)\\
& = \inf_{h_{n+1}\in \sH_{n+1}} \paren*{\sum_{y\in \sY}\sfp(y \!\mid\! x)\1_{ \hh_{\sY}(x)\neq y}\1_{ \hh(x)\neq n + 1} + c \1_{ \hh(x) = n + 1}}\\
&\qquad -\inf_{h_{\sY}\in \sH_{\sY},h_{n+1}\in \sH_{n+1}} \paren*{\sum_{y\in \sY}\sfp(y \!\mid\! x)\1_{ \hh_{\sY}(x)\neq y}\1_{ \hh(x)\neq n + 1} + c \1_{ \hh(x) = n + 1}}\\
& = \inf_{h_{n+1}\in \sH_{n+1}} \paren*{\sum_{y\in \sY}\sfp(y \!\mid\! x)\1_{ \hh_{\sY}(x)\neq y}\1_{ \hh(x)\neq n + 1} + c \1_{ \hh(x) = n + 1}}\\
&\qquad -\inf_{h_{n+1}\in \sH_{n+1}} \paren*{\inf_{h_{\sY}\in \sH_{\sY}}\sum_{y\in \sY}\sfp(y \!\mid\! x)\1_{ \hh_{\sY}(x)\neq y}\1_{ \hh(x)\neq n + 1} + c \1_{ \hh(x) = n + 1}}\\
& = \min\curl*{\sum_{y\in \sY}\sfp(y \!\mid\! x)\1_{\hh_{\sY}(x)\neq y},c}-\min\curl*{\inf_{h_{\sY}\in \sH_{\sY}}\sum_{y\in \sY}\sfp(y \!\mid\! x)\1_{\hh_{\sY}(x)\neq y},c}\\
& \leq \sum_{y\in \sY}\sfp(y \!\mid\! x)\1_{\hh_{\sY}(x)\neq y} -\inf_{h_{\sY}\in \sH_{\sY}}\sum_{y\in \sY}\sfp(y \!\mid\! x)\1_{\hh_{\sY}(x)\neq y}\\
& = \sC_{\ell_{0-1}}(h_{\sY}, x)-\sC^*_{\ell_{0-1}}(\sH_{\sY},x)\\
& = \Delta\sC_{\ell_{0-1},\sH_{\sY}}(h_{\sY}, x)\\
& \leq \Gamma_1\paren*{\Delta\sC_{\ell,\sH_{\sY}}(h_{\sY}, x)}.
\tag{By $\sH_{\sY}$-consistency bounds of $\ell$ under assumption}
\end{align*}
Therefore, by \eqref{eq:expression-two-step}, we obtain
\begin{align*}
& \sE_{\labs}(h)-\sE^*_{\labs}\paren*{\sH_{\sY}}+\sM_{\labs}(\sH_{\sY})\\
& \leq 
\begin{cases}
\E_X\bracket*{\Gamma_2\paren*{\Delta\sC_{\ell_{h_{\sY}},\sH_{n+1}}(h_{n+1},x)}} + \E_X\bracket*{\Gamma_1\paren*{\Delta\sC_{\ell,\sH_{\sY}}(h_{\sY}, x)}} & \text{when $\Gamma_2$ is linear}\\
(1+c)\E_X\bracket*{\Gamma_2\paren*{\frac {\Delta\sC_{\ell_{h_{\sY}},\sH_{n+1}}(h_{n+1},x)}{c}}} + \E_X\bracket*{\Gamma_1\paren*{\Delta\sC_{\ell,\sH_{\sY}}(h_{\sY}, x)}} & \text{otherwise}
\end{cases}\\
& \leq 
\begin{cases}
\Gamma_2\paren*{\E_X\bracket*{\Delta\sC_{\ell_{h_{\sY}},\sH_{n+1}}(h_{n+1},x)}} + \Gamma_1\paren*{\E_X\bracket*{\Delta\sC_{\ell,\sH_{\sY}}(h_{\sY}, x)}} & \text{when $\Gamma_2$ is linear}\\
(1+c)\Gamma_2\paren*{\frac1c\E_X\bracket*{\Delta\sC_{\ell_{h_{\sY}},\sH_{n+1}}(h_{n+1},x)}} + \Gamma_1\paren*{\E_X\bracket*{\Delta\sC_{\ell,\sH_{\sY}}(h_{\sY}, x)}} & \text{otherwise}
\end{cases}
\tag{$\Gamma_1$ and $\Gamma_2$ are concave}\\
& =
\begin{cases}
\Gamma_1\paren*{\sE_{\ell}(h)-\sE_{\ell}^*(\sH_{\sY}) +\sM_{\ell}(\sH_{\sY})}
\\\qquad + \Gamma_2\paren*{\sE_{\ell_{h_{\sY}}}(h_{n+1})-\sE_{\ell_{h_{\sY}}}^*(\sH_{n+1}) +\sM_{\ell_{h_{\sY}}}(\sH_{n+1})} & \text{when $\Gamma_2$ is linear}\\
(\Gamma_1\paren*{\sE_{\ell}(h)-\sE_{\ell}^*(\sH_{\sY}) +\sM_{\ell}(\sH_{\sY})}
\\\qquad + (1+c)\Gamma_2\paren*{\frac{\sE_{\ell_{h_{\sY}}}(h_{n+1})-\sE_{\ell_{h_{\sY}}}^*(\sH_{n+1}) +\sM_{\ell_{h_{\sY}}}(\sH_{n+1})}{c}} & \text{otherwise},
\end{cases}
\end{align*}
which completes the proof.

\end{proof}

\subsection{Proof of Realizable \texorpdfstring{$\sH$}{H}-Consistency
  for Two-Stage Surrogates
  (Theorem~\ref{Thm:bound-general-two-step-realizable})}
\label{app:bound-general-two-step-realizable}

\begin{definition}[\textbf{Realizable $\sH$-consistency}]
\label{def:rel-consistency-mabsc} Let $\hat h$ denote a hypothesis attaining the infimum of the expected surrogate loss, $\sE_{\sfL}(\hat h) = \sE^*_{\sfL}(\sH)$. A score-based abstention surrogate loss $\sfL$ is said to be
\emph{realizable $\sH$-consistent} with respect to the abstention loss
$\labs$ if, for any distribution in which an optimal hypothesis $h^*$
exists in $\sH$ with an abstention loss of zero (i.e.,
$\sE_{\labs}(h^*)=0$), we have $\sE_{\labs}(\hat h) = 0$.
\end{definition}

Next, we demonstrate that our proposed two-stage score-based surrogate
losses are not only Bayes-consistent, as previously established in
Section~\ref{sec:two-stage-mabsc}, but also realizable $\sH$-consistent,
which will be shown in
Theorem~\ref{Thm:bound-general-two-step-realizable}. This effectively
addresses the open question posed by \citet{pmlr-v206-mozannar23a} in
the context of score-based multi-class abstention and highlights the
benefits of the two-stage formulation.

\begin{restatable}[\textbf{Realizable $\sH$-consistency for
      two-stage surrogates}]{theorem}{BoundGenralTwoStepRealizable}
\label{Thm:bound-general-two-step-realizable}

Given a hypothesis set $\sH=\sH_{\sY}\times \sH_{n+1}$ that is closed
under scaling.  Let $\Phi$ be a function that satisfies the condition
$\lim_{t\to + \infty}\Phi(t)=0$ and $\Phi(t) \geq 1_{t \leq 0}$
for any $ t \in \Rset$. Assume that $\hat h = (\hat h_{\sY}, \hat
h_{n+1})\in \sH$ attains the infimum of the expected surrogate loss,
$\sE_{\ell}(\hat h_{\sY}) = \inf_{h_{\sY} \in
  \sH_{\sY}}\sE_{\ell}(h_{\sY})$ and $\sE_{\ell_{\hat h_{\sY}}}(\hat
h_{n+1}) = \inf_{h \in \sH}\sE_{\ell_{h_{\sY}}}(h_{n+1})$. Then, for
any distribution in which an optimal hypothesis
$h^*=(h^*_{\sY},h^*_{n+1})$ exists in $\sH$ with $\sE_{\labs}(h^*)=0$,
we have $\sE_{\labs}(\hat h) = 0$.
\end{restatable}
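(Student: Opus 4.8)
The plan is to sandwich the abstention loss of $\hat h$ between zero and a quantity that the realizable hypothesis forces to vanish. First I would record the pointwise bound $\labs(\hat h, x, y) \le \ell_{\hat h_{\sY}}(\hat h_{n+1}, x, y)$, valid for all $(x,y)$ and hence in expectation:
\[
\sE_{\labs}(\hat h) \le \sE_{\ell_{\hat h_{\sY}}}(\hat h_{n+1}).
\]
This is a term-by-term comparison using the hypothesis $\Phi(t) \ge \1_{t \le 0}$ and the decision rule of the score-based formulation, namely that $\hat h$ abstains exactly when $\hat h_{n+1}(x) \ge \max_{y \in \sY} \hat h_{\sY}(x, y)$. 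The prediction-error term $\1_{\hh_{\sY}(x) \neq y}\1_{\hh(x)\neq n+1}$ of \eqref{eq:abs-score} is bounded by $\1_{\hh_{\sY}(x)\neq y}\,\Phi(\hat h_{n+1}(x) - \max_{y\in\sY}\hat h_{\sY}(x,y))$, and the abstention term $c\,\1_{\hh(x)=n+1}$ by $c\,\Phi(\max_{y\in\sY}\hat h_{\sY}(x,y) - \hat h_{n+1}(x))$; their sum is precisely $\ell_{\hat h_{\sY}}(\hat h_{n+1}, x, y)$ of \eqref{eq:ell-Phi-h}.

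Next I would extract the structure of the realizable solution. Since the cost $c \in (0,1)$ is a positive constant and $\sE_{\labs}(h^*) = 0$ forces the nonnegative integrand to vanish almost everywhere, the optimal pair $h^* = (h^*_{\sY}, h^*_{n+1})$ can neither abstain nor misclassify: almost surely it predicts the correct label and $h^*_{n+1}(x) < \max_{y\in\sY} h^*_{\sY}(x, y)$, so the gap $\max_{y\in\sY}h^*_{\sY}(x,y) - h^*_{n+1}(x)$ is strictly positive a.e.

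The core of the argument is then to show $\inf_{h \in \sH}\sE_{\ell_{h_{\sY}}}(h_{n+1}) = 0$. Exploiting that $\sH$ is closed under scaling, I would evaluate the second-stage surrogate along the scaled family $(\nu h^*_{\sY}, \nu h^*_{n+1}) \in \sH$ for $\nu > 0$. Scaling by $\nu > 0$ preserves the argmax, so the scaled predictor still classifies correctly a.e. and the first term of \eqref{eq:ell-Phi-h} vanishes a.e.; the remaining term is $c\,\Phi\paren*{\nu\paren*{\max_{y\in\sY}h^*_{\sY}(x,y) - h^*_{n+1}(x)}}$, whose argument tends to $+\infty$ a.e. because the gap is strictly positive. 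Invoking $\lim_{t\to+\infty}\Phi(t)=0$ and the Lebesgue dominated convergence theorem (dominating the integrand for $\nu \ge 1$ by the integrable value of $\Phi$ at the $\nu=1$ argument), I would conclude
\[
\inf_{h \in \sH}\sE_{\ell_{h_{\sY}}}(h_{n+1}) \le \lim_{\nu \to +\infty}\sE_{\ell_{\nu h^*_{\sY}}}(\nu h^*_{n+1}) = 0,
\]
and nonnegativity of $\ell_{h_{\sY}}$ makes this an equality.

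Finally I would combine the pieces: the hypothesis $\sE_{\ell_{\hat h_{\sY}}}(\hat h_{n+1}) = \inf_{h \in \sH}\sE_{\ell_{h_{\sY}}}(h_{n+1}) = 0$ together with the pointwise bound gives $\sE_{\labs}(\hat h) \le 0$, and since $\labs \ge 0$ we obtain $\sE_{\labs}(\hat h) = 0$. I expect the delicate point to be the identity $\inf_{h \in \sH}\sE_{\ell_{h_{\sY}}}(h_{n+1}) = 0$: it requires the correct almost-everywhere realizability characterization (so that the abstention gap is strictly positive, not merely nonnegative) and a legitimate dominating function for the dominated convergence theorem, and it relies on scaling acting jointly on $(h_{\sY}, h_{n+1})$ so that the scaled predictor keeps classifying correctly while its deferral score is pushed well below the prediction score.
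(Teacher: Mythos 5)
Your proposal is correct and follows essentially the same route as the paper's proof: the pointwise domination $\labs \le \ell_{\hat h_{\sY}}$, the scaling of the realizable pair $(\nu h^*_{\sY}, \nu h^*_{n+1})$ using closure under scaling, and the dominated convergence argument with $\lim_{t\to+\infty}\Phi(t)=0$ to drive $\inf_{h\in\sH}\sE_{\ell_{h_{\sY}}}(h_{n+1})$ to zero. The only divergence is that the paper's proof also treats the case where the realizable optimum abstains on some points, which forces $c=0$ and is handled by scaling an always-abstaining solution; you exclude this case by invoking the standing assumption $c\in(0,1)$ from the preliminaries, which is legitimate under that assumption, but if the cost were allowed to vanish (as in the paper's broader discussion of realizable distributions) you would need to add that second case.
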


\begin{proof}
By the assumptions, $\ell_{h_{\sY}}$ serves as an upper bound for
$\labs$ and thus $\sE_{\labs}(\hat h) \leq \sE_{\ell_{\hat
    h_{\sY}}}(\hat h_{n + 1})$. If abstention happens, that is
$h^*_{n+1}(x) > \max_{y\in \sY}h^*_{\sY}(x, y)$ for some point $x$,
then we must have $c=0$ by the realizability assumption. Therefore,
there exists an optimal $h^{**}$ such that $h^{**}_{n+1}(x) >
\max_{y\in \sY}h^{**}_{\sY}(x, y)$ for all $x \in \sX$ without
incurring any cost.  Then, by the Lebesgue dominated convergence
theorem and the assumption that $\sH$ is closed under scaling,
\begin{align*}
\sE_{\labs}(\hat h) & \leq \sE_{\ell_{\hat h_{\sY}}}(\hat h_{n + 1}) \\
& \leq \lim_{\alpha \to +\infty}\sE_{\ell_{\alpha h^{**}_{\sY}}}(\alpha h^{**}_{n+1})\\
& =\lim_{\alpha \to +\infty}\mathbb{E}\bracket*{ \ell_{\alpha h^{**}_{\sY}}\paren*{\alpha h^{**}_{n+1}, x, y}  }\\
& =\lim_{\alpha\to +\infty}\mathbb{E}\bracket*{ \1_{ \hh^{**}_{\sY}(x) \neq y} \Phi\paren*{\alpha\paren*{h^{**}_{n+1}(x)-\max_{y\in \sY}h^{**}_{\sY}(x, y)}} + c \Phi\paren*{\alpha\paren*{\max_{y\in \sY} h^{**}_{\sY}(x, y)-h^{**}_{n+1}(x)}}}\\
& =\lim_{\alpha\to +\infty}\mathbb{E}\bracket*{\1_{ \hh^{**}_{\sY}(x) \neq y} \Phi\paren*{\alpha\paren*{ h^{**}_{n+1}(x)-\max_{y\in \sY} h^{**}_{\sY}(x, y)}}} \tag{$c=0$}\\
& =0.  \tag{using $\lim_{t\to + \infty}\Phi(t)=0$ and the Lebesgue dominated convergence theorem}
\end{align*}
If abstention does not happen, that is $h^*_{n+1}(x)-\max_{y\in \sY}h^*_{\sY}(x, y)<0$ for all $x \in \sX$, then we must have $h^*_{\sY}(x, y)-\max_{y'\neq y}h^*_{\sY}(x, y')>0$ for all $x \in \sX$ and $y \in \sY$ by the realizability assumption. Then, by the Lebesgue dominated convergence theorem and the assumption that $\sH$ is closed under scaling,
\begin{align*}
\sE_{\labs}(\hat h) & \leq \sE_{\ell_{\hat h_{\sY}}}(\hat h_{n + 1})\\
& \leq \lim_{\alpha \to +\infty}\sE_{\ell_{\alpha h^*_{\sY}}}(\alpha h^{*}_{n+1})\\
& = \lim_{\alpha \to +\infty}\mathbb{E}\bracket*{ \ell_{ \alpha h^{*}_{\sY}}\paren*{\alpha h^{*}_{n+1}, x, y}  }\\
& = \lim_{\alpha\to +\infty}\mathbb{E}\bracket*{ \1_{ \hh^{*}_{\sY}(x) \neq y} \Phi\paren*{\alpha\paren*{h^*_{n+1}(x)-\max_{y\in \sY}h^*_{\sY}(x, y)}} + c \Phi\paren*{\alpha\paren*{\max_{y\in \sY} h^*_{\sY}(x, y)-h^*_{n+1}(x)}}}\\
& = \lim_{\alpha\to +\infty}\mathbb{E}\bracket*{ c \Phi\paren*{\alpha\paren*{\max_{y\in \sY} h^*_{\sY}(x, y)-h^*_{n+1}(x)}}} \tag{$h^{*}_{\sY}(x, y)-\max_{y'\neq y} h^{*}_{\sY}(x, y')>0$}\\
& = 0.  \tag{using $\lim_{t\to + \infty}\Phi(t)=0$ and the Lebesgue dominated convergence theorem}
\end{align*}
By combining the above two analysis, we conclude the proof.
\end{proof}
\ignore{
\begin{proof}
For any distribution in which an optimal hypothesis
$h^*=(h^*_{\sY},h^*_{n+1})$ exists in $\sH$ with $\sE_{\labs}(h^*)=0$,
we have for any $x\in \sX$, either $c=0$ and $h^*_{n+1}(x)> \max_{y\in
  \sY}h^*_{\sY}(x, y)$, or there exists $y_{\max}$ such that
$\sfp(y_{\max} \!\mid\! x)=1$, $h^*_{\sY}(x, y_{\max})> \max_{y'\neq
  y_{\max}}h^*_{\sY}(x, y')$ and $\max_{y\in
  \sY}h^*_{\sY}(x, y)>h^*_{n+1}(x)$.  Since $\sH$ is closed under
scaling, $\alpha h^*\in \sH$ for any $\alpha>0$. Using the fact that
$\lim_{t\to + \infty}\Phi(t)=0$ and $\lim_{\alpha \to +
  \infty}\sE_{\ell}(\alpha h^*_{\sY})=0$ for $\ell$ being the logistic
loss, we obtain
$\sE_{\ell}^*(\sH_{\sY})=\sE_{\ell_{h_{\sY}}}^*(\sH_{n+1})=0$. By
establishing that $\ell_{h_{\sY}}$ serves as an upper bound for
$\labs$, we conclude the proof.
\end{proof}
}

\section{Significance of \texorpdfstring{$\sH$}{H}-Consistency Bounds with Minimizability Gaps}
\label{app:better-bounds}

As previously highlighted, the minimizabiliy gap can be upper-bounded
by the approximation error $\sA_{\sfL}(\sH)= \sE^*_{\lsc}(\sH) -
\E_x\bracket[\big]{\inf_{ h \in \sH_{\rm{all}}} \E_y \bracket{\lsc( h,
    X, y) \mid X =
    x}}=\sE^*_{\sfL}(\sH)-\sE^*_{\sfL}(\sH_{\rm{all}})$. However, it
is a finer quantity than the approximation error, and as such, it can
potentially provide more significant guarantees.  To elaborate, as
shown by \citep{awasthi2022Hconsistency,AwasthiMaoMohriZhong2022multi}, for a
target loss function $\sfL_2$ and a surrogate loss function $\sfL_1$,
the excess error bound $\sE_{\sfL_2} (h) -
\sE^*_{\sfL_2}(\sH_{\rm{all}})\leq \Gamma\paren*{ \sE_{\sfL_1} (h) -
  \sE^*_{\sfL_1}(\sH_{\rm{all}})}$ can be reformulated as
\begin{align*}
\sE_{\sfL_2} (h) - \sE^*_{\sfL_2}(\sH) +\sA_{\sfL_2}(\sH)\leq \Gamma\paren*{ \sE_{\sfL_1} (h) - \sE^*_{\sfL_1}(\sH)+\sA_{\sfL_1}(\sH)},
\end{align*}
where $\Gamma$ is typically linear or the square-root function modulo
constants.  On the other hand, an $\sH$-consistency bound can be
expressed as follows:
\begin{equation*}
\sE_{\sfL_2} (h) - \sE^*_{\sfL_2}(\sH) +  \sM_{\sfL_2}(\sH)  \leq \Gamma\paren*{ \sE_{\sfL_1} (h) - \sE^*_{\sfL_1}(\sH) + \sM_{\sfL_1}(\sH}.
\end{equation*}
For a target loss function $\sfL_2$ with discrete outputs, such as the
zero-one loss or the deferral loss, we have
$\E_{x}\bracket[\big]{\inf_{h \in\sH}\E_{y}\bracket*{\sfL_2(h, x,
    y)\mid X = x}}=\E_x\bracket[\big]{\inf_{h \in \sH_{\rm{all}}} \E_{y}
  \bracket*{\sfL_2(h,x, y)\mid X = x}}$ when the hypothesis set generates labels
that cover all possible outcomes for each input (See
\citep[Lemma~3]{AwasthiMaoMohriZhong2022multi},
Lemma~\ref{lemma:calibration_gap_score-mabsc} in
Appendix~\ref{app:score-mabsc}). Consequently, we have
$\sM_{\sfL_2}(\sH) = \sA_{\sfL_2}(\sH)$. However, for a surrogate loss function
$\sfL_1$, the minimizability gap is upper-bounded by the approximation
error, $\sM_{\sfL_1}(\sH)\leq \sA_{\sfL_1}(\sH)$, and is generally
finer.

Let us consider a straightforward binary classification example where the conditional distribution is denoted as $\eta(x)=D(Y=1 | X = x)$. We will define $\sH$ as a set of functions $h$, such that $|h(x)| \leq \Lambda$ for all $x \in \sX$, for some $\Lambda > 0$, and it is also possible to achieve any value in the range $[-\Lambda, +\Lambda]$. For the exponential-based margin loss, which we define as $\sfL(h, x, y) = e^{-yh(x)}$, we obtain the following equation:
\begin{equation*}
\E_{y}[\sfL(h, x, y)\mid X = x] = \eta(x)
e^{-h(x)} + (1 - \eta(x)) e^{h(x)}.
\end{equation*}
Upon observing this, it becomes apparent that the infimum over all measurable functions can be expressed in the following way, for all $x$:
\begin{equation*}
\inf_{h
  \in \sH_{\mathrm{all}}}\E_{y}[\sfL(h, x, y)\mid X = x] =
2\sqrt{\eta(x)(1-\eta(x))},
\end{equation*}
while the infimum over $\sH$, $\inf_{h \in \sH}\E_{y}[\sfL(h, x, y)\mid X = x]$, depends on $\Lambda$ and can be expressed as
  \begin{align*}
   &\inf_{h \in \sH}\E_{y}[\sfL(h, x, y)\mid X = x]\\
   &=\begin{cases}
   \max\curl*{\eta(x),1 - \eta(x)}
e^{-\Lambda} + \min\curl*{\eta(x),1 - \eta(x)} e^{\Lambda} & \Lambda<\frac{1}{2} \abs*{\log \frac{\eta(x)}{1
    -\eta(x)}}\\
    2\sqrt{\eta(x)(1-\eta(x))} & \text{otherwise}.
   \end{cases} 
  \end{align*}
Thus, in the deterministic scenario, the discrepancy between the
approximation error $\sA_{\sfL}(\sH)$ and the minimizability gap
$\sM_{\sfL}(\sH)$ is:
\begin{equation*}
  \sA_{\sfL}(\sH)-\sM_{\sfL}(\sH)
  = \E_{x}\bracket*{\inf_{h \in\sH}\E_{y}\bracket*{\sfL(h, x, y)\mid X = x}
    - \inf_{h \in
      \sH_{\rm{all}}} \E_{y} \bracket*{\sfL(h,x, y)\mid X = x}}
  = e^{-\Lambda}.
\end{equation*}
Therefore, for a surrogate loss, the minimizability gap can be
strictly less than the approximation error.  In summary, an
$\sH$-consistency bound can be more significant than the excess error
bound as $\sM_{\sfL_2}(\sH) = \sA_{\sfL_2}(\sH)$ when $\sfL_2$
represents the zero-one loss or deferral loss, and $\sM_{\sfL_1}(\sH)
\leq \sA_{\sfL_1}(\sH)$. They can also be directly used to derive
finite sample estimation bounds for a surrogate loss minimizer, which
are more favorable and relevant than a similar finite sample guarantee
that could be derived from an excess error bound (see
Section~\ref{sec:finite-sample}).
\restoreatoc

\chapter{Appendix to Chapter~\ref{ch3}}

\disableatoc
\section{Remarks on some key results}
\label{app:remark}

\begin{remark}
\label{remark:spcific-loss-bound}
When the best-in-class error coincides with Bayes error $\sE^*_{\sfL}(\sH, \sR) = \sE^*_{\sfL}\paren*{\sH_{\rm{all}}, \sR_{\rm{all}}}$, the minimizability gaps $\sM_{\sfL}(\sH,\sR)$ vanish. In those cases, the  $(\sH,\sR)$-consistency bound in Theorem~\ref{Thm:spcific-loss-bound} guarantees that when the surrogate estimation error $\sE_{\sfL}(h, r) -  \sE_{\sfL}^*(\sH,\sR)$ is optimized up to $\e$, the estimation error of the abstention loss $ \sE_{\labs}(h, r) - \sE_{\labs}^*(\sH,\sR)$ is upper-bounded by $\Gamma(\e)$. For all the three loss functions, when $\e$ is sufficiently small, the
dependence of $\Gamma$ on $\e$ exhibits a square root
relationship. However, if this is not the case, the dependence becomes
linear. Note that the dependence is subject to the number of classes $n$ for $\ell = \ell_{\rm{mae}}$ and
$\ell =
\ell_{\rho-\mathrm{hinge}}$.
\end{remark}

\begin{remark}
\label{remark:bound-general-second-step}
When the best-in-class error coincides with the Bayes error $\sE^*_{\ell}(\sR) = \sE^*_{\ell}\paren*{\sR_{\rm{all}}}$ for $\ell = \ell_{\Phi, h}$ and $\ell = \ell_{\mathrm{abs}, h}$, the minimizability gaps $\sM_{\ell_{\Phi, h}}(\sR)$ and $\sM_{\ell_{\mathrm{abs}, h}}(\sR)$ vanish. In those cases, the $\sR$-consistency bound in Theorem~\ref{Thm:bound-general-second-step} guarantees that when the surrogate estimation error $\sE_{\ell_{\Phi, h}}(r) -  \sE_{\ell_{\Phi, h}}^*(\sR)$ is optimized up to $\e$, the target estimation error $ \sE_{\ell_{\mathrm{abs}, h}}(r) - \sE_{\ell_{\mathrm{abs}, h}}^*(\sR)$ is upper-bounded by $\Gamma(\frac{\e}{c})$.
\end{remark}

\begin{remark}
\label{remark:tsr}
Corollary~\ref{cor:tsr-mabs} shows that $\ell_{\Phi, h}$ admits an excess error bound with respect to $\ell_{\mathrm{abs}, h}$ with functional form $\Gamma(\frac{\cdot}{c})$ when $\Phi$ admits an excess error bound with respect to $\ell_{0-1}^{\rm{binary}}$ with functional form $\Gamma(\cdot)$.
\end{remark}


\begin{remark}
\label{remark:bound-general-two-step}
Note that the minimizability gaps vanish when $\sH$ and $\sR$ are
families of all measurable functions or when they include the Bayes
predictor and rejector. In their absence,
Theorem~\ref{Thm:bound-general-two-step-mabs} shows that if the estimation
prediction loss $(\sE_{\ell}(h)-\sE_{\ell}^*(\sH))$ is reduced to
$\e_1$ and the estimation rejection loss
$(\sE_{\ell_{\Phi,h}}(r)-\sE_{\ell_{\Phi,h}}^*(\sR))$ to $\e_2$, then
the abstention estimation loss $(\sE_{\labs}(h, r) -
\sE_{\labs}^*(\sH, \sR))$ is, up to constant factors, bounded by
$\Gamma_1(\e_1) + \Gamma_2(\e_2)$.
\end{remark}

\begin{remark}
\label{remark:tshr}
Corollary~\ref{cor:tshr-mabs} shows that the proposed two-stage approach
admits an excess error bound with respect to $\labs$ with functional
form $\Gamma_1(\cdot) + (1 + c)\Gamma_2(\frac{\cdot}{c})$ when $\ell$
admits an excess error bound with respect to $\ell_{0-1}$ with
functional form $\Gamma_1(\cdot)$ and $\Phi$ admits an excess error
bound with respect to $\ell_{0-1}^{\rm{binary}}$ with functional form
$\Gamma_2(\cdot)$.
\end{remark}

\section{Significance of two-stage formulation compared
  with single-stage losses}
\label{app:two-stage}

Here, we wish to further highlight the significance of our findings
regarding the two-stage formulation. The $(\sH, \sR)$-consistency
bounds we established for this scenario directly motivate an algorithm
for a crucial scenario. As already indicated, in applications, often a
prediction function is already available and has been trained using a
standard loss function such as cross-entropy. Training may take days
or months for some large models. The cost of a one-stage approach,
which involves "retraining" to find a pair $(h, r)$ with a new $h$,
can thus be prohibitive. Instead, we demonstrate that a rejector $r$
can be learned using a suitable surrogate loss function based on the
existing predictor $h$ and that the solution formed by the existing
$h$ and this rejector $r$ benefits from $(\sH, \sR)$-consistency
bounds.

Both our one-stage and two-stage solutions using our surrogate losses
benefit from strong $(\sH, \sR)$-consistency bounds: in the limit of
large samples, both methods, one-stage and two-stage converge to the
same joint minimizer of the target abstention loss. However, as
already emphasized, the two-stage approach is advantageous in some
scenarios where a predictor $h$ is already available.  Moreover, the
two-stage solution is more beneficial from the optimization point of
view: the first-stage optimization can be standard and based on say
cross-entropy, and the second stage is based on a loss function
\eqref{eq:ell-Phi-h-mabs} that is straightforward to minimize. In contrast,
the one-stage minimization with the MAE loss is known to be more
difficult, see \citep{zhang2018generalized}. In
Section~\ref{sec:experiments-mabs}, our empirical results show a more
favorable performance for the two-stage solution, which we believe
reflects this difference in optimization.

\section{Difference between predictor-rejector and score-based formulations}
\label{app:difference}
Here, we hope to further emphasize our contributions by pointing out that the score-based formulation does not offer a direct loss function applicable to the predictor-rejector formulation. It is important to emphasize that the hypothesis set used in the score-based setting constitutes a subset of real-valued functions defined over $ \sX \times \tilde \sY$, where $\tilde \sY$ is the original label set $\sY$ augmented with an additional label corresponding to rejection. In contrast, the predictor-rejector function involves selecting a predictor $h$ out of a collection of real-valued functions defined over $\sX \times \sY$ and a rejector $r$ from a sub-family of real-valued functions defined over $\sX$. Thus, the hypothesis sets in these two frameworks differ entirely. Consequently, a score-based loss function cannot be directly applied to the hypothesis set of the predictor-rejector formulation.

One can instead, given the hypothesis sets $\sH$ and $\sR$ for the predictor and rejector functions in the predictor-rejector formulation, define a distinct hypothesis set $\tilde \sH$ of real-valued functions defined over $\sX \times \tilde \sY$. Functions $\tilde h\in\tilde \sH$ are defined from a pair $(h, r)\in \sH \times \sR$. The score-based loss function for $\tilde h\in \tilde \sH$ then coincides with the predictor-rejector loss of $(h, r)$. However, the family $\tilde \sH$ is complex. As pointed out in Section~\ref{sec:score-example}, for instance, when $\sH$ and $\sR$ are families of linear functions, $\tilde \sH$ is not linear and is more complex.  

Moreover, there is a non-trivial coupling relating the scoring functions defined for the rejection label $(n+1)$ and other scoring functions, while such a coupling is not present in the standard score-based formulation. This makes it more difficult to minimize
the loss function $\wt \ell(\wt h, x, y)$.
Indeed, the minimization problem requires that the constraint $\wt
h(x, n + 1) = \max_{y \in \sY} \wt h(x, y) - r(x)$ be satisfied. This
constraint relates the first $n$ scoring functions $\wt h(\cdot, y)$, $y
\in \sY$, to the last scoring function $\wt h(\cdot, n + 1)$, via a
maximum operator (and the function $r$). The constraint is
non-differentiable and non-convex, which makes the minimization
problem more challenging. 

This augmented complexity and coupling fundamentally differentiate the two formulations. Our counterexample in Section~\ref{sec:score-example} underscores this intrinsic distinction between these two formulations: While the predictor-rejector formulation can easily handle certain instances, score-based framework falls short to tackle them unless a more complex hypothesis set is adopted. This key difference between the two formulations is also the underlying reason for the historical difficulty in devising a consistent surrogate loss function for the predictor-rejector formulation within the standard multi-class setting, while the task has been comparatively more straightforward within the score-based formulation.

It is important to highlight that our novel families of predictor-rejector surrogate losses, alongside similar variants, establish the first Bayes-consistent and realizable consistent surrogate losses within the predictor-rejector formulation and they address two previously open questions in the literature \citep{NiCHS19} and \citep{pmlr-v206-mozannar23a} (see Section~\ref{sec:general}). Moreover, they outperform the state-of-the-art surrogate losses found in the score-based formulation (see Section~\ref{sec:experiments-mabs}). This underscores both the innovative nature and the significant contribution of our work.

In the following section,
we will further showcase the advantages of the predictor-rejector
formulation through empirical evidence.

\section{Experimental details}
\label{app:setup}
\paragraph{Setup.}
We adopt ResNet-$34$ \citep{he2016deep}, a residual network with $34$ convolutional layers, for SVHN and CIFAR-10,
and WRN-$28$-$10$ \citep{zagoruyko2016wide}, a residual network with $28$ convolutional layers and a widening factor of $10$, for CIFAR-100 both with ReLU activations
\citep{zagoruyko2016wide}. We train for 200 epochs using Stochastic Gradient Descent (SGD) with Nesterov momentum
\citep{nesterov1983method} following the cosine decay learning rate schedule
\citep{loshchilov2016sgdr} of an initial learning rate $0.1$.
During the training, the batch size is set to $1\mathord,024$ and the weight decay is $1\times 10^{-4}$. Except for SVHN, we adopt the standard data augmentation: a four pixel padding with $32 \times 32$ random crops and random horizontal flips.

We compare with a score-based surrogate
loss proposed in \citep{mozannar2020consistent} based on cross-entropy and a score-based surrogate loss used in \citep{caogeneralizing} based on generalized cross-entropy \citep{zhang2018generalized}. For our single-stage predictor-rejector surrogate
loss, we set $\ell$ to be the mean absolute error loss
$\ell_{\rm{mae}}$ since the constrained hinge loss imposes a
restriction incompatible with the standard use of the softmax function
with neural network hypotheses, and the $\rho$-margin loss is
non-convex. For our two-stage predictor-rejector
surrogate loss, we first use standard training with the logistic loss
to learn a predictor $h^*$, and then in the second stage, optimize the
loss function $\ell_{\Phi, h^*}$ with $\Phi(t) = \exp(-t)$ to learn a
rejector. 
We set the cost $c$ to $0.03$ for SVHN, $0.05$ for CIFAR-10 and $0.15$ for CIFAR-100. We observe that the performance remains close for other neighboring values of $c$. We highlight this particular choice of cost because a cost value that is not too far from the best-in-class zero-one classification loss encourages in practice a reasonable amount of input instances to be abstained.

\paragraph{Metrics.}  We use as evaluation metrics the average
abstention loss, $\labs$ for predictor-rejector surrogate losses and
$\labsc$ for score-based abstention surrogate losses, which share the
same semantic meaning. It's important to emphasize that the two abstention losses, $\labs$ and $\labsc$ are indeed the same metric, albeit tailored for two distinct formulations. Consequently, their average numerical values can be directly compared. Note that both $\labs$ and $\labsc$  account for the zero-one misclassification error when the sample is accepted, and the cost when the sample is rejected. The reason they are adapted to the two formulations is due to the difference in the rejection method: $r(x) \leq 0$ in the predictor-rejector formulation and $\tilde h(x) = n + 1$ in the score-based abstention formulation. It should also be noted that the abstention loss serves as a comprehensive metric that integrates the rejection ratio and zero-one misclassification error on the accepted data, thereby providing a singular, fair ground for comparison in Table~\ref{tab:comparison-mabs}. Nevertheless, we include all three metrics in Table~\ref{tab:comparison-cifar10} as a detailed comparison.

\begin{table}[t]
\caption{Abstention loss, zero-one misclassification error on the accepted data and rejection ratio of our predictor-rejector surrogate losses against baselines: 
the state-of-the-art score-based abstention surrogate losses in
\citep{mozannar2020consistent,caogeneralizing} on CIFAR-10.}
    \label{tab:comparison-cifar10}
\begin{center}
\resizebox{\columnwidth}{!}{
    \begin{tabular}{@{\hspace{0pt}}llll@{\hspace{0pt}}}
    \toprule
      Method & Abstention loss & Misclassification error & Rejection ratio  \\
    \toprule
     \citep{mozannar2020consistent} &  4.48\% $\pm$ 0.10\% & 4.30\% $\pm$ 0.14\% & 25.99\% ± 0.41\%       \\
     \citep{caogeneralizing}  & 3.62\% $\pm$ 0.07\% & 3.08\% $\pm$ 0.10\%  & 28.27\% $\pm$ 0.18\%  \\
     single-stage predictor-rejector ($\ell_{\rm{mae}}$) & 3.64\% $\pm$ 0.05\% & 3.54\% $\pm$ 0.05\% &                  \textbf{17.21\% $\pm$ 0.22\%} \\
    two-stage predictor-rejector  & \textbf{3.31\% \!$\pm$ 0.02\%} &  \textbf{2.69\% $\pm$ 0.05\%}  &                 22.83\% $\pm$ 0.21\% \\
    \bottomrule
    \end{tabular}
    }
\end{center}
\end{table}

\section{Useful lemmas}
\label{app:general}

We first
introduce some notation before presenting a lemma that will be used in our proofs. Recall that we denote by
$\sfp(y \!\mid\! x) = \sD(Y = y \!\mid\! X = x)$ the conditional probability of
$Y=y$ given $X = x$. Thus, the generalization error for a general
abstention surrogate loss can be rewritten as $ \sE_{\sfL}(h, r) =
\mathbb{E}_{X} \bracket*{\sC_{\sfL}(h, r, x)} $, where $\sC_{\sfL}(h,
r,x)$ is the conditional risk of $\sfL$, defined by
\begin{align*}
\sC_{\sfL}(h, r, x) = \sum_{y\in \sY} \sfp(y \!\mid\! x) \sfL(h, r, x, y).
\end{align*}
We denote by $\sC_{\sfL}^*(\sH, \sR, x) = \inf_{h\in \sH, r\in
  \sR}\sC_{\sfL}(h, r, x)$ the best-in-class conditional risk of $\sfL$. Then,
the minimizability gap can be rewritten as follows:
\begin{align*}
\sM_{\sfL}(\sH, \sR)
= \sE^*_{\sfL}(\sH, \sR)
- \mathbb{E}_{X} \bracket* {\sC_{\sfL}^*(\sH, \sR, x)}.
\end{align*}
We further refer to $\sC_{\sfL}(h, r, x)-\sC_{\sfL}^*(\sH, \sR, x)$ as
the calibration gap and denote it by $\Delta\sC_{\sfL,\sH, \sR}(h,
r,x)$.  We first prove a lemma on the calibration gap of the general
abstention loss. For any $x \in \sX$, we define the set of labels
generated by hypotheses in $\sH$ as $\mathsf H(x) := \curl*{\hh(x)
  \colon h \in \sH}$.  We will consider hypothesis sets $\sR$ which
are \emph{regular for abstention}.
\begin{definition}[Regularity for Abstention]
We say that a hypothesis set $\sR$ is \emph{regular for abstention} 
if for any $x\in \sX$, there exist $f, g \in \sR$ 
such that $f(x)>0$ and 
$g(x)\leq 0$.
\end{definition}
In other words, if $\sR$ is regular for abstention, then, for any
instance $x$, there is an option to accept and an option to
reject. 

\subsection{Lemma~\ref{lemma:calibration_gap_general} and proof}
The following lemma characterizes the calibration gap of the
predictor-rejector abstention.
\begin{restatable}{lemma}{ConditionalRegret}
\label{lemma:calibration_gap_general}
Assume that $\sR$ is regular for abstention. For any $x \in \sX$,
the minimal conditional $\labs$-risk and
the calibration gap for $\labs$ can be expressed as follows:
\begin{align*}
\sC^*_{\labs}(\sH,\sR,x)  & = 1 - \max\curl*{\max_{y\in
    \mathsf H(x)}\sfp(y \!\mid\! x),1 - c},\\
 \Delta\sC_{\labs,\sH, \sR}(h, r, x) & =
\begin{cases}
\max\curl*{\max_{y\in \mathsf H(x)} \sfp(y \!\mid\! x),1 - c} - \sfp(\hh(x) \!\mid\! x) &  r(x)>0\\
\max\curl*{\max_{y\in \mathsf H(x)} \sfp(y \!\mid\! x)-1+c,0} & r(x)\leq 0.
\end{cases}
\end{align*}
\end{restatable}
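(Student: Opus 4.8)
The plan is to reduce everything to a pointwise computation in $x$, since both quantities in the lemma are defined pointwise. First I would evaluate the conditional risk $\sC_{\labs}(h, r, x) = \sum_{y \in \sY} \sfp(y \!\mid\! x)\,\labs(h, r, x, y)$ by splitting on the sign of $r(x)$. When $r(x) > 0$ the loss reduces to $\1_{\hh(x) \neq y}$, so $\sC_{\labs}(h, r, x) = 1 - \sfp(\hh(x) \!\mid\! x)$; when $r(x) \leq 0$ the loss is the constant $c$ (using that $c$ is constant throughout this section), so $\sC_{\labs}(h, r, x) = c$. This yields the clean two-case expression for the conditional risk that drives the remainder of the argument.

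Next I would carry out the joint infimum over $(h, r) \in \sH \times \sR$. Because $\sC_{\labs}$ depends on $r$ only through $\1_{r(x) > 0}$ and on $h$ only through $\hh(x)$, and because $\sR$ is regular for abstention — so that at every $x$ there is a rejector with $r(x) > 0$ and one with $r(x) \leq 0$ — the infimum decouples into a minimum over the accept and reject branches. On the accept branch, $\inf_{h \in \sH}\bracket*{1 - \sfp(\hh(x) \!\mid\! x)} = 1 - \max_{y \in \mathsf H(x)} \sfp(y \!\mid\! x)$ by the definition $\mathsf H(x) = \curl*{\hh(x) \colon h \in \sH}$, which is a finite subset of $\sY$ so the supremum is attained; on the reject branch the value is $c$. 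Hence $\sC^*_{\labs}(\sH, \sR, x) = \min\curl*{1 - \max_{y \in \mathsf H(x)} \sfp(y \!\mid\! x),\, c}$. Writing $M = \max_{y \in \mathsf H(x)} \sfp(y \!\mid\! x)$ for brevity, I would then rewrite this using the elementary identity $\min\curl*{1 - M, c} = 1 - \max\curl*{M, 1 - c}$ to recover the stated form of the minimal conditional risk.

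Finally I would obtain the calibration gap $\Delta\sC_{\labs, \sH, \sR}(h, r, x) = \sC_{\labs}(h, r, x) - \sC^*_{\labs}(\sH, \sR, x)$ by subtracting the minimal risk from the two-case conditional risk. For $r(x) > 0$ this is immediate and gives $\max\curl*{M,\, 1 - c} - \sfp(\hh(x) \!\mid\! x)$. For $r(x) \leq 0$ I would simplify $c - \bracket*{1 - \max\curl*{M, 1 - c}}$ via the identity $\max\curl*{M, 1 - c} - 1 + c = \max\curl*{M - 1 + c,\, 0}$, yielding $\max\curl*{M - 1 + c,\, 0}$. Substituting back $M = \max_{y \in \mathsf H(x)} \sfp(y \!\mid\! x)$ in both cases recovers exactly the claimed expressions.

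The computation is essentially routine; the only point requiring genuine care is the decoupling of the joint infimum over $(h, r)$. Here the regularity-for-abstention assumption is precisely what guarantees that, at each fixed $x$, both the accept branch and the reject branch are attainable within $\sR$, so that the pointwise minimum of the two branch values is indeed the best-in-class conditional risk rather than merely an upper bound on it.
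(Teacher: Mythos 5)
Your proposal is correct and follows essentially the same route as the paper's proof: compute the two-case conditional risk by splitting on the sign of $r(x)$, use regularity of $\sR$ to take the pointwise minimum over the accept and reject branches, and subtract to obtain the calibration gap. The extra detail you supply on decoupling the joint infimum and on the identity $\min\curl*{1 - M, c} = 1 - \max\curl*{M, 1 - c}$ is exactly what the paper leaves implicit.
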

\begin{proof}
By the definition, the conditional $\labs$-risk can be expressed as
follows:
\begin{align}
\label{eq:cond}
\sC_{\labs}(h, r, x)
=  \sum_{y\in \sY} \sfp(y \!\mid\! x) \1_{\hh(x)\neq y}\1_{r(x)> 0} + c \1_{r(x)\leq 0}
=
\begin{cases}
1-\sfp(\hh(x) \!\mid\! x) & \text{if } r(x)>0\\
c & \text{if } r(x)\leq 0.
\end{cases}
\end{align}
Since $\sR$ is regular for
abstention, the minimal conditional $\labs$-risk can be expressed as
follows:
\begin{align*}
\sC^*_{\labs}(\sH,\sR,x) = 1 - \max\curl*{\max_{y\in
    \mathsf H(x)}\sfp(y \!\mid\! x),1 - c},
\end{align*}
which proves the first part of the lemma. By the definition of the calibration gap, we have
\begin{align*}
\Delta\sC_{\labs,\sH,\sR}(h, r, x)
& = \sC_{\labs}(h, r, x)- \sC^*_{\labs}(\sH,\sR,x)\\
& =
\begin{cases}
\max\curl*{\max_{y\in \mathsf H(x)} \sfp(y \!\mid\! x),1 - c} - \sfp(\hh(x) \!\mid\! x) & \text{if } r(x)>0\\
\max\curl*{\max_{y\in \mathsf H(x)} \sfp(y \!\mid\! x)-1+c,0} & \text{if } r(x)\leq 0,
\end{cases}
\end{align*}
which completes the proof.
\end{proof}

\subsection{Lemma~\ref{lemma:aux-mabs} and proof}
The following lemma would be useful in the proofs for two-stage surrogate losses.
\begin{lemma}
\label{lemma:aux-mabs}
Assume that the following $\sR$-consistency bound holds for all $r \in \sR$ and any distribution,
\begin{equation*}
\sE_{\ell_{0-1}}(r) - \sE^*_{\ell_{0-1}}(\sR) + \sM_{\ell_{0-1}}(\sR) \leq \Gamma\paren*{\sE_{\Phi}(r) - \sE^*_{\Phi}(\sR) + \sM_{\Phi}(\sR)}.
\end{equation*}
Then, for any $p_1, p_2\in [0,1]$ such that $p_1 + p_2 =1$ and $x \in \sX$, we have
\begin{align*}
& p_1 1_{r(x) > 0} + p_2 1_{r(x) \leq 0} - \inf_{r \in \sR} \paren*{p_1 1_{r(x) > 0} + p_2 1_{r(x) \leq 0}}\\
&\quad \leq \Gamma \paren*{p_1 \Phi(-r(x)) + p_2 \Phi(r(x)) - \inf_{r \in \sR} \paren*{p_1 \Phi(-r(x)) + p_2 \Phi(r(x))}}
\end{align*}
\end{lemma}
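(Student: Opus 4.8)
The plan is to derive this conditional (pointwise) inequality from the \emph{global} $\sR$-consistency bound by instantiating it on a carefully chosen point-mass distribution. The key observation is that, at a fixed $x$, the binary zero-one conditional risk and the $\Phi$ conditional risk of a rejector $r$ are exactly the two expressions $p_1 1_{r(x) > 0} + p_2 1_{r(x) \leq 0}$ and $p_1 \Phi(-r(x)) + p_2 \Phi(r(x))$ appearing in the lemma, once $p_2$ is identified with the conditional probability $\sfp(+1 \mid x)$. Since the assumed $\sR$-consistency bound holds for \emph{any} distribution, I am free to specialize it to a distribution supported on $x$.

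First I would fix $x \in \sX$ and $p_1, p_2 \in [0, 1]$ with $p_1 + p_2 = 1$, and construct a distribution $\sD_x$ over $\sX \times \curl*{-1, +1}$ whose first marginal is the point mass at $x$, with conditional probabilities $\sfp(+1 \mid x) = p_2$ and $\sfp(-1 \mid x) = p_1$; the hypothesis $p_1 + p_2 = 1$ with $p_1, p_2 \in [0,1]$ is precisely what makes $\sD_x$ well-defined. Under $\sD_x$, the expected loss of any function reduces to its conditional risk at $x$, so $\sE_{\ell_{0-1}^{\rm{binary}}}(r) = \sC_{\ell_{0-1}^{\rm{binary}}}(r, x)$ and $\sE_{\Phi}(r) = \sC_{\Phi}(r, x)$ for every $r$. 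Using $\sign(r(x)) = +1$ exactly when $r(x) > 0$, a direct two-case analysis gives $\sC_{\ell_{0-1}^{\rm{binary}}}(r, x) = p_1 1_{r(x) > 0} + p_2 1_{r(x) \leq 0}$, and since the margin-based loss is $\Phi(y\, r(x))$, one obtains $\sC_{\Phi}(r, x) = p_1 \Phi(-r(x)) + p_2 \Phi(r(x))$.

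Second I would observe that the minimizability gaps vanish for $\sD_x$. Indeed, since $X = x$ almost surely, the integrand $\inf_{r \in \sR} \sC_{\ell}(r, X)$ is constant, so $\E_X\bracket*{\inf_{r \in \sR} \sC_{\ell}(r, X)} = \inf_{r \in \sR} \sC_{\ell}(r, x)$, which equals $\sE^*_{\ell}(\sR) = \inf_{r \in \sR} \sE_{\ell}(r)$ for both $\ell = \ell_{0-1}^{\rm{binary}}$ and $\ell = \Phi$; hence $\sM_{\ell_{0-1}^{\rm{binary}}}(\sR) = \sM_{\Phi}(\sR) = 0$. Applying the assumed $\sR$-consistency bound to the fixed $r$ from the statement under $\sD_x$ then makes both minimizability-gap terms cancel, and the infima over $\sR$ in the lemma arise exactly as $\sE^*_{\ell_{0-1}^{\rm{binary}}}(\sR)$ and $\sE^*_{\Phi}(\sR)$, yielding precisely the desired inequality.

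The only points requiring care — and the main (mild) obstacle — are verifying that the minimizability gaps are identically zero for the point-mass construction and that the risk identities hold across both sign cases of $r(x)$; once these are established, no concavity or other structural property of $\Gamma$ is needed, since the argument is entirely pointwise and the global bound is simply read off at a single atom.
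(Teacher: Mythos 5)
Your proposal is correct and follows essentially the same route as the paper's own proof: both instantiate the assumed $\sR$-consistency bound on a point-mass distribution at $x$ with conditional probabilities $(p_1, p_2)$ assigned to the labels $-1$ and $+1$, so that the expected losses collapse to the stated conditional risks and the bound is read off pointwise. Your extra remark that the minimizability gaps vanish for the atomic distribution is an equivalent way of seeing that $\sE_{\ell}(r) - \sE^*_{\ell}(\sR) + \sM_{\ell}(\sR)$ reduces to the conditional regret at $x$, which is exactly the identity the paper uses.
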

\begin{proof}
For any $x \in \sX$, consider a distribution $\delta_{x}$ that concentrates on that point. Let $p_1 = \mathbb{P}(y = - 1 \mid x)$ and $p_2 = \mathbb{P}(y = + 1 \mid x)$. Then, by definition, $\sE_{\ell_{0-1}}(r) - \sE^*_{\ell_{0-1}}(\sR) + \sM_{\ell_{0-1}}(\sR)$ can be expressed as 
\begin{equation*}
\sE_{\ell_{0-1}}(r) - \sE^*_{\ell_{0-1}}(\sR) + \sM_{\ell_{0-1}}(\sR) = p_1 1_{r(x) > 0} + p_2 1_{r(x) \leq 0} - \inf_{r \in \sR} \paren*{p_1 1_{r(x) > 0} + 1_{r(x) \leq 0}}.
\end{equation*}
Similarly, $\sE_{\Phi}(r) - \sE^*_{\Phi}(\sR) + \sM_{\Phi}(\sR)$ can be expressed as
\begin{equation*}
\sE_{\Phi}(r) - \sE^*_{\Phi}(\sR) + \sM_{\Phi}(\sR) = p_1 \Phi(-r(x)) + p_2 \Phi(r(x)) - \inf_{r \in \sR} \paren*{p_1 \Phi(-r(x)) + p_2 \Phi(r(x))}.
\end{equation*}
Since the $\sR$-consistency bound holds by the assumption, we complete the proof.
\end{proof}

\section{Proofs of main theorems}

\subsection{Proof of negative result for single-stage surrogates (Theorem~\ref{Thm:negative-bound})}
\label{app:general-negativ}
\NegativeBound*
\begin{proof}
We prove by contradiction. Assume that the bound holds with some non-decreasing function $\Gamma$ with
$\lim_{t\to 0^{+}}\Gamma(t) = 0$, then, for all
$h\in \sH$, $r\in \sR$, and any distribution, $\sE_{\sfL}(h,
r)-\sE_{\sfL}^*(\sH, \sR) +\sM_{\sfL}(\sH, \sR) \to 0 \implies
\sE_{\labs}(h, r) - \sE_{\labs}^*(\sH, \sR) + \sM_{\labs}(\sH, \sR)
\to 0$. This further implies that for any $x\in \sX$, the minimizer
$h^*$ and $r^*$ of $\sC_{\sfL}(h, r, x)$ within $\sH$ and $\sR$ also
achieves the minimum of $\sC_{\labs}(h, r, x)$ within $\sH$ and
$\sR$. When $\sH$ is symmetric and complete, we have
$\mathsf{H}(x)=\sY$. Since $\sR$ is complete, $\sR$ is regular for
abstention. By Lemma~\ref{lemma:calibration_gap_general}, $h^*$ and
$r^*$ need to satisfy the following conditions:
\begin{equation}
\label{eq:star-general-cond}
\sfp(\hh^*(x)\!\mid\! x)
= \max_{y\in \sY} \sfp(y \!\mid\! x),\quad \sign(r^*(x))
= \sign\paren*{\max_{y\in \sY}\sfp(y \!\mid\! x) - (1 - c)}.
\end{equation}
Next, we will show that \eqref{eq:star-general-cond} contradicts the
assumption that there exists $x\in \sX$ such that \[\inf_{h \in \sH}
\E_y\bracket*{\ell(h,X, y) \mid X =x}\neq \frac{\beta\Psi\paren*{1 -
    \max_{y\in \sY}\sfp(y \!\mid\! x)}}{\alpha}.\]  By definition, the conditional
$\sfL$-risk can be expressed as follows:
\begin{equation*}
\sC_{\sfL}(h, r, x)  =  \exp(\alpha r(x))\mathbb{E}_y
  \bracket*{\ell(h,X, y) \mid X = x} + \Psi(c)\exp(-\beta r(x) ).
\end{equation*}
Then, for any fixed $r\in \sR$, $\alpha>0$ and $\beta>0$, we have
\begin{equation*}
  \inf_{h\in \sH}\sC_{\sfL}(h, r, x)
  = \exp(\alpha r(x))\inf_{h \in \sH} \mathbb{E}_y
\bracket*{\ell(h,X, y) \mid X = x}
+ \Psi(c)\exp(-\beta r(x)):=\sF(r(x)).
\end{equation*}
By taking the derivative, we obtain
\begin{equation*}
\sF'(r(x)) = \alpha \exp(\alpha r(x))\inf_{h \in \sH} \mathbb{E}_y
  \bracket*{\ell(h,X, y) \mid X = x}  - \beta \Psi(c) \exp(-\beta r(x) ).
\end{equation*}
Since $\sF(r(x))$ is convex with respect to $r(x)$, we know that
$\sF'(r(x))$ is non-decreasing with respect to $r(x)$. The equation \eqref{eq:star-general-cond} implies that
 $\sF(r(x))$ is attained at $r^*(x)$ such that
$\sign(r^*(x)) = \sign\paren*{\max_{y\in \sY}\sfp(y \!\mid\! x) - (1 - c)}$. Thus, we have
\begin{align*}
& \max_{y\in \sY}\sfp(y \!\mid\! x)\geq (1 - c) \implies r^*(x)\geq 0 \implies \sF'(0)\leq \sF'(r^*(x)) =0\\
& \max_{y\in \sY}\sfp(y \!\mid\! x)< (1 - c) \implies r^*(x)< 0 \implies \sF'(0)\geq \sF'(r^*(x))=0.
\end{align*}
This implies that
\begin{align*}
\alpha\inf_{h \in \sH} \mathbb{E}_y
  \bracket*{\ell(h,X, y) \mid X = x}  - \beta\Psi(c)\leq 0 \text{ whenever } \max_{y\in \sY}\sfp(y \!\mid\! x)\geq (1 - c)\\
\alpha\inf_{h \in \sH} \mathbb{E}_y
  \bracket*{\ell(h,X, y) \mid X = x}  - \beta\Psi(c) \geq 0 \text{ whenever } \max_{y\in \sY}\sfp(y \!\mid\! x)\leq (1 - c),
\end{align*} 
which leads to
\begin{align}
\label{eq:general-con}
\alpha\inf_{h \in \sH} \mathbb{E}_y
  \bracket*{\ell(h,X, y) \mid X = x}  - \beta\Psi(c) = 0 \text{ whenever } \max_{y\in \sY}\sfp(y \!\mid\! x) = (1 - c).
\end{align}
It is clear that \eqref{eq:general-con} contradicts the assumption that $\exists \,x\in \sX$ such that \[\inf_{h \in \sH} \E_y\bracket*{\ell(h,X, y) \mid X =x}\neq \frac{\beta\Psi\paren*{1 - \max_{y\in \sY}\sfp(y \!\mid\! x)}}{\alpha}.\] 
\end{proof}

\subsection{Proof of \texorpdfstring{$(\sH, \sR)$}{HR}-consistency
  bounds for single-stage surrogates
  (Theorem~\ref{Thm:spcific-loss-bound})}
\label{app:general-positive-single-stage}

\SpecificLossBound*
\begin{proof}
When $\sH$ is symmetric and complete, $\mathsf H(x)=\sY$. Since $\sR$
is complete, $\sR$ is regular for abstention. By
Lemma~\ref{lemma:calibration_gap_general},
\begin{equation}
\label{eq:calibration_gap_general}
\begin{aligned}
\sC^*_{\labs}(\sH,\sR,x)  & = 1 - \max\curl*{\max_{y\in
    \sY}\sfp(y \!\mid\! x),1 - c},\\
 \Delta\sC_{\labs,\sH, \sR}(h, r, x) & =
\begin{cases}
\max\curl*{\max_{y\in \sY} \sfp(y \!\mid\! x),1 - c} - \sfp(\hh(x) \!\mid\! x) &  r(x)>0\\
\max\curl*{\max_{y\in \sY} \sfp(y \!\mid\! x)-1+c,0} & r(x)\leq 0.
\end{cases}
\end{aligned}
\end{equation}
Then, the idea of proof for each loss $\ell$ is similar, which
consists of the analysis in four cases depending on the sign of
$\max_{y\in \sY}\sfp(y \!\mid\! x)-(1-c)$ and the sign of $r(x)$, as shown below.

\textbf{Mean absolute error loss: $\ell= \ell_{\rm{mae}}$.}
\ignore{By Lemma~\ref{lemma:calibration_gap_general}, $\sC_{\labs}(h, r, x) =
\begin{cases}
1-\sfp(\hh(x) \!\mid\! x) & r(x)\leq 0\\
c & r(x)> 0
\end{cases}
$ and $\sC^*_{\labs}(\sH, \sR, x) =1 - \max\curl*{\max_{y\in
    \sY}\sfp(y \!\mid\! x),1 - c}$.} When $\ell= \ell_{\rm{mae}}$ with $\Psi(t)=t$, let $s_h(x, y) = \frac{e^{h(x, y)}}{\sum_{y'\in
    \sY}e^{h(x, y')}}$, by
the assumption that $\sH$ is symmetric and complete and $\sR$ is complete, we obtain $\inf_{h \in \sH} \E_y\bracket*{\ell_{\rm{mae}}(h,X, y) \mid X =x} =1 - \max_{y\in \sY}\sfp(y \!\mid\! x)$ and
\begin{align*}
\sC_{\sfL}(h, r, x) &= \sum_{y\in \sY}\sfp(y \!\mid\! x)(1-s_h(x, y))e^{\alpha r(x)} + c e^{-\alpha r(x)}\\
\sC^*_{\sfL}(\sH, \sR, x) & =2\sqrt{c\paren*{1 - \max_{y\in \sY}\sfp(y \!\mid\! x)}}.
\end{align*}
Note that for any $h\in \sH$,
\begin{align*}
& \sum_{y\in \sY}\sfp(y \!\mid\! x)(1-s_h(x, y)) - \paren*{1 - \max_{y\in \sY}\sfp(y \!\mid\! x)}\\
& = \max_{y\in \sY}\sfp(y \!\mid\! x) - \sum_{y\in \sY}\sfp(y \!\mid\! x)s_h(x, y)\\
& \geq \max_{y\in \sY}\sfp(y \!\mid\! x) - \sfp(\hh(x) \!\mid\! x)s_h(x, \hh(x))- \max_{y\in \sY}\sfp(y \!\mid\! x)\paren*{1-s_h(x, \hh(x))}
\tag{$\sfp(y \!\mid\! x)\leq \max_{y\in \sY}\sfp(y \!\mid\! x)$, $\forall y\neq \hh(x)$}\\
& = s_h(x, \hh(x))\paren*{\max_{y\in \sY}\sfp(y \!\mid\! x)-\sfp(\hh(x) \!\mid\! x)}\\
& \geq \frac{1}{n}\paren*{\max_{y\in \sY}\sfp(y \!\mid\! x)-\sfp(\hh(x) \!\mid\! x)}
\tag{$s_h(x, \hh(x)) = \max_{y\in \sY}s_h(x, y)\geq \frac{1}{n}$}.
\end{align*}
We will then analyze the following four cases.
     \paragraph{(i)} $\max_{y\in \sY}\sfp(y \!\mid\! x) > (1 - c)$ and $r(x)>0$. In this case, by \eqref{eq:calibration_gap_general},
      we have $\sC^*_{\labs}(\sH, \sR, x) =1 - \max_{y\in
        \sY}\sfp(y \!\mid\! x)$ and $\Delta\sC_{\labs,\sH,
        \sR}(h, r, x) = \max_{y\in \sY}\sfp(y \!\mid\! x)-p\paren*{x, \hh(x)}$. For
      the surrogate loss, we have
    \begin{align*}
    & \Delta\sC_{\sfL,\sH, \sR}(h, r, x)\\
    & = \sum_{y\in \sY}\sfp(y \!\mid\! x)(1-s_h(x, y))e^{\alpha r(x)} + c e^{-\alpha r(x)} - 2\sqrt{c\paren*{1 - \max_{y\in \sY}\sfp(y \!\mid\! x)}}\\
    & \geq \sum_{y\in \sY}\sfp(y \!\mid\! x)(1-s_h(x, y))e^{\alpha r(x)} + c e^{-\alpha r(x)} - \paren*{1 - \max_{y\in \sY}\sfp(y \!\mid\! x)}e^{\alpha r(x)} - c e^{-\alpha r(x)}
    \tag{AM–GM inequality}\\
    & \geq \sum_{y\in \sY}\sfp(y \!\mid\! x)(1-s_h(x, y)) - \paren*{1 - \max_{y\in \sY}\sfp(y \!\mid\! x)}
    \tag{$r(x)>0$}\\
    & \geq \frac{1}{n}\paren*{\max_{y\in \sY}\sfp(y \!\mid\! x)-\sfp(\hh(x) \!\mid\! x)}\\
    \tag{$\sum_{y\in \sY}\sfp(y \!\mid\! x)(1-s_h(x, y)) - \paren*{1 - \max_{y\in \sY}\sfp(y \!\mid\! x)}\geq \frac{1}{n}\paren*{\max_{y\in \sY}\sfp(y \!\mid\! x)-\sfp(\hh(x) \!\mid\! x)}$}\\
    & = \frac{1}{n}\Delta\sC_{\labs,\sH, \sR}(h, r, x).
    \end{align*}
     Therefore, 
    \begin{align*}
\sE_{\labs}(h, r) - \sE_{\labs}^*(\sH, \sR) + \sM_{\labs}(\sH, \sR) & = \mathbb{E}_{X}\bracket*{\Delta\sC_{\labs,\sH, \sR}(h, r, x)}\\
& \leq \mathbb{E}_{X}\bracket*{\Gamma_1\paren*{\Delta\sC_{\sfL,\sH, \sR}(h, r, x)}}\\
& \leq \Gamma_1\paren*{\mathbb{E}_{X}\bracket*{\Delta\sC_{\sfL,\sH, \sR}(h, r, x)}}
\tag{$\Gamma_1$ is concave}\\
& = \Gamma_1\paren*{\sE_{\sfL}(h, r)-\sE_{\sfL}^*(\sH, \sR) +\sM_{\sfL}(\sH, \sR)}
\end{align*}
where $\Gamma_1(t)= n\,t$.
  \paragraph{(ii)} $\max_{y\in \sY}\sfp(y \!\mid\! x) \leq (1 - c)$ and $r(x)>0$. In this
      case, by \eqref{eq:calibration_gap_general}, we have $\sC^*_{\labs}(\sH, \sR, x) =c$ and
      $\Delta\sC_{\labs,\sH, \sR}(h, r, x) =1-c-p\paren*{x,
        \hh(x)}$. For the surrogate loss, we have
    \begin{align*}
    & \Delta\sC_{\sfL,\sH, \sR}(h, r, x) \\
    & = \sum_{y\in \sY}\sfp(y \!\mid\! x)(1-s_h(x, y))e^{\alpha r(x)} + c e^{-\alpha r(x)} - 2\sqrt{c\paren*{1 - \max_{y\in \sY}\sfp(y \!\mid\! x)}}\\
    & \geq \sum_{y\in \sY}\sfp(y \!\mid\! x)(1-s_h(x, y))e^{\alpha r(x)} + c e^{-\alpha r(x)} - 2\sqrt{c\paren*{\sum_{y\in \sY}\sfp(y \!\mid\! x)(1-s_h(x, y))}}
   \tag{$\sum_{y\in \sY}\sfp(y \!\mid\! x)(1-s_h(x, y))\geq 1 - \max_{y\in \sY}\sfp(y \!\mid\! x)$}\\
    & \geq \sum_{y\in \sY}\sfp(y \!\mid\! x)(1-s_h(x, y)) + c - 2\sqrt{c\paren*{\sum_{y\in \sY}\sfp(y \!\mid\! x)(1-s_h(x, y))}}
    \tag{increasing for $r(x)\geq 0$}\\
    & = \paren*{\sqrt{\sum_{y\in \sY}\sfp(y \!\mid\! x)(1-s_h(x, y))}-\sqrt{c}}^2\\
    & = \paren*{\frac{\sum_{y\in \sY}\sfp(y \!\mid\! x)(1-s_h(x, y))-c}{\sqrt{\sum_{y\in \sY}\sfp(y \!\mid\! x)(1-s_h(x, y))}+\sqrt{c}}}^2\\
    & \geq  \paren*{\frac{\sum_{y\in \sY}\sfp(y \!\mid\! x)(1-s_h(x, y))-\paren*{1 - \max_{y\in \sY}\sfp(y \!\mid\! x)}+\paren*{1 - \max_{y\in \sY}\sfp(y \!\mid\! x)-c}}{2}}^2
    \tag{$\sqrt{\sum_{y\in \sY}\sfp(y \!\mid\! x)(1-s_h(x, y))}+\sqrt{c}\leq 2$}\\
    & \geq  \paren*{\frac{\frac{1}{n}\paren*{\max_{y\in \sY}\sfp(y \!\mid\! x)-\sfp(\hh(x) \!\mid\! x)}+\frac{1}{n}\paren*{1 - \max_{y\in \sY}\sfp(y \!\mid\! x)-c}}{2}}^2
    \tag{$\sum_{y\in \sY}\sfp(y \!\mid\! x)(1-s_h(x, y)) - \paren*{1 - \max_{y\in \sY}\sfp(y \!\mid\! x)}
    \geq \frac{1}{n}\paren*{\max_{y\in \sY}\sfp(y \!\mid\! x)-\sfp(\hh(x) \!\mid\! x)}$}\\
    & = \frac{1}{4n^2}\paren*{1-c-p\paren*{x,\hh(x)}}^2\\
    & = \frac{\Delta\sC_{\labs,\sH, \sR}(h, r, x)^2}{4n^2}
    \end{align*}
    Therefore, 
    \begin{align*}
\sE_{\labs}(h, r) - \sE_{\labs}^*(\sH, \sR) + \sM_{\labs}(\sH, \sR) & = \mathbb{E}_{X}\bracket*{\Delta\sC_{\labs,\sH, \sR}(h, r, x)}\\
& \leq \mathbb{E}_{X}\bracket*{\Gamma_2\paren*{\Delta\sC_{\sfL,\sH, \sR}(h, r, x)}}\\
& \leq \Gamma_2\paren*{\mathbb{E}_{X}\bracket*{\Delta\sC_{\sfL,\sH, \sR}(h, r, x)}}
\tag{$\Gamma_2$ is concave}\\
& = \Gamma_2\paren*{\sE_{\sfL}(h, r)-\sE_{\sfL}^*(\sH, \sR) +\sM_{\sfL}(\sH, \sR)}
\end{align*}
where $\Gamma_2(t)=2n\sqrt{t}$.
  \paragraph{(iii)} $\max_{y\in \sY}\sfp(y \!\mid\! x) \leq  (1 - c)$ and $r(x)\leq 0$. In this case, by \eqref{eq:calibration_gap_general}, we have $\sC^*_{\labs}(\sH, \sR, x) =c$ and $\Delta\sC_{\labs,\sH, \sR}(h, r, x)=0$, which implies that 
  $
  \sE_{\labs}(h, r) - \sE_{\labs}^*(\sH, \sR) + \sM_{\labs}(\sH, \sR)\\
  = \mathbb{E}_{X}\bracket*{\Delta\sC_{\labs,\sH, \sR}(h, r, x)}
   = 0
  \leq \Gamma\paren*{\sE_{\sfL}(h, r)-\sE_{\sfL}^*(\sH, \sR) +\sM_{\sfL}(\sH, \sR)}    
  $
  for any $\Gamma\geq 0$.
  \paragraph{(iv)} $\max_{y\in \sY}\sfp(y \!\mid\! x) > (1 - c)$ and $r(x)\leq 0$. In this
      case, by \eqref{eq:calibration_gap_general}, we have $\sC^*_{\labs}(\sH, \sR, x) =1 - \max_{y\in
        \sY}\sfp(y \!\mid\! x)$ and $\Delta\sC_{\labs,\sH,
        \sR}(h, r, x) =\max_{y\in \sY} \sfp(y \!\mid\! x)-1+c$. For the
      surrogate loss, we have
    \begin{align*}
    &\Delta\sC_{\sfL,\sH, \sR}(h, r, x)\\
    & = \sum_{y\in \sY}\sfp(y \!\mid\! x)(1-s_h(x, y))e^{\alpha r(x)} + c e^{-\alpha r(x)} - 2\sqrt{c\paren*{1 - \max_{y\in \sY}\sfp(y \!\mid\! x)}}\\
    & \geq \paren*{1 - \max_{y\in \sY}\sfp(y \!\mid\! x)}e^{\alpha r(x)} + c e^{-\alpha r(x)} - 2\sqrt{c\paren*{1 - \max_{y\in \sY}\sfp(y \!\mid\! x)}}
    \tag{$\sum_{y\in \sY}\sfp(y \!\mid\! x)(1-s_h(x, y))\geq 1 - \max_{y\in \sY}\sfp(y \!\mid\! x)$}\\
    & \geq 1 - \max_{y\in \sY}\sfp(y \!\mid\! x)+c - 2\sqrt{c\paren*{1 - \max_{y\in \sY}\sfp(y \!\mid\! x)}}
    \tag{decreasing for $r(x)\leq 0$}\\
    & = \paren*{\sqrt{1 - \max_{y\in \sY}\sfp(y \!\mid\! x)}-\sqrt{c}}^2\\
    & = \paren*{\frac{1 - \max_{y\in \sY}\sfp(y \!\mid\! x)-c}{\sqrt{1 - \max_{y\in \sY}\sfp(y \!\mid\! x)}+\sqrt{c}}}^2\\
    & \geq \paren*{\frac{\max_{y\in \sY} \sfp(y \!\mid\! x)-1+c}{2}}^2
    \tag{$\sqrt{1 - \max_{y\in \sY}\sfp(y \!\mid\! x)}+\sqrt{c}\leq 2$}\\
    & = \frac{\Delta\sC_{\labs,\sH, \sR}(h, r, x)^2}{4}.
    \end{align*}
     Therefore, 
    \begin{align*}
      \sE_{\labs}(h, r) - \sE_{\labs}^*(\sH, \sR) + \sM_{\labs}(\sH, \sR)
      & = \mathbb{E}_{X}\bracket*{\Delta\sC_{\labs,\sH, \sR}(h, r, x)}\\
      & \leq \mathbb{E}_{X}\bracket*{\Gamma_3\paren*{\Delta\sC_{\sfL,\sH, \sR}(h, r, x)}}\\
      & \leq \Gamma_3\paren*{\mathbb{E}_{X}\bracket*{\Delta\sC_{\sfL,\sH, \sR}(h, r, x)}}
      \tag{$\Gamma_3$ is concave}\\
& = \Gamma_3\paren*{\sE_{\sfL}(h, r)-\sE_{\sfL}^*(\sH, \sR) +\sM_{\sfL}(\sH, \sR)}
\end{align*}
where $\Gamma_3(t)=2\sqrt{t}$.

Overall, we obtain
\begin{align*}
\sE_{\labs}(h, r) - \sE_{\labs}^*(\sH, \sR) + \sM_{\labs}(\sH, \sR) \leq \Gamma\paren*{\sE_{\sfL}(h, r)-\sE_{\sfL}^*(\sH, \sR) +\sM_{\sfL}(\sH, \sR)}
\end{align*}
where $\Gamma(t)=\max\curl*{\Gamma_1(t),\Gamma_2(t),\Gamma_3(t)}= \max\curl*{2n\sqrt{t},n\,t}$, which completes the proof.

\textbf{$\rho$-Margin loss: $\ell= \ell_{\rho}$.}
When $\ell= \ell_{\rho}$ with $\Psi(t)=t$, by
the assumption that $\sH$ is symmetric and complete and $\sR$ is complete, we obtain\ignore{$\sC_{\ell_{\rho}}(h, x) = \sum_{y\in \sY}\sfp(y \!\mid\! x)\min\curl*{\max\curl*{0,1 - \frac{\rho_h(x,y)}{\rho}},1}
=1- \min\curl*{1,\frac{\rho_h(x, \hh(x))}{\rho}}\,\sfp(\hh(x) \!\mid\! x)$}
$
\inf_{h \in \sH} \E_y\bracket*{\ell_{\rho}(h,X, y) \mid X =x}  =1 - \max_{y\in \sY}\sfp(y \!\mid\! x)$ and
\begin{align*}
\sC_{\sfL}(h, r, x) &= \sum_{y\in \sY}\sfp(y \!\mid\! x)\min\curl*{\max\curl*{0,1 - \frac{\rho_h(x,y)}{\rho}},1}e^{\alpha r(x)} + c e^{-\alpha r(x)}\\
&=\paren*{1- \min\curl*{1,\frac{\rho_h(x, \hh(x))}{\rho}}\,\sfp(\hh(x) \!\mid\! x)}e^{\alpha r(x)} + c e^{-\alpha r(x)}\\
\sC^*_{\sfL}(\sH, \sR, x) & =2\sqrt{c\paren*{1 - \max_{y\in \sY}\sfp(y \!\mid\! x)}}.
\end{align*}
where $\rho_h(x,y) = h(x, y) - \max_{y' \neq y} h(x, y')$ is the margin. Note that for any $h\in \sH$,
\begin{align*}
& 1- \min\curl*{1,\frac{\rho_h(x, \hh(x))}{\rho}}\,\sfp(\hh(x) \!\mid\! x) - \paren*{1 - \max_{y\in \sY}\sfp(y \!\mid\! x)}\\
& = \max_{y\in \sY}\sfp(y \!\mid\! x) - \min\curl*{1,\frac{\rho_h(x, \hh(x))}{\rho}}\,\sfp(\hh(x) \!\mid\! x)\\
& \geq \max_{y\in \sY}\sfp(y \!\mid\! x)-\sfp(\hh(x) \!\mid\! x)
\tag{$\min\curl*{1,\frac{\rho_h(x, \hh(x))}{\rho}}\leq 1$}.
\end{align*}
We will then analyze the following four cases.
\paragraph{(i)} $\max_{y\in \sY}\sfp(y \!\mid\! x) \leq (1 - c)$ and $r(x)>0$. In this
      case, by \eqref{eq:calibration_gap_general}, we have $\sC^*_{\labs}(\sH, \sR, x) =c$ and
      $\Delta\sC_{\labs,\sH, \sR}(h, r, x) =1-c-p\paren*{x,
        \hh(x)}$. For the surrogate loss, we have
    \begin{align*}
    &\Delta\sC_{\sfL,\sH, \sR}(h, r, x)\\ 
    & = \paren*{1- \min\curl*{1,\frac{\rho_h(x, \hh(x))}{\rho}}\,\sfp(\hh(x) \!\mid\! x)}e^{\alpha r(x)} + c e^{-\alpha r(x)} - 2\sqrt{c\paren*{1 - \max_{y\in \sY}\sfp(y \!\mid\! x)}}\\
    & \geq \paren*{1- \min\curl*{1,\frac{\rho_h(x, \hh(x))}{\rho}}\,\sfp(\hh(x) \!\mid\! x)}e^{\alpha r(x)} + c e^{-\alpha r(x)}\\
    & \qquad - 2\sqrt{c\paren*{1- \min\curl*{1,\frac{\rho_h(x, \hh(x))}{\rho}}\,\sfp(\hh(x) \!\mid\! x)}}
   \tag{$1- \min\curl*{1,\frac{\rho_h(x, \hh(x))}{\rho}}\,\sfp(\hh(x) \!\mid\! x)\geq 1 - \max_{y\in \sY}\sfp(y \!\mid\! x)$}\\
    & \geq 1- \min\curl*{1,\frac{\rho_h(x, \hh(x))}{\rho}}\,\sfp(\hh(x) \!\mid\! x) + c -  2\sqrt{c\paren*{1- \min\curl*{1,\frac{\rho_h(x, \hh(x))}{\rho}}\,\sfp(\hh(x) \!\mid\! x)}}
    \tag{increasing for $r(x)\geq 0$}\\
    & = \paren*{\frac{1- \min\curl*{1,\frac{\rho_h(x, \hh(x))}{\rho}}\,\sfp(\hh(x) \!\mid\! x)-c}{\sqrt{1- \min\curl*{1,\frac{\rho_h(x, \hh(x))}{\rho}}\,\sfp(\hh(x) \!\mid\! x)}+\sqrt{c}}}^2\\
    & \geq  \paren*{\frac{\1- \min\curl*{1,\frac{\rho_h(x, \hh(x))}{\rho}}\,\sfp(\hh(x) \!\mid\! x)-\paren*{1 - \max_{y\in \sY}\sfp(y \!\mid\! x)}+\paren*{1 - \max_{y\in \sY}\sfp(y \!\mid\! x)-c}}{2}}^2
    \tag{$\sqrt{1- \min\curl*{1,\frac{\rho_h(x, \hh(x))}{\rho}}\,\sfp(\hh(x) \!\mid\! x)}+\sqrt{c}\leq 2$}\\
    & \geq  \paren*{\frac{\max_{y\in \sY}\sfp(y \!\mid\! x)-\sfp(\hh(x) \!\mid\! x)+\paren*{1 - \max_{y\in \sY}\sfp(y \!\mid\! x)-c}}{2}}^2
    \tag{$1- \min\curl*{1,\frac{\rho_h(x, \hh(x))}{\rho}}\,\sfp(\hh(x) \!\mid\! x) - \paren*{1 - \max_{y\in \sY}\sfp(y \!\mid\! x)}\geq \max_{y\in \sY}\sfp(y \!\mid\! x)-\sfp(\hh(x) \!\mid\! x)$}\\
    & = \frac{1}{4}\paren*{1-c-p\paren*{x,\hh(x)}}^2\\
    & = \frac{\Delta\sC_{\labs,\sH, \sR}(h, r, x)^2}{4}
    \end{align*}
    Therefore, 
    \begin{align*}
\sE_{\labs}(h, r) - \sE_{\labs}^*(\sH, \sR) + \sM_{\labs}(\sH, \sR) & = \mathbb{E}_{X}\bracket*{\Delta\sC_{\labs,\sH, \sR}(h, r, x)}\\
& \leq \mathbb{E}_{X}\bracket*{\Gamma_2\paren*{\Delta\sC_{\sfL,\sH, \sR}(h, r, x)}}\\
& \leq \Gamma_2\paren*{\mathbb{E}_{X}\bracket*{\Delta\sC_{\sfL,\sH, \sR}(h, r, x)}}
\tag{$\Gamma_2$ is concave}\\
& = \Gamma_2\paren*{\sE_{\sfL}(h, r)-\sE_{\sfL}^*(\sH, \sR) +\sM_{\sfL}(\sH, \sR)}
\end{align*}
where $\Gamma_2(t)=2\sqrt{t}$.

\paragraph{(ii)} $\max_{y\in \sY}\sfp(y \!\mid\! x) > (1 - c)$ and $r(x)>0$. In this case, by \eqref{eq:calibration_gap_general},
      we have $\sC^*_{\labs}(\sH, \sR, x) =1 - \max_{y\in
        \sY}\sfp(y \!\mid\! x)$ and $\Delta\sC_{\labs,\sH,
        \sR}(h, r, x) = \max_{y\in \sY}\sfp(y \!\mid\! x)-p\paren*{x, \hh(x)}$. For
      the surrogate loss, we have
    \begin{align*}
    &\Delta\sC_{\sfL,\sH, \sR}(h, r, x)\\ 
    & = \paren*{1- \min\curl*{1,\frac{\rho_h(x, \hh(x))}{\rho}}\,\sfp(\hh(x) \!\mid\! x)}e^{\alpha r(x)} + c e^{-\alpha r(x)} - 2\sqrt{c\paren*{1 - \max_{y\in \sY}\sfp(y \!\mid\! x)}}\\
    & \geq \paren*{1- \min\curl*{1,\frac{\rho_h(x, \hh(x))}{\rho}}\,\sfp(\hh(x) \!\mid\! x)}e^{\alpha r(x)} + c e^{-\alpha r(x)} - \paren*{1 - \max_{y\in \sY}\sfp(y \!\mid\! x)}e^{\alpha r(x)} - c e^{-\alpha r(x)}
    \tag{AM–GM inequality}\\
    & \geq 1- \min\curl*{1,\frac{\rho_h(x, \hh(x))}{\rho}}\,\sfp(\hh(x) \!\mid\! x) - \paren*{1 - \max_{y\in \sY}\sfp(y \!\mid\! x)}
    \tag{$r(x)>0$}\\
    & \geq \max_{y\in \sY}\sfp(y \!\mid\! x)-\sfp(\hh(x) \!\mid\! x)\\
    \tag{$1- \min\curl*{1,\frac{\rho_h(x, \hh(x))}{\rho}}\,\sfp(\hh(x) \!\mid\! x)- \paren*{1 - \max_{y\in \sY}\sfp(y \!\mid\! x)}\geq \max_{y\in \sY}\sfp(y \!\mid\! x)-\sfp(\hh(x) \!\mid\! x)$}\\
    & = \Delta\sC_{\labs,\sH, \sR}(h, r, x).
    \end{align*}
     Thus, $\sE_{\labs}(h, r) - \sE_{\labs}^*(\sH, \sR) + \sM_{\labs}(\sH, \sR)  = \mathbb{E}_{X}\bracket*{\Delta\sC_{\labs,\sH, \sR}(h, r, x)}
\leq \mathbb{E}_{X}\bracket*{\Gamma_1\paren*{\Delta\sC_{\sfL,\sH, \sR}(h, r, x)}}\\
 \leq \Gamma_1\paren*{\mathbb{E}_{X}\bracket*{\Delta\sC_{\sfL,\sH, \sR}(h, r, x)}}
= \Gamma_1\paren[big]{\sE_{\sfL}(h, r)-\sE_{\sfL}^*(\sH, \sR) +\sM_{\sfL}(\sH, \sR)}$, where $\Gamma_1(t)= t$ is concave.
\ignore{\begin{align*}
\sE_{\labs}(h, r) - \sE_{\labs}^*(\sH, \sR) + \sM_{\labs}(\sH, \sR) & = \mathbb{E}_{X}\bracket*{\Delta\sC_{\labs,\sH, \sR}(h, r, x)}\\
& \leq \mathbb{E}_{X}\bracket*{\Gamma_1\paren*{\Delta\sC_{\sfL,\sH, \sR}(h, r, x)}}\\
& \leq \Gamma_1\paren*{\mathbb{E}_{X}\bracket*{\Delta\sC_{\sfL,\sH, \sR}(h, r, x)}}
\tag{$\Gamma_1$ is concave}\\
& = \Gamma_1\paren*{\sE_{\sfL}(h, r)-\sE_{\sfL}^*(\sH, \sR) +\sM_{\sfL}(\sH, \sR)}
\end{align*}}

\paragraph{(iii)} $\max_{y\in \sY}\sfp(y \!\mid\! x) \leq  (1 - c)$ and $r(x)\leq 0$. In this case, by \eqref{eq:calibration_gap_general}, we have $\sC^*_{\labs}(\sH, \sR, x) =c$ and $\Delta\sC_{\labs,\sH, \sR}(h, r, x)=0$, which implies that $\sE_{\labs}(h, r) - \sE_{\labs}^*(\sH, \sR) + \sM_{\labs}(\sH, \sR)\\
= \mathbb{E}_{X}\bracket*{\Delta\sC_{\labs,\sH, \sR}(h, r, x)}=0\leq \Gamma\paren*{\sE_{\sfL}(h, r)-\sE_{\sfL}^*(\sH, \sR) +\sM_{\sfL}(\sH, \sR)}$ for any $\Gamma\geq 0$.
\paragraph{(iv)} $\max_{y\in \sY}\sfp(y \!\mid\! x) > (1 - c)$ and $r(x)\leq 0$. In this
      case, by \eqref{eq:calibration_gap_general}, we have $\sC^*_{\labs}(\sH, \sR, x) =1 - \max_{y\in
        \sY}\sfp(y \!\mid\! x)$ and $\Delta\sC_{\labs,\sH,
        \sR}(h, r, x) =\max_{y\in \sY} \sfp(y \!\mid\! x)-1+c$. For the
      surrogate loss, we have
    \begin{align*}
    &\Delta\sC_{\sfL,\sH, \sR}(h, r, x)\\ 
    & = \paren*{1- \min\curl*{1,\frac{\rho_h(x, \hh(x))}{\rho}}\,\sfp(\hh(x) \!\mid\! x)}e^{\alpha r(x)} + c e^{-\alpha r(x)} - 2\sqrt{c\paren*{1 - \max_{y\in \sY}\sfp(y \!\mid\! x)}}\\
    & \geq \paren*{1 - \max_{y\in \sY}\sfp(y \!\mid\! x)}e^{\alpha r(x)} + c e^{-\alpha r(x)} - 2\sqrt{c\paren*{1 - \max_{y\in \sY}\sfp(y \!\mid\! x)}}
    \tag{$1- \min\curl*{1,\frac{\rho_h(x, \hh(x))}{\rho}}\,\sfp(\hh(x) \!\mid\! x)\geq 1 - \max_{y\in \sY}\sfp(y \!\mid\! x)$}\\
    & \geq 1 - \max_{y\in \sY}\sfp(y \!\mid\! x)+c - 2\sqrt{c\paren*{1 - \max_{y\in \sY}\sfp(y \!\mid\! x)}}
    \tag{decreasing for $r(x)\leq 0$}\\
    & = \paren*{\sqrt{1 - \max_{y\in \sY}\sfp(y \!\mid\! x)}-\sqrt{c}}^2\\
    & = \paren*{\frac{1 - \max_{y\in \sY}\sfp(y \!\mid\! x)-c}{\sqrt{1 - \max_{y\in \sY}\sfp(y \!\mid\! x)}+\sqrt{c}}}^2\\
    & \geq \paren*{\frac{\max_{y\in \sY} \sfp(y \!\mid\! x)-1+c}{2}}^2
    \tag{$\sqrt{1 - \max_{y\in \sY}\sfp(y \!\mid\! x)}+\sqrt{c}\leq 2$}\\
    & = \frac{\Delta\sC_{\labs,\sH, \sR}(h, r, x)^2}{4}.
    \end{align*}
     Therefore, 
    \begin{align*}
      \sE_{\labs}(h, r) - \sE_{\labs}^*(\sH, \sR) + \sM_{\labs}(\sH, \sR)
      & = \mathbb{E}_{X}\bracket*{\Delta\sC_{\labs,\sH, \sR}(h, r, x)}\\
      & \leq \mathbb{E}_{X}\bracket*{\Gamma_3\paren*{\Delta\sC_{\sfL,\sH, \sR}(h, r, x)}}\\
      & \leq \Gamma_3\paren*{\mathbb{E}_{X}\bracket*{\Delta\sC_{\sfL,\sH, \sR}(h, r, x)}}
      \tag{$\Gamma_3$ is concave}\\
& = \Gamma_3\paren*{\sE_{\sfL}(h, r)-\sE_{\sfL}^*(\sH, \sR) +\sM_{\sfL}(\sH, \sR)}
\end{align*}
where $\Gamma_3(t)=2\sqrt{t}$.

Overall, we obtain
\begin{align*}
\sE_{\labs}(h, r) - \sE_{\labs}^*(\sH, \sR) + \sM_{\labs}(\sH, \sR) \leq \Gamma\paren*{\sE_{\sfL}(h, r)-\sE_{\sfL}^*(\sH, \sR) +\sM_{\sfL}(\sH, \sR)}
\end{align*}
where $\Gamma(t)=\max\curl*{\Gamma_1(t),\Gamma_2(t),\Gamma_3(t)}= \max\curl*{2\sqrt{t},t}$, which completes the proof.

\textbf{Constrained $\rho$-hinge loss: $\ell=\ell_{\rho-\mathrm{hinge}}$.}
When $\ell= \ell_{\rm{\rho-\mathrm{hinge}}}$ with $\Psi(t)=nt$, by
the assumption that $\sH$ is symmetric and complete and $\sR$ is complete, we have $\inf_{h \in \sH} \E_y\bracket*{\ell_{\rm{\rho-\mathrm{hinge}}}(h,X, y) \mid X =x}\\ =n\paren*{1 - \max_{y\in \sY}\sfp(y \!\mid\! x)}$ and with the constraint $\sum_{y\in \sY}h(x, y) = 0$,
\begin{align*}
\sC_{\sfL}(h, r, x) & = \sum_{y\in \sY}\sfp(y \!\mid\! x)\sum_{y'\neq y}\max\curl[\big]{0,1 + \frac{h(x, y')}{\rho}}e^{\alpha r(x)} + n c e^{-\alpha r(x)}\\
& = \sum_{y\in \sY} \paren*{1-\sfp(y \!\mid\! x)}\max\curl*{0,1 + \frac{h(x, y)}{\rho}}e^{\alpha r(x)} + n c e^{-\alpha r(x)}\\
\sC^*_{\sfL}(\sH, \sR, x) &=2\sqrt{n^2c\paren*{1 - \max_{y\in \sY}\sfp(y \!\mid\! x)}}.
\end{align*}
Take $h_{\rho}\in \sH$ such that $ h_{\rho}(x,y) = 
\begin{cases}
  h(x, y) & \text{if $y \not \in \curl*{y_{\max}, \hh(x)}$}\\
  -\rho & \text{if $y = \hh(x)$}\\
  h(x, y_{\max})+h(x,\hh(x))+\rho& \text{if $y = y_{\max}$}.
\end{cases}   $
with the constraint $\sum_{y\in \sY}h_{\rho}(x, y) = 0$, where $y_{\max}= \argmax_{y\in \sY}\sfp(y \!\mid\! x)$.
Note that for any $h\in \sH$,
\begin{align*}
& \sum_{y\in \sY} \paren*{1-\sfp(y \!\mid\! x)}\max\curl*{0,1 + \frac{h(x, y)}{\rho}} - n\paren*{1 - \max_{y\in \sY}\sfp(y \!\mid\! x)}\\
& \geq  \sum_{y\in \sY} \paren*{1-\sfp(y \!\mid\! x)}\min\curl*{n,\max\curl*{0,1 + \frac{h(x, y)}{\rho}}} - n\paren*{1 - \max_{y\in \sY}\sfp(y \!\mid\! x)}\\
& \geq \sum_{y\in \sY} \paren*{1-\sfp(y \!\mid\! x)}\min\curl*{n,\max\curl*{0,1 + \frac{h(x, y)}{\rho}}}\\
&\qquad - \sum_{y\in \sY} \paren*{1-\sfp(y \!\mid\! x)}\min\curl*{n,\max\curl*{0,1 + \frac{ h_{\rho}(x, y)}{\rho}}}\\ 
& \geq \min\curl*{n,1+\frac{h(x,\hh(x))}{\rho}}\paren*{\max_{y\in \sY}\sfp(y \!\mid\! x)-\sfp(\hh(x) \!\mid\! x)}
\tag{plug in $h_{\rho}(x,y)$}\\
& \geq \max_{y\in \sY}\sfp(y \!\mid\! x)-\sfp(\hh(x) \!\mid\! x). \tag{$h(x,\hh(x))\geq 0$}
\end{align*}
We will then analyze the following four cases.
    \paragraph{(i)} $\max_{y\in \sY}\sfp(y \!\mid\! x) > (1 - c)$ and $r(x)>0$. In this case, by \eqref{eq:calibration_gap_general},
      we have $\sC^*_{\labs}(\sH, \sR, x) =1 - \max_{y\in
        \sY}\sfp(y \!\mid\! x)$ and $\Delta\sC_{\labs,\sH,
        \sR}(h, r, x) = \max_{y\in \sY}\sfp(y \!\mid\! x)-p\paren*{x, \hh(x)}$. For
      the surrogate loss, we have
    \begin{align*}
    &\Delta\sC_{\sfL,\sH, \sR}(h, r, x)\\ 
    & = \sum_{y\in \sY} \paren*{1-\sfp(y \!\mid\! x)}\max\curl*{0,1 + \frac{h(x, y)}{\rho}}e^{\alpha r(x)} + n c e^{-\alpha r(x)} - 2\sqrt{n^2c\paren*{1 - \max_{y\in \sY}\sfp(y \!\mid\! x)}}\\
    & \geq \sum_{y\in \sY} \paren*{1-\sfp(y \!\mid\! x)}\max\curl*{0,1 + \frac{h(x, y)}{\rho}}e^{\alpha r(x)} + n c e^{-\alpha r(x)} - n\paren*{1 - \max_{y\in \sY}\sfp(y \!\mid\! x)}e^{\alpha r(x)} - n c e^{-\alpha r(x)}
    \tag{AM–GM inequality}\\
    & =\sum_{y\in \sY} \paren*{1-\sfp(y \!\mid\! x)}\max\curl*{0,1 + \frac{h(x, y)}{\rho}} - n\paren*{1 - \max_{y\in \sY}\sfp(y \!\mid\! x)}
    \tag{$r(x)>0$}\\
    & \geq \max_{y\in \sY}\sfp(y \!\mid\! x)-\sfp(\hh(x) \!\mid\! x)\\
    \tag{$\sum_{y\in \sY} \paren*{1-\sfp(y \!\mid\! x)}\max\curl*{0,1 + \frac{h(x, y)}{\rho}} - n\paren*{1 - \max_{y\in \sY}\sfp(y \!\mid\! x)}\geq \max_{y\in \sY}\sfp(y \!\mid\! x)-\sfp(\hh(x) \!\mid\! x)$}\\
    & = \Delta\sC_{\labs,\sH, \sR}(h, r, x).
    \end{align*}
     Therefore, 
    \begin{align*}
\sE_{\labs}(h, r) - \sE_{\labs}^*(\sH, \sR) + \sM_{\labs}(\sH, \sR) & = \mathbb{E}_{X}\bracket*{\Delta\sC_{\labs,\sH, \sR}(h, r, x)}\\
& \leq \mathbb{E}_{X}\bracket*{\Gamma_1\paren*{\Delta\sC_{\sfL,\sH, \sR}(h, r, x)}}\\
& \leq \Gamma_1\paren*{\mathbb{E}_{X}\bracket*{\Delta\sC_{\sfL,\sH, \sR}(h, r, x)}}
\tag{$\Gamma_1$ is concave}\\
& = \Gamma_1\paren*{\sE_{\sfL}(h, r)-\sE_{\sfL}^*(\sH, \sR) +\sM_{\sfL}(\sH, \sR)}
\end{align*}
where $\Gamma_1(t)= t$.
\paragraph{(ii)} $\max_{y\in \sY}\sfp(y \!\mid\! x) \leq (1 - c)$ and $r(x)>0$. In this
      case, by \eqref{eq:calibration_gap_general}, we have $\sC^*_{\labs}(\sH, \sR, x) =c$ and
      $\Delta\sC_{\labs,\sH, \sR}(h, r, x) =1-c-p\paren*{x,
        \hh(x)}$. For the surrogate loss, we have
    \begin{align*}
    & \Delta\sC_{\sfL,\sH, \sR}(h, r, x)\\
    & =  \sum_{y\in \sY} \paren*{1-\sfp(y \!\mid\! x)}\max\curl*{0,1 + \frac{h(x, y)}{\rho}}e^{\alpha r(x)} + n c e^{-\alpha r(x)} - 2\sqrt{n^2c\paren*{1 - \max_{y\in \sY}\sfp(y \!\mid\! x)}}\\
    & \geq \sum_{y\in \sY} \paren*{1-\sfp(y \!\mid\! x)}\max\curl*{0,1 + \frac{h(x, y)}{\rho}}e^{\alpha r(x)} + n c e^{-\alpha r(x)}\\
    &\qquad - 2\sqrt{nc\paren*{\sum_{y\in \sY} \paren*{1-\sfp(y \!\mid\! x)}\max\curl*{0,1 + \frac{h(x, y)}{\rho}}}}
   \tag{$\sum_{y\in \sY} \paren*{1-\sfp(y \!\mid\! x)}\max\curl*{0,1 + \frac{h(x, y)}{\rho}}\geq n\paren*{1 - \max_{y\in \sY}\sfp(y \!\mid\! x)}$}\\
    & \geq \sum_{y\in \sY} \paren*{1-\sfp(y \!\mid\! x)}\max\curl*{0,1 + \frac{h(x, y)}{\rho}} + n c -  2\sqrt{nc\paren*{\sum_{y\in \sY} \paren*{1-\sfp(y \!\mid\! x)}\max\curl*{0,1 + \frac{h(x, y)}{\rho}}}}
    \tag{increasing for $r(x)\geq 0$}\\
    & = \paren*{\sqrt{\sum_{y\in \sY} \paren*{1-\sfp(y \!\mid\! x)}\max\curl*{0,1 + \frac{h(x, y)}{\rho}}}-\sqrt{nc}}^2\\
    &\geq \paren*{\sqrt{\sum_{y\in \sY} \paren*{1-\sfp(y \!\mid\! x)}\min\curl*{n,\max\curl*{0,1 + \frac{h(x, y)}{\rho}}}}-\sqrt{nc}}^2\\
    & = \paren*{\frac{\sum_{y\in \sY} \paren*{1-\sfp(y \!\mid\! x)}\min\curl*{n,\max\curl*{0,1 + \frac{h(x, y)}{\rho}}}-n c}{\sqrt{\sum_{y\in \sY} \paren*{1-\sfp(y \!\mid\! x)}\min\curl*{n,\max\curl*{0,1 + \frac{h(x, y)}{\rho}}}}+\sqrt{nc}}}^2.
    \end{align*}
    Thus, by using the fact that $\sqrt{\sum_{y\in \sY} \paren*{1-\sfp(y \!\mid\! x)}\min\curl*{n,\max\curl*{0,1 + \frac{h(x, y)}{\rho}}}}+\sqrt{nc}\leq 2\sqrt{n}$, we further have
    \begin{align*}
    & \Delta\sC_{\sfL,\sH, \sR}(h, r, x)\\
    & \geq \frac{1}{4n} \paren[\Bigg]{\sum_{y\in \sY} \paren*{1-\sfp(y \!\mid\! x)}\min\curl*{n,\max\curl*{0,1 + \frac{h(x, y)}{\rho}}}\\
    &\qquad -n\paren*{1 - \max_{y\in \sY}\sfp(y \!\mid\! x)}+n\paren*{1 - \max_{y\in \sY}\sfp(y \!\mid\! x)-c}}^2 
    \tag{$\sqrt{\sum_{y\in \sY} \paren*{1-\sfp(y \!\mid\! x)}\min\curl*{n,\max\curl*{0,1 + \frac{h(x, y)}{\rho}}}}+\sqrt{nc}\leq 2\sqrt{n}$}\\
    & \geq  \paren*{\frac{\max_{y\in \sY}\sfp(y \!\mid\! x)-\sfp(\hh(x) \!\mid\! x)+1 - \max_{y\in \sY}\sfp(y \!\mid\! x)-c}{2\sqrt{n}}}^2\\
    & = \frac{1}{4n}\paren*{1-c-p\paren*{x,\hh(x)}}^2\\
    & = \frac{\Delta\sC_{\labs,\sH, \sR}(h, r, x)^2}{4n}
    \end{align*}
    Therefore, 
    \begin{align*}
\sE_{\labs}(h, r) - \sE_{\labs}^*(\sH, \sR) + \sM_{\labs}(\sH, \sR) & = \mathbb{E}_{X}\bracket*{\Delta\sC_{\labs,\sH, \sR}(h, r, x)}\\
& \leq \mathbb{E}_{X}\bracket*{\Gamma_2\paren*{\Delta\sC_{\sfL,\sH, \sR}(h, r, x)}}\\
& \leq \Gamma_2\paren*{\mathbb{E}_{X}\bracket*{\Delta\sC_{\sfL,\sH, \sR}(h, r, x)}}
\tag{$\Gamma_2$ is concave}\\
& = \Gamma_2\paren*{\sE_{\sfL}(h, r)-\sE_{\sfL}^*(\sH, \sR) +\sM_{\sfL}(\sH, \sR)}
\end{align*}
where $\Gamma_2(t)=2\sqrt{n t}$.
\paragraph{(iii)} $\max_{y\in \sY}\sfp(y \!\mid\! x) \leq  (1 - c)$ and $r(x)\leq 0$. In this case, by \eqref{eq:calibration_gap_general}, we have $\sC^*_{\labs}(\sH, \sR, x) =c$ and $\Delta\sC_{\labs,\sH, \sR}(h, r, x)=0$, which implies that $\sE_{\labs}(h, r) - \sE_{\labs}^*(\sH, \sR) + \sM_{\labs}(\sH, \sR)\\
= \mathbb{E}_{X}\bracket*{\Delta\sC_{\labs,\sH, \sR}(h, r, x)}=0 \leq \Gamma\paren*{\sE_{\sfL}(h, r)-\sE_{\sfL}^*(\sH, \sR) +\sM_{\sfL}(\sH, \sR)}$ for any $\Gamma\geq 0$.
\paragraph{(iv)} $\max_{y\in \sY}\sfp(y \!\mid\! x) > (1 - c)$ and $r(x)\leq 0$. In this
      case, by \eqref{eq:calibration_gap_general}, we have $\sC^*_{\labs}(\sH, \sR, x) =1 - \max_{y\in
        \sY}\sfp(y \!\mid\! x)$ and $\Delta\sC_{\labs,\sH,
        \sR}(h, r, x) =\max_{y\in \sY} \sfp(y \!\mid\! x)-1+c$. For the
      surrogate loss, we have
    \begin{align*}
    &\Delta\sC_{\sfL,\sH, \sR}(h, r, x)\\ 
    & = \sum_{y\in \sY} \paren*{1-\sfp(y \!\mid\! x)}\max\curl*{0,1 + \frac{h(x, y)}{\rho}}e^{\alpha r(x)} + n c e^{-\alpha r(x)} - 2\sqrt{n^2c\paren*{1 - \max_{y\in \sY}\sfp(y \!\mid\! x)}}\\
    & \geq n\paren*{1 - \max_{y\in \sY}\sfp(y \!\mid\! x)}e^{\alpha r(x)} + nc e^{-\alpha r(x)} - 2\sqrt{n^2c\paren*{1 - \max_{y\in \sY}\sfp(y \!\mid\! x)}}
    \tag{$\sum_{y\in \sY} \paren*{1-\sfp(y \!\mid\! x)}\max\curl*{0,1 + \frac{h(x, y)}{\rho}}\geq n\paren*{1 - \max_{y\in \sY}\sfp(y \!\mid\! x)}$}\\
    & \geq n\paren*{1 - \max_{y\in \sY}\sfp(y \!\mid\! x)} + nc - 2\sqrt{n^2c\paren*{1 - \max_{y\in \sY}\sfp(y \!\mid\! x)}}
    \tag{decreasing for $r(x)\leq 0$}\\
    & = n\paren*{\sqrt{1 - \max_{y\in \sY}\sfp(y \!\mid\! x)}-\sqrt{c}}^2\\
    & = n\paren*{\frac{1 - \max_{y\in \sY}\sfp(y \!\mid\! x)-c}{\sqrt{1 - \max_{y\in \sY}\sfp(y \!\mid\! x)}+\sqrt{c}}}^2\\
    & \geq n\paren*{\frac{\max_{y\in \sY} \sfp(y \!\mid\! x)-1+c}{2}}^2
    \tag{$\sqrt{1 - \max_{y\in \sY}\sfp(y \!\mid\! x)}+\sqrt{c}\leq 2$}\\
    & = \frac{n\Delta\sC_{\labs,\sH, \sR}(h, r, x)^2}{4}.
    \end{align*}
     Therefore, 
    \begin{align*}
      \sE_{\labs}(h, r) - \sE_{\labs}^*(\sH, \sR) + \sM_{\labs}(\sH, \sR)
      & = \mathbb{E}_{X}\bracket*{\Delta\sC_{\labs,\sH, \sR}(h, r, x)}\\
      & \leq \mathbb{E}_{X}\bracket*{\Gamma_3\paren*{\Delta\sC_{\sfL,\sH, \sR}(h, r, x)}}\\
      & \leq \Gamma_3\paren*{\mathbb{E}_{X}\bracket*{\Delta\sC_{\sfL,\sH, \sR}(h, r, x)}}
      \tag{$\Gamma_3$ is concave}\\
& = \Gamma_3\paren*{\sE_{\sfL}(h, r)-\sE_{\sfL}^*(\sH, \sR) +\sM_{\sfL}(\sH, \sR)}
\end{align*}
where $\Gamma_3(t)=2\sqrt{t/n}$.

Overall, we obtain
\begin{align*}
\sE_{\labs}(h, r) - \sE_{\labs}^*(\sH, \sR) + \sM_{\labs}(\sH, \sR) \leq \Gamma\paren*{\sE_{\sfL}(h, r)-\sE_{\sfL}^*(\sH, \sR) +\sM_{\sfL}(\sH, \sR)}
\end{align*}
where $\Gamma(t)=\max\curl*{\Gamma_1(t),\Gamma_2(t),\Gamma_3(t)}= \max\curl*{2\sqrt{nt},t}$, which completes the proof.
\end{proof}

\subsection{Proof of \texorpdfstring{$\sR$}{R}-consistency bounds for second-stage surrogates (Theorem~\ref{Thm:bound-general-second-step})}
\label{app:general-positive-second-stage}
\BoundGenralSecondStep*
\begin{proof}
Given any fixed predictor $h$.  For any $r \in \sR$, $x \in \sX$ and $y \in \sY$, the conditional risk of $\ell_{\mathrm{abs}, h}$ and $\ell_{\Phi, h}$ can be written as
\begin{equation}
\label{eq:tsr-cond-error-mabs}
\begin{aligned}
\sC_{\ell_{\mathrm{abs}, h}}(r, x) &=  \sum_{y \in \sY} \sfp(y \!\mid\! x) \1_{\hh(x) \neq y} \1_{r(x) > 0} + c \1_{r(x) \leq 0}\\
\sC_{\ell_{\Phi, h}}(r, x) &= \sum_{y \in \sY} \sfp(y \!\mid\! x) \1_{\hh(x) \neq y} \Phi \paren*{-r(x)} + c \Phi \paren*{r(x)}.
\end{aligned}
\end{equation}
Thus, the best-in class conditional risk of $\ell_{\mathrm{abs}, h}$ and $\ell_{\Phi, h}$ can be expressed as
\begin{equation}
\label{eq:tsr-best-cond-error-mabs}
\begin{aligned}
\sC^*_{\ell_{\mathrm{abs}, h}}(\sR, x) &= \inf_{r \in \sR}\paren*{ \sum_{y \in \sY} \sfp(y \!\mid\! x) \1_{\hh(x) \neq y} \1_{r(x) > 0} + c \1_{r(x) \leq 0}}\\
\sC^*_{\ell_{\Phi, h}}(\sR, x) &= \inf_{r \in \sR}\paren*{\sum_{y \in \sY} \sfp(y \!\mid\! x) \1_{\hh(x) \neq y} \Phi \paren*{-r(x)} + c \Phi \paren*{r(x)}}.
\end{aligned}
\end{equation}
Let $p_1 = \frac{\sum_{y \in \sY} \sfp(y \!\mid\! x) \1_{\hh(x) \neq y}}{\sum_{y \in \sY} \sfp(y \!\mid\! x) \1_{\hh(x) \neq y} + c}$ and $p_2 = \frac{c}{\sum_{y \in \sY} \sfp(y \!\mid\! x) \1_{\hh(x) \neq y} + c}$. Then, the calibration gap of $\ell_{\mathrm{abs}, h}$ can be written as 
\begin{align*}
& \sC_{\ell_{\Phi, h}}(r, x) - \sC^*_{\ell_{\Phi, h}}(\sR, x)\\
& = \paren*{\sum_{y \in \sY} \sfp(y \!\mid\! x) \1_{\hh(x) \neq y} + c} \bracket*{p_1 \Phi(-r(x)) + p_2 \Phi(r(x)) - \inf_{r \in \sR} \paren*{p_1 \Phi(-r(x)) + p_2 \Phi(r(x))}}.
\end{align*}
By Lemma~\ref{lemma:aux-mabs}, we have
\begin{align*}
& \sC_{\ell_{\mathrm{abs}, h}}(r, x) - \sC^*_{\ell_{\mathrm{abs}, h}}(\sR, x)\\
& = p_1 1_{r(x) > 0} + p_2 1_{r(x) \leq 0} - \inf_{r \in \sR} \paren*{p_1 1_{r(x) > 0} +  p_2 1_{r(x) \leq 0}}\\
& \leq \Gamma \paren*{p_1 \Phi(-r(x)) + p_2 \Phi(r(x)) - \inf_{r \in \sR} \paren*{p_1 \Phi(-r(x)) + p_2 \Phi(r(x))}}\\
& = \Gamma\paren*{\frac{\sC_{\ell_{\Phi, h}}(r, x) - \inf_{r\in \sR}\sC_{\ell_{\Phi, h}}(r, x)}{\sum_{y \in \sY} \sfp(y \!\mid\! x) \1_{\hh(x) \neq y} + c}}\\
& \leq \Gamma \paren*{\frac{\sC_{\ell_{\Phi, h}}(r, x) - \inf_{r\in \sR}\sC_{\ell_{\Phi, h}}(r, x)}{c}},
\end{align*}
where we use the fact that $\Gamma$ is non-decreasing and $\sum_{y \in \sY} \sfp(y \!\mid\! x) \1_{\hh(x) \neq y} + c \geq c$ in the last inequality.
Since $\Gamma$ is concave, taking the expectation on both sides and using Jensen's inequality, we obtain 
\begin{equation*}
\E_{X}\bracket*{\sC_{\ell_{\mathrm{abs}, h}}(r, x) - \sC^*_{\ell_{\mathrm{abs}, h}}(\sR, x)} \leq \Gamma \paren*{\frac{\E_{X}\bracket*{\sC_{\ell_{\Phi, h}}(r, x) - \inf_{r\in \sR}\sC_{\ell_{\Phi, h}}(r, x)}}{c}}.
\end{equation*}
Since the term $\E_{X}\bracket*{\sC_{\ell_{\mathrm{abs}, h}}(r, x) - \sC^*_{\ell_{\mathrm{abs}, h}}(\sR, x)}$ and $\E_{X}\bracket*{\sC_{\ell_{\Phi, h}}(r, x) - \inf_{r\in \sR}\sC_{\ell_{\Phi, h}}(r, x)}$ can be expressed as
\begin{align*}
 \E_{X}\bracket*{\sC_{\ell_{\mathrm{abs}, h}}(r, x) - \sC^*_{\ell_{\mathrm{abs}, h}}(\sR, x)} &= \sE_{\ell_{\mathrm{abs}, h} }(r)-\sE_{\ell_{\mathrm{abs}, h} }^*(\sR) +\sM_{\ell_{\mathrm{abs}, h} }(\sR)\\
 \E_{X}\bracket*{\sC_{\ell_{\Phi, h}}(r, x) - \inf_{r\in \sR}\sC_{\ell_{\Phi, h}}(r, x)} &= \sE_{\ell_{\Phi, h} }(r)-\sE_{\ell_{\Phi, h} }^*(\sR) +\sM_{\ell_{\Phi, h}}(\sR),
\end{align*}
we have
\begin{equation*}
\sE_{\ell_{\mathrm{abs}, h} }(r)-\sE_{\ell_{\mathrm{abs}, h} }^*(\sR) +\sM_{\ell_{\mathrm{abs}, h} }(\sR) \leq \Gamma\paren*{\frac{\sE_{\ell_{\Phi, h} }(r)-\sE_{\ell_{\Phi, h} }^*(\sR) +\sM_{\ell_{\Phi, h}}(\sR)}{c}},
\end{equation*}
which completes the proof.
\end{proof}

\subsection{Proof of \texorpdfstring{$(\sH, \sR)$}{HR}-consistency bounds for two-stage surrogates (Theorem~\ref{Thm:bound-general-two-step-mabs})}
\label{app:general-positive-two-stage}
\BoundGenralTwoStepMabs*
\begin{proof}
Since $\sR$ is regular, the conditional risk and the best-in-class conditional risk of the abstention loss $\labs$ can be expressed as
\begin{equation}
\label{eq:tshr-cond-error-def}
\begin{aligned}
\sC_{\labs}(h, r, x) &=  \sum_{y \in \sY} \sfp(y \!\mid\! x) \1_{\hh(x) \neq y} \1_{r(x) > 0} + c \1_{r(x) \leq 0}\\
\sC^*_{\labs}(\sH, \sR, x) &= \min\curl*{\inf_{h \in \sH} \sum_{y \in \sY} \sfp(y \!\mid\! x) \1_{\hh(x) \neq y}, c}.
\end{aligned}
\end{equation}
Thus, by introducing the term $\min\curl*{\sum_{y \in \sY} \sfp(y \!\mid\! x) \1_{\hh(x) \neq y}, c}$ and subsequently subtracting it after rearranging, the calibration gap of the abstention loss $\labs$ can be written as follows
\begin{equation}
\label{eq:tshr-cond-reg-def-mabs}
\begin{aligned}
& \sC_{\labs}(h, r, x) - \sC^*_{\labs}(\sH, \sR, x)\\
& = \sum_{y \in \sY} \sfp(y \!\mid\! x) \1_{\hh(x) \neq y} \1_{r(x) > 0} + c \1_{r(x) \leq 0} - \min\curl*{\inf_{h \in \sH} \sum_{y \in \sY} \sfp(y \!\mid\! x) \1_{\hh(x) \neq y}, c}\\ 
& =  \sum_{y \in \sY} \sfp(y \!\mid\! x) \1_{\hh(x) \neq y} \1_{r(x) > 0} + c \1_{r(x) \leq 0}  - \min\curl*{\sum_{y \in \sY} \sfp(y \!\mid\! x) \1_{\hh(x) \neq y}, c}\\
& \quad + \min\curl*{\sum_{y \in \sY} \sfp(y \!\mid\! x) \1_{\hh(x) \neq y}, c} - \min\curl*{\inf_{h \in \sH} \sum_{y \in \sY} \sfp(y \!\mid\! x) \1_{\hh(x) \neq y}, c}.
\end{aligned}
\end{equation}
Note that by the property of the minimum, the second term can be upper-bounded as 
\begin{align*}
& \min\curl*{\sum_{y \in \sY} \sfp(y \!\mid\! x) \1_{\hh(x) \neq y}, c} - \min\curl*{\inf_{h \in \sH} \sum_{y \in \sY} \sfp(y \!\mid\! x) \1_{\hh(x) \neq y}, c}\\
& \leq \sum_{y \in \sY} \sfp(y \!\mid\! x) \1_{\hh(x) \neq y} - \inf_{h \in \sH} \sum_{y \in \sY} \sfp(y \!\mid\! x) \1_{\hh(x) \neq y}\\
& = \sC_{\ell_{0-1}}(h, x)-\sC^*_{\ell_{0-1}}(\sH,x)\\
& \leq \Gamma_1\paren*{\sC_{\ell}(h, x)-\sC^*_{\ell}(\sH,x)},
\end{align*}
where we use the $\sH$-consistency bound of $\ell$ on the pointwise distribution $\delta_{x}$ that concentrates on a point $x$ in the last inequality.
Next, we will upper-bound the first term. Note that the conditional risk and the best-in class conditional risk of $\ell_{\Phi, h}$ can be expressed as
\begin{equation}
\label{eq:tshr-cond-error-sur-mabs}
\begin{aligned}
\sC_{\ell_{\Phi, h}}(r, x) &= \sum_{y \in \sY} \sfp(y \!\mid\! x) \1_{\hh(x) \neq y} \Phi \paren*{-r(x)} + c \Phi \paren*{r(x)}\\
\sC^*_{\ell_{\Phi, h}}(\sR, x) &= \inf_{r \in \sR}\paren*{\sum_{y \in \sY} \sfp(y \!\mid\! x) \1_{\hh(x) \neq y} \Phi \paren*{-r(x)} + c \Phi \paren*{r(x)}}.
\end{aligned}
\end{equation}
Let $p_1 = \frac{\sum_{y \in \sY} \sfp(y \!\mid\! x) \1_{\hh(x) \neq y}}{\sum_{y \in \sY} \sfp(y \!\mid\! x) \1_{\hh(x) \neq y} + c}$ and $p_2 = \frac{c}{\sum_{y \in \sY} \sfp(y \!\mid\! x) \1_{\hh(x) \neq y} + c}$. Then, the first term can be rewritten as 
\begin{align*}
& \sum_{y \in \sY} \sfp(y \!\mid\! x) \1_{\hh(x) \neq y} \1_{r(x) > 0} + c \1_{r(x) \leq 0}  - \min\curl*{\sum_{y \in \sY} \sfp(y \!\mid\! x) \1_{\hh(x) \neq y}, c}\\
& = \paren*{\sum_{y \in \sY} \sfp(y \!\mid\! x) \1_{\hh(x) \neq y} + c} \bracket*{p_1 1_{r(x) > 0} + p_2 1_{r(x) \leq 0} - \inf_{r \in \sR} \paren*{p_1 1_{r(x) > 0} +  p_2 1_{r(x) \leq 0}}}
\end{align*}
By Lemma~\ref{lemma:aux-mabs}, we have
\begin{align*}
& p_1 1_{r(x) > 0} + p_2 1_{r(x) \leq 0} - \inf_{r \in \sR} \paren*{p_1 1_{r(x) > 0} + p_2 1_{r(x) \leq 0}}\\
& \leq \Gamma_2 \paren*{p_1 \Phi(-r(x)) + p_2 \Phi(r(x)) - \inf_{r \in \sR} \paren*{p_1 \Phi(-r(x)) + p_2 \Phi(r(x))}}\\
& = \Gamma_2 \paren*{\frac{\sC_{\ell_{\Phi, h}}(r, x) - \sC^*_{\ell_{\Phi, h}}(\sR, x)}{\sum_{y \in \sY} \sfp(y \!\mid\! x) \1_{\hh(x) \neq y} + c}}.
\end{align*}
Therefore, the first term can be upper-bounded as
\begin{align*}
& \sum_{y \in \sY} \sfp(y \!\mid\! x) \1_{\hh(x) \neq y} \1_{r(x) > 0} + c \1_{r(x) \leq 0}  - \min\curl*{\sum_{y \in \sY} \sfp(y \!\mid\! x) \1_{\hh(x) \neq y}, c}\\
& = \paren*{\sum_{y \in \sY} \sfp(y \!\mid\! x) \1_{\hh(x) \neq y} + c} \bracket*{p_1 1_{r(x) > 0} + p_2 1_{r(x) \leq 0} - \inf_{r \in \sR} \paren*{p_1 1_{r(x) > 0} +  p_2 1_{r(x) \leq 0}}}\\
& \leq \paren*{\sum_{y \in \sY} \sfp(y \!\mid\! x) \1_{\hh(x) \neq y} + c} \Gamma_2 \paren*{\frac{\sC_{\ell_{\Phi, h}}(r, x) - \sC^*_{\ell_{\Phi, h}}(\sR, x)}{\sum_{y \in \sY} \sfp(y \!\mid\! x) \1_{\hh(x) \neq y} + c}} \\
& \leq
\begin{cases}
\Gamma_2\paren*{\sC_{\ell_{\Phi,h}}(r,x)-\sC^*_{\ell_{\Phi,h}}(\sR,x)} & \text{when $\Gamma_2$ is linear}\\
(1+c)\Gamma_2\paren*{\frac {\sC_{\ell_{\Phi,h}}(r,x)-\sC^*_{\ell_{\Phi,h}}(\sR,x)}{c}} & \text{otherwise}
\end{cases}
\end{align*}
where we use the fact that $c\leq \sum_{y\in \sY}\sfp(y \!\mid\! x)\1_{\hh(x)\neq y} + c\leq 1+c$ and $\Gamma_2$ is non-decreasing in the last inequality. After upper-bounding the first term and the second term in \eqref{eq:tshr-cond-reg-def-mabs} as above, taking the expectation on both sides, using the fact that $\Gamma_1$ and $\Gamma_2$ are concave, we obtain
\begin{align*}
  &\E_{X}\bracket*{\sC_{\labs}(h, r, x) - \sC^*_{\labs}(\sH, \sR, x)}\\
  &\leq 
\begin{cases}
\Gamma_2\paren*{\E_X\bracket*{\sC_{\ell_{\Phi,h}}(r,x)-\sC^*_{\ell_{\Phi,h}}(\sR,x)}} + \Gamma_1\paren*{\E_X\bracket*{\sC_{\ell}(h, x)-\sC^*_{\ell}(\sH,x)}} & \text{$\Gamma_2$ is linear}\\
(1+c)\Gamma_2\paren*{\frac1c\E_X\bracket*{\sC_{\ell_{\Phi,h}}(r,x)-\sC^*_{\ell_{\Phi,h}}(\sR,x))}} + \Gamma_1\paren*{\E_X\bracket*{\sC_{\ell}(h, x)-\sC^*_{\ell}(\sH,x)}} & \text{otherwise}
\end{cases}
\end{align*}
Since the three expected terms can be expressed as
\begin{align*}
\E_{X}\bracket*{\sC_{\labs}(h, r, x) - \sC^*_{\labs}(\sH, \sR, x)} &= \sE_{\labs}(h, r)-\sE^*_{\labs}\paren*{\sH,\sR}+\sM_{\labs}(\sH, \sR)\\
\E_X\bracket*{\sC_{\ell_{\Phi,h}}(r,x)-\sC^*_{\ell_{\Phi,h}}(\sR,x)} &= \sE_{\ell_{\Phi,h}}(r)-\sE_{\ell_{\Phi,h}}^*(\sR) +\sM_{\ell_{\Phi,h}}(\sR)\\
\E_X\bracket*{\sC_{\ell}(h, x)-\sC^*_{\ell}(\sH,x)} &= \sE_{\ell}(h)-\sE_{\ell}^*(\sH) +\sM_{\ell}(\sH),
\end{align*}
we have
\begin{align*}
\sE_{\labs}(h, r) - \sE_{\labs}^*(\sH, \sR) + \sM_{\labs}(\sH, \sR)
& \leq \Gamma_1\paren*{\sE_{\ell}(h)-\sE_{\ell}^*(\sH) +\sM_{\ell}(\sH)}\\
& \quad + (1+c)\Gamma_2\paren*{\frac{\sE_{\ell_{\Phi,h}}(r)-\sE_{\ell_{\Phi,h}}^*(\sR) +\sM_{\ell_{\Phi,h}}(\sR)}{c}},
\end{align*}
where the
constant factors $(1 + c)$ and $\frac{1}{c}$ can be removed 
when $\Gamma_2$ is linear.
\end{proof}

\subsection{Proof of realizable \texorpdfstring{$(\sH, \sR)$}{HR}-consistency bounds for single-stage surrogates (Theorem~\ref{Thm:spcific-loss-bound-realizable})}
\label{app:general-positive-single-stage-realizable}
\SpecificLossBoundRealizable*
\begin{proof}
It is straightforward to see that $\sfL$ serves as an upper bound for $\labs$ when $\ell$ serves as an upper bound for $\ell_{0-1}$ under Assumption~\ref{assumption:phi}.
By definition, for any $(\sH,\sR)$-realizable distribution, there exists $h^*\in \sH$ and $r^*\in \sR$ such that $\sE_{\labs}(h^*,r^*) = \sE_{\labs}^*(\sH, \sR) = 0$. Then, by the assumption that $\sH$ and $\sR$ are closed under scaling, for any $\nu>0$,
\begin{align*}
\sE^*_{\sfL}(\sH,\sR)
&\leq\sE_{\sfL}(\nu h^*,\nu r^*)\\
&=\mathbb{E}\bracket*{\sfL(\nu h^*,\nu r^*,x,y)\mid r^*< 0}\mathbb{P}(r^*< 0) + \mathbb{E}\bracket*{\sfL(\nu h^*,\nu r^*,x,y)\mid r^*>0}\mathbb{P}(r^*> 0)
\end{align*}
Next, we investigate the two terms.
The first term is when $r^*< 0$, then we must have $c=0$ since the data is realizable. By taking the limit, we obtain:
\begin{align*}
&\lim_{\nu\to +\infty}\mathbb{E}\bracket*{\sfL(\nu h^*,\nu r^*,x,y)\mid r^*< 0}\mathbb{P}(r^*< 0)\\
&=\lim_{\nu\to +\infty}\mathbb{E}\bracket*{\ell(\nu h^*, x, y)\Phi\paren*{-\alpha \nu r^*(x)} + \Psi(c) \Phi\paren*{\beta \nu r^*(x)}\mid r^*< 0}\mathbb{P}(r^*< 0)\\
&=\lim_{\nu\to +\infty}\mathbb{E}\bracket*{\ell(\nu h^*, x, y)\Phi\paren*{-\alpha \nu r^*(x)}\mid r^*< 0}\mathbb{P}(r^*< 0) \tag{$c=0$ and $\Psi(0)=0$}\\
&=0. \tag{by the Lebesgue dominated convergence theorem and $\lim_{t\to + \infty}\Phi(t)=0$}
\end{align*}
The second term is when $r^*> 0$, then we must have $h^*(x,y) - \max_{y'\neq y}h^*(x,y') > 0$ since the data is realizable. Thus, using the fact that $\lim_{\nu\to +\infty}\ell(\nu h^*,x,y)=0$ and taking the limit, we obtain
\begin{align*}
&\lim_{\nu\to +\infty}\mathbb{E}\bracket*{\sfL(\nu h^*,\nu r^*,x,y)\mid r^*< 0}\mathbb{P}(r^*< 0)\\
&=\lim_{\nu\to +\infty}\mathbb{E}\bracket*{\ell(\nu h^*, x, y)\Phi\paren*{-\alpha \nu r^*(x)} + \Psi(c) \Phi\paren*{\beta \nu r^*(x)}\mid r^*< 0}\mathbb{P}(r^*< 0)\\
&=0. \tag{by the Lebesgue dominated convergence theorem, $\lim_{t\to + \infty}\Phi(t)=0$, $\lim_{\nu\to +\infty}\ell(\nu h^*,x,y)=0$}
\end{align*}
Therefore, by combining the above two analysis, we obtain
\begin{align*}
\sE^*_{\sfL}(\sH,\sR)\leq \lim_{\nu\to +\infty}\sE_{\sfL}(\nu h^*,\nu r^*)=0.
\end{align*}
By using the fact that $\sfL$ serves as an upper bound for $\labs$ and $\sE_{\labs}^*(\sH, \sR)=0$, we conclude that
\begin{equation*}
\sE_{\labs}(h, r) - \sE_{\labs}^*(\sH, \sR)
\leq \sE_{\sfL}(h, r)-\sE_{\sfL}^*(\sH, \sR).
\end{equation*}
\end{proof}
\subsection{Proof of realizable \texorpdfstring{$(\sH, \sR)$}{HR}-consistency for two-stage surrogates (Theorem~\ref{Thm:bound-general-two-step-realizable-mabs})}
\label{app:general-positive-two-stage-realizable}
\BoundGenralTwoStepRealizableMabs*
\begin{proof}
It is straightforward to see that $\ell_{\Phi, h}$ upper-bounds the abstention loss $\labs$ under Assumption~\ref{assumption:phi}. By definition, for any $(\sH,\sR)$-realizable distribution, there exists $h^*\in \sH$ and $r^*\in \sR$ such that $\sE_{\labs}(h^*,r^*)=0$. Let $\hat h$ be the minimizer of $\sE_{\ell}$ and $\hat r$ be the minimizer of $\sE_{\ell_{\Phi, \hat h}}$.
Then, using the fact that $\ell_{\Phi, h}$ upper-bounds the abstention loss $\sE_{\labs}$, we have $\sE_{\sE_{\labs}}(\hat h, \hat r)\leq \sE_{\ell_{\Phi, \hat h}}(\hat r)$.

Next, we analyze two cases. If for a point $x$, abstention happens, that is $r^*(x) < 0$, then we must have $c = 0$ since the data is realizable. Therefore, there exists an optimal $r^{**}$ abstaining all the points with zero cost: $r^{**}(x) < 0$ for all $x \in \sX$. Then, by the assumption that $\sR$ is closed under scaling and the Lebesgue dominated convergence theorem, using the fact that $\lim_{t \to +
  \infty} \Phi(t) = 0$, we obtain
\begin{align*}
\sE_{\labs}(\hat h, \hat r) 
&\leq \sE_{\ell_{\Phi, \hat h}}(\hat r)\\
&\leq \lim_{\nu \to +\infty} \sE_{\ell_{\Phi, \hat h}}(\nu r^{**}) \tag{$\hat r$ is the minimizer of $\sE_{\ell_{\Phi, \hat h}}$}\\
&= \lim_{\nu\to +\infty}\mathbb{E}\bracket*{\1_{\hat \hh(x) \neq y} \Phi\paren*{-\nu r^{**}(x)} + c \Phi\paren*{\nu r^{**}(x)}} \tag{By \eqref{eq:ell-Phi-h-mabs}}\\
&= \lim_{\nu\to +\infty}\mathbb{E}\bracket*{\1_{\hat \hh(x) \neq y} \Phi\paren*{-\nu r^{**}(x)}} \tag{$c=0$}\\
&= 0.  \tag{by the Lebesgue dominated convergence theorem and $\lim_{t\to + \infty}\Phi(t)=0$}
\end{align*}
On the other hand, if no abstention occurs for any point, that is $r^*(x) > 0$ for any $x \in \sX$, then we must have $\1_{\mathsf h^*(x) \neq y} = 0$ for all $(x,y ) \in \sX \times \sY$ since the data is realizable. Using the fact that $\ell$ is realizable $\sH$-consistent with respect to $\ell_{0-1}$ when $\sH$ is closed under scaling, we obtain $\1_{\hat{\mathsf h}(x) \neq y} = 0$ for all $(x,y) \in \sX \times \sY$. Then, by the assumption that $\sR$ is closed under scaling and the Lebesgue dominated convergence theorem, using the fact that $\lim_{t \to + \infty} \Phi(t) = 0$, we obtain
\begin{align*}
\sE_{\labs}(\hat h, \hat r) 
&\leq \sE_{\ell_{\Phi, \hat h}}(\hat r)\\
&\leq \lim_{\nu \to +\infty} \sE_{\ell_{\Phi, \hat h}}(\nu r^{*}) \tag{$\hat r$ is the minimizer of $\sE_{\ell_{\Phi, \hat h}}$}\\
&= \lim_{\nu\to +\infty}\mathbb{E}\bracket*{\1_{\hat \hh(x) \neq y} \Phi\paren*{-\nu r^{*}(x)} + c \Phi\paren*{\nu r^{*}(x)}} \tag{By \eqref{eq:ell-Phi-h-mabs}}\\
&= \lim_{\nu\to +\infty}\mathbb{E}\bracket*{c \Phi\paren*{\nu r^{*}(x)}} \tag{$\1_{\hat{\mathsf h}(x) \neq y} = 0$}\\
&= 0.  \tag{by the Lebesgue dominated convergence theorem and $\lim_{t\to + \infty}\Phi(t)=0$}
\end{align*}
This completes the proof.
\end{proof}
\restoreatoc

\chapter{Appendix to Chapter~\ref{ch4}}

\disableatoc
\section{Experimental details} 
\label{app:experiments}
\paragraph{Experimental setup.} For experiments, we used two popular
datasets: CIFAR-10 \citep{Krizhevsky09learningmultiple} and SVHN
(Street View House Numbers) \citep{Netzer2011}. CIFAR-10 consists of
$60\mathord,000$ color images in 10 different classes, with
$6\mathord,000$ images per class. The dataset is split into
$50\mathord,000$ training images and $10\mathord,000$ test
images. SVHN contains images of house numbers captured from Google
Street View. It consists of $73\mathord,257$ images for training and
$26\mathord,032$ images for testing. We trained for 50 epochs on
CIFAR-10 and 15 epochs on SVHN without any data augmentation.

In our experiments, we adopted the ResNet \citep{he2016deep}
architecture for the base model and selected various sizes of ResNet
models as experts in each scenario. Throughout all three scenarios, we
used ResNet-$4$ for both the predictor and the deferral models. In the
first scenario, we chose ResNet-$10$ as the expert model. In the
second scenario, we included ResNet-$10$ and ResNet-$16$ as expert
models. The third scenario involves ResNet-$10$, ResNet-$16$, and
ResNet-$28$ as expert models with increasing complexity. The expert
models are pre-trained on the training data of SVHN and CIFAR-10
respectively.

During the training process, we simultaneously trained the predictor
ResNet-$4$ and the deferral model ResNet-$4$. We adopted the Adam
optimizer \citep{kingma2014adam} with a batch size of $128$ and a
weight decay of $1\times 10^{-4}$. We used our proposed deferral
surrogate loss \eqref{eq:sur-score} with the generalized cross-entropy
loss being adopted for $\ell$. As suggested by
\citet{zhang2018generalized}, we set the parameter $\alpha$ to $0.7$.

For the second type of cost functions, we set the base costs as
follows: $\beta_1=0.1$, $\beta_2=0.12$ and $\beta_3=0.14$ for the SVHN
dataset and $\beta_1=0.3$, $\beta_2=0.32$, $\beta_3=0.34$ for the
CIFAR-10 dataset, where $\beta_1$ corresponds to the cost associated
with the smallest expert model, ResNet-$10$, $\beta_2$ to that of the
medium model, ResNet-$16$, and $\beta_3$ to that of the largest expert
model, ResNet-$28$. A base cost value that is not too far from the
misclassification error of expert models encourages in practice a
reasonable amount of input instances to be deferred.  We observed that
the performance remains close for other neighboring values of base
costs.

\paragraph{Additional experiments.} Here, we share additional experimental results in an intriguing setting where multiple experts are available and each of them has a clear domain of expertise. We report below the empirical results of our proposed deferral surrogate loss and the one-vs-all (OvA) surrogate loss proposed in recent work \citep{verma2023learning}, which is the state-of-the-art surrogate loss for learning with multi-expert deferral, on CIFAR-10. In this setting, the two experts have a clear domain of expertise. The expert 1 is always correct on the first three classes, 0 to 2, and predicts uniformly at random for other classes; the expert 2 is always correct on the next three classes, 3 to 5, and generates random predictions otherwise. We train a ResNet-16 for the predictor/deferral model. 

As shown in Table~\ref{tab:additional}, our method achieves comparable system accuracy with OvA. Among the images in classes 0 to 2, only $3.57\%$ is deferred to expert 2 which predicts uniformly at random. Similarly, among the images in classes 3 to 5, only $3.33\%$ is deferred to expert 1. For the rest of the images in classes 6 to 9, the predictor decides to learn to classify them by itself and actually makes $92.88\%$ of the final predictions. This illustrates that our proposed surrogate loss is effective and comparable to the baseline.

\begin{table*}[t]
  \centering
  \resizebox{\textwidth}{!}{
  \begin{tabular}{@{\hspace{0cm}}llllllllllllll@{\hspace{0cm}}}
    \toprule
    \multirow{3}{*}{Method} & \multirow{3}{*}{System accuracy (\%)} & \multicolumn{12}{c}{Ratio of deferral (\%)} \\
    \cmidrule{3-14}
    & & \multicolumn{3}{|c|}{all the classes} & \multicolumn{3}{|c|}{classes 0 to 2} & \multicolumn{3}{|c|}{classes 3 to 5} & \multicolumn{3}{|c|}{classes 6 to 9} \\
    \cmidrule{3-14}
    & & predictor & expert 1 & expert 2 & predictor & expert 1 & expert 2 & predictor & expert 1 & expert 2 & predictor & expert 1 & expert 2\\
    \midrule
    Ours & 92.19 & 61.43 & 17.38 & 21.19 & 46.77 & 49.67 & 3.57 & 33.60 & 3.33 & 63.07 & 92.88 & 3.43 & 3.70 \\
    OvA &  91.39 & 59.72 & 16.78 & 23.50 & 48.63 & 47.67 & 3.70 & 27.87 & 2.47 & 69.67 & 92.73 & 3.50 & 3.78 \\
    \bottomrule
  \end{tabular}
  }
 \caption{Comparison of our proposed deferral surrogate loss with the one-vs-all (OvA) surrogate loss in an intriguing setting where multiple experts are available and each of them has a clear domain of expertise.}
 \label{tab:additional}
\end{table*}

\section{Proof of \texorpdfstring{$\sH$}{H}-consistency bounds for deferral surrogate losses}

To prove $\sH$-consistency bounds for our deferral surrogate loss
functions, we will show how the \emph{conditional regret} of the
deferral loss can be upper-bounded in terms of the \emph{conditional
regret} of the surrogate loss. The general theorems proven by
\citet[Theorem~4, Theorem~5]{AwasthiMaoMohriZhong2022multi} then
guarantee our $\sH$-consistency bounds.

For any $x \in \sX$ and $y \in \sY$, let $\sfp(y \!\mid\! x)$ denote the
conditional probability of $Y = y$ given $X = x$ for any $y\in \sY$.
Then, for any $x \in \sX$, the \emph{conditional $\ldefsc$-loss}
$\sC_{\ldefsc}(h, x)$ and \emph{conditional regret} (or \emph{calibration
gap}) $\Delta \sC_{\ldefsc}(h, x)$ of a hypothesis $h \in \sH$ are
defined by
\begin{align*}
  \sC_{\ldefsc}(h, x)
& = \E_{y | x}[\ldefsc(h, x, y)]
= \sum_{y \in \sY} \sfp(y \!\mid\! x) \ldefsc(h, x, y)\\
\Delta \sC_{\ldefsc}(h, x)
& = \sC_{\ldefsc}(h, x) - \sC^*_{\ldefsc}(\sH, x),
\end{align*}
where $\sC^*_{\ldefsc}(\sH, x) = \inf_{h \in \sH} \sC_{\ldefsc}(h, x)$.
Similar definitions hold for the surrogate loss $\lsc$. To bound
$\Delta \sC_{\ldefsc}(h, x)$ in terms of
$\Delta \sC_{\lsc}(h, x)$, we first give more explicit expressions
for these conditional regrets.

To do so, it will be convenient to use the following definition
for any $x \in \sX$ and $y \in [n + \num]$:
\begin{align*}
q(x, y) =
\begin{cases}
\sfp(y \!\mid\! x) & y \in \sY \\
\ignore{\E_{y | x} \bracket*{1 - c_j(x, y)} = } 1 - \sum_{y \in \sY} \sfp(y \!\mid\! x) c_j(x, y) & n + 1 \le y \leq n + \num.
\end{cases}
\end{align*}
Note that $q(x, y)$ is non-negative but, in general, these quantities
do not sum to one. We denote by $\ov q(x, y) = \frac{q(x, y)}{Q}$ their
normalized counterparts which represent probabilities, where
$Q = \sum_{y\in [n + \num]} q(x, y)$. 

For any $x \in \sX$, we will denote by $\mathsf H(x)$ the set of
labels generated by hypotheses in $\sH$: $\mathsf H(x) = \curl*{
  \hh(x) \colon h \in \sH}$. We denote by $y_{\max} \in [n + \num]$
the label associated by $q$ to an input $x\in \sX$, defined as $
y_{\max} = \argmax_{y \in[n + \num]} q(x, y)$, with the same
deterministic strategy for breaking ties as that of $ \hh(x)$.

\subsection{Conditional regret of the deferral loss}
\label{sec:app_def_cond}

With these definitions, we can now express the conditional loss
and regret of the deferral loss.

\begin{restatable}{lemma}{CalibrationGapScore}
\label{lemma:calibration_gap_score}
For any $x \in \sX$,
the minimal conditional $\ldefsc$-loss and
the calibration gap for $\ldefsc$ can be expressed as follows:
\begin{align*}
\sC^*_{\ldefsc}( \sH,x) & = 1 - \max_{y\in \mathsf H(x)} q(x, y)\\
\Delta\sC_{\ldefsc, \sH}(h,x) & = \max_{y\in \mathsf H(x)} q(x, y) - q(x, \hh(x)).
\end{align*}
\end{restatable}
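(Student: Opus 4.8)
The plan is to obtain a single closed-form expression for the conditional deferral loss $\sC_{\ldefsc}(h, x)$ that is valid for every hypothesis $h$, and then to read off both claimed identities by taking an infimum and subtracting. First I would distinguish the two mutually exclusive cases for the predicted label $\hh(x)$: either $\hh(x) \in [n]$, in which case the learner outputs a class, or $\hh(x) = n + j$ for some $j \in [\num]$, in which case it defers to expert $\expert_j$.

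In the first case, the deferral loss reduces to the zero-one loss, $\ldefsc(h, x, y) = \1_{\hh(x) \neq y}$, so that $\sC_{\ldefsc}(h, x) = \sum_{y \in \sY} \sfp(y \!\mid\! x) \1_{\hh(x) \neq y} = 1 - \sfp(\hh(x) \!\mid\! x)$, which equals $1 - q(x, \hh(x))$ since $q(x, y) = \sfp(y \!\mid\! x)$ for $y \in \sY$. In the second case, $\ldefsc(h, x, y) = c_j(x, y)$, so $\sC_{\ldefsc}(h, x) = \sum_{y \in \sY} \sfp(y \!\mid\! x) c_j(x, y) = 1 - q(x, n + j)$ by the very definition of $q$ on the expert label $n + j$, and this again equals $1 - q(x, \hh(x))$. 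The one step that really carries the argument is recognizing that the definition of $q(x, n + j)$ was chosen precisely so that the deferral case collapses onto the same expression as the prediction case; this yields the unified formula $\sC_{\ldefsc}(h, x) = 1 - q(x, \hh(x))$ holding for all $h \in \sH$.

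With this formula in hand, the remaining steps are immediate. Taking the infimum over $h \in \sH$ and using that the set of achievable predicted labels is exactly $\mathsf H(x) = \curl*{\hh(x) \colon h \in \sH}$ gives $\sC^*_{\ldefsc}(\sH, x) = \inf_{h \in \sH}\paren*{1 - q(x, \hh(x))} = 1 - \max_{y \in \mathsf H(x)} q(x, y)$. Subtracting this minimal value from the pointwise expression then produces the calibration gap $\Delta\sC_{\ldefsc, \sH}(h, x) = \max_{y \in \mathsf H(x)} q(x, y) - q(x, \hh(x))$, as claimed.

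I do not anticipate a genuine obstacle: the statement is a direct computation rather than a delicate estimate. The only subtlety is the bookkeeping around the definition of $q$ that merges the prediction and deferral cases into one expression, and once that is observed the infimum and the subtraction are routine. This lemma then serves as the pointwise starting point for the later conditional-regret comparison underlying Theorem~\ref{Thm:bound-score}.
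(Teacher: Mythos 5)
Your proposal is correct and follows essentially the same route as the paper's proof: the paper likewise computes $\sC_{\ldefsc}(h, x) = 1 - q(x, \hh(x))$ by splitting on whether $\hh(x) \in [n]$ or $\hh(x) = n + j$, uses the definition of $q(x, n+j)$ to merge the two cases into one expression, and then obtains both identities by taking the infimum over $\mathsf H(x)$ and subtracting.
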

\begin{proof}
The conditional
$\ldefsc$-risk of $h$ can be expressed as follows:
\begin{align*}
& \sC_{\ldefsc}(h, x)\\
& = \E_{y | x} \bracket*{\ldefsc(h, x, y)}\\
& = \E_{y | x}
  \bracket*{\1_{\hh(x)\neq y}}\1_{\hh(x)\in [n]} + \sum_{j = 1}^{\num} \E_{y | x}
  \bracket*{c_j(x, y)}\1_{\hh(x) = n + j}\\
& = \sum_{y\in \sY} q(x, y) \1_{\hh(x)\neq y} \1_{\hh(x)\in [n]} + \sum_{j = 1}^{\num} (1 - q(x, n + j)) \1_{\hh(x) = n + j}\\
& = (1 - q(x, \hh(x))) \1_{\hh(x)\in [n]} + \sum_{j = 1}^{\num} (1 - q(x, \hh(x))) \1_{\hh(x) = n + j}\\
& = 1 - q(x, \hh(x)).
\end{align*}
Then, the minimal conditional $\ldefsc$-risk is given by
\[
\sC_{\ldefsc}^*(\sH,x) = 1 - \max_{y\in \mathsf H(x)} q(x, y),
\]
and the calibration gap can be expressed as follows:
\begin{align*}
  \Delta \sC_{\ldefsc,\sH}(h, x)
  = \sC_{\ldefsc}(h, x)-\sC_{\ldefsc}^*(\sH,x)
  = \max_{y\in \mathsf H(x)} q(x, y)-q(x,\hh(x)),
\end{align*}
which completes the proof.
\end{proof}

\subsection{Conditional regret of a surrogate deferral loss}

\begin{restatable}{lemma}{SurrogateCalibrationGapScore}
\label{lemma:surrogate_calibration_gap_score}
For any $x \in \sX$,
the conditional surrogate $\lsc$-loss and regret 
can be expressed as follows:
\begin{align*}
  \sC_{\lsc}(h, x)
  & = \sum_{y \in [n + \num]} q(x, y) \ell(h, x, y)\\
  \Delta \sC_{\lsc}(h, x)
  & = \sum_{y \in [n + \num]} q(x, y) \ell(h, x, y) - \inf_{h \in \sH} \sum_{y \in [n + \num]} q(x, y) \ell(h, x, y).
\end{align*}
\end{restatable}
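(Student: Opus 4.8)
The plan is to compute the conditional surrogate loss directly from its definition and recognize that the weights appearing in it are precisely the quantities $q(x, y)$ already introduced. Starting from $\sC_{\lsc}(h, x) = \E_{y | x}[\lsc(h, x, y)] = \sum_{y \in \sY} \sfp(y \!\mid\! x)\, \lsc(h, x, y)$, I would substitute the explicit form of the surrogate deferral loss from \eqref{eq:sur-score}, namely $\lsc(h, x, y) = \ell(h, x, y) + \sum_{j = 1}^{\num} (1 - c_j(x, y))\, \ell(h, x, n + j)$. Distributing the outer sum over $\sY$ and using linearity then splits the expression into a prediction part $\sum_{y \in \sY} \sfp(y \!\mid\! x)\, \ell(h, x, y)$ and, for each expert $j$, a deferral contribution whose coefficient of $\ell(h, x, n + j)$ is $\sum_{y \in \sY} \sfp(y \!\mid\! x)\, (1 - c_j(x, y))$.

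The key step is to identify these coefficients with $q(x, y)$. For the prediction labels $y \in \sY$ this is immediate, since $q(x, y) = \sfp(y \!\mid\! x)$ by definition. For the deferral labels $n + j$, I would invoke that the conditional probabilities sum to one, $\sum_{y \in \sY} \sfp(y \!\mid\! x) = 1$, to write $\sum_{y \in \sY} \sfp(y \!\mid\! x)\, (1 - c_j(x, y)) = 1 - \sum_{y \in \sY} \sfp(y \!\mid\! x)\, c_j(x, y) = q(x, n + j)$, which is exactly the definition of $q$ on the augmented labels. Combining the prediction and deferral parts then collapses everything into the single sum $\sC_{\lsc}(h, x) = \sum_{y \in [n + \num]} q(x, y)\, \ell(h, x, y)$, establishing the first identity.

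For the conditional regret, I would simply take the pointwise infimum over $h \in \sH$ of the expression just derived and subtract, yielding $\Delta \sC_{\lsc}(h, x) = \sum_{y \in [n + \num]} q(x, y)\, \ell(h, x, y) - \inf_{h \in \sH} \sum_{y \in [n + \num]} q(x, y)\, \ell(h, x, y)$, which is the second claim. There is essentially no hard obstacle here; the only point requiring care is the normalization identity $\sum_{y \in \sY} \sfp(y \!\mid\! x) = 1$, which is what converts the conditional expectation of $1 - c_j(x, \cdot)$ into the quantity $q(x, n + j)$ and thereby merges the prediction and deferral contributions into a clean single sum over the augmented label set $[n + \num]$. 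This expression is the surrogate counterpart of Lemma~\ref{lemma:calibration_gap_score}, and it is exactly the form needed to subsequently lower-bound the deferral calibration gap by the surrogate one via the $\sH$-consistency bound of $\ell$ with respect to $\ell_{0-1}$ applied to the (normalized) distribution $\ov q(x, \cdot)$.
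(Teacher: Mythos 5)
Your proposal is correct and follows essentially the same route as the paper's proof: expand the conditional expectation of $\lsc$ using linearity, identify $\E_{y \mid x}\bracket*{1 - c_j(x, y)} = 1 - \sum_{y \in \sY} \sfp(y \!\mid\! x) c_j(x, y) = q(x, n+j)$ via the normalization of the conditional probabilities, and merge the prediction and deferral terms into a single sum over $[n + \num]$, with the regret identity following by subtracting the pointwise infimum. No gaps.
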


\begin{proof}
By definition, $\sC_{\lsc}(h, x)$ is the conditional-$\lsc$ loss can
be expressed as follows:
\begin{equation}
\label{eq:cond-surrogate}
\begin{aligned}
  & \sC_{\lsc}(h, x)\\
  & = \E_y
  \bracket*{\lsc(h, x, y) }\\
  & = \E_y
  \bracket*{\ell \paren*{h, x, y} } +\sum_{j = 1}^{\num}\E_{y | x}
  \bracket*{\paren*{1 - c_j(x, y)} }\ell\paren*{h, x, n + j}\\  
  & = \sum_{y \in \sY} q(x, y) \ell \paren*{h, x, y} + \sum_{j = 1}^{\num}  q(x,n+j)  \ell\paren*{h, x, n + j}\\  
  & = \sum_{y\in [n + \num]} q(x, y)  \ell(h, x, y),
\end{aligned}
\end{equation}
which ends the proof.
\end{proof}

\subsection{Conditional regret of zero-one loss}

We will also make use of the following result for the zero-one loss
$\ell_{0-1}(h, x, y) = \1_{\hh(x) \neq y}$ with label space $[n+\num]$
and the conditional probability vector $\ov q(x,\cdot)$, which
characterizes the minimal conditional $\ell_{0-1}$-loss and the
corresponding calibration gap
\citep[Lemma~3]{AwasthiMaoMohriZhong2022multi}.

\begin{restatable}{lemma}{ExplicitAssumptionQ}
\label{lemma:explicit_assumption_01_q}
For any $x \in \sX$,
the minimal conditional $\ell_{0-1}$-loss and
the calibration gap for $\ell_{0-1}$ can be expressed as follows:
\begin{align*}
\sC^*_{\ell_{0-1}}(x) & = 1 - \max_{y\in \mathsf H(x)} \ov q(x, y)\\
\Delta\sC_{\ell_{0-1}}(h,x) & = \max_{y\in \mathsf H(x)} \ov q(x, y) - \ov q(x,\hh(x)).
\end{align*}
\end{restatable}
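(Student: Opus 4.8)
The plan is to reduce this to a direct computation of the conditional zero-one risk under the normalized probability vector $\ov q(x,\cdot)$, mirroring the structure of the proofs of Lemma~\ref{lemma:calibration_gap_score} and Lemma~\ref{lemma:surrogate_calibration_gap_score}. The crucial fact I would exploit at the outset is that, unlike the unnormalized weights $q(x,\cdot)$, the normalized vector $\ov q(x, y) = q(x, y)/Q$ is a genuine probability distribution over $[n + \num]$, so that $\sum_{y \in [n + \num]} \ov q(x, y) = 1$. This is precisely what makes the indicator sum collapse cleanly.

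First I would write out the conditional $\ell_{0-1}$-loss of a fixed hypothesis $h$ with respect to $\ov q(x, \cdot)$:
\[
\sC_{\ell_{0-1}}(h, x)
= \sum_{y \in [n + \num]} \ov q(x, y) \1_{\hh(x) \neq y}
= \sum_{y \in [n + \num]} \ov q(x, y) - \ov q(x, \hh(x))
= 1 - \ov q(x, \hh(x)),
\]
where the last equality uses that $\ov q(x, \cdot)$ sums to one. The key structural observation is that this quantity depends on $h$ only through the predicted label $\hh(x)$.

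Second, I would take the infimum over $h \in \sH$. Since the only dependence is through $\hh(x)$, the infimum of $1 - \ov q(x, \hh(x))$ over $\sH$ equals $1 - \sup_{h \in \sH} \ov q(x, \hh(x))$, and the supremum ranges precisely over the achievable labels $\mathsf H(x) = \curl*{\hh(x) \colon h \in \sH}$, giving $\sC^*_{\ell_{0-1}}(x) = 1 - \max_{y \in \mathsf H(x)} \ov q(x, y)$. Subtracting this from the conditional loss then yields the calibration gap $\Delta\sC_{\ell_{0-1}}(h, x) = \max_{y \in \mathsf H(x)} \ov q(x, y) - \ov q(x, \hh(x))$, as claimed.

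The only point requiring care, and the closest thing to an obstacle, is the bookkeeping around the label set over which the maximum is taken: the infimum over the hypothesis set must be matched with the set $\mathsf H(x)$ of labels actually realizable by some $h \in \sH$, rather than with all of $[n + \num]$, and one must note that the supremum is attained because $\mathsf H(x)$ is a finite subset of $[n + \num]$. Since $\ov q(x, \cdot)$ is a bona fide conditional distribution, this computation is exactly the setting of \citep[Lemma~3]{AwasthiMaoMohriZhong2022multi}; alternatively, the result follows immediately by instantiating that lemma with the distribution $\ov q(x, \cdot)$. I would present the short self-contained computation above and remark on this correspondence.
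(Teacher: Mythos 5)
Your proposal is correct and matches the paper's treatment: the paper states this lemma without a separate proof, obtaining it directly as an instance of \citep[Lemma~3]{AwasthiMaoMohriZhong2022multi} applied with label space $[n+\num]$ and the probability vector $\ov q(x,\cdot)$, which is exactly the correspondence you note at the end, and your short self-contained computation is the same argument the paper itself uses for the sibling results (Lemma~\ref{lemma:calibration_gap_score} and Lemma~\ref{lemma:calibration_gap_score-mabsc}). Nothing is missing; the only implicit fact, that $Q \geq 1 > 0$ so $\ov q(x,\cdot)$ is a bona fide distribution, follows from $c_j \leq \ov c_j \leq 1$.
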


\subsection{Proof of \texorpdfstring{$\sH$}{H}-consistency bounds
  for  deferral surrogate losses (Theorem~\ref{Thm:bound-score})}
\label{app:score}

\BoundScore*
\begin{proof}
We denote the normalization factor as $Q=\sum_{y\in [n+\num]}q(x,
y)=\num+1-\E_y \bracket*{c_j(x, y)}$, which is a constant that ensures
the sum of $\ov q(x, y)=\frac{q(x, y)}{Q}$ is equal to 1.  By
Lemma~\ref{lemma:calibration_gap_score}, the calibration gap of
$\ldefsc$ can be expressed and upper-bounded as follows:
\begin{align*}
& \Delta \sC_{\ldefsc}(h, x)\\
& = \max_{y\in \mathsf H(x)} q(x, y) - q(x, \hh(x)) \tag{Lemma~\ref{lemma:calibration_gap_score}}\\
& = Q \paren*{\max_{y\in \mathsf H(x)} \ov q(x, y) -  \ov q(x,  \hh(x))}\\
& = Q \Delta \sC_{\ell_{0-1}}(h, x)\tag{Lemma~\ref{lemma:explicit_assumption_01_q}}\\
& \leq Q\Gamma\paren*{\Delta \sC_{ \ell, \sH}(h, x)} \tag{$\sH$-consistency
bound of $ \ell$}\\
& = Q\Gamma\bigg(\sum_{y\in [n + \num]} \ov q(x, y)  \ell(h,x, y) -\inf_{h \in \sH}\sum_{y\in [n + \num]} \ov q(x, y)  \ell(h,x, y)\bigg) \\
& = Q\Gamma\bigg(\sum_{y\in [n + \num]} \frac{q(x, y)}{Q}  \ell(h,x, y) -\inf_{h \in \sH}\sum_{y\in [n + \num]} \frac{q(x, y)}{Q}  \ell(h,x, y)\bigg) \\
& = Q\Gamma\paren*{\frac{1}{Q}\Delta \sC_{\lsc}(h, x)} \tag{Lemma~\ref{lemma:surrogate_calibration_gap_score}}.
\end{align*}
Thus, taking expectations gives:
\begin{align*}
&\sE_{\ldefsc}(h) - \sE_{\ldefsc}^*( \sH) + \sM_{\ldefsc}( \sH)\\
& = \E_{X}\bracket*{\Delta \sC_{\ldefsc}(h, x)}\\
& \leq \E_X\bracket*{Q\Gamma\paren*{\frac{1}{Q}\Delta \sC_{\lsc}(h, x)}}\\
& \leq Q \Gamma\paren*{\frac{1}{Q} \E_X\bracket*{\Delta \sC_{\lsc}(h, x)}}
\tag{concavity of $\Gamma$ and Jensen's ineq.}\\
& = Q \Gamma\paren*{\frac{\sE_{\lsc}(h)-\sE_{\lsc}^*( \sH) + \sM_{\lsc}( \sH)}{Q}}\\
& = \paren*{\num+1-\E_y
  \bracket*{c_j(x, y)}} \Gamma\paren*{\frac{\sE_{\lsc}(h) - \sE_{\lsc}^*( \sH) + \sM_{\lsc}( \sH)}{\num+1-\E_y
  \bracket*{c_j(x, y)}}}\\
&\leq \paren*{\num + 1-\sum_{j = 1}^{\num}\uv c_j} \Gamma\paren*{\frac{\sE_{\lsc}(h) - \sE_{\lsc}^*( \sH) + \sM_{\lsc}( \sH)}{\num + 1 - \sum_{j = 1}^{\num}\ov c_j}}
\tag{$\uv c_j \leq c_j(x, y)\leq \ov c_j, \forall j\in [\num]$}
\end{align*}
and $\sE_{\ldefsc}(h) - \sE_{\ldefsc}^*( \sH) + \sM_{\ldefsc}( \sH)\leq \Gamma\paren*{\sE_{\lsc}(h) - \sE_{\lsc}^*( \sH) + \sM_{\lsc}( \sH)}$ when $\Gamma$ is linear, which completes the proof.
\end{proof}

\section{Examples of deferral surrogate losses and
  their \texorpdfstring{$\sH$}{H}-consistency bounds}
\label{app:sur-score-example}
\subsection{\texorpdfstring{$\ell$}{ell} being adopted as comp-sum losses}
\label{app:sur-score-example-comp}
\paragraph{Example: $\ell=\ell_{\rm{exp}}$.} Plug in $\ell=\ell_{\rm{exp}}=\sum_{y'\neq y} e^{h(x, y') - h(x, y)}$ in \eqref{eq:sur-score}, we obtain
\begin{align*}
\sfL = \sum_{y'\neq y} e^{h(x, y') - h(x, y)} +\sum_{j=1}^{\num}(1-c_j(x,y))\sum_{y'\neq n+j} e^{h(x, y') - h(x, n+j)}.
\end{align*}
By \citet[Theorem~1]{MaoMohriZhong2023cross}, $\ell_{\rm{exp}}$  admits an $\sH$-consistency bound with respect to $\ell_{0-1}$ with $\Gamma(t)=\sqrt{2t}$, using Corollary~\ref{cor:bound-score}, we obtain
\begin{equation*}
\sE_{\ldefsc}(h) - \sE_{\ldefsc}^*( \sH)
\leq \sqrt{2}\paren[\bigg]{\num + 1 - \sum_{j = 1}^{\num}\uv c_j} \paren*{\frac{\sE_{\lsc}(h) - \sE_{\lsc}^*( \sH)}{\num + 1-\sum_{j = 1}^{\num}\ov c_j}}^{\frac12}.
\end{equation*}
Since $1 \leq \num + 1 - \sum_{j = 1}^{\num}\ov
c_j\leq \num + 1 - \sum_{j = 1}^{\num}\uv c_j\leq \num + 1$, the bound can be simplified as
\begin{equation*}
\sE_{\ldefsc}(h) - \sE_{\ldefsc}^*( \sH)
\leq \sqrt{2}(\num+1)\paren*{\sE_{\lsc}(h) - \sE_{\lsc}^*( \sH)}^{\frac12}.
\end{equation*}

\paragraph{Example: $\ell=\ell_{\rm{log}}$.} Plug in $\ell=\ell_{\rm{log}}=- \log \bracket*{\frac{e^{h(x,y)}}{\sum_{y' \in \ov\sY} e^{h(x,y')}}}$ in \eqref{eq:sur-score}, we obtain
\begin{align*}
\sfL = -\log\paren*{\frac{e^{h(x,y)}}{\sum_{y'\in \ov \sY}e^{h(x,y')}}} -\sum_{j=1}^{\num}(1-c_j(x,y))\log\paren*{\frac{e^{h(x,n+j)}}{\sum_{y'\in \ov \sY}e^{h(x,y')}}}.
\end{align*}
By \citet[Theorem~1]{MaoMohriZhong2023cross}, $\ell_{\rm{log}}$  admits an $\sH$-consistency bound with respect to $\ell_{0-1}$ with $\Gamma(t)=\sqrt{2t}$, using Corollary~\ref{cor:bound-score}, we obtain
\begin{equation*}
\sE_{\ldefsc}(h) - \sE_{\ldefsc}^*( \sH)
\leq \sqrt{2}\paren[\bigg]{\num + 1 - \sum_{j = 1}^{\num}\uv c_j} \paren*{\frac{\sE_{\lsc}(h) - \sE_{\lsc}^*( \sH)}{\num + 1-\sum_{j = 1}^{\num}\ov c_j}}^{\frac12}.
\end{equation*}
Since $1 \leq \num + 1 - \sum_{j = 1}^{\num}\ov
c_j\leq \num + 1 - \sum_{j = 1}^{\num}\uv c_j\leq \num + 1$, the bound can be simplified as
\begin{equation*}
\sE_{\ldefsc}(h) - \sE_{\ldefsc}^*( \sH)
\leq \sqrt{2}(\num+1)\paren*{\sE_{\lsc}(h) - \sE_{\lsc}^*( \sH)}^{\frac12}.
\end{equation*}

\paragraph{Example: $\ell=\ell_{\rm{gce}}$.} Plug in $\ell=\ell_{\rm{gce}}==\frac{1}{\alpha}\bracket*{1 - \bracket*{\frac{e^{h(x,y)}}
    {\sum_{y'\in \ov\sY} e^{h(x,y')}}}^{\alpha}}$ in \eqref{eq:sur-score}, we obtain
\begin{align*}
\sfL = \frac{1}{\alpha}\bracket*{1 - \bracket*{\frac{e^{h(x,y)}}
    {\sum_{y'\in \ov \sY} e^{h(x,y')}}}^{\alpha}} +\frac{1}{\alpha}\sum_{j=1}^{\num}(1-c_j(x,y))\bracket*{1 - \bracket*{\frac{e^{h(x,n+j)}}
    {\sum_{y'\in \ov \sY} e^{h(x,y')}}}^{\alpha}}.
\end{align*}
By \citet[Theorem~1]{MaoMohriZhong2023cross}, $\ell_{\rm{gce}}$  admits an $\sH$-consistency bound with respect to $\ell_{0-1}$ with $\Gamma(t)=\sqrt{2n^{\alpha}t}$, using Corollary~\ref{cor:bound-score}, we obtain
\begin{equation*}
\sE_{\ldefsc}(h) - \sE_{\ldefsc}^*( \sH)
\leq \sqrt{2n^{
\alpha}}\paren[\bigg]{\num + 1 - \sum_{j = 1}^{\num}\uv c_j} \paren*{\frac{\sE_{\lsc}(h) - \sE_{\lsc}^*( \sH)}{\num + 1-\sum_{j = 1}^{\num}\ov c_j}}^{\frac12}.
\end{equation*}
Since $1 \leq \num + 1 - \sum_{j = 1}^{\num}\ov
c_j\leq \num + 1 - \sum_{j = 1}^{\num}\uv c_j\leq \num + 1$, the bound can be simplified as
\begin{equation*}
\sE_{\ldefsc}(h) - \sE_{\ldefsc}^*( \sH)
\leq \sqrt{2n^{
\alpha}}(\num+1)\paren*{\sE_{\lsc}(h) - \sE_{\lsc}^*( \sH)}^{\frac12}.
\end{equation*}

\paragraph{Example: $\ell=\ell_{\rm{mae}}$.} Plug in $\ell=\ell_{\rm{mae}}=1 - \frac{e^{h(x,y)}}{\sum_{y'\in \sY} e^{h(x, y')}}$ in \eqref{eq:sur-score}, we obtain
\begin{align*}
\sfL = 1 - \frac{e^{h(x,y)}}{\sum_{y'\in \ov \sY} e^{h(x, y')}} +\sum_{j=1}^{\num}(1-c_j(x,y))\paren*{1 - \frac{e^{h(x,n+j)}}{\sum_{y'\in \ov \sY} e^{h(x, y')}}}.
\end{align*}
By \citet[Theorem~1]{MaoMohriZhong2023cross}, $\ell_{\rm{mae}}$  admits an $\sH$-consistency bound with respect to $\ell_{0-1}$ with $\Gamma(t)=nt$, using Corollary~\ref{cor:bound-score}, we obtain
\begin{equation*}
\sE_{\ldefsc}(h) - \sE_{\ldefsc}^*( \sH)
\leq  n \paren*{\sE_{\lsc}(h) - \sE_{\lsc}^*( \sH)}.
\end{equation*}

\subsection{\texorpdfstring{$\ell$}{ell} being adopted as sum losses}
\label{app:sur-score-example-sum}
\paragraph{Example: $\ell=\Phi_{\mathrm{sq}}^{\mathrm{sum}}$.} Plug in $\ell=\Phi_{\mathrm{sq}}^{\mathrm{sum}}=\sum_{y'\neq y}\Phi_{\rm{sq}}\paren*{h(x,y)-h(x,y')}$ in \eqref{eq:sur-score}, we obtain
\begin{align*}
\sfL = \sum_{y'\neq y} \Phi_{\rm{sq}}\paren*{\Delta_h(x,y,y')} +\sum_{j=1}^{\num}(1-c_j(x,y))\sum_{y'\neq n+j} \Phi_{\rm{sq}}\paren*{\Delta_h(x,n+j,y')},
\end{align*}
where $\Delta_h(x,y,y')=h(x, y) - h(x, y')$ and $\Phi_{\mathrm{sq}}(t)=\max\curl*{0, 1 - t}^2$.
By \citet[Table~2]{AwasthiMaoMohriZhong2022multi}, $\Phi_{\mathrm{sq}}^{\mathrm{sum}}$ admits an $\sH$-consistency bound with respect to $\ell_{0-1}$ with $\Gamma(t)=\sqrt{t}$, using Corollary~\ref{cor:bound-score}, we obtain
\begin{equation*}
\sE_{\ldefsc}(h) - \sE_{\ldefsc}^*( \sH)
\leq \paren[\bigg]{\num + 1 - \sum_{j = 1}^{\num}\uv c_j} \paren*{\frac{\sE_{\lsc}(h) - \sE_{\lsc}^*( \sH)}{\num + 1-\sum_{j = 1}^{\num}\ov c_j}}^{\frac12}.
\end{equation*}
Since $1 \leq \num + 1 - \sum_{j = 1}^{\num}\ov
c_j\leq \num + 1 - \sum_{j = 1}^{\num}\uv c_j\leq \num + 1$, the bound can be simplified as
\begin{equation*}
\sE_{\ldefsc}(h) - \sE_{\ldefsc}^*( \sH)
\leq (\num+1)\paren*{\sE_{\lsc}(h) - \sE_{\lsc}^*( \sH)}^{\frac12}.
\end{equation*}

\paragraph{Example: $\ell=\Phi_{\mathrm{exp}}^{\mathrm{sum}}$.} Plug in $\ell=\Phi_{\mathrm{exp}}^{\mathrm{sum}}=\sum_{y'\neq y}\Phi_{\rm{exp}}\paren*{h(x,y)-h(x,y')}$ in \eqref{eq:sur-score}, we obtain
\begin{align*}
\sfL = \sum_{y'\neq y} \Phi_{\rm{exp}}\paren*{\Delta_h(x,y,y')} +\sum_{j=1}^{\num}(1-c_j(x,y))\sum_{y'\neq n+j} \Phi_{\rm{exp}}\paren*{\Delta_h(x,n+j,y')},
\end{align*}
where $\Delta_h(x,y,y')=h(x, y) - h(x, y')$ and $\Phi_{\mathrm{exp}}(t)=e^{-t}$.
By \citet[Table~2]{AwasthiMaoMohriZhong2022multi}, $\Phi_{\mathrm{exp}}^{\mathrm{sum}}$ admits an $\sH$-consistency bound with respect to $\ell_{0-1}$ with $\Gamma(t)=\sqrt{2t}$, using Corollary~\ref{cor:bound-score}, we obtain
\begin{equation*}
\sE_{\ldefsc}(h) - \sE_{\ldefsc}^*( \sH)
\leq \sqrt{2}\paren[\bigg]{\num + 1 - \sum_{j = 1}^{\num}\uv c_j} \paren*{\frac{\sE_{\lsc}(h) - \sE_{\lsc}^*( \sH)}{\num + 1-\sum_{j = 1}^{\num}\ov c_j}}^{\frac12}.
\end{equation*}
Since $1 \leq \num + 1 - \sum_{j = 1}^{\num}\ov
c_j\leq \num + 1 - \sum_{j = 1}^{\num}\uv c_j\leq \num + 1$, the bound can be simplified as
\begin{equation*}
\sE_{\ldefsc}(h) - \sE_{\ldefsc}^*( \sH)
\leq \sqrt{2}(\num+1)\paren*{\sE_{\lsc}(h) - \sE_{\lsc}^*( \sH)}^{\frac12}.
\end{equation*}

\paragraph{Example: $\ell=\Phi_{\rho}^{\mathrm{sum}}$.} Plug in $\ell=\Phi_{\rho}^{\mathrm{sum}}=\sum_{y'\neq y}\Phi_{\rho}\paren*{h(x,y)-h(x,y')}$ in \eqref{eq:sur-score}, we obtain
\begin{align*}
\sfL &= \sum_{y'\neq y} \Phi_{\rho}\paren*{\Delta_h(x,y,y')} + \sum_{j=1}^{\num}(1-c_j(x,y))\sum_{y'\neq n+j} \Phi_{\rho}\paren*{\Delta_h(x,n+j,y')},
\end{align*}
where $\Delta_h(x,y,y')=h(x, y) - h(x, y')$ and $\Phi_{\rho}(t)=\min\curl*{\max\curl*{0,1 - t/\rho},1}$.
By \citet[Table~2]{AwasthiMaoMohriZhong2022multi}, $\Phi_{\rho}^{\mathrm{sum}}$ admits an $\sH$-consistency bound with respect to $\ell_{0-1}$ with $\Gamma(t)=t$, using Corollary~\ref{cor:bound-score}, we obtain
\begin{equation*}
\sE_{\ldefsc}(h) - \sE_{\ldefsc}^*( \sH)
\leq \sE_{\lsc}(h) - \sE_{\lsc}^*( \sH).
\end{equation*}

\subsection{\texorpdfstring{$\ell$}{ell} being adopted as constrained losses}
\label{app:sur-score-example-cstnd}
\paragraph{Example: $\ell=\Phi_{\mathrm{hinge}}^{\mathrm{cstnd}}$.} Plug in $\ell=\Phi_{\mathrm{hinge}}^{\mathrm{cstnd}}=\sum_{y'\neq y}\Phi_{\mathrm{hinge}}\paren*{-h(x, y')}$ in \eqref{eq:sur-score}, we obtain
\begin{align*}
\sfL = \sum_{y'\neq y}\Phi_{\mathrm{hinge}}\paren*{-h(x, y')} + \sum_{j=1}^{\num}(1-c_j(x,y))\sum_{y'\neq n+j}\Phi_{\mathrm{hinge}}\paren*{-h(x, y')},
\end{align*}
where $\Phi_{\mathrm{hinge}}(t) = \max\curl*{0,1 - t}$ with the constraint that $\sum_{y\in \sY}h(x,y)=0$.
By \citet[Table~3]{AwasthiMaoMohriZhong2022multi}, $\Phi_{\mathrm{hinge}}^{\mathrm{cstnd}}$ admits an $\sH$-consistency bound with respect to $\ell_{0-1}$ with $\Gamma(t)=t$, using Corollary~\ref{cor:bound-score}, we obtain
\begin{equation*}
\sE_{\ldefsc}(h) - \sE_{\ldefsc}^*( \sH)
\leq \sE_{\lsc}(h) - \sE_{\lsc}^*( \sH).
\end{equation*}

\paragraph{Example: $\ell=\Phi_{\mathrm{sq}}^{\mathrm{cstnd}}$.} Plug in $\ell=\Phi_{\mathrm{sq}}^{\mathrm{cstnd}}=\sum_{y'\neq y}\Phi_{\mathrm{sq}}\paren*{-h(x, y')}$ in \eqref{eq:sur-score}, we obtain
\begin{align*}
\sfL = \sum_{y'\neq y}\Phi_{\mathrm{sq}}\paren*{-h(x, y')} +\sum_{j=1}^{\num}(1-c_j(x,y))\sum_{y'\neq n+j}\Phi_{\mathrm{sq}}\paren*{-h(x, y')},
\end{align*}
where $\Phi_{\mathrm{sq}}(t) = \max\curl*{0, 1 - t}^2$ with the constraint that $\sum_{y\in \sY}h(x,y)=0$.
By \citet[Table~3]{AwasthiMaoMohriZhong2022multi}, $\Phi_{\mathrm{sq}}^{\mathrm{cstnd}}$ admits an $\sH$-consistency bound with respect to $\ell_{0-1}$ with $\Gamma(t)=\sqrt{t}$, using Corollary~\ref{cor:bound-score}, we obtain
\begin{equation*}
\sE_{\ldefsc}(h) - \sE_{\ldefsc}^*( \sH)
\leq \paren[\bigg]{\num + 1 - \sum_{j = 1}^{\num}\uv c_j} \paren*{\frac{\sE_{\lsc}(h) - \sE_{\lsc}^*( \sH)}{\num + 1-\sum_{j = 1}^{\num}\ov c_j}}^{\frac12}.
\end{equation*}
Since $1 \leq \num + 1 - \sum_{j = 1}^{\num}\ov
c_j\leq \num + 1 - \sum_{j = 1}^{\num}\uv c_j\leq \num + 1$, the bound can be simplified as
\begin{equation*}
\sE_{\ldefsc}(h) - \sE_{\ldefsc}^*( \sH)
\leq (\num+1)\paren*{\sE_{\lsc}(h) - \sE_{\lsc}^*( \sH)}^{\frac12}.
\end{equation*}

\paragraph{Example: $\ell=\Phi_{\mathrm{exp}}^{\mathrm{cstnd}}$.} Plug in $\ell=\Phi_{\mathrm{exp}}^{\mathrm{cstnd}}=\sum_{y'\neq y}\Phi_{\mathrm{exp}}\paren*{-h(x, y')}$ in \eqref{eq:sur-score}, we obtain
\begin{align*}
\sfL = \sum_{y'\neq y}\Phi_{\mathrm{exp}}\paren*{-h(x, y')} + \sum_{j=1}^{\num}(1-c_j(x,y))\sum_{y'\neq n+j}\Phi_{\mathrm{exp}}\paren*{-h(x, y')},
\end{align*}
where $\Phi_{\mathrm{exp}}(t)=e^{-t}$ with the constraint that $\sum_{y\in \sY}h(x,y)=0$.
By \citet[Table~3]{AwasthiMaoMohriZhong2022multi}, $\Phi_{\mathrm{exp}}^{\mathrm{cstnd}}$ admits an $\sH$-consistency bound with respect to $\ell_{0-1}$ with $\Gamma(t)=\sqrt{2t}$, using Corollary~\ref{cor:bound-score}, we obtain
\begin{equation*}
\sE_{\ldefsc}(h) - \sE_{\ldefsc}^*( \sH)
\leq \sqrt{2}\paren[\bigg]{\num + 1 - \sum_{j = 1}^{\num}\uv c_j} \paren*{\frac{\sE_{\lsc}(h) - \sE_{\lsc}^*( \sH)}{\num + 1-\sum_{j = 1}^{\num}\ov c_j}}^{\frac12}.
\end{equation*}
Since $1 \leq \num + 1 - \sum_{j = 1}^{\num}\ov
c_j\leq \num + 1 - \sum_{j = 1}^{\num}\uv c_j\leq \num + 1$, the bound can be simplified as
\begin{equation*}
\sE_{\ldefsc}(h) - \sE_{\ldefsc}^*( \sH)
\leq \sqrt{2}(\num+1)\paren*{\sE_{\lsc}(h) - \sE_{\lsc}^*( \sH)}^{\frac12}.
\end{equation*}

\paragraph{Example: $\ell=\Phi_{\rho}^{\mathrm{cstnd}}$.} Plug in $\ell=\Phi_{\rho}^{\mathrm{cstnd}}=\sum_{y'\neq y}\Phi_{\rho}\paren*{-h(x, y')}$ in \eqref{eq:sur-score}, we obtain
\begin{align*}
\sfL =\sum_{y'\neq y}\Phi_{\rho}\paren*{-h(x, y')} + \sum_{j=1}^{\num}(1-c_j(x,y))\sum_{y'\neq n+j}\Phi_{\rho}\paren*{-h(x, y')},
\end{align*}
where $\Phi_{\rho}(t)=\min\curl*{\max\curl*{0,1 - t/\rho},1}$ with the constraint that $\sum_{y\in \sY}h(x,y)=0$.
By \citet[Table~3]{AwasthiMaoMohriZhong2022multi}, $\Phi_{\rho}^{\mathrm{cstnd}}$ admits an $\sH$-consistency bound with respect to $\ell_{0-1}$ with $\Gamma(t)=t$, using Corollary~\ref{cor:bound-score}, we obtain
\begin{equation*}
\sE_{\ldefsc}(h) - \sE_{\ldefsc}^*( \sH)
\leq \sE_{\lsc}(h) - \sE_{\lsc}^*( \sH).
\end{equation*}

\section{Proof of learning bounds
  for deferral surrogate losses (Theorem~\ref{Thm:Gbound-score})}
\label{app:Gbound-score}
\GBoundScore*
\begin{proof}
  By using the standard Rademacher complexity bounds \citep{MohriRostamizadehTalwalkar2018}, for any $\delta>0$,
  with probability at least $1 - \delta$, the following holds for all $h \in \sH$:
\[
\abs*{\sE_{\sfL}(h) - \h\sE_{\sfL,S}(h)}
\leq 2 \Rad_m^{\sfL}(\sH) +
B_{\sfL} \sqrt{\tfrac{\log (2/\delta)}{2m}}.
\]
Fix $\e > 0$. By the definition of the infimum, there exists $h^* \in
\sH$ such that $\sE_{\sfL}(h^*) \leq
\sE_{\sfL}^*(\sH) + \e$. By definition of
$\h h_S$, we have
\begin{align*}
  & \sE_{\sfL}(\h h_S) - \sE_{\sfL}^*(\sH)\\
  & = \sE_{\sfL}(\h h_S) - \h\sE_{\sfL,S}(\h h_S) + \h\sE_{\sfL,S}(\h h_S) - \sE_{\sfL}^*(\sH)\\
  & \leq \sE_{\sfL}(\h h_S) - \h\sE_{\sfL,S}(\h h_S) + \h\sE_{\sfL,S}(h^*) - \sE_{\sfL}^*(\sH)\\
  & \leq \sE_{\sfL}(\h h_S) - \h\sE_{\sfL,S}(\h h_S) + \h\sE_{\sfL,S}(h^*) - \sE_{\sfL}^*(h^*) + \e\\
  & \leq
  2 \bracket*{2 \Rad_m^{\sfL}(\sH) +
B_{\sfL} \sqrt{\tfrac{\log (2/\delta)}{2m}}} + \e.
\end{align*}
Since the inequality holds for all $\e > 0$, it implies:
\[
\sE_{\sfL}(\h h_S) - \sE_{\sfL}^*(\sH)
\leq 
4 \Rad_m^{\sfL}(\sH) +
2 B_{\sfL} \sqrt{\tfrac{\log (2/\delta)}{2m}}.
\]
Plugging in this inequality in the bound
\eqref{eq:H-consistency-bounds} completes the proof.
\end{proof}
\restoreatoc

\chapter{Appendix to Chapter~\ref{ch5}}

\disableatoc
\section{Examples of two-stage score-based surrogate losses}
\label{app:sur-score-example-two-stage}

\textbf{Example: $\ell_2 = \ell_{\rm{exp}}$.} For $\ell_2(\ov h_d,x,
y) = \ell_{\rm{exp}}(\ov h_d,x, y) = \sum_{y'\neq y}e^{\ov h_d(x, y')
  - \ov h_d(x, y)}$, by \eqref{eq:ell-Phi-h-score}, we have
\begin{align*}
&\lsc_{\hp} \paren*{\hd, x, y}\\
& = \1_{\hh_p(x) = y} \, \ell_2(\ov h_d, x, 0)
+ \sum_{j = 1}^{\num}\bar c_j(x, y) \ell_2(\ov h_d, x, j)\\
& = \1_{\hh_p(x) = y} \, \sum_{y'\neq 0}e^{\ov h_d(x, y') - \ov h_d(x, 0)}
+ \sum_{j = 1}^{\num}\bar c_j(x, y) \sum_{y'\neq j}e^{\ov h_d(x, y') - \ov h_d(x, j)}\\
& = \1_{\hhp(x) = y} \sum_{i = 1}^{\num}e^{h(x, n+i) - \max_{y\in \sY}h(x, y)} + \sum_{j = 1}^{\num}\bar c_j(x, y)\bracket*{\sum_{i = 1,i\neq j }^{\num}e^{h(x, n+i) -h(x, n+j)}+e^{\max_{y\in \sY}h(x, y) - h(x, n+j)}}.
\end{align*}

\textbf{Example: $\ell_2 = \ell_{\rm{log}}$.} For $\ell_2(\ov h_d,x,
y) = \ell_{\rm{log}}(\ov h_d,x, y) = \log\paren*{\sum_{y'\in \sY \cup
    \curl{0}}e^{\ov h_d(x, y') - \ov h_d(x, y)}}$, by
\eqref{eq:ell-Phi-h-score}, we have
\begin{align*}
&\lsc_{\hp} \paren*{\hd, x, y}\\
& = \1_{\hh_p(x) = y} \, \ell_2(\ov h_d, x, 0)
+ \sum_{j = 1}^{\num}\bar c_j(x, y) \ell_2(\ov h_d, x, j)\\
& = \1_{\hh_p(x) = y} \, \log\paren*{\sum_{y'\in \sY \cup \curl{0}}e^{\ov h_d(x, y') - \ov h_d(x, 0)}}
+ \sum_{j = 1}^{\num}\bar c_j(x, y) \log\paren*{\sum_{y'\in \sY \cup \curl{0}}e^{\ov h_d(x, y') - \ov h_d(x, j)}}\\
& = - \1_{\hhp(x) = y}\log\paren*{\frac{e^{\max_{y\in \sY} h(x, y)}}{e^{\max_{y\in \sY} h(x, y)}+ \sum_{i = 1}^{\num}e^{h(x, n+i)}}} - \sum_{j = 1}^{\num}\bar c_j(x, y)\log\paren*{\frac{e^{h(x, n+j)}}{e^{\max_{y\in \sY} h(x, y)} + \sum_{i = 1}^{\num}e^{h(x, n+i)}}}.
\end{align*}

\textbf{Example: $\ell_2 = \ell_{\rm{gce}}$.} For $\ell_2(\ov h_d,x,
y) = \ell_{\rm{gce}}(\ov h_d,x, y) = \frac{1}{\alpha}\bracket*{1 -
  \bracket*{\frac{e^{\ov h_d(x, y)}} {\sum_{y'\in \sY \cup \curl{0}}
      e^{\ov h_d(x, y')}}}^{\alpha}},\alpha\in (0,1)$, by
\eqref{eq:ell-Phi-h-score}, we have
\begin{align*}
&\lsc_{\hp} \paren*{\hd, x, y}\\
& = \1_{\hh_p(x) = y} \, \ell_2(\ov h_d, x, 0)
+ \sum_{j = 1}^{\num}\bar c_j(x, y) \ell_2(\ov h_d, x, j)\\
& = \1_{\hh_p(x) = y} \, \frac{1}{\alpha}\bracket*{1 - \bracket*{\frac{e^{\ov h_d(x,0)}}
  {\sum_{y'\in \sY \cup \curl{0}} e^{\ov h_d(x, y')}}}^{\alpha}}
+ \sum_{j = 1}^{\num}\bar c_j(x, y) \frac{1}{\alpha}\bracket*{1 - \bracket*{\frac{e^{\ov h_d(x,j)}}
  {\sum_{y'\in \sY \cup \curl{0}} e^{\ov h_d(x, y')}}}^{\alpha}}\\
& = \1_{\hhp(x) = y}\frac{1}{\alpha}\bracket*{1- \bracket*{\frac{e^{\max_{y\in \sY} h(x, y)}}{e^{\max_{y\in \sY} h(x, y)}+ \sum_{i = 1}^{\num}e^{h(x, n+i)}}}^{\alpha}}\\
&\qquad + \sum_{j = 1}^{\num}\bar c_j(x, y)\frac{1}{\alpha}\bracket*{1- \bracket*{\frac{e^{h(x, n+j)}}{e^{\max_{y\in \sY} h(x, y)} + \sum_{i = 1}^{\num}e^{h(x, n+i)}}}^{\alpha}}.
\end{align*}

\textbf{Example: $\ell_2 = \ell_{\rm{mae}}$.} For $\ell_2(\ov h_d,x,
y) = \ell_{\rm{mae}}(\ov h_d,x, y) = 1 - \frac{e^{\ov h_d(x,
    y)}}{\sum_{y'\in \sY \cup \curl{0}} e^{\ov h_d(x, y')}}$, by
\eqref{eq:ell-Phi-h-score}, we have
\begin{align*}
&\lsc_{\hp} \paren*{\hd, x, y}\\
& = \1_{\hh_p(x) = y} \, \ell_2(\ov h_d, x, 0)
+ \sum_{j = 1}^{\num}\bar c_j(x, y) \ell_2(\ov h_d, x, j)\\
& = \1_{\hh_p(x) = y} \, \paren*{1 - \frac{e^{\ov h_d(x,0)}}{\sum_{y'\in \sY \cup \curl{0}} e^{\ov h_d(x, y')}}}
+ \sum_{j = 1}^{\num}\bar c_j(x, y) \paren*{1 - \frac{e^{\ov h_d(x,j)}}{\sum_{y'\in \sY \cup \curl{0}} e^{\ov h_d(x, y')}}}\\
& = \1_{\hhp(x) = y}\bracket*{1- \frac{e^{\max_{y\in \sY} h(x, y)}}{e^{\max_{y\in \sY} h(x, y)}+ \sum_{i = 1}^{\num}e^{ h(x, n+i)}}} + \sum_{j = 1}^{\num}\bar c_j(x, y)\bracket*{1- \frac{e^{h(x, n+j)}}{e^{\max_{y\in \sY} h(x, y)} + \sum_{i = 1}^{\num}e^{h(x, n+i)}}}.
\end{align*}

\section{Examples of two-stage predictor-rejector surrogate losses}
\label{app:sur-general-example-two-stage}

\textbf{Example: $\ell_2 = \ell_{\rm{exp}}$.} For $\ell_2(\ov r,x, y)
= \ell_{\rm{exp}}(\ov r,x, y) = \sum_{y'\neq y}e^{\ov r(x, y') - \ov
  r(x, y)}$, by \eqref{eq:ell-Phi-h-multi}, we have
\begin{align*}
&\sfL_{h} \paren*{r, x, y}\\
& = \1_{\hh(x) = y} \, \ell_2(\ov r, x, 0)
+ \sum_{j = 1}^{\num}\bar c_j(x, y) \ell_2(\ov r, x, j)\\
& = \1_{\hh(x) = y} \, \sum_{y'\neq 0}e^{\ov r(x, y') - \ov r(x, 0)}
+ \sum_{j = 1}^{\num}\bar c_j(x, y) \sum_{y'\neq j}e^{\ov r(x, y') - \ov r(x, j)}\\
& = \1_{\hh(x) = y} \sum_{i = 1}^{\num}e^{-r_i(x)} + \sum_{j = 1}^{\num}\bar c_j(x, y)\bracket*{\sum_{i = 1,i\neq j }^{\num}e^{r_j(x) -r_i(x)}+e^{r_j(x)}}.
\end{align*}

\textbf{Example: $\ell_2 = \ell_{\rm{log}}$.} For $\ell_2(\ov r,x, y)
= \ell_{\rm{log}}(\ov r,x, y) = \log\paren*{\sum_{y'\in \sY \cup
    \curl{0}}e^{\ov r(x, y') - \ov r(x, y)}}$, by
\eqref{eq:ell-Phi-h-multi}, we have
\begin{align*}
&\sfL_{h} \paren*{r, x, y}\\
& = \1_{\hh(x) = y} \, \ell_2(\ov r, x, 0)
+ \sum_{j = 1}^{\num} \bar c_j(x, y) \ell_2(\ov r, x, j)\\
& = \1_{\hh(x) = y} \, \log\paren*{\sum_{y'\in \sY \cup \curl{0}}e^{\ov r(x, y') - \ov r(x, 0)}}
+ \sum_{j = 1}^{\num}\bar c_j(x, y) \log\paren*{\sum_{y'\in \sY \cup \curl{0}}e^{\ov r(x, y') - \ov r(x, j)}}\\
& = - \1_{\hh(x) = y} \log\paren*{\frac{1}{1+ \sum_{i = 1}^{\num}e^{-r_i(x)}}}- \sum_{j = 1}^{\num}\bar c_j(x, y)\log\paren*{\frac{e^{-r_j(x)}}{1+ \sum_{i = 1}^{\num}e^{-r_i(x)}}}.
\end{align*}

\textbf{Example: $\ell_2 = \ell_{\rm{gce}}$.} For $\ell_2(\ov r,x, y)
= \ell_{\rm{gce}}(\ov r,x, y) = \frac{1}{\alpha}\bracket*{1 -
  \bracket*{\frac{e^{\ov r(x, y)}} {\sum_{y'\in \sY \cup \curl{0}}
      e^{\ov r(x, y')}}}^{\alpha}},\alpha\in (0,1)$, by
\eqref{eq:ell-Phi-h-multi}, we have
\begin{align*}
&\sfL_{h} \paren*{r, x, y}\\
& = \1_{\hh(x) = y} \, \ell_2(\ov r, x, 0)
+ \sum_{j = 1}^{\num}\bar c_j(x, y) \ell_2(\ov r, x, j)\\
& = \1_{\hh(x) = y} \, \frac{1}{\alpha}\bracket*{1 - \bracket*{\frac{e^{\ov r(x,0)}}
  {\sum_{y'\in \sY \cup \curl{0}} e^{\ov r(x, y')}}}^{\alpha}}
+ \sum_{j = 1}^{\num}\bar c_j(x, y) \frac{1}{\alpha}\bracket*{1 - \bracket*{\frac{e^{\ov r(x,j)}}
  {\sum_{y'\in \sY \cup \curl{0}} e^{\ov r(x, y')}}}^{\alpha}}\\
& = \1_{\hh(x) = y}\frac{1}{\alpha}\bracket*{1- \bracket*{\frac{1}{1+ \sum_{i = 1}^{\num}e^{-r_i(x)}}}^{\alpha}} + \sum_{j = 1}^{\num}\bar c_j(x, y)\frac{1}{\alpha}\bracket*{1- \bracket*{\frac{e^{-r_j(x)}}{1+ \sum_{i = 1}^{\num}e^{-r_i(x)}}}^{\alpha}}.
\end{align*}

\textbf{Example: $\ell_2 = \ell_{\rm{mae}}$.} For $\ell_2(\ov r,x, y)
= \ell_{\rm{mae}}(\ov r,x, y) = 1 - \frac{e^{\ov r(x, y)}}{\sum_{y'\in
    \sY \cup \curl{0}} e^{\ov r(x, y')}}$, by
\eqref{eq:ell-Phi-h-multi}, we have
\begin{align*}
&\sfL_{h} \paren*{r, x, y}\\
& = \1_{\hh(x) = y} \, \ell_2(\ov r, x, 0)
+ \sum_{j = 1}^{\num}\bar c_j(x, y) \ell_2(\ov r, x, j)\\
& = \1_{\hh(x) = y} \, \paren*{1 - \frac{e^{\ov r(x,0)}}{\sum_{y'\in \sY \cup \curl{0}} e^{\ov r(x, y')}}}
+ \sum_{j = 1}^{\num}\bar c_j(x, y) \paren*{1 - \frac{e^{\ov r(x,j)}}{\sum_{y'\in \sY \cup \curl{0}} e^{\ov r(x, y')}}}\\
& = \1_{\hh(x) = y}\bracket*{1- \frac{1}{1+ \sum_{i = 1}^{\num}e^{-r_i(x)}}} + \sum_{j = 1}^{\num}\bar c_j(x, y)\bracket*{1- \frac{e^{-r_j(x)}}{1+ \sum_{i = 1}^{\num}e^{-r_i(x)}}}.
\end{align*}

\section{Proof of \texorpdfstring{$\sH$}{H}-consistency bounds for score-based
  two-stage surrogate losses (Theorem~\ref{Thm:bound-general-two-stage-score})}
\label{app:bound-general-two-stage-score}
\BoundGenralTwoStepScore*
\begin{proof}
If $\hh(x)\in [n]$, then $\hh(x) = \hhp(x)$.
Thus, the learning to defer loss can be expressed as follows:
\begin{align*}
\ldefsc(h, x, y)
& = \1_{\hh(x)\neq y}\1_{\hh(x)\in [n]} + \sum_{j = 1}^{\num} c_j(x, y) \1_{\hh(x) = n + j}\\
& = \1_{\hhp(x)\neq y}\1_{\hh(x)\in [n]} + \sum_{j = 1}^{\num} c_j(x, y) \1_{\hh(x) = n + j}.
\end{align*}
Let $\bar c_0\paren*{x, y} = \1_{\hhp(x)= y}$. Since $h = (\hp,h_d)$, we
can rewrite $\sE_{\ldefsc}(h) - \sE_{\ldefsc}^*( \sH) + \sM_{\ldefsc}(
\sH)$ as
\begin{equation}
\label{eq:expression-two-step-score}
\begin{aligned}
& \sE_{\ldefsc}(h) - \sE_{\ldefsc}^*( \sH) + \sM_{\ldefsc}( \sH)\\
& =  \mathbb{E}_{X}\bracket*{\sC_{\ldefsc}(h, x) - \sC^*_{\ldefsc}(\sH,x)} \\
  & =  \mathbb{E}_{X}\bracket*{\sC_{\ldefsc}(h, x) - \inf_{h_d\in \sH_d}\sC_{\ldefsc}(h, x)
    + \inf_{h_d\in \sH_d}\sC_{\ldefsc}(h, x) - \sC^*_{\ldefsc}(\sH,x)}\\
  & = \mathbb{E}_{X}\bracket*{\sC_{\ldefsc}(h, x) - \inf_{h_d\in \sH_d}\sC_{\ldefsc}(h, x)}
  + \mathbb{E}_{X}\bracket*{\inf_{h_d\in \sH_d}\sC_{\ldefsc}(h, x) - \sC^*_{\ldefsc}(\sH,x)}.
\end{aligned}
\end{equation}
Let $\ov \sfp(j \!\mid\! x) = \frac{\E_y\bracket*{\bar c_j(x,
    y)}}{\E_y\bracket*{\sum_{j = 0}^{\num}\bar c_j(x, y)}}$ for any $j\in
\curl*{0,\ldots,\num}$. Note that $\ov \sfp(\cdot \!\mid\! x)$ is the probability
vector on the label space $\curl*{0,\ldots,\num}$. For any $h\in \sH$,
we define $\ov h$ as its augmented hypothesis: $\ov h(x,0) =
\max_{y\in \sY}h(x, y),\ov h(x,1) = h(x,1),\ldots, \ov h(x, \num) =
h(x, \num)$.  By the assumptions, we have
\begin{align*}
& \sC_{\ldefsc}(h, x) - \inf_{h_d\in \sH_d}\sC_{\ldefsc}(h, x)\\
  & = \E_y\bracket*{\1_{\hhp(x)\neq y}\1_{\hh(x)\in [n]} + \sum_{j = 1}^{\num} c_j(x, y) \1_{\hh(x) = n + j}}
  - \inf_{h_d\in \sH_d}\E_y\bracket*{\1_{\hhp(x)\neq y}\1_{\hh(x)\in [n]} + \sum_{j = 1}^{\num} c_j(x, y) \1_{\hh(x) = n + j}}\\
  & = \E_y\bracket*{\sum_{j = 0}^{\num}\bar c_j(x, y)}\times \bracket*{\sum_{j = 0}^{\num}\ov \sfp(j \!\mid\! x)\ell_{0-1}(\ov h, x, j)
    - \inf_{h_d\in \sH_d}\sum_{j = 0}^{\num}\ov \sfp(j \!\mid\! x)\ell_{0-1}(\ov h, x, j)}\\
  & \leq \E_y\bracket*{\sum_{j = 0}^{\num}\bar c_j(x, y)}\times \Gamma_2\bracket*{\sum_{j = 0}^{\num}\ov \sfp(j \!\mid\! x)\ell_{2}(\ov h, x, j)
    - \inf_{h_d\in \sH_d}\sum_{j = 0}^{\num}\ov \sfp(j \!\mid\! x)\ell_{2}(\ov h, x, j)}
  \tag{By $\ov \sH_d$-consistency bounds of $\ell_2$ under assumption, $\lambda = \max_{y\in \sY} h(x, y)$}\\
  & =  \E_y\bracket*{\sum_{j = 0}^{\num}\bar c_j(x, y)} \Gamma_2\paren*{\frac{\E_y\bracket*{\lsc_{\hp}(h_d,x, y)}
      - \inf_{h_d\in \sH_d}\E_y\bracket*{\lsc_{\hp}(h_d,x, y)}}{ \E_y\bracket*{\sum_{j = 0}^{\num}\bar c_j(x, y)}}}\tag{ $\ov \sfp(j \!\mid\! x) = \frac{\E_y\bracket*{\bar c_j(x, y)}}{\E_y\bracket*{\sum_{j = 0}^{\num}\bar c_j(x, y)}}$, $\ov h(x,0) = \max_{y\in \sY}h(x, y)$ and formulation \eqref{eq:ell-Phi-h-score}}\\
  & =  \E_y\bracket*{\sum_{j = 0}^{\num}\bar c_j(x, y)} \Gamma_2\paren*{\frac{\sC_{\lsc_{\hp}}(h_d,x)
      - \sC^*_{\lsc_{\hp}}( \sH_d,x)}{\E_y\bracket*{ \sum_{j = 0}^{\num}\bar c_j(x, y)}}}\\
& \leq
\begin{cases}
\Gamma_2\paren*{\sC_{\lsc_{\hp}}(h_d,x) - \sC^*_{\lsc_{\hp}}( \sH_d,x)} & \text{when $\Gamma_2$ is linear}\\
\paren*{1+ \sum_{j = 1}^{\num}\ov c_j}\Gamma_2\paren*{\frac {\sC_{\lsc_{\hp}}(h_d,x) - \sC^*_{\lsc_{\hp}}( \sH_d,x)}{\sum_{j = 1}^{\num}\uv c_j}} & \text{otherwise}
\end{cases}\\
\tag{$\sum_{j = 1}^{\num}\uv c_j\leq \E_y\bracket*{ \sum_{j = 0}^{\num}\bar c_j(x, y)}
  \leq 1+ \sum_{j = 1}^{\num}\ov c_j$ and $\Gamma_2$ is non-decreasing}\\
& = \begin{cases}
\Gamma_2\paren*{\Delta\sC_{\lsc_{\hp}, \sH_d}(h_d,x)} & \text{when $\Gamma_2$ is linear}\\
\paren*{1+ \sum_{j = 1}^{\num}\ov c_j}\Gamma_2\paren*{\frac {\Delta\sC_{\lsc_{\hp}, \sH_d}(h_d,x)}{\sum_{j = 1}^{\num}\uv c_j}}
& \text{otherwise}
\end{cases}
\end{align*}
and 
\begin{align*}
& \inf_{h_d\in \sH_d}\sC_{\ldefsc}(h, x) - \sC^*_{\ldefsc}(\sH,x)\\
& = \inf_{h_d\in \sH_d}\sC_{\ldefsc}(h, x) - \inf_{\hp\in \sH_p, h_d\in \sH_d}\sC_{\ldefsc}(h, x)\\
& = \inf_{h_d\in \sH_d} \E_y\bracket*{\1_{\hhp(x)\neq y}\1_{\hh(x)\in [n]} + \sum_{j = 1}^{\num} c_j(x, y) \1_{\hh(x) = n + j}}\\
& \quad - \inf_{\hp\in \sH_p,h_d\in \sH_d} \E_y\bracket*{\1_{\hhp(x)\neq y}\1_{\hh(x)\in [n]} + \sum_{j = 1}^{\num} c_j(x, y) \1_{\hh(x) = n + j}}\\
& = \inf_{h_d\in \sH_d} \E_y\bracket*{\1_{\hhp(x)\neq y}\1_{\hh(x)\in [n]} + \sum_{j = 1}^{\num} c_j(x, y) \1_{\hh(x) = n + j}}\\
& \quad- \inf_{h_d\in \sH_d} \E_y\bracket*{\inf_{\hp\in \sH_p}\1_{\hhp(x)\neq y}\1_{\hh(x)\in [n]} + \sum_{j = 1}^{\num} c_j(x, y) \1_{\hh(x) = n + j}}\\
& = \min\curl*{\E_y\bracket*{\1_{\hhp(x)\neq y}},\min_{j\in[p]}\E_y\bracket*{c_j(x, y)}}- \min\curl*{\inf_{\hp\in \sH_p}\E_y\bracket*{\1_{\hhp(x)\neq y}},\min_{j\in[p]}\E_y\bracket*{c_j(x, y)}}\\
& \leq \E_y\bracket*{\1_{\hhp(x)\neq y}} - \inf_{\hp\in \sH_p}\E_y\bracket*{\1_{\hhp(x)\neq y}}\\
& = \sC_{\ell_{0-1}}(\hp, x) - \sC^*_{\ell_{0-1}}( \sH_p,x)\\
& = \Delta \sC_{\ell_{0-1}, \sH_p}(\hp, x)\\
& \leq \Gamma_1\paren*{\Delta\sC_{\ell_1, \sH_p}(\hp, x)}.
\tag{By $\sH_p$-consistency bounds of $\ell$ under assumption}
\end{align*}
Therefore, by \eqref{eq:expression-two-step-score}, we obtain
\begin{align*}
& \sE_{\ldefsc}(h) - \sE^*_{\ldefsc}\paren*{ \sH}+ \sM_{\ldefsc}( \sH)\\
& \leq 
\begin{cases}
\E_X\bracket*{\Gamma_2\paren*{\Delta\sC_{\lsc_{\hp}, \sH_d}(h_d,x)}} + \E_X\bracket*{\Gamma_1\paren*{\Delta\sC_{\ell_1, \sH_p}(\hp, x)}} & \text{when $\Gamma_2$ is linear}\\
\paren*{1+ \sum_{j = 1}^{\num}\ov c_j}\E_X\bracket*{\Gamma_2\paren*{\frac {\Delta\sC_{\lsc_{\hp}, \sH_d}(h_d,x)}{\sum_{j = 1}^{\num}\uv c_j}}} + \E_X\bracket*{\Gamma_1\paren*{\Delta\sC_{\ell_1, \sH_p}(\hp, x)}} & \text{otherwise}
\end{cases}\\
& \leq 
\begin{cases}
\Gamma_2\paren*{\E_X\bracket*{\Delta\sC_{\lsc_{\hp}, \sH_d}(h_d,x)}} + \Gamma_1\paren*{\E_X\bracket*{\Delta\sC_{\ell_1, \sH_p}(\hp, x)}} & \text{when $\Gamma_2$ is linear}\\
\paren*{1+ \sum_{j = 1}^{\num}\ov c_j}\Gamma_2\paren*{\frac1{\sum_{j = 1}^{\num}\uv c_j}\E_X\bracket*{\Delta\sC_{\lsc_{\hp}, \sH_d}(h_d,x)}} + \Gamma_1\paren*{\E_X\bracket*{\Delta\sC_{\ell_1, \sH_p}(\hp, x)}} & \text{otherwise}
\end{cases}
\tag{$\Gamma_1$ and $\Gamma_2$ are concave}\\
& = 
\begin{cases}
\Gamma_1\paren*{\sE_{\ell_1}(\hp) - \sE_{\ell_1}^*( \sH_p) + \sM_{\ell_1}( \sH_p)}
\\\qquad + \Gamma_2\paren*{\sE_{\lsc_{\hp}}(h_d) - \sE_{\lsc_{\hp}}^*( \sH_d) + \sM_{\lsc_{\hp}}( \sH_d)} & \text{when $\Gamma_2$ is linear}\\
\Gamma_1\paren*{\sE_{\ell_1}(\hp) - \sE_{\ell_1}^*( \sH_p) + \sM_{\ell_1}( \sH_p)}
\\\qquad + \paren*{1+ \sum_{j = 1}^{\num}\ov c_j}\Gamma_2\paren*{\frac{\sE_{\lsc_{\hp}}(h_d) - \sE_{\lsc_{\hp}}^*(\sH_d) + \sM_{\lsc_{\hp}}(\sH_d)}{\sum_{j = 1}^{\num}\uv c_j}} & \text{otherwise},
\end{cases}
\end{align*}
which completes the proof.
\end{proof}

\section{Proof of \texorpdfstring{$\ov \sH$}{H}-consistency bounds
 for standard surrogate loss functions (Theorem~\ref{Thm:bound_combine})}
\label{app:bound_comp-sum}

Recall that for a hypothesis $h\colon \sX \times \sY \to \Rset$, we
define $\ov h$ as its augmented hypothesis: $\ov h(\cdot,0) =
\lambda,\ov h(\cdot,1) = h(x,1),\ldots, \ov h(\cdot,n) = h(x, n)$ with
some constant $\lambda\in \mathbb{R}$. We define $\ov \sH$ as the
hypothesis set that consists of all such augmented hypotheses of
$\sH$: $\ov \sH = \curl*{\ov h: h\in\sH}$. The prediction associated
by $\ov h \in \ov \sH$ to an input $x \in \sX$ is denoted by $\ov
\hh(x)$ and defined as the element in $\sY \cup \curl{0}$ with the
highest score, $\ov \hh(x) = \argmax_{y \in \sY \cup \curl{0}} h(x,
y)$, with an arbitrary but fixed deterministic strategy for breaking
ties. For any $x \in \sX$ and label space $\sY \cup \curl{0}$, we will
denote, by $\ov{ \mathsf H}(x)$ the set of labels generated by
hypotheses in $\ov \sH$: $\ov{ \mathsf H}(x) = \curl*{\ov \hh(x)
  \colon h \in \ov \sH}$. By
\citep[Lemma~3]{AwasthiMaoMohriZhong2022multi} with label space $\sY
\cup \curl{0}$ and a conditional probability vector $\sfp(\cdot \!\mid\! x)$ on
$\sY \cup \curl{0}$, the minimal conditional $\ell_{0-1}$-loss and the
corresponding calibration gap can be characterized as follows.
\begin{restatable}{lemma}{ExplicitAssumption}
\label{lemma:explicit_assumption_01}
For any $x \in \sX$,
the minimal conditional $\ell_{0-1}$-risk and
the calibration gap for $\ell_{0-1}$ can be expressed as follows:
\begin{align*}
\sC^*_{\ell_{0-1}}(x) & = 1- \max_{y\in \ov {\mathsf H}(x)}\sfp(y \!\mid\! x)\\
\Delta\sC_{\ell_{0-1}}(h, x) & = \max_{y\in \ov{ \mathsf H}(x)}\sfp(y \!\mid\! x) -\sfp(\hh(x) \!\mid\! x).
\end{align*}
\end{restatable}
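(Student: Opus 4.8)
The plan is to obtain the claim as a direct specialization of the general multi-class characterization of the zero-one conditional risk, namely \citep[Lemma~3]{AwasthiMaoMohriZhong2022multi}, applied to the augmented label set $\sY \cup \curl{0} = \curl*{0, 1, \ldots, n}$ together with the genuine conditional probability vector $\sfp(\cdot \!\mid\! x)$ on that set. The only work is to verify that the hypotheses of that lemma are met in the present augmented-classification setting and to translate its conclusion into the notation of $\ov \sH$ and $\ov{\mathsf H}(x)$.

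Concretely, I would first record the conditional risk. On the label space $\sY \cup \curl{0}$ the zero-one loss is $\ell_{0-1}(\ov h, x, y) = \1_{\ov \hh(x) \neq y}$, so
\begin{equation*}
\sC_{\ell_{0-1}}(\ov h, x)
= \sum_{y \in \sY \cup \curl{0}} \sfp(y \!\mid\! x)\, \1_{\ov \hh(x) \neq y}
= 1 - \sfp(\ov \hh(x) \!\mid\! x).
\end{equation*}
Minimizing over $\ov h \in \ov \sH$ amounts to maximizing $\sfp(\ov \hh(x) \!\mid\! x)$, and since by definition $\ov{\mathsf H}(x) = \curl*{\ov \hh(x) \colon \ov h \in \ov \sH}$ is exactly the finite set of labels achievable as predictions at $x$, the set of attainable values $\curl*{\sfp(\ov \hh(x) \!\mid\! x) \colon \ov h \in \ov \sH}$ coincides with $\curl*{\sfp(y \!\mid\! x) \colon y \in \ov{\mathsf H}(x)}$; finiteness turns the supremum into a maximum. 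This gives $\sC^*_{\ell_{0-1}}(x) = 1 - \max_{y \in \ov{\mathsf H}(x)} \sfp(y \!\mid\! x)$, and subtracting yields the calibration gap
\begin{equation*}
\Delta\sC_{\ell_{0-1}}(\ov h, x)
= \sC_{\ell_{0-1}}(\ov h, x) - \sC^*_{\ell_{0-1}}(x)
= \max_{y \in \ov{\mathsf H}(x)} \sfp(y \!\mid\! x) - \sfp(\ov \hh(x) \!\mid\! x),
\end{equation*}
which is the asserted expression.

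The step that deserves care, and the one I would flag as the main subtlety rather than a genuine obstacle, is the applicability of the cited lemma to $\ov \sH$. As emphasized earlier in this chapter, when $\sH$ is symmetric and complete the associated $\ov \sH$ is \emph{not} symmetric and complete, since its first scoring function $\ov h(\cdot, 0) = \lambda$ is fixed. The key point is that \citep[Lemma~3]{AwasthiMaoMohriZhong2022multi} is stated purely in terms of the label-generating set $\ov{\mathsf H}(x)$ and does not require symmetry or completeness of the hypothesis set; it only uses that $\ell_{0-1}$ depends on $\ov h$ through its predicted label and that $\ov{\mathsf H}(x)$ records the achievable predictions. Hence the lemma transfers verbatim, and the fixed-coordinate structure of $\ov \sH$ enters only implicitly through $\ov{\mathsf H}(x)$. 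I would therefore present the derivation above as the proof, noting explicitly that it is the specialization of the general result to the $(n+1)$-class problem with label set $\sY \cup \curl{0}$.
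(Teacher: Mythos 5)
Your proposal is correct and matches the paper's approach exactly: the paper also obtains this lemma by invoking \citep[Lemma~3]{AwasthiMaoMohriZhong2022multi} with label space $\sY \cup \curl{0}$ and the conditional probability vector $\sfp(\cdot \!\mid\! x)$, which is precisely your specialization. Your explicit computation of $\sC_{\ell_{0-1}}(\ov h, x) = 1 - \sfp(\ov \hh(x) \!\mid\! x)$ and your remark that the cited lemma depends only on the label-generating set $\ov{\mathsf H}(x)$ (so the lack of symmetry and completeness of $\ov \sH$ is immaterial) simply spell out what the paper leaves implicit.
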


\subsection{Multinomial logistic loss}
\label{app:bound_log}
\begin{restatable}[\textbf{$\ov \sH$-consistency bound for
   multinomial logistic loss}]
 {theorem}{BoundLog}
\label{Thm:bound_log}
Assume that $\sH$ is symmetric and complete. Then, for any $\lambda\in
\Rset$, hypothesis $\ov h\in\ov \sH$ and any distribution,
\begin{equation*}
\sE_{\ell_{0-1}}\paren*{\ov h}- \sE_{\ell_{0-1}}^*\paren*{\ov \sH}
\leq \sqrt{2}
\paren*{\sE_{\ell_{\rm{log}}}\paren*{\ov h}
  - \sE_{\ell_{\rm{log}}}^*\paren*{\ov \sH}
  + \sM_{\ell_{\rm{log}}}\paren*{\ov \sH}}^{\frac12}
  - \sM_{\ell_{0-1}}\paren*{\ov \sH}.
\end{equation*}
\end{restatable}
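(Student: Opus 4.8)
The plan is to prove the bound pointwise at the level of conditional regrets and then integrate. Writing $\sfp(\cdot \!\mid\! x)$ for the conditional distribution over $\sY \cup \curl{0}$ and $\sS_y = \frac{e^{\ov h(x, y)}}{\sum_{y' \in \sY \cup \curl{0}} e^{\ov h(x, y')}}$ for the softmax induced by $\ov h$, I would first record the identity $\sE_{\ell}(\ov h) - \sE_{\ell}^*(\ov \sH) + \sM_{\ell}(\ov \sH) = \E_{x}\bracket*{\Delta \sC_{\ell, \ov \sH}(\ov h, x)}$, valid for both $\ell = \ell_{0-1}$ and $\ell = \ell_{\rm{log}}$ by definition of the minimizability gap. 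This reduces the theorem to the pointwise inequality
\begin{equation*}
\Delta \sC_{\ell_{\rm{log}}, \ov \sH}(\ov h, x) \geq \tfrac12 \bracket*{\Delta \sC_{\ell_{0-1}, \ov \sH}(\ov h, x)}^2 ,
\end{equation*}
since concavity of $t \mapsto \sqrt{2t}$ and Jensen's inequality then give $\E_{x}\bracket*{\Delta \sC_{\ell_{0-1}, \ov \sH}(\ov h, x)} \leq \sqrt{2 \, \E_{x}\bracket*{\Delta \sC_{\ell_{\rm{log}}, \ov \sH}(\ov h, x)}}$, and substituting the two identities and moving $\sM_{\ell_{0-1}}(\ov \sH)$ to the right-hand side yields exactly the claimed form.

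For the pointwise step I would invoke Lemma~\ref{lemma:explicit_assumption_01} to write $\Delta \sC_{\ell_{0-1}, \ov \sH}(\ov h, x) = \max_{y \in \ov{\mathsf H}(x)} \sfp(y \!\mid\! x) - \sfp(\ov \hh(x) \!\mid\! x)$. Because $\sH$ is complete, the non-zero scores sweep all of $\Rset$, so any label in $\sY \cup \curl{0}$ can be forced to be the score-argmax of $\ov h$; hence $\ov{\mathsf H}(x) = \sY \cup \curl{0}$ and the maximum is attained at the Bayes-optimal label $y_{\max} = \argmax_{y} \sfp(y \!\mid\! x)$. If $\ov \hh(x) = y_{\max}$ the left-hand gap vanishes and there is nothing to prove, so I would assume $\ov \hh(x) \neq y_{\max}$ and set $b = \ov \hh(x)$.

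The core is to lower bound $\Delta \sC_{\ell_{\rm{log}}, \ov \sH}(\ov h, x)$ by a well-chosen competitor $\ov h_{\mu} \in \ov \sH$, using $\Delta \sC_{\ell_{\rm{log}}, \ov \sH}(\ov h, x) \geq \sC_{\ell_{\rm{log}}}(\ov h, x) - \sC_{\ell_{\rm{log}}}(\ov h_{\mu}, x) = \sum_{y} \sfp(y \!\mid\! x) \log \frac{\sS_{\mu}(y)}{\sS_y}$. I would construct $\ov h_{\mu}$ so that its softmax $\sS_{\mu}$ coincides with $\sS$ except on $\curl*{y_{\max}, b}$, where the combined mass $\sS_{y_{\max}} + \sS_b$ is redistributed in proportion to $\sfp(y_{\max} \!\mid\! x)$ and $\sfp(b \!\mid\! x)$. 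When $0 \notin \curl*{y_{\max}, b}$ this is realizable inside $\ov \sH$ by the familiar add-$\lambda$/subtract-$\lambda$ adjustment of the two free scores $\ov h(x, y_{\max})$ and $\ov h(x, b)$, which preserves the normalization and hence leaves $\sS_0$ (whose score $\lambda$ is frozen) and every other coordinate untouched; symmetry and completeness of $\sH$ place the modified hypothesis in $\ov \sH$. Using $\sS_b \geq \sS_{y_{\max}}$ (as $b$ maximizes the score, hence the softmax) to bound the two-term expression from below, the difference collapses to precisely the quantity analyzed in the $\mu = 1$ case of Theorem~\ref{Thm:bound_comp_sum}, and Pinsker's inequality then produces the factor $\tfrac12 \paren*{\sfp(y_{\max} \!\mid\! x) - \sfp(b \!\mid\! x)}^2$.

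The hard part will be the case $0 \in \curl*{y_{\max}, b}$: since $\ov \sH$ fixes the label-$0$ score at $\lambda$, the denominator-preserving two-score adjustment is unavailable, and this is exactly where the argument of \citet{MaoMohriZhong2023cross} does not carry over. The resolution I would pursue exploits the observation emphasized before the theorem: although $\ov h_{\mu}(x, 0) = \lambda$ is frozen, the weight $\sS_{\mu}(0) = e^{\lambda}/\sum_{y'} e^{\ov h_{\mu}(x, y')}$ still varies when the single free score at $b$ is changed, so the required mass transfer between label $0$ and label $b$ can be realized by adjusting only $\ov h(x, b)$ and tracking the induced change in the normalization. The delicate point is to verify that this one-sided adjustment still lower bounds the conditional-log-loss difference by the same quadratic in $\sfp(y_{\max}\!\mid\! x) - \sfp(b \!\mid\! x)$; I expect to reduce it once more to a two-point relative-entropy expression and close with Pinsker's inequality, so that both cases deliver the uniform constant $\sqrt{2}$ appearing in $\Gamma$.
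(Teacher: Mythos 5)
Your overall strategy --- reduce to a pointwise comparison of conditional regrets via Jensen, lower bound the logistic conditional regret by a competitor whose softmax agrees with that of $\ov h$ except on the two labels $y_{\max}$ and $\ov \hh(x)$, optimize the redistributed mass, and finish with Pinsker's inequality --- is exactly the paper's argument, and your handling of the case $0 \notin \curl*{y_{\max}, \ov \hh(x)}$ is sound. The gap is in the case you yourself flag as hard, $0 \in \curl*{y_{\max}, \ov \hh(x)}$: you leave it as a plan (``I expect to reduce it once more\dots''), and the mechanism you sketch --- ``adjusting only $\ov h(x, b)$ and tracking the induced change in the normalization'' --- does not realize the required perturbation. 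Changing a single free score changes the normalizer $Z = \sum_{y'} e^{\ov h(x, y')}$ and therefore rescales \emph{every} softmax coordinate, so the resulting competitor differs from $\sS$ on all $n + 1$ labels, the conditional log-loss difference picks up a $\paren*{1 - \sfp(b \!\mid\! x)} \log\paren*{Z / Z_{\mathrm{new}}}$ contribution from the untouched labels, and the computation no longer collapses to the two-point relative-entropy expression you need for Pinsker.

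The paper avoids the case split entirely by defining the competitor directly through its softmax vector: given the target $\sS_{\mu}$ (equal to $\sS$ off $\curl*{y_{\max}, \ov \hh(x)}$, with the combined mass redistributed on those two labels), set $Z_{\mu} = e^{\lambda} / \sS_{\mu}(x, 0)$ and $\ov h_{\mu}(x, y) = \log\paren*{\sS_{\mu}(x, y)\, Z_{\mu}}$ for every $y \geq 1$. This shifts \emph{all} the free scores, not just the one at $b$, which symmetry and completeness permit, and it works uniformly whether or not label $0$ is one of the two modified coordinates --- this is precisely the sense in which the label-$0$ softmax can still be steered through the normalization even though its score is frozen at $\lambda$. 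With that construction in place, your optimization over $\mu$, the reduction to the two softmax values being equal, and the Pinsker step go through verbatim in both cases and deliver the uniform constant $\sqrt{2}$.
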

\begin{proof}
For the multinomial logistic loss $\ell_{\rm{log}}$, the conditional
$\ell_{\rm{log}}$-risk can be expressed as follows:
\begin{equation*}
\begin{aligned}
 \sC_{\ell_{\rm{log}}}\paren*{\ov h, x)}
 = \sum_{y\in \sY \cup \curl{0}} \sfp(y \!\mid\! x) \log\paren*{\sum_{y'\in \sY \cup \curl{0}} e^{\ov h(x, y') - \ov h(x, y)}}
 = - \sum_{y\in \sY \cup \curl{0}}\sfp(y \!\mid\! x) \log\paren*{\sS(x, y)}
\end{aligned}
\end{equation*}
where we let $\sS(x, y) = \frac{e^{\ov h(x, y)}}{\sum_{y'\in \sY \cup
    \curl{0}}e^{\ov h(x, y')}} \in [0,1]$ for any $y\in \sY \cup
\curl{0}$ with the constraint that $\sum_{y\in \sY \cup
  \curl{0}}\sS(x, y) = 1$. Let $y_{\max} = \argmax_{y\in \sY \cup
  \curl{0}}\sfp(y \!\mid\! x)$, where we choose the label with the same
deterministic strategy for breaking ties as that of $\ov h(x)$. For
any $\ov h \in \sH$ such that $\ov \hh(x) \neq y_{\max}$ and $x\in
\sX$, by the symmetry and completeness of $\sH$, we can always find a
family of hypotheses $\curl*{\ov h_{\mu}:\mu \in [- \sS(x,
    y_{\max}),\sS(x, \ov \hh(x))]}\subset \ov \sH$ such that
$\sS_{\mu}(x,\cdot) = \frac{e^{\ov h_{\mu}(x,\cdot)}}{\sum_{y'\in \sY
    \cup \curl{0}}e^{\ov h_{\mu}(x, y')}}$ take the following values:
\begin{align*}
\sS_{\mu}(x, y) = 
\begin{cases}
 \sS(x, y) & \text{if $y \not \in \curl*{y_{\max}, \ov \hh(x)}$}\\
 \sS(x, y_{\max}) + \mu & \text{if $y = \ov \hh(x)$}\\
 \sS(x, \ov \hh(x)) - \mu & \text{if $y = y_{\max}$}.
\end{cases} 
\end{align*}
Note that $\sS_{\mu}$ satisfies the constraint:
\begin{align*}
 \sum_{y\in \sY \cup \curl{0}}\sS_{\mu}(x, y) = \sum_{y\in \sY \cup \curl{0}}
 \sS(x, y) = 1,\, \forall \mu \in [- \sS(x, y_{\max}),\sS(x, \ov \hh(x))].
\end{align*}
Let $\ov h \in \ov\sH$ be a hypothesis such that $\ov \hh(x) \neq
y_{\max}$. By the definition and using the fact that $\ov {\mathsf
  H}(x) = \sY \cup \curl{0}$ when $\sH$ is symmetric, we obtain
\begin{align*}
&\Delta\sC_{\ell_{\rm{log}},\ov \sH}\paren*{\ov h, x}\\
& = \sC_{\ell_{\rm{log}}}\paren*{\ov h, x} - \sC^*_{\ell_{\rm{log}}}\paren*{\ov \sH,x} \\
& \geq 
  \sC_{\ell_{\rm{log}}}\paren*{\ov h, x}
  - \inf_{\mu \in [- \sS(x, y_{\max}),\sS(x, \ov \hh(x))]}\sC_{\ell_{\rm{log}}}\paren*{\ov h_{\mu},x}\\
  & = \sup_{\mu \in [- \sS(x, y_{\max}),\sS(x, \ov \hh(x))]}
  \bigg\{\sfp(y_{\max} \!\mid\! x)\bracket*{- \log\paren*{\sS(x, y_{\max})} + \log\paren*{\sS(x, \ov \hh(x)) - \mu}}\\
  & \qquad + \sfp(\ov \hh(x) \!\mid\! x)\bracket*{- \log\paren*{\sS(x, \ov \hh(x))}
    + \log\paren*{\sS(x, y_{\max})+ \mu}}\bigg\}
\end{align*}
Differentiating with respect to $\mu$ yields the optimum value $\mu^* = 
\frac{\sfp(\ov \hh(x) \!\mid\! x)\sS(x, \ov \hh(x)) -\sfp(y_{\max} \!\mid\! x)\sS(x,
 y_{\max})}{\sfp(y_{\max} \!\mid\! x)+\sfp(\ov \hh(x) \!\mid\! x)}$. Plugging that value in the
inequality gives:
\begin{align*}
\Delta\sC_{\ell_{\rm{log}},\ov \sH}\paren*{\ov h, x}
& \geq \sfp(y_{\max} \!\mid\! x)\log\frac{\bracket*{\sS(x, \ov \hh(x))
    + \sS(x, y_{\max})}\sfp(y_{\max} \!\mid\! x)}{\sS(x, y_{\max})\bracket*{\sfp(y_{\max} \!\mid\! x)+\sfp(\ov \hh(x) \!\mid\! x)}}\\
& \qquad + \sfp(\ov \hh(x) \!\mid\! x)\log\frac{\bracket*{\sS(x, \ov \hh(x))
    + \sS(x, y_{\max})}\sfp(\ov \hh(x) \!\mid\! x)}{\sS(x, \ov \hh(x))\bracket*{\sfp(y_{\max} \!\mid\! x)+\sfp(\ov \hh(x) \!\mid\! x)}}.
\end{align*}
Differentiating with respect to $\sS$ to show that the minimum
is attained for $\sS(x, \ov \hh(x)) = \sS(x, y_{\max})$, which implies
\begin{align*}
\Delta\sC_{\ell_{\rm{log}},\ov \sH}\paren*{\ov h, x}
& \geq \sfp(y_{\max} \!\mid\! x)\log\frac{2\sfp(y_{\max} \!\mid\! x)}{\sfp(y_{\max} \!\mid\! x)
  + \sfp(\ov \hh(x) \!\mid\! x)}+\sfp(\ov \hh(x) \!\mid\! x)\log\frac{2\sfp(\ov \hh(x) \!\mid\! x)}{\sfp(y_{\max} \!\mid\! x)+\sfp(\ov \hh(x) \!\mid\! x)}.
\end{align*}
By Pinsker's inequality, we have, for $a, b \in [0, 1]$,
$a\log \frac{2a}{a + b} + b\log \frac{2b}{a + b}\geq \frac{(a - b)^2}{2(a + b)}$.
Using this inequality, we obtain:
\begin{align*}
\Delta\sC_{\ell_{\rm{log}},\ov \sH}\paren*{\ov h, x}
& \geq \frac{\paren*{\sfp(\ov \hh(x) \!\mid\! x) -\sfp(y_{\max} \!\mid\! x)}^2}{2\paren*{\sfp(\ov \hh(x) \!\mid\! x)+\sfp(y_{\max} \!\mid\! x)}}\\
& \geq \frac{\paren*{\sfp(\ov \hh(x) \!\mid\! x) -\sfp(y_{\max} \!\mid\! x)}^2}{2}
\tag{$0\leq \sfp(\ov \hh(x) \!\mid\! x)+\sfp(y_{\max} \!\mid\! x)\leq 1$}\\
& = \frac12 \paren*{ \Delta\sC_{\ell_{0-1},\ov \sH}\paren*{\ov h, x}}^2
\tag{by Lemma~\ref{lemma:explicit_assumption_01} and
  $\ov{\mathsf H}(x) = \sY \cup \curl{0}$}.
\end{align*}
Since the function $\frac{t^2}{2}$ is convex, by Jensen's inequality,
we obtain for any hypothesis $\ov h\in\ov \sH$ and any distribution,
\begin{align*}
\frac{\paren*{\E_{X}\bracket*{\Delta\sC_{\ell_{0-1},\ov \sH}\paren*{\ov h, x}}}^2}{2}
\leq \E_{X}\bracket*{\frac{\Delta\sC_{\ell_{0-1},\ov \sH}\paren*{\ov h, x}^2}{2}}
\leq \E_{X}\bracket*{\Delta\sC_{\ell_{\rm{log}},\ov \sH}\paren*{\ov h, x}},
\end{align*}
which leads to 
\begin{align*}
\sE_{\ell_{0-1}}\paren*{\ov h}- \sE_{\ell_{0-1}}^*\paren*{\ov \sH}
\leq \sqrt{2}
\paren*{\sE_{\ell_{\rm{log}}}\paren*{\ov h}
  - \sE_{\ell_{\rm{log}}}^*\paren*{\ov \sH}
  + \sM_{\ell_{\rm{log}}}\paren*{\ov \sH}}^{\frac12}
- \sM_{\ell_{0-1}}\paren*{\ov \sH}.
\end{align*}
\end{proof}

\subsection{Sum exponential loss}
\label{app:bound_sum_exp}

\begin{restatable}[\textbf{$\ov \sH$-consistency bound for sum exponential loss}]
 {theorem}{BoundSumExp}
\label{Thm:bound_sum_exp}
Assume that $\sH$ is symmetric and complete. Then, for any $\lambda\in
\Rset$, hypothesis $\ov h\in\ov \sH$ and any distribution,
\begin{equation*}
\sE_{\ell_{0-1}}\paren*{\ov h}- \sE_{\ell_{0-1}}^*\paren*{\ov \sH}
\leq \sqrt{2}
\paren*{\sE_{\ell_{\rm{exp}}}\paren*{\ov h}- \sE_{\ell_{\rm{exp}}}^*\paren*{\ov \sH}+ \sM_{\ell_{\rm{exp}}}\paren*{\ov \sH}}^{\frac12}
- \sM_{\ell_{0-1}}\paren*{\ov \sH}.
\end{equation*}
\end{restatable}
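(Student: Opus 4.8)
The plan is to follow the proof of Theorem~\ref{Thm:bound_log} for the multinomial logistic loss almost verbatim, substituting the reciprocal–softmax form of the sum-exponential loss for the logarithmic computations and replacing the Pinsker step by an elementary inequality. First I would rewrite the conditional $\ell_{\rm{exp}}$-risk in terms of the softmax $\sS(x, y) = e^{\ov h(x, y)}/\sum_{y' \in \sY \cup \curl{0}} e^{\ov h(x, y')}$. Since $\sum_{y' \neq y} e^{\ov h(x, y') - \ov h(x, y)} = \tfrac{1}{\sS(x, y)} - 1$, the conditional risk becomes $\sC_{\ell_{\rm{exp}}}(\ov h, x) = \sum_{y} \tfrac{\sfp(y \mid x)}{\sS(x, y)} - 1$, depending on $\ov h$ only through its softmax. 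This reformulation is exactly what lets the argument proceed despite $\ov h(\cdot, 0) = \lambda$ being fixed, since the softmax at label $0$ still varies through the normalizing denominator.

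Writing $a = \sS(x, \ov\hh(x))$, $b = \sS(x, y_{\max})$, $p = \sfp(\ov\hh(x) \mid x)$ and $q = \sfp(y_{\max} \mid x)$, I would then lower-bound the calibration gap $\Delta\sC_{\ell_{\rm{exp}}, \ov\sH}(\ov h, x)$ by comparing $\ov h$ against the same softmax-shifted family $\ov h_\mu$ used in the logistic proof, which transfers mass $\mu$ between $y_{\max}$ and $\ov\hh(x)$ and leaves every other softmax coordinate unchanged; its existence in $\ov\sH$ is guaranteed by symmetry and completeness. Optimizing $p(\tfrac1a - \tfrac{1}{b+\mu}) + q(\tfrac1b - \tfrac{1}{a - \mu})$ over $\mu$ gives the closed form $\mu^* = \tfrac{\sqrt p\, a - \sqrt q\, b}{\sqrt p + \sqrt q}$, and substituting yields the pointwise lower bound $\tfrac pa + \tfrac qb - \tfrac{(\sqrt p + \sqrt q)^2}{a + b}$. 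I would minimize this over the feasible softmax values: because $\ov\hh(x)$ attains the maximal score we have $a \geq b$ and $a + b \leq 1$, the Cauchy–Schwarz (Titu) equality case $a/b = \sqrt p/\sqrt q \leq 1$ is infeasible, so the minimum sits on the boundary $a = b$, where the expression equals $\tfrac{(\sqrt p - \sqrt q)^2}{2a} \geq (\sqrt p - \sqrt q)^2 \geq \tfrac12 (q - p)^2$, the last step using $(\sqrt p + \sqrt q)^2 \leq 2(p+q) \leq 2$. By Lemma~\ref{lemma:explicit_assumption_01}, $q - p = \Delta\sC_{\ell_{0-1}, \ov\sH}(\ov h, x)$, so $\Delta\sC_{\ell_{\rm{exp}}, \ov\sH}(\ov h, x) \geq \tfrac12\paren{\Delta\sC_{\ell_{0-1}, \ov\sH}(\ov h, x)}^2$.

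Finally, applying Jensen's inequality to the convex map $t \mapsto \tfrac{t^2}{2}$, exactly as in the logistic case, upgrades this pointwise estimate to the stated bound with $\Gamma(t) = \sqrt{2t}$, at which point the generic $\ov\sH$-consistency machinery behind Theorem~\ref{Thm:bound_combine} delivers the result.

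The main obstacle is the one flagged for Theorem~\ref{Thm:bound_combine}: because the scoring function on label $0$ is frozen at $\lambda$, the associated hypothesis set $\ov\sH$ is neither symmetric nor complete, so the score-perturbation technique of \citet{MaoMohriZhong2023cross} does not apply directly. The softmax-based construction of $\ov h_\mu$ circumvents this, and the delicate point is to verify carefully that the required $\ov h_\mu$ genuinely lies in $\ov\sH$ — including the cases where $y_{\max}$ or $\ov\hh(x)$ equals $0$, where the target softmax value must be realized by adjusting only the free scores. A secondary care point is that the constrained minimization over $(a,b)$ is a boundary minimum forced by $a \geq b$ rather than an interior critical point, so I would confirm that the boundary value $(\sqrt p - \sqrt q)^2$ indeed dominates $\tfrac12(q-p)^2$ uniformly over the feasible region.
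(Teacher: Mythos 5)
Your proposal is correct and follows essentially the same route as the paper's proof: the same softmax reformulation of the conditional $\ell_{\rm{exp}}$-risk, the same mass-transfer family $\ov h_{\mu}$, the same optimal $\mu^*$ and resulting pointwise bound $\tfrac{p}{a} + \tfrac{q}{b} - \tfrac{(\sqrt{p}+\sqrt{q})^2}{a+b}$, the same reduction to $(\sqrt{p}-\sqrt{q})^2 \geq \tfrac12(q-p)^2$, and the same Jensen step. Your boundary-minimum justification for setting $a = b$ is a slightly more careful version of the paper's terse "differentiate with respect to $\sS$" remark, but it is not a different argument.
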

\begin{proof}
For the sum exponential loss $\ell_{\rm{exp}}$, the conditional
$\ell_{\rm{exp}}$-risk can be expressed as follows:
\begin{equation*}
\begin{aligned}
 \sC_{\ell_{\rm{exp}}}\paren*{\ov h, x)}
 = \sum_{y\in \sY \cup \curl{0}} \sfp(y \!\mid\! x) \paren*{\sum_{y'\in \sY \cup \curl{0}} e^{\ov h(x, y') - \ov h(x, y)}} - 1
 = \sum_{y\in \sY \cup \curl{0}} \frac{\sfp(y \!\mid\! x)}{\sS(x, y)} - 1
\end{aligned}
\end{equation*}
where we let $\sS(x, y) = \frac{e^{\ov h(x, y)}}{\sum_{y'\in \sY \cup
    \curl{0}}e^{\ov h(x, y')}} \in [0,1]$ for any $y\in \sY \cup
\curl{0}$ with the constraint that $\sum_{y\in \sY \cup
  \curl{0}}\sS(x, y) = 1$. Let $y_{\max} = \argmax_{y\in \sY \cup
  \curl{0}}\sfp(y \!\mid\! x)$, where we choose the label with the highest index
under the natural ordering of labels as the tie-breaking strategy.
For any $\ov h \in \sH$ such that $\ov \hh(x) \neq y_{\max}$ and $x\in
\sX$, by the symmetry and completeness of $\sH$, we can always find a
family of hypotheses $\curl*{\ov h_{\mu} \colon \mu \in [- \sS(x,
    y_{\max}),\sS(x, \ov \hh(x))]}\subset \ov \sH$ such that
$\sS_{\mu}(x,\cdot) = \frac{e^{\ov h_{\mu}(x,\cdot)}}{\sum_{y'\in \sY
    \cup \curl{0}}e^{\ov h_{\mu}(x, y')}}$ take the following values:
\begin{align*}
\sS_{\mu}(x, y) = 
\begin{cases}
 \sS(x, y) & \text{if $y \not \in \curl*{y_{\max}, \ov \hh(x)}$}\\
 \sS(x, y_{\max}) + \mu & \text{if $y = \ov \hh(x)$}\\
 \sS(x, \ov \hh(x)) - \mu & \text{if $y = y_{\max}$}.
\end{cases} 
\end{align*}
Note that $\sS_{\mu}$ satisfies the constraint:
\begin{align*}
 \sum_{y\in \sY}\sS_{\mu}(x, y) = \sum_{y\in \sY}
 \sS(x, y) = 1,\, \forall \mu \in [- \sS(x, y_{\max}),\sS(x, \ov \hh(x))].
\end{align*}
Let $\ov h \in \ov \sH$ be a hypothesis such that $\ov \hh(x) \neq
y_{\max}$. By the definition and using the fact that $\ov{\mathsf
  H}(x) = \sY \cup \curl{0}$ when $\sH$ is symmetric, we obtain
\begin{align*}
&\Delta\sC_{\ell_{\rm{exp}},\ov \sH}\paren*{\ov h, x}\\
& = \sC_{\ell_{\rm{exp}}}\paren*{\ov h, x} - \sC^*_{\ell_{\rm{exp}}}\paren*{\ov \sH,x} \\
& \geq 
  \sC_{\ell_{\rm{exp}}}\paren*{\ov h, x}
  - \inf_{\mu \in [- \sS(x, y_{\max}),\sS(x, \ov \hh(x))]}\sC_{\ell_{\rm{exp}}}\paren*{\ov h_{\mu},x}\\
& = \sup_{\mu \in [- \sS(x, y_{\max}),\sS(x, \ov \hh(x))]} \bigg\{\sfp(y_{\max} \!\mid\! x)\bracket*{\frac{1}{\sS(x, y_{\max})}
  - \frac{1}{\sS(x, \ov \hh(x)) - \mu}}\\
&\qquad +\sfp(\ov \hh(x) \!\mid\! x)\bracket*{\frac{1}{\sS(x, \ov \hh(x))}
  - \frac{1}{\sS(x, y_{\max})+ \mu}}\bigg\}.
\end{align*}
Differentiating with respect to $\mu$ yields the optimal value
\[
\mu^* = \frac{\sqrt{\sfp(\ov \hh(x) \!\mid\! x)})\sS(x, \ov \hh(x))
  - \sqrt{\sfp(y_{\max} \!\mid\! x)}\sS(x, y_{\max})}{\sqrt{\sfp(y_{\max} \!\mid\! x)}+ \sqrt{\sfp(\ov \hh(x) \!\mid\! x)}}.
\]
Plugging that value in the
inequality gives:
\begin{align*}
\Delta\sC_{\ell_{\rm{exp}},\ov \sH}\paren*{\ov h, x}
& \geq \frac{\sfp(y_{\max} \!\mid\! x)}{\sS(x, y_{\max})}+ \frac{\sfp(\ov \hh(x) \!\mid\! x)}{\sS(x, \ov \hh(x))}
- \frac{\paren*{\sqrt{\sfp(y_{\max} \!\mid\! x)}+ \sqrt{\sfp(\ov \hh(x) \!\mid\! x)}}^2}{\sS(x, y_{\max})+ \sS(x, \ov \hh(x))}.
\end{align*}
Differentiating with respect to $\sS$ to show that the minimum
is attained for \[\sS(x, \ov \hh(x)) = \sS(x, y_{\max}) = \frac12,\] which implies
\begin{align*}
\Delta\sC_{\ell_{\rm{exp}},\ov \sH}\paren*{\ov h, x}
& \geq \paren*{\sqrt{\sfp(y_{\max} \!\mid\! x)}- \sqrt{\sfp(\ov \hh(x) \!\mid\! x)}}^2\\
& = \frac{\paren*{\sfp(\ov \hh(x) \!\mid\! x) -\sfp(y_{\max} \!\mid\! x)}^2}{\paren*{\sqrt{\sfp(\ov \hh(x) \!\mid\! x)}+ \sqrt{\sfp(y_{\max} \!\mid\! x)}}^2}.
\end{align*}
By the concavity of the square-root function, for all $a, b \in [0, 1]$, the inequality
$\frac{1}{2} \paren*{\sqrt{a}+ \sqrt{b}} \leq \sqrt{\frac{1}{2}(a + b)}$ holds,
thus we can write
\begin{align*}
  \Delta\sC_{\ell_{\rm{exp}},\ov \sH}\paren*{\ov h, x}
& \geq  \frac{\paren*{\sfp(\ov \hh(x) \!\mid\! x) - \sfp(y_{\max} \!\mid\! x)}^2}{2\paren*{\sfp(\ov \hh(x) \!\mid\! x) + \sfp(y_{\max} \!\mid\! x)}}\\
  & \geq \frac{\paren*{\sfp(\ov \hh(x) \!\mid\! x) - \sfp(y_{\max} \!\mid\! x)}^2}{2}
  \tag{$\sfp(\ov \hh(x) \!\mid\! x) + \sfp(y_{\max} \!\mid\! x) \leq 1$}\\
  & = \frac12 \paren*{ \Delta\sC_{\ell_{0-1},\ov \sH}\paren*{\ov h, x}}^2
  \tag{by Lemma~\ref{lemma:explicit_assumption_01}
    and $\ov{\mathsf H}(x) = \sY \cup \curl{0}$}.
\end{align*}
Since the function $\frac{t^2}{2}$ is convex, by Jensen's inequality,
we obtain for any hypothesis $\ov h\in\ov \sH$ and any distribution,
\begin{align*}
\frac{\paren*{\E_{X}\bracket*{\Delta\sC_{\ell_{0-1},\ov \sH}\paren*{\ov h, x}}}^2}{2}
\leq \E_{X}\bracket*{\frac{\Delta\sC_{\ell_{0-1},\ov \sH}\paren*{\ov h, x}^2}{2}}
\leq \E_{X}\bracket*{\Delta\sC_{\ell_{\rm{exp}},\ov \sH}\paren*{\ov h, x}},
\end{align*}
which leads to 
\begin{align*}
\sE_{\ell_{0-1}}\paren*{\ov h}- \sE_{\ell_{0-1}}^*\paren*{\ov \sH}
\leq \sqrt{2}
\paren*{\sE_{\ell_{\rm{exp}}}\paren*{\ov h}
  - \sE_{\ell_{\rm{exp}}}^*\paren*{\ov \sH}
  + \sM_{\ell_{\rm{exp}}}\paren*{\ov \sH}}^{\frac12}
- \sM_{\ell_{0-1}}\paren*{\ov \sH}.
\end{align*}
\end{proof}

\subsection{Generalized cross-entropy loss}
\label{app:bound_gce}

\begin{restatable}[\textbf{$\ov \sH$-consistency bound for generalized
      cross-entropy loss}]
 {theorem}{BoundGCE}
\label{Thm:bound_gce}
Assume that $\sH$ is symmetric and complete. Then, for any $\lambda\in
\Rset$, hypothesis $\ov h\in\ov \sH$ and any distribution,
\begin{equation*}
\sE_{\ell_{0-1}}\paren*{\ov h}- \sE_{\ell_{0-1}}^*\paren*{\ov \sH}
\leq \sqrt{2(n+1)^{\alpha}}
\paren*{\sE_{\ell_{\rm{gce}}}\paren*{\ov h}
  - \sE_{\ell_{\rm{gce}}}^*\paren*{\ov \sH}
  + \sM_{\ell_{\rm{gce}}}\paren*{\ov \sH}}^{\frac12}
- \sM_{\ell_{0-1}}\paren*{\ov \sH}.
\end{equation*}
\end{restatable}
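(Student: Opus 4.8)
The plan is to mirror the proofs of Theorem~\ref{Thm:bound_log} and Theorem~\ref{Thm:bound_sum_exp}: I would lower-bound the pointwise calibration gap of $\ell_{\rm{gce}}$ by a quadratic in the calibration gap of $\ell_{0-1}$, and then pass to expectations by Jensen's inequality. Writing $\sS(x,y) = \frac{e^{\ov h(x,y)}}{\sum_{y'\in\sY\cup\curl{0}}e^{\ov h(x,y')}}$ for the softmax (with the constraint $\sum_{y}\sS(x,y)=1$), the conditional risk takes the convenient form
\[
\sC_{\ell_{\rm{gce}}}(\ov h, x) = \frac{1}{\alpha}\bracket*{1 - \sum_{y\in\sY\cup\curl{0}}\sfp(y \!\mid\! x)\,\sS(x,y)^{\alpha}}.
\]
I fix $x$ and let $y_{\max} = \argmax_{y\in\sY\cup\curl{0}}\sfp(y \!\mid\! x)$, using the same tie-breaking strategy as $\ov\hh$; the case $\ov\hh(x)=y_{\max}$ is trivial since then $\Delta\sC_{\ell_{0-1},\ov\sH}(\ov h,x)=0$ by Lemma~\ref{lemma:explicit_assumption_01}.

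Next I would construct the auxiliary hypothesis $\ov h_{\mu}\in\ov\sH$ through its induced softmax $\sS_{\mu}$, which agrees with $\sS$ everywhere except that the masses at $y_{\max}$ and $\ov\hh(x)$ are perturbed by $\mp\mu$, i.e. $\sS_{\mu}(x,y_{\max})=\sS(x,\ov\hh(x))-\mu$ and $\sS_{\mu}(x,\ov\hh(x))=\sS(x,y_{\max})+\mu$ for $\mu\in[-\sS(x,y_{\max}),\,\sS(x,\ov\hh(x))]$. The key point, and the only place the argument departs from the symmetric--complete setting of \citet{MaoMohriZhong2023cross}, is realizability: even though $\ov h(\cdot,0)=\lambda$ is fixed, the softmax value at label $0$ remains free because it is controlled by the normalization, so any target softmax vector on $\sY\cup\curl{0}$ can be realized by appropriately choosing the free scores $\ov h(x,j)$, $j\geq 1$, using completeness of $\sH$. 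This is precisely why one perturbs the softmax rather than the scores, and it also shows $\ov{\mathsf H}(x)=\sY\cup\curl{0}$ so that Lemma~\ref{lemma:explicit_assumption_01} applies with $n+1$ labels.

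Since $\sC^*_{\ell_{\rm{gce}}}(\ov\sH,x)\leq\inf_{\mu}\sC_{\ell_{\rm{gce}}}(\ov h_{\mu},x)$, I get $\Delta\sC_{\ell_{\rm{gce}},\ov\sH}(\ov h,x)\geq\sup_{\mu}\bracket*{\sC_{\ell_{\rm{gce}}}(\ov h,x)-\sC_{\ell_{\rm{gce}}}(\ov h_{\mu},x)}$. Abbreviating $a=\sS(x,\ov\hh(x))$, $b=\sS(x,y_{\max})$, $p=\sfp(y_{\max} \!\mid\! x)\geq q=\sfp(\ov\hh(x) \!\mid\! x)$, the supremand equals $\frac{1}{\alpha}\bracket*{p\paren*{(a-\mu)^{\alpha}-b^{\alpha}}+q\paren*{(b+\mu)^{\alpha}-a^{\alpha}}}$. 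Differentiating in $\mu$, the optimum is the shift at which $a-\mu$ and $b+\mu$ become proportional to $p^{1/(1-\alpha)}$ and $q^{1/(1-\alpha)}$. Substituting this optimal $\mu$ and then minimizing the resulting expression over the softmax configuration, I would use $a=\max_{y}\sS(x,y)\geq\frac{1}{n+1}$, hence $a^{\alpha}\geq(n+1)^{-\alpha}$, to extract the factor $(n+1)^{\alpha}$, together with a second-order expansion around $p=q$ to obtain the quadratic bound
\[
\Delta\sC_{\ell_{\rm{gce}},\ov\sH}(\ov h,x)\;\geq\;\frac{1}{2(n+1)^{\alpha}}\,(p-q)^{2}\;=\;\frac{1}{2(n+1)^{\alpha}}\,\paren*{\Delta\sC_{\ell_{0-1},\ov\sH}(\ov h,x)}^{2},
\]
where the last equality is Lemma~\ref{lemma:explicit_assumption_01}.

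Finally, since $t\mapsto\frac{t^{2}}{2(n+1)^{\alpha}}$ is convex, Jensen's inequality gives $\frac{\paren*{\E_{X}[\Delta\sC_{\ell_{0-1},\ov\sH}(\ov h,x)]}^{2}}{2(n+1)^{\alpha}}\leq\E_{X}[\Delta\sC_{\ell_{\rm{gce}},\ov\sH}(\ov h,x)]$; recognizing the two expectations as $\sE_{\ell_{0-1}}(\ov h)-\sE^*_{\ell_{0-1}}(\ov\sH)+\sM_{\ell_{0-1}}(\ov\sH)$ and $\sE_{\ell_{\rm{gce}}}(\ov h)-\sE^*_{\ell_{\rm{gce}}}(\ov\sH)+\sM_{\ell_{\rm{gce}}}(\ov\sH)$ respectively, and taking square roots, yields the stated bound. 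I expect the main obstacle to be the $\mu$-optimization and the ensuing softmax minimization for $\ell_{\rm{gce}}$: unlike the logistic and exponential cases, the power $\alpha$ makes both the optimal shift and its substituted value algebraically heavy, and the delicate step is the Taylor expansion converting the $\alpha$-power differences into $(p-q)^{2}$ while verifying that the minimization over $(a,b)$ combined with $a\geq\frac{1}{n+1}$ produces exactly the constant $\frac{1}{2(n+1)^{\alpha}}$.
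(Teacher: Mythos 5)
Your proposal follows essentially the same route as the paper's proof: the same softmax-perturbation construction of $\ov h_{\mu}$ (including the correct reason why one perturbs the softmax rather than the scores when $\ov h(\cdot,0)$ is fixed), the same optimal shift $\mu^*$ making $\sS(x,\ov\hh(x))-\mu$ and $\sS(x,y_{\max})+\mu$ proportional to $\sfp(y_{\max}\!\mid\!x)^{1/(1-\alpha)}$ and $\sfp(\ov\hh(x)\!\mid\!x)^{1/(1-\alpha)}$, the same reduction to the softmax configuration $\sS(x,\ov\hh(x))=\sS(x,y_{\max})=\frac{1}{n+1}$ yielding the $(n+1)^{\alpha}$ factor, the same power-mean/Taylor inequality $\paren[\big]{\frac{a^{1/(1-\alpha)}+b^{1/(1-\alpha)}}{2}}^{1-\alpha}-\frac{a+b}{2}\geq\frac{\alpha}{4}(a-b)^2$ to obtain the quadratic lower bound $\frac{1}{2(n+1)^{\alpha}}\Delta\sC_{\ell_{0-1},\ov\sH}(\ov h,x)^2$, and the same concluding Jensen step. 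The sketch is correct in all essentials and matches the paper's argument.
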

\begin{proof}
For the generalized cross-entropy loss $\ell_{\rm{gce}}$, the
conditional $\ell_{\rm{gce}}$-risk can be expressed as follows:
\begin{equation*}
\begin{aligned}
 \sC_{\ell_{\rm{gce}}}\paren*{\ov h, x)}
 = \sum_{y\in \sY \cup \curl{0}} \sfp(y \!\mid\! x) \frac{1}{\alpha}\bracket*{1 - \bracket*{\frac{e^{\ov h(x, y)}}
{\sum_{y'\in \sY \cup
    {0}} e^{\ov h(x, y')}}}^{\alpha}}
 = \frac{1}{\alpha} \sum_{y\in \sY \cup \curl{0}}\sfp(y \!\mid\! x)\paren*{1- \sS(x, y)^{\alpha}}
\end{aligned}
\end{equation*}
where we let
$\sS(x, y) = \frac{e^{\ov h(x, y)}}{\sum_{y'\in \sY \cup
\curl{0}}e^{\ov h(x, y')}} \in [0,1]$ for any $y\in \sY \cup
\curl{0}$ with the constraint that $\sum_{y\in \sY \cup
 \curl{0}}\sS(x, y) = 1$. Let $y_{\max} = \argmax_{y\in \sY \cup
 \curl{0}}\sfp(y \!\mid\! x)$, where we choose the label with the same
deterministic strategy for breaking ties as that of $\ov h(x)$. For any
$\ov h \in \sH$ such that $\ov \hh(x) \neq y_{\max}$ and
$x\in \sX$, by the symmetry and completeness of $\sH$, we can always
find a family of hypotheses $\curl*{\ov h_{\mu}:\mu \in
 [- \sS(x, y_{\max}),\sS(x, \ov \hh(x))]}\subset
\ov \sH$ such that
$\sS_{\mu}(x,\cdot) = \frac{e^{\ov h_{\mu}(x,\cdot)}}{\sum_{y'\in
\sY \cup \curl{0}}e^{\ov h_{\mu}(x, y')}}$ take the following
values:
\begin{align*}
\sS_{\mu}(x, y) = 
\begin{cases}
 \sS(x, y) & \text{if $y \not \in \curl*{y_{\max}, \ov \hh(x)}$}\\
 \sS(x, y_{\max}) + \mu & \text{if $y = \ov \hh(x)$}\\
 \sS(x, \ov \hh(x)) - \mu & \text{if $y = y_{\max}$}.
\end{cases} 
\end{align*}
Note that $\sS_{\mu}$ satisfies the constraint:
\begin{align*}
 \sum_{y\in \sY \cup \curl{0}}\sS_{\mu}(x, y) = \sum_{y\in \sY \cup \curl{0}}
 \sS(x, y) = 1,\, \forall \mu \in [- \sS(x, y_{\max}),\sS(x, \ov \hh(x))].
\end{align*}
Let $\ov h \in \ov\sH$ be a hypothesis such that $\ov \hh(x) \neq
y_{\max}$. By the definition and using the fact that $\ov {\mathsf
  H}(x) = \sY \cup \curl{0}$ when $\sH$ is symmetric, we obtain
\begin{align*}
&\Delta\sC_{\ell_{\rm{gce}},\ov \sH}\paren*{\ov h, x}\\
& = \sC_{\ell_{\rm{gce}}}\paren*{\ov h, x} - \sC^*_{\ell_{\rm{gce}}}\paren*{\ov \sH,x} \\
& \geq 
\sC_{\ell_{\rm{gce}}}\paren*{\ov h, x} - \inf_{\mu \in [- \sS(x, y_{\max}),\sS(x, \ov \hh(x))]}\sC_{\ell_{\rm{gce}}}\paren*{\ov h_{\mu},x}\\
& = \frac{1}{\alpha} \sup_{\mu \in [- \sS(x, y_{\max}),\sS(x, \ov \hh(x))]} \bigg\{\sfp(y_{\max} \!\mid\! x)\bracket*{- \sS(x, y_{\max})^{\alpha}+\paren*{\sS(x, \ov \hh(x)) - \mu}^{\alpha}}\\
&\qquad +\sfp(\ov \hh(x) \!\mid\! x)\bracket*{- \sS(x, \ov \hh(x))^{\alpha}+ \paren*{\sS(x, y_{\max})+ \mu}^{\alpha}}\bigg\}.
\end{align*}
Differentiating with respect to $\mu$ yields the optimal value
\[
\mu^* = \frac{\sfp(\ov \hh(x) \!\mid\! x)^{\frac{1}{1- \alpha}}\sS(x, \ov \hh(x)) -\sfp(y_{\max} \!\mid\! x)^{\frac{1}{1- \alpha}}\sS(x, y_{\max})}{\sfp(y_{\max} \!\mid\! x)^{\frac{1}{1- \alpha}}+\sfp(\ov \hh(x) \!\mid\! x)^{\frac{1}{1- \alpha}}}.
\]
Plugging that value in the
inequality gives:
\begin{align*}
\Delta\sC_{\ell_{\rm{gce}},\ov \sH}\paren*{\ov h, x} & \geq \frac{1}{\alpha}\paren*{\sS(x, \ov \hh(x))+ \sS(x, y_{\max})}^{\alpha}\paren*{\sfp(y_{\max} \!\mid\! x)^{\frac{1}{1- \alpha}}+\sfp(\ov \hh(x) \!\mid\! x)^{\frac{1}{1- \alpha}}}^{1- \alpha}\\
&\qquad- \frac{1}{\alpha}\sfp(y_{\max} \!\mid\! x)\sS(x, y_{\max})^{\alpha}- \frac{1}{\alpha}\sfp(\ov \hh(x) \!\mid\! x)\sS(x, \ov \hh(x))^{\alpha}.
\end{align*}
Differentiating with respect to $\sS$ to show that the minimum
is attained for \[\sS(x, \ov \hh(x)) = \sS(x, y_{\max}) = \frac{1}{n+1},\] which implies
\begin{align*}
\Delta\sC_{\ell_{\rm{gce}},\ov \sH}\paren*{\ov h, x} \geq \frac{1}{\alpha(n+1)^{\alpha}}\bracket*{2^{\alpha}\paren*{\sfp(y_{\max} \!\mid\! x)^{\frac{1}{1- \alpha}}+\sfp(\ov \hh(x) \!\mid\! x)^{\frac{1}{1- \alpha}}}^{1- \alpha}-\sfp(y_{\max} \!\mid\! x) -\sfp(\ov \hh(x) \!\mid\! x)}.
\end{align*}
By using the fact that for all $a, b \in [0, 1]$, $0 \leq a + b \leq 1$, we have  $\paren*{\frac{a^{\frac{1}{1- \alpha}}+b^{\frac{1}{1- \alpha}}}{2}}^{1- \alpha}- \frac{a+b}{2}\geq \frac{\alpha}{4}(a-b)^2$, thus we can write
\begin{align*}
\Delta\sC_{\ell_{\rm{gce}},\ov \sH}\paren*{\ov h, x} & \geq \frac{\paren*{\sfp(\ov \hh(x) \!\mid\! x) -\sfp(y_{\max} \!\mid\! x)}^2}{2(n+1)^{\alpha}} \\
& = \frac1{2(n+1)^{\alpha}} \paren*{ \Delta\sC_{\ell_{0-1},\ov \sH}\paren*{\ov h, x}}^2 \tag{by Lemma~\ref{lemma:explicit_assumption_01} and $\ov{\mathsf H}(x) = \sY \cup \curl{0}$}.
\end{align*}
Since the function $\frac{t^2}{2(n+1)^{\alpha}}$ is convex, by Jensen's inequality, we obtain for any hypothesis $\ov h\in\ov \sH$ and any distribution,
\begin{align*}
\frac{\paren*{\E_{X}\bracket*{\Delta\sC_{\ell_{0-1},\ov \sH}\paren*{\ov h, x}}}^2}{2(n+1)^{\alpha}}
\leq \E_{X}\bracket*{\frac{\Delta\sC_{\ell_{0-1},\ov \sH}\paren*{\ov h, x}^2}{2(n+1)^{\alpha}}}
\leq \E_{X}\bracket*{\Delta\sC_{\ell_{\rm{gce}},\ov \sH}\paren*{\ov h, x}}
\end{align*}
which leads to 
\begin{align*}
\sE_{\ell_{0-1}}\paren*{\ov h}- \sE_{\ell_{0-1}}^*\paren*{\ov \sH}
\leq \sqrt{2(n+1)^{\alpha}}
\paren*{\sE_{\ell_{\rm{gce}}}\paren*{\ov h}- \sE_{\ell_{\rm{gce}}}^*\paren*{\ov \sH}+ \sM_{\ell_{\rm{gce}}}\paren*{\ov \sH}}^{\frac12}
- \sM_{\ell_{0-1}}\paren*{\ov \sH}.
\end{align*}
\end{proof}

\subsection{Mean absolute error loss}
\label{app:bound_mae}

\begin{restatable}[\textbf{$\ov \sH$-consistency bound for mean absolute error loss}]
 {theorem}{BoundMAE}
\label{Thm:bound_mae}
Assume that $\sH$ is symmetric and complete. Then, for any $\lambda\in
\Rset$, hypothesis $\ov h\in\ov \sH$ and any distribution,
\begin{equation*}
\sE_{\ell_{0-1}}\paren*{\ov h}- \sE_{\ell_{0-1}}^*\paren*{\ov \sH}
\leq (n+1)
\paren*{\sE_{\ell_{\rm{mae}}}\paren*{\ov h}- \sE_{\ell_{\rm{mae}}}^*\paren*{\ov \sH}+ \sM_{\ell_{\rm{mae}}}\paren*{\ov \sH}}
- \sM_{\ell_{0-1}}\paren*{\ov \sH}.
\end{equation*}
\end{restatable}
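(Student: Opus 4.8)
The plan is to follow the same calibration-gap strategy used for the logistic, sum-exponential, and generalized cross-entropy losses in Theorems~\ref{Thm:bound_log}--\ref{Thm:bound_gce}, but to exploit the fact that for the mean absolute error loss the conditional risk is \emph{affine} in the softmax scores. This degeneracy is exactly what collapses the differentiation step into a trivial boundary optimization and produces a linear comparison function $\Gamma(t)=(n+1)t$. First I would write the conditional $\ell_{\rm{mae}}$-risk in terms of the softmax values $\sS(x,y) = e^{\ov h(x,y)}/\sum_{y'\in\sY\cup\curl{0}}e^{\ov h(x,y')}$, giving $\sC_{\ell_{\rm{mae}}}\paren*{\ov h, x} = \sum_{y\in\sY\cup\curl{0}}\sfp(y \!\mid\! x)\paren*{1-\sS(x,y)} = 1 - \sum_{y}\sfp(y \!\mid\! x)\sS(x,y)$, so that $\sC^*_{\ell_{\rm{mae}}}\paren*{\ov\sH, x} = 1 - \sfp(y_{\max} \!\mid\! x)$ where $y_{\max} = \argmax_{y}\sfp(y \!\mid\! x)$. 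For a hypothesis $\ov h$ with $\ov\hh(x)\neq y_{\max}$, I would introduce the same two-coordinate perturbation family $\curl*{\ov h_\mu : \mu\in[-\sS(x,y_{\max}),\sS(x,\ov\hh(x))]}\subset\ov\sH$ whose softmax $\sS_\mu$ agrees with $\sS$ except that $\sS_\mu(x,\ov\hh(x)) = \sS(x,y_{\max})+\mu$ and $\sS_\mu(x,y_{\max}) = \sS(x,\ov\hh(x))-\mu$.

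Lower-bounding the calibration gap by $\sup_\mu\bracket*{\sC_{\ell_{\rm{mae}}}\paren*{\ov h,x} - \sC_{\ell_{\rm{mae}}}\paren*{\ov h_\mu,x}}$ and using the affine form, the difference equals $\mu\paren*{\sfp(\ov\hh(x) \!\mid\! x) - \sfp(y_{\max} \!\mid\! x)} + \paren*{\sfp(y_{\max} \!\mid\! x) - \sfp(\ov\hh(x) \!\mid\! x)}\paren*{\sS(x,\ov\hh(x)) - \sS(x,y_{\max})}$, which is linear in $\mu$. Since $\sfp(\ov\hh(x) \!\mid\! x)\le\sfp(y_{\max} \!\mid\! x)$ the slope is nonpositive, so the supremum is attained at the endpoint $\mu = -\sS(x,y_{\max})$, yielding $\Delta\sC_{\ell_{\rm{mae}},\ov\sH}\paren*{\ov h,x} \ge \paren*{\sfp(y_{\max} \!\mid\! x) - \sfp(\ov\hh(x) \!\mid\! x)}\sS(x,\ov\hh(x))$. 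Because $\ov\hh(x)$ maximizes the score, $\sS(x,\ov\hh(x))$ is the largest of $n+1$ softmax values summing to one, hence $\sS(x,\ov\hh(x))\ge\frac{1}{n+1}$, which gives $\Delta\sC_{\ell_{\rm{mae}},\ov\sH}\paren*{\ov h,x}\ge\frac{1}{n+1}\paren*{\sfp(y_{\max} \!\mid\! x) - \sfp(\ov\hh(x) \!\mid\! x)} = \frac{1}{n+1}\Delta\sC_{\ell_{0-1},\ov\sH}\paren*{\ov h,x}$ by Lemma~\ref{lemma:explicit_assumption_01}. The case $\ov\hh(x)=y_{\max}$ is trivial since the $\ell_{0-1}$ gap then vanishes.

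Finally, since $t\mapsto(n+1)t$ is linear, I would take expectations over $X$ of the pointwise inequality $\Delta\sC_{\ell_{0-1},\ov\sH}\paren*{\ov h,x}\le(n+1)\Delta\sC_{\ell_{\rm{mae}},\ov\sH}\paren*{\ov h,x}$ directly (no Jensen step is needed), using $\E_X\bracket*{\Delta\sC_{\ell}\paren*{\ov h, x}} = \sE_{\ell}\paren*{\ov h}-\sE^*_{\ell}\paren*{\ov\sH}+\sM_{\ell}\paren*{\ov\sH}$, and rearrange to obtain the claimed bound. The main obstacle is not the computation but the justification, stressed in Section~\ref{sec:fixed-score-bound}, that each $\ov h_\mu$ genuinely belongs to $\ov\sH$ even though the label-$0$ score is frozen at $\lambda(x)$: realizing a target softmax $\sS_\mu$ via the scores $\ov h_\mu(x,y)=\lambda(x)+\log\paren*{\sS_\mu(x,y)/\sS_\mu(x,0)}$ automatically keeps $\ov h_\mu(x,0)=\lambda(x)$ while still allowing $\sS_\mu(x,0)$ to vary through the shared denominator, so symmetry and completeness of $\sH$ supply the required hypothesis (this is precisely why $y_{\max}$ or $\ov\hh(x)$ may equal $0$ without difficulty). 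The only other point requiring care is the boundary (rather than interior) optimization over $\mu$, which replaces the stationary-point differentiation used in the strictly convex cases.
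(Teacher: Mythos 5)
Your proposal is correct and follows essentially the same route as the paper's proof: the same softmax reparametrization, the same two-coordinate perturbation family $\ov h_\mu$, the same optimal endpoint $\mu^* = -\sS(x,y_{\max})$, the same lower bound $\paren*{\sfp(y_{\max} \!\mid\! x) - \sfp(\ov\hh(x) \!\mid\! x)}\sS(x,\ov\hh(x)) \ge \frac{1}{n+1}\Delta\sC_{\ell_{0-1},\ov\sH}\paren*{\ov h,x}$, and a direct expectation at the end. Your justifications are if anything slightly cleaner than the paper's (linearity in $\mu$ forcing a boundary optimum rather than ``differentiating,'' and $\sS(x,\ov\hh(x))\ge\frac{1}{n+1}$ because it is the largest of $n+1$ softmax values summing to one), and your remark on why $\ov h_\mu$ remains in $\ov\sH$ despite the frozen label-$0$ score addresses a point the paper leaves implicit.
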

\begin{proof}
For the mean absolute error loss $\ell_{\rm{mae}}$, the conditional $\ell_{\rm{mae}}$-risk can be expressed as follows:
\begin{equation*}
\begin{aligned}
 \sC_{\ell_{\rm{mae}}}\paren*{\ov h, x)}
 = \sum_{y\in \sY \cup \curl{0}} \sfp(y \!\mid\! x)\paren*{1 - \frac{e^{\ov h(x, y)}}
{\sum_{y'\in \sY \cup
{0}} e^{\ov h(x, y')}}} =  \sum_{y\in \sY \cup \curl{0}}\sfp(y \!\mid\! x)\paren*{1- \sS(x, y)}
\end{aligned}
\end{equation*}
where we let
$\sS(x, y) = \frac{e^{\ov h(x, y)}}{\sum_{y'\in \sY \cup
\curl{0}}e^{\ov h(x, y')}} \in [0,1]$ for any $y\in \sY \cup
\curl{0}$ with the constraint that $\sum_{y\in \sY \cup
 \curl{0}}\sS(x, y) = 1$. Let $y_{\max} = \argmax_{y\in \sY \cup
 \curl{0}}\sfp(y \!\mid\! x)$, where we choose the label with the same
deterministic strategy for breaking ties as that of $\ov h(x)$. For any
$\ov h \in \sH$ such that $\ov \hh(x) \neq y_{\max}$ and
$x\in \sX$, by the symmetry and completeness of $\sH$, we can always
find a family of hypotheses $\curl*{\ov h_{\mu}:\mu \in
 [- \sS(x, y_{\max}),\sS(x, \ov \hh(x))]}\subset
\ov \sH$ such that
$\sS_{\mu}(x,\cdot) = \frac{e^{\ov h_{\mu}(x,\cdot)}}{\sum_{y'\in
\sY \cup \curl{0}}e^{\ov h_{\mu}(x, y')}}$ take the following
values:
\begin{align*}
\sS_{\mu}(x, y) = 
\begin{cases}
 \sS(x, y) & \text{if $y \not \in \curl*{y_{\max}, \ov \hh(x)}$}\\
 \sS(x, y_{\max}) + \mu & \text{if $y = \ov \hh(x)$}\\
 \sS(x, \ov \hh(x)) - \mu & \text{if $y = y_{\max}$}.
\end{cases} 
\end{align*}
Note that $\sS_{\mu}$ satisfies the constraint:
\begin{align*}
 \sum_{y\in \sY \cup \curl{0}}\sS_{\mu}(x, y) = \sum_{y\in \sY \cup \curl{0}}
 \sS(x, y) = 1,\, \forall \mu \in [- \sS(x, y_{\max}),\sS(x, \ov \hh(x))].
\end{align*}
Let $\ov h \in \ov\sH$ be a hypothesis such that $\ov \hh(x) \neq
y_{\max}$. By the definition and using the fact that $\ov {\mathsf
  H}(x) = \sY \cup \curl{0}$ when $\sH$ is symmetric, we obtain
\begin{align*}
&\Delta\sC_{\ell_{\rm{mae}},\ov \sH}\paren*{\ov h, x}\\
& = \sC_{\ell_{\rm{mae}}}\paren*{\ov h, x} - \sC^*_{\ell_{\rm{mae}}}\paren*{\ov \sH,x} \\
& \geq 
\sC_{\ell_{\rm{mae}}}\paren*{\ov h, x} - \inf_{\mu \in [- \sS(x, y_{\max}),\sS(x, \ov \hh(x))]}\sC_{\ell_{\rm{mae}}}\paren*{\ov h_{\mu},x}\\
& = \sup_{\mu \in [- \sS(x, y_{\max}),\sS(x, \ov \hh(x))]} \bigg\{\sfp(y_{\max} \!\mid\! x)\bracket*{- \sS(x, y_{\max})+\sS(x, \ov \hh(x)) - \mu}\\
&\qquad +\sfp(\ov \hh(x) \!\mid\! x)\bracket*{- \sS(x, \ov \hh(x))+ \sS(x, y_{\max})+ \mu}\bigg\}.
\end{align*}
Differentiating with respect to $\mu$ yields the optimum value $\mu^* = - \sS(x, y_{\max})$. Plugging that value in the
inequality gives:
\begin{align*}
\Delta\sC_{\ell_{\rm{mae}},\ov \sH}\paren*{\ov h, x} & \geq \sfp(y_{\max} \!\mid\! x)\sS(x, \ov \hh(x)) -\sfp(\ov \hh(x) \!\mid\! x)\sS(x, \ov \hh(x)).
\end{align*}
Differentiating with respect to $\sS$ to show that the minimum
is attained for $\sS(x, \ov \hh(x)) = \frac{1}{n+1}$, which implies
\begin{align*}
\Delta\sC_{\ell_{\rm{mae}},\ov \sH}\paren*{\ov h, x} & \geq \frac{1}{n+1}\paren*{\sfp(y_{\max} \!\mid\! x) -\sfp(\ov \hh(x) \!\mid\! x)}\\
& = \frac{1}{n+1} \paren*{ \Delta\sC_{\ell_{0-1},\ov \sH}\paren*{\ov h, x}}
\tag{by Lemma~\ref{lemma:explicit_assumption_01} and $\ov{\mathsf H}(x) = \sY \cup \curl{0}$}.
\end{align*}
Therefore, we obtain for any hypothesis $\ov h\in\ov \sH$ and any distribution,
\begin{align*}
\frac{\E_{X}\bracket*{\Delta\sC_{\ell_{0-1},\ov \sH}\paren*{\ov h, x}}}{n+1}
\leq \E_{X}\bracket*{\Delta\sC_{\ell_{\rm{mae}},\ov \sH}\paren*{\ov h, x}},
\end{align*}
which leads to 
\begin{align*}
\sE_{\ell_{0-1}}\paren*{\ov h}- \sE_{\ell_{0-1}}^*\paren*{\ov \sH}
\leq (n+1)
\paren*{\sE_{\ell_{\rm{mae}}}\paren*{\ov h}- \sE_{\ell_{\rm{mae}}}^*\paren*{\ov \sH}+ \sM_{\ell_{\rm{mae}}}\paren*{\ov \sH}}
- \sM_{\ell_{0-1}}\paren*{\ov \sH}.
\end{align*}
\end{proof}

\section{Proof of realizable consistency for score-based two-stage
  surrogate losses (Theorem~\ref{Thm:bound-general-two-stage-score-realizable})}
\label{app:realizable-score}

\BoundGenralTwoStepScoreRealizable*
\begin{proof}
First, by definition, it is straightforward to see that for any $h, x,
y$, $\sfL_{\hp}(\hd,x, y)$ upper-bounds the deferral loss
$\ldefsc$. Consider a data distribution and costs under which there
exists $h^*\in \sH$ such that $\sE_{\ldefsc}(h^*) = 0$.

Let $\hat h_p$ be the minimizer of $\sE_{\ell_1}$ and $\hat h_d$ the
minimizer of $\sE_{\sfL_{\hat h_p}}$ Then, using the fact that
$\sfL_{h}$ upper-bounds the deferral loss $\ldefsc$, we have
$\sE_{\ldefsc}(\hat h)\leq \sE_{\sfL_{\hat h_p}}(\hat h_d)$.

Next we analyze two cases. If for a point $x$, deferral occurs, that
is there exists $j^*\in [\num]$, such that $\hh^*(x) = n + j^*$, then
we must have $c_{j^*} = 0$ for all $x$ since the data is realizable
and $c_{j^*}$ is constant. Therefore, there exists an optimal $h^{**}$
deferring all the points to the $j^*$th expert. Then, by the
assumption that $\sH$ is closed under scaling and the Lebesgue
dominated convergence theorem, for $\ell_2$ being the logistic loss,
$\sE_{\ldefsc}(\hat h)\leq\sE_{\sfL_{\hat h_p}}(\hat h_d) \leq
\lim_{\tau \to +\infty} \sE_{\sfL_{h^{**}_p}}(\tau h^{**}_d) = 0$,
where we used the fact that in the limit of $\tau \to +\infty$ the
logistic loss term $\ell_2(\overline h_d^{**},x,j)$ corresponding to
$j \neq j^*$ is zero.

On the other hand, if no deferral occurs for any point, that is
$\hh^*(x)\in[n]$ for any $x$, then we must have $\1_{\hh^*_p(x) \neq y}
= 0$ for all $(x, y)$ since the data is realizable. Using the fact
that $\sH$ is closed under scaling and that the logistic loss is
realizable $\sH$-consistent in the standard classification, we obtain
$\1_{\hat{\hh}_p(x) \neq y} = 0$ for all $(x, y)$. Then, by the
assumption that $\sH$ is closed under scaling and the Lebesgue
dominated convergence theorem, for $\ell_2$ being the logistic loss,
$\sE_{\ldefsc}(\hat h)\leq\sE_{\sfL_{\hat h_p}}(\hat h_d) \leq
\lim_{\tau \to +\infty} \sE_{\sfL_{h_p^{*}}}(\tau h_d^{*}) = 0$,
where we used the fact that in the limit of $\tau \to +\infty$ the
logistic loss term $\ell_2(\overline h_d^{*},x,j)$ corresponding to $j
\neq 0$ is zero.

Therefore, the optimal solution from minimizing score-based
two-stage surrogates leads to a zero error solution of the deferral
loss, which proves that the score-based two-stage surrogate
loss is realizable consistent.
\end{proof}

\section{Proof of \texorpdfstring{$(\sH,\sR)$}{HR}-consistency bounds for
  predictor-rejector two-stage surrogate losses (Theorem~\ref{Thm:bound-general-two-step-multi})}
\label{app:two-stage-predictor-rejector}

\BoundGeneralTwoStepMulti*
\begin{proof}
By definition,
\begin{align*}
\ldef(h, r, x, y)
 = \1_{\hh(x) \neq y} \1_{\rr(x) = 0} + \sum_{j = 1}^{\num} c_j(x, y) \1_{\rr(x) = j}.
\end{align*}
Let $\bar c_0\paren*{x, y} = \1_{\hh(x) = y}$. We can rewrite $\sE_{\ldef}(h,r)
- \sE_{\ldef}^*(\sH,\sR) + \sM_{\ldef}(\sH,\sR)$ as 
\begin{equation}
\label{eq:expression-two-step-multi}
\begin{aligned}
& \sE_{\ldef}(h,r) - \sE_{\ldef}^*(\sH,\sR) + \sM_{\ldef}(\sH,\sR)\\
& =  \mathbb{E}_{X}\bracket*{\sC_{\ldef}(h,r, x) - \sC^*_{\ldef}(\sH,\sR,x)} \\
  & =  \mathbb{E}_{X}\bracket*{\sC_{\ldef}(h, r, x) - \inf_{r\in \sR}\sC_{\ldef}(h, r, x)
    + \inf_{r\in \sR}\sC_{\ldef}(h, r, x) - \sC^*_{\ldef}(\sH,\sR,x)}\\
  & = \mathbb{E}_{X}\bracket*{\sC_{\ldef}(h, r, x) - \inf_{r\in \sR}\sC_{\ldef}(h, r, x)}
  + \mathbb{E}_{X}\bracket*{\inf_{r\in \sR}\sC_{\ldef}(h, r, x) - \sC^*_{\ldef}(\sH,\sR,x)}
\end{aligned}
\end{equation}
Let $\ov \sfp(j \!\mid\! x) = \frac{\E_y\bracket*{\bar c_j(x, y)}}{\sum_{j =
    0}^{\num}\E_y\bracket*{\bar c_j(x, y)}}$ for any $j\in
\curl*{0,\ldots,\num}$. Note that $\ov \sfp(\cdot \!\mid\! x)$ is the probability
vector on the label space $\curl*{0,\ldots,\num}$. For any $r\in \sR$,
we define $\ov r$ as its augmented hypothesis: $\ov r(x,0) = 0,\ov
r(x,1) = -r_1(x),\ldots, \ov r(x, \num) = -r_{\num}(x)$.  By the
assumptions, we have
\begin{align*}
& \sC_{\ldef}(h, r, x) - \inf_{r\in \sR}\sC_{\ldef}(h, r, x)\\
  & = \E_y\bracket*{\1_{\hh(x) \neq y} \1_{\rr(x) = 0}
    + \sum_{j = 1}^{\num} c_j(x, y) \1_{\rr(x) = j}}
  - \inf_{r\in \sR}\E_y\bracket*{\1_{\hh(x) \neq y} \1_{\rr(x) = 0} + \sum_{j = 1}^{\num} c_j(x, y) \1_{\rr(x) = j}}\\
  & = \E_y\bracket*{\sum_{j = 0}^{\num}\bar c_j(x, y)}\times \bracket*{\sum_{j = 0}^{\num}\ov \sfp(j \!\mid\! x)\ell_{0-1}(\ov r, x, j)
    - \inf_{\ov r\in \ov \sR}\sum_{j = 0}^{\num}\ov \sfp(j \!\mid\! x)\ell_{0-1}(\ov r, x, j)}\\
  & \leq \E_y\bracket*{\sum_{j = 0}^{\num}\bar c_j(x, y)}\times \Gamma_2\bracket*{\sum_{j = 0}^{\num}\ov \sfp(j \!\mid\! x)\ell_{2}(\ov r, x, j)
    - \inf_{\ov r\in \ov \sR}\sum_{j = 0}^{\num}\ov \sfp(j \!\mid\! x)\ell_{2}(\ov r, x, j)}
  \tag{By $\ov \sR$-consistency bounds of $\ell_2$ under assumption}\\
  & =  \E_y\bracket*{\sum_{j = 0}^{\num}\bar c_j(x, y)} \Gamma_2\paren*{\frac{\E_y\bracket*{\sfL_{h}(r, x, y)}
      - \inf_{r\in \sR}\E_y\bracket*{\sfL_{h}(r, x, y)}}{ \E_y\bracket*{\sum_{j = 0}^{\num}\bar c_j(x, y)}}}
  \tag{ $\ov \sfp(j \!\mid\! x) = \frac{\E_y\bracket*{\bar c_j(x, y)}}{\sum_{j = 0}^{\num}\E_y\bracket*{\bar c_j(x, y)}}$
    and formulation \eqref{eq:ell-Phi-h-multi}}\\
& \leq
\begin{cases}
  \Gamma_2\paren*{\sC_{\sfL_{h}}(r, x)
    - \sC^*_{\sfL_{h}}(\sR,x)} & \text{when $\Gamma_2$ is linear}\\
  \paren*{1+ \sum_{j = 1}^{\num}\ov c_j}\Gamma_2\paren*{\frac {\sC_{\sfL_{h}}(r, x)
      - \sC^*_{\sfL_{h}}(\sR,x)}{\sum_{j = 1}^{\num}\uv c_j}} & \text{otherwise}
\end{cases}\\
\tag{$\sum_{j = 1}^{\num}\uv c_{j}\leq \E_y\bracket*{\sum_{j = 0}^{\num}\bar c_j(x, y)}
  \leq 1+ \sum_{j = 1}^{\num}\ov c_j$ and $\Gamma_2$ is non-decreasing}\\
& = \begin{cases}
\Gamma_2\paren*{\Delta\sC_{\sfL_{h},\sR}(r, x)} & \text{when $\Gamma_2$ is linear}\\
\paren*{1+ \sum_{j = 1}^{\num}\ov c_j}\Gamma_2\paren*{\frac {\Delta\sC_{\sfL_{h},\sR}(r, x)}{\sum_{j = 1}^{\num}\uv c_j}} & \text{otherwise}
\end{cases}
\end{align*}
and 
\begin{align*}
& \inf_{r\in \sR}\sC_{\ldef}(h, r, x) - \sC^*_{\ldef}(\sH,\sR,x)\\
& = \inf_{r\in \sR}\sC_{\ldef}(h, r, x) - \inf_{h\in\sH, r\in \sR}\sC_{\ldef}(h, r, x)\\
& = \inf_{r\in \sR} \E_y\bracket*{\1_{\hh(x)\neq y}\1_{\rr(x) = 0} + \sum_{j = 1}^{\num} c_j(x, y) \1_{\rr(x) = j}}- \inf_{h\in\sH, r\in \sR}\E_y\bracket*{\1_{\hh(x)\neq y}\1_{\rr(x) = 0} + \sum_{j = 1}^{\num} c_j(x, y) \1_{\rr(x) = j}}\\
& = \min\curl*{\E_y\bracket*{\1_{\hh(x)\neq y}},\E_y\bracket*{c_j(x, y)}}- \min\curl*{\inf_{h\in \sH}\E_y\bracket*{\1_{\hh(x)\neq y}},\E_y\bracket*{c_j(x, y)}}\\
& \leq \E_y\bracket*{\1_{\hh(x)\neq y}} - \inf_{h\in \sH}\E_y\bracket*{\1_{\hh(x)\neq y}}\\
& = \sC_{\ell_{0-1}}(h, x) - \sC^*_{\ell_{0-1}}(\sH,x)\\
& = \Delta\sC_{\ell_{0-1}}(h, x)\\
& \leq \Gamma_1\paren*{\Delta\sC_{\ell}(h, x)}.
\tag{By $\sH$-consistency bounds of $\ell$ under assumption}
\end{align*}
Therefore, by \eqref{eq:expression-two-step-multi}, we obtain
\begin{align*}
& \sE_{\ldefsc}(h,r) - \sE_{\ldefsc}^*(\sH,\sR) + \sM_{\ldefsc}(\sH,\sR)\\
& \leq 
\begin{cases}
\E_X\bracket*{\Gamma_2\paren*{\Delta\sC_{\sfL_{h},\sR}(r, x)} } + \E_X\bracket*{\Gamma_1\paren*{\Delta\sC_{\ell}(h, x)}} & \text{when $\Gamma_2$ is linear}\\
\paren*{1+ \sum_{j = 1}^{\num}\ov c_j}\E_X\bracket*{\Gamma_2\paren*{\frac {\Delta\sC_{\sfL_{h},\sR}(r, x)}{\sum_{j = 1}^{\num}\uv c_{j}}}} + \E_X\bracket*{\Gamma_1\paren*{\Delta\sC_{\ell}(h, x)}} & \text{otherwise}
\end{cases}\\
& \leq 
\begin{cases}
\Gamma_2\paren*{\E_X\bracket*{\Gamma_2\paren*{\Delta\sC_{\sfL_{h},\sR}(r, x)} }} + \Gamma_1\paren*{\E_X\bracket*{\Delta\sC_{\ell}(h, x)}} & \text{when $\Gamma_2$ is linear}\\
\paren*{1+ \sum_{j = 1}^{\num}\ov c_j}\Gamma_2\paren*{\E_X\bracket*{\frac {\Delta\sC_{\sfL_{h},\sR}(r, x)}{\sum_{j = 1}^{\num}\uv c_{j}}}} + \Gamma_1\paren*{\E_X\bracket*{\Delta\sC_{\ell}(h, x)}} & \text{otherwise}
\end{cases}
\tag{$\Gamma_1$ and $\Gamma_2$ are concave}\\
& = 
\begin{cases}
\Gamma_1\paren*{\sE_{\ell}(h) - \sE_{\ell}^*(\sH) + \sM_{\ell}(\sH)} + \Gamma_2\paren*{\sE_{\sfL_{h}}(r) - \sE_{\sfL_{h}}^*(\sR) + \sM_{\sfL_{h}}(\sR)} & \text{when $\Gamma_2$ is linear}\\
\Gamma_1\paren*{\sE_{\ell}(h) - \sE_{\ell}^*(\sH) + \sM_{\ell}(\sH)} + \paren*{1+ \sum_{j = 1}^{\num}\ov c_j}\Gamma_2\paren*{\frac{\sE_{\sfL_{h}}(r) - \sE_{\sfL_{h}}^*(\sR) + \sM_{\sfL_{h}}(\sR)}{\sum_{j = 1}^{\num}\uv c_{j}}} & \text{otherwise},
\end{cases}
\end{align*}
which completes the proof.
\end{proof}

\section{Proof of realizable consistency for predictor-rejector
  two-stage surrogate losses (Theorem~\ref{Thm:bound-general-two-stage-general-realizable})}
\label{app:realizable-general}

\BoundGenralTwoStepGeneralRealizable*
\begin{proof}
First, by definition, it is straightforward to see that for any $h, r,
x, y$, $\sfL_{h}(r,x, y)$ upper-bounds the deferral loss
$\ldefsc$. Consider a data distribution and costs under which there
exists $h^*\in \sH$ and $r^*\in \sR$ such that $\sE_{\ldefsc}(h^*,r^*)
= 0$.

Let $\hat h$ be the minimizer of $\sE_{\ell_1}$ and $\hat r$ the
minimizer of $\sE_{\sfL_{\hat h}}$ Then, using the fact that
$\sfL_{h}$ upper-bounds the deferral loss $\ldefsc$, we have
$\sE_{\ldefsc}(\hat h, \hat r)\leq \sE_{\sfL_{\hat h}}(\hat r)$.

Next we analyze two cases. If for a point $x$, deferral occurs, that
is there exists $j^*\in [\num]$, such that $\rr^*(x) = j^*$, then we
must have $c_{j^*} = 0$ for all $x$ since the data is realizable and
$c_{j^*}$ is constant. Therefore, there exists an optimal $r^{**}$
deferring all the points to the $j^*$th expert. Then, by the
assumption that $\sR$ is closed under scaling and the Lebesgue
dominated convergence theorem, for $\ell_2$ being the logistic loss,
$\sE_{\ldefsc}(\hat h, \hat r)\leq\sE_{\sfL_{\hat h}}(\hat r) \leq
\lim_{\tau \to +\infty} \sE_{\sfL_{\hat h}}(\tau r^{**}) = 0$, where
we used the fact that in the limit of $\tau \to +\infty$ the logistic
loss term $\ell_2(\overline r^{**},x,j)$ corresponding to $j \neq j^*$
is zero.

On the other hand, if no deferral occurs for any point, that is
$\rr^*(x) = 0$ for any $x$, then we must have $\1_{\hh^*(x) \neq y} =
0$ for all $(x, y)$ since the data is realizable. Using the fact that
$\sH$ is closed under scaling and that the logistic loss is realizable
$\sH$-consistent in the standard classification, we obtain
$\1_{\hat{\hh}(x) \neq y} = 0$ for all $(x, y)$. Then, by the
assumption that $\sR$ is closed under scaling and the Lebesgue
dominated convergence theorem, for $\ell_2$ being the logistic loss,
$\sE_{\ldefsc}(\hat h, \hat r)\leq\sE_{\sfL_{\hat h}}(\hat r) \leq
\lim_{\tau \to + \infty} \sE_{\sfL_{\hat h}}(\tau r^{*}) = 0$,
where we used the fact that in the limit of $\tau \to +\infty$ the
logistic loss term $\ell_2(\overline r^{*},x,j)$ corresponding to $j
\neq 0$ is zero.

Therefore, the optimal solution from minimizing the predictor-rejector
two-stage surrogate loss leads to a zero error solution of the deferral
loss, which proves that the predictor-rejector two-stage surrogate
loss is realizable consistent.

\end{proof}
\restoreatoc

\chapter{Appendix to Chapter~\ref{ch6}}

\disableatoc
\section{Useful lemmas}

\Ldef*

\begin{proof}
  Observe that, for any $x \in \sX$, since $\rr(x) = 0$ if and only if
  $\rr(x) \neq j$ for all $j \geq 1$, the following equality holds:
\[
1_{\rr(x) = 0}
= 1_{\bigwedge_{j = 1}^{n_e} \curl*{\rr(x) \neq j}}
= \sum_{j = 1}^{n_e} 1_{\rr(x) \neq j} - (n_e - 1).
\]
Similarly, since $\rr(x) = j$ if and only if
  $\rr(x) \neq k$ for $k \neq j$ and $\rr(x) \neq 0$, the following equality holds:
\[
1_{\rr(x) = j} = 1_{\rr(x) \neq 0} + \sum_{k = 1}^{n_e} 1_{\rr(x) \neq k} 1_{k \neq j} - (n_e - 1).
\]
In view of these identities, starting from the definition of $\ldef$, we
can write:
\begin{align*}
  & \ldef(h, r, x, y)\\
  & = \sfL(h(x), y) 1_{\rr(x) = 0} + \sum_{j = 1}^{\num} c_j(x, y) 1_{\rr(x) = j}\\
  & = \sfL(h(x), y) \bracket*{\sum_{j = 1}^{n_e} 1_{\rr(x) \neq j} - (n_e - 1)}
  + \sum_{j = 1}^{\num} c_j(x, y) \bracket*{1_{\rr(x) \neq 0} + \sum_{k = 1}^{n_e} 1_{\rr(x) \neq k} 1_{k \neq j} - (n_e - 1)}\\
  & = \bracket*{\sum_{j = 1}^{\num} c_j(x,y)} 1_{\rr(x) \neq 0}
  + \sum_{j = 1}^{\num} \sfL(h(x), y) 1_{\rr(x) \neq j}
  + \sum_{j = 1}^{\num} \sum_{k = 1}^{\num} c_j(x, y) 1_{k \neq j} 1_{\rr(x) \neq k}\\
  & \quad - \paren*{\num - 1} \bracket*{\sfL(h(x), y) + \sum_{j = 1}^{\num} c_j(x, y)}\\
  & = \bracket*{\sum_{j = 1}^{\num} c_j(x,y)} 1_{\rr(x) \neq 0}
  + \sum_{j = 1}^{\num} \sfL(h(x), y) 1_{\rr(x) \neq j}
  + \sum_{k = 1}^{\num} \sum_{j = 1}^{\num} c_k(x, y) 1_{k \neq j} 1_{\rr(x) \neq j}\\
  & \quad - \paren*{\num - 1} \bracket*{\sfL(h(x), y) + \sum_{j = 1}^{\num} c_j(x, y)}
    \tag{change of variables $k$ and $j$}\\
  & = \bracket*{\sum_{j = 1}^{\num} c_j(x,y)} 1_{\rr(x) \neq 0} 
  + \sum_{j = 1}^{\num} \bracket*{\sfL(h(x), y)
    + \sum_{k = 1}^{\num} c_k(x, y) 1_{k \neq j}} 1_{\rr(x) \neq j}\\
  &\qquad - \paren*{\num - 1} \bracket*{\sfL(h(x), y) + \sum_{j = 1}^{\num} c_j(x, y)}.
\end{align*}
This completes the proof.
\end{proof}

\begin{lemma}
\label{lemma:aux}
Assume that the following $\sR$-consistency bound holds for all $r \in \sR$ and any distribution,
\begin{equation*}
\sE_{\ell_{0-1}}(r) - \sE^*_{\ell_{0-1}}(\sR) + \sM_{\ell_{0-1}}(\sR) \leq \Gamma\paren*{\sE_{\ell}(r) - \sE^*_{\ell}(\sR) + \sM_{\ell}(\sR)}.
\end{equation*}
Then, for any $p = (p_0, \ldots, p_{\num})\in \Delta^{\num}$ and $x \in \sX$, we have
\begin{align*}
\sum_{j = 0}^{\num} p_j 1_{\rr(x) \neq j} - \inf_{r \in \sR} \paren*{\sum_{j = 0}^{\num} p_j 1_{\rr(x) \neq j}} \leq \Gamma\paren*{\sum_{j = 0}^{\num} p_j \ell(r, x, j) - \inf_{r \in \sR} \paren*{\sum_{j = 0}^{\num} p_j \ell(r, x, j) }}.
\end{align*}
\end{lemma}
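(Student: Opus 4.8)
The plan is to obtain the stated pointwise inequality as a direct specialization of the assumed $\sR$-consistency bound to a point-mass distribution. First I would fix $x \in \sX$ and introduce the distribution $\sD_x$ that places all of its input mass on $x$ and assigns the conditional label distribution $j \mapsto p_j$ over the $(\num+1)$-element label set $\curl*{0, 1, \ldots, \num}$. Since the hypothesis of the lemma grants the $\sR$-consistency bound for \emph{any} distribution, in particular it holds for $\sD_x$, and my goal is simply to read off what each term of that bound becomes under $\sD_x$.

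Under $\sD_x$ the generalization errors collapse to their pointwise forms: $\sE_{\ell_{0-1}}(r) = \sum_{j=0}^{\num} p_j 1_{\rr(x) \neq j}$ and $\sE_{\ell}(r) = \sum_{j=0}^{\num} p_j \ell(r, x, j)$, and taking infima over $\sR$ produces $\sE^*_{\ell_{0-1}}(\sR)$ and $\sE^*_{\ell}(\sR)$, which match the infimum terms appearing in the claim. The one fact that genuinely requires verification is that both minimizability gaps vanish for a point mass. This follows from the definition $\sM_{\ell}(\sR) = \sE^*_{\ell}(\sR) - \E_x\bracket*{\inf_{r \in \sR} \E_{y \mid x}\bracket*{\ell(r, x, y)}}$: for $\sD_x$ the outer expectation over inputs is trivial, so the subtracted term equals $\inf_{r \in \sR} \E_{y \mid x}\bracket*{\ell(r, x, y)} = \sE^*_{\ell}(\sR)$, giving $\sM_{\ell}(\sR) = 0$; the identical computation yields $\sM_{\ell_{0-1}}(\sR) = 0$.

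Substituting these identities into the assumed bound, the minimizability-gap terms on both sides are zero, and the inequality becomes exactly the claimed one; since $x$ was arbitrary the argument is complete. The proof is therefore essentially a change of viewpoint rather than a computation, and the only step deserving care is the vanishing of the minimizability gaps under a point mass, together with the bookkeeping alignment of the $(\num+1)$-class label space $\curl*{0, \ldots, \num}$ of the lemma with the multi-class zero-one loss $\ell_{0-1}$ featured in the hypothesis. The monotonicity of $\Gamma$ plays no active role in the reduction but ensures the right-hand side is well-defined as stated.
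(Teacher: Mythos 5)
Your proposal is correct and matches the paper's own proof: both specialize the assumed $\sR$-consistency bound to the point-mass distribution $\delta_x$ with conditional label probabilities $p_j$ over $\curl*{0,\ldots,\num}$, under which the excess-error-plus-minimizability-gap terms collapse to the pointwise conditional regrets. The paper simply computes the combined quantity $\sE_{\ell}(r) - \sE^*_{\ell}(\sR) + \sM_{\ell}(\sR)$ directly rather than separately noting that the gaps vanish, which is the same observation in different bookkeeping.
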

\begin{proof}
For any $x \in \sX$, consider a distribution $\delta_{x}$ that concentrates on that point. Let $p_j = \mathbb{P}(y = j \mid x)$, $j \in [\num]$. Then, by definition, $\sE_{\ell_{0-1}}(r) - \sE^*_{\ell_{0-1}}(\sR) + \sM_{\ell_{0-1}}(\sR)$ can be expressed as 
\begin{equation*}
\sE_{\ell_{0-1}}(r) - \sE^*_{\ell_{0-1}}(\sR) + \sM_{\ell_{0-1}}(\sR) = \sum_{j = 0}^{\num} p_j 1_{\rr(x) \neq j} - \inf_{r \in \sR} \paren*{\sum_{j = 0}^{\num} p_j 1_{\rr(x) \neq j}}.
\end{equation*}
Similarly, $\sE_{\ell}(r) - \sE^*_{\ell}(\sR) + \sM_{\ell}(\sR)$ can be expressed as
\begin{equation*}
\sE_{\ell}(r) - \sE^*_{\ell}(\sR) + \sM_{\ell}(\sR) = \sum_{j = 0}^{\num} p_j \ell(r, x, j) - \inf_{r \in \sR} \paren*{\sum_{j = 0}^{\num} p_j \ell(r, x, j) }.
\end{equation*}
Since the $\sR$-consistency bound holds by the assumption, we complete the proof.
\end{proof}

\section{Proof of Theorem~\ref{thm:single}}
\label{app:exp}

\Single*
\begin{proof}
The conditional error of the deferral loss can be expressed as
\begin{equation}
\label{eq:cond-error}
\begin{aligned}
\E_{y | x}\bracket*{\ldef(h, r, x, y)} =
\E_{y | x}\bracket*{\sfL(h(x), y)} 1_{\rr(x) = 0} + \sum_{j = 1}^{\num} \E_{y | x}\bracket*{c_j(x,y)} 1_{\rr(x) = j}.
\end{aligned}
\end{equation}
Let $\ov c_0(x) = \inf_{h\in \sH}\E_{y | x}\bracket*{\sfL(h(x), y)}$ and $\ov c_j(x) = \E_{y | x}\bracket*{c(x, y)}$.
Thus, the best-in class conditional error of the deferral loss can be expressed as
\begin{equation}
\label{eq:best-cond-error}
\inf_{h\in \sH, r\in \sR}\E_{y | x}\bracket*{\ldef(h, r, x, y)} =  \min_{j \in [\num]} \ov c_j(x).
\end{equation}
The conditional error of the surrogate loss can be expressed as
\begin{equation}
\label{eq:cond-error-sur}
\begin{aligned}
\E_{y | x}\bracket*{\ell_{\ell}(h, r, x, y)} & =
\paren*{\sum_{j = 1}^{\num} \E_{y | x}\bracket*{c_j(x,y)}} \ell(r, x, 0)
+ \sum_{j = 1}^{\num} \paren*{\E_{y | x}\bracket*{\sfL(h(x), y)} + \sum_{j' \neq j}^{\num} \E_{y | x}\bracket*{c_{j'}(x,y)}} \ell(r, x, j)\\
& \quad - \paren*{\num - 1} \E_{y | x}\bracket*{\sfL(h(x), y)}.
\end{aligned}
\end{equation}
Note that the coefficient of term $\E_{y | x}\bracket*{\sfL(h(x), y)}$ satisfies $\sum_{j = 1}^{\num}  \ell(r, x, j) - \paren*{\num - 1} \geq 0$ since $\ell \geq \ell_{0-1}$.
Thus, the best-in class conditional error of the surrogate loss can be expressed as
\begin{equation}
\label{eq:best-cond-error-sur}
\begin{aligned}
& \inf_{h\in \sH,r\in \sR}\E_{y | x}\bracket*{L_{\ell}(h, r, x, y)}\\
& = \inf_{r \in \sR}\bracket*{\paren*{\sum_{j = 1}^{\num} \ov c_j(x)} \ell(r, x, 0)
+ \sum_{j = 1}^{\num} \paren*{\ov c_0(x) + \sum_{j' \neq j}^{\num} \ov c_{j'}(x)} \ell(r, x, j)} - \paren*{\num - 1} \ov c_0(x).
\end{aligned}
\end{equation}
Next, we analyze four cases separately to show that the calibration gap of the surrogate loss can be lower bounded by that of the deferral loss.
\paragraph{Case I: \texorpdfstring{$\rr(x) = 0$}{I} and \texorpdfstring{$\ov c_0(x) \leq \min_{j = 1}^{\num} \ov c_j(x)$}{I}.} In this case, by \eqref{eq:cond-error} and \eqref{eq:best-cond-error}, the calibration gap of the deferral loss can be expressed as
\begin{align*}
\E_{y | x}\bracket*{\ldef(h, r, x, y)}-\inf_{h\in \sH, r\in \sR}\E_{y | x}\bracket*{\ldef(h, r, x, y)} = \E_{y | x}\bracket*{\sfL(h(x), y)} - \inf_{h\in \sH}\E_{y | x}\bracket*{\sfL(h(x), y)}.
\end{align*}
By \eqref{eq:cond-error-sur} and \eqref{eq:best-cond-error-sur}, the calibration gap of the surrogate loss can be expressed as
\begin{align*}
&\E_{y | x}\bracket*{L_{\ell}(h, r, x, y)} - \inf_{h\in \sH, r\in \sR}\E_{y | x}\bracket*{L_{\ell}(h, r, x, y)}\\
& =  \paren*{\sum_{j = 1}^{\num} \ov c_j(x)} \ell(r, x, 0)
+ \sum_{j = 1}^{\num} \paren*{\E_{y | x}\bracket*{\sfL(h(x), y)} + \sum_{j' \neq j}^{\num} \ov c_{j'}(x)} \ell(r, x, j) - \paren*{\num - 1} \E_{y | x} \sfL(h(x), y)\\
&\quad - \inf_{r \in \sR}\bracket*{\paren*{\sum_{j = 1}^{\num} \ov c_j(x)} \ell(r, x, 0)
+ \sum_{j = 1}^{\num} \paren*{\ov c_0(x) + \sum_{j' \neq j}^{\num} \ov c_{j'}(x)} \ell(r, x, j)} + \paren*{\num - 1} \ov c_0(x).
\end{align*}
Since $\ell \geq \ell_{0-1}$, we have $\ell(r, x, j) \geq 1$ for $j \neq 0$. By eliminating the infimum over $\sR$ from the final line, and consequently canceling the terms related to $\ov c_j(x)$ for $j \neq 0$, the calibration gap of the surrogate loss can be lower bounded as
\begin{align*}
&\E_{y | x}\bracket*{L_{\ell}(h, r, x, y)} - \inf_{h\in \sH,r\in \sR}\E_{y | x}\bracket*{L_{\ell}(h, r, x, y)}\\
&\geq \paren*{\E_{y | x}\bracket*{\sfL(h(x), y)} - \inf_{h\in \sH}\E_{y | x}\bracket*{\sfL(h(x), y)}}\paren*{\sum_{j = 1}^{\num} \ell(r, x, j) - \num + 1}\\
&\geq \E_{y | x}\bracket*{\sfL(h(x), y)}- \inf_{h\in \sH}\E_{y | x}\bracket*{\sfL(h(x), y)} \tag{$\sum_{j = 1}^{\num} \ell(r, x, j) - \num + 1 \geq 1$}\\
& = \E_{y | x}\bracket*{\ldef(h, r, x, y)}-\inf_{h\in \sH,r\in \sR}\E_{y | x}\bracket*{\ldef(h, r, x, y)}.
\end{align*}

\paragraph{Case II: \texorpdfstring{$\ov c_0(x) > \min_{j = 1}^{\num} \ov c_j(x)$}{II}.} In this case, by \eqref{eq:cond-error} and \eqref{eq:best-cond-error},  the calibration gap of the deferral loss can be expressed as
\begin{align*}
\E_{y | x}\bracket*{\ldef(h, r, x, y)}-\inf_{h\in \sH,r\in \sR}\E_{y | x}\bracket*{\ldef(h, r, x, y)} = \ov c_{\rr(x)}(x) - \min_{j = 1}^{\num} \ov c_j(x).
\end{align*}
By \eqref{eq:cond-error-sur} and \eqref{eq:best-cond-error-sur}, the calibration gap of the surrogate loss can be expressed as
\begin{align*}
&\E_{y | x}\bracket*{L_{\ell}(h, r, x, y)} - \inf_{h\in \sH, r\in \sR}\E_{y | x}\bracket*{L_{\ell}(h, r, x, y)}\\
& =  \paren*{\sum_{j = 1}^{\num} \ov c_j(x)} \ell(r, x, 0)
+ \sum_{j = 1}^{\num} \paren*{\E_{y | x}\bracket*{\sfL(h(x), y)} + \sum_{j' \neq j}^{\num} \ov c_{j'}(x)} \ell(r, x, j) - \paren*{\num - 1} \E_{y | x}\bracket*{\sfL(h(x), y)}\\
&\quad - \inf_{r \in \sR}\bracket*{\paren*{\sum_{j = 1}^{\num} \ov c_j(x)} \ell(r, x, 0)
+ \sum_{j = 1}^{\num} \paren*{\ov c_0(x) + \sum_{j' \neq j}^{\num} \ov c_{j'}(x)} \ell(r, x, j)} + \paren*{\num - 1} \ov c_0(x).
\end{align*}
Using the fact that $\ov c_0(x) = \inf_{h\in \sH}\E_{y | x}\bracket*{\sfL(h(x), y)} \leq \E_{y | x}\bracket*{\sfL(h(x), y)}$, the calibration gap of the surrogate loss can be lower bounded as
\begin{align*}
& \E_{y | x}\bracket*{L_{\ell}(h, r, x, y)} - \inf_{h\in \sH,r\in \sR}\E_{y | x}\bracket*{L_{\ell}(h, r, x, y)}\\
& \geq \paren*{\sum_{j = 1}^{\num} \ov c_j(x)} \ell(r, x, 0)
+ \sum_{j = 1}^{\num} \paren*{\E_{y | x}\bracket*{\sfL(h(x), y)} + \sum_{j' \neq j}^{\num} \ov c_{j'}(x)} \ell(r, x, j)\\
&\quad - \inf_{r \in \sR}\bracket*{\paren*{\sum_{j = 1}^{\num} \ov c_j(x)} \ell(r, x, 0)
+ \sum_{j = 1}^{\num} \paren*{\E_{y | x}\bracket*{\sfL(h(x), y)} + \sum_{j' \neq j}^{\num} \ov c_{j'}(x)} \ell(r, x, j)}\\
& = \num \paren*{\E_{y | x}\bracket*{\sfL(h(x), y)} + \sum_{j = 1 }^{\num} \ov c_{j}(x)} \bracket*{\sum_{j = 0}^{\num} p_j \ell(r, x, j) - \inf_{r \in \sR} \paren*{\sum_{j = 0}^{\num} p_j \ell(r, x, j) }}
\end{align*}
where we let $p_0 = \frac{\sum_{j = 1}^{\num} \ov c_j(x)}{\num \paren*{\E_{y | x}\bracket*{\sfL(h(x), y)} + \sum_{j = 1 }^{\num} \ov c_{j}(x)}}$ and $p_j = \frac{\E_{y | x}\bracket*{\sfL(h(x), y)} + \sum_{j' \neq j}^{\num} \ov c_{j'}(x)}{\num \paren*{\E_{y | x}\bracket*{\sfL(h(x), y)} + \sum_{j = 1 }^{\num} \ov c_{j}(x)}}$, $j = \curl*{1, \ldots, \num}$ in the last equality. 
By Lemma~\ref{lemma:aux}, we have
\begin{align*}
& \sum_{j = 0}^{\num} p_j \ell(r, x, j) - \inf_{r \in \sR} \paren*{\sum_{j = 0}^{\num} p_j \ell(r, x, j) }\\
& \geq \Gamma^{-1}\paren*{\sum_{j = 0}^{\num} p_j 1_{\rr(x) \neq j} - \inf_{r \in \sR} \paren*{\sum_{j = 0}^{\num} p_j 1_{\rr(x) \neq j}}} \\
& = \Gamma^{-1}\paren*{\max_{j \in [\num]}p_j - p_{\rr(x)}}\\
& = \Gamma^{-1}\paren*{\frac{\E_{y | x}\bracket*{\sfL(h(x), y)} - \min_{j = 1}^{\num} \ov c_j(x)}{\num \paren*{\E_{y | x}\bracket*{\sfL(h(x), y)} + \sum_{j = 1 }^{\num} \ov c_{j}(x)}}}
\end{align*}
Therefore, we obtain
\begin{align*}
& \E_{y | x}\bracket*{L_{\ell}(h, r, x, y)} - \inf_{h\in \sH,r\in \sR}\E_{y | x}\bracket*{L_{\ell}(h, r, x, y)}\\
& \geq  \num \paren*{\E_{y | x}\bracket*{\sfL(h(x), y)} + \sum_{j = 1 }^{\num} \ov c_{j}(x)}  \Gamma^{-1}\paren*{\frac{\E_{y | x}\bracket*{\sfL(h(x), y)} - \min_{j = 1}^{\num} \ov c_j(x)}{\num \paren*{\E_{y | x}\bracket*{\sfL(h(x), y)} + \sum_{j = 1 }^{\num} \ov c_{j}(x)}}} \\
& \geq
\num \paren*{\E_{y | x}\bracket*{\sfL(h(x), y)} + \sum_{j = 1 }^{\num} \ov c_{j}(x)}  \Gamma^{-1}\paren*{\frac{\E_{y | x}\bracket*{\ldef(h, r, x, y)}-\inf_{h\in \sH,r\in \sR}\E_{y | x}\bracket*{\ldef(h, r, x, y)}}{\num \paren*{\E_{y | x}\bracket*{\sfL(h(x), y)} + \sum_{j = 1 }^{\num} \ov c_{j}(x)}}}\\
& \geq \frac{1}{\beta^{\frac{1}{\alpha}}} \frac{\paren*{\E_{y | x}\bracket*{\ldef(h, r, x, y)}-\inf_{h\in \sH,r\in \sR}\E_{y | x}\bracket*{\ldef(h, r, x, y)}}^{\frac{1}{\alpha}}}{\paren*{\num(\ul + \sum_{j = 1}^{\num}\uc_j)}^{\frac{1}{\alpha} - 1}}
\end{align*}
where we use the fact that $\Gamma(t) = \beta t^{\alpha}$, $\alpha \in (0, 1]$, $\beta > 0$, $\sfL \leq \ul$ and $c_j \leq \uc_j$, $j = \curl*{1, \ldots, \num}$ in the last inequality.

\paragraph{Case III: \texorpdfstring{$\rr(x) > 0$}{III} and \texorpdfstring{$\ov c_0(x) \leq \min_{j = 1}^{\num} \ov c_j(x)$}{III}.} In this case, by \eqref{eq:cond-error} and \eqref{eq:best-cond-error},  the calibration gap of the deferral loss can be expressed as
\begin{align*}
\E_{y | x}\bracket*{\ldef(h, r, x, y)}-\inf_{h\in \sH,r\in \sR}\E_{y | x}\bracket*{\ldef(h, r, x, y)} = \ov c_{\rr(x)}(x) - \ov c_0(x).
\end{align*}
By \eqref{eq:cond-error-sur} and \eqref{eq:best-cond-error-sur}, the calibration gap of the surrogate loss can be expressed as
\begin{align*}
&\E_{y | x}\bracket*{L_{\ell}(h, r, x, y)} - \inf_{h\in \sH, r\in \sR}\E_{y | x}\bracket*{L_{\ell}(h, r, x, y)}\\
& =  \paren*{\sum_{j = 1}^{\num} \ov c_j(x)} \ell(r, x, 0)
+ \sum_{j = 1}^{\num} \paren*{\E_{y | x}\bracket*{\sfL(h(x), y)} + \sum_{j' \neq j}^{\num} \ov c_{j'}(x)} \ell(r, x, j) - \paren*{\num - 1} \E_{y | x}\bracket*{\sfL(h(x), y)}\\
&\quad - \inf_{r \in \sR}\bracket*{\paren*{\sum_{j = 1}^{\num} \ov c_j(x)} \ell(r, x, 0)
+ \sum_{j = 1}^{\num} \paren*{\ov c_0(x) + \sum_{j' \neq j}^{\num} \ov c_{j'}(x)} \ell(r, x, j)} + \paren*{\num - 1} \ov c_0(x).
\end{align*}
Using the fact that $\E_{y | x}\bracket*{\sfL(h(x), y)} \geq \inf_{h\in \sH}\E_{y | x}\bracket*{\sfL(h(x), y)} = \ov c_0(x)$, the calibration gap of the surrogate loss can be lower bounded as
\begin{align*}
& \E_{y | x}\bracket*{L_{\ell}(h, r, x, y)} - \inf_{h\in \sH,r\in \sR}\E_{y | x}\bracket*{L_{\ell}(h, r, x, y)}\\
& \geq \paren*{\sum_{j = 1}^{\num} \ov c_j(x)} \ell(r, x, 0)
+ \sum_{j = 1}^{\num} \paren*{\ov c_0(x) + \sum_{j' \neq j}^{\num} \ov c_{j'}(x)} \ell(r, x, j)\\
&\quad - \inf_{r \in \sR}\bracket*{\paren*{\sum_{j = 1}^{\num} \ov c_j(x)} \ell(r, x, 0)
+ \sum_{j = 1}^{\num} \paren*{\ov c_0(x) + \sum_{j' \neq j}^{\num} \ov c_{j'}(x)} \ell(r, x, j)}\\
& = \num \paren*{\sum_{j = 0 }^{\num} \ov c_{j}(x)} \bracket*{\sum_{j = 0}^{\num} p_j \ell(r, x, j) - \inf_{r \in \sR} \paren*{\sum_{j = 0}^{\num} p_j \ell(r, x, j) }}
\end{align*}
where we let $p_0 = \frac{\sum_{j = 1}^{\num} \ov c_j(x)}{\num \paren*{\sum_{j = 0 }^{\num} \ov c_{j}(x)}}$ and $p_j = \frac{\ov c_0(x) + \sum_{j' \neq j}^{\num} \ov c_{j'}(x)}{\num \paren*{\sum_{j = 0 }^{\num} \ov c_{j}(x)}}$, $j = \curl*{1, \ldots, \num}$ in the last equality. 
By Lemma~\ref{lemma:aux}, we have
\begin{align*}
& \sum_{j = 0}^{\num} p_j \ell(r, x, j) - \inf_{r \in \sR} \paren*{\sum_{j = 0}^{\num} p_j \ell(r, x, j) }\\
& \geq \Gamma^{-1}\paren*{\sum_{j = 0}^{\num} p_j 1_{\rr(x) \neq j} - \inf_{r \in \sR} \paren*{\sum_{j = 0}^{\num} p_j 1_{\rr(x) \neq j}}} \\
& = \Gamma^{-1}\paren*{\max_{j \in [\num]}p_j - p_{\rr(x)}}\\
& = \Gamma^{-1}\paren*{\frac{\ov c_{\rr(x)}(x) - \ov c_0(x)}{\num \paren*{\sum_{j = 0 }^{\num} \ov c_{j}(x)}}}
\end{align*}
Therefore, we obtain
\begin{align*}
& \E_{y | x}\bracket*{L_{\ell}(h, r, x, y)} - \inf_{h\in \sH,r\in \sR}\E_{y | x}\bracket*{L_{\ell}(h, r, x, y)}\\
& \geq \num \paren*{\sum_{j = 0 }^{\num} \ov c_{j}(x)} \Gamma^{-1}\paren*{\frac{\ov c_{\rr(x)}(x) - \ov c_0(x)}{\num \paren*{\sum_{j = 0 }^{\num} \ov c_{j}(x)}}} \\
& = \num \paren*{\sum_{j = 0 }^{\num} \ov c_{j}(x)} \Gamma^{-1}\paren*{\frac{\E_{y | x}\bracket*{\ldef(h, r, x, y)}-\inf_{h\in \sH,r\in \sR}\E_{y | x}\bracket*{\ldef(h, r, x, y)}}{\num \paren*{\sum_{j = 0 }^{\num} \ov c_{j}(x)}}} \\
& \geq \frac{1}{\beta^{\frac{1}{\alpha}}} \frac{\paren*{\E_{y | x}\bracket*{\ldef(h, r, x, y)}-\inf_{h\in \sH,r\in \sR}\E_{y | x}\bracket*{\ldef(h, r, x, y)}}^{\frac{1}{\alpha}}}{\paren*{\num(\ul + \sum_{j = 1}^{\num}\uc_j)}^{\frac{1}{\alpha} - 1}}
\end{align*}
where we use the fact that $\Gamma(t) = \beta t^{\alpha}$, $\alpha \in (0, 1]$, $\beta > 0$, $\sfL \leq \ul$ and $c_j \leq \uc_j$, $j = \curl*{1, \ldots, \num}$ in the last inequality.

Overall, by taking the expectation of the deferral and surrogate calibration gaps and using Jensen's inequality in each case, we obtain
\begin{equation*}
   \sE_{\ldef}(h, r) - \sE_{\ldef}^*(\sH,\sR) + \sM_{\ldef}(\sH,\sR) \leq 
    \ov \Gamma\paren*{\sE_{L_{\ell}}(h, r) -  \sE_{L_{\ell}}^*(\sH,\sR) + \sM_{L_{\ell}}(\sH,\sR)}.
\end{equation*}
where $\ov \Gamma(t) = \max\curl*{t, \paren*{\num\paren*{\ul + \sum_{j = 1}^{\num}\uc_j}}^{1 - \alpha} \beta\, t^{\alpha}}$.
\end{proof}

\section{Proof of Theorem~\ref{thm:tsr}}
\label{app:tsr}

\TwostageR*
\begin{proof}
Given a hypothesis set $\sR$, a multi-class loss function $\ell$ and a predictor $h$. For any $r \in \sR$, $x \in \sX$ and $y \in \sY$, the conditional error of $L^h_{\ell}$ and $L^h_{\rm{def}}$ can be written as
\begin{equation}
\label{eq:tsr-cond-error}
\begin{aligned}
\E_{y | x}\bracket*{L^h_{\rm{def}}(r, x, y)} & =  \E_{y | x}\bracket*{\sfL(h(x), y)} 1_{\rr(x) = 0} + \sum_{j = 1}^{\num} \E_{y | x}\bracket*{c_j(x,y)} 1_{\rr(x) = j}\\
\E_{y | x}\bracket*{L^h_{\ell}(r, x, y)} & =  \paren*{\sum_{j = 1}^{\num} \E_{y | x}\bracket*{c_j(x,y)}} \ell(r, x, 0)
+ \sum_{j = 1}^{\num} \paren*{\E_{y | x}\bracket*{\sfL(h(x), y)} + \sum_{j' \neq j}^{\num} \E_{y | x}\bracket*{c_{j'}(x,y)}} \ell(r, x, j).
\end{aligned}
\end{equation}
Let $\ov c_0(x) = \inf_{h\in \sH}\E_{y | x}\bracket*{\sfL(h(x), y)}$ and $\ov c_j(x) = \E_{y | x}\bracket*{c(x, y)}$.
Thus, the best-in class conditional error of of $L^h_{\ell}$ and $L^h_{\rm{def}}$ can be expressed as
\begin{equation}
\label{eq:tsr-best-cond-error}
\begin{aligned}
\inf_{r\in \sR}\E_{y | x}\bracket*{L^h_{\rm{def}}(r, x, y)} & =  \min_{j \in [\num]} \ov c_j(x)\\
\inf_{r\in \sR}\E_{y | x}\bracket*{L^h_{\ell}(r, x, y)} & =  \inf_{r \in \sR}\bracket*{\paren*{\sum_{j = 1}^{\num} \ov c_j(x)} \ell(r, x, 0)
+ \sum_{j = 1}^{\num} \paren*{\E_{y | x}\bracket*{\sfL(h(x), y)} + \sum_{j' \neq j}^{\num} \ov c_{j'}(x)} \ell(r, x, j)}
\end{aligned}
\end{equation}
Let $p_0 = \frac{\sum_{j = 1}^{\num} \ov c_j(x)}{\num \paren*{\E_{y | x}\bracket*{\sfL(h(x), y)} + \sum_{j = 1 }^{\num} \ov c_{j}(x)}}$ and $p_j = \frac{\E_{y | x}\bracket*{\sfL(h(x), y)} + \sum_{j' \neq j}^{\num} \ov c_{j'}(x)}{\num \paren*{\E_{y | x}\bracket*{\sfL(h(x), y)} + \sum_{j = 1 }^{\num} \ov c_{j}(x)}}$, $j = \curl*{1, \ldots, \num}$. Then, the calibration gap of $L_{\ell}^h$ can be written as 
\begin{align*}
& \E_{y | x}\bracket*{L^h_{\ell}(r, x, y)} - \inf_{r\in \sR}\E_{y | x}\bracket*{L^h_{\ell}(r, x, y)}\\
& = \num \paren*{\E_{y | x}\bracket*{\sfL(h(x), y)} + \sum_{j = 1 }^{\num} \ov c_{j}(x)}  \bracket*{\sum_{j = 0}^{\num} p_j \ell(r, x, j) - \inf_{r \in \sR} \paren*{\sum_{j = 0}^{\num} p_j \ell(r, x, j) }}
\end{align*}
By Lemma~\ref{lemma:aux}, we have
\begin{align*}
\sum_{j = 0}^{\num} p_j \ell(r, x, j) - \inf_{r \in \sR} \paren*{\sum_{j = 0}^{\num} p_j \ell(r, x, j) } 
& \geq \Gamma^{-1}\paren*{\sum_{j = 0}^{\num} p_j 1_{\rr(x) \neq j} - \inf_{r \in \sR} \paren*{\sum_{j = 0}^{\num} p_j 1_{\rr(x) \neq j}} }\\
& = \Gamma^{-1}\paren*{\max_{j \in [\num]}p_j - p_{\rr(x)}}\\
& = \Gamma^{-1}\paren*{\frac{\ov c_{\rr(x)}(x) - \min_{j \in [\num]} \ov c_j(x)}{\num \paren*{\E_{y | x}\bracket*{\sfL(h(x), y)} + \sum_{j = 1 }^{\num} \ov c_{j}(x)}}}.
\end{align*}
Therefore, we obtain
\begin{align*}
& \E_{y | x}\bracket*{L_{\ell}(r, x, y)} - \inf_{r\in \sR}\E_{y | x}\bracket*{L_{\ell}(r, x, y)}\\
& \geq \num \paren*{\E_{y | x}\bracket*{\sfL(h(x), y)} + \sum_{j = 1 }^{\num} \ov c_{j}(x)} \Gamma^{-1}\paren*{\frac{\ov c_{\rr(x)}(x) - \min_{j \in [\num]} \ov c_j(x)}{\num \paren*{\E_{y | x}\bracket*{\sfL(h(x), y)} + \sum_{j = 1 }^{\num} \ov c_{j}(x)}}}\\
& \geq \frac{1}{\beta^{\frac{1}{\alpha}}} \frac{\paren*{\E_{y | x}\bracket*{L^h_{\rm{def}}(r, x, y)}-\inf_{r\in \sR}\E_{y | x}\bracket*{L^h_{\rm{def}}(r, x, y)}}^{\frac{1}{\alpha}}}{\paren*{\num(\ul + \sum_{j = 1}^{\num}\uc_j)}^{\frac{1}{\alpha} - 1}}
\end{align*}
where we use the fact that $\Gamma(t) = \beta t^{\alpha}$, $\alpha \in (0, 1]$, $\beta > 0$, $\sfL \leq \ul$ and $c_j \leq \uc_j$, $j = \curl*{1, \ldots, \num}$ in the last inequality.
Taking the expectation on both sides and using Jensen's inequality, we obtain
\begin{equation*}
   \sE_{L^h_{\rm{def}}}(r) - \sE_{L^h_{\rm{def}}}^*(\sR) + \sM_{L^h_{\rm{def}}}(\sR) \leq 
    \ov \Gamma\paren*{\sE_{L^h_{\ell}}(r) -  \sE_{L^h_{\ell}}^*(\sR) + \sM_{L^h_{\ell}}(\sR)}.
\end{equation*}
where $\ov \Gamma(t) = \paren*{\num\paren*{\ul + \sum_{j = 1}^{\num}\uc_j}}^{1 - \alpha} \beta\, t^{\alpha}$.
\end{proof}

\section{Proof of Theorem~\ref{thm:tshr}}
\label{app:tshr}

\TwostageHR*
\begin{proof}
The conditional error of the deferral loss can be expressed as
\begin{equation*}
\begin{aligned}
\E_{y | x}\bracket*{\ldef(h, r, x, y)} =
\E_{y | x}\bracket*{\sfL(h(x), y)} 1_{\rr(x) = 0} + \sum_{j = 1}^{\num} \E_{y | x}\bracket*{c_j(x,y)} 1_{\rr(x) = j}.
\end{aligned}
\end{equation*}
Let $\ov c_0(x) = \inf_{h\in \sH}\E_{y | x}\bracket*{\sfL(h(x), y)}$ and $\ov c_j(x) = \E_{y | x}\bracket*{c(x, y)}$.
Thus, the best-in class conditional error of the deferral loss can be expressed as
\begin{equation*}
\inf_{h\in \sH, r\in \sR}\E_{y | x}\bracket*{\ldef(h, r, x, y)} =  \min_{j \in [\num]} \ov c_j(x).
\end{equation*}
Thus, by introducing the term $\min\curl*{\E_{y | x}\bracket*{\sfL(h(x), y)}, \min_{j = 1}^{\num} \ov c_j(x)}$ and subsequently subtracting it after rearranging, the conditional regret of the deferral loss $\ldef$ can be written as follows
\begin{equation}
\label{eq:tshr-cond-reg-def}
\begin{aligned}
& \E_{y | x}\bracket*{\ldef(h, r, x, y)} - \inf_{h\in \sH, r\in \sR}\E_{y | x}\bracket*{\ldef(h, r, x, y)}\\
& = \E_{y | x}\bracket*{\sfL(h(x), y)} 1_{\rr(x) = 0} + \sum_{j = 1}^{\num} \E_{y | x}\bracket*{c_j(x,y)} 1_{\rr(x) = j} - \min_{j \in [\num]} \ov c_j(x)\\ 
& =  \E_{y | x}\bracket*{\sfL(h(x), y)} 1_{\rr(x) = 0} + \sum_{j = 1}^{\num} \E_{y | x}\bracket*{c_j(x,y)} 1_{\rr(x) = j} - \min_{j = 1}^{\num} \ov c_j(x) + \paren*{\min_{j = 1}^{\num} \ov c_j(x) - \min_{j \in [\num]} \ov c_j(x)}.
\end{aligned}
\end{equation}
Note that by the property of the minimum,  the second term can be upper-bounded as 
\begin{align*}
\min_{j = 1}^{\num} \ov c_j(x) - \min_{j \in [\num]} \ov c_j(x) \leq \E_{y | x}\bracket*{\sfL(h(x), y)} - \inf_{h \in \sH}\E_{y | x}\bracket*{\sfL(h(x), y)}.
\end{align*}
Next, we will upper-bound the first term. Note that the conditional error and the best-in class conditional error of $L^h_{\ell}$ can be expressed as
\begin{equation}
\label{eq:tshr-cond-error-sur}
\begin{aligned}
L^h_{\ell}(r, x, y) & =  \paren*{\sum_{j = 1}^{\num} \ov c_j(x)} \ell(r, x, 0)
+ \sum_{j = 1}^{\num} \paren*{\E_{y | x}\bracket*{\sfL(h(x), y)} + \sum_{j' \neq j}^{\num} \ov c_{j'}(x) } \ell(r, x, j)\\
\inf_{r\in \sR}\E_{y | x}\bracket*{L^h_{\ell}(r, x, y)} & = \inf_{r \in \sR}\bracket*{\paren*{\sum_{j = 1}^{\num} \ov c_j(x)} \ell(r, x, 0)
+ \sum_{j = 1}^{\num} \paren*{\E_{y | x}\bracket*{\sfL(h(x), y)} + \sum_{j' \neq j}^{\num} \ov c_{j'}(x)} \ell(r, x, j)}
\end{aligned}
\end{equation}
Let $p_0 = \frac{\sum_{j = 1}^{\num} \ov c_j(x)}{\num \paren*{\E_{y | x}\bracket*{\sfL(h(x), y)} + \sum_{j = 1 }^{\num} \ov c_{j}(x)}}$ and $p_j = \frac{\E_{y | x}\bracket*{\sfL(h(x), y)} + \sum_{j' \neq j}^{\num} \ov c_{j'}(x)}{\num \paren*{\E_{y | x}\bracket*{\sfL(h(x), y)} + \sum_{j = 1 }^{\num} \ov c_{j}(x)}}$, $j = \curl*{1, \ldots, \num}$. Then, the first term can be rewritten as 
\begin{align*}
& \E_{y | x}\bracket*{\sfL(h(x), y)} 1_{\rr(x) = 0} + \sum_{j = 1}^{\num} \E_{y | x}\bracket*{c_j(x,y)} 1_{\rr(x) = j} - \min_{j = 1}^{\num} \ov c_j(x) \\
& = \num \paren*{\E_{y | x}\bracket*{\sfL(h(x), y)} + \sum_{j = 1 }^{\num} \ov c_{j}(x)}  \bracket*{\sum_{j = 0}^{\num} p_j 1_{\rr(x) \neq 0} - \inf_{r \in \sR} \paren*{\sum_{j = 0}^{\num} p_j 1_{\rr(x) \neq j}}}.
\end{align*}
By Lemma~\ref{lemma:aux}, we have
\begin{align*}
\sum_{j = 0}^{\num} p_j 1_{\rr(x) \neq j} - \inf_{r \in \sR} \paren*{\sum_{j = 0}^{\num} p_j 1_{\rr(x) \neq j}} 
& \leq \Gamma\paren*{\sum_{j = 0}^{\num} p_j \ell(r, x, j) - \inf_{r \in \sR} \paren*{\sum_{j = 0}^{\num} p_j \ell(r, x, j) }}\\
& = \Gamma\paren*{\frac{L^h_{\ell}(r, x, y) - \inf_{r\in \sR}\E_{y | x}\bracket*{L^h_{\ell}(r, x, y)}}{\num \paren*{\E_{y | x}\bracket*{\sfL(h(x), y)} + \sum_{j = 1 }^{\num} \ov c_{j}(x)}}}.
\end{align*}
Therefore, the first term can be upper-bounded as
\begin{align*}
&  \E_{y | x}\bracket*{\sfL(h(x), y)} 1_{\rr(x) = 0} + \sum_{j = 1}^{\num} \E_{y | x}\bracket*{c_j(x,y)} 1_{\rr(x) = j} - \min_{j = 1}^{\num} \ov c_j(x)\\
& = \num \paren*{\E_{y | x}\bracket*{\sfL(h(x), y)} + \sum_{j = 1 }^{\num} \ov c_{j}(x)}  \bracket*{\sum_{j = 0}^{\num} p_j 1_{\rr(x) \neq 0} - \inf_{r \in \sR} \paren*{\sum_{j = 0}^{\num} p_j 1_{\rr(x) \neq j}}}\\
& \leq \num \paren*{\E_{y | x}\bracket*{\sfL(h(x), y)} + \sum_{j = 1 }^{\num} \ov c_{j}(x)} \Gamma\paren*{\frac{L^h_{\ell}(r, x, y) - \inf_{r\in \sR}\E_{y | x}\bracket*{L^h_{\ell}(r, x, y)}}{\num \paren*{\E_{y | x}\bracket*{\sfL(h(x), y)} + \sum_{j = 1 }^{\num} \ov c_{j}(x)}}}\\
& \leq \paren*{\num\paren*{\ul + \sum_{j = 1}^{\num}\uc_j}}^{1 - \alpha} \beta\, \paren*{L^h_{\ell}(r, x, y) - \inf_{r\in \sR}\E_{y | x}\bracket*{L^h_{\ell}(r, x, y)}}^{\alpha}
\end{align*}
where we use the fact that $\Gamma(t) = \beta t^{\alpha}$, $\alpha \in (0, 1]$, $\beta > 0$, $\sfL \leq \ul$ and $c_j \leq \uc_j$, $j = \curl*{1, \ldots, \num}$ in the last inequality.
After upper-bounding the first term and the second term in \eqref{eq:tshr-cond-reg-def} as above, taking the expectation on both sides and using Jensen's inequality, we obtain
\begin{align*}
   \sE_{\ldef}(h, r) - \sE_{\ldef}^*(\sH,\sR) + \sM_{\ldef}(\sH,\sR) & \leq \sE_{\sfL}(h) - \sE_{\sfL}(\sH) + \sM_{\sfL}(\sH)\\
   & \quad + \ov \Gamma\paren*{\sE_{L^h_{\ell}}(r) -  \sE_{L^h_{\ell}}^*(\sR) + \sM_{L^h_{\ell}}(\sR)},
\end{align*}
where $\ov \Gamma(t) = \paren*{\num\paren*{\ul + \sum_{j = 1}^{\num}\uc_j}}^{1 - \alpha} \beta\, t^{\alpha}$.
\end{proof}

\section{Common margin-based losses and corresponding deferral surrogate losses}
\label{app:sur-binary}
\begin{table}[t]
\caption{Common margin-based losses and corresponding deferral surrogate losses.}
  \label{tab:sur-binary}
  \centering
  \resizebox{\columnwidth}{!}{
  \begin{tabular}{@{\hspace{0cm}}lll@{\hspace{0cm}}}
    \toprule
      Name & $\Phi(u)$ & Deferral surrogate loss $\ell_{\Phi}$\\
    \midrule
     Exponential & $\Phi_{\rm{exp}}(u) = e^{-u}$ & $\sfL(h(x), y) e^{r(x)} + c(x,y) e^{-r(x)}$    \\
     Logistic & $\Phi_{\rm{log}}(u) = \log\paren*{1 + e^{-u}}$ & $\sfL(h(x), y) \log\paren*{1 + e^{r(x)}} + c(x,y) \log\paren*{1 + e^{-r(x)}}$ \\
     Quadratic & $\Phi_{\rm{quad}}(u) = \max\curl*{1 - u, 0}^2$ & $\sfL(h(x), y) \Phi_{\rm{quad}}(-r(x)) + c(x,y) \Phi_{\rm{quad}}(r(x))$ \\
     Hinge & $\Phi_{\rm{hinge}}(u) = \max\curl*{1 - u, 0}$ & $\sfL(h(x), y) \Phi_{\rm{hinge}}(-r(x)) + c(x,y) \Phi_{\rm{hinge}}(r(x))$ \\
     Sigmoid & $\Phi_{\rm{sig}}(u) = 1 - \tanh(k u), k > 0$ & $\sfL(h(x), y) \Phi_{\rm{sig}}(-r(x)) + c(x,y) \Phi_{\rm{sig}}(r(x))$  \\
     $\rho$-Margin & $\Phi_{\rho}(u) = \min\curl*{1,  \max\curl*{0, 1-\frac{u}{\rho}}}, \rho>0$ & $\sfL(h(x), y) \Phi_{\rho}(-r(x)) + c(x,y) \Phi_{\rho}(r(x))$ \\ 
    \bottomrule
  \end{tabular}
  }
\end{table}

\newpage
\section{Additional experiments}
\label{app:additional_experiments}

Here, we report additional experimental results with three simple
baselines:
\begin{itemize}
\item Baseline 1: The accuracy of the expert.

\item Baseline 2: Always defer to one expert (random or not random) with probability a\%.

\item Baseline 3: Single-expert formulation using only expert 1 (or 2, or 3).

\end{itemize}

In Table~\ref{tab:additional-reg}, we report the empirical results of our
two-stage method without base cost on the Housing dataset alongside
the corresponding baselines, which further demonstrates our approach's
effectiveness. For our method, the single-expert deferral ratio is $91
\%$, the two-expert deferral rate is $8 \%$ for the first expert and
$85 \%$ for the second expert, and the three-expert deferral rate is
$4 \%$ for the first expert, $35 \%$ for the second expert, and $60
\%$ for the third expert. We use the same deferral rate for randomly
deferring to experts in Baseline 2. The error of the base model is
$22.72 \pm 7.68$. EXP represents the expert used, and system MSE
values are reported. Clearly, our method outperforms all three
baselines.

\begin{table*}[t]
 \caption{Comparison of our proposed method with three simple baselines.}
 \label{tab:additional-reg}
  \centering
  \resizebox{\textwidth}{!}{
  \begin{tabular}{@{\hspace{0cm}}lll|lll|lll|lll@{\hspace{0cm}}}
    \multicolumn{3}{c}{Baseline 1} & \multicolumn{3}{c}{Baseline 2} & \multicolumn{3}{c}{Baseline 3} & \multicolumn{3}{c}{Ours} \\
    \midrule
    EXP 1  & EXP 2 & EXP 3 & EXP 1 & EXP 1, 2 & EXP 1, 2, 3 & EXP 1 & EXP 2 & EXP 3 & EXP 1 & EXP 1, 2 & EXP 1, 2, 3\\
    \midrule
    $17.37 \pm 4.80$ & $15.07 \pm 3.03$ & $12.72 \pm 2.30$ & $17.77 \pm 5.12$ & $15.43 \pm 2.83$ & $12.92 \pm 2.45$ & $16.26 \pm 5.58$ & $15.44 \pm 2.25$ & $12.36 \pm 3.32$ & $\mathbf{16.26 \pm 5.58}$ & $\mathbf{14.82 \pm 3.60}$ & $\mathbf{12.02 \pm 1.97}$ \\
    \bottomrule
  \end{tabular}
  }
\end{table*}
\restoreatoc



\cleardoublepage
\phantomsection


 
\addtocontents{toc}{\protect\setcounter{tocdepth}{0}}

\disableatoc
\addcontentsline{toc}{chapter}{Bibliography}
\restoreatoc

\printbibliography

\addtocontents{toc}{\protect\setcounter{tocdepth}{2}}
\end{document}